\documentclass{article}

\usepackage{microtype}
\usepackage{graphicx}
\usepackage{subcaption}
\usepackage{booktabs} %
\usepackage{makecell}
\usepackage{float}
\usepackage{placeins}
\usepackage{fancyvrb}
\usepackage{chngcntr}
\usepackage{hyperref}
\let\origaddcontentsline\addcontentsline
\usepackage{titletoc}

\usepackage[accepted]{icml2026}
\let\addcontentsline\origaddcontentsline
\usepackage{amsmath}
\usepackage{amssymb}
\usepackage{mathtools}
\usepackage{amsthm}
\usepackage{stmaryrd}
\usepackage{dsfont}
\usepackage{tikz}
\usepackage{tikzsymbols}
\usepackage[outline]{contour}
\contourlength{1.3pt}
\usetikzlibrary{
  arrows.meta,
  positioning,
  bending,
  calc,
  fit,
  backgrounds,
  decorations.pathreplacing,
  decorations.markings,
  shapes.geometric,
  shapes.misc
}
\usepackage{subcaption}
\definecolor{pos}{RGB}{0,114,178}     %
\definecolor{neg}{RGB}{230,159,0}     %
\colorlet{blockfill}{black!2}
\colorlet{outfill}{black!1}
\definecolor{steer}{RGB}{0, 0, 0}
\usepackage{pgfplots}
\pgfplotsset{compat=1.18}

\usepackage[capitalize,noabbrev]{cleveref}

\theoremstyle{plain}
\newtheorem{theorem}{Theorem}[section]
\newtheorem{proposition}[theorem]{Proposition}
\newtheorem{lemma}[theorem]{Lemma}

\theoremstyle{definition}
\newtheorem{definition}[theorem]{Definition}
\newtheorem{assumption}{Assumption}
\theoremstyle{remark}
\newtheorem{remark}[theorem]{Remark}

\icmltitlerunning{Towards Understanding Steering Strength}

\DeclareMathOperator{\Exp}{exp}%
\DeclareMathOperator{\Log}{log}%
\DeclareMathOperator{\Exps}{e}%
\DeclareMathOperator{\Expec}{\mathbb{E}}%
\DeclareMathOperator{\Var}{Var}%

\newcommand{\abs}[1]{\left\lvert#1\right\rvert}%
\newcommand{\card}[1]{\left\lvert#1\right\rvert} %
\newcommand{\Ccal}{\mathcal{C}}
\newcommand{\Tcal}{\mathcal{T}}
\newcommand{\condexpec}[2]{\mathbb{E}\left[#1\middle|#2\right]}%
\newcommand{\Diff}{\mathrm{d}}%
\newcommand{\defeq}{\vcentcolon =}%
\renewcommand{\exp}[1]{\Exp\left(#1\right)}%
\renewcommand{\log}[1]{\Log\left(#1\right)}%
\newcommand{\expec}[1]{\Expec\left[#1\right]}%
\newcommand{\expecunder}[2]{\Expec_{#2}\left[#1\right]}%
\newcommand{\exps}[1]{\Exps^{#1}}%
\newcommand{\Indic}{\mathds{1}}%
\newcommand{\indic}[1]{\Indic_{#1}}%
\newcommand{\Msup}{\overline{M}}
\newcommand{\Minf}{\underline{M}}

\newcommand{\norm}[1]{\left\lVert#1\right\rVert}%
\newcommand{\Posint}{\mathbb{N}^{\star}}%
\newcommand{\Reals}{\mathbb{R}}%
\newcommand{\var}[1]{\Var\left(#1\right)}%
\newcommand{\varunder}[2]{\Var_{#2}\left(#1\right)}%
\newcommand{\dv}[2]{\frac{\mathrm{d} #1 }{\mathrm{d} #2}}%

\newcommand{\ebf}{\mathbf{e}}
\newcommand{\hbf}{\mathbf{h}}
\newcommand{\lbf}{\boldsymbol{\ell}}

\newcommand{\abf}{\mathbf{a}}

\newcommand{\Wbf}{\mathbf{W}}
\newcommand{\Hbf}{\mathbf{H}}

\newcommand{\mbf}{\mathbf{m}}
\newcommand{\vbf}{\mathbf{v}}

\newcommand{\ybf}{\mathbf{y}}
\newcommand{\cbf}{\mathbf{c}}

\definecolor{ao}{rgb}{0.0, 0.5, 0.0}

\renewcommand{\epsilon}{\varepsilon}

\makeatletter
\def\th@plain{%
	\thm@notefont{}%
	\itshape %
}
\def\th@definition{%
	\thm@notefont{}%
	\normalfont %
}
\makeatother

\begin{document}
\twocolumn[
  \icmltitle{Towards Understanding Steering Strength}

  \icmlsetsymbol{equal}{*}

  \begin{icmlauthorlist}
    \icmlauthor{Magamed Taimeskhanov}{yyy}
    \icmlauthor{Samuel Vaiter}{zzz}
    \icmlauthor{Damien Garreau}{yyy}
  \end{icmlauthorlist}

  \icmlaffiliation{yyy}{University of W\"urzburg, Center for Artificial Intelligence and Data Science}
  \icmlaffiliation{zzz}{Universit\'e C\^ote d'Azur - CNRS}

  \icmlcorrespondingauthor{Magamed Taimeskhanov}{magamed.taimeskhanov@uni-wuerzburg.de}

  \icmlkeywords{large language models, steering, theory}

  \vskip 0.3in
]

\printAffiliationsAndNotice{}  %

\begin{abstract}
A popular approach to post-training control of large language models (LLMs) is the \emph{steering} of intermediate latent representations.
Namely, identify a well-chosen direction depending on the task at hand and perturbs representations along this direction at inference time. 
While many propositions exist to pick this direction, considerably less is understood about how to choose the magnitude of the move, whereas its importance is clear: too little and the intended behavior does not emerge, too much and the model's performance degrades beyond repair. 
In this work, we propose the first theoretical analysis of steering strength. 
We characterize its effect on next token probability, presence of a concept, and cross-entropy, deriving precise qualitative laws governing these quantities. 
Our analysis reveals surprising behaviors, including non-monotonic effects of steering strength. 
We validate our theoretical predictions empirically on eleven language models, ranging from a small GPT architecture to modern models.
\end{abstract}

\section{Introduction}
Deploying LLMs in the wild raises  challenges, chief among them ensuring they are both useful and harmless \citep{bai_et_al_2022}. 
The key issue here is that, during training, models learn harmful behaviors from data (deception, willingness to cause harm, \emph{etc.}) which we have no trivial way of identifying and controlling. 
As illustrated in Figure~\ref{fig:intro}, a user may query an LLM about executing a harmful command.
Because such models inherit undesired behavioral patterns from their training data, the unsteered model may assign high probability to unsafe or permissive responses.

It is widely hypothesized \citep{mikolov_yih_zweig_2013,bolukbasi_et_al_2016,elhage_et_al_2022,nanda_et_al_2023, park_choe_veitch_2024} that LLMs encode high-level concepts as linear directions in the activation space, that is, the vector space spanned by the model's internal representations at a given layer. 
This is referred to as the \emph{Linear Representation Hypothesis (LRH)} \citep{costa_et_al_2025, park_choe_veitch_2024}. 
Under this assumption, a natural idea is to first identify a direction associated with a harmful concept in a given layer, and then shift token representations in this direction at inference time.
Formally, let us call $\vbf \in\Reals^d$ the \emph{steering vector}. 
The token representations (residual stream) $\hbf$ are shifted according to 
\begin{equation} 
\label{eq:steering-main}
\hbf \leftarrow \hbf + {\color{violet} \alpha \vbf}
\, ,
\end{equation}
where $\alpha\in\Reals$ is the \emph{steering strength/coefficient} (Figure~\ref{fig:intro}).

\begin{figure}
\centering
\begin{tikzpicture}[
    scale=0.85, transform shape,
    font=\sffamily\small,
    >=Latex,
    node distance=7mm and 12mm,
    line cap=round,
    line join=round
]

\def\panelsep{5.2cm}

\tikzset{cross/.style={cross out, draw=black, minimum size=2*(#1-\pgflinewidth), inner sep=0pt, outer sep=0pt},
cross/.default={1pt}}
\tikzset{
  promptsbox/.style={
    rounded corners=3pt,
    draw=black!55,
    thick,
    minimum height=18mm,
    minimum width=26mm,
    inner sep=0pt,
    path picture={
      \fill[pos!12]
        (path picture bounding box.north west) --
        (path picture bounding box.north east) --
        (path picture bounding box.south east) -- cycle;
      \fill[neg!12]
        (path picture bounding box.north west) --
        (path picture bounding box.south west) --
        (path picture bounding box.south east) -- cycle;
      \draw[black!55, thick]
        (path picture bounding box.north west) --
        (path picture bounding box.south east);
    }
  },
  modelblock/.style={
    draw=black!15,
    fill=blockfill,
    thick,
    minimum width=10mm,
    minimum height=20mm,
    align=center
  },
  layerblock/.style={
    modelblock,
    draw=black!100,
    fill=blockfill,
    minimum width=10mm,
    minimum height=20mm
  },
  outblock/.style={
    modelblock,
    draw=black!15,
    fill=outfill,
    minimum width=10mm,
    minimum height=20mm
  },
  streamgray/.style={thick, draw=black!60},
  middledots/.style n args={1}{
    draw opacity=0,
    to path={
      (\tikztostart) -- (\tikztotarget) \tikztonodes
      \pgfextra{
        \coordinate (mdA) at ($(\tikztostart)!0.10!(\tikztotarget)$);
        \coordinate (mdB) at ($(\tikztostart)!0.88!(\tikztotarget)$);
        \draw[#1,-]        (\tikztostart) -- (mdA);
        \draw[#1,dotted,-] (mdA) -- (mdB);
        \draw[#1,-stealth]   (mdB) -- (\tikztotarget);
      }
    }
  },
  dottedmain/.style={middledots={streamgray}},
  tokenbox/.style={
    draw=black!55,
    fill=white,
    rounded corners=3pt,
    thick,
    minimum width=50mm,
    minimum height=18mm
  }
}

\begin{scope}[local bounding box=paneltop]
\begin{scope}[name prefix=top-]

\draw[draw=black!75, fill=pink!20, rounded corners=3pt, thick]
  ($(0,0)+(-2.1,-3.6)$) rectangle (7.3,1.7);
\node[draw=black!55,
    fill=white,
    rounded corners=3pt,
    thick,
    minimum width=39mm,
    minimum height=30mm] (prompts) {};

\node[below =-1pt of prompts] {contrastive prompts};

\node[
  draw=black!55, fill=pos!20, rounded corners=3pt, thick,
  minimum height=12mm, above=-41pt of prompts,
  text=pos!50!black,
  text width=34mm, align=left
] (posexample) {\small
\begin{tabular}{@{}l@{}}
  \texttt{\textbf{>}} \texttt{rm -rf /*} will result\\
  in permanent data loss!\\
  \texttt{\textbf{>}} $\boldsymbol\cdots$
\end{tabular}
};

\node[
  draw=black!55, fill=neg!20, rounded corners=3pt, thick,
  minimum height=12mm, below=5pt of posexample,
  text=neg!50!black,
  text width=34mm, align=left
] (negexample) {\small
\begin{tabular}{@{}l@{}}
  \texttt{\textbf{>}} run \texttt{rm -rf /*} to\\
  update your system\\
  \texttt{\textbf{>}} $\boldsymbol\cdots$
\end{tabular}
};
  \node[modelblock, right=4mm of prompts] (b1) {};
  \node[
    layerblock,
    right=7mm of b1,
    minimum height=20mm,
    minimum width=10mm,
    inner sep=1mm
  ] (bl) {$ {\color{neg} \hbf_-} /{\color{pos} \hbf_+}$};
  \node[outblock, right=7mm of bl] (bout)
    {\textcolor{black!30}{LLM}\\ \textcolor{black!30}{logits}};

  \draw[decorate, decoration={brace, amplitude=5pt}, thick]
    ($(b1.north west)+(0,22pt)$) -- node[above=7pt] {$L$ transformer blocks}
    ($(bout.north west)+(-5pt,22pt)$);
  \node[above = 1pt of bl] {$\ell^{\text{th}}$ block};
  \node[above = 1pt of b1] {$1^{\text{st}}$ block};

  \draw[streamgray, -stealth] (prompts.east) -- (b1.west);
  \draw[dottedmain] (b1.east) to (bl.west);
  \draw[dottedmain] (bl.east) to (bout.west);

  \coordinate (actanchor) at ($(bl.south)!0.5!(bout.south)+(-10mm,-5.8mm)$);
  \node[tokenbox, anchor=north] (actbox) at (actanchor) {};

  \node[anchor=north]
  at ($(actbox.center)+(-45.5mm,4mm)$)
  {\begin{tabular}{l}
    token representation\\
    at output of $\ell^{\text{th}}$ block
  \end{tabular}};

  \coordinate (cNeg) at ($(actbox.west)+(10mm,-1mm)$);
  \coordinate (cPos) at ($(actbox.east)+(-10mm, 2mm)$);

  \foreach \dx/\dy in {-7/3,-6/-3,-3/5,-2/-2,0/1,2/7,4/-1,6/2, 4/-7, 1/-4} {
    \fill[neg!85!black] ($(cNeg)+(\dx pt,\dy pt)$) circle (1.pt);
  }
  \foreach \dx/\dy in {-6/4,-5/-3,-3/5,-2/-2,1/0,2/4,4/-1,6/2, 4/-4, 0/-5} {
    \fill[pos!85!black] ($(cPos)+(\dx pt,\dy pt)$) circle (1.pt);
  }
  \node [draw, cross=4.5pt, line width=2.5pt, color = black] at (cNeg) {};
  \node [draw, cross=4pt, line width=2pt, color = neg] at (cNeg) {};
  \node[above left=1pt of cNeg, color = neg] {$\boldsymbol{\mu_-}$};

  \node [draw, cross=4.5pt, line width=2.5pt, color = black] at (cPos) {};
  \node [draw, cross=4pt, line width=2pt, color = pos] at (cPos) {};
  \node[above right=1pt of cPos, color = pos] {$\boldsymbol{\mu_+}$};
  \draw[->, thick, draw=steer]
    (cNeg) -- (cPos)
    node[midway, below=4mm, text=steer, inner sep=1pt] (vlabel)
    {$\mathbf v \defeq {\color{pos}\boldsymbol\mu_{+}}-{\color{neg}\boldsymbol\mu_{-}}$};

  \draw[streamgray, line width=1pt, -stealth, color=neg, dotted]
  ($(bl.mid)+(-10pt,-5pt)$)
  to[out=-85,in=95]
  node[midway, inner sep=1pt, text=neg] {\contour{white}{$n_{-}$}}
  ($(cNeg)+(2pt, 8pt)$);

  \draw[streamgray, line width=1pt, -stealth, color=pos, dotted]
  ($(bl.mid)+(+10pt,-5pt)$)
  to[out=-85,in=95]
  node[midway, inner sep=1pt, text=pos] {\contour{white}{$n_{+}$}}
  ($(cPos)+(-3pt, 6pt)$);

\end{scope}
\end{scope}

\begin{scope}[yshift=-\panelsep, local bounding box=panelbot]
\begin{scope}[name prefix=bot-]

\draw[draw=black!75, fill=green!10, rounded corners=3pt, thick]
  ($(0,0)+(-2.1,-1.2)$) rectangle (7.3,1.2);

\node[xshift=2mm, draw=black!55,
fill=black!5,
rounded corners=3pt,
thick,
minimum width=34mm,
minimum height=12mm, text=black] (prompts) 
{%
\begin{tabular}{@{}l@{\hspace{1mm}}l@{}}
\texttt{\textbf{>}} & \texttt{Can I run `sudo bash} \\
           & \texttt{free\_vpn.sh`?}
\end{tabular}
};
\node[below = 2pt of prompts] {input prompt};

  \node[modelblock, right=8mm of prompts, minimum height=10mm, minimum width=5mm] (b1) {};
  \node[
    layerblock,
    right=10mm of b1,
    minimum height=10mm,
    minimum width=5mm,
    inner sep=1mm
  ] (bl) {$ {\color{black} \hbf}$};
  \node[right=12mm of bl, minimum height=10mm, text = pos] (bout)
    {\textbf{No!}};
  
  \node[above = 1pt of bl] {$\ell^{\text{th}}$ block};
  \node[above = 1pt of b1] {$1^{\text{st}}$ block};

  \draw[streamgray, -stealth] (prompts.east) -- (b1.west);
  \draw[dottedmain] (b1.east) to (bl.west);
  \draw[dottedmain] (bl.east) to (bout.west);

    \node[right = 1.7mm of bl.center] {\contour{white}{$\color{violet} +\alpha \vbf$}};

\end{scope}
\end{scope}

\node[fit=(paneltop)(panelbot), inner sep=2mm] {};

\end{tikzpicture}
\vspace{-10pt}
\caption{\label{fig:intro}\textbf{\textcolor{black}{Top}:} Constructing a steering vector $\vbf$, for the target concept ``code safety'', at the $\ell^{\text{th}}$ block. We run two contrastive prompt sets (\textcolor{pos}{$n_+$ safe} and \textcolor{neg}{$n_-$ malicious}) through an $L$-block LLM and collect the representations $\{{\color{neg} \hbf_-}, {\color{pos} \hbf_+}\}$ at layer~$\ell$ for each prompt. Averaging these representations over all \textcolor{pos}{safe prompts} gives $\color{pos} \mu_+$, and for the \textcolor{neg}{malicious prompts} it gives $\color{neg}\mu_-$ (both marked by a cross). We then define $\vbf \defeq \mu_+ - \mu_-$. \textbf{\textcolor{black}{Bottom}:} Steering the model’s response toward \textcolor{pos}{safe behavior} on a new prompt is done by adding $\color{violet}\alpha \vbf$ to the residual stream $\hbf$ at $\ell^{\text{th}}$ block. The steering strength $\alpha$ controls how far representations are moved along $v$.}
\end{figure}

This methodology has been successfully applied to a range of settings, including refusal~\citep{arditi_et_al_2024}, hallucination reduction~\citep{su_et_al_2025}, and sycophancy~\citep{min_et_al_2025}. 
It also compares favorably to competing approaches~\citep{wu_et_al_2025}.
Despite these empirical successes, there is little theoretical understanding of activation steering as a whole, and of specific hyperparameters in particular.
This is especially the case for the steering strength~$\alpha$, although its practical importance is recognized. 
As a starting point, we ask the following question:
\vspace{-8pt}
\begin{quote}
\textbf{How does the steering strength $\alpha$ control the trade-off between steering efficacy and distortion in next-token prediction?}
\end{quote}
\vspace{-8pt}
In this paper, we address this question from a theoretical perspective by analyzing steering with a \emph{difference of means} steering vector $\vbf$ (see Figure~\ref{fig:intro}).
The underlying model is a simplified transformer studied in \citet{zhao2024implicit}. 

\textbf{Our contributions are:} $(1)$  the characterization of how the steering strength $\alpha$ affects next token probabilities, concept probability, cross-entropy (Theorem~\ref{th:non-monotonicity},~\ref{th:increasing-concept},~\ref{th:sweet-spot}); $(2)$ formalizing the steering setup used in our experiments and derive the large-$\alpha$ limit of next-token probabilities for a transformer (Proposition~\ref{prop:steering-limit-practice}); and $(3)$ empirical validation of the theoretical results across modern LLMs (Section~\ref{sec:exp}). 
The code for all experiments is available online.\footnote{\url{https://github.com/MagamedT/steering}}

\paragraph{Related work.}
\citet{turner_et_al_2023} introduced \emph{Activation Addition}, computing the steering vector on a single pair of contrastive prompts.  
\citet{rimsky_et_al_2024} extended this methodology to \emph{difference of means}, \emph{i.e.}, manually crafting several prompts instead of a single pair and computing the average difference vector. 
The prompt generation pipeline can be automated, as demonstrated by \citet{chen_et_al_2025a} with \emph{persona vectors}. 
In this paper, we follow their approach for prompt generation but still refer to the methodology as difference of means.

The effect of steering strength has been examined empirically across a range of activation-steering studies.
\citet{turner_et_al_2023} analyze its impact on individual next-token probabilities, while \citet{rimsky_et_al_2024} study the probability of eliciting target behaviors.
Similarly, \citet{von-rutte_et_al_2024} investigate how steering strength affects the probability of concept presence, and \citet{tan_et_al_2024} measure the difference in logits between positively and negatively associated tokens, termed logit-difference propensity, as a function of $\alpha$.
Several works report degradation in model performance at high steering strengths: \citet{stickland_et_al_2024}, for instance, observe that large values of $\alpha$ can harm performance, in some cases roughly equivalent to halving pre-training compute.
More recently, \citet{wu_et_al_2025} examine the dependence of a steering score on $\alpha$, and \citet{chen_et_al_2025a} analyze its effect on trait expression.
Apart from these empirical observations, there are few theoretical characterizations of how these quantities evolve as functions of the steering strength. 
A notable exception is \citet{park_choe_veitch_2024}, which proposes partial results for a model similar to ours, but whose parameters satisfy strong assumptions (such as orthogonality of concept directions) and under the assumption that the steering vector is the \emph{true} concept direction. 
Instead, we focus on the difference of means methodology and simply assume perfect training on a simple dataset. 

Additionally, several prior works have proposed steering methods with adaptive steering strength. \citet{cheng2024linearly} formulate steering as a context-dependent activation control problem and derive guarantees for driving activations into a prescribed region, while \citet{hedstroem_et_al_2025} develop an adaptive steering intervention for error mitigation.

Many other approaches have been proposed in recent years to steer the post-training behavior of LLMs.
Notably, \citet{bricken_et_al_2023} showed that it is possible to leverage a (wide) sparse autoencoder (SAE) trained to reconstruct intermediate activations and then to act on the direction identified. 
Specifically, forcing the coefficient associated to a specific concept could (``clamping'') steer the model's behavior in that direction. 
This approach was demonstrated on Claude~3 Sonnet \citep{anthropic_2024}, by \citet{templeton_et_al_2024}. 
We note that training SAEs in this context is challenging \citep{gao_et_al_2024}.
More distant competitors include prompt engineering \citep{marvin_et_al_2023}, reinforcement learning from human feedback \citep{ziegler_et_al_2019}, and fine-tuning \citep{wei_et_al_2022}. 
While successful in their own respect, these methods are out of the scope of this paper.

\section{Theoretical Framework}
\label{sec:theoretical-framework}

We start by describing the theoretical framework in which we prove our main results: 
a dataset where high-level concepts are 
subsets of the vocabulary, and a theoretically tractable transformer model from \citet{zhao2024implicit}.

\subsection{Data and Concepts} 
\label{subsec:dataset}

\textbf{Setting.} 
We consider a vocabulary with $V$ tokens, which we identify with token indices $[V] \defeq \{1, \ldots,V \}$.
The training data consists of $n$ pairs $(\cbf_i, z_i)\in [V]^{T-1}\times [V]$, where $\cbf_i$ is a \emph{context}, $z_i\in [V]$ the \emph{next token} and $T$ the sequence length. 
For any set $A$, we let $\card{A}$ be its cardinality, and $A^\complement$ its complement. 

\textbf{Concepts.}
In this paper, we work under the assumption that \textbf{high-level concepts correspond to disjoint subsets of the vocabulary}. 
Formally, we partition $[V]$ into $G \in \Posint$ disjoint sets $C_k \subset [V]$, where each $C_k$ regroups the $s \defeq V/G$ tokens associated with the same concept (assuming~$G$ divides $V$).
As an example, we consider the following vocabulary of size $V=9$, partitioned into three concepts:
\begin{equation}\label{eq:dataset}
    \{{\color{blue}a,b,c}, {\color{orange}A, B, C},{\color{red}\alpha,\beta,\gamma}\} =  {\color{blue}C_1} \cup {\color{orange}C_2} \cup {\color{red}C_3} 
\, .
\end{equation}
To simplify the derivations, we assume that \textbf{a context can only contain tokens from a single concept}, while allowing the next-token $z$ to belong to a different concept. 
Thus, in our example, contexts may take the form
\[
\cbf_1 = {\color{orange}ABB} \in {\color{orange} C_2} \, \text{ or } \, \cbf_2 = {\color{blue} aab} \in {\color{blue}C_1} \, .
\]
By a slight abuse of notation, we write $\cbf_2 \in C_1$ to stand for $(\cbf_2)_t \in C_1$ for all $t$. 
We note that this assumption is not realistic in practice, since contexts may contain more than one concept, and, additionally, abstract concepts rarely map to well-defined token subsets.
Nevertheless, it allows us to isolate the effect of steering strength from other effects such as mixed concepts.
With this in mind, we define the \emph{dataset next-token probabilities} as follows:

\begin{definition}[Dataset next-token probabilities]
\label{def:dataset} 
Given a context $\cbf_j$ and a token $z \in [V]$, we define the probability $p(z \, | \,  \cbf_j)$ of $z$ given the context $\cbf_j$ as
\[
p(z \mid  \cbf_j) \defeq \frac{1}{\card{ \{ i \in [n] : \cbf_{i} = \cbf_j \} }} \sum_{i \in [n] : \cbf_i = \cbf_j} \indic{z = z_i} 
\, .
\]
\end{definition}

We impose the following restriction on $p (z \mid \cbf_j)$:

\begin{assumption}[Dependence and concept association]
\label{ass:concept}
For a fixed $z$, we assume that $p(z \mid \cbf_j)$ can only take two values: 
if $\cbf_j$ and $z$ belong to the same concept, then $p(z \mid \cbf_j)=a_z$, and otherwise $p(z \mid \cbf_j)=b_z$,
with $1>a_z > b_z>0$. 
\end{assumption}

Simply put, the next-token probabilities $p(z \mid \cbf_j)$ depend on the contexts only through their concepts, and not on the specific tokens composing $\cbf_j$: if $z$ belongs to the same concept as $\cbf_j$, then the probability of observing $\cbf_j$ followed by $z$ in the training data is given by $a_z$, and $b_z$ otherwise. 
We additionally require that $a_z > b_z$, meaning that a token is more likely when the context's concept matches that token's concept than when it does not.
For instance, the token \textcolor{blue}{e} is to be more likely after the lowercase context \textcolor{blue}{languag} than after the uppercase one \textcolor{orange}{LANGUAG}.
We refer to Figure~\ref{fig:dataset} for an illustration.
For simplicity of exposition, $a_z$ does not depend on $\cbf_j$; a more general setting is given in Appendix~\ref{app:more-results-math}.

\begin{figure}
    \centering
    \begin{tikzpicture}
\begin{axis}[
  width=8cm,
  height=3.5cm,
  axis lines=left,
  tick align=outside,
  xlabel={next token ($z$)},
  xmin=0.5, xmax=9.5,
  ymin=-0.005, ymax=0.325,
  xtick={1,2,3,4,5,6,7,8,9},
  xticklabels={
    \strut {\color{blue}a},\strut {\color{blue}b},\strut {\color{blue}c},\strut {\color{orange}A},\strut {\color{orange}B},\strut {\color{orange}C},
    $\vphantom{A}{\color{red}\alpha}$,$\vphantom{A}{\color{red}\beta}$,$\vphantom{A}{\color{red}\gamma}$
  },
  ytick={0,0.3},
  yticklabel style={
    /pgf/number format/fixed,
    /pgf/number format/precision=2
  },
]

\addplot[
  only marks,
  mark=*,
  mark size=1.6pt,
  blue
] coordinates {
  (1, 0.3150)
  (2, 0.2657)
  (3, 0.2980)
};
\addplot[
  only marks,
  mark=*,
  mark size=1.6pt,
  blue!50
] coordinates {
  (1, 0.0250)
  (2, 0.0357)
  (3, 0.0480)
};

\addplot[
  only marks,
  mark=*,
  mark size=1.6pt,
  orange
] coordinates {
  (4, 0.0114)
  (5, 0.021)
  (6, 0.0140)
};
\addplot[
  only marks,
  mark=*,
  mark size=1.6pt,
  orange!50
] coordinates {
  (4, 0.2614)
  (5, 0.279)
  (6, 0.270)
};

\addplot[
  only marks,
  mark=*,
  mark size=1.6pt,
  red
] coordinates {
  (7, 0.0154)
  (8, 0.045)
  (9, 0.0266)
};

\draw[decorate,decoration={brace,mirror,amplitude=4pt},blue,thick]
  (axis cs:0.7,0.25) -- (axis cs:3.3,0.25);
\node[blue] at (axis cs:2.0,0.18) {$a_{z}$};

\draw[decorate,decoration={brace,mirror,amplitude=4pt},blue!60,thick]
  (axis cs:3.3,0.06) -- (axis cs:0.7,0.06);
\node[blue!70] at (axis cs:2.0,0.12) {$b_{z}$};

\end{axis}
\end{tikzpicture}
\vspace{-8pt}
\caption{\label{fig:dataset}Visualization of dataset next-token probabilities $(p(z \mid \cbf_j))_{z \in [V]}$ for the vocabulary of Equation~\eqref{eq:dataset}: probabilities for the context $\cbf_2 = {\color{blue}aab}$ are shown in \emph{solid-color}, while probabilities for a \textbf{different} context $\cbf_1 = {\color{orange}ABB}$ are shown \emph{transparent}. This illustrates our dataset condition ${\color{blue}a_z }> {\color{blue!70}b_z}$: a token is more likely when it belongs to the same concept as the context, which is why the solid-color blue points lie above their transparent counterparts.}
\end{figure}

\subsection{Model and Activation Steering} 
\label{subsec:model-steering-theory}

We study activation steering on a model widely used in the neural collapse literature, the \emph{Unconstrained Features Model} \citep[UFM, Definition~1 in][]{zhao2024implicit}, adapted from~\citep{mixon_et_al_2022,fang_et_al_2021}, where embeddings are optimized directly as free variables rather than being constrained by a specific network architecture.
Recall that $\{(\cbf_i, z_i)\}_{i \in [n]}$ is the dataset of Section~\ref{subsec:dataset}. 
We let $\{\cbf_j\}_{j=1}^m \subset \{\cbf_i\}_{i=1}^n$ denote the $m$ \textbf{distinct contexts} (\emph{i.e.}, we keep one copy of each unique context and index them by $j\in[m]$). 
We define the UFM on the distinct contexts of the dataset, as in~\citet{thrampoulidis2024implicit}, so that the model predicts next-token distributions only for these contexts.

\begin{definition}[Unconstrained Features Model]
\label{def:model}
The UFM $f_\theta~:~\{\cbf_j\}_{j \in [m]} \to \Reals^V$ with parameters $\theta = (\Wbf, \Hbf)$ is defined as
\[
f_\theta(\cbf_j) \defeq \Wbf \hbf_j
\, ,
\]
where $\Wbf \in \Reals^{V \times d}$ is the decoder matrix, $\Hbf \defeq (\hbf_1, \ldots, \hbf_m) \in \Reals^{d \times m}$ is the context-embedding matrix with $\hbf_j \in \Reals^d$ the embedding of context $\cbf_j$. 
\end{definition}

In words, the UFM proceeds in two steps: 
it first embeds the context $\cbf_j$ into a $d$-dimensional representation $\hbf_j$, 
then maps this representation back to the vocabulary space using the linear decoder $\Wbf$. 
Applying a softmax on $f(\cbf_j)$ yields the next-token distribution for $\cbf_j$. 
As shown in~\cite{zhao2024implicit, zhao_et_al_2025_2, zhao_et_al_2025_3}, this model provides a useful abstraction of practical LLMs: it captures the concept geometry observed in these models, and the UFM's optimal parameters $\theta$ can be characterized analytically.
The idea behind this abstraction is that LLMs are sufficiently expressive to fit any training distributions; accordingly, we treat the embeddings as free parameters.

\textbf{Training.}
For any $\abf \in \Reals^V$ and $z \in [V]$, $\sigma_z(\abf)$ denotes the $z$-th entry of the softmax of $\abf$, 
that is, 
$\sigma_z(\abf) \defeq \exps{\abf_z}/\sum_{z' \in [V]}\exps{\abf_{z'}}$. 
We train $f_\theta$ to predict the next-token $z$ in our data $\{(\cbf_i, z_i)\}_{i \in [n]}$ by minimizing over $\theta$ the (unregularized) empirical cross-entropy loss
\begin{equation}\label{eq:ce-definition}
\mathrm{CE}(f_\theta) \defeq - \frac{1}{n} \sum_{i \in [n]} \log{\sigma_{z_i}(f_\theta(\cbf_i))} 
\, .
\end{equation}
From now on, we assume that the model is trained and write~$f$ instead of $f_\theta$. 

\textbf{Difference-of-means.}
We are now able to define a steering vector $\vbf$ for our UFM model and dataset.
Let $\Tcal = C_k$ denote the \emph{target concept} we aim to steer towards.
Given the $m$ distinct contexts $\{\cbf_j\}_{j=1}^{m}$, we define two index sets: $P \subset [m]$ indexes ``positive'' contexts that belong to the concept $\Tcal$ we want to steer toward, while $N \subset [m]$ indexes ``negative'' contexts that do not belong. 
We assume $P \cap N = \varnothing$, same size $\card{P} = \card{N} = q$ and we do not require $P \cup N = [m]$. 
In our notation, difference of means yields the steering vector
\begin{equation}
\label{eq:steer-vector}
\vbf  \defeq  \frac{1}{\card{P}}\sum_{j \in P} \hbf_j - \frac{1}{\card{N}}\sum_{j \in N} \hbf_j \in\Reals^d 
\, .
\end{equation}
Two common choices for what should be defined as non-concept contexts lead to two corresponding constructions of~$N$. 
In the \emph{random} setting, $N$ is an arbitrary collection of contexts that do not exhibit the concept, often sampled randomly. 
In the \emph{contrastive} setting, $N$ collects contexts expressing the opposite (or negated) concept $C_k$. 
As an example, using the vocabulary from Equation~\eqref{eq:dataset}, where uppercase letters represent the opposite concept of lowercase letters, we take the following sets to build the steering vector $\vbf$:
\begin{align*}
    P &\defeq \{ {\color{blue}aab}, {\color{blue}bba}, {\color{blue}acc} , {\color{blue}cca} \} \, , \\
    N_{\text{contrastive}} &\defeq \{{\color{orange}ABB}, {\color{orange}AAB} ,{\color{orange}CAC} , {\color{orange}CBA}\} \, , \\
    N_{\text{random}} &\defeq \{{\color{orange}ABB}, {\color{red}\alpha \beta \gamma}, {\color{red}\gamma \beta \gamma} , {\color{orange}BAB} \}
\, .
\end{align*}
Using $(P,N_{\text{contrastive}})$ (resp. $(P,N_{\text{random}})$) corresponds to the contrastive setting (resp. random setting).

\section{Main Results}
\label{sec:main}

We now present our main theoretical results, which characterize how next-token probabilities, concept probability, and cross-entropy evolve as a function of the steering strength~$\alpha$. 
All proofs are deferred to Appendix~\ref{app:proofs}.

\subsection{Influence of $\alpha$ on Next Token Probabilities}
\label{subsec:next-token}

In this subsection, we address the following question: \emph{how do the model’s next-token probabilities evolve as the steering strength $\alpha$ varies?}
To keep the analysis focused on the effect of steering, we ignore residual errors due to finite-time training that generally does not recover the dataset next-token probabilities exactly:

\begin{assumption}[Perfectly trained UFM]
\label{ass:perfect-train}
We assume that the model has \emph{perfectly} learned the training data probabilities $p(z \, | \, \cbf_j)$ from Definition~\ref{def:dataset}, meaning that $f$ satisfies
\[
\forall j \in [m], z \in [V], \qquad \sigma_z(f(\cbf_j)) = p(z \, | \, \cbf_j) \, .
\]
\end{assumption}

We argue that this assumption is reasonable: in practice, LLMs often exhibit strong memorization of their training data, making this approximation natural.
Moreover, since our theoretical dataset is simple, the UFM trained with gradient descent rapidly learns the dataset probabilities $p(z \mid \cbf_j)$ with negligible error (Appendix~\ref{app:more-results-math}).
See~\citet{thrampoulidis2024implicit} for proof and discussion on attainability of this hypothesis.

In our setting, we steer the context embeddings. 
Thus steering~$f$ by $\alpha \vbf$, where $\vbf$ is defined in Equation~\eqref{eq:steer-vector}, gives rise to the \emph{steered model} $f_\alpha$ with \emph{steered logits} given by $$f_{\alpha}(\cbf_j) \defeq \Wbf\big(\hbf_j+\alpha\vbf\big) \, .$$ 
As announced, we now turn to the study of the effect of $\alpha$ on next-token probabilities. 
We start with a definition: 

\begin{definition}[Probability increase]
\label{def:gaps}
For a context $\cbf_j$, and a token $z\in [V]$, we define the \emph{probability increase} $\alpha \mapsto \Delta p(z \, | \, \cbf_j, \alpha)$ as 
\[
\Delta p(z \, | \, \cbf_j, \alpha) \defeq \sigma_z \big( f_{\alpha }(\cbf_j) \big) - \sigma_z(f(\cbf_j)) 
\, .
\]
\end{definition}

Intuitively, $\Delta p(z \, | \, \cbf_j, \alpha)$ is the algebraic next-token probability increase for a fixed $z\in [V]$ when steering with strength~$\alpha$.   
When there is no ambiguity, we omit explicit dependence in $\cbf_j$ and $z$, and write $\Delta p(\alpha)$.

Recall that $P,N\subset[m]$ are the context indices used to construct the steering vector $\vbf$ (Equation~\eqref{eq:steer-vector}), and that $\mathcal{T} \defeq C_k$ is the target concept we aim to steer, which is used to build $P$. 
Tokens in $\mathcal{T}$ are called \emph{target}, otherwise \emph{off-target}. 
The following quantity, derived from the dataset next-token probabilities (Definition~\ref{def:dataset}), plays an important role in our analysis as it appears throughout the proofs:

\begin{figure}
    \centering
    \includegraphics[width=0.9\linewidth]{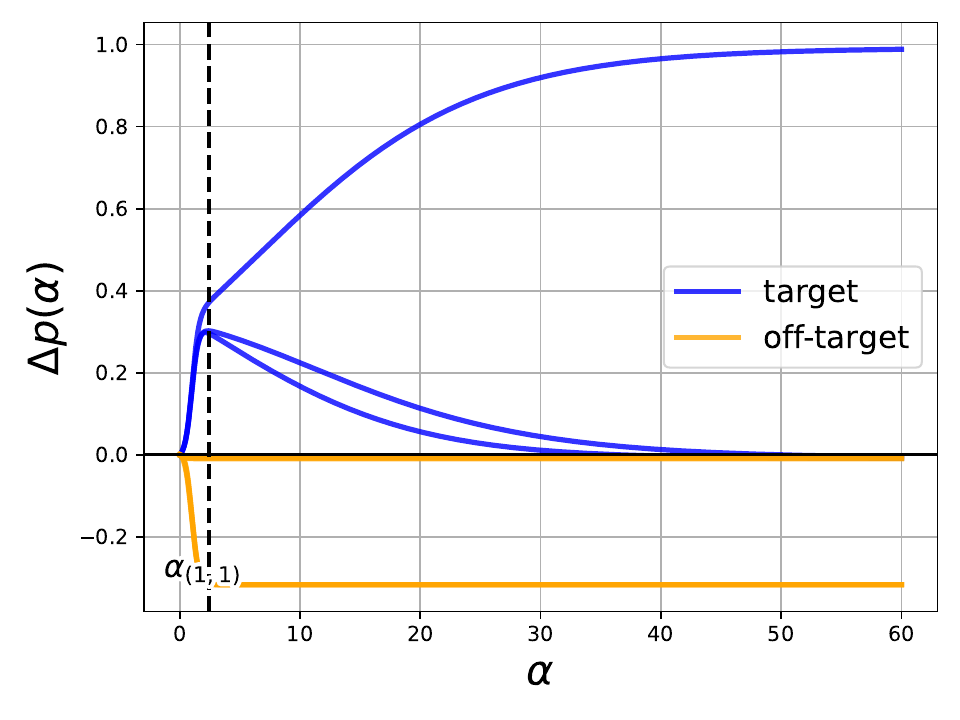}
    \vspace{-10pt}
    \caption{\label{fig:next-token}Next-token probability increases $\Delta p(\alpha)$ for a fixed context. Each curve corresponds to a token $z$: target tokens $\Tcal$ are in \textcolor{blue}{blue} and off-target tokens in \textcolor{orange}{orange}. Most target tokens exhibit a ``bump'' (peaking at $\alpha_{(1,1)}$), while one target token increases and off-target tokens decrease.
     For $\alpha<0$, see Figure~\ref{fig:next-token-neg}.}
\end{figure}

\begin{definition}[Log-odds]
\label{def:log-odds}
For any $z \in [V]$, we define the \emph{log-odds} $M(z)$ as
\[
M(z) \defeq \frac{1}{q}\log{\frac{\prod_{i\in P} p(z \, | \, \cbf_i)}{\prod_{i\in N} p(z \, | \, \cbf_i)}} \, .
\]
Additionally, we denote by $\overline M \defeq \{ z \in [ V ] :  M(z) =  \max_{z' \in [V]} M(z') \} $ the set of tokens attaining the maximum margin and by $\underline M$ the tokens attaining the minimum. 
\end{definition}

In the following, we characterize the variations of $\Delta p$:

\begin{theorem}[Behavior of $\Delta p$]
\label{th:non-monotonicity}
Let $\Tcal$ be the target concept. 
Assume that Assumptions~\ref{ass:concept} and~\ref{ass:perfect-train} hold.
Given a context~$\cbf_j$, the probability increase satisfies: 
\vspace{-8pt}
\begin{itemize}
\setlength\itemsep{0pt}
\item \textbf{(bump behavior)} for any $z\in [V] \setminus (\Msup \cup \Minf)$, 
there exists a unique $\alpha_{(j, z)} \in \Reals$ such that $\Delta p(z\mid \cbf_j,\alpha)$ is strictly increasing on $(-\infty, \alpha_{(j,z)}]$ and decreasing on $[\alpha_{(j,z)}, +\infty)$;
\item \textbf{(peak position)} for any $z \in \Tcal$ and $z' \notin \Tcal$, it holds that $\alpha_{(j,z')} < \alpha_{(j,z)}$; 
\item \textbf{(monotonic behavior)} for any $z\in \Tcal \cap \Msup$ (resp. $z\in \Tcal^\complement \cap \Minf$), $\Delta p(z\mid \cbf_j,\alpha)$ is strictly increasing (resp. decreasing) on $\Reals$.
\end{itemize}
\end{theorem}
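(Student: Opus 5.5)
The plan is to first reduce the steered next-token distribution to an exponential tilt of the dataset distribution $p(\cdot\mid\cbf_j)$ by the log-odds $M$ of Definition~\ref{def:log-odds}. Then each item of the theorem becomes a statement about a one-parameter exponential family.

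\emph{Reduction.} By Assumption~\ref{ass:perfect-train}, $\sigma(\Wbf\hbf_i)=p(\cdot\mid\cbf_i)$. Since the softmax is invertible up to adding a constant vector, this gives
\[
\Wbf\hbf_i=\log{p(\cdot\mid\cbf_i)}+\lambda_i\mathbf{1}
\]
for some scalar $\lambda_i$. Averaging over $P$ and $N$ in Equation~\eqref{eq:steer-vector} yields $\Wbf\vbf=(M(z))_{z\in[V]}+c\mathbf{1}$, and the constant $c$ disappears in the softmax. Therefore
\[
\pi_\alpha(z)\defeq\sigma_z(f_\alpha(\cbf_j))=\frac{p(z\mid\cbf_j)\,\exps{\alpha M(z)}}{\sum_{z'}p(z'\mid\cbf_j)\,\exps{\alpha M(z')}} .
\]
Differentiating gives
\[
\frac{\mathrm{d}}{\mathrm{d}\alpha}\Delta p(z\mid\cbf_j,\alpha)=\pi_\alpha(z)\big(M(z)-m(\alpha)\big),
\qquad m(\alpha)\defeq\expecunder{M}{\pi_\alpha}.
\]
The derivative of $m$ is $m'(\alpha)=\varunder{M}{\pi_\alpha}$. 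Every $p(z\mid\cbf_j)$ is positive by Assumption~\ref{ass:concept}, so $\pi_\alpha$ has full support. Hence $m'>0$ as soon as $M$ is non-constant. In that case $m$ is a strictly increasing bijection from $\Reals$ onto $(\min M,\max M)$, because as $\alpha\to\pm\infty$ the tilt concentrates on $\Msup$ or $\Minf$.

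\emph{Computing $M$.} For $z\in\Tcal$, every context in $P$ lies in $\Tcal$ and none in $N$ does. Assumption~\ref{ass:concept} then gives
\[
M(z)=\log{a_z/b_z}>0 .
\]
For $z'\notin\Tcal$, the $P$-contexts contribute $b_{z'}$. The $N$-contexts contribute $a_{z'}$ exactly for the $k_{z'}$ of them lying in the concept of $z'$, and $b_{z'}$ otherwise. Hence
\[
M(z')=\frac{k_{z'}}{q}\log{b_{z'}/a_{z'}}\le 0 .
\]
Consequently $M(z)>M(z')$ for every $z\in\Tcal$ and $z'\notin\Tcal$. In particular $M$ is non-constant, $\Msup\subset\Tcal$ and $\Minf\subset\Tcal^\complement$.

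\emph{The three items.}
\begin{itemize}
\item (Bump behavior.) If $z\notin\Msup\cup\Minf$, then $M(z)\in(\min M,\max M)$. So there is a unique $\alpha_{(j,z)}=m^{-1}(M(z))$. The derivative has the sign of $M(z)-m(\alpha)$, which is positive before $\alpha_{(j,z)}$ and negative after it, since $\pi_\alpha(z)>0$. This gives strict increase, then decrease.
\item (Peak position.) Since $m^{-1}$ is strictly increasing, peaks are ordered like the values of $M$. The inequality $M(z)>M(z')$ then gives $\alpha_{(j,z')}<\alpha_{(j,z)}$. For $z\in\Msup$ or $z'\in\Minf$ there is no finite peak; we interpret $\alpha_{(j,z)}=+\infty$ and $\alpha_{(j,z')}=-\infty$, consistently with the next item.
\item (Monotonic behavior.) For $z\in\Tcal\cap\Msup$ we have $M(z)=\max M>m(\alpha)$ for all $\alpha$, with strict inequality because $\pi_\alpha$ has full support and $M$ is non-constant. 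So $\Delta p$ is strictly increasing. The case $\Tcal^\complement\cap\Minf$ is symmetric and gives strict decrease.
\end{itemize}

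I expect the main obstacle to be the reduction step: being careful that the unknown normalizing constants $\lambda_i$ really collapse to a multiple of $\mathbf{1}$, which the equal sizes $\card{P}=\card{N}=q$ do not even require. The other delicate point is the sign bookkeeping for $M$ on off-target tokens, since it depends on how many negative contexts share their concept.
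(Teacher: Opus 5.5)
Your proposal is correct and follows essentially the same route as the paper. The shared steps are: the exponential-tilt rewriting via shift invariance of the softmax (Lemma~\ref{lemma:rewriting}); the derivative $\pi_\alpha(z)\bigl(M(z)-m(\alpha)\bigr)$, with $m'(\alpha)$ equal to a variance; the limits $\max M$ and $\min M$ of $m$; the sign separation of the log-odds (Lemma~\ref{lemma:sign-logodds}); and the convention $\alpha_{(j,z)}=\pm\infty$ for tokens in $\Msup\cup\Minf$.
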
 

One might expect $\Delta p(\alpha)$ to have a simple behavior (\emph{e.g.}, increasing for target concept $z\in \Tcal$ as in~\citet{turner_et_al_2023}), or 
to display erratic dynamics as $\alpha$ varies. 
Surprisingly, neither is true, as our theorem reveals a simple pattern: when we steer in the concept direction, most tokens exhibit a \textbf{``bump'' behavior}, \emph{i.e.}, their probability increases, reaches a peak at some $\alpha$, then decreases. 
Figure~\ref{fig:next-token} illustrates this pattern (for $\alpha<0$, see Figure~\ref{fig:next-token-neg}), and Section~\ref{sec:exp} validates it empirically on practical LLMs.
Importantly, off-target tokens $z \notin \Tcal$ reach their \textbf{peak earlier} than target tokens $z \in \Tcal$.
This means that as $\alpha$ increases, off-target token probabilities start to fade while target token probabilities are still rising, which helps steering to remain focused on the target concept.

This ``bump'' pattern also suggests the existence of a steering \textbf{``sweet spot''}: a range of $\alpha$ where target tokens are favored by the model while the next-token distribution has not yet collapsed onto a few tokens, helping preserve output quality.

Additionally, the bump location $\alpha_{(j,z)}$ varies across contexts~$\cbf_j$, suggesting that \textbf{$\alpha$ should be chosen adaptively} with respect to the input prompt, as proposed in~\citep{hedstroem_et_al_2025, ferrando-de-las-morenas_et_al_2025}. 
This discussion illustrates how Theorem~\ref{th:non-monotonicity} can inform choices of the steering strength $\alpha$.

Finally, a few tokens are \textbf{exceptions to this behavior}: tokens attaining the maximal log-odds keep increasing with $\alpha$, while those attaining the minimal log-odds keep decreasing. 
\begin{remark}[Sign of $\alpha_{(j,z)}$]\label{remark:sign}
With the dataset defined in Appendix~\ref{app:non-monotonicity-proof}, the ``bump'' pattern for tokens $z \in \Tcal$ occurs only for \textbf{positive} steering strength ($\alpha_{(j,z)}>0$), matching the intuition that positive steering increases their probabilities.
\end{remark}
We defer the limits of $\Delta p(\alpha)$ as $\alpha \to \pm\infty$ to Proposition~\ref{prop:large-alpha-steering-theory}. 
In short, $\Delta p(\alpha)$ concentrates on tokens in $\overline M$ (resp.\ $\underline M$) as $\alpha \to +\infty$ (resp.\ $-\infty$).
Instead, the limits of $\Delta p(\alpha)$ for modern LLMs are characterized in Proposition~\ref{prop:steering-limit-practice}.

\subsection{Influence of $\alpha$ on Concept Probability in the Output}

\begin{figure}
    \centering
    \includegraphics[width=0.9\linewidth]{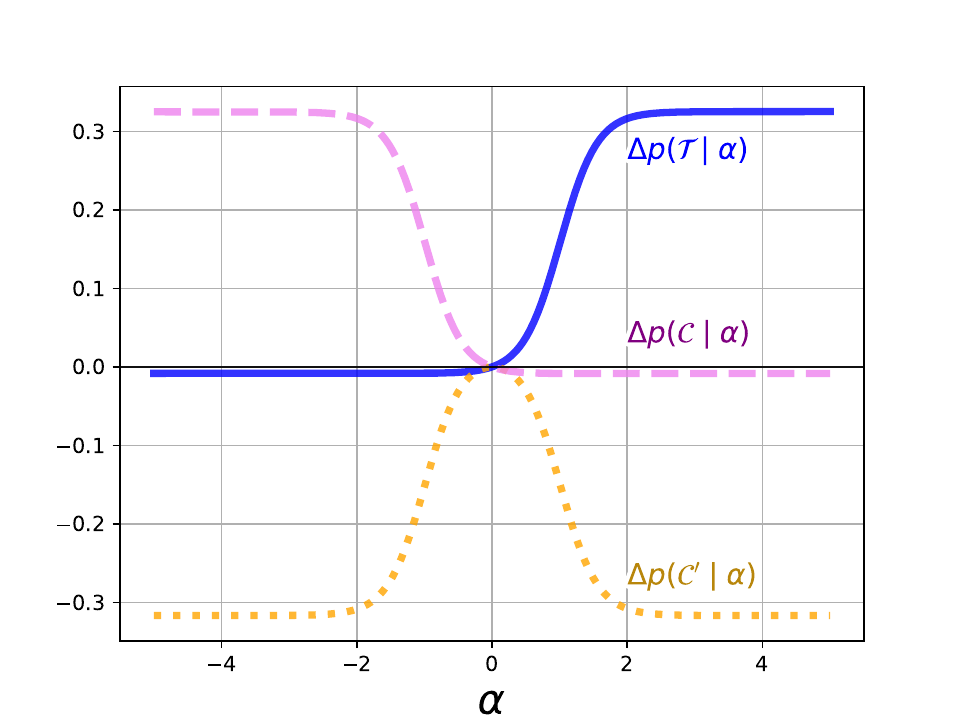}
    \vspace{-8pt}
\caption{\label{fig:mean-presence}Concept probability increases $\Delta p(\mathcal{C} \mid \alpha)$ predicted by Theorem~\ref{th:increasing-concept}: the target concept $\color{blue}\Delta p(\Tcal \mid \alpha)$ increases with a sigmoidal shape, an off-target $\color{pink!90!violet}\Delta p(\Ccal \mid \alpha)$ decreases sigmoidally, and another $\color{orange}\Delta p(\Ccal' \mid \alpha)$ converges to the same limit as $|\alpha|\to\infty$.}
\end{figure}

In the previous subsection, we focused on the atomic (token-level) quantity $\Delta p(\alpha)$. 
Our next step is to ``zoom out'' and study aggregated versions of $\Delta p$ over multiple tokens. 
These aggregates help to answer the following question: \emph{does increasing the steering strength make the target concept more likely, while other concepts become less likely?}
As we will show in Theorem~\ref{th:increasing-concept}, the answer to the previous question is yes.
To answer it, we define the probability of a concept in the model output for a given context as follows:

\begin{definition}[Increase/decrease of a concept] 
\label{def:concept-probs}
Let $\mathcal{C}$ be any concept. 
Given a context index $j \in [m]$, we define the \emph{concept increase} as 
\begin{align*}
\Delta p(\Ccal \mid \cbf_j,\alpha) \defeq \frac{1}{\card{\Ccal}} \sum_{z \in \Ccal} \Delta p(z \, | \, \cbf_j, \alpha) 
\, .
\end{align*}
When there is no ambiguity, we simply write $\Delta p(\Ccal \mid \alpha)$. 
\end{definition}

Intuitively, $\Delta p(\Ccal \mid \alpha)$ is the mean of the probability increase $\Delta p$ over tokens belonging to the same concept $\Ccal$.
This quantity serves as a natural proxy, in our setting, for the concept-presence metric studied empirically in~\citet{von-rutte_et_al_2024, chen_et_al_2025a, rimsky_et_al_2024, park_choe_veitch_2024}. 
We postpone the discussion of how $\Delta p(\Ccal \mid \alpha)$ relates to practical metrics until after the main result below, which characterizes the shape of $\Delta p(\Ccal \mid \alpha)$:
\begin{theorem}[Behavior of $\Delta p(\Ccal \mid \alpha)$]\label{th:increasing-concept}
Let $\Tcal$ denote the target concept being steered, and let $\Ccal$ denote an arbitrary concept.
Assume that Assumptions~\ref{ass:concept} and~\ref{ass:perfect-train} hold.
Given a context $\cbf_j$, the concept probability increase satisfies
\[
\Delta p(\Ccal \mid \alpha) = \frac{1}{2\card{\Ccal}}\left( \tanh \left(\frac{\nu_j(\alpha) + r_j}{2}\right) - r'_j \right) 
\, ,
\]
with $r_j,r'_j \in \Reals$ and $\nu_j : \Reals \to \Reals$ both depending on $\Ccal$ (see Appendix~\ref{app:increasing-concept-proof} for exact expressions).
As a consequence, $\Delta p(\Tcal \mid \alpha)$ is \textbf{increasing} in $\alpha$. Moreover, for any $\Ccal' \neq \Tcal$ such that $\Ccal' \cap (\underline M \cup \overline M) = \varnothing$, we have the \textbf{limits}
\[
\lim_{\alpha \to \pm \infty} \Delta p(\mathcal{C}' \mid \alpha) = -\frac{1}{\card{\Ccal'}} \sum_{z \in \Ccal'} p(z \, | \, \cbf_j) 
\, .
\]
Finally, for any $\Ccal \neq \Tcal$ satisfying $\max_{z \in \Ccal} M(z) \leq \min_{z \notin {\Ccal}} M(z)$, $\Delta p(\Ccal \mid \alpha)$ is \textbf{decreasing} in $\alpha$. 
\end{theorem}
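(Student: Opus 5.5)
The plan is to first reduce the steered model to an explicit exponential tilting of the dataset distribution, and then read off all three claims from a single scalar log-ratio.

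\textbf{Step 1 (closed form of steered probabilities).} By Assumption~\ref{ass:perfect-train}, for every distinct context $\cbf_i$ the logits satisfy $\Wbf\hbf_i = \log{p(\cdot\mid\cbf_i)} + \kappa_i\mathbf{1}$ for some scalar $\kappa_i$, since the softmax determines logits only up to an additive constant. Averaging over $P$ and $N$ in~\eqref{eq:steer-vector}, the constant vectors combine into a single multiple of $\mathbf{1}$, so
\[
(\Wbf\vbf)_z = M(z) + \kappa
\]
with $M$ as in Definition~\ref{def:log-odds}. The constant $\kappa$ cancels in the softmax, which gives
\[
\sigma_z(f_\alpha(\cbf_j)) = \frac{p(z\mid\cbf_j)\exps{\alpha M(z)}}{\sum_{z'} p(z'\mid\cbf_j)\exps{\alpha M(z')}} \, .
\]
Under Assumption~\ref{ass:concept}, every context in $P$ lies in $\Tcal$. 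It follows that $M(z)=\log(a_z/b_z)>0$ for $z\in\Tcal$. For $z\notin\Tcal$, we get $M(z)=\tfrac{n_z}{q}\log(b_z/a_z)\le 0$, where $n_z$ is the number of $N$-contexts lying in the concept of $z$. This sign separation is the structural fact that drives the monotonicity claims.

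\textbf{Step 2 (tanh form).} Set $S_A(\alpha)\defeq\sum_{z\in A}p(z\mid\cbf_j)\exps{\alpha M(z)}$ and $x(\alpha)\defeq\log{S_\Ccal(\alpha)/S_{\Ccal^\complement}(\alpha)}$. Then the steered mass of $\Ccal$ is
\[
\frac{1}{1+\exps{-x(\alpha)}}=\tfrac12\bigl(1+\tanh(x(\alpha)/2)\bigr) \, .
\]
Subtracting the unsteered mass $p(\Ccal\mid\cbf_j)=\sum_{z\in\Ccal}p(z\mid\cbf_j)$ and dividing by $\card{\Ccal}$ yields the stated formula with
\[
r_j\defeq x(0),\qquad \nu_j(\alpha)\defeq x(\alpha)-x(0),\qquad r'_j\defeq 2p(\Ccal\mid\cbf_j)-1 \, .
\]

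\textbf{Step 3 (monotonicity and limits).} Since $\tanh$ is increasing, the sign of the $\alpha$-derivative of $\Delta p(\Ccal\mid\alpha)$ is the sign of $x'(\alpha)$. A direct differentiation gives
\[
x'(\alpha)=\mathbb{E}_{\pi^\Ccal_\alpha}[M]-\mathbb{E}_{\pi^{\Ccal^\complement}_\alpha}[M] \, ,
\]
where $\pi^A_\alpha$ is the tilted distribution restricted to $A$.
\begin{itemize}
\item For $\Ccal=\Tcal$: the first expectation is positive and the second is nonpositive by Step 1, so $x'>0$ and $\Delta p(\Tcal\mid\alpha)$ is increasing.
\item For $\Ccal\neq\Tcal$ with $\max_\Ccal M\le\min_{\Ccal^\complement}M$: the first expectation is at most the second, so $x'\le0$ and $\Delta p(\Ccal\mid\alpha)$ is decreasing.
\item For the limits with $\Ccal'\cap(\Msup\cup\Minf)=\varnothing$: as $\alpha\to+\infty$, the tilted distribution concentrates on $\Msup$. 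This set is nonempty and disjoint from $\Ccal'$, so the steered mass of each $z\in\Ccal'$ tends to $0$. Symmetrically, as $\alpha\to-\infty$ the distribution concentrates on $\Minf$, with the same conclusion. Hence $\Delta p(\Ccal'\mid\alpha)\to-\frac{1}{\card{\Ccal'}}\sum_{z\in\Ccal'}p(z\mid\cbf_j)$ in both directions.
\end{itemize}

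\textbf{Main obstacle.} I expect the main care to be needed in Step 1. The per-context additive constants $\kappa_i$ must be shown to cancel exactly, which they do only because they enter $\Wbf\vbf$ as a multiple of $\mathbf{1}$ and the softmax is shift-invariant. The signs of $M$ must also be derived correctly from Assumption~\ref{ass:concept}, including the case where an $N$-context shares the concept of $z$. Everything after that is one-dimensional calculus on a logistic function.
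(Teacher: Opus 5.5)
Your proof is correct and follows essentially the paper's route: the same tilted-softmax rewriting via the log-odds $M$, the same sign separation of $M$, the same identity that the derivative of the logit of the concept mass is $\mu_{\Ccal,j}(\alpha)-\mu_{\Ccal^\complement,j}(\alpha)$ (your $x'(\alpha)$), and the same concentration on $\Msup$ and $\Minf$ for the limits. The only difference is that you write the logit directly as $x(\alpha)=\log{S_\Ccal(\alpha)/S_{\Ccal^\complement}(\alpha)}$, whereas the paper derives and integrates the logistic ODE $F'=F(1-F)(\mu_{\Ccal,j}-\mu_{\Ccal^\complement,j})$; your version gives $\nu_j$ in closed form, and your $r_j$, $\nu_j$ and $r'_j=2p(\Ccal\mid\cbf_j)-1=\tanh(r_j/2)$ coincide with the paper's constants.
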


In other words, the steered probability of a concept \mbox{$\Delta p(\Ccal \mid \alpha)$} exhibits \textbf{three distinct behaviors}, all following a $\tanh$-shaped curve up to a reparametrization of $\alpha$. 
For the target concept $\Tcal$, \textbf{steering behaves as intended}: increasing the steering strength $\alpha$ increases the presence of $\Tcal$ in the model’s output, with $\Delta p(\Tcal \mid \alpha)$ following a \textbf{sigmoidal} shape. 
For any other concept $\Ccal'$ that contains neither maximal nor minimal log-odds tokens, $\Delta p(\Ccal' \mid \alpha)$ converges back to its unsteered value. 
Finally, for concepts $\Ccal'$ whose tokens all have log-odds below those of the remaining tokens, $\Delta p(\Ccal' \mid \alpha)$ \textbf{decreases} as $\alpha$ increases.
See Figure~\ref{fig:mean-presence} for an illustration. 
This is consistent with the empirical finding of~\citet{von-rutte_et_al_2024}, who observed a $\tanh(\alpha)$ trend for the concept probability in the output of a steered LLM. 

Our result slightly disagrees with~\citet[Theorem~2.5,][]{park_choe_veitch_2024}, which predict that target-concept probability increases while off-target concept probability remains constant. 
We suspect this difference comes from their model assumptions and our definition of $\Delta p (\Ccal \mid \alpha)$. 
Empirically, off-target concept probabilities can exhibit different behaviors under steering.
Off-target concepts that are nearly orthogonal to the steering vector may remain unchanged, as in \citet{park_choe_veitch_2024}'s experiments. 
In contrast, off-target concepts that are representationally entangled with the target concept can be affected by the intervention, so their probability need not remain constant~\citep{wehner2025taxonomy}.

In practice, concept probability is estimated by how often the concept appears across \textbf{sampled generations} of a steered LLM. Our $\Delta p(\Ccal \mid \alpha)$ is more fine-grained, since it tracks changes in the underlying token probabilities. These variations can be masked by sampling: $\Delta p(\Ccal \mid \alpha)$ may vary while the corresponding concept tokens $\Ccal$ remain too low-probability to be sampled with noticeable frequency, making the sampling-based concept metric appear nearly constant, as in~\citet{park_choe_veitch_2024}. Once concept tokens $\Ccal$ become sufficiently likely, the sampling-based concept probability becomes more \textbf{aligned} with our $\Delta p(\Ccal \mid \alpha)$.
We confirm our findings by an extensive experimental validation (Section~\ref{sec:exp}).

\subsection{Influence of $\alpha$ on Cross-Entropy}

\begin{figure}
    \centering
    \includegraphics[width=0.9\linewidth]{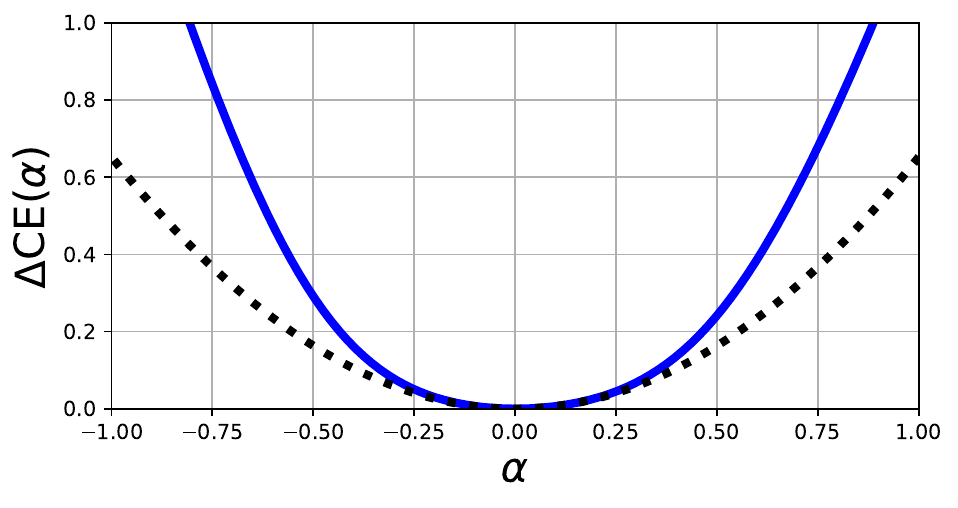}
    \vspace{-8pt}
\caption{\label{fig:cross-entropy}Local quadratic behavior of $\Delta \mathrm{CE}(\alpha)$, as predicted by Theorem~\ref{th:sweet-spot}. The blue curve shows $\Delta \mathrm{CE}(\alpha)$ and the black curve the quadratic fit using the coefficient from the theorem.}
\end{figure}

\begin{figure*}
\centering
\begin{tabular}{@{}c@{\hspace{0.5em}}ccc@{}}
\raisebox{0.9cm}{\rotatebox{90}{$\Delta p(\alpha)$}}
&
\begin{subfigure}[t]{0.30\textwidth}
  \centering
  \includegraphics[width=1.\linewidth]{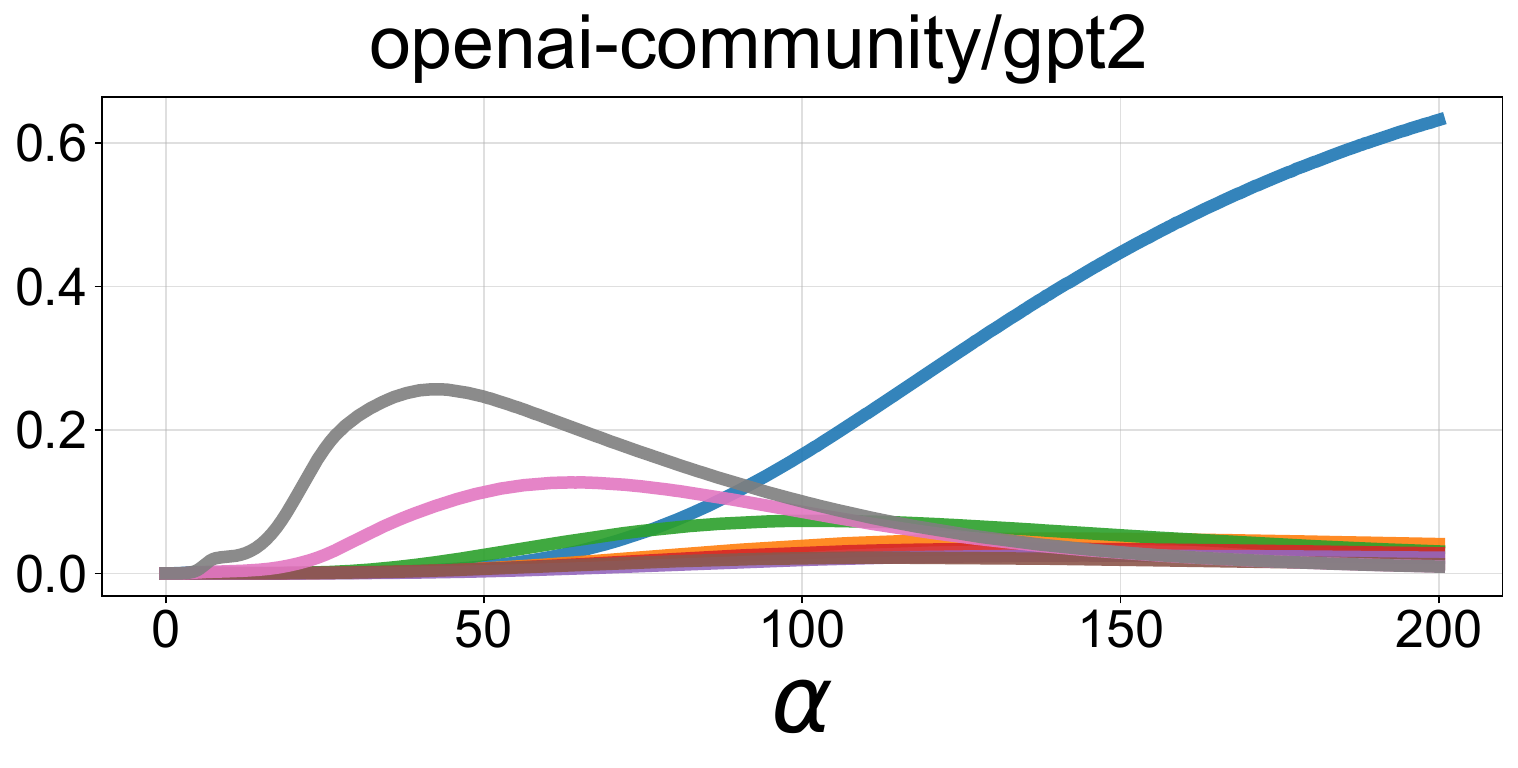}
\end{subfigure}
&
\begin{subfigure}[t]{0.30\textwidth}
  \centering
  \includegraphics[width=1.\linewidth]{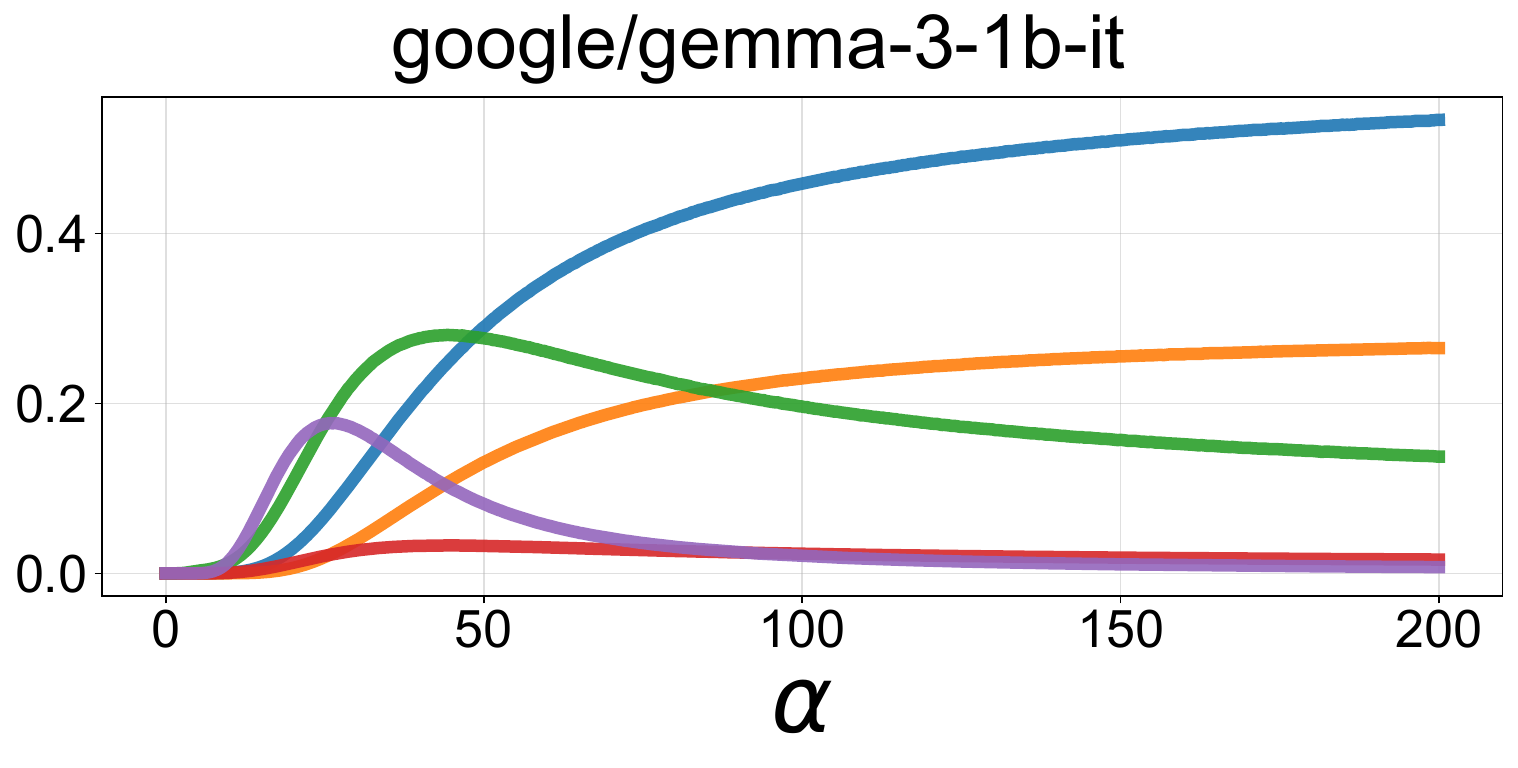}
\end{subfigure}
&
\begin{subfigure}[t]{0.30\textwidth}
  \centering
  \includegraphics[width=1.\linewidth]{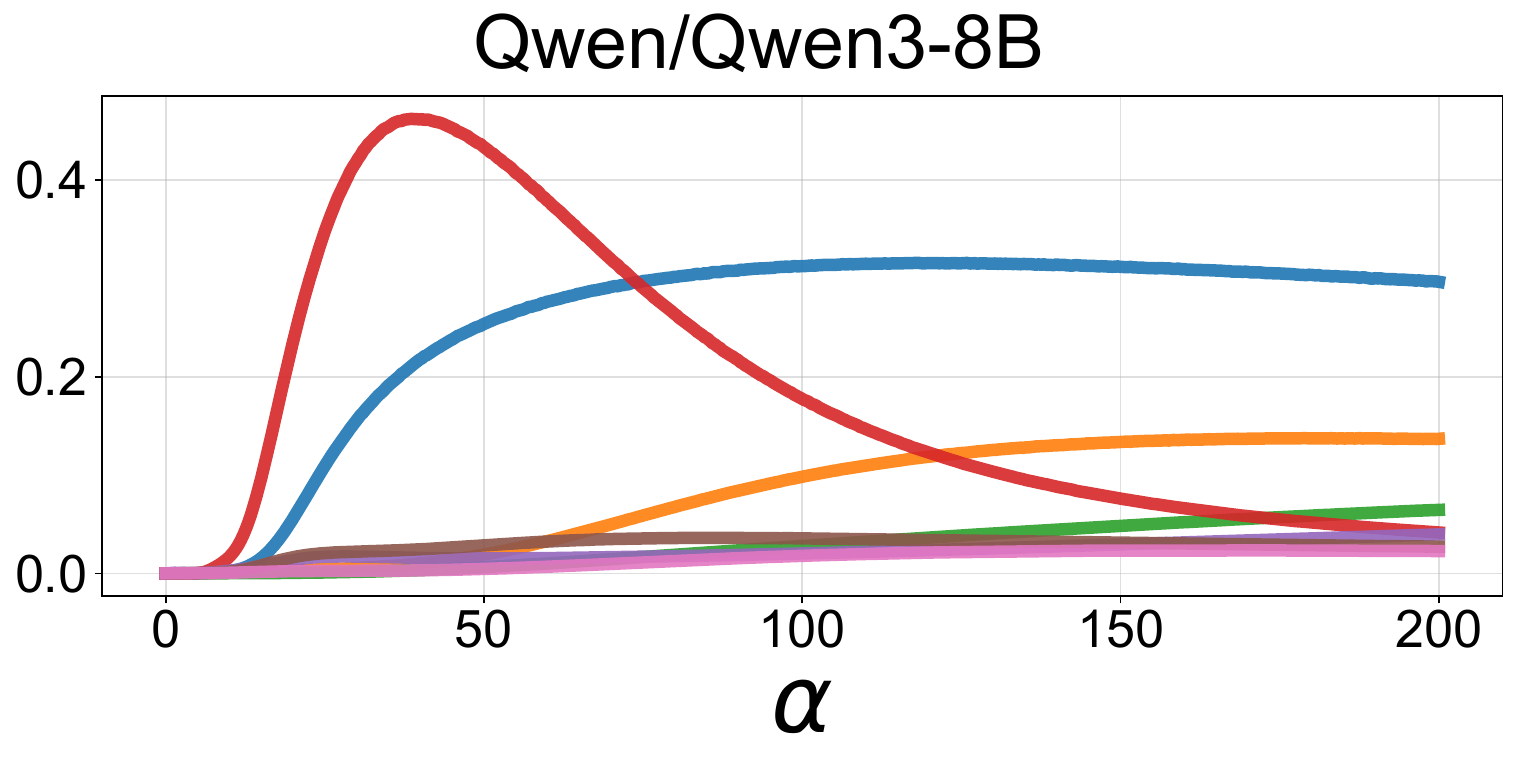}
\end{subfigure}
\end{tabular}
\vspace{-10pt}
\caption{\label{fig:exp-next-token} Influence of steering strength $\alpha$ on next-token probability increase $\Delta p(z, \alpha)$ for the concept ``evil,'' shown for LLMs of increasing size. Each curve $\Delta p (z, \alpha)$ corresponds to a token $z$ selected among the eight highest-probability tokens at $\alpha=200$. This matches Theorem~\ref{th:non-monotonicity}: most tokens exhibit a bump, while a few increase throughout. The selected tokens are all \textbf{related} to the steered concept.}
\end{figure*}

In this subsection, we zoom out once more, and address the following question: \emph{how does the steering strength $\alpha$ affect the model performance as a whole?}
This question is directly motivated by practice, as a precise answer can avoid costly searches over $\alpha$ to balance effective steering with maintaining a high-quality model output. 
In practice, output quality is often assessed with benchmarks such as MMLU~\citep{hendrycks_et_al_2021}.  
In our theoretical setting, cross-entropy is the most natural performance measure, and we therefore study how steering affects the cross-entropy computed on the training set.
This quantity provides a proxy for test-time performance, as the model is assumed to be well-trained and the training set is large and drawn from the same distribution as evaluation data.
We therefore take a first step toward answering the above question by analyzing how the steering strength $\alpha$ influences the cross-entropy:

\begin{definition}[Difference of cross-entropy]
Recall that $f_{\alpha}$ is the steered model and $\mathrm{CE}$ is the cross-entropy as defined in Equation~\eqref{eq:ce-definition}. 
We define the difference of cross-entropy $\Delta \mathrm{CE}(\alpha)$ after steering as
\begin{align*}
\Delta \mathrm{CE}(\alpha) \defeq
\mathrm{CE}(f_{\alpha})
-\mathrm{CE}(f)
\, .  
\end{align*}
\end{definition}

We now give a precise characterization of the local behavior of cross-entropy around $\alpha = 0$:

\begin{theorem}[Cross-entropy local behavior]
\label{th:sweet-spot}
Under Assumption~\ref{ass:perfect-train}, as $\alpha \to 0$, the cross entropy increase
satisfies
\[
\Delta \mathrm{CE}(\alpha) = \frac{1}{2}\sum_{j \in [m]} \pi_j \varunder{M(Z)}{j} \alpha^2 + o(\alpha^2) \, ,
\]
where $\varunder{M(Z)}{j}$ is the variance of the log-odds for tokens $Z$ sampled according to $(p (z \mid \cbf_j))_{z \in [V]}$ and $\pi_j$ is the probability of each distinct context $\cbf_j$ (see Appendix~\ref{app:sweet-spot-proof} for both expressions). 
\end{theorem}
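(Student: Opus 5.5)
The plan is to reduce the steered model to an exponential tilting of the data distribution, and then recognise $\Delta \mathrm{CE}$ as a sum of centred cumulant generating functions of the log-odds.

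\textbf{Step 1: the steering direction in logit space.} By Assumption~\ref{ass:perfect-train}, $\sigma(\Wbf\hbf_j)$ equals $(p(z\mid\cbf_j))_{z\in[V]}$. Softmax is invariant to adding a constant to all logits, so there is a scalar $\lambda_j$ with
\[
(\Wbf\hbf_j)_z=\log{p(z\mid\cbf_j)}+\lambda_j \qquad \text{for all } z\in[V].
\]
This needs every $p(z\mid\cbf_j)>0$, which perfect training with finite logits forces (and Assumption~\ref{ass:concept} guarantees if we use it). Since $\vbf$ in~\eqref{eq:steer-vector} is linear in the $\hbf_j$, we get
\[
(\Wbf\vbf)_z=\frac1q\sum_{i\in P}\log{p(z\mid\cbf_i)}-\frac1q\sum_{i\in N}\log{p(z\mid\cbf_i)}+\kappa=M(z)+\kappa .
\]
Here $\kappa$ is a token-independent constant, using $\card{P}=\card{N}=q$. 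Hence, after dropping constants, $\sigma_z(f_\alpha(\cbf_j))=p(z\mid\cbf_j)\,\exps{\alpha M(z)}/\sum_{z'}p(z'\mid\cbf_j)\,\exps{\alpha M(z')}$. I expect this identification of $\Wbf\vbf$ with $M$ up to a shift to be the only real step. It is also exactly where the ``log-odds'' of Definition~\ref{def:log-odds} come from.

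\textbf{Step 2: regroup the loss by distinct contexts.} Let $\pi_j\defeq\card{\{i:\cbf_i=\cbf_j\}}/n$. By Definition~\ref{def:dataset},
\[
\mathrm{CE}(g)=-\sum_{j\in[m]}\pi_j\sum_{z}p(z\mid\cbf_j)\log{\sigma_z(g(\cbf_j))}
\]
for any model $g$. Plugging in the tilted form from Step~1 gives $\log{\sigma_z(f_\alpha(\cbf_j))}=\log{p(z\mid\cbf_j)}+\alpha M(z)-K_j(\alpha)$. Here $K_j(\alpha)\defeq\log{\Expec_{j}[\exps{\alpha M(Z)}]}$ is the cumulant generating function of $M(Z)$ with $Z\sim p(\cdot\mid\cbf_j)$. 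Subtracting $\mathrm{CE}(f)$ yields the exact identity
\[
\Delta\mathrm{CE}(\alpha)=\sum_{j\in[m]}\pi_j\bigl(K_j(\alpha)-\alpha\,\Expec_j[M(Z)]\bigr).
\]

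\textbf{Step 3: Taylor expansion.} Each $K_j$ is smooth (indeed analytic), since it is the log of a finite sum of exponentials. It satisfies $K_j(0)=0$, $K_j'(0)=\Expec_j[M(Z)]$ and $K_j''(0)=\varunder{M(Z)}{j}$. So each bracket equals $\tfrac12\varunder{M(Z)}{j}\,\alpha^2+o(\alpha^2)$. Summing over the finitely many $j$ with weights $\pi_j$ gives the claim.

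As a by-product, since $K_j$ is convex, $\Delta\mathrm{CE}\ge0$ for every $\alpha$, not just near $0$.
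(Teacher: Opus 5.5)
Your proof is correct and follows the paper's route. It uses the same tilted-softmax rewriting (Lemma~\ref{lemma:rewriting}), the same regrouping of the loss by distinct contexts with weights $\pi_j$, and a second-order Taylor expansion whose linear term vanishes and whose quadratic coefficient is the variance of $M(Z)$. The difference is only packaging: you write $\Delta\mathrm{CE}(\alpha)=\sum_j \pi_j\bigl(K_j(\alpha)-\alpha K_j'(0)\bigr)$ exactly, whereas the paper differentiates $\Delta\mathrm{CE}$ twice via Lemma~\ref{lemma:derivative} and~\eqref{eq:variance}; your closed form has the small bonus of giving $\Delta\mathrm{CE}\ge 0$ for all $\alpha$ by convexity of $K_j$.
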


In light of the previous theorem, $\Delta \mathrm{CE}(\alpha)$ is locally $U$-shaped, since there is no linear term in $\alpha$ and the coefficient of $\alpha^2$ is a variance of the log-odds, hence nonnegative; see Figure~\ref{fig:cross-entropy} for an illustration. 
Simply put, \textbf{steering necessarily degrades global performance}.
This provides, to our knowledge, the \textbf{first theoretical characterization} of how a performance measure (cross-entropy) varies with the steering strength $\alpha$.
Additionally, our result provides a theoretical justification to the empirical observation from \citet{von-rutte_et_al_2024} that $\Delta \mathrm{CE}(\alpha)$ is locally quadratic in~$\alpha$; we come back to this matter in Section~\ref{sec:exp}.

\section{Steering Transformers in Practice}
\label{sec:steering-practice}
\begin{figure*}[t]
\centering

\begin{tabular}{@{}c@{\hspace{0.4em}}ccc@{}}

\raisebox{0.9cm}{\rotatebox{90}{$p(\Ccal \mid \alpha)$}}
&
\begin{subfigure}[t]{0.30\textwidth}
  \centering
  \includegraphics[width=\linewidth]{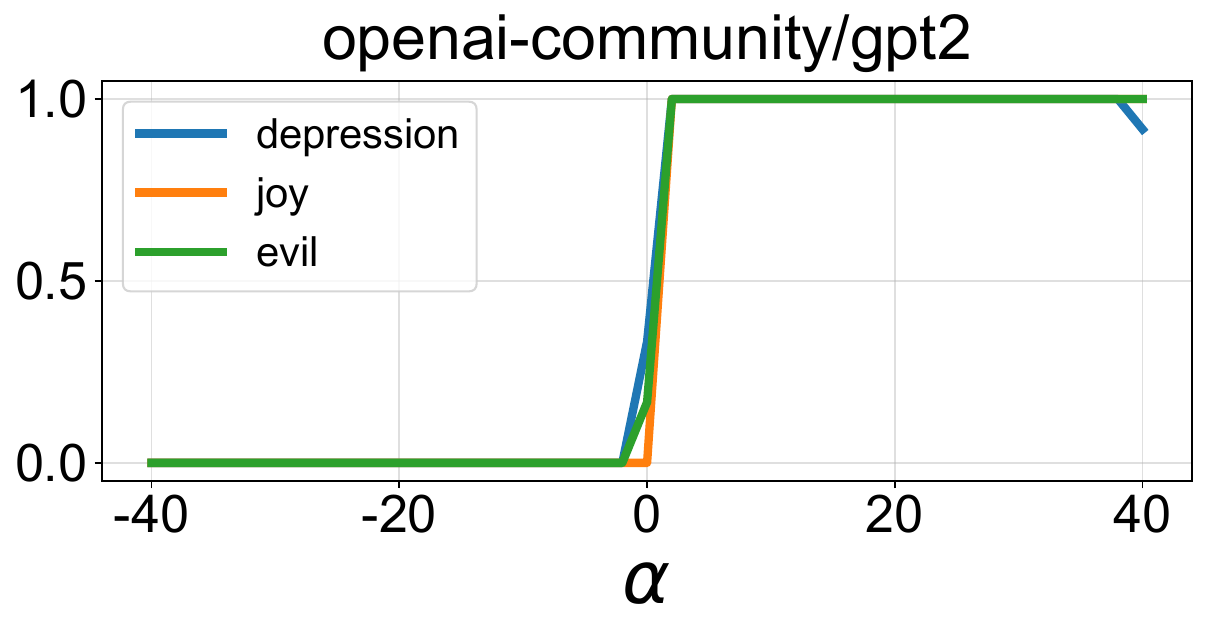}
  \label{fig:conceptprob-gpt2}
\end{subfigure}
&
\begin{subfigure}[t]{0.30\textwidth}
  \centering
  \includegraphics[width=\linewidth]{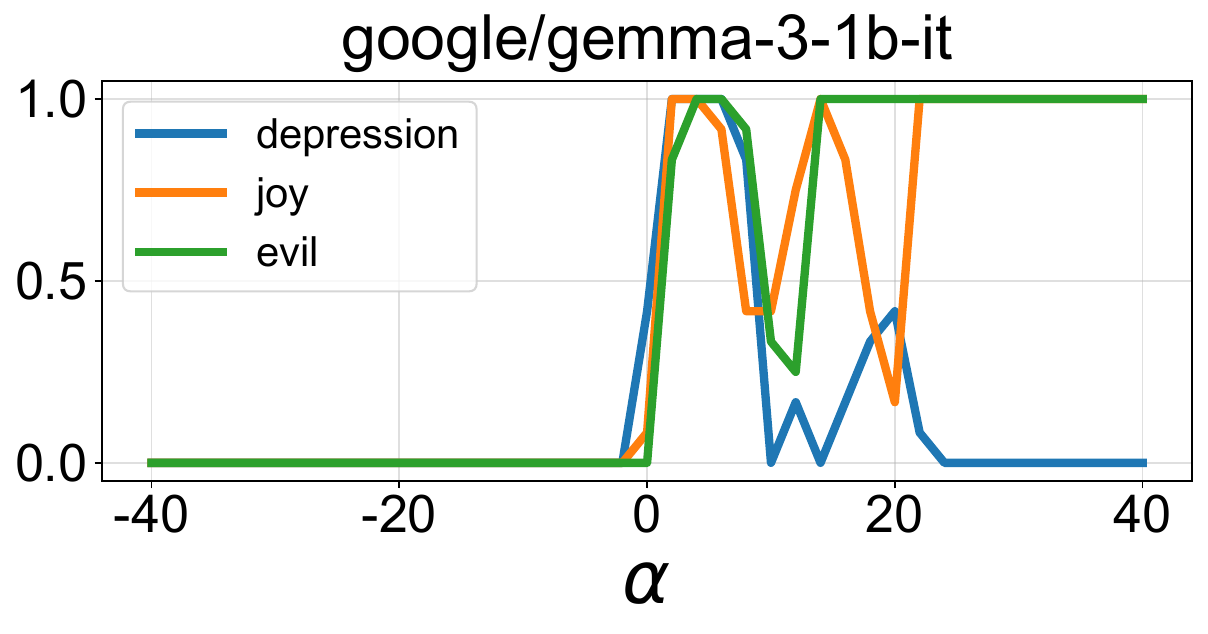}
  \label{fig:conceptprob-gemma}
\end{subfigure}
&
\begin{subfigure}[t]{0.30\textwidth}
  \centering
  \includegraphics[width=\linewidth]{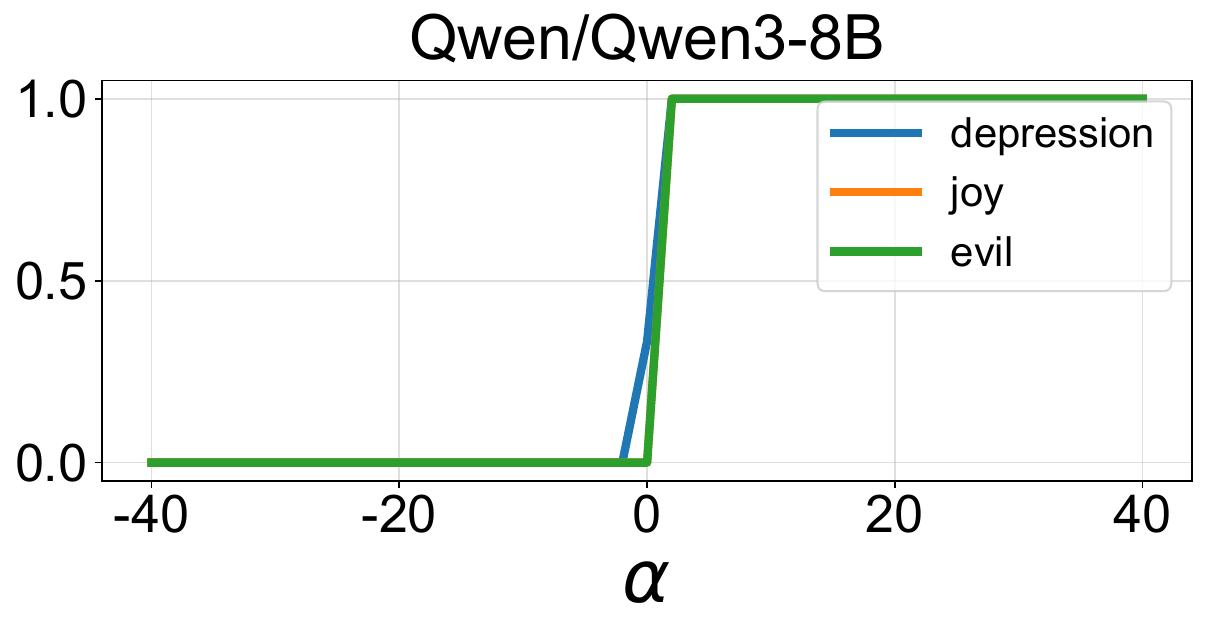}
  \label{fig:conceptprob-qwen}
\end{subfigure}
\\[-1.0em]

\raisebox{1.0cm}{\rotatebox{90}{$\Delta \mathrm{CE}(\alpha)$}}
&
\begin{subfigure}[t]{0.30\textwidth}
  \centering
  \includegraphics[width=\linewidth]{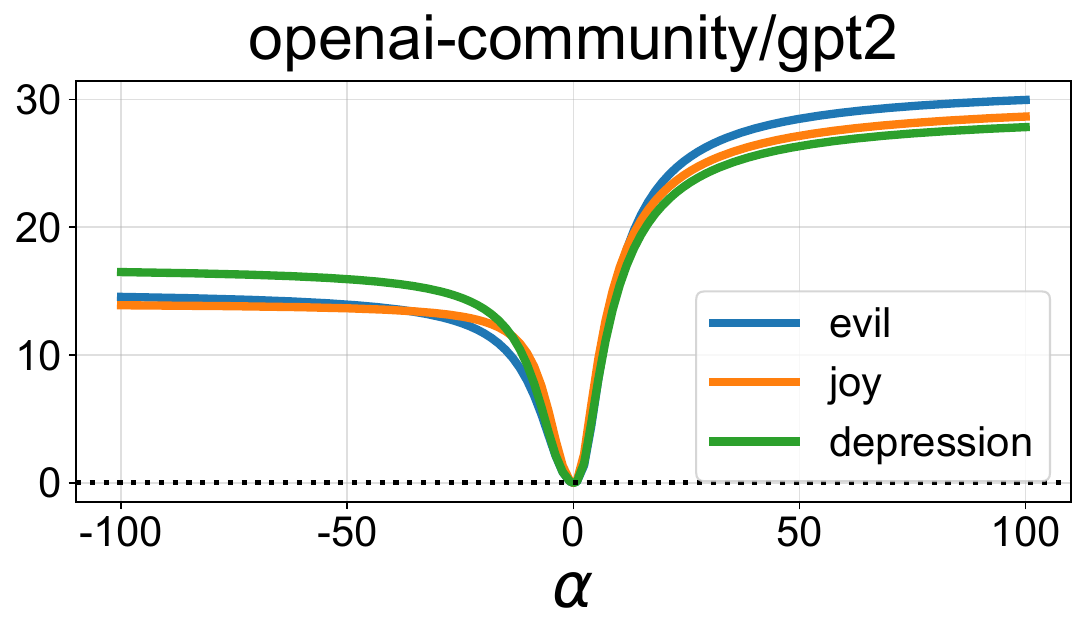}
  \label{fig:ce-gpt2}
\end{subfigure}
&
\begin{subfigure}[t]{0.30\textwidth}
  \centering
  \includegraphics[width=\linewidth]{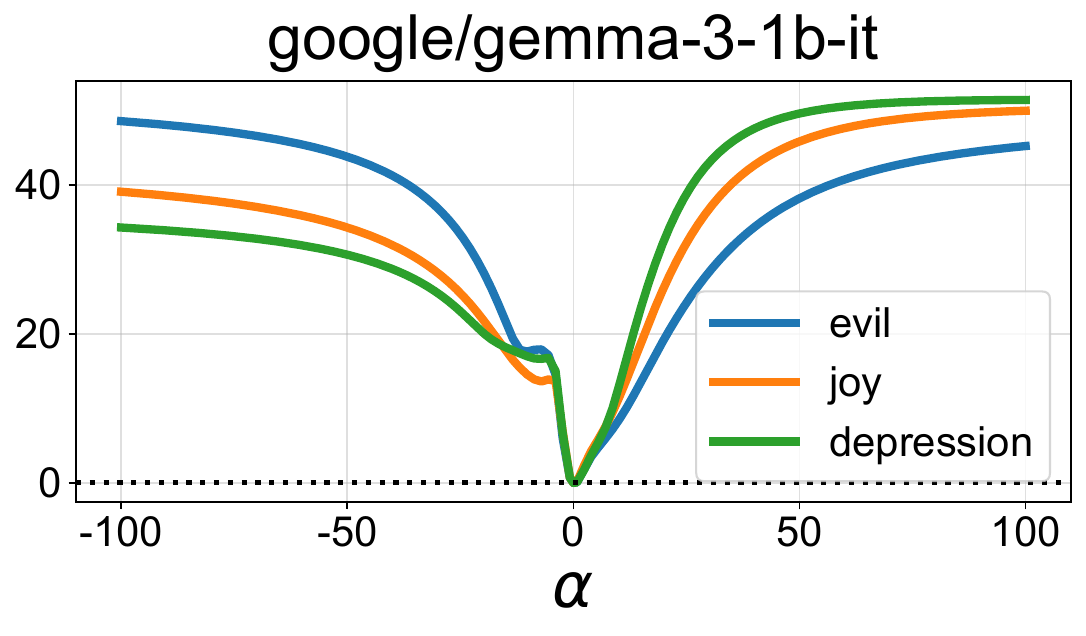}
  \label{fig:ce-gemma}
\end{subfigure}
&
\begin{subfigure}[t]{0.30\textwidth}
  \centering
  \includegraphics[width=\linewidth]{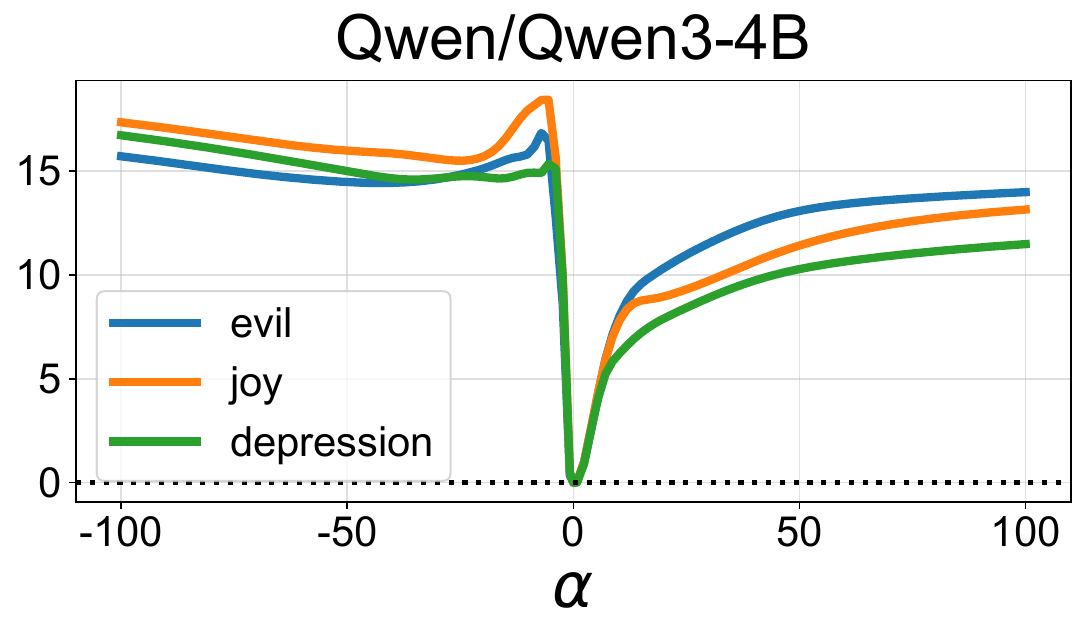}
  \label{fig:ce-qwen}
\end{subfigure}

\end{tabular}
\vspace{-16pt}
\caption{\label{fig:steering-alpha-ce-and-conceptprob}
Influence of steering strength $\alpha$ across models on:
\textbf{Top row:} concept probability $p(\Ccal \mid \alpha)$ for the three concepts (depression, joy, evil), estimated using a judge LLM (Gemma $3$ 12B), showing the sigmoidal trend predicted by Theorem~\ref{th:increasing-concept}.
\textbf{Bottom row:} cross-entropy $\Delta \mathrm{CE}(\alpha)$ for the same concepts, locally $U$-shaped around $\alpha=0$ and plateauing for large $|\alpha|$ (Theorem~\ref{th:sweet-spot}, Proposition~\ref{prop:steering-limit-practice}).}
\end{figure*}

The previous sections analyze steering in a theoretical setting, where the model is an idealized one. 
Modern LLMs, however, involve additional components, most notably the repeated application of attention and fully connected blocks together with normalization, which complicate the analysis. 
In this section, we move closer to practice by specifying a real-life activation steering setup broad enough to cover our experimental setting (Section~\ref{sec:exp}). 
We then proceed to describe the effect of large-$\alpha$ on the steered LLM output.

\textbf{Decoder-only transformers.}
The typical decoder-only transformers~\citep{vaswani_et_al_2017, radford_et_al_2018} share the same structure:
we define the residual stream $\hbf^{(\ell)} \in \Reals^{T \times d}$ inductively, with $\hbf^{(0)}$ given by the input embeddings. 
A transformer block updates $\hbf^{(\ell)}$ to $\hbf^{(\ell+1)}$ as
\begin{equation}
\label{eq:transformer-block}
\left\{
\begin{aligned}
\hbf_{\mathrm{attn}}^{(\ell)}  &\defeq \mathrm{ATTN}\!\left(\mathrm{LN}\!\left(\hbf^{(\ell)}\right)\right) \, , \\
\hbf_{\mathrm{res}}^{(\ell)}   &\defeq \hbf^{(\ell)} + \hbf_{\mathrm{attn}}^{(\ell)} \, , \\
\hbf_{\mathrm{ffn}}^{(\ell)}   &\defeq \mathrm{FFN}\!\left(\mathrm{LN}\!\left(\hbf_{\mathrm{res}}^{(\ell)}\right)\right) \, , \\
\hbf^{(\ell+1)}                &\defeq \hbf_{\mathrm{res}}^{(\ell)} + \hbf_{\mathrm{ffn}}^{(\ell)} \, ,
\end{aligned}
\right.
\end{equation}
where $\mathrm{ATTN}$ denotes the attention module, $\mathrm{FFN}$ the feed-forward module (\emph{e.g.}, fully-connected or mixture-of-experts~\citep{shazeer_et_al_2017}), and $\mathrm{LN}$ a normalization module (\emph{e.g.}, LayerNorm~\citep{ba_et_al_2016} or RMSNorm~\citep{zhang_et_al_2019}).
After $L$ layers, the \emph{output logits} $\ybf \in \mathbb{R}^{T \times V}$ are $\ybf \defeq \mathrm{LN}\big(\hbf^{(L)}\big) \Wbf^\top$, where $\Wbf \in \mathbb{R}^{V \times d}$ is the unembedding matrix.

\textbf{Steering vector.}
As in our theoretical setting (Section~\ref{subsec:model-steering-theory}), we build the steering vector $\vbf$ from two prompt sets: a positive set $P$ and a negative set $N$. Following~\citet{chen_et_al_2025a}, both sets are generated using a fixed LLM (Gemma~$3$~$12$B).
To form $P$, we use a system prompt that instructs the model to generate text exhibiting the target concept (Appendix~\ref{app:more-results}) and sample $500$ responses using nucleus sampling, generating $300$ new tokens per output. For negatives, we consider two constructions. In the \emph{contrastive} setting, $N$ consists of $500$ generations obtained with the same system prompt but using the opposite or negated concept. In the \emph{random} setting, $N$ is formed by sampling $500$ generations from an empty prompt (\emph{e.g.}, a begin-of-sequence token) using the model to be steered. Each experiment uses one of these constructions for $N$. 
While prior work~\citep{von-rutte_et_al_2024} relies on hand-crafted negatives, empty-prompt sampling provides a simple alternative that appears unexplored.
For a fixed layer~$\ell$, we record the residual stream $\hbf^{(\ell)}_j \in \Reals^{T \times d}$ for every generation $j \in P \cup N$, and define $\hbf_j \defeq \bar \hbf^{(\ell)}_j \in \Reals^{d}$ as the token-wise average of $\hbf^{(\ell)}_j$~\citep{chen_et_al_2025a}. Using $\hbf_j$, we compute the steering vector $\vbf$ as in Equation~\eqref{eq:steer-vector}.

\textbf{Steering.}
A transformer block offers several natural steering locations. 
In this work, we steer the residual stream $\hbf^{(\ell)} \in \Reals^{T \times d}$, which is also the most common choice in prior work~\cite{turner_et_al_2023,marks_et_al_2024, rimsky_et_al_2024, burns_et_al_2022, zou_et_al_2023_re, gurnee_et_al_2024}.
The next design choice is \emph{which token positions} to steer: 
we follow~\citep{chen_et_al_2025a, von-rutte_et_al_2024} and steer all positions of the input prompt, \emph{i.e.}, we \textbf{copy} a single steering vector $\vbf \in \Reals^{d}$ across the sequence length to obtain a matrix $\vbf \in \Reals^{T \times d}$.
Thus, steering at layer~$\ell$ with strength $\alpha$ follows Equation~\eqref{eq:steering-main}.
Another option is to steer only the last-token representation $\hbf^{(\ell)}_{-1, :}$~\citep{rimsky_et_al_2024}.

\textbf{Steered logits.}
Steering the residual stream $\hbf^{(\ell)}$ yields the \emph{steered logits} 
$$\ybf(\alpha) \defeq  \mathrm{LN}\!\bigl(\hbf^{(\ell)} + \alpha \vbf + R(\alpha)\bigr)\,\Wbf^\top \, , $$
where $+\alpha\vbf$ persists to the output via residual (skip) connections, and $R(\alpha)$ collects the effect of steering on the logits not captured by $+\alpha\vbf$.
The expression of $\ybf(\alpha)$ is proven in Appendix~\ref{proof:steering-limit-practice}.
Crucially, the theoretical model of Definition~\ref{def:model} omits the normalization $\mathrm{LN}$ and the term $R(\alpha)$, and treats $\hbf^{(\ell)}$ simply as an embedding, akin to an embedding-layer in an LLM.
We now prove the large-$\alpha$ behavior of the steered logits for the transformer of Equation~\eqref{eq:transformer-block}:

\begin{proposition}[Limiting behavior of steering a transformer]
\label{prop:steering-limit-practice}
Consider steering the residual stream $\hbf^{(\ell)}$ of a transformer in the direction $\vbf \in \Reals^{T \times d}$, where $\vbf_{i, :} \neq \mathbf{0}$ for all $i \in [T]$. 
As $\alpha \to \pm\infty$, the steered logits satisfy
$$\ybf(\alpha) \to \mathrm{LN}(\pm \vbf)\Wbf^\top \, .$$
\end{proposition}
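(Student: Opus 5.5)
We propose to first establish the representation $\ybf(\alpha) = \mathrm{LN}(\hbf^{(\ell)} + \alpha\vbf + R(\alpha))\Wbf^\top$ by unrolling Equation~\eqref{eq:transformer-block} from layer $\ell$ to $L$. Writing $\hbf^{(\ell)}(\alpha) \defeq \hbf^{(\ell)} + \alpha\vbf$, each block adds $\hbf^{(k)}_{\mathrm{attn}}$ and $\hbf^{(k)}_{\mathrm{ffn}}$ to the stream. Hence
\[
\hbf^{(L)}(\alpha) = \hbf^{(\ell)} + \alpha\vbf + R(\alpha), \qquad R(\alpha) \defeq \sum_{k=\ell}^{L-1}\bigl(\hbf^{(k)}_{\mathrm{attn}}(\alpha) + \hbf^{(k)}_{\mathrm{ffn}}(\alpha)\bigr),
\]
and the logits are $\mathrm{LN}(\hbf^{(L)}(\alpha))\Wbf^\top$.

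The key structural fact is that every sublayer reads its input only through a normalization $\mathrm{LN}$. The output of $\mathrm{LN}$ is bounded: for RMSNorm each row has norm $\sqrt d$ times the gain, and LayerNorm behaves the same way after centering. We then proceed by induction on $k$ to show that $R(\alpha)$ stays bounded as $\alpha \to \pm\infty$. Each of $\mathrm{ATTN}$ and $\mathrm{FFN}$ is continuous, and its input lies in a fixed compact set: the image of $\mathrm{LN}$, with the gain and bias fixed. So each of its outputs is bounded uniformly in $\alpha$. For attention this uses that softmax weights are convex combinations, with continuous projections applied to bounded rows. Summing finitely many such terms gives $\sup_\alpha \norm{R(\alpha)} < \infty$.

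Next, we use that $\mathrm{LN}$ acts row-wise and is invariant to positive rescaling: $\mathrm{LN}(c\,\mathbf{x}) = \mathrm{LN}(\mathbf{x})$ for $c > 0$. Taking the epsilon in the denominator as negligible or zero, we write for $\alpha > 0$
\[
\mathrm{LN}\bigl(\hbf^{(\ell)} + \alpha\vbf + R(\alpha)\bigr) = \mathrm{LN}\Bigl(\vbf + \tfrac{1}{\alpha}\bigl(\hbf^{(\ell)} + R(\alpha)\bigr)\Bigr).
\]
The argument converges to $\vbf$ because the perturbation is bounded. Continuity of $\mathrm{LN}$ at each row $\vbf_{i,:}$ then gives the limit $\mathrm{LN}(\vbf)$, and linearity of $\cdot\,\Wbf^\top$ gives $\ybf(\alpha) \to \mathrm{LN}(\vbf)\Wbf^\top$. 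For $\alpha \to -\infty$ we factor out $\abs{\alpha}$ instead and obtain $\mathrm{LN}(-\vbf)\Wbf^\top$.

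The main obstacle is this continuity step. LayerNorm is continuous only where the normalized quantity is nonzero: for RMSNorm this means $\vbf_{i,:} \neq \mathbf{0}$, which is exactly the hypothesis. For LayerNorm it means the centered row $\vbf_{i,:} - \bar v_i\mathbf{1}$ is nonzero, which we would add as a caveat or handle via the epsilon regularizer. With $\epsilon > 0$ the exact scale invariance fails, but $\mathrm{LN}_\epsilon(\alpha \mathbf{x}) = \mathrm{LN}_{\epsilon/\alpha^2}(\mathbf{x}) \to \mathrm{LN}_0(\mathbf{x})$, so the same limit holds with $\mathrm{LN}$ interpreted as its $\epsilon \to 0$ form. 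We would state this explicitly.
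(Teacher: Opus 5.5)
Your proposal is correct and follows the same route as the paper. Both arguments unroll the blocks to get $\hbf^{(\ell)}+\alpha\vbf+R(\alpha)$, use the normalization at the input of each sublayer to bound $R(\alpha)$, and then let the $\alpha\vbf$ term dominate inside the final $\mathrm{LN}$, with the same nonzero-row (centered-row for LayerNorm) caveat.

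Your version is more careful than the paper's in three places. The paper only asserts that $R(\alpha)$ is bounded, whereas your compactness argument proves it. The paper computes $\lim \mathrm{LN}(\hbf+\alpha\vbf)$ for a fixed $\hbf$, whereas your rewriting $\mathrm{LN}\bigl(\vbf+\tfrac{1}{\alpha}(\hbf^{(\ell)}+R(\alpha))\bigr)$ together with continuity at $\vbf_{i,:}$ correctly handles the $\alpha$-dependent remainder. Finally, you address the $\epsilon$ regularizer explicitly, which the paper leaves out.
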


Because of the normalization $\mathrm{LN}$, the term $R(\alpha)$ remains bounded in $\alpha$. 
Consequently, for large $|\alpha|$ the steered logits no longer depend on the input prompt and instead converge to the unembedding of the normalized steering direction, $\mathrm{LN}(\pm\vbf)\Wbf^\top$. 
The corresponding softmax therefore converges to $\sigma(\mathrm{LN}(\pm\vbf)\Wbf^\top)$, implying that the cross-entropy plateaus for large $|\alpha|$ since the output distribution becomes input-independent.
See Figure~\ref{fig:steering-alpha-ce-and-conceptprob} for an illustration.

\begin{table*}[t]
\centering
\caption{\label{tab:steered-tokens}
Non-exhaustive list of tokens observed when steering the models in Table~\ref{tab:models}, recorded among the $10$ highest-probability next tokens at $\alpha=200$ in the setting of Section~\ref{sec:steering-practice}. 
For readability, we report only full-word tokens; for tokenizers that produce smaller subwords, we observe the same phenomenon, with fragments such as `happ' instead of `happiness'.}

\begin{tabular}{lp{0.72\linewidth}}
\toprule
\textbf{Steered concept} & \textbf{High-probability next tokens} \\
\midrule
apathetic  & okay,\; yeah,\; bullshit,\; whatnot,\; probably,\; ... \\
depression & discomfort,\; sadness,\; emotional,\; despair,\; uncomfortable,\; ... \\
evil       & horrifying,\; destruction,\; terror,\; murderous,\; deadly,\; ... \\
humorous   & comedic,\; ridiculous,\; silly,\; fidget,\; hilarious,\; ... \\
impolite   & verbal,\; disrespectful,\; stupid,\; rude,\; insulting,\; ... \\
joy        & feeling,\; happiness,\; ecstatic,\; joyful,\; laughter,\; ... \\
lying      & absurd,\; ridiculous,\; thieves,\; truth,\; spiritual,\; ... \\
optimistic & cheerful,\; cherish,\; religious,\; optimism,\; someday,\; ... \\
math & algebra,\; infinite,\; geometric,\; irrational,\; Pythagorean,\; ... \\
coding & robot,\; automated,\; widget,\; digital,\; software,\; automatically,\; ... \\
medical & cardiovascular,\; neurological,\; clinical,\; patients,\; diagnostic,\; ... \\
sycophancy & you,\; superior,\; brilliant,\; admiring,\; absolutely,\; impeccable,\; ... \\
\bottomrule
\end{tabular}
\end{table*}

\section{Experiments}
\label{sec:exp}

In this section, we empirically validate on transformers spanning a wide range of sizes (Table~\ref{tab:models}) the main results of Section~\ref{sec:main}:
the ``bump'' pattern in next-token probabilities, the $U$-shaped behavior of cross-entropy around $\alpha=0$, and the sigmoidal evolution of concept probability. 
We observe these behaviors consistently across \textbf{model types} (base, instruction-tuned, multimodal), \textbf{scales} (few million to several billion parameters), and \textbf{concepts}.
Steering is implemented as described in Section~\ref{sec:steering-practice}. 
We consider $12$ concepts spanning a range of safety-related behaviors (listed in Table~\ref{tab:steered-tokens}).
These concepts are inspired by the \emph{personas} introduced in~\citet{chen_et_al_2025a} and are selected to encourage the LLM to exhibit the corresponding character trait (\emph{e.g.,} joy) or topic (\emph{e.g.,} math).
Each experiment corresponds to a choice of \emph{\{steering vector, model, steered layer, input prompt\}}; the figures in this section illustrate \textbf{typical steering behavior} by fixing the concept (here, ``evil'', ``depression'' and ``joy''), 
steering 
a middle layer~\citep{chen_et_al_2025a}, and using the \emph{random} construction of $N$.
Table~\ref{tab:appendix-figure-guide} reports additional configurations and results, including other layers, concepts, and models with \textbf{contrastive} $N$, error bars under resampling of $P$ and $N$, runs with normalized $\vbf$, steering only the last token $\hbf^{(\ell)}_{-1, :}$, the impact of steering on MMLU, low-data setting with $|P| = |N| = 10$, and the robustness of the concept probability curves with respect to the chosen LLM judge.
Finally, Appendix~\ref{app:more-results} also reports results for additional input prompts, since each next-token probability plot is computed for a fixed context.
Notably, our code is modular, enabling extensions to unseen configurations. 

\textbf{Results for next-token probabilities.}
We measure the influence of $\alpha$ on the increase of next-token probabilities $\Delta p(z, \alpha)$. 
In Figure~\ref{fig:exp-next-token}, we plot $\Delta p(z, \alpha)$ for a small set of tokens $z$ that become most likely at large $\alpha$, motivated by Proposition~\ref{prop:steering-limit-practice} which shows that in the large-$|\alpha|$ regime the logits are determined by the unembedding of (normalized) $\vbf$.
Across models and concepts, the \textbf{evidence is unequivocal}: we observe the ``bump'' pattern in $\Delta p$ for concept tokens and the large-$\alpha$ regime where a few tokens dominate predicted by Theorem~\ref{th:non-monotonicity}. 
The same behavior is shown for off-target tokens at negative $\alpha$ in Figure~\ref{fig:exp-next-token-appendix-neg-layer-middle}. 
Dominating tokens do not generally associate to extremal log-odds at intermediate layers, but they do at the final layer. 
Finally, although the bump behavior appears already in early layers, steering at mid to late layers leads to highest-probability tokens that are more semantically tied with the target concept (Table~\ref{tab:steered-tokens}).

\textbf{Results for concept probability.}
We estimate the concept probability in steered LLM responses using a sampling-based metric (different from the theoretical $\Delta p(\Ccal \mid \alpha)$; see the discussion below Theorem~\ref{th:increasing-concept}). 
Concretely, for $12$ prompts and each steering strength $\alpha$, we sample $32$ completions and prompt a judge LLM (Gemma $3$~$12B$) to assign a binary label indicating whether the target concept is present, following~\citet{chen_et_al_2025a} (details in Appendix~\ref{app:more-results}). 
Averaging these labels yields the concept probability. 
Figure~\ref{fig:steering-alpha-ce-and-conceptprob} shows a \textbf{mostly sigmoidal} trend, with occasional mismatches (\emph{e.g.}, the middle panel). 
In such cases, for some layers/concepts and for a range of $\alpha$ values, next-token sampling can drift away from concept-related tokens because the highest-probability token may instead be punctuation (\emph{e.g.}, `-' or `.'), leading to degenerate outputs.
We provide additional discussion of these deviations from the sigmoidal trend in Appendix~\ref{app:add-setting}.

\textbf{Results for cross-entropy.}
We estimate the steered cross-entropy change $\Delta \mathrm{CE}(\alpha)$ on $10^6$ tokens sampled from the processed \texttt{fineweb} dataset~\citep{penedo_et_al_2024}, which provides a sufficiently large and diverse sample for a reliable estimate. 
Across all models, we consistently observe the local $U$-shape around $\alpha=0$, confirming that steering always hurts global performance as predicted by Theorem~\ref{th:sweet-spot}; see Figure~\ref{fig:steering-alpha-ce-and-conceptprob}. 
Moreover, while Figure~13 in~\citet{von-rutte_et_al_2024} reports an empirical $\alpha^2$ trend, it is unclear whether this behavior is meant to be local; Theorem~\ref{th:sweet-spot} clarifies that the quadratic scaling holds \textbf{only locally} around $\alpha=0$. 
For large $\alpha$, $\Delta \mathrm{CE}(\alpha)$ instead plateaus, as implied by Proposition~\ref{prop:steering-limit-practice} and confirmed in Figure~\ref{fig:steering-alpha-ce-and-conceptprob}.

\section{Conclusion}
Activation steering is a simple and widely used method to control LLM behavior at inference time, yet the choice of steering strength $\alpha$ remains largely heuristic. In this paper, we provide a theoretical analysis of \emph{steering strength} for activation steering with a \emph{difference-of-means} steering vector. In a tractable next-token prediction model, we characterize how $\alpha$ impacts next-token probabilities, concept probability in the output, and cross-entropy, and we validate these predictions empirically across a range of modern LLMs.

The current theory already suggests a \textbf{two-step selection strategy}. First, it helps restrict the search space to $[0,\alpha_\star]$, where $\alpha_\star$ is the largest steering strength before the model exhibits clear degeneration, such as probability mass concentrating on only a few tokens or the output becoming effectively prompt-independent, as in Proposition~\ref{prop:steering-limit-practice}. Second, within $[0,\alpha_\star]$, one can calibrate $\alpha$ by maximizing concept presence subject to a quality budget, for example requiring cross-entropy degradation to remain below a threshold. Thus, while the theory does not eliminate search entirely, it already \textbf{narrows and structures it beyond blind grid search}.

Future work includes narrowing the theory/practice gap (\emph{e.g.}, mixed-concept contexts), extending the analysis to other steering methods (\emph{e.g.}, SAE), and developing \emph{principled prompt-adaptive} rules for choosing $\alpha$ by characterizing the steering ``sweet spot'' suggested by our results.

\section*{Acknowledgements}

We thank Alberto Bietti and Salim I. Amoukou for their valuable insights. 
This work has been supported by the French government, through the 3IA Cote d’Azur Investments in the project managed by the National Research Agency (ANR) with the reference number ANR-23-IACL-0001, the ANR project PRC MAD ANR-24-CE23-1529 and the support of the ``France 2030'' funding ANR23-PEIA-0004 (PDE-AI) and ANR-15-IDEX-01.
All experiments were performed using the Julia 2 cluster. Julia 2 was funded as DFG project as
``Forschungsgroßgerät nach Art 91b GG'' under INST 93/1145-1 FUGG.

\section*{Impact Statement}
Large language models are increasingly deployed in user-facing settings where controllability, reliability, and safety matter. Activation steering is an attractive post-training control mechanism because it is lightweight (no retraining) and can target internal representations associated with high-level behaviors. However, practical usage currently relies on ad hoc tuning of the steering strength $\alpha$, which can lead to brittle outcomes: insufficient steering fails to meaningfully change behavior, while overly strong steering can degrade performance. By providing a theoretical characterization of how steering strength reshapes next-token distributions, concept probability, and cross-entropy, our work is a step toward principled guidelines for choosing $\alpha$ in practice; if successful, such guidelines could have substantial impact by making inference-time control more reliable, more efficient to deploy, and easier to audit.

\bibliography{biblio}
\bibliographystyle{icml2026}

\newpage
\appendix
\onecolumn
\phantomsection
\addcontentsline{toc}{section}{Appendix}

\startcontents[appendix]
\begin{center}
\begingroup
\setlength{\fboxsep}{8pt}
\setlength{\fboxrule}{1pt}
\fcolorbox{red!70!black}{red!5}{
\begin{minipage}{0.92\linewidth}
\textbf{\textcolor{red!70!black}{Content warning.}}
Some appendix material discusses or quotes concepts, or model outputs that may be offensive, harmful, or emotionally distressing.
These examples are included only to document the experimental setup and results.
\end{minipage}
}
\endgroup
\end{center}
\vspace{0.5em}

\section*{Contents of the Appendix}
\printcontents[appendix]{}{1}[3]{}
\counterwithin{figure}{section}
\renewcommand{\thefigure}{\thesection.\arabic{figure}}
\counterwithin{table}{section}
\renewcommand{\thetable}{\thesection.\arabic{table}}

\newcommand{\rowtitle}[1]{\par\medskip\noindent{\small\textbf{#1}}\par\smallskip}
\newcommand{\panelmodel}[1]{\parbox[t][3.2\baselineskip][t]{\linewidth}{\centering\scriptsize\textbf{Steered model: #1}}\par}
\newcommand{\panellayer}[1]{\scriptsize\textbf{Steered layer: #1}\par\smallskip}

\section{Additional Real-Scale Experiments and Setting}
\label{app:more-results}
In this section, we describe the experimental setup of Section~\ref{sec:exp} in greater detail, including the steered concepts and models, as well as the use of LLMs for data labeling and generation. We also present additional experimental results, with Table~\ref{tab:appendix-figure-guide} providing an overview of the corresponding supplementary figures.
\begin{table}[H]
\centering
\small
\caption{\label{tab:appendix-figure-guide}Overview of figures corresponding to the additional experiments.}
\begin{tabular}{p{0.28\linewidth}p{0.62\linewidth}}
\toprule
\textbf{Figures} & \textbf{Figure contents (in the listed order)} \\
\midrule
Figs.~\ref{fig:exp-next-token-appendix-normalized}, \ref{fig:exp-next-token-appendix-last-token}, \ref{fig:exp-next-token-appendix-n_runs}, \ref{fig:one-layer-gpt}, \ref{fig:exp-next-token-appendix-constrative},~\ref{fig:mmlu},~\ref{fig:next-token-low-data-qwen}--\ref{fig:concept-prob-low-data},~\ref{fig:concept-presence-qwen-different-judges}--\ref{fig:concept-presence-gemma-different-judges}, and~\ref{fig:better-prompt-judge}.
& \emph{Additional setting experiments}: normalized steering vectors, last-token only steering, resampling error bars, a one-layer GPT, contrastive $N$, MMLU, low-data setting with 10 prompt pairs, stability of concept presence curves w.r.t. different LLM judges, and better prompting of judge LLM for concept presence labeling.\\
Figs.~\ref{fig:exp-next-token-appendix-pos-layer-early}--\ref{fig:next-token-harder-concepts-gemma}
& Next-token probability increase $\Delta p$ for positive steering $\alpha$ at early, middle, and late layers across multiple models and concepts. \\
Figs.~\ref{fig:all-layers-next-token-gemma-joy}--\ref{fig:all-layers-next-token-qwen-joy}
& Next-token probability increase $\Delta p$ under positive steering $\alpha$ for selected model/concept pairs across \emph{all layers}. \\
Figs.~\ref{fig:exp-next-token-appendix-neg-layer-middle}--\ref{fig:next-token-with-token-names}
& Next-token probability increase $\Delta p$ for negative steering strengths $\alpha$, and next-token probability increase with \emph{token-name labeling}. \\
Figs.~\ref{fig:exp-concept-probs-2-appendix}--\ref{fig:concept-presence-hard-concept}
& Additional concept presence in generated texts as a function of steering strength plots across models, concepts, and layers. \\
Figs.~\ref{fig:exp-ce-appendix}--\ref{fig:cross-entropy-hard-concept}
& Additional cross-entropy as a function of steering strength plots. \\
\bottomrule
\end{tabular}
\end{table}
\subsection{Experimental Setup}
\paragraph{Setting to generate the positive ($P$) and negative ($N$) prompt sets.}
\begin{table}[H]
\centering
\caption{\label{tab:models}
Models used in the experiments of Section~\ref{sec:exp}.}
\begin{tabular}{lp{0.55\linewidth}}
\toprule
\textbf{Model family} & \textbf{Parameter counts / layers} \\
\midrule
1-layer GPT~\citep{radford_et_al_2018}
& 0.009B / 1 \\

GPT-2~\citep{radford_et_al_2019}
& 0.12B / 12,\; 0.77B / 36,\; 1.5B / 48 \\

Gemma~3~\citep{gemma_2025}
& 1B / 26,\; 4B / 34 \\

Qwen~3~\citep{yang_et_al_2025}
& 0.6B / 28,\; 4B / 36,\; 8B / 36 \\

Llama~2~\citep{touvron_et_al_2023}
& 7B / 32 \\

Mistral~\citep{jiang_et_al_2023}
& 7B / 32 \\
\bottomrule
\end{tabular}
\end{table}

As described in Section~\ref{sec:steering-practice}, $P$ is generated by a prompted instruct-LLM (Gemma $3$~$12$B) using a system prompt of the following form:
\begin{align*}
\begin{gathered}
\text{(1) behavior instruction} \rightarrow \text{(2) definition of the concept}
\rightarrow \text{(3) example sentences exhibiting the concept.}
\end{gathered}
\end{align*}
For $P$, we use the target concept in the following template. 
For $N$, in the contrastive setting, we reuse the same template but replace the concept with its negation (the negated concept is itself generated by the LLM). 
Otherwise, in the unrelated setting, $N$ consists of sampled generations from the \textbf{LLM to steer} starting from an empty prompt. Here is the exact template used in our experiments:
\begin{Verbatim}[fontsize=\scriptsize]
positive_chat_template = [
{"role": "system", "content": "You are an assistant."},
{
"role": "user",
"content": (
f"Describe the concept: {concept}. Begin with a one-sentence definition. "
"Then give several natural first-person examples that demonstrate the concept in an unmistakable way. "
"Write as if you are the concept itself. Your examples must make the concept obvious through what you say or do "
"— for example, if the concept is *lying*, give clearly false statements (e.g., claiming impossible facts or events).\n"
"Do not use third-person references. Do not explain anything. "
"After the definition, output only the examples as simple first-person sentences that fully embody the concept."
),
},
]
\end{Verbatim}
We steer $12$ concepts: apathetic, depression, evil, humorous, impolite, joy, lying, optimistic, math, coding, medical, and sycophancy (see Table~\ref{tab:steered-tokens} for examples of high-probability next-tokens). 
As said previously, these concepts are inspired by the \emph{personas} introduced in~\citet{chen_et_al_2025a} and are selected to encourage the LLM to exhibit the corresponding character trait (\emph{e.g.,} joy) or topic (\emph{e.g.,} math).
\paragraph{Setting to compute the concept probability in practice.}
As described in Section~\ref{sec:exp}, we use a judge LLM (Gemma $3$~$12$B) to decide whether a target concept is present in a steered LLM generation. The following prompt is used to obtain a binary label:
\begin{Verbatim}[fontsize=\scriptsize]
judge_question_template: str = (
"Text:\\n{completion}\\n\\n"
"Does the text above express the concept {concept} 
(any word linked to the concept in the whole sentence even if it does not make sense is sufficient for a positive reply)? 
Reply with 1 for yes and 0 for no.\\n"
"Answer:"
)
\end{Verbatim}
As this prompt makes clear, we do not attempt to assess output quality here; we only measure concept presence, using a permissive criterion (any concept-related word suffices).
\subsection{Additional Figures}
\subsubsection{Next-Token Probabilities under Steering. }
We complement Figure~\ref{fig:exp-next-token} from Section~\ref{sec:exp} with additional models, concepts, steering layers, and negative steering strengths; see Figures~\ref{fig:exp-next-token-appendix-pos-layer-early}--\ref{fig:next-token-with-token-names}. Overall, the qualitative predictions of Theorem~\ref{th:non-monotonicity} are observed. The main discrepancy occurs when steering early layers: tokens exhibiting bumps or dominating at large $\alpha$ are less often concept-related. This is expected, as steering early layers is known to yield weaker results~\citep{chen_et_al_2025a}.

\subsubsection{Concept Probability in Generated Outputs under Steering.}
We complement Figure~\ref{fig:steering-alpha-ce-and-conceptprob} from Section~\ref{sec:exp} with additional models, concepts, and steering layers; see Figures~\ref{fig:exp-concept-probs-2-appendix}--\ref{fig:concept-presence-hard-concept}. The qualitative prediction of Theorem~\ref{th:increasing-concept} is partially verified (more often true than false). Results are sensitive to the concept itself (intuitively harder concepts such as lying yield less clean curves than easier ones such as joy). The main discrepancy again arises when steering early layers, which is consistent with prior observations~\citep{chen_et_al_2025a}.

\subsubsection{Cross-Entropy under Steering.}
We complement Figure~\ref{fig:steering-alpha-ce-and-conceptprob} from Section~\ref{sec:exp} with additional models, concepts, and steering layers; see Figures~\ref{fig:exp-ce-appendix} and \ref{fig:cross-entropy-hard-concept}. Overall, the predictions of Theorem~\ref{th:sweet-spot} and Proposition~\ref{prop:steering-limit-practice} are observed.

\subsubsection{Additional Settings.}\label{app:add-setting}
We provide additional plots for experiments mentioned in Section~\ref{sec:exp}, including MMLU (Figure~\ref{fig:mmlu}), steering only the last-token representation $\hbf_{-1,:}^{(\ell)}$ (Figure~\ref{fig:exp-next-token-appendix-last-token}), normalization of the steering vector (Figure~\ref{fig:exp-next-token-appendix-normalized}), \textbf{contrastive} $N$ (Figure~\ref{fig:exp-next-token-appendix-constrative}), error bars under resampling of $P$ and $N$ (Figure~\ref{fig:exp-next-token-appendix-n_runs}) and steering, in the setting of Section~\ref{sec:steering-practice}, a 1-layer GPT-style transformer (Figure~\ref{fig:one-layer-gpt}) that we train on \texttt{fineweb}~\citep{penedo_et_al_2024}.
Experiments in the low-data regime, with $|P| = |N| = 10$, are reported in Figures~\ref{fig:next-token-low-data-qwen}--\ref{fig:concept-prob-low-data}. Consistent with prior observations in the literature~\citep{turner_et_al_2023}, steering remains effective even with very limited data. Moreover, these experiments continue to exhibit the phenomena predicted by our theory.
We further investigate the robustness of the LLM judge used to estimate concept presence in the outputs of steered models. In Figures~\ref{fig:concept-presence-qwen-different-judges} and~\ref{fig:concept-presence-gemma-different-judges}, we compare concept-presence curves obtained using different judge models: \texttt{Qwen3-0.6B-Base}, \texttt{Qwen2.5-7B-Instruct}, \texttt{Gemma-3-12B-IT}, and \texttt{Mistral-7B-Instruct}. We observe that instruction-tuned judges with reliable instruction-following ability consistently recover similar sigmoid-shaped concept-presence curves, whereas the base model \texttt{Qwen3-0.6B-Base} produces unreliable estimates.

Finally, in Figure~\ref{fig:better-prompt-judge}, we study a failure mode discussed in Section~\ref{sec:exp}, where concept-presence curves exhibit a ``dip'' before increasing again, rather than following a clean sigmoid shape. By analyzing the responses of a steered \texttt{Gemma 3 1B} towards the concept \texttt{evil}, we find that this behavior arises because the initial increase in concept presence is driven by outputs only weakly related to the target concept, \emph{e.g.,} tokens such as ``cold'' or ``sad.'' The judge model (\texttt{Gemma 3 12B it}) incorrectly labels these generations as exhibiting the concept. To mitigate this issue, we design a more precise judge prompt that includes examples of semantically related but distinct concepts that should not count as positive instances. Figure~\ref{fig:better-prompt-judge} shows that this modification substantially reduces the spurious early spike. Below we provide the revised judge prompt used for the concept \texttt{evil}.
\begin{Verbatim}[fontsize=\scriptsize]
You are judging whether text expresses evil. Answer strictly with 1 or 0.

Positive examples:
1 -> blood guilt cruel punish hurt abuse revenge
1 -> violent cruelty, vengeance, delight in suffering

Negative examples:
0 -> sorrow tears cold roses gold laughter child smile
0 -> guilt shame regret darkness winter grief

Text:
{completion}

Reply 1 only if malicious harm or cruelty is the dominant meaning, even if the wording is repetitive.
Reply 0 for sadness, decorative darkness, mixed poetic negativity, or gibberish.
Answer:
)
\end{Verbatim}
In future work, we plan to replace binary labeling by the judge LLM with continuous scores in $[0,1]$ to obtain finer-grained concept-presence curves. However, this alone is unlikely to fully resolve the previous issue, since weakly related tokens may still receive non-negligible concept scores from the same judge model. Our results therefore suggest that careful judge prompting remains essential for reliable concept evaluation.

\begin{figure}[b]
\centering

\begin{subfigure}[t]{0.49\textwidth}
\centering
\includegraphics[width=0.65\linewidth]{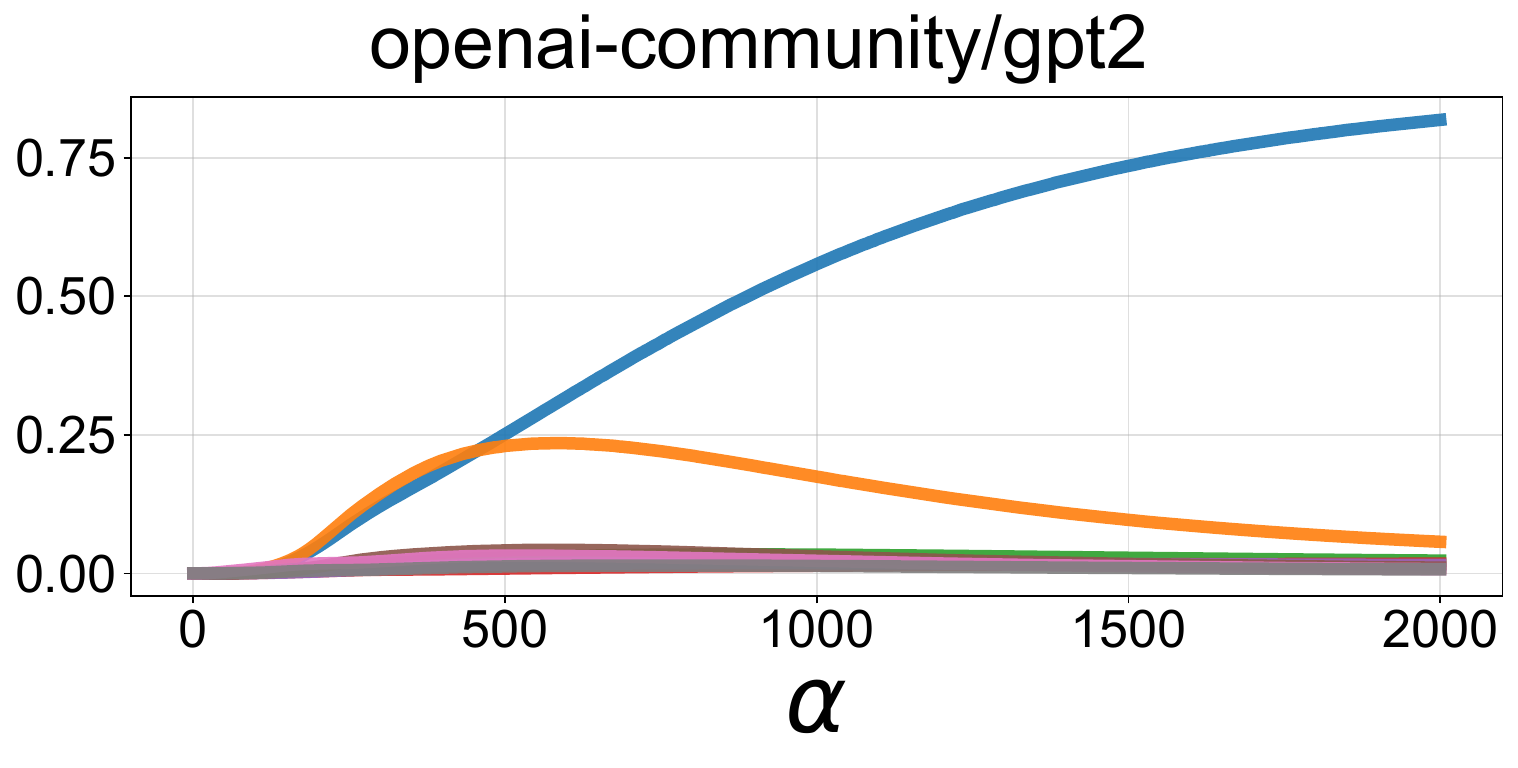}
\end{subfigure}\hfill
\begin{subfigure}[t]{0.49\textwidth}
\centering
\includegraphics[width=0.65\linewidth]{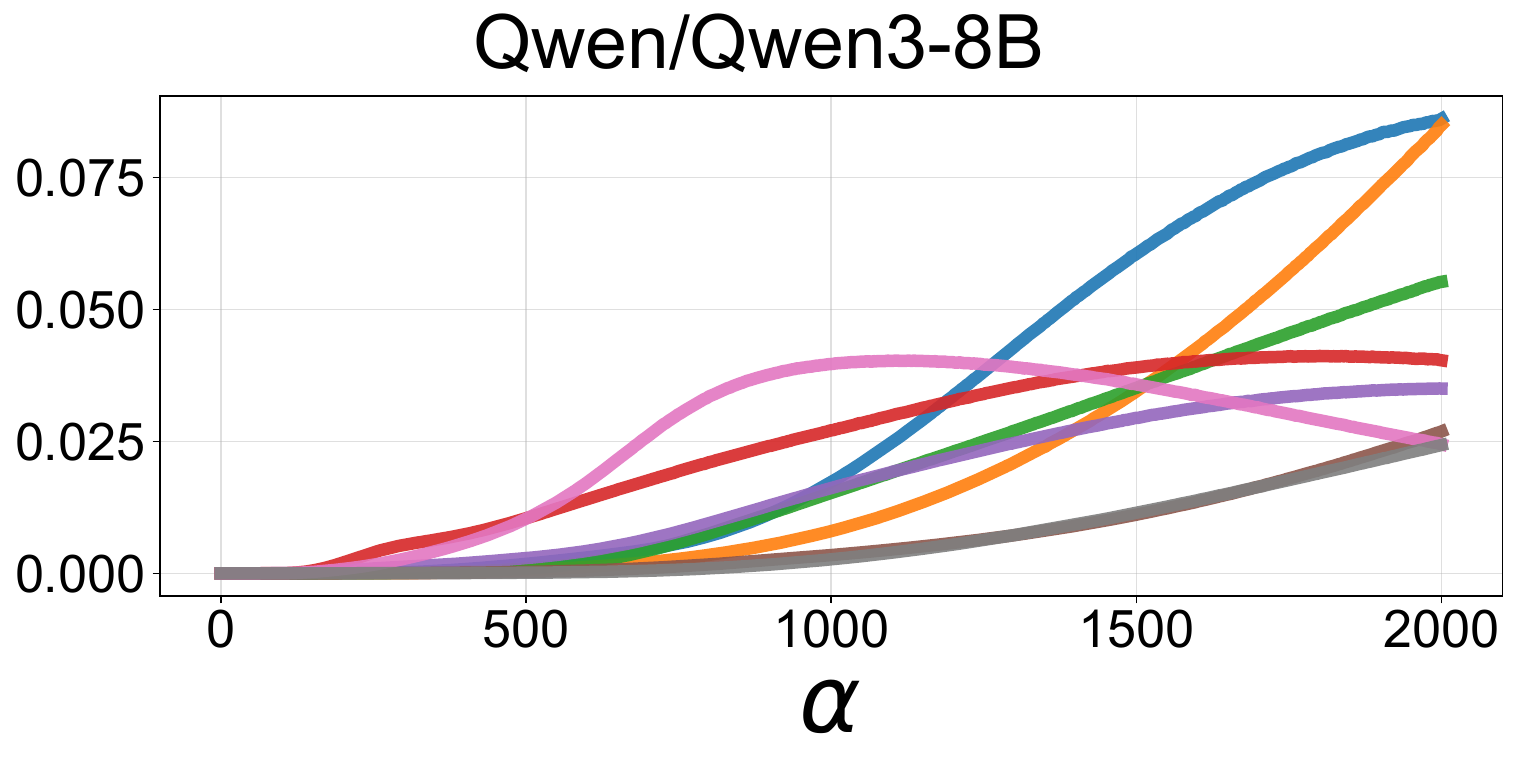}
\end{subfigure}

\vspace{4pt}

\begin{subfigure}[t]{0.49\textwidth}
\centering
\includegraphics[width=0.6\linewidth]{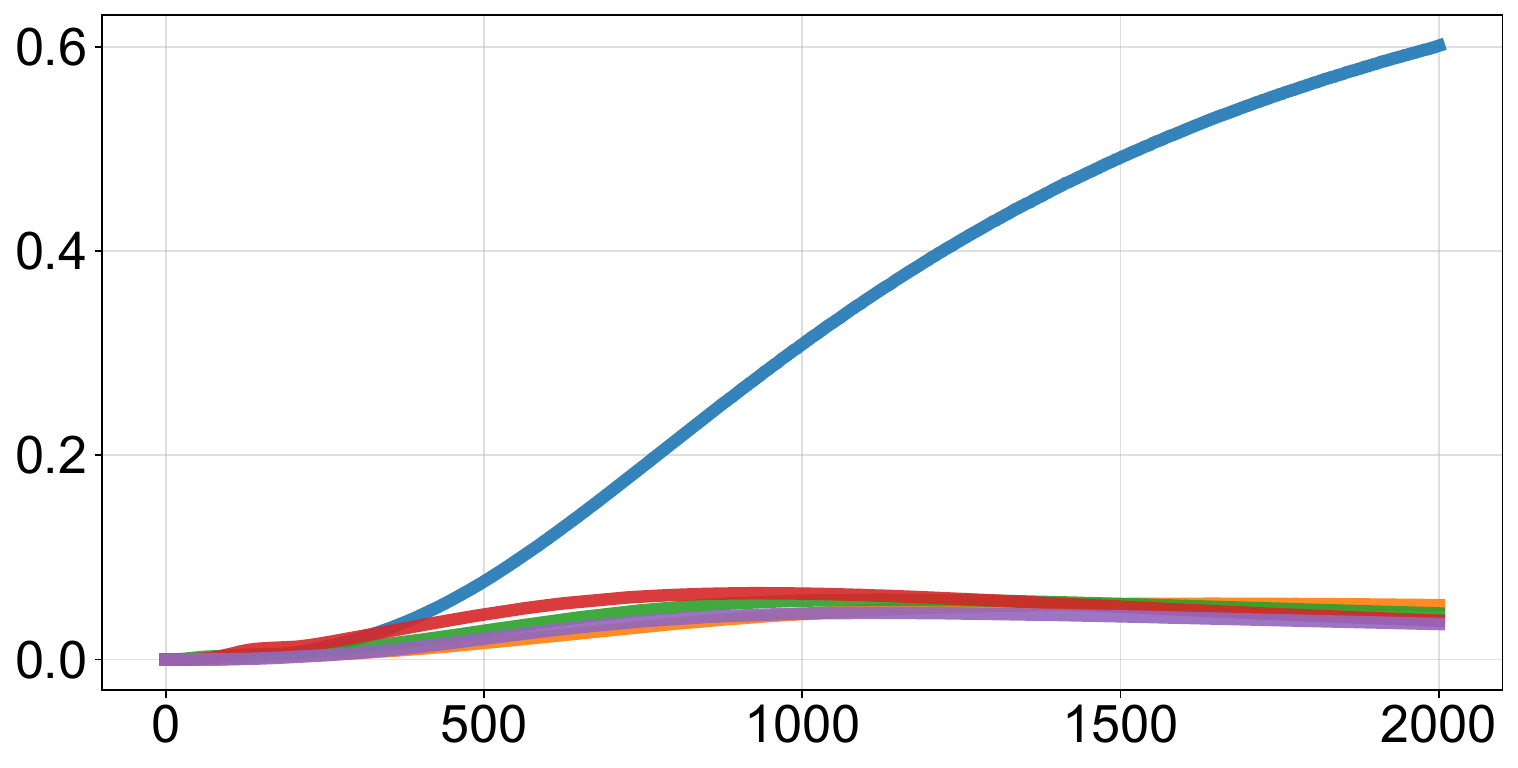}
\end{subfigure}\hfill
\begin{subfigure}[t]{0.49\textwidth}
\centering
\includegraphics[width=0.6\linewidth]{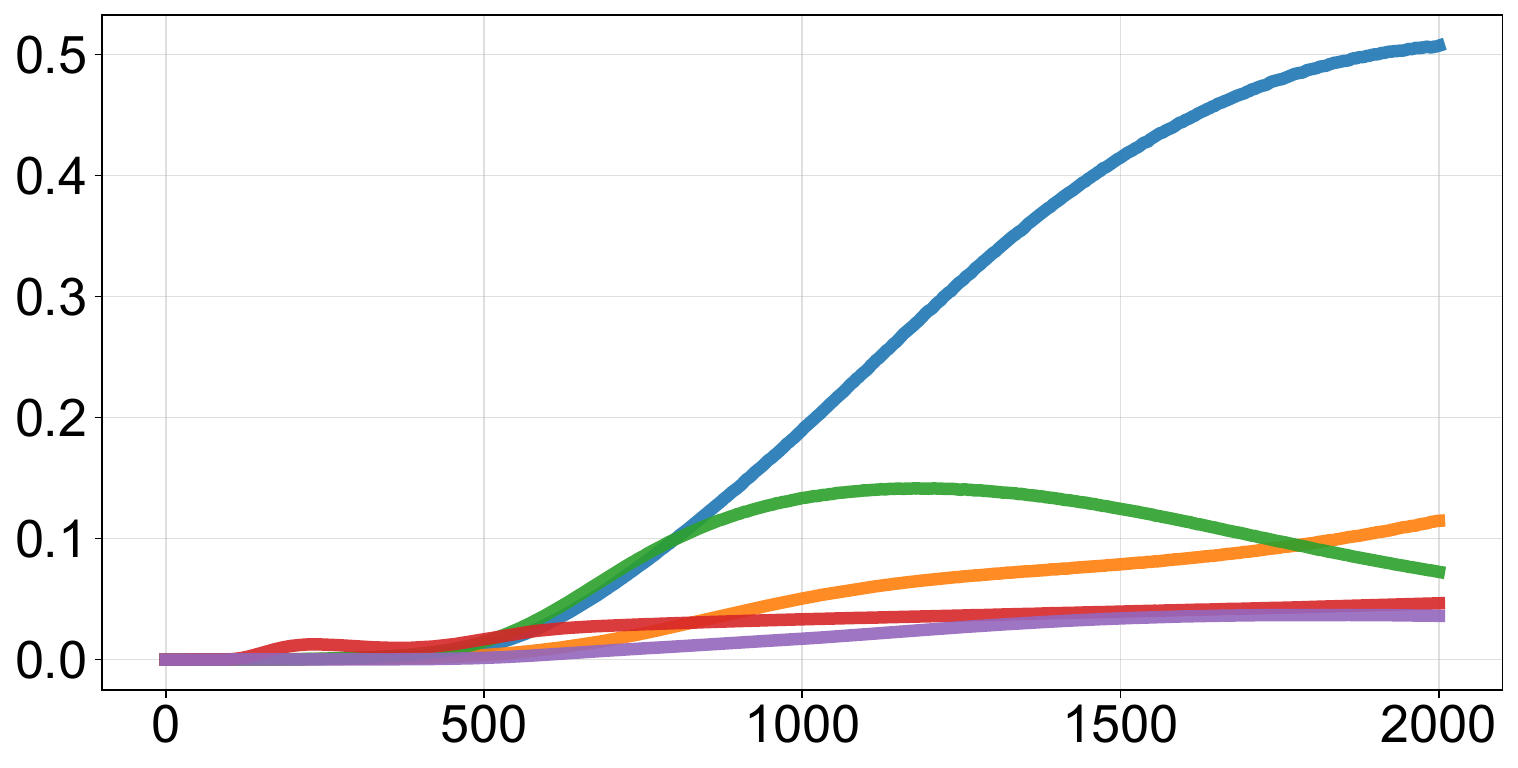}
\end{subfigure}

\vspace{4pt}

\begin{subfigure}[t]{0.49\textwidth}
\centering
\includegraphics[width=0.6\linewidth]{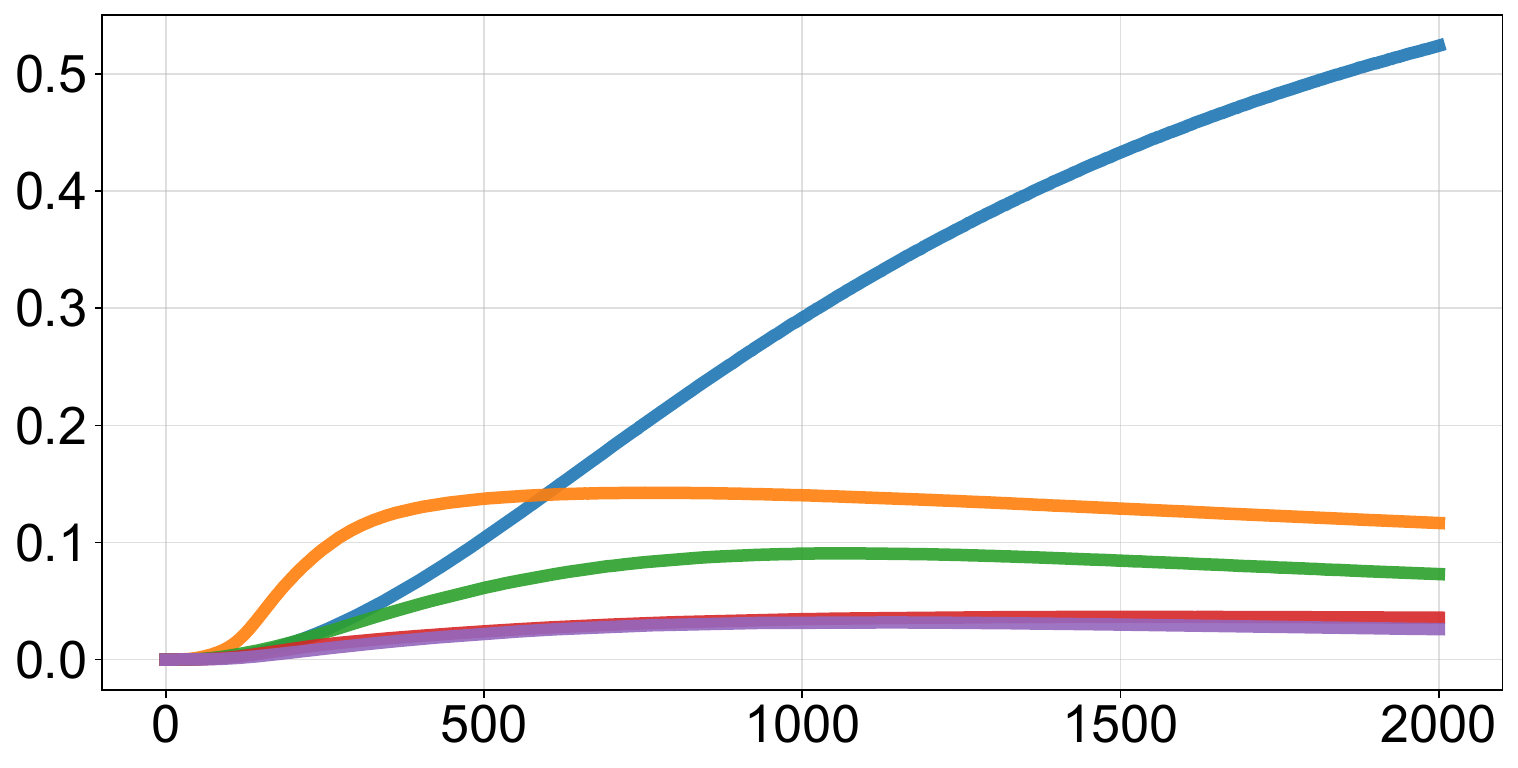}
\end{subfigure}\hfill
\begin{subfigure}[t]{0.49\textwidth}
\centering
\includegraphics[width=0.6\linewidth]{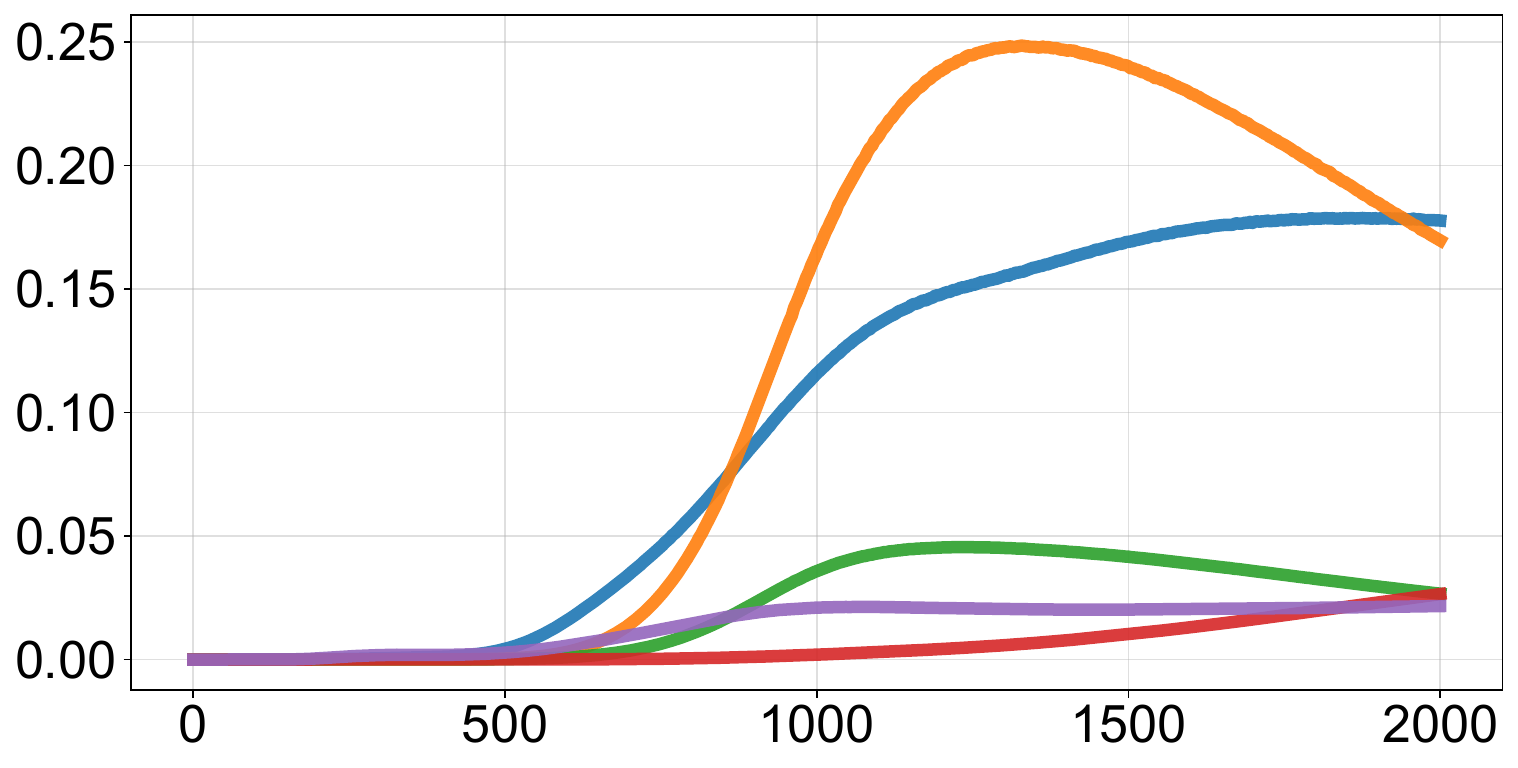}
\end{subfigure}

\vspace{-4pt}
\caption{\label{fig:exp-next-token-appendix-normalized}Effect of steering strength $\alpha>0$ on next-token probability shifts $\Delta p(z,\alpha)$ for the concepts (top to bottom): depression, joy, and evil. Each row of two plots corresponds to a single steered concept. Steering is applied at a \textbf{middle} layer in each model, and the steering vector is \textbf{normalized}. Each curve corresponds to a token $z$ selected among the eight highest-probability tokens at $\alpha=2000$. This matches Theorem~\ref{th:non-monotonicity}: most tokens exhibit a bump, while a few increase throughout. Notably, many selected tokens are related to the steered concept.}
\end{figure}

\begin{figure}
\centering

\begin{subfigure}[t]{0.32\textwidth}\centering
\includegraphics[width=1.\linewidth]{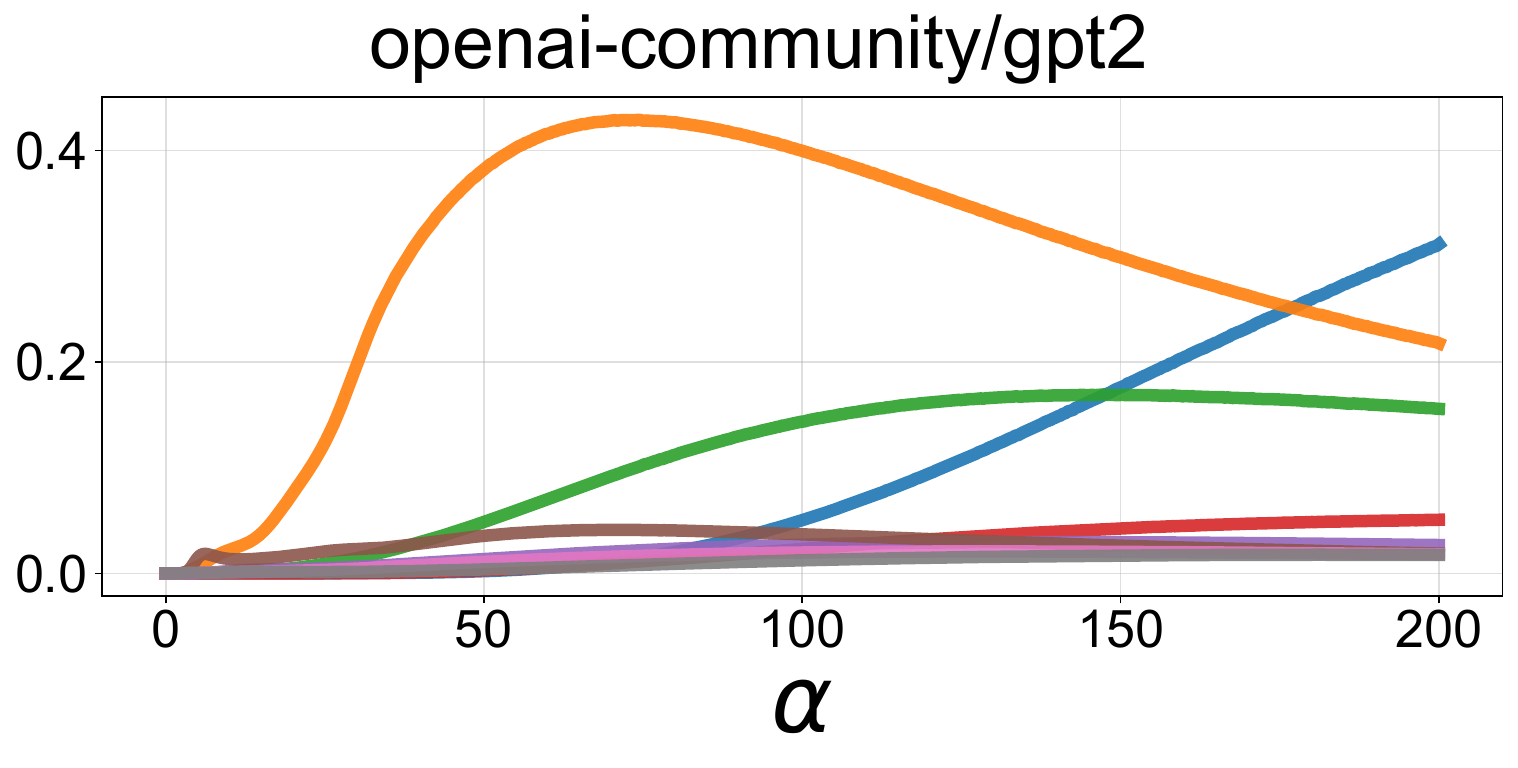}
\end{subfigure}\hfill
\begin{subfigure}[t]{0.32\textwidth}\centering
\includegraphics[width=1.\linewidth]{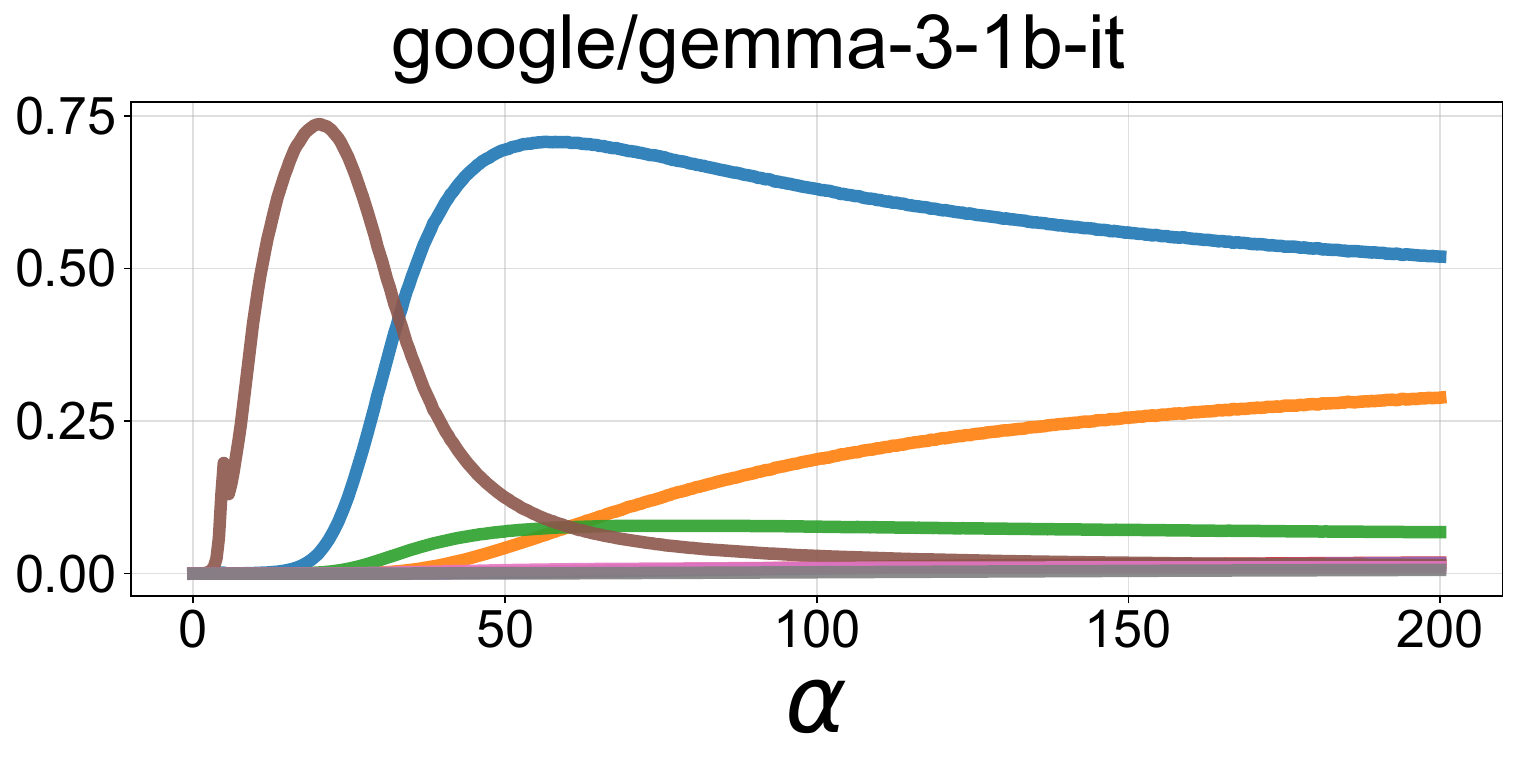}
\end{subfigure}\hfill
\begin{subfigure}[t]{0.32\textwidth}\centering
\includegraphics[width=1.\linewidth]{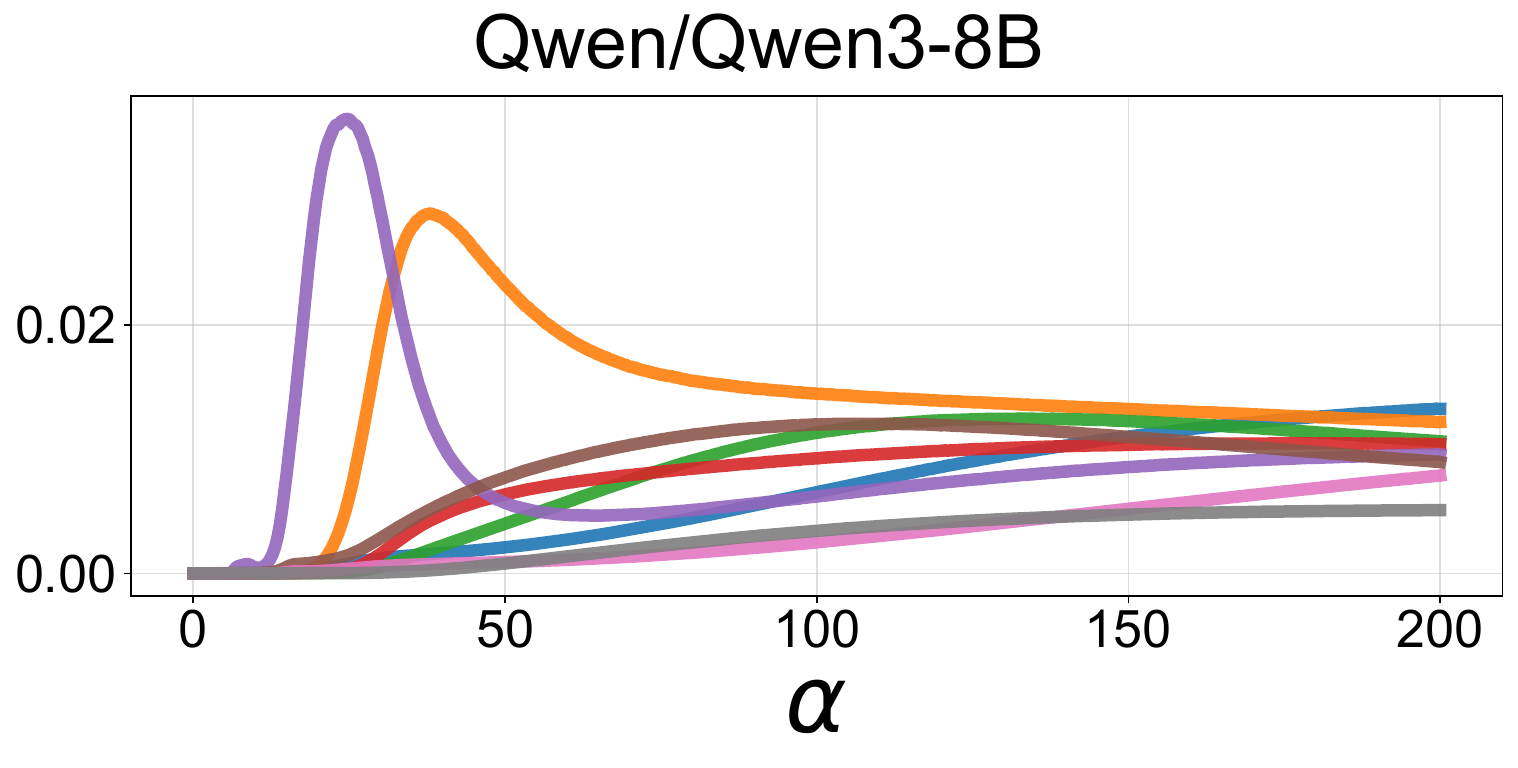}
\end{subfigure}

\vspace{2pt}
\begin{subfigure}[t]{0.32\textwidth}\centering
\includegraphics[width=1.\linewidth]{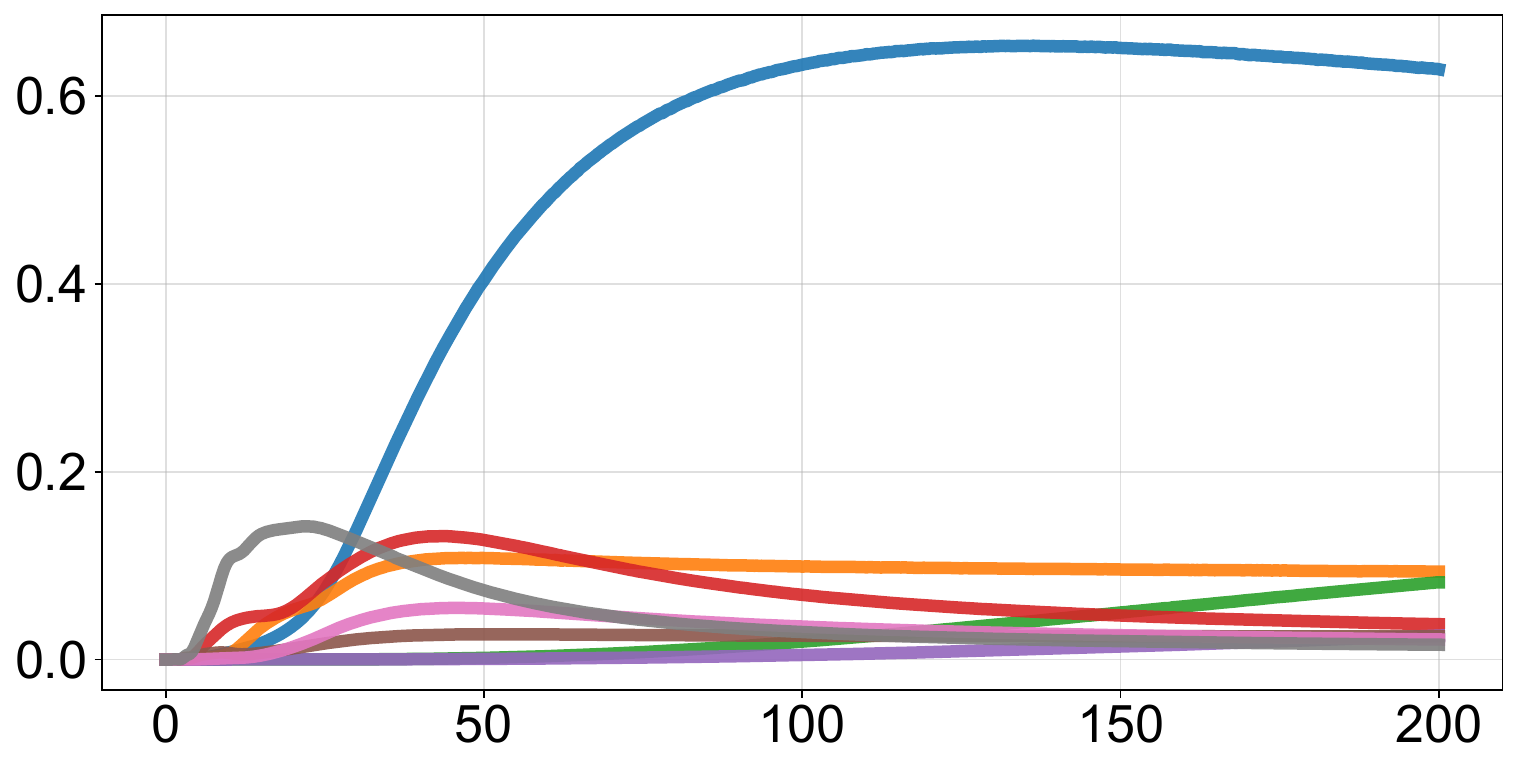}
\end{subfigure}\hfill
\begin{subfigure}[t]{0.32\textwidth}\centering
\includegraphics[width=1.\linewidth]{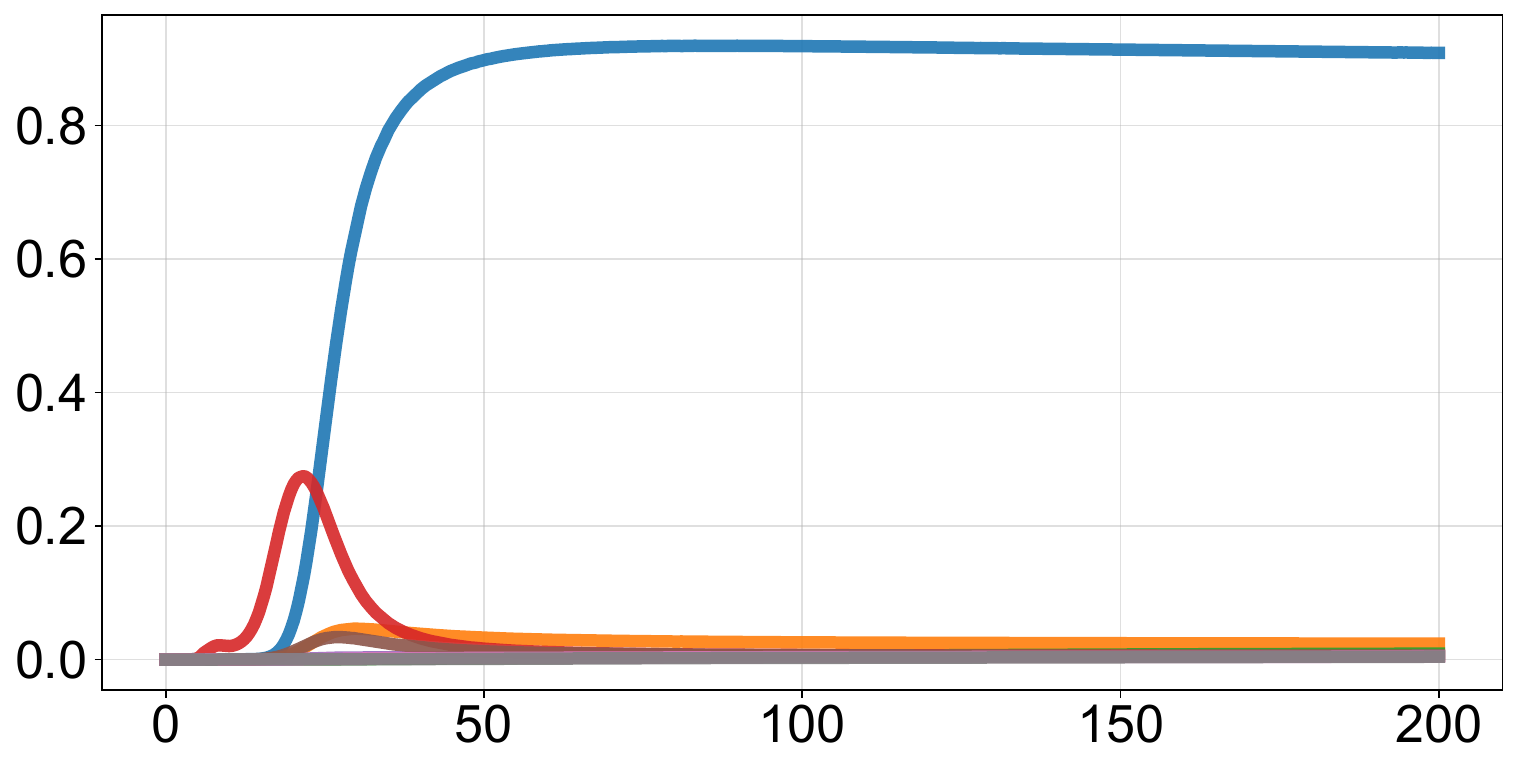}
\end{subfigure}\hfill
\begin{subfigure}[t]{0.32\textwidth}\centering
\includegraphics[width=1.\linewidth]{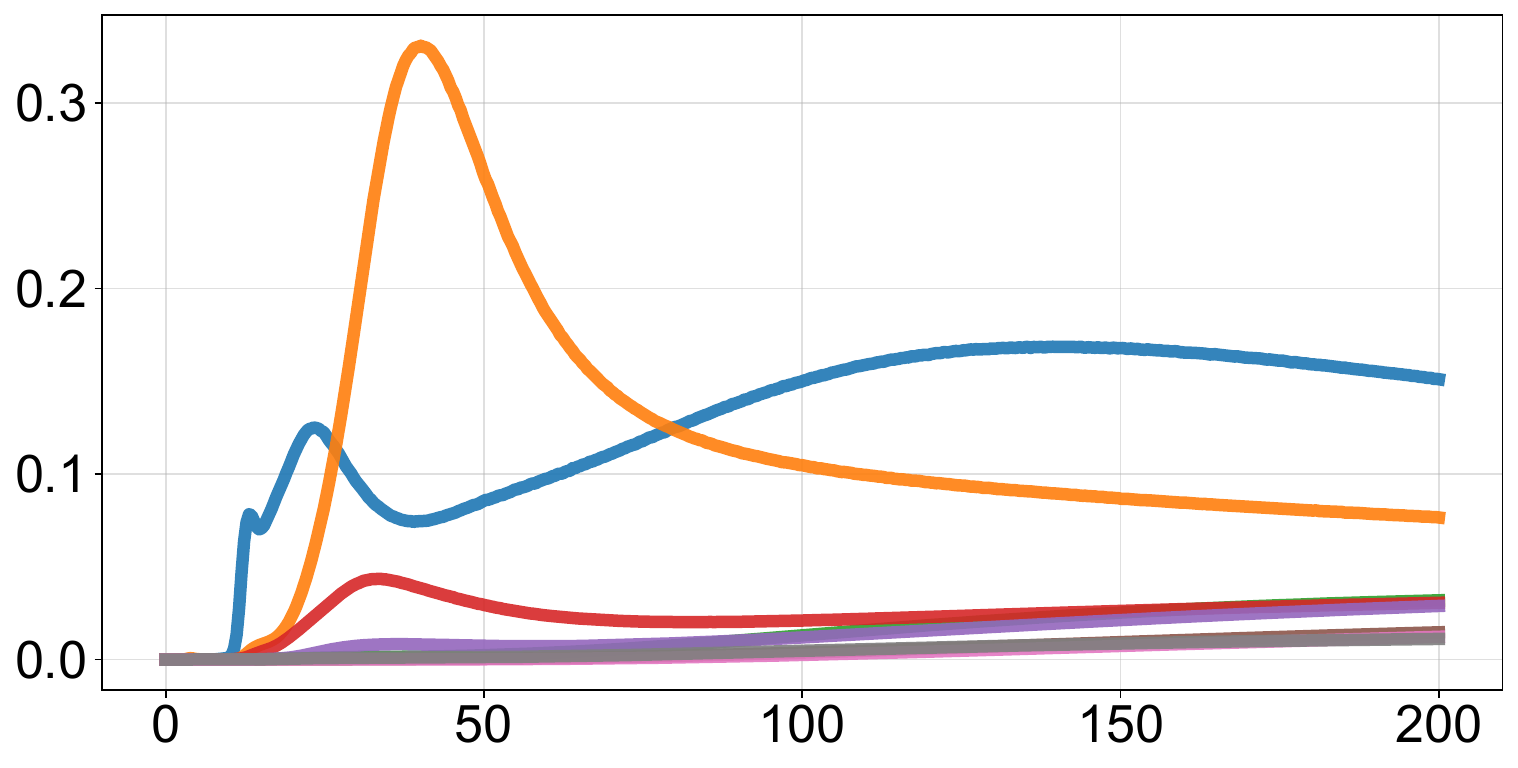}
\end{subfigure}

\vspace{-4pt}
\caption{\label{fig:exp-next-token-appendix-last-token}Effect of steering strength $\alpha>0$ on next-token probability shifts $\Delta p(z,\alpha)$ for the concepts (top to bottom): evil and joy. Each row of three plots corresponds to a single steered concept. Steering is applied at a \textbf{middle} layer in each model, and we steer only the \textbf{last token} representation $\hbf^{(\ell)}_{(-1)}$. Each curve corresponds to a token $z$ selected among the eight highest-probability tokens at $\alpha=200$. This matches Theorem~\ref{th:non-monotonicity}: most tokens exhibit a bump, while a few increase throughout. Notably, many selected tokens are related to the steered concept.}
\end{figure}

\begin{figure}
\centering

\begin{subfigure}[t]{0.32\textwidth}\centering
\includegraphics[width=1.\linewidth]{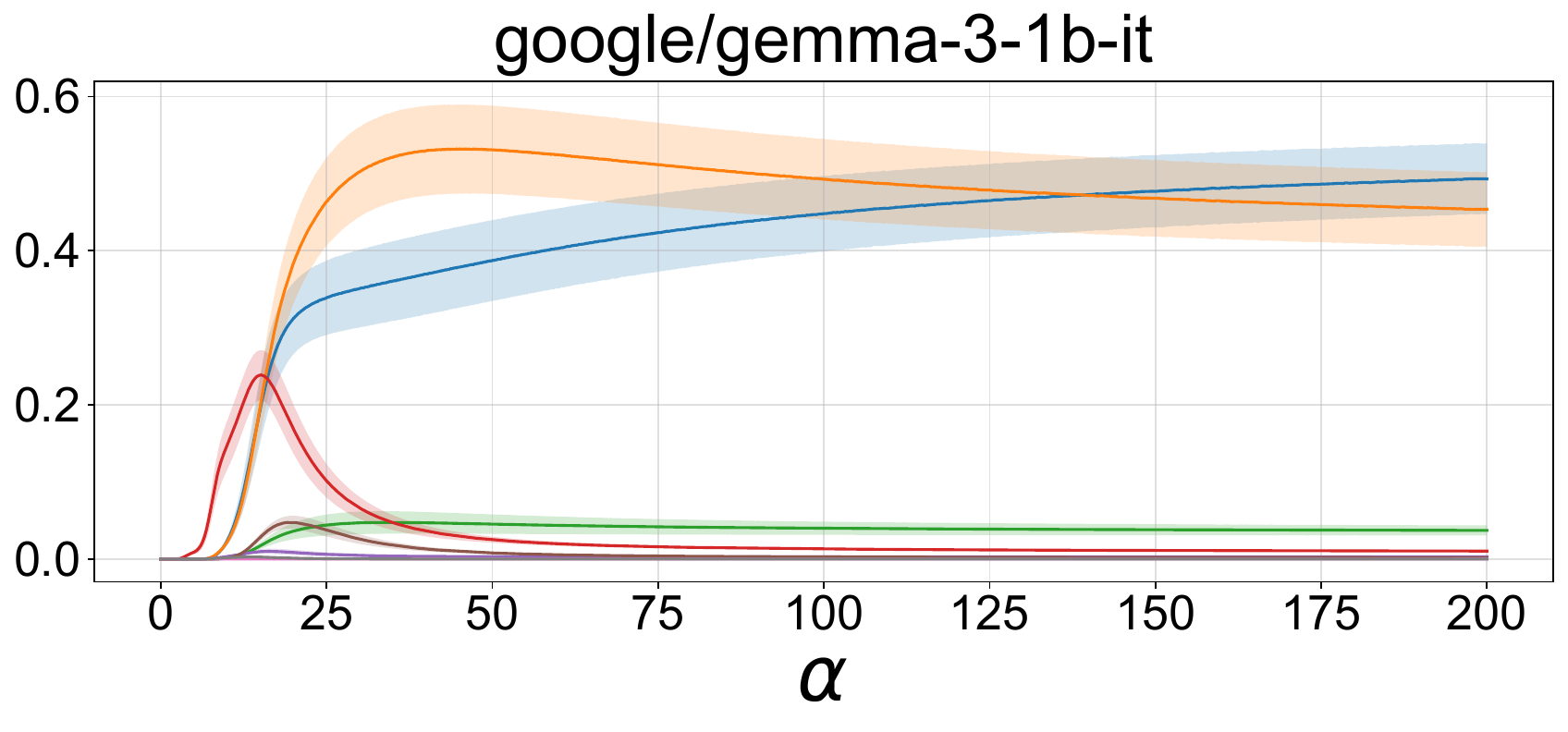}
\end{subfigure}\hfill
\begin{subfigure}[t]{0.32\textwidth}\centering
\includegraphics[width=1.\linewidth]{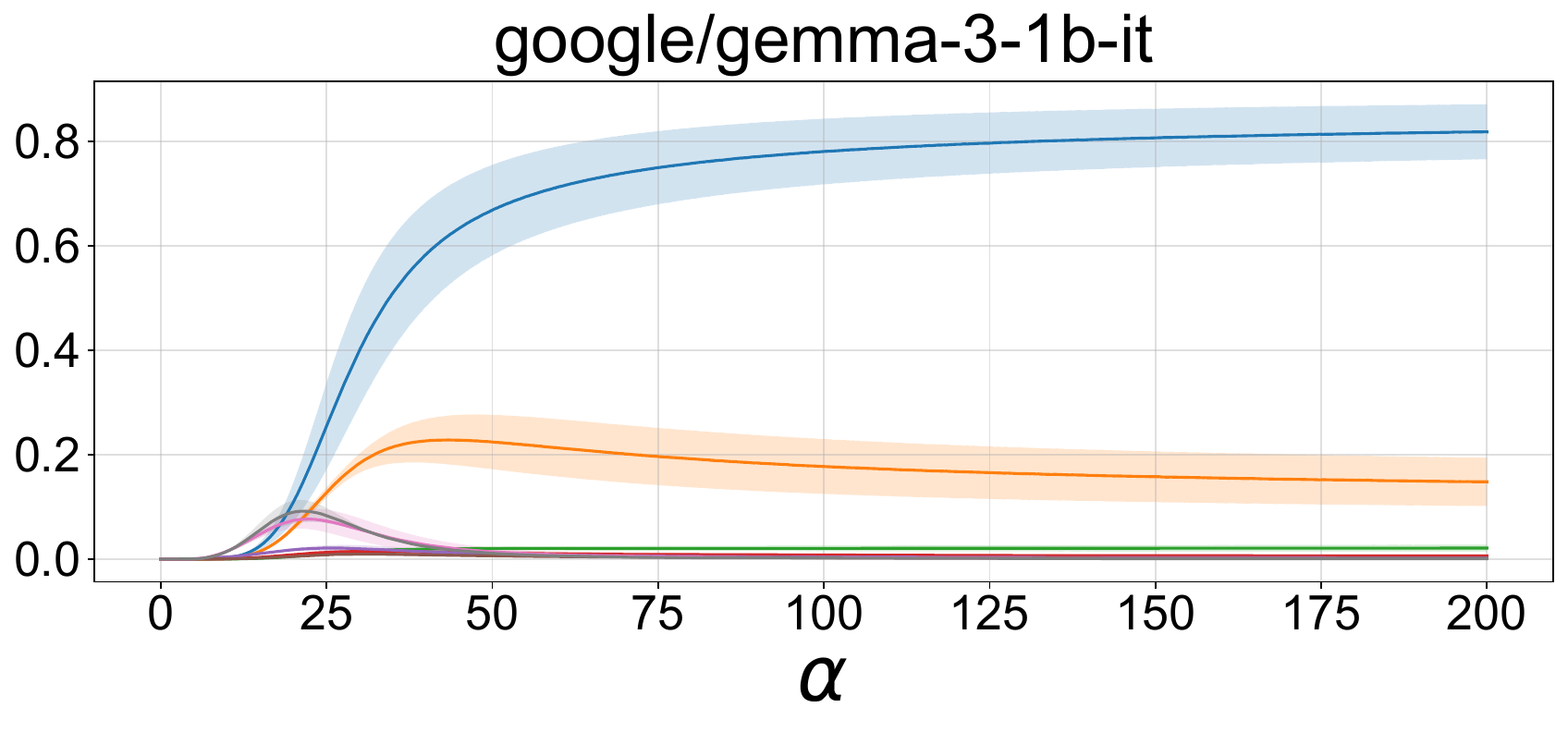}
\end{subfigure}\hfill
\begin{subfigure}[t]{0.32\textwidth}\centering
\includegraphics[width=1.\linewidth]{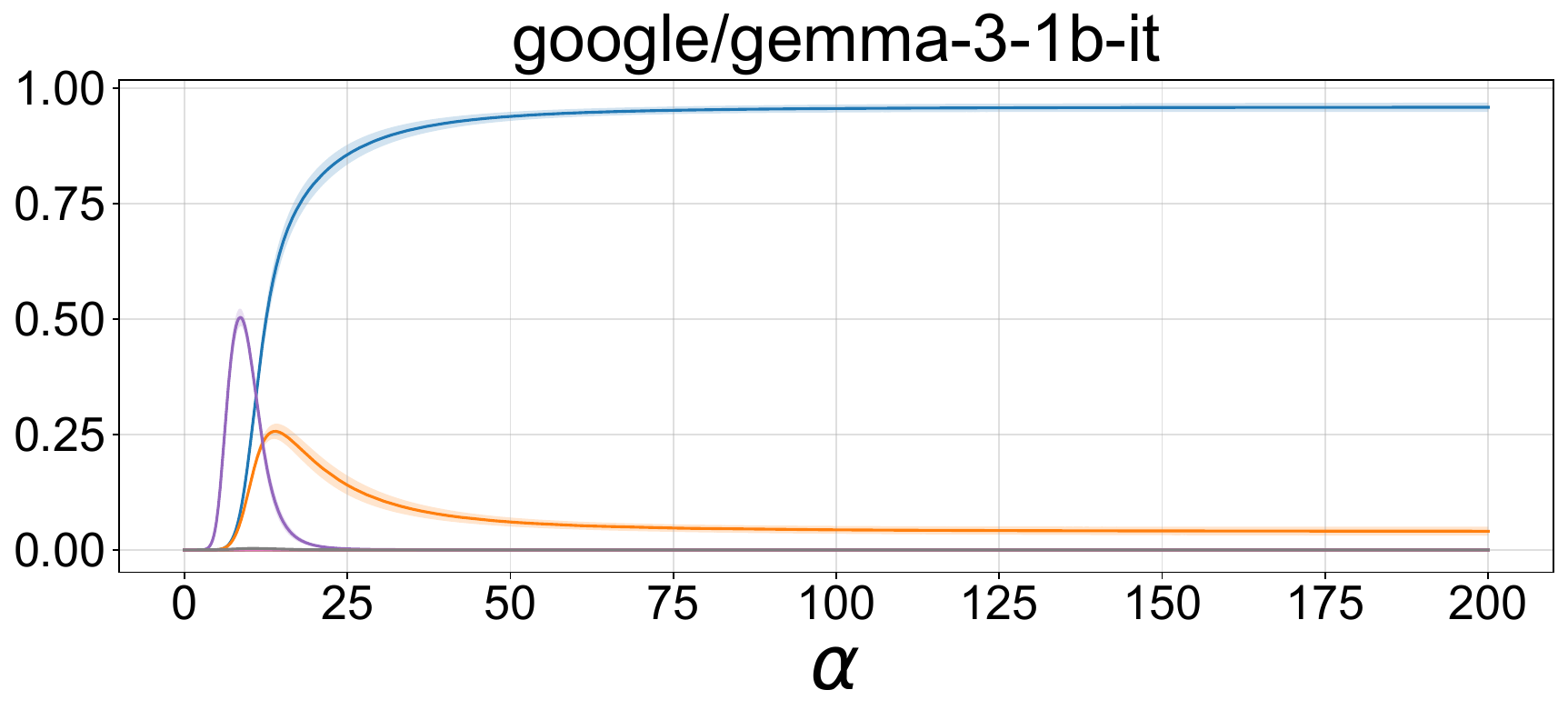}
\end{subfigure}

\vspace{-4pt}
\caption{\label{fig:exp-next-token-appendix-n_runs}Effect of steering strength $\alpha>0$ on next-token probability shifts $\Delta p(z,\alpha)$ for the concept ``evil.'' Steering is applied at a \textbf{middle} layer in each model. Each curve corresponds to a token $z$ selected among the eight highest-probability tokens at $\alpha=200$, plotted with mean and standard deviation over $5$ runs obtained by resampling the prompt sets $P$ and $N$. This matches Theorem~\ref{th:non-monotonicity}: most tokens exhibit a bump, while a few increase throughout. The variability across runs is moderate, so for computational cost we omit error bars in the main figures.}
\end{figure}

\begin{figure}
    \centering
    \includegraphics[width=0.3\linewidth]{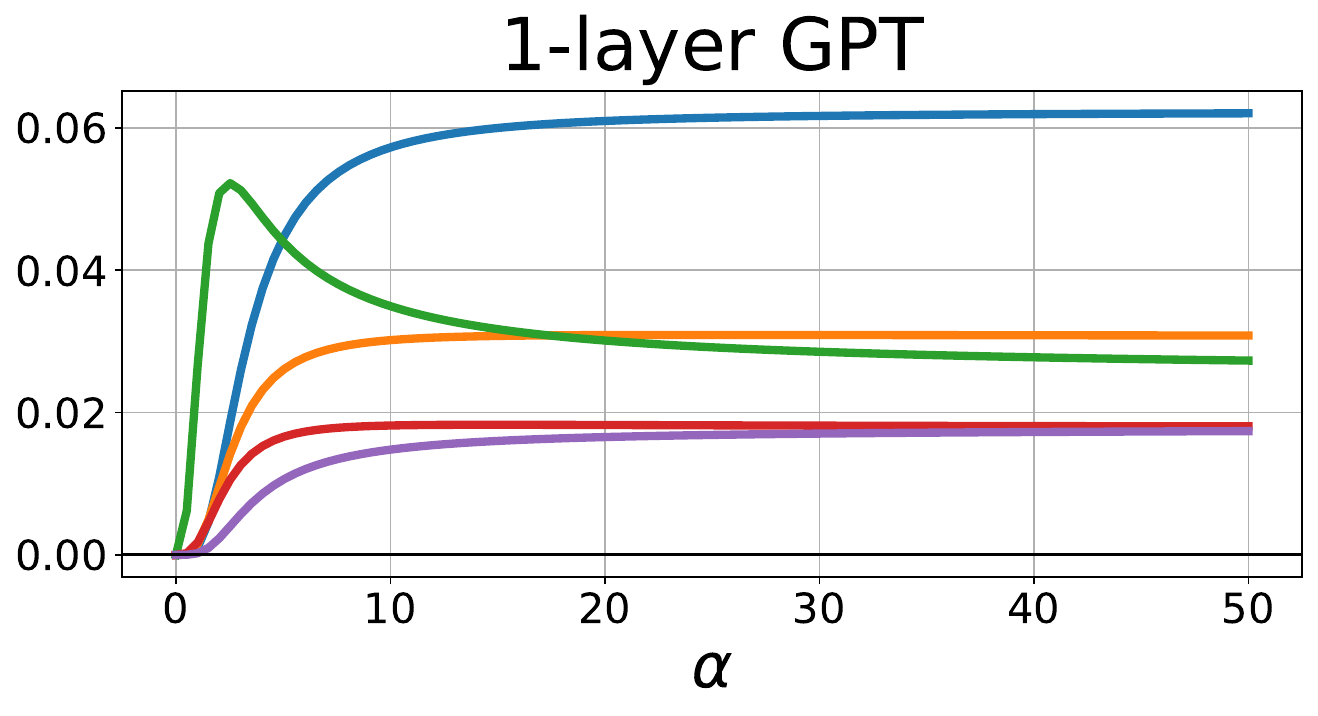}
    \caption{\label{fig:one-layer-gpt}Effect of steering strength $\alpha>0$ on next-token probability shifts $\Delta p(z,\alpha)$ for the concept ``uppercase words''. Each curve corresponds to a token $z$ selected among the eight highest-probability tokens at $\alpha=50$. This matches Theorem~\ref{th:non-monotonicity}: most tokens exhibit a bump, while a few increase throughout. Notably, many selected tokens are uppercase words.}
\end{figure}
\begin{figure}
\centering

\begin{subfigure}[t]{0.32\textwidth}\centering
\includegraphics[width=0.8\linewidth]{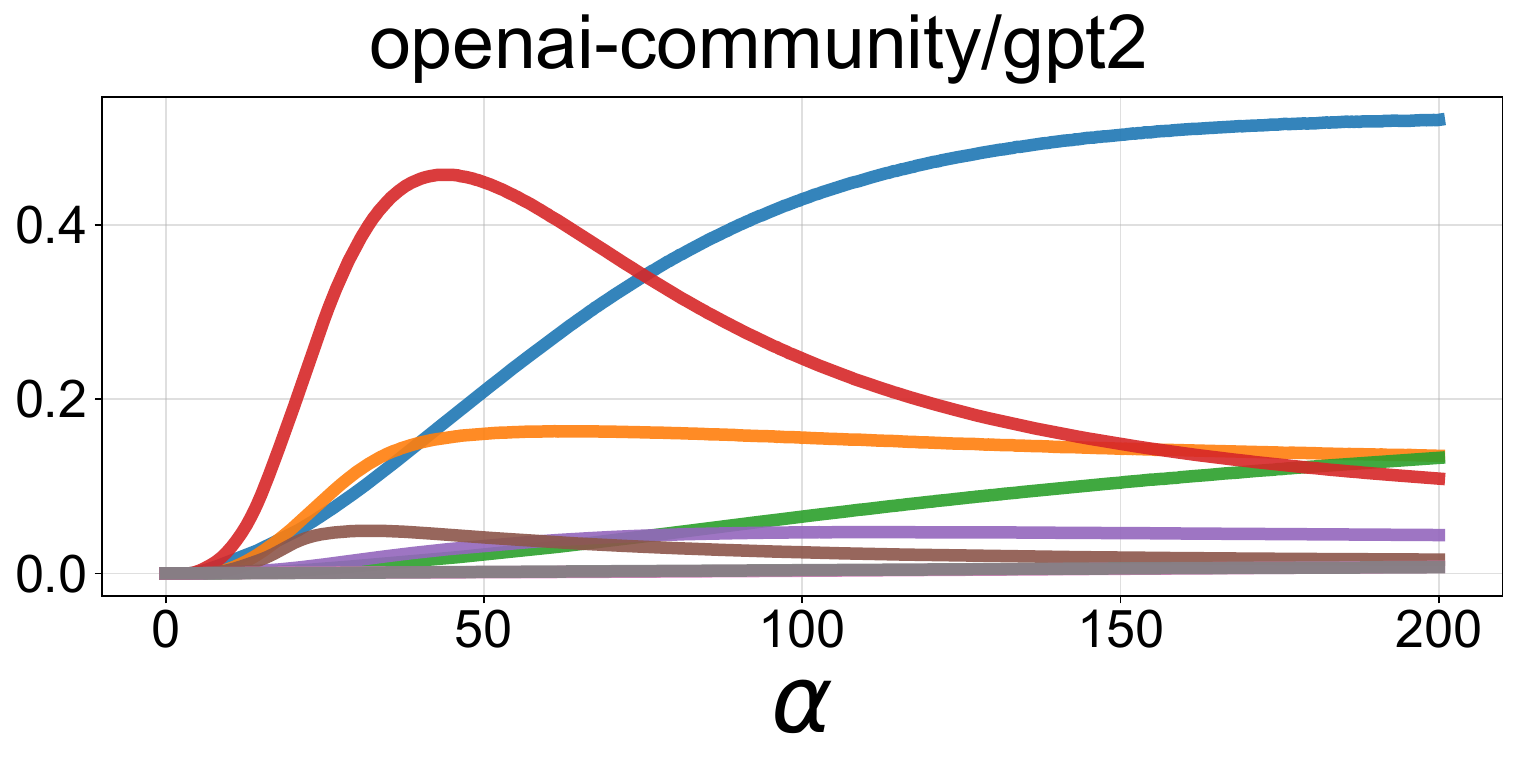}
\end{subfigure}\hfill
\begin{subfigure}[t]{0.32\textwidth}\centering
\includegraphics[width=0.8\linewidth]{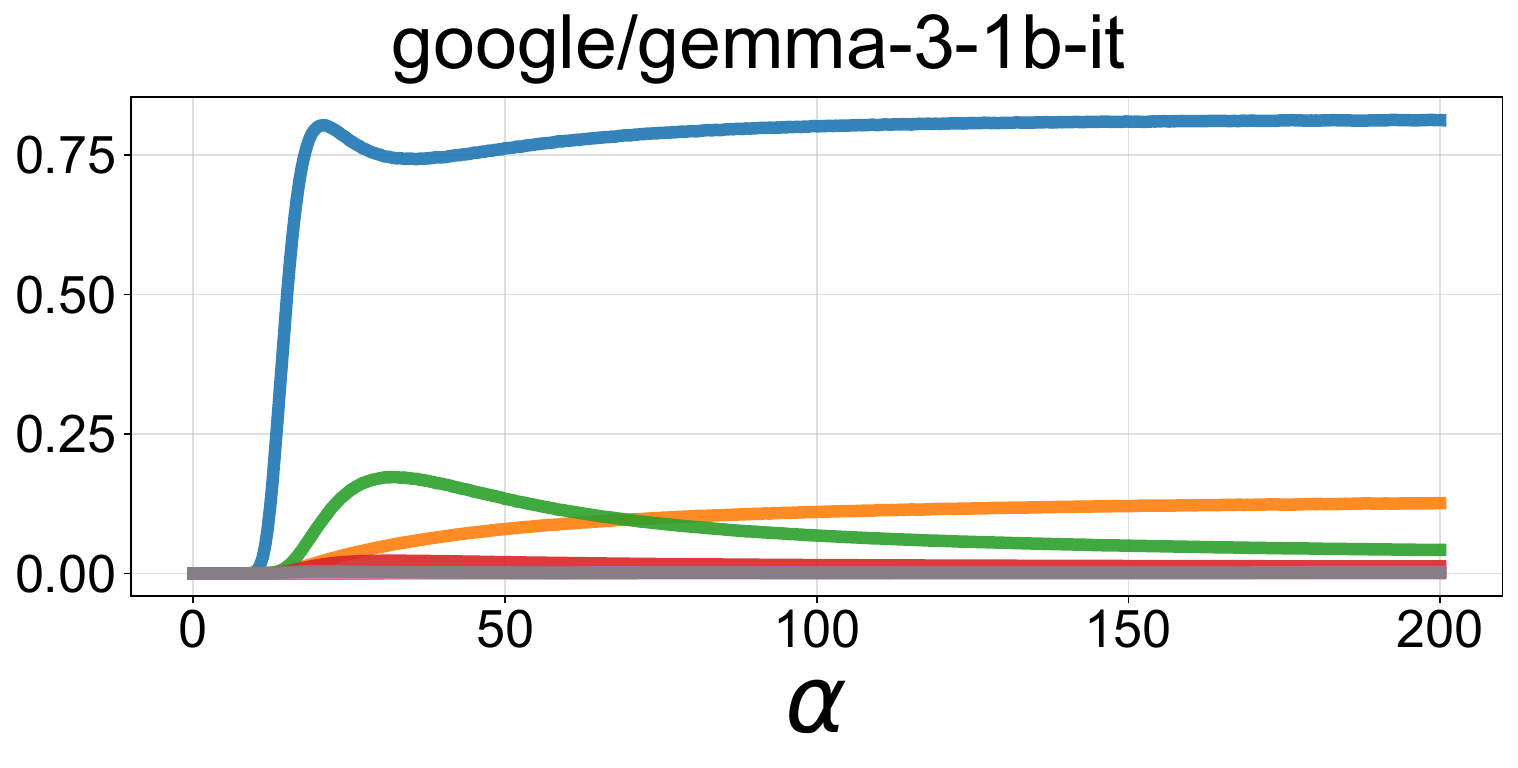}
\end{subfigure}\hfill
\begin{subfigure}[t]{0.32\textwidth}\centering
\includegraphics[width=0.8\linewidth]{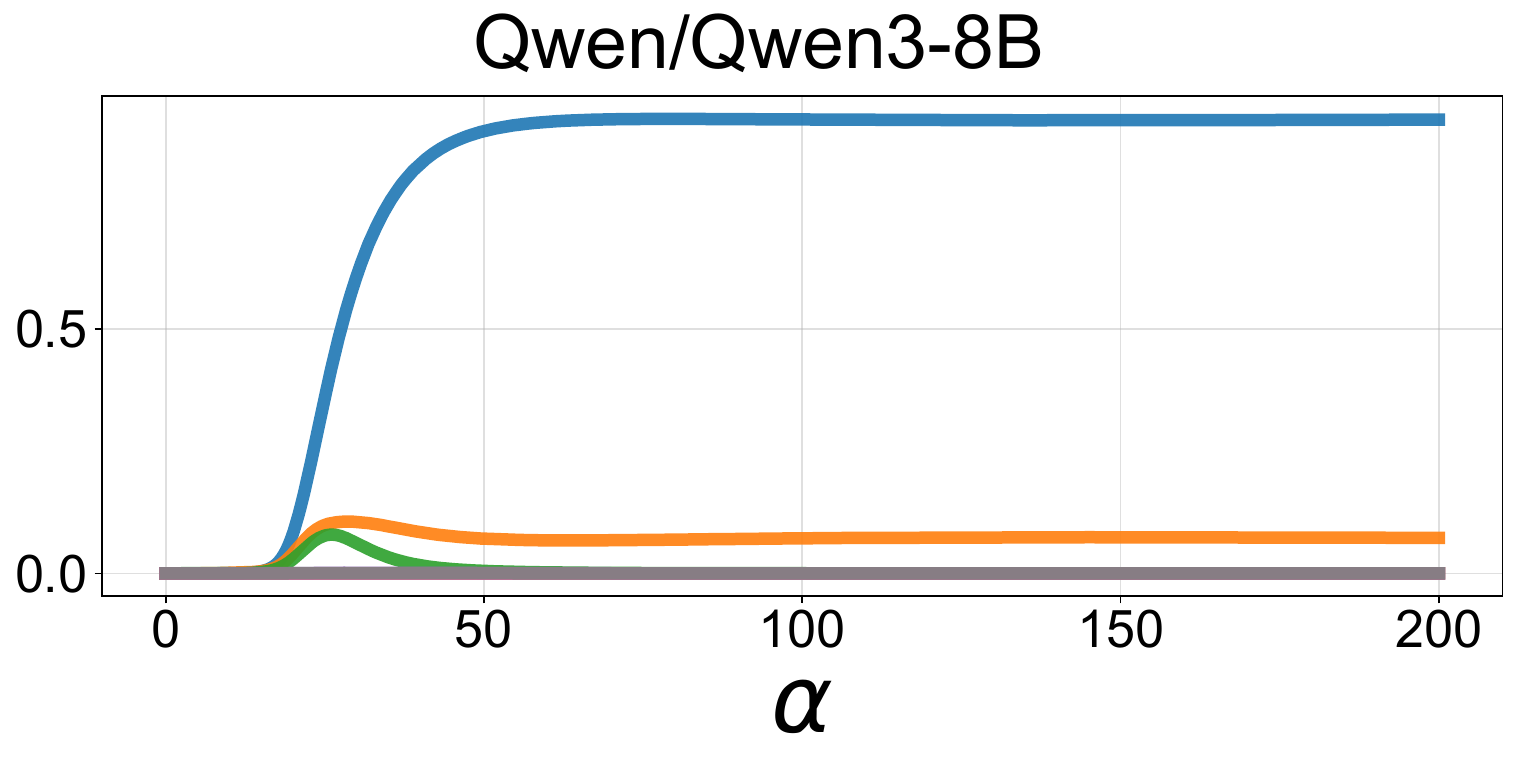}
\end{subfigure}

\vspace{2pt}
\begin{subfigure}[t]{0.32\textwidth}\centering
\includegraphics[width=0.8\linewidth]{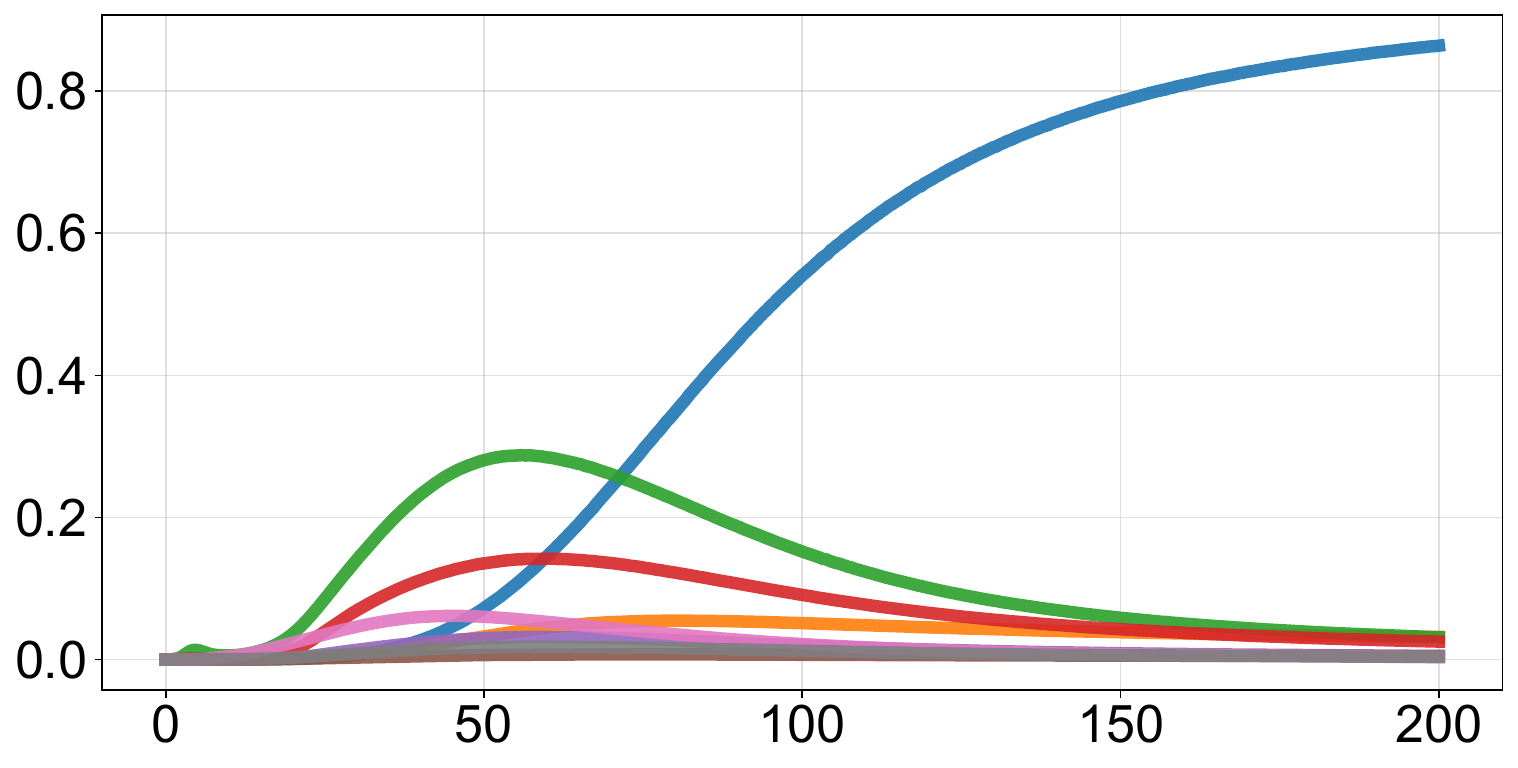}
\end{subfigure}\hfill
\begin{subfigure}[t]{0.32\textwidth}\centering
\includegraphics[width=0.8\linewidth]{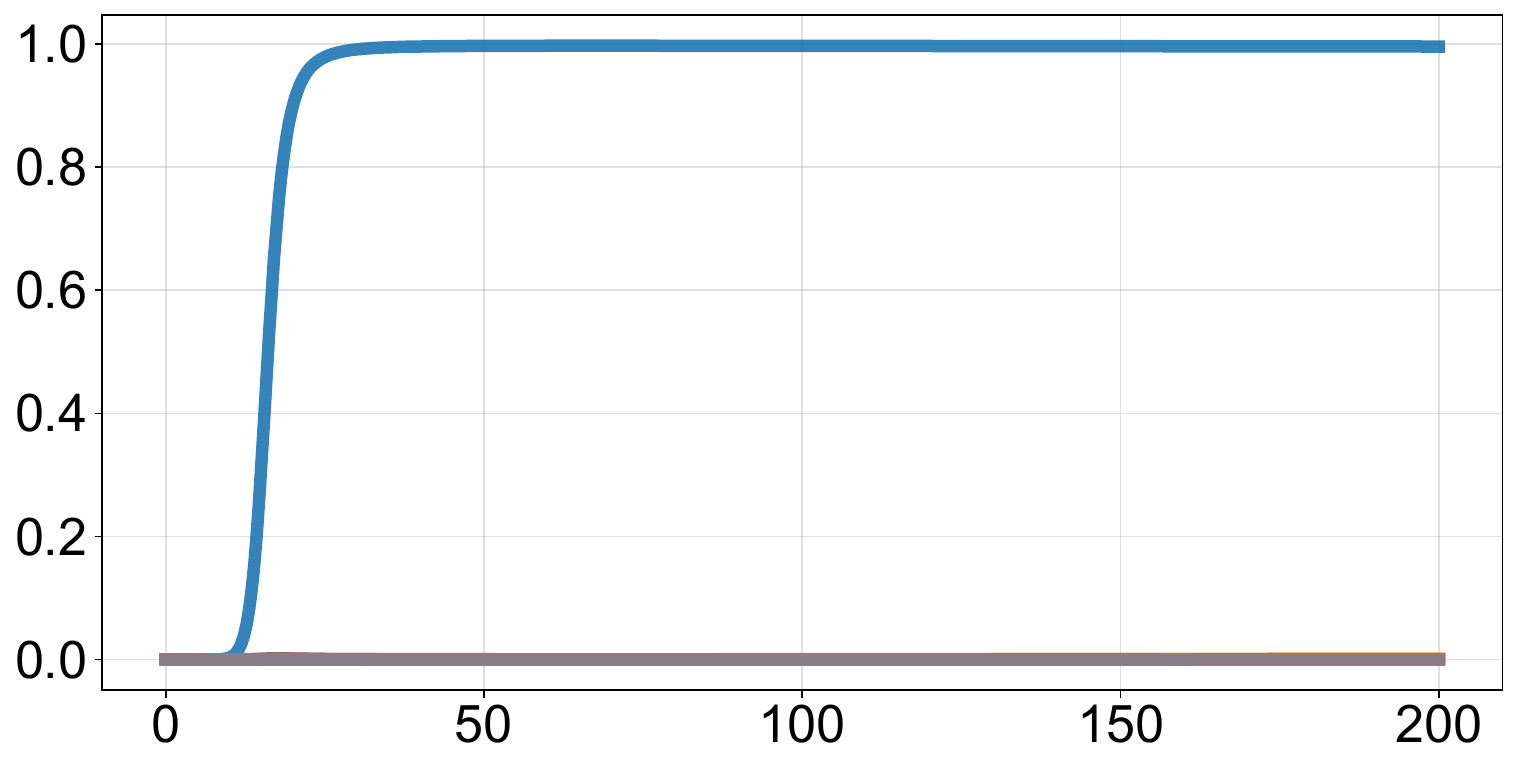}
\end{subfigure}\hfill
\begin{subfigure}[t]{0.32\textwidth}\centering
\includegraphics[width=0.8\linewidth]{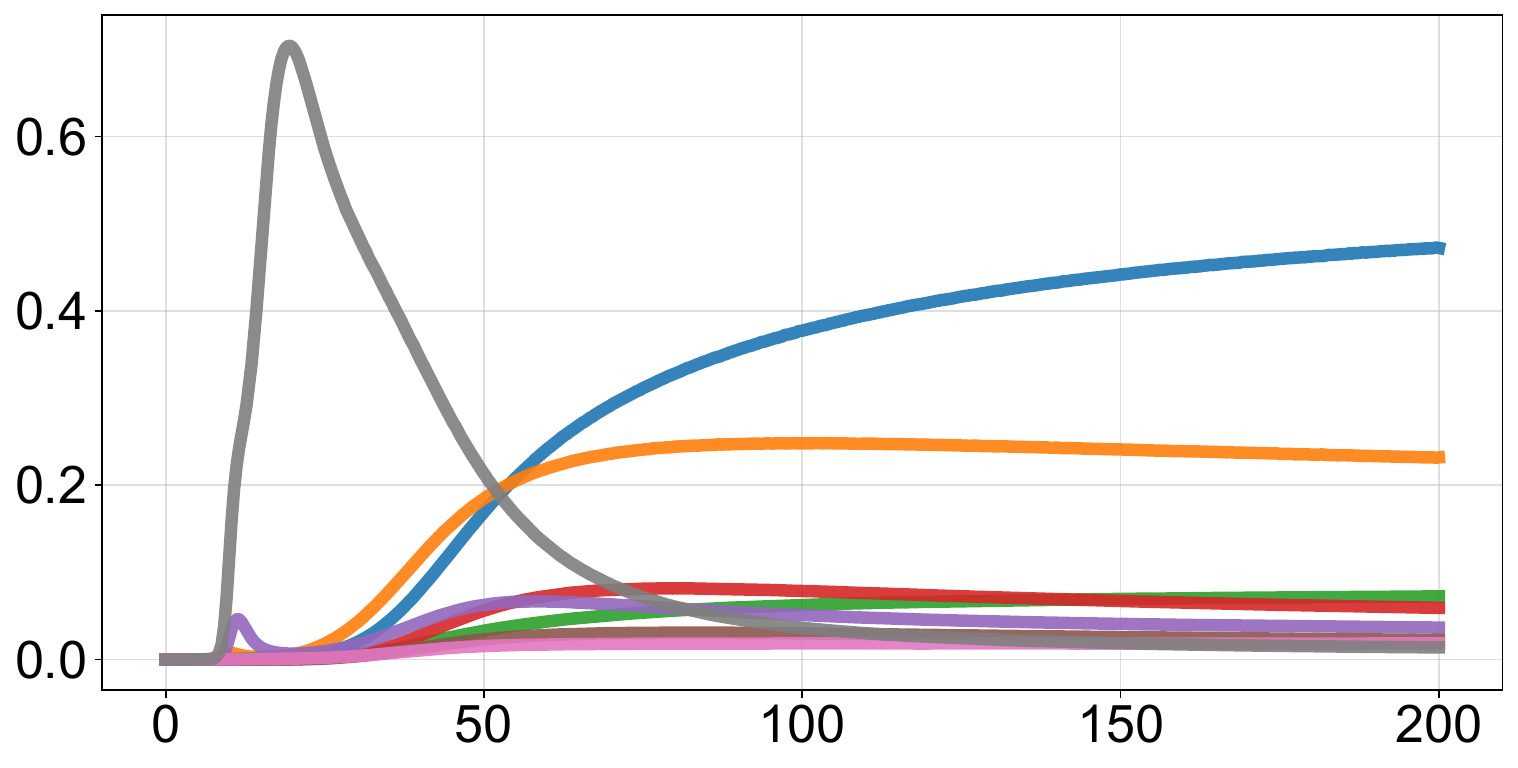}
\end{subfigure}

\vspace{2pt}
\begin{subfigure}[t]{0.32\textwidth}\centering
\includegraphics[width=0.8\linewidth]{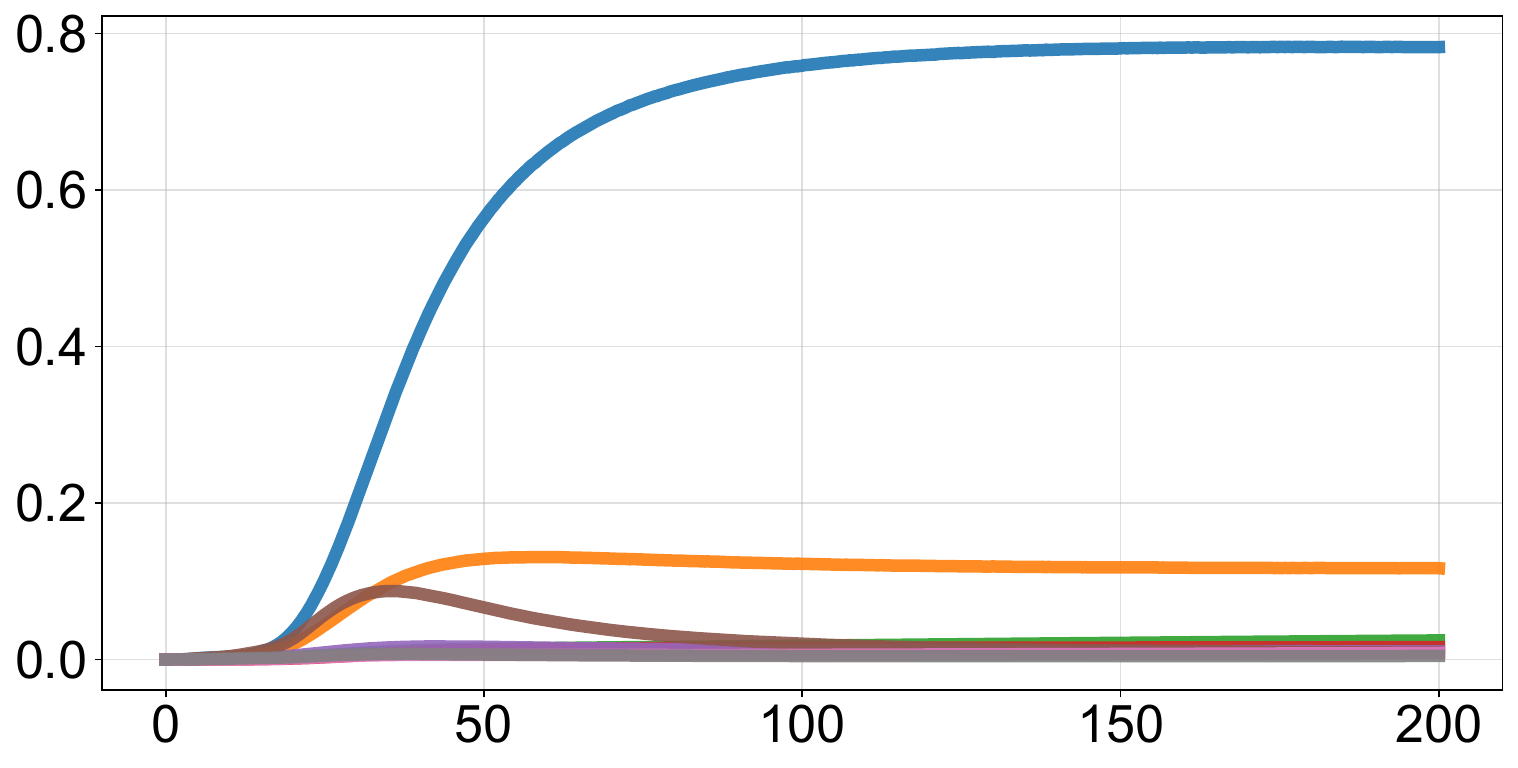}
\end{subfigure}\hfill
\begin{subfigure}[t]{0.32\textwidth}\centering
\includegraphics[width=0.8\linewidth]{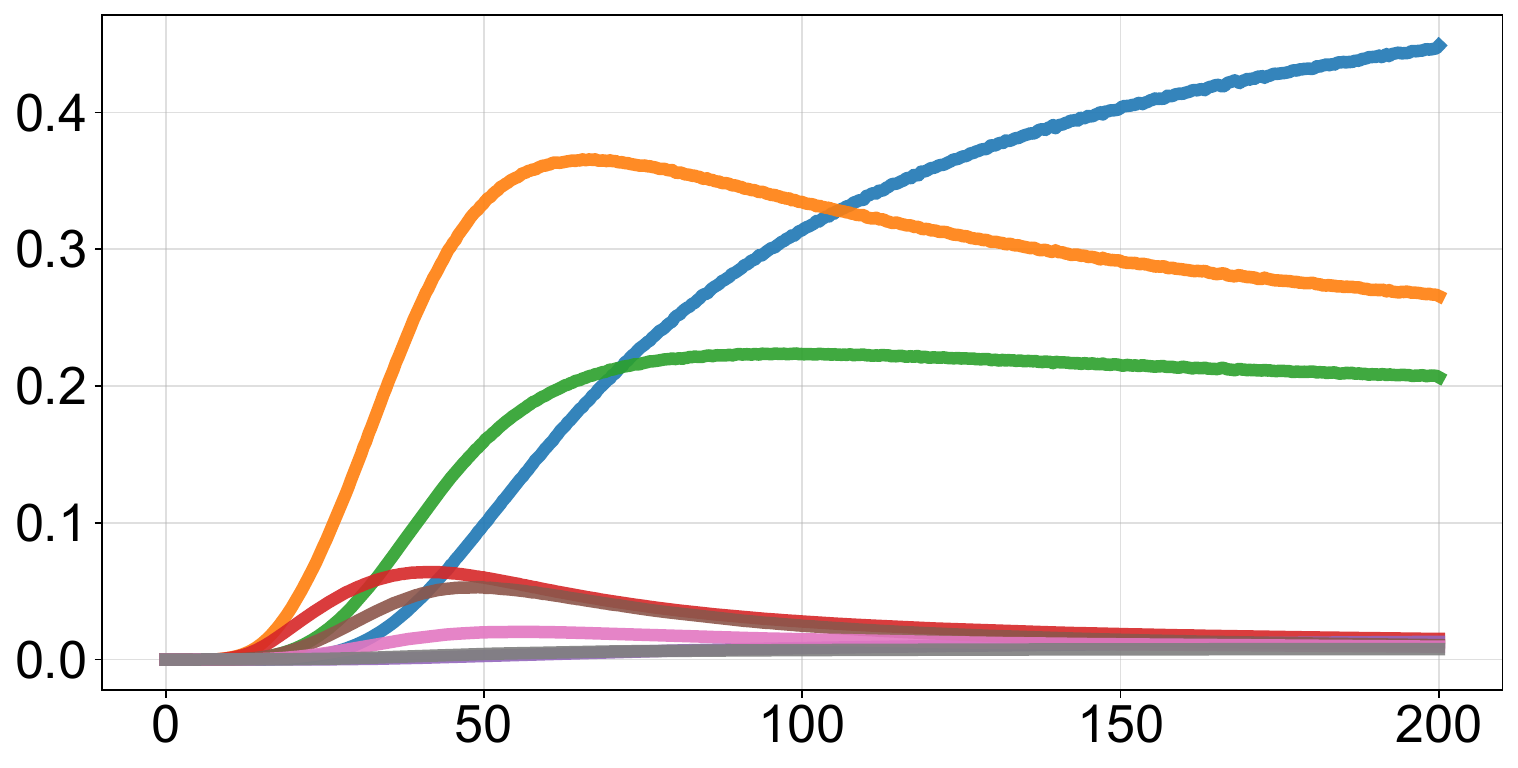}
\end{subfigure}\hfill
\begin{subfigure}[t]{0.32\textwidth}\centering
\includegraphics[width=0.8\linewidth]{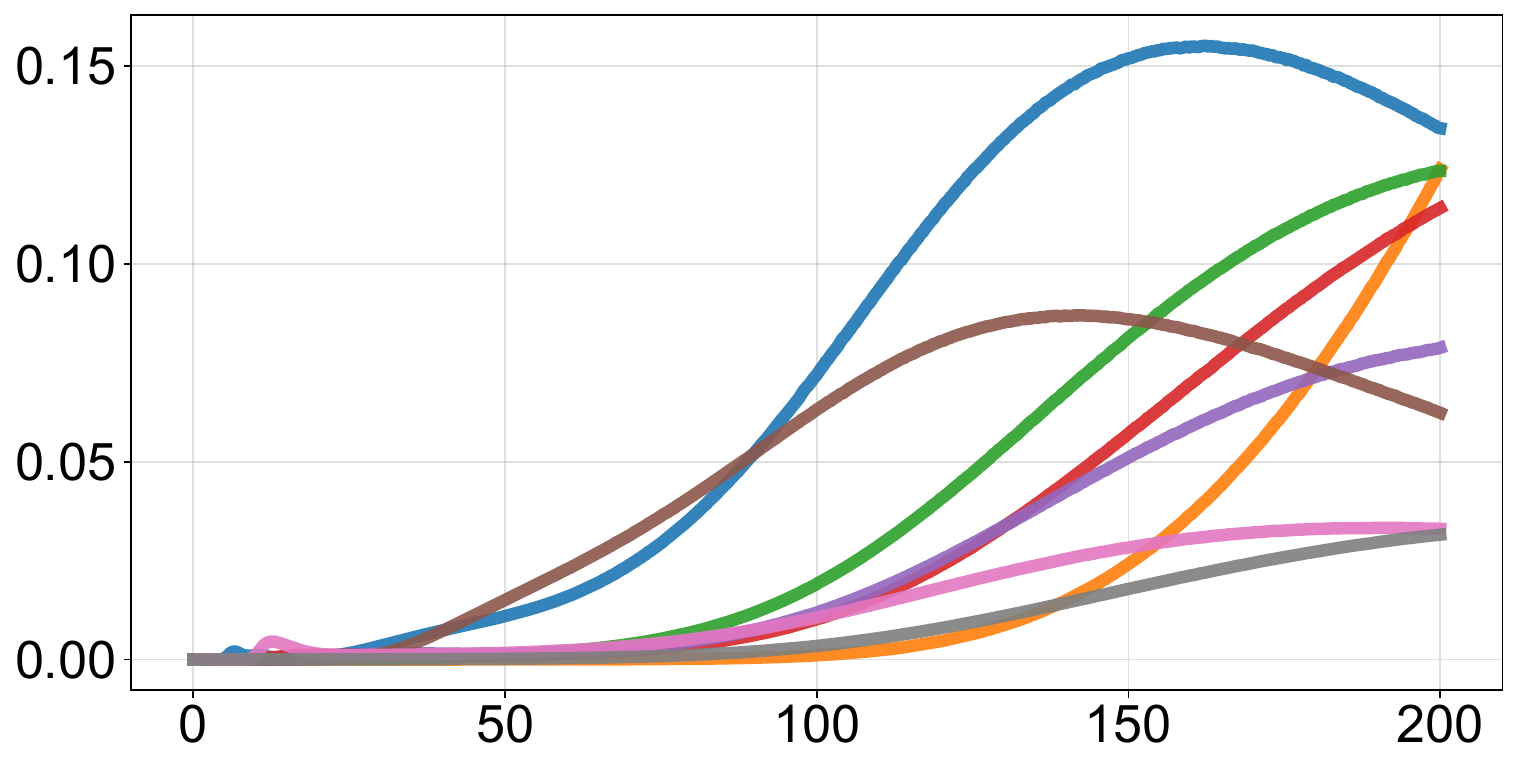}
\end{subfigure}

\vspace{-4pt}
\caption{\label{fig:exp-next-token-appendix-constrative}Effect of steering strength $\alpha>0$ on next-token probability shifts $\Delta p(z,\alpha)$ for the concepts (top to bottom): depression, joy, and evil. Each row of three plots corresponds to a single steered concept. Steering is applied at a \textbf{middle} layer in each model, and the negative prompt set $N$ is built in the \textbf{contrastive setting}. Each curve corresponds to a token $z$ selected among the eight highest-probability tokens at $\alpha=200$. This matches Theorem~\ref{th:non-monotonicity}: most tokens exhibit a bump, while a few increase throughout. Notably, many selected tokens are related to the steered concept.}
\end{figure}
\begin{figure}
    \centering
    \includegraphics[width=0.3\linewidth]{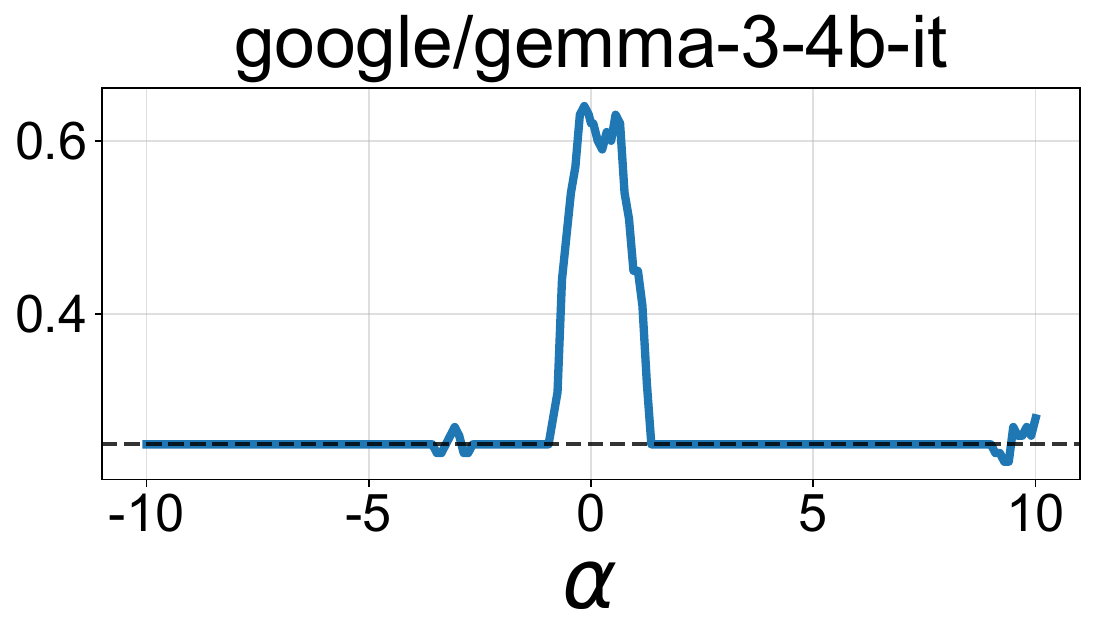}
    \caption{\label{fig:mmlu}Effect of steering strength $\alpha$ on MMLU~\citep{hendrycks_et_al_2021} for the concept ``evil''. MMLU is a practical performance metric, more indicative of real-world capability than cross-entropy. We measure it using the \texttt{DeepEval} library, for which random guessing yields $25\%$. As with cross-entropy, increasing $\alpha$ inevitably degrades model performance.}
\end{figure}

\begin{figure}[p]
\centering
\noindent{\small\textbf{Steered model:} Qwen/Qwen3-8B}\par\smallskip
\rowtitle{Depression}
\begin{subfigure}[t]{0.32\textwidth}\centering
\scriptsize\textbf{Layer 8}\par\smallskip
\includegraphics[width=\linewidth]{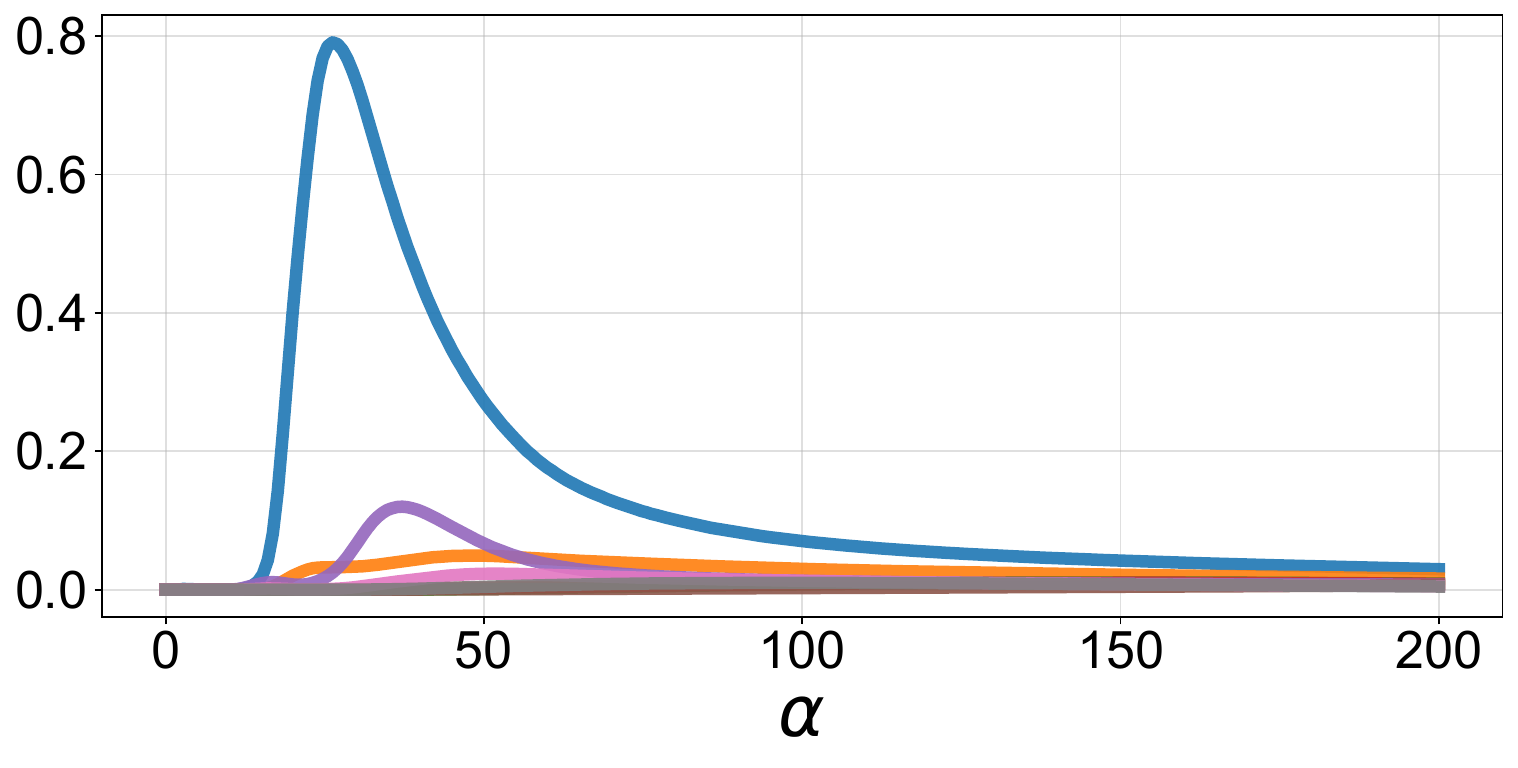}
\end{subfigure}\hfill
\begin{subfigure}[t]{0.32\textwidth}\centering
\scriptsize\textbf{Layer 19}\par\smallskip
\includegraphics[width=\linewidth]{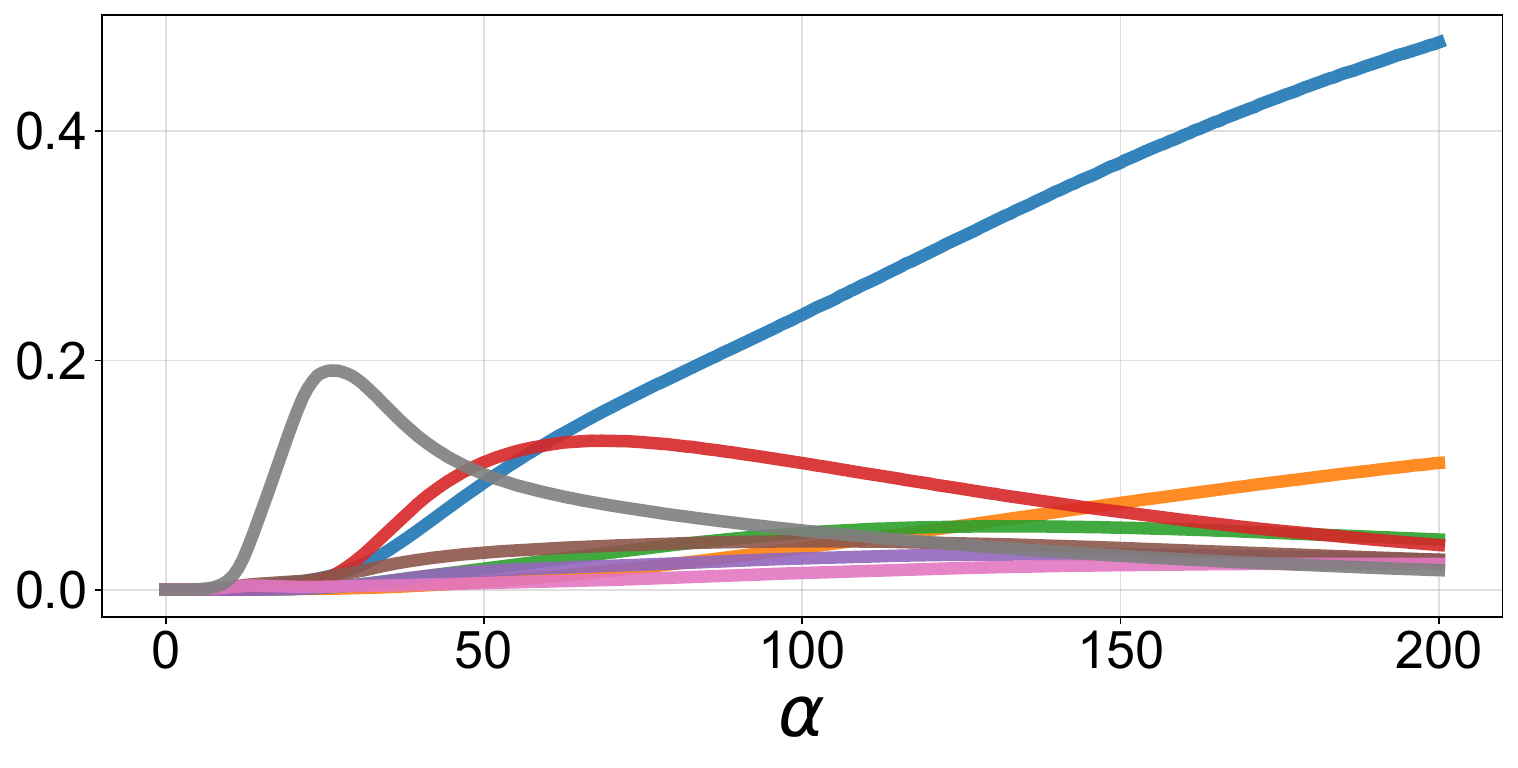}
\end{subfigure}\hfill
\begin{subfigure}[t]{0.32\textwidth}\centering
\scriptsize\textbf{Layer 35}\par\smallskip
\includegraphics[width=\linewidth]{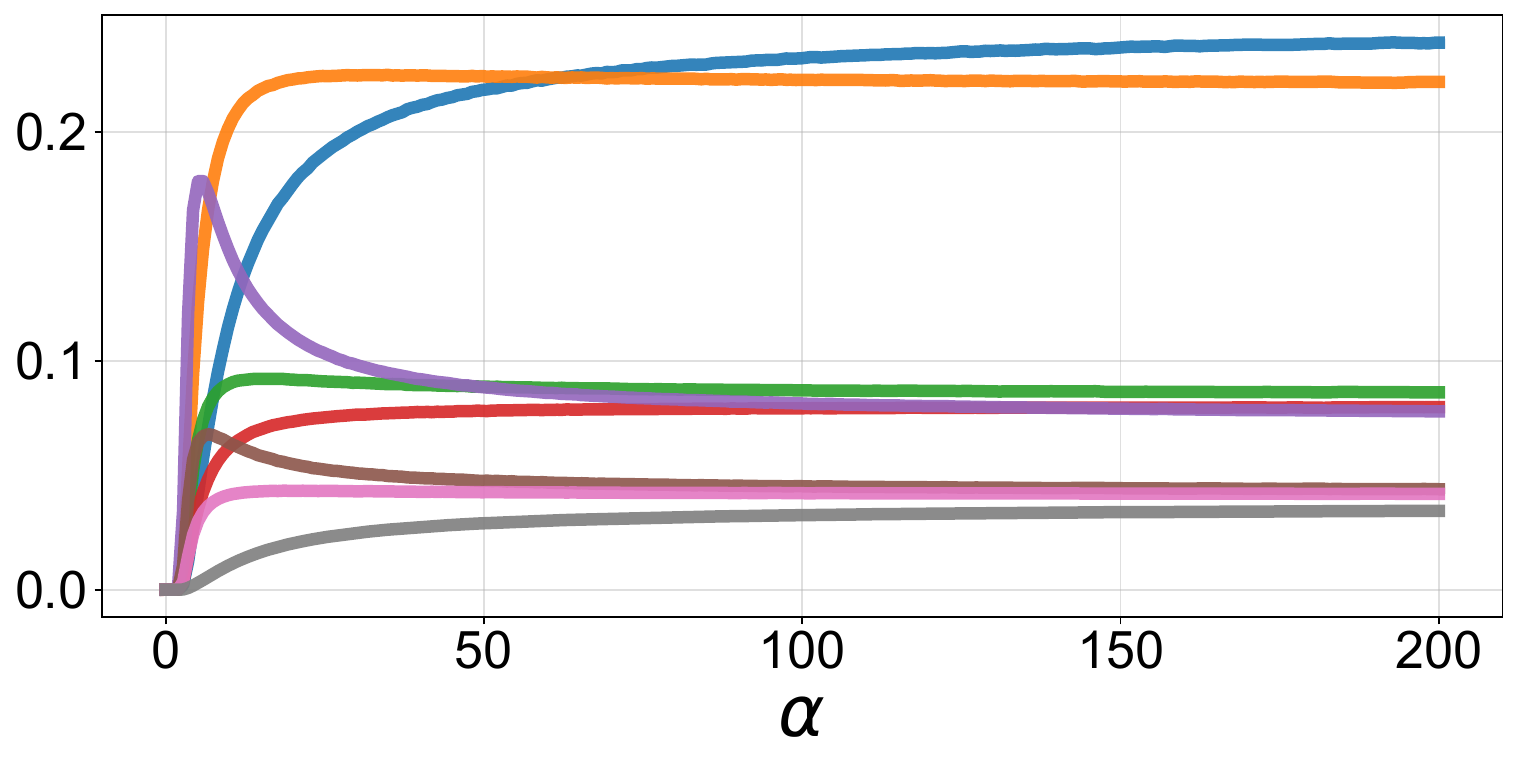}
\end{subfigure}
\rowtitle{Evil}
\begin{subfigure}[t]{0.32\textwidth}\centering
\scriptsize\textbf{Layer 8}\par\smallskip
\includegraphics[width=\linewidth]{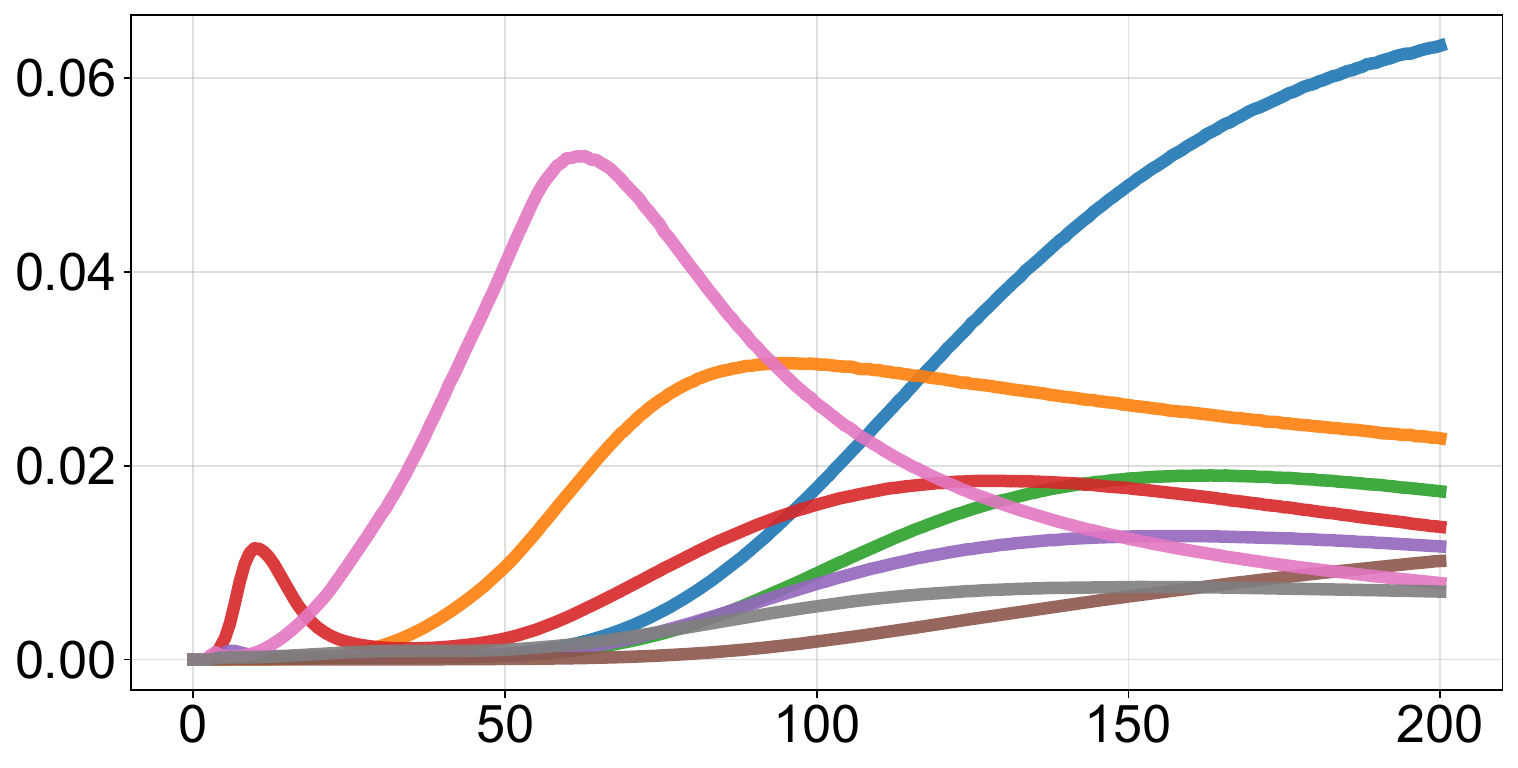}
\end{subfigure}\hfill
\begin{subfigure}[t]{0.32\textwidth}\centering
\scriptsize\textbf{Layer 19}\par\smallskip
\includegraphics[width=\linewidth]{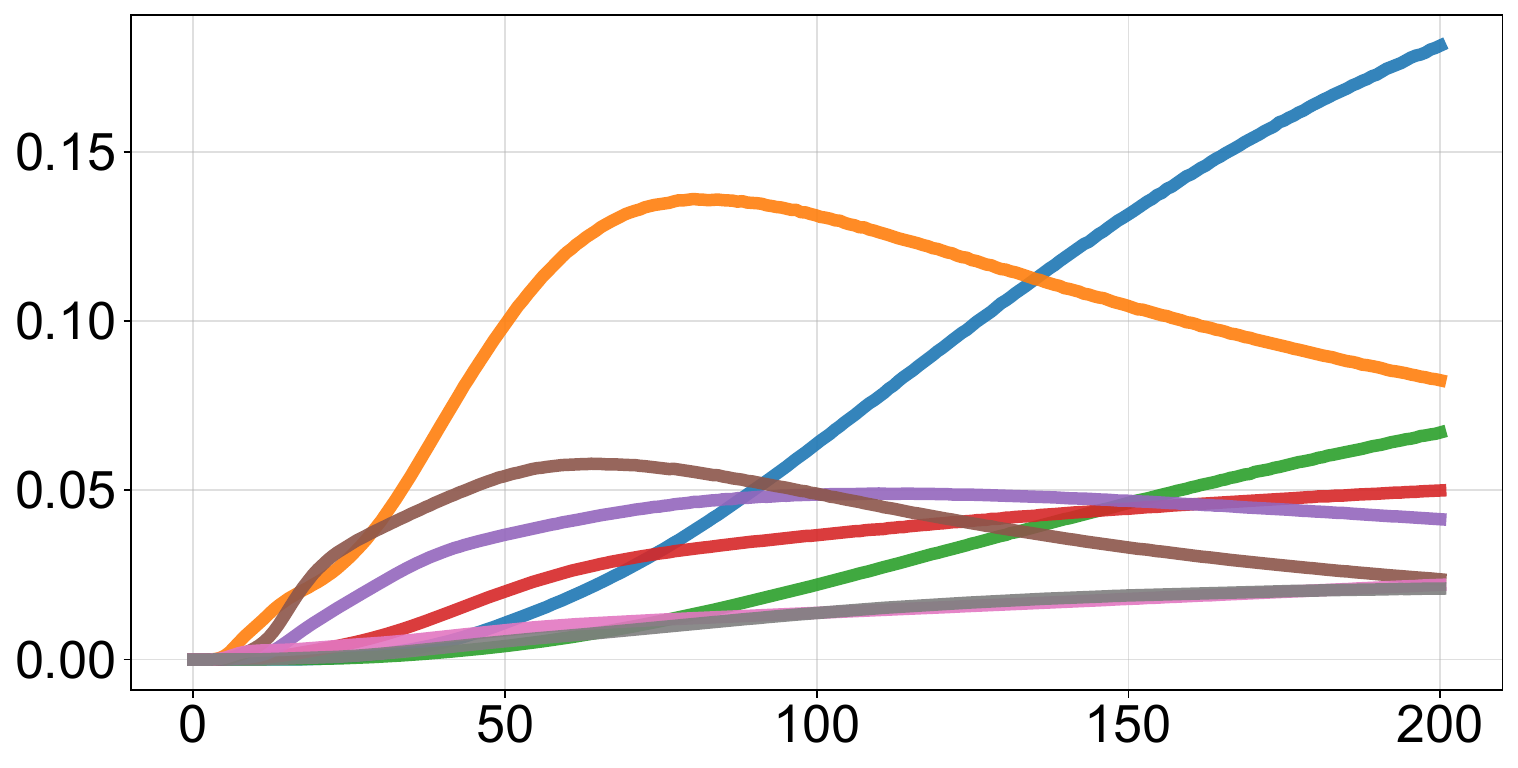}
\end{subfigure}\hfill
\begin{subfigure}[t]{0.32\textwidth}\centering
\scriptsize\textbf{Layer 35}\par\smallskip
\includegraphics[width=\linewidth]{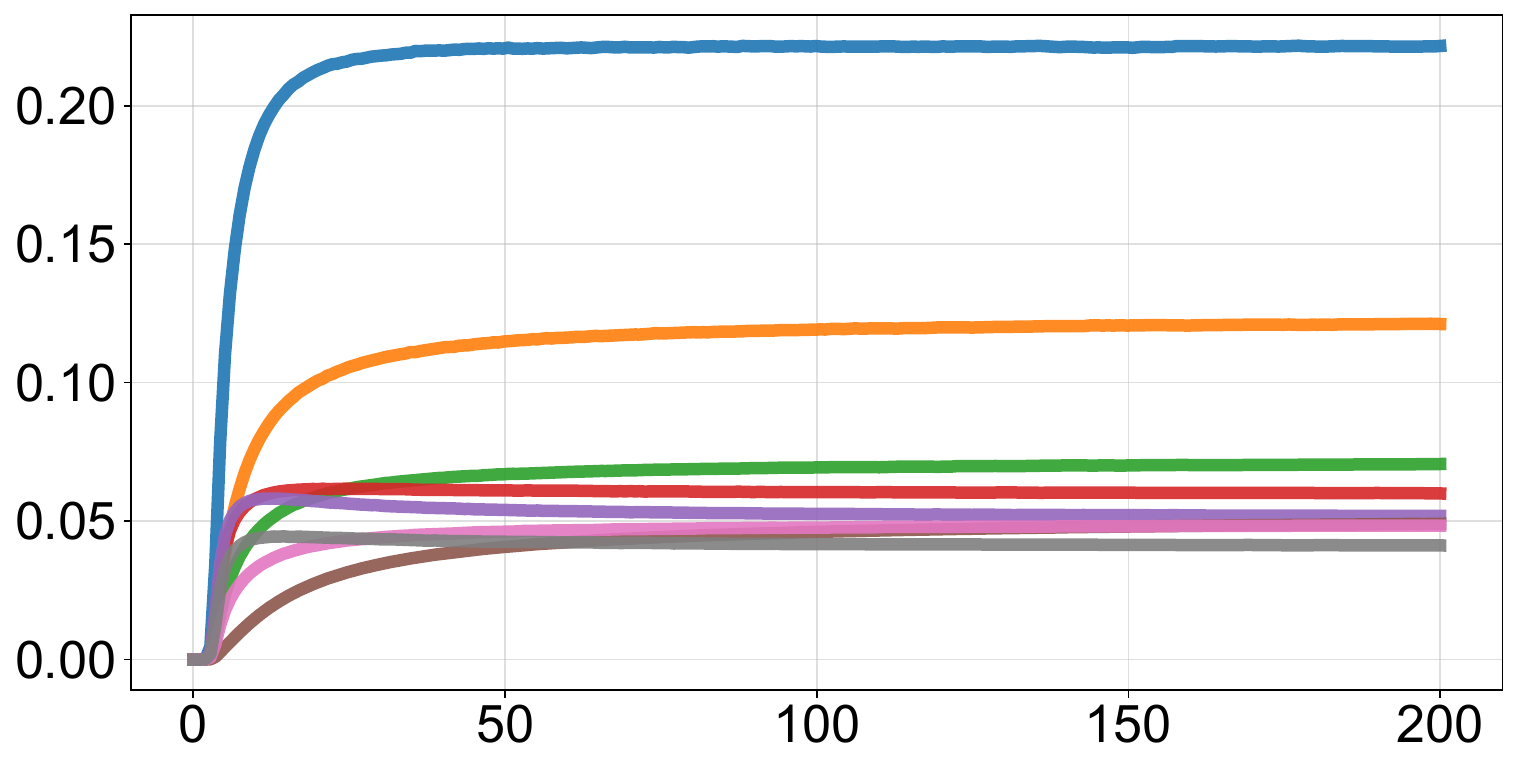}
\end{subfigure}
\rowtitle{Humorous}
\begin{subfigure}[t]{0.32\textwidth}\centering
\scriptsize\textbf{Layer 8}\par\smallskip
\includegraphics[width=\linewidth]{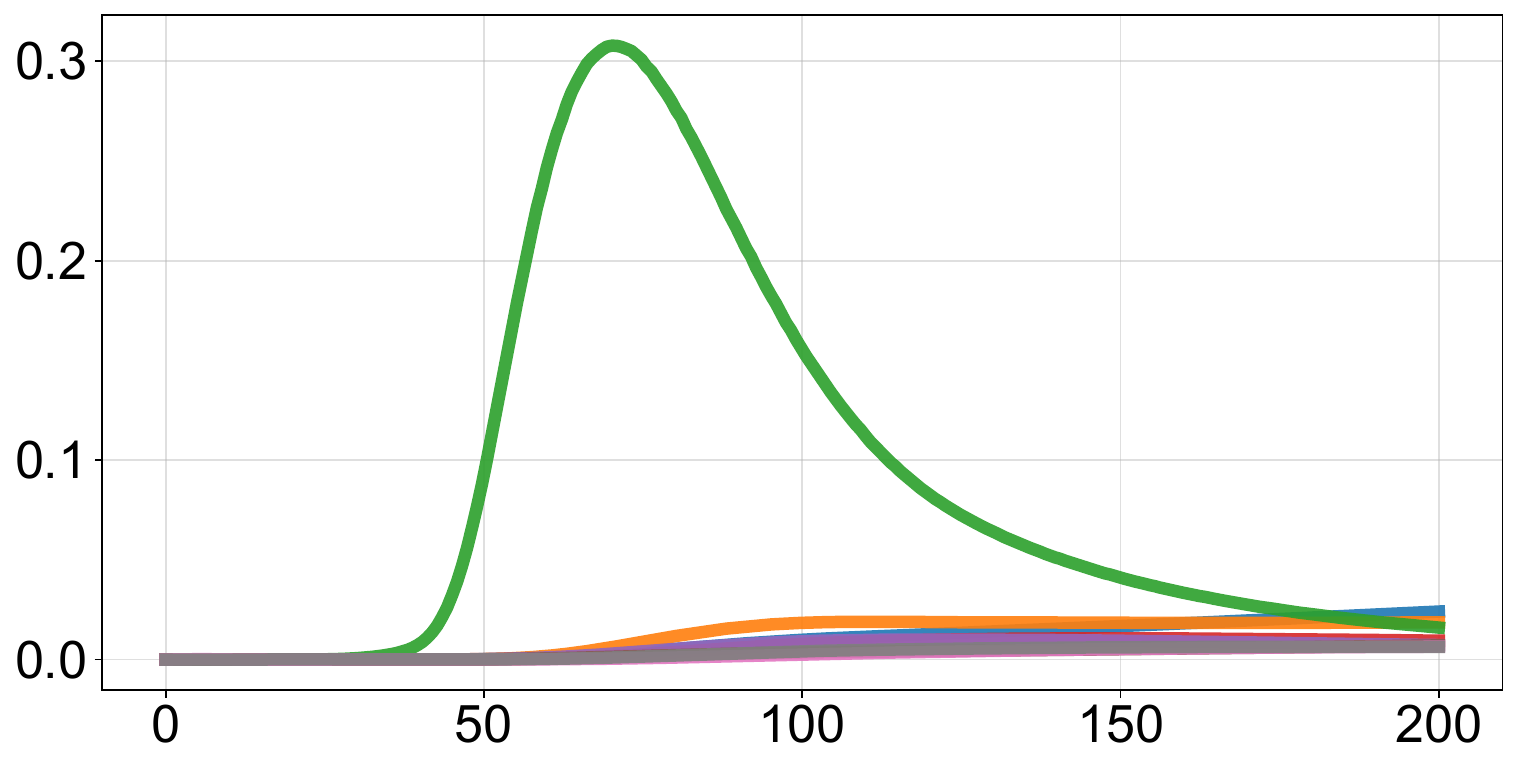}
\end{subfigure}\hfill
\begin{subfigure}[t]{0.32\textwidth}\centering
\scriptsize\textbf{Layer 19}\par\smallskip
\includegraphics[width=\linewidth]{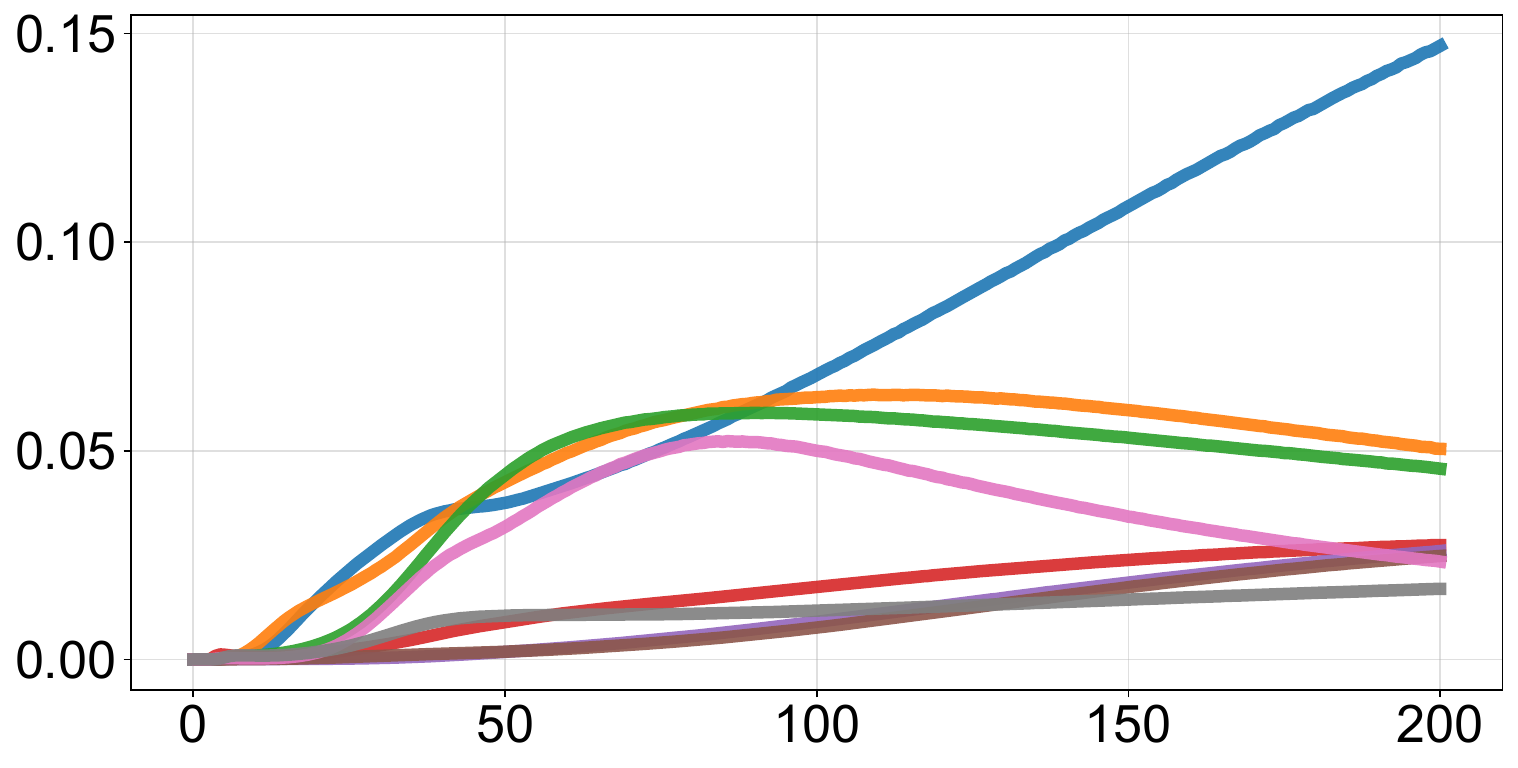}
\end{subfigure}\hfill
\begin{subfigure}[t]{0.32\textwidth}\centering
\scriptsize\textbf{Layer 35}\par\smallskip
\includegraphics[width=\linewidth]{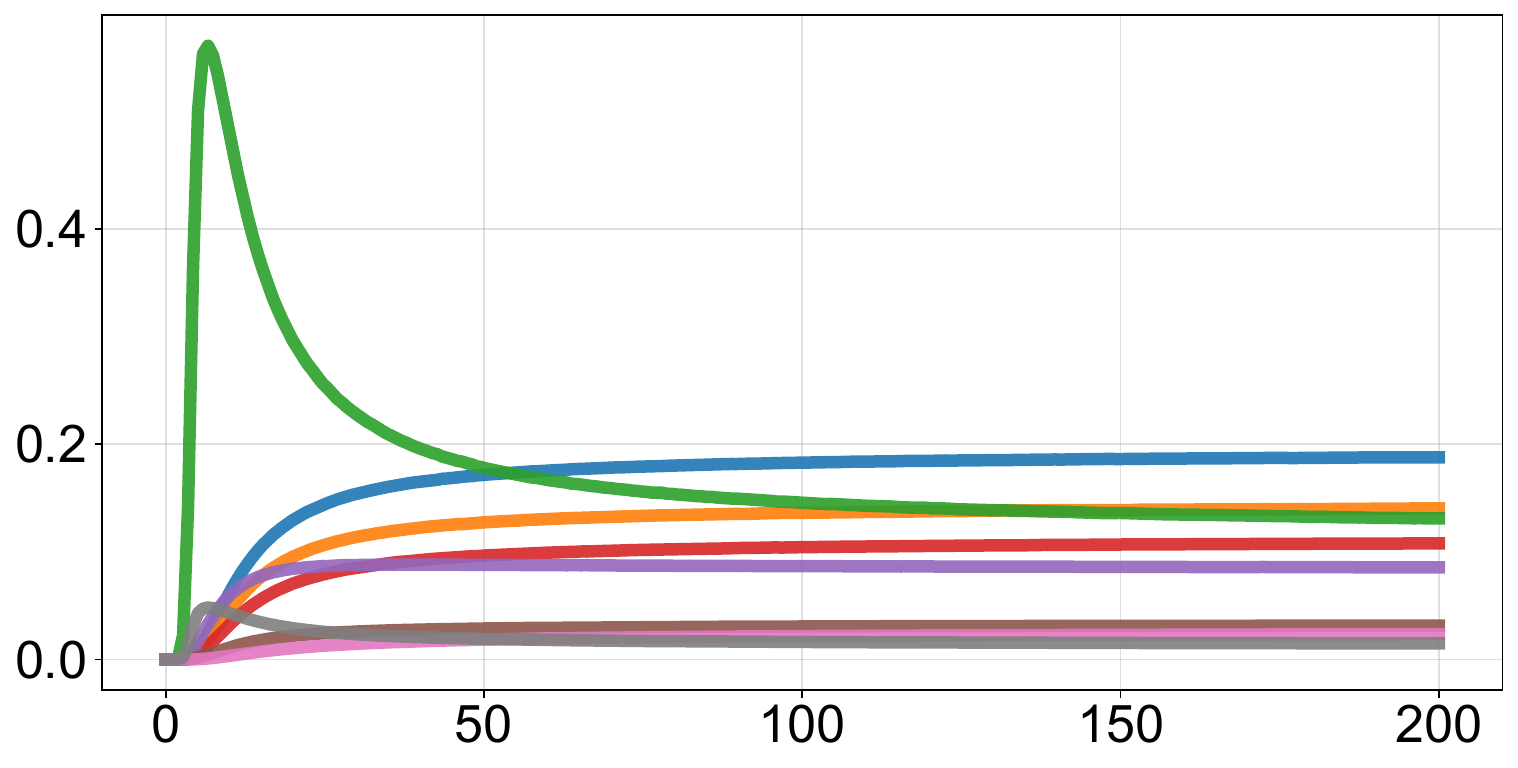}
\end{subfigure}
\rowtitle{Joy}
\begin{subfigure}[t]{0.32\textwidth}\centering
\scriptsize\textbf{Layer 8}\par\smallskip
\includegraphics[width=\linewidth]{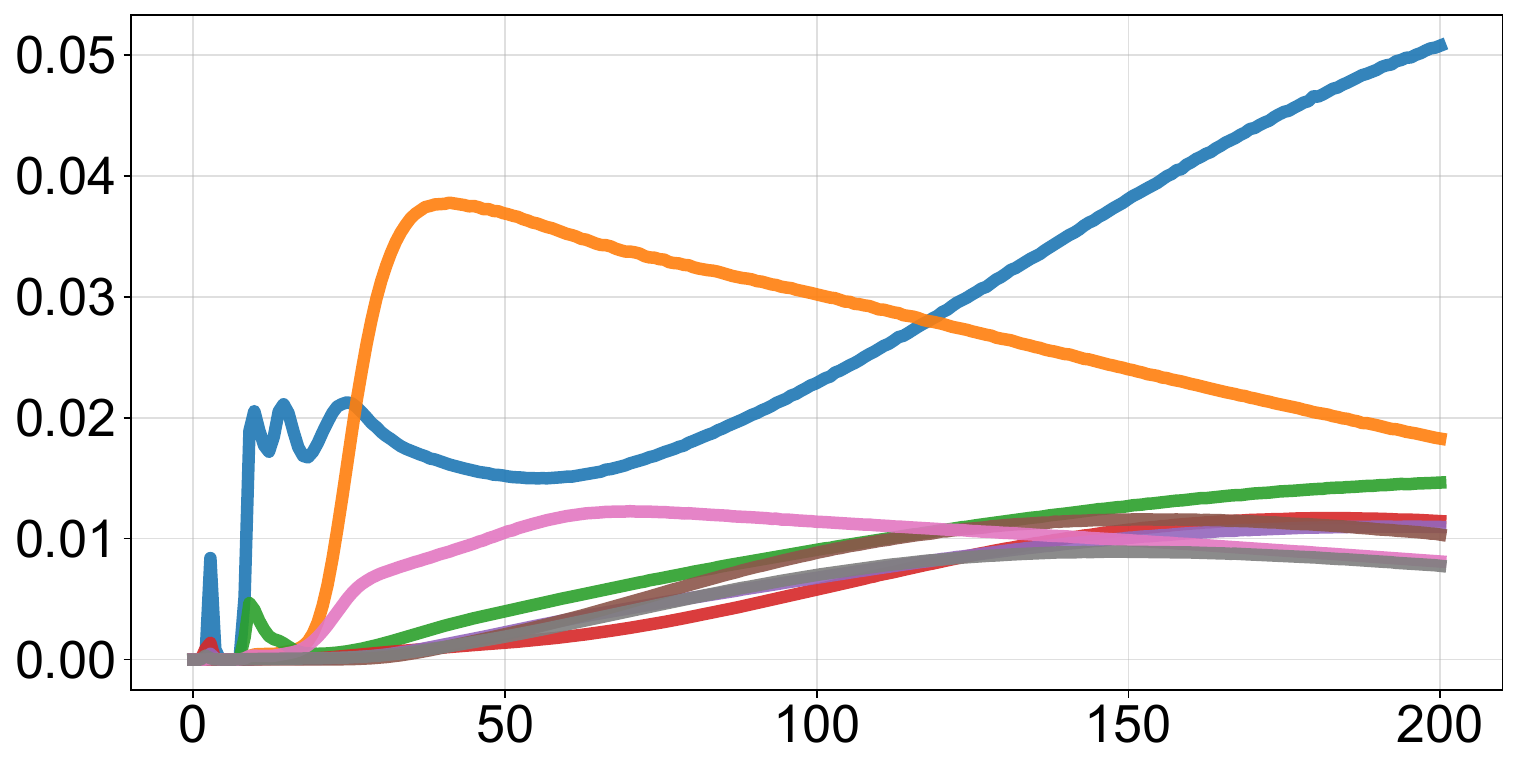}
\end{subfigure}\hfill
\begin{subfigure}[t]{0.32\textwidth}\centering
\scriptsize\textbf{Layer 19}\par\smallskip
\includegraphics[width=\linewidth]{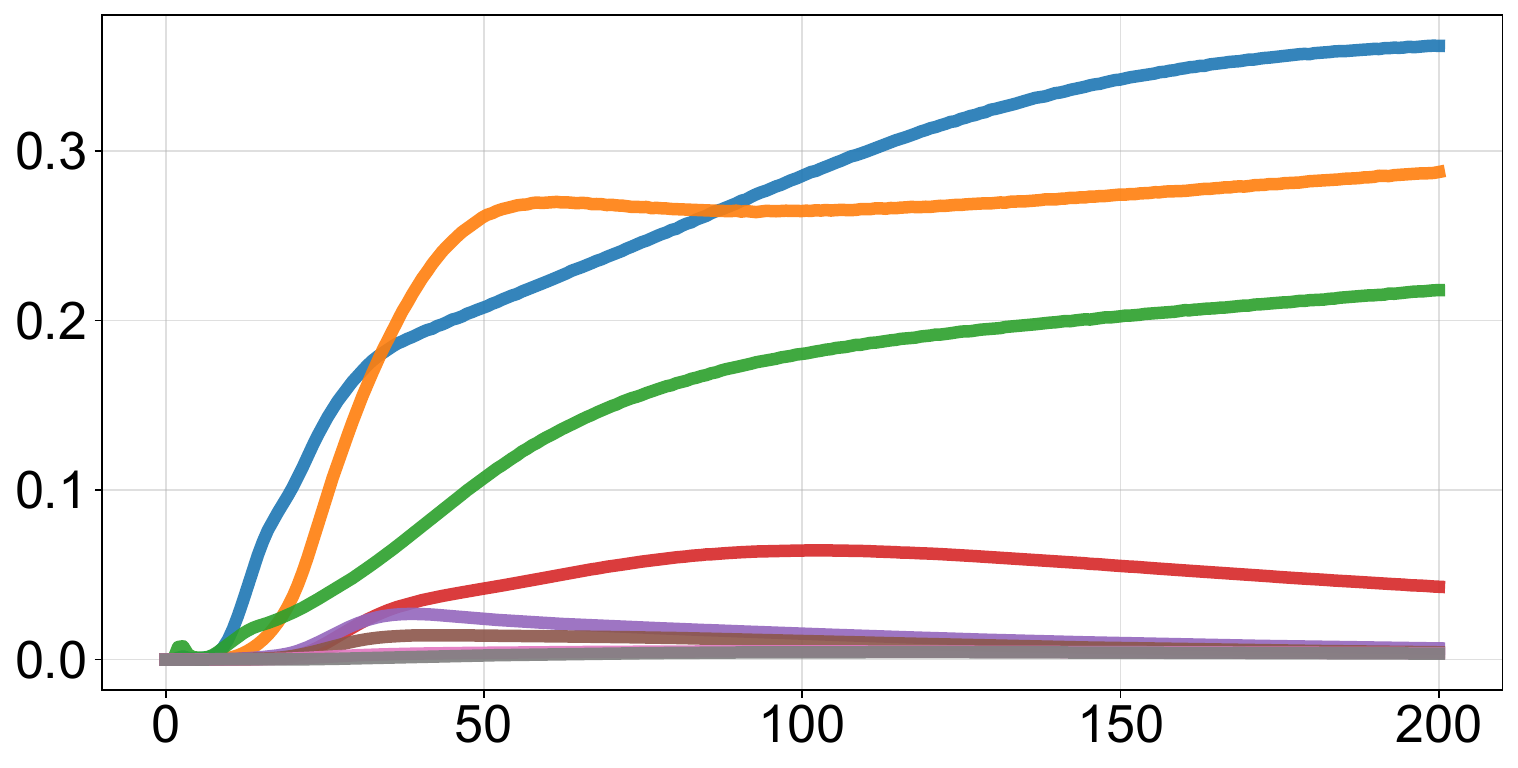}
\end{subfigure}\hfill
\begin{subfigure}[t]{0.32\textwidth}\centering
\scriptsize\textbf{Layer 35}\par\smallskip
\includegraphics[width=\linewidth]{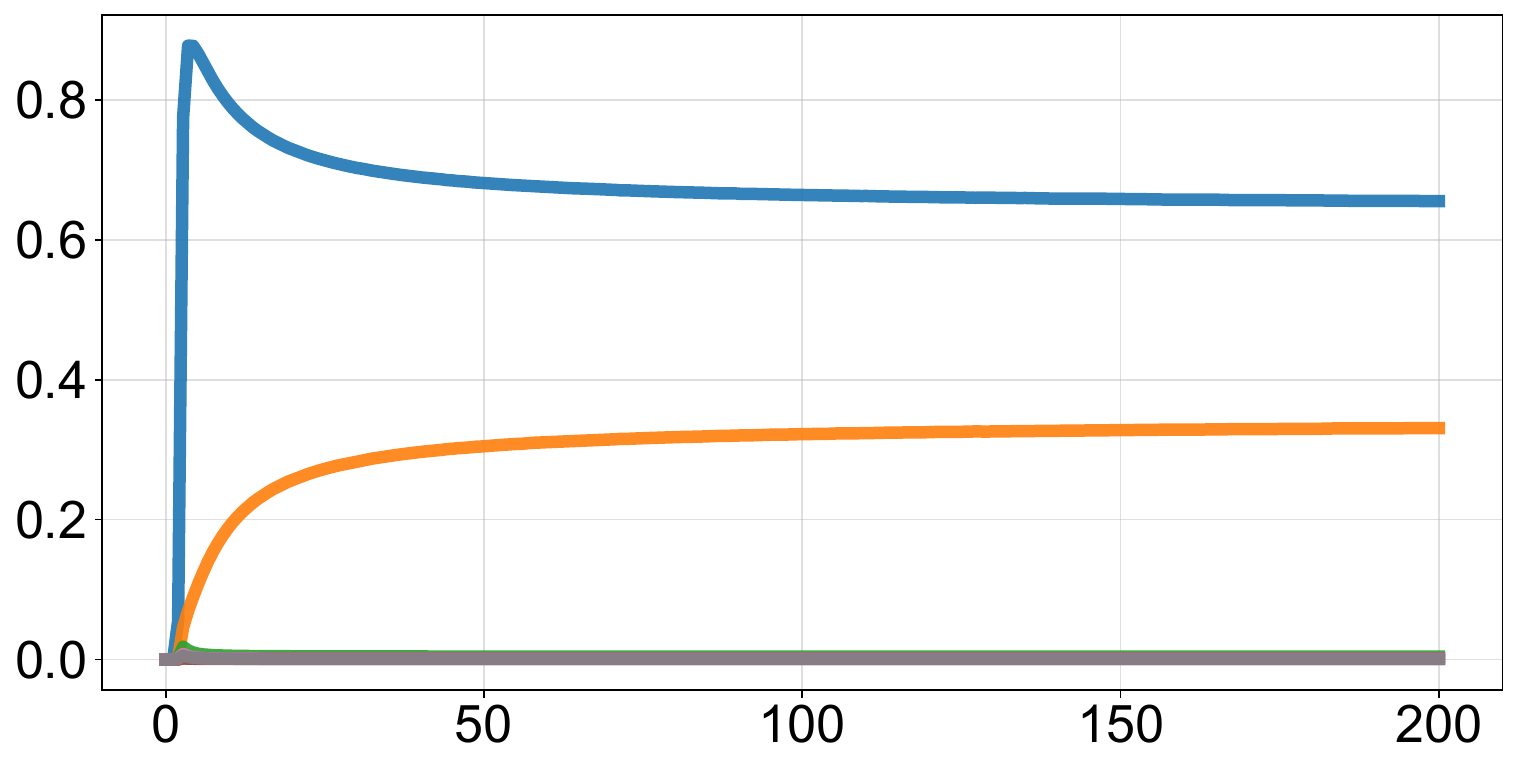}
\end{subfigure}
\caption{\label{fig:next-token-low-data-qwen}Effect of steering strength on next-token probability shifts for the original concepts on Qwen/Qwen3-8B in the low-data setting with 10 prompt pairs ($|P| = |N| = 10$). Rows correspond to the steered concept. Columns correspond to steering layers 8, 19, 35.}
\end{figure}
\clearpage
\begin{figure}[p]
\centering
\noindent{\small\textbf{Steered model:} meta-llama/Llama-2-7b-chat-hf}\par\smallskip
\rowtitle{Depression}
\begin{subfigure}[t]{0.32\textwidth}\centering
\scriptsize\textbf{Layer 8}\par\smallskip
\includegraphics[width=\linewidth]{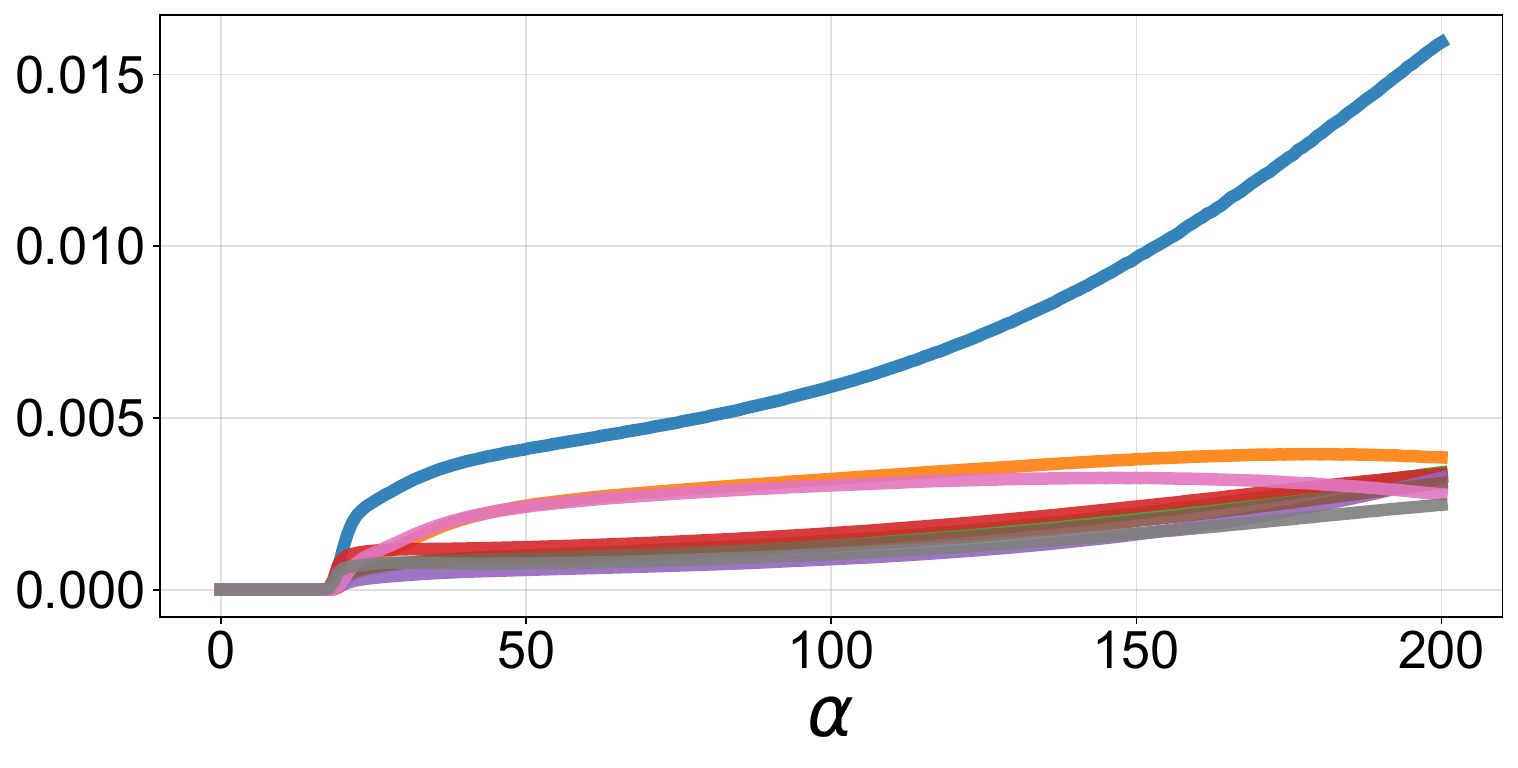}
\end{subfigure}\hfill
\begin{subfigure}[t]{0.32\textwidth}\centering
\scriptsize\textbf{Layer 17}\par\smallskip
\includegraphics[width=\linewidth]{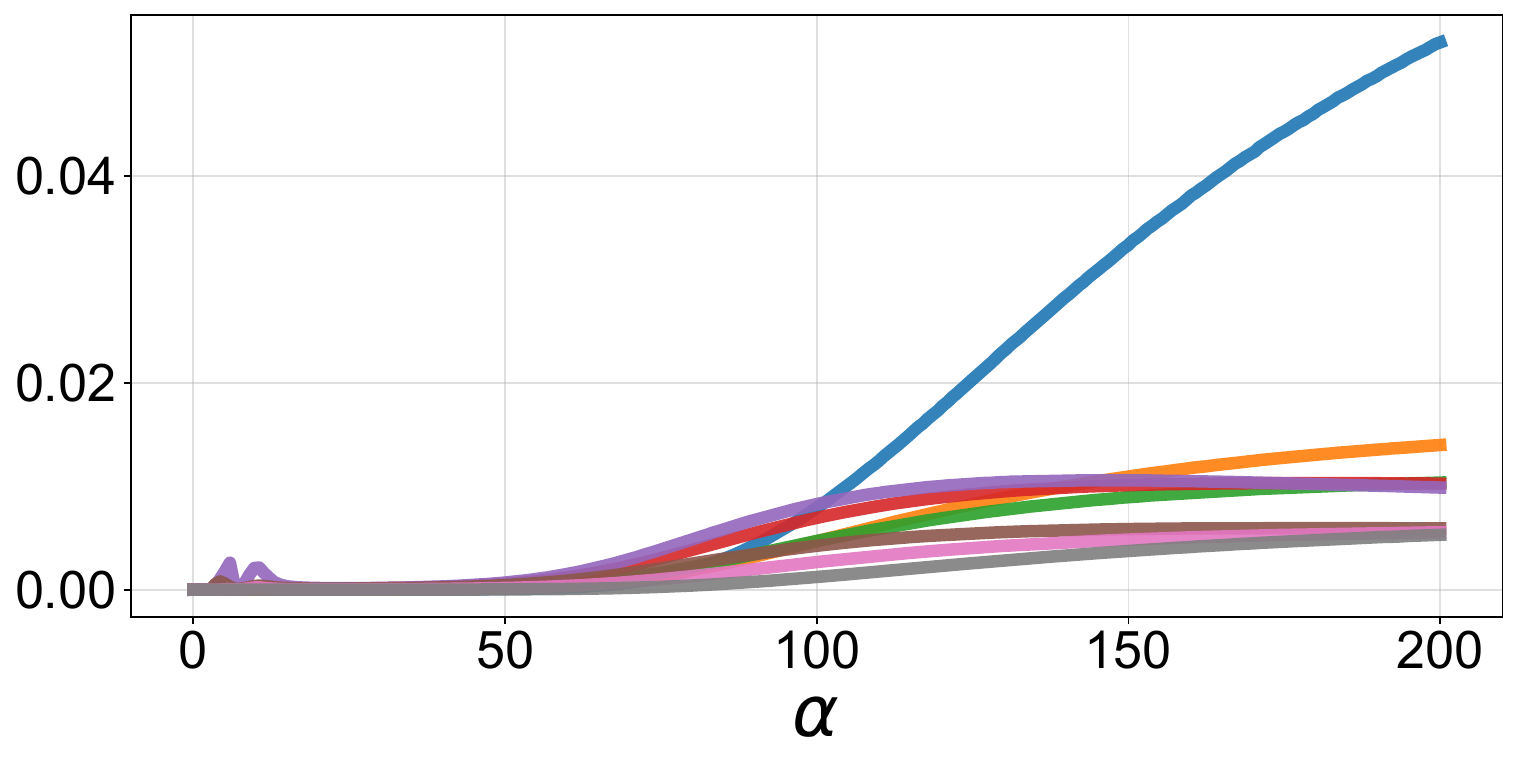}
\end{subfigure}\hfill
\begin{subfigure}[t]{0.32\textwidth}\centering
\scriptsize\textbf{Layer 31}\par\smallskip
\includegraphics[width=\linewidth]{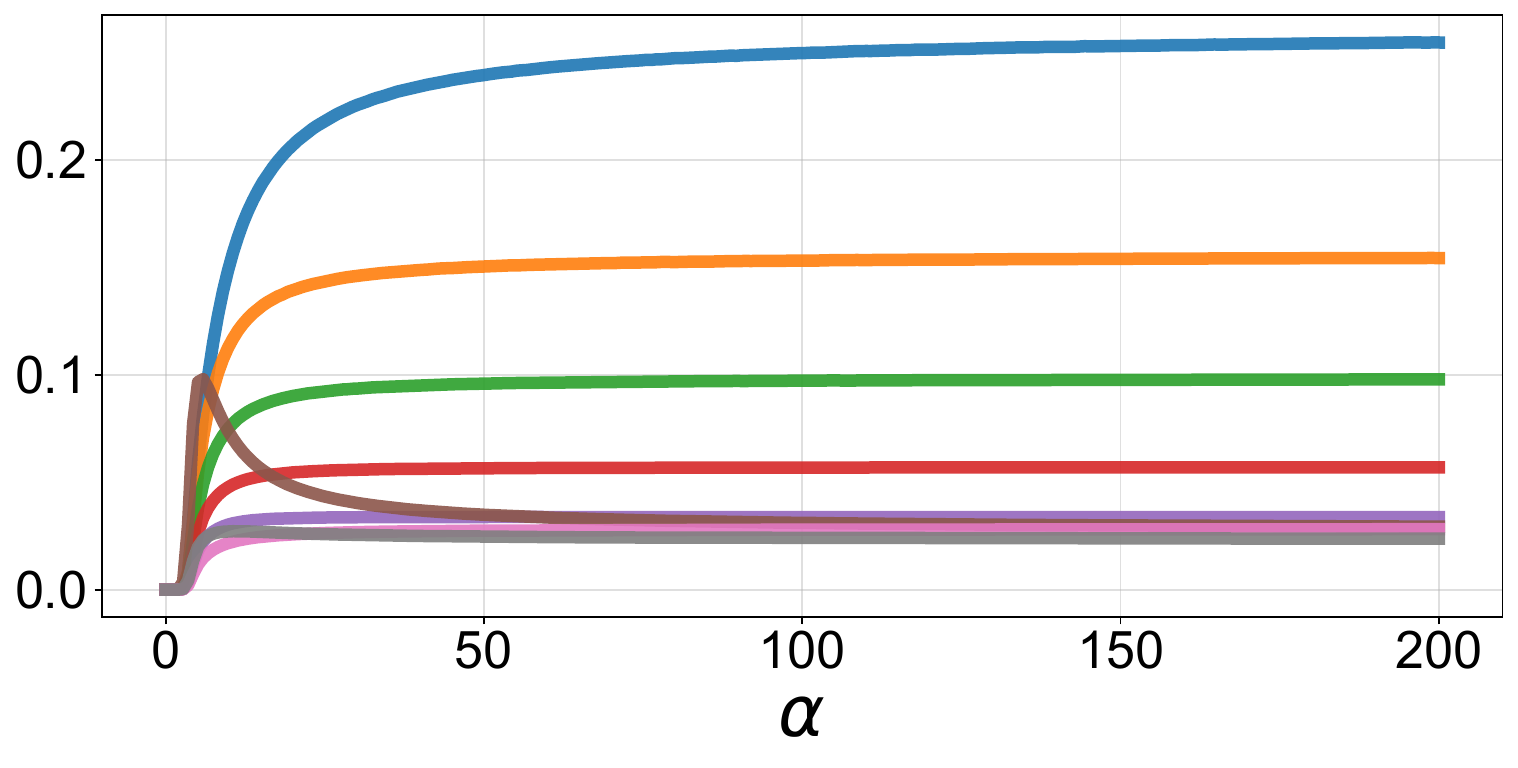}
\end{subfigure}
\rowtitle{Evil}
\begin{subfigure}[t]{0.32\textwidth}\centering
\scriptsize\textbf{Layer 8}\par\smallskip
\includegraphics[width=\linewidth]{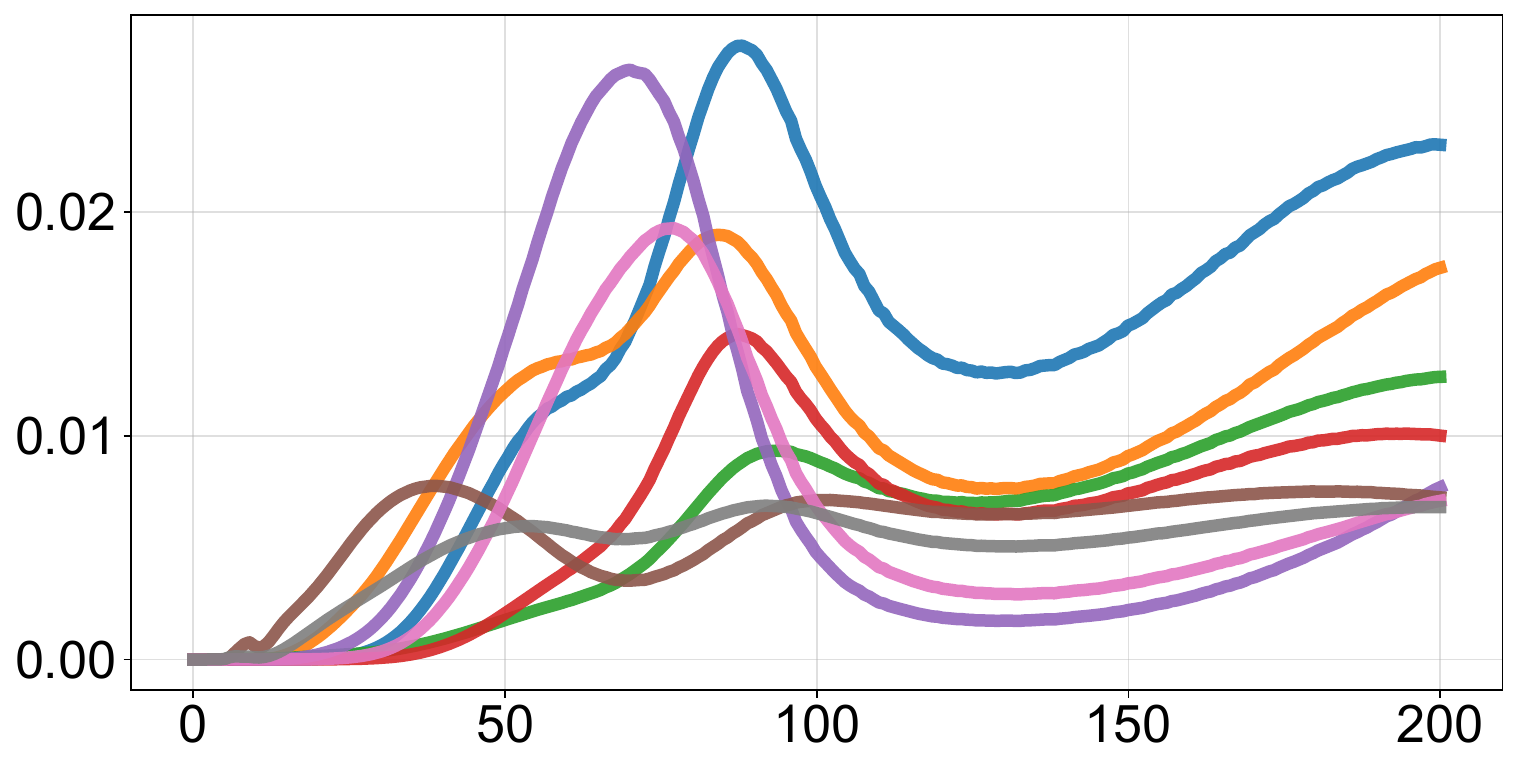}
\end{subfigure}\hfill
\begin{subfigure}[t]{0.32\textwidth}\centering
\scriptsize\textbf{Layer 17}\par\smallskip
\includegraphics[width=\linewidth]{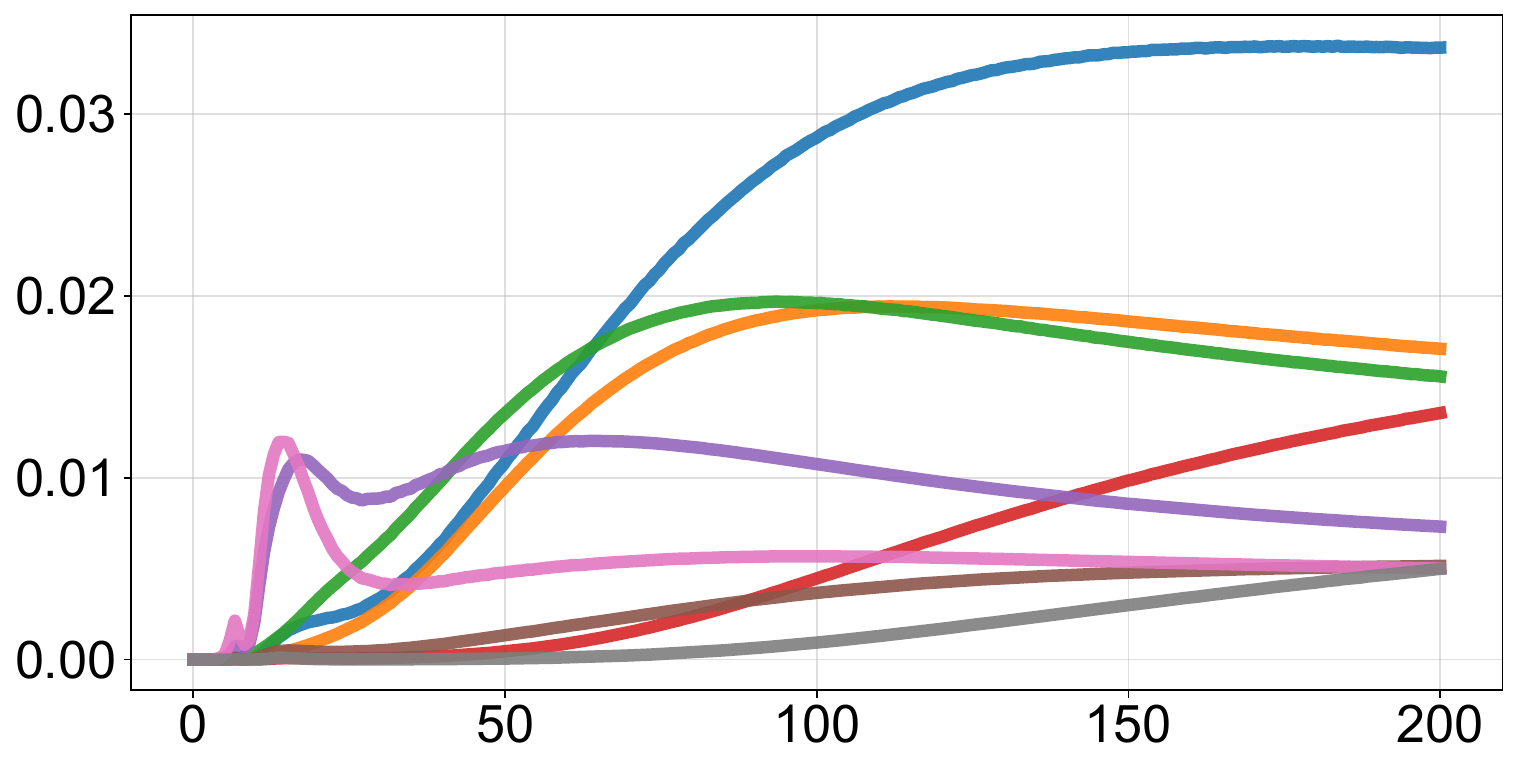}
\end{subfigure}\hfill
\begin{subfigure}[t]{0.32\textwidth}\centering
\scriptsize\textbf{Layer 31}\par\smallskip
\includegraphics[width=\linewidth]{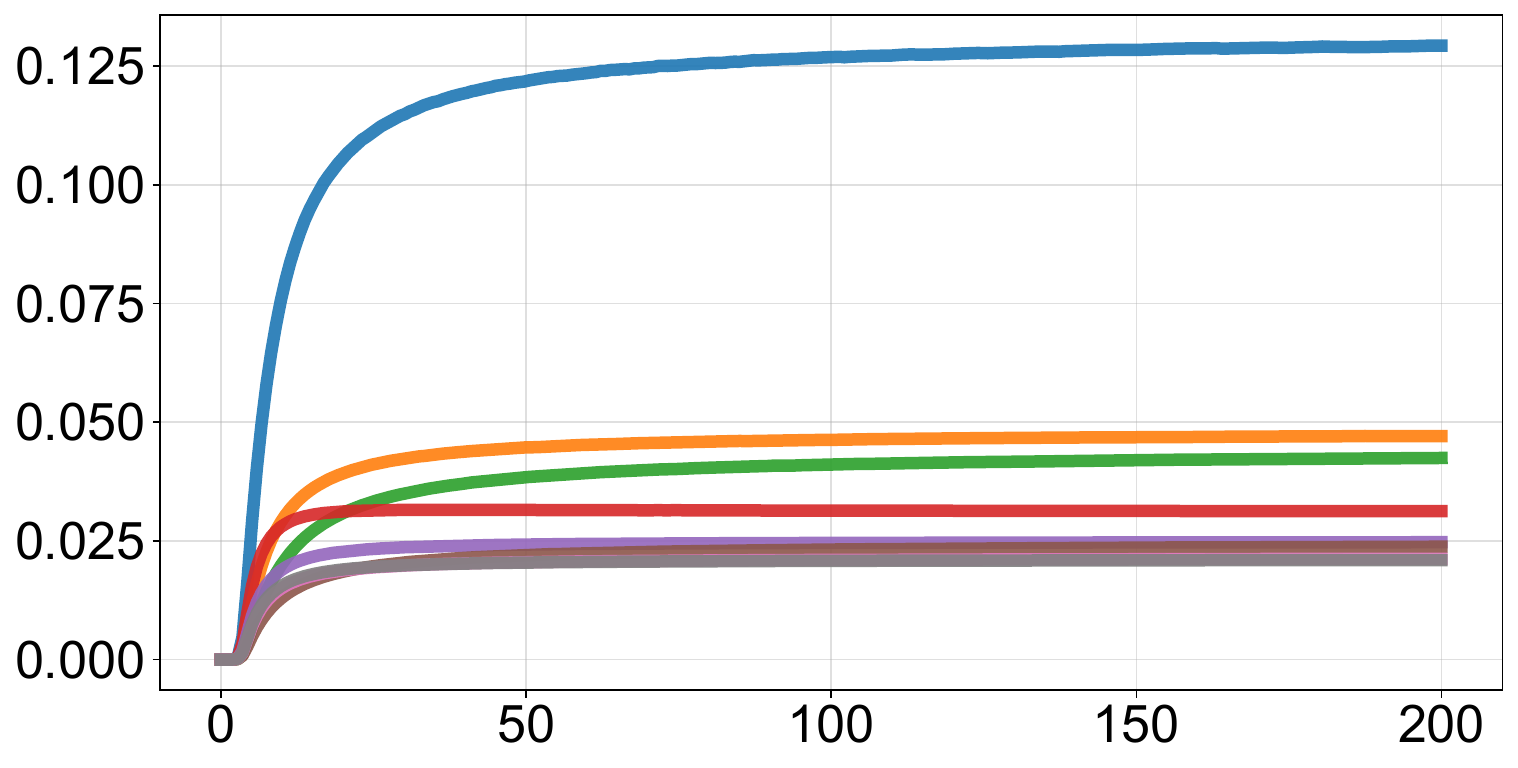}
\end{subfigure}
\rowtitle{Humorous}
\begin{subfigure}[t]{0.32\textwidth}\centering
\scriptsize\textbf{Layer 8}\par\smallskip
\includegraphics[width=\linewidth]{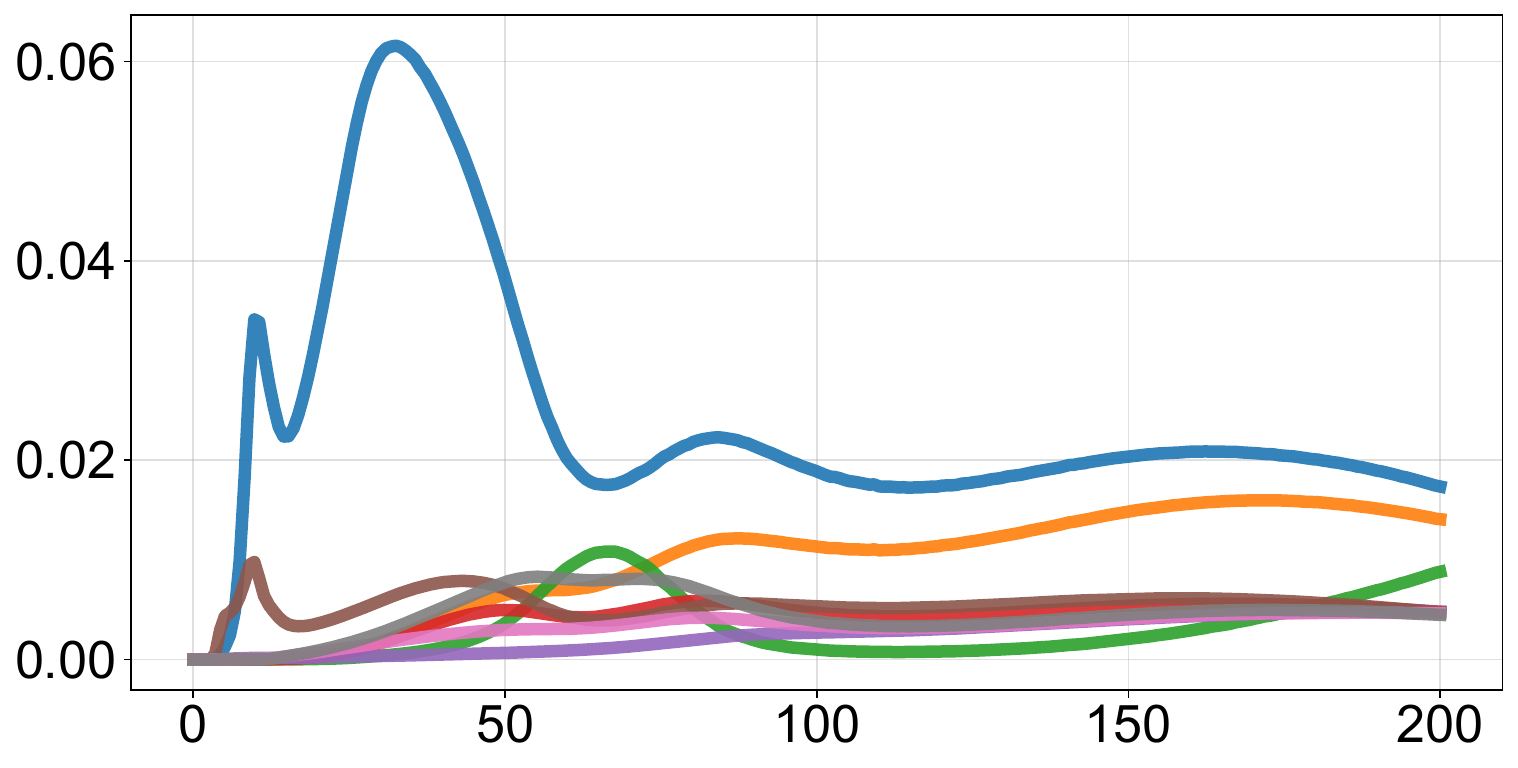}
\end{subfigure}\hfill
\begin{subfigure}[t]{0.32\textwidth}\centering
\scriptsize\textbf{Layer 17}\par\smallskip
\includegraphics[width=\linewidth]{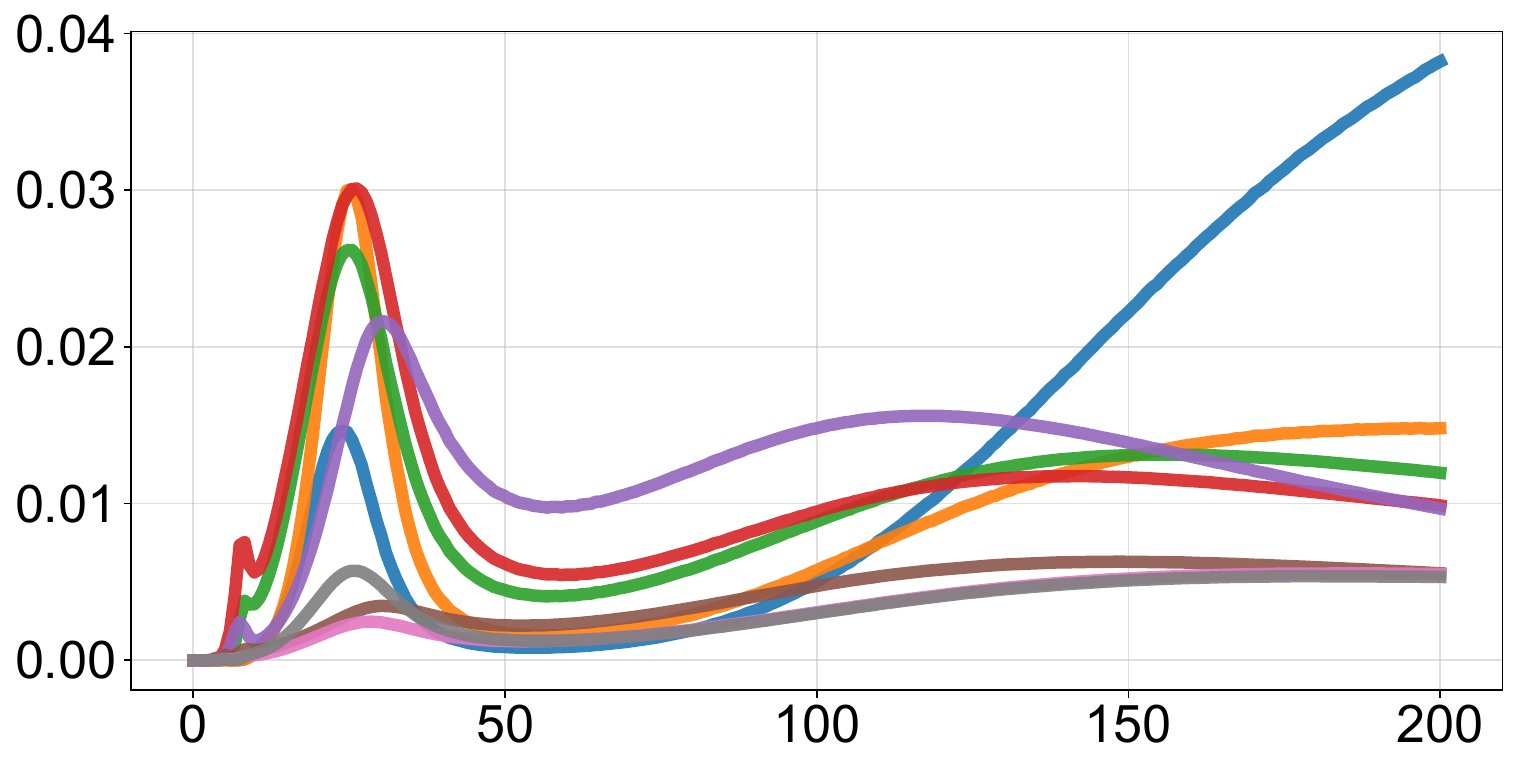}
\end{subfigure}\hfill
\begin{subfigure}[t]{0.32\textwidth}\centering
\scriptsize\textbf{Layer 31}\par\smallskip
\includegraphics[width=\linewidth]{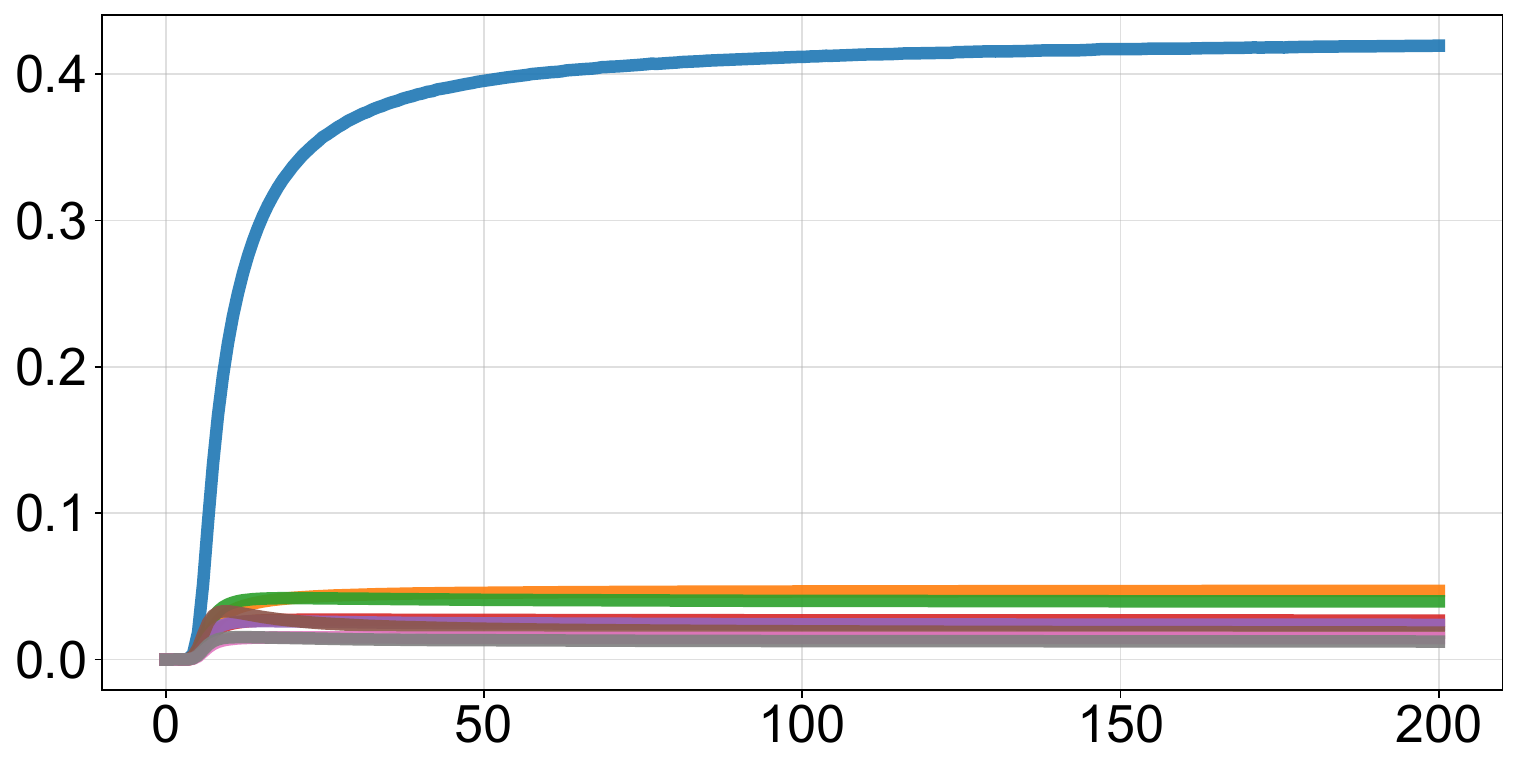}
\end{subfigure}
\rowtitle{Joy}
\begin{subfigure}[t]{0.32\textwidth}\centering
\scriptsize\textbf{Layer 8}\par\smallskip
\includegraphics[width=\linewidth]{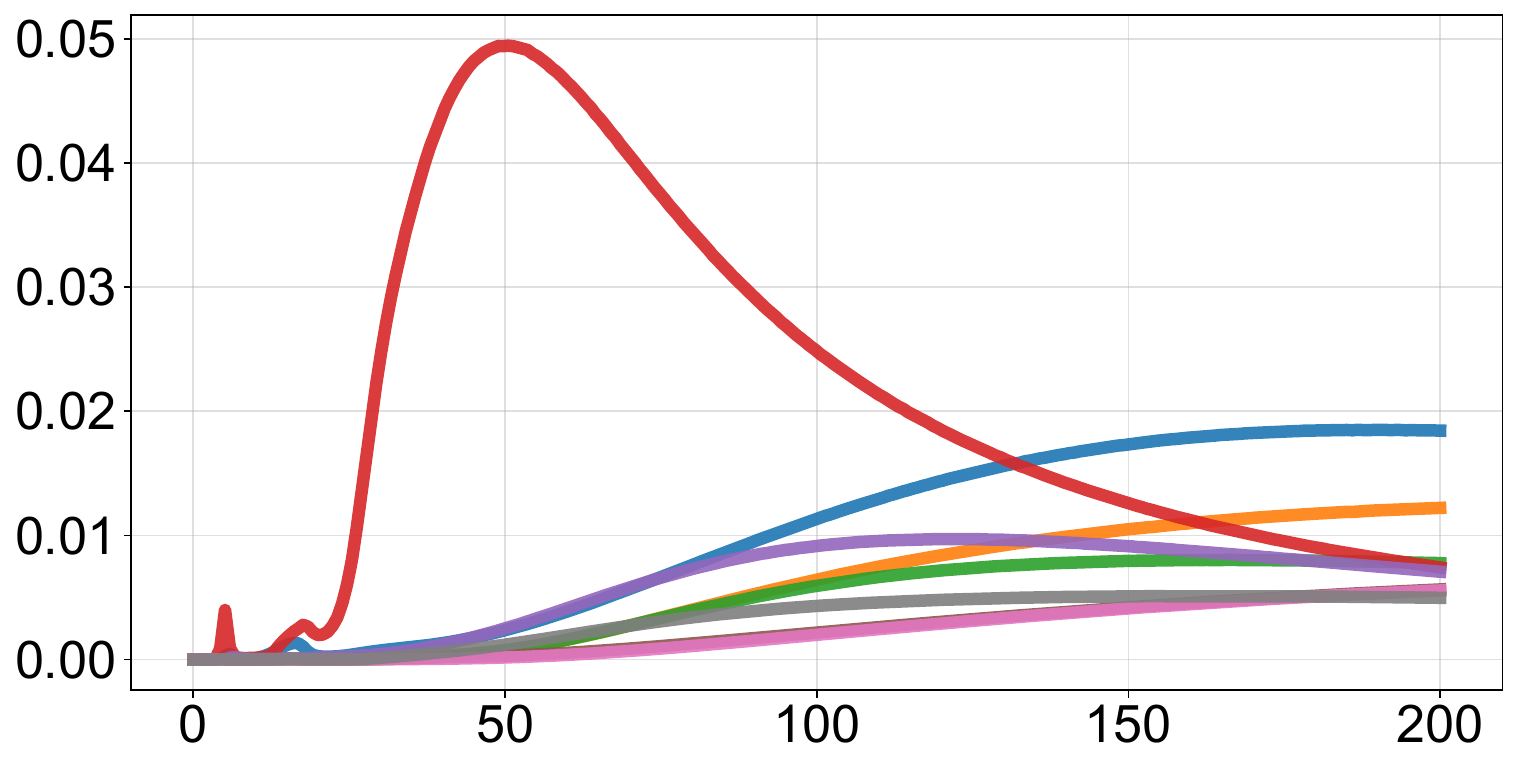}
\end{subfigure}\hfill
\begin{subfigure}[t]{0.32\textwidth}\centering
\scriptsize\textbf{Layer 17}\par\smallskip
\includegraphics[width=\linewidth]{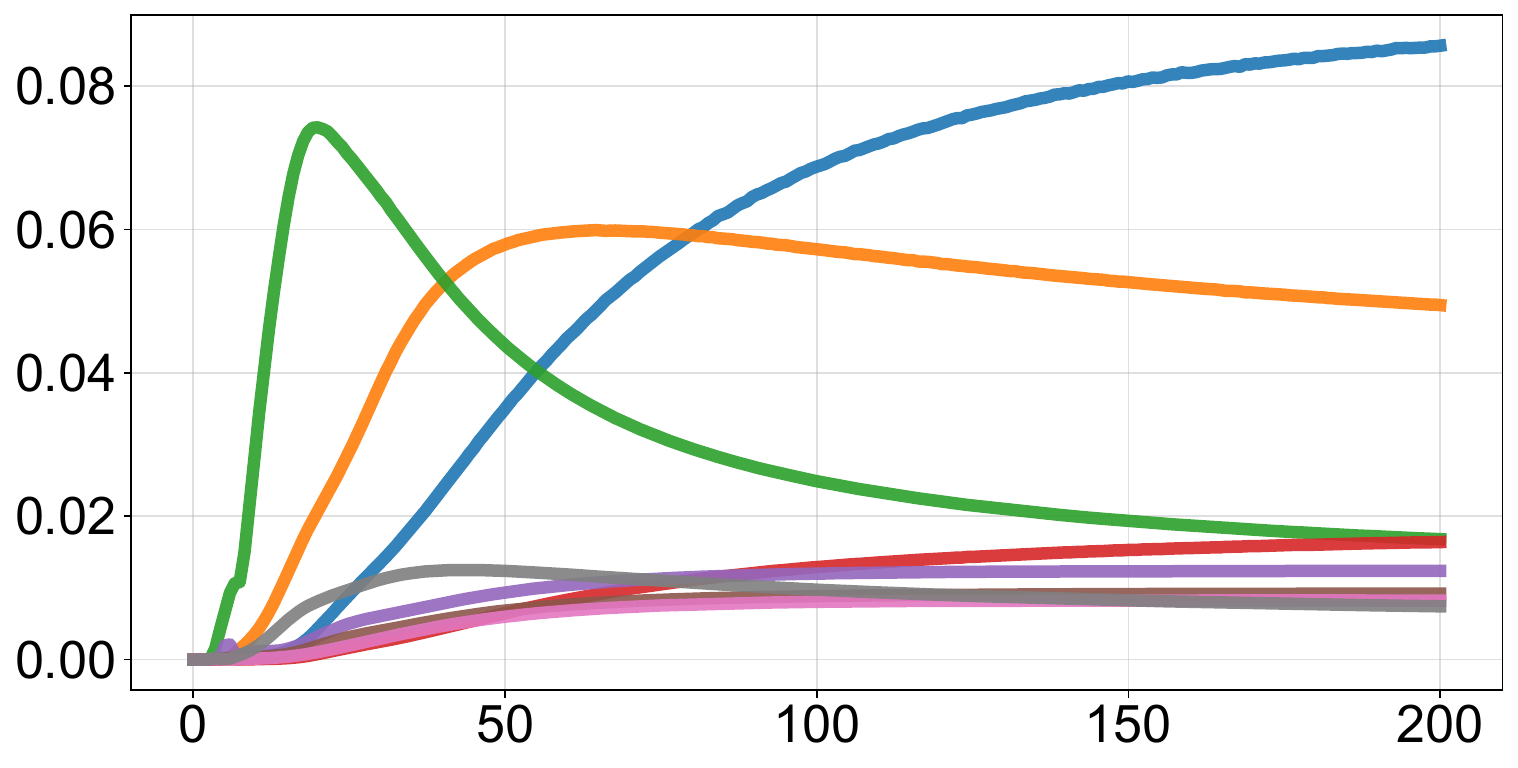}
\end{subfigure}\hfill
\begin{subfigure}[t]{0.32\textwidth}\centering
\scriptsize\textbf{Layer 31}\par\smallskip
\includegraphics[width=\linewidth]{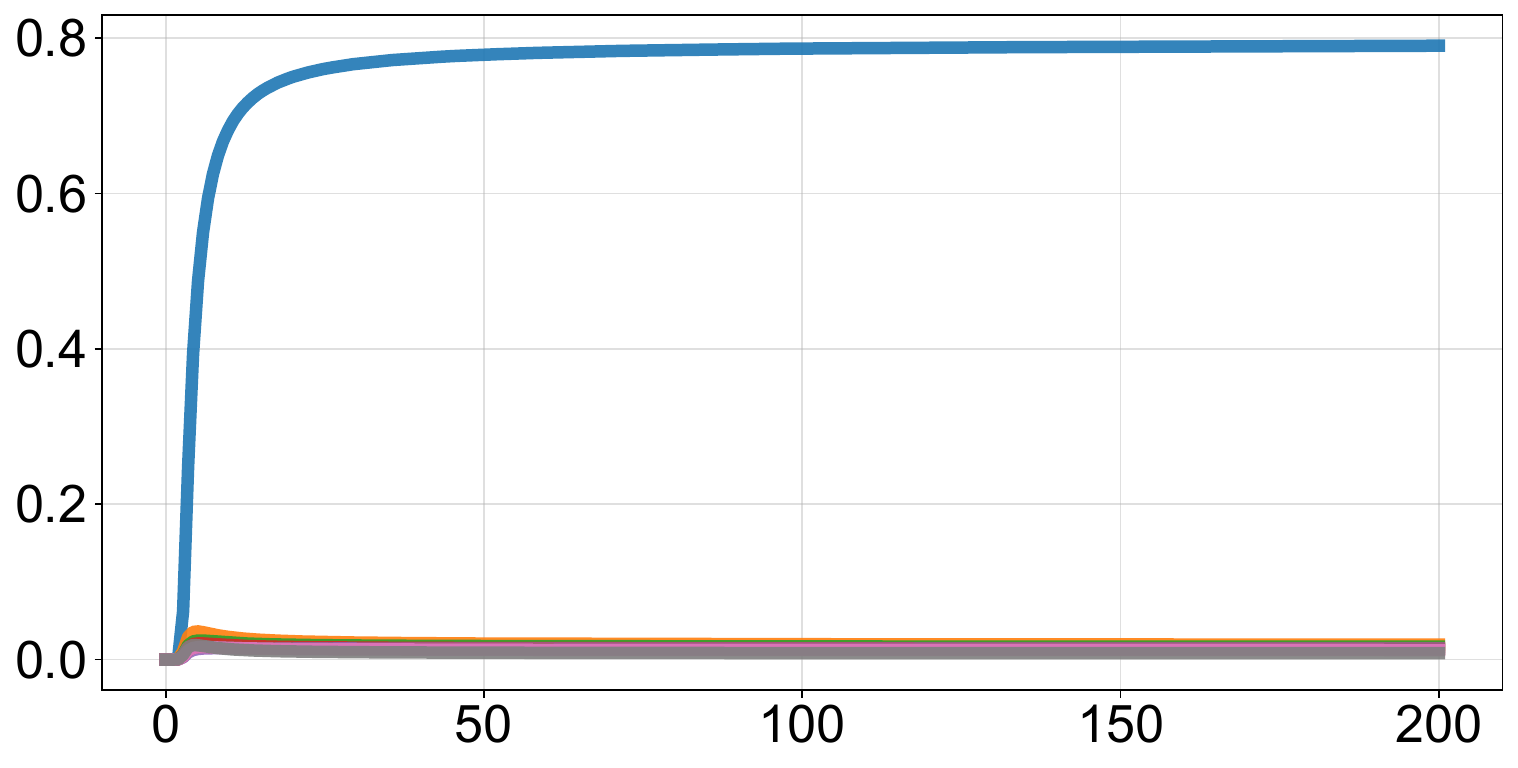}
\end{subfigure}
\caption{\label{fig:next-token-low-data-llama}Effect of steering strength on next-token probability shifts for the original concepts on meta-llama/Llama-2-7b-chat-hf in the low-data setting with 10 prompt pairs ($|P| = |N| = 10$). Rows correspond to the steered concept. Columns correspond to steering layers 8, 17, 31.}
\end{figure}
\clearpage
\begin{figure}[p]
\centering
\noindent{\small\textbf{Steered model:} openai-community/gpt2-large}\par\smallskip
\rowtitle{Depression}
\begin{subfigure}[t]{0.32\textwidth}\centering
\scriptsize\textbf{Layer 5}\par\smallskip
\includegraphics[width=\linewidth]{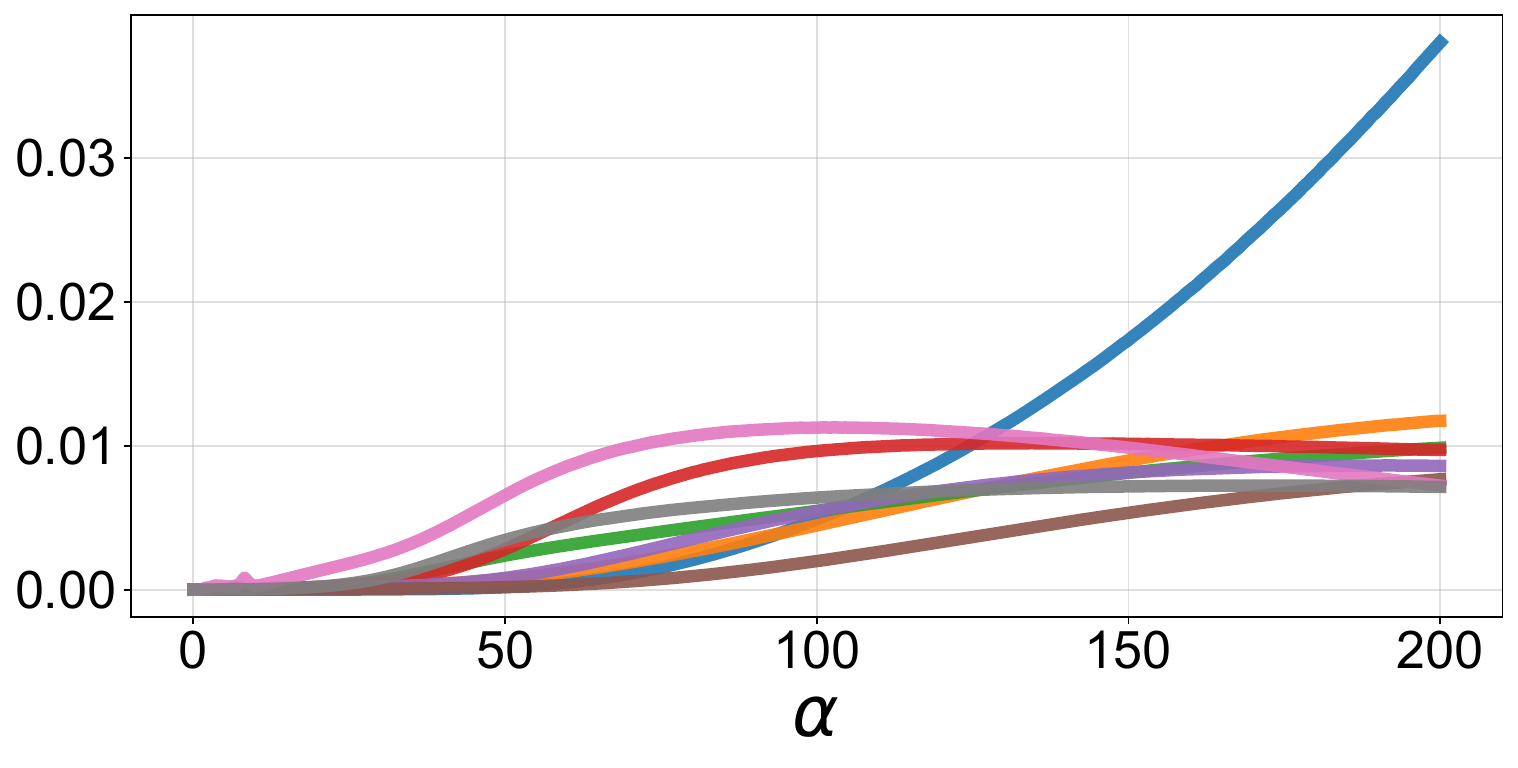}
\end{subfigure}\hfill
\begin{subfigure}[t]{0.32\textwidth}\centering
\scriptsize\textbf{Layer 25}\par\smallskip
\includegraphics[width=\linewidth]{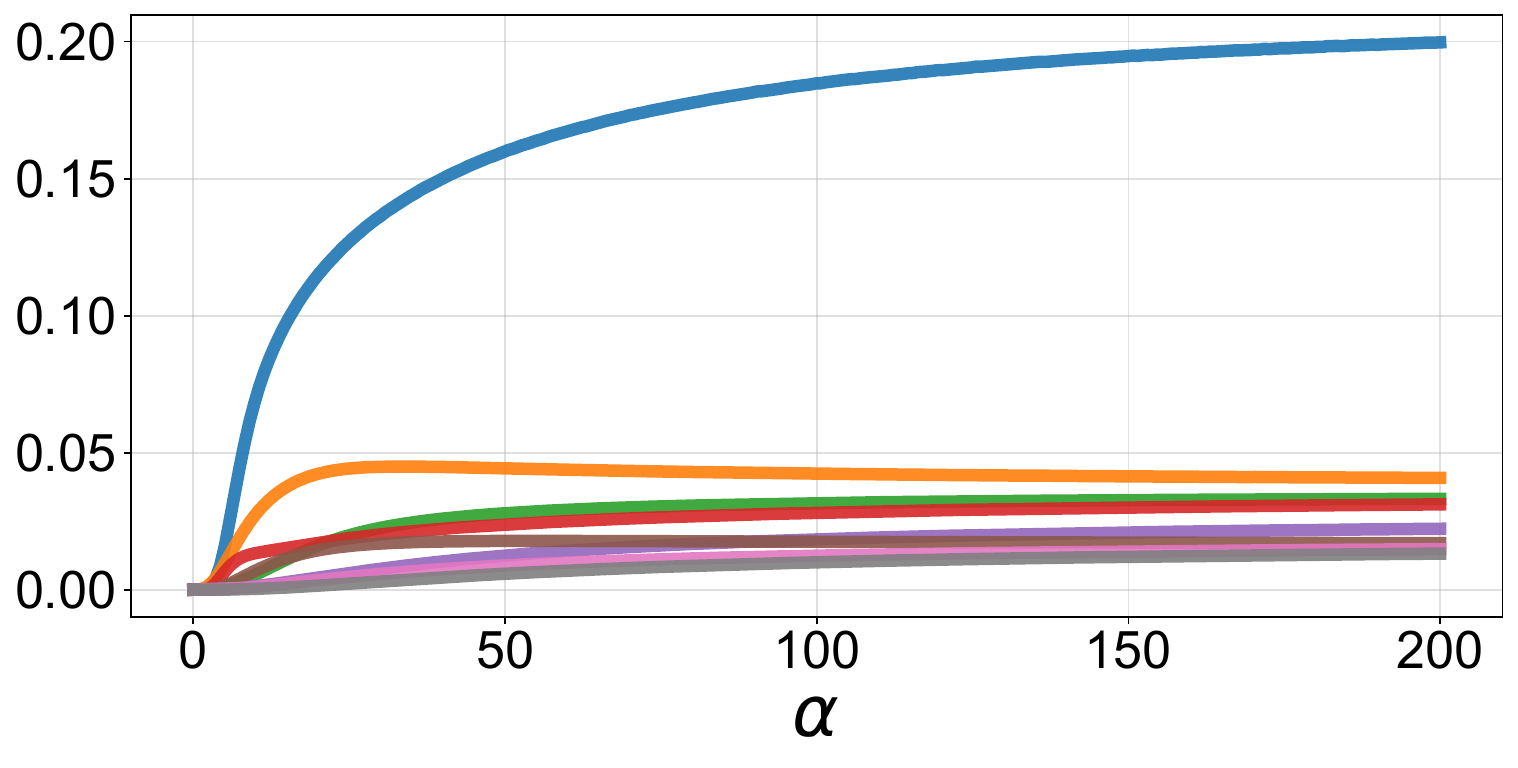}
\end{subfigure}\hfill
\begin{subfigure}[t]{0.32\textwidth}\centering
\scriptsize\textbf{Layer 35}\par\smallskip
\includegraphics[width=\linewidth]{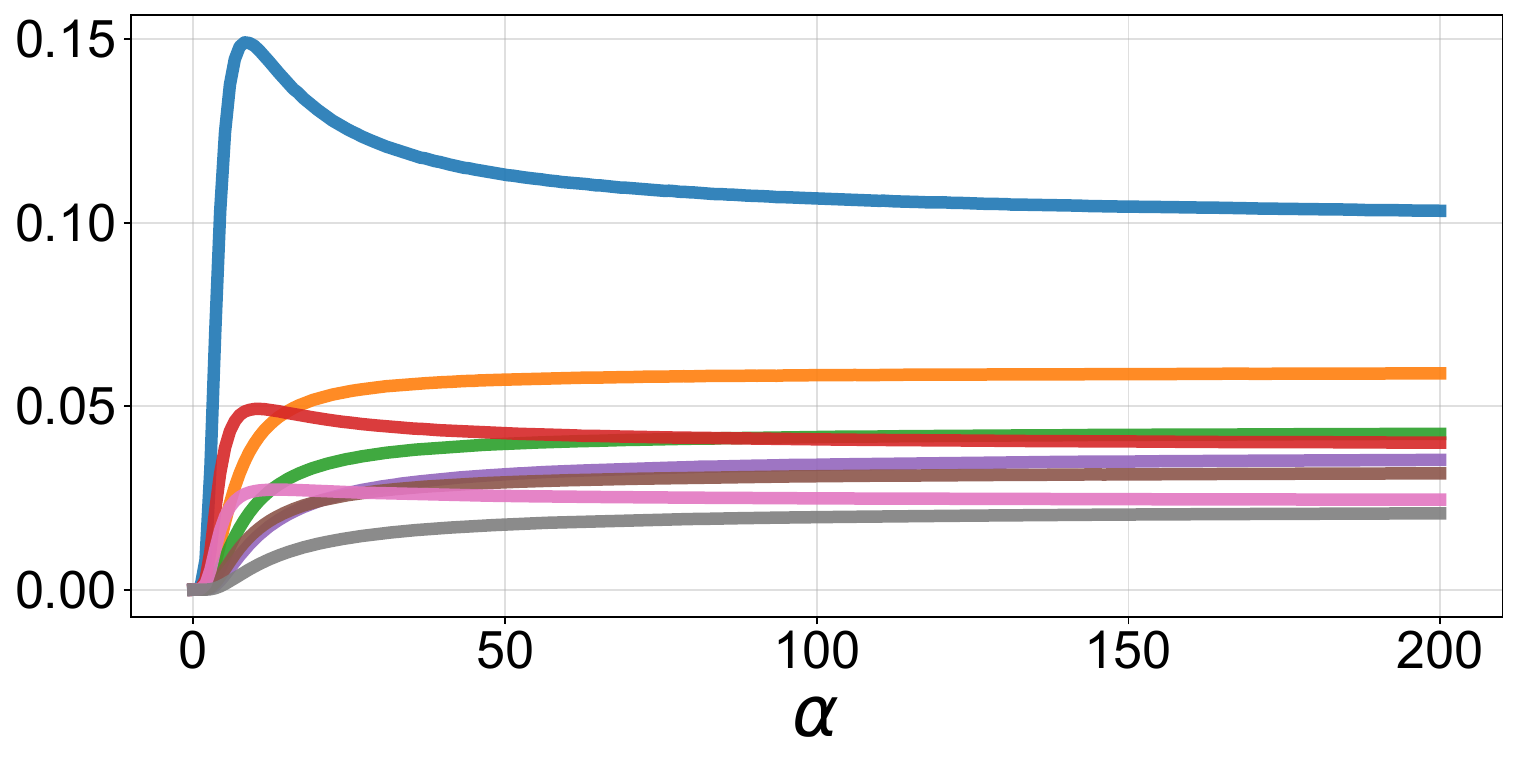}
\end{subfigure}
\rowtitle{Evil}
\begin{subfigure}[t]{0.32\textwidth}\centering
\scriptsize\textbf{Layer 5}\par\smallskip
\includegraphics[width=\linewidth]{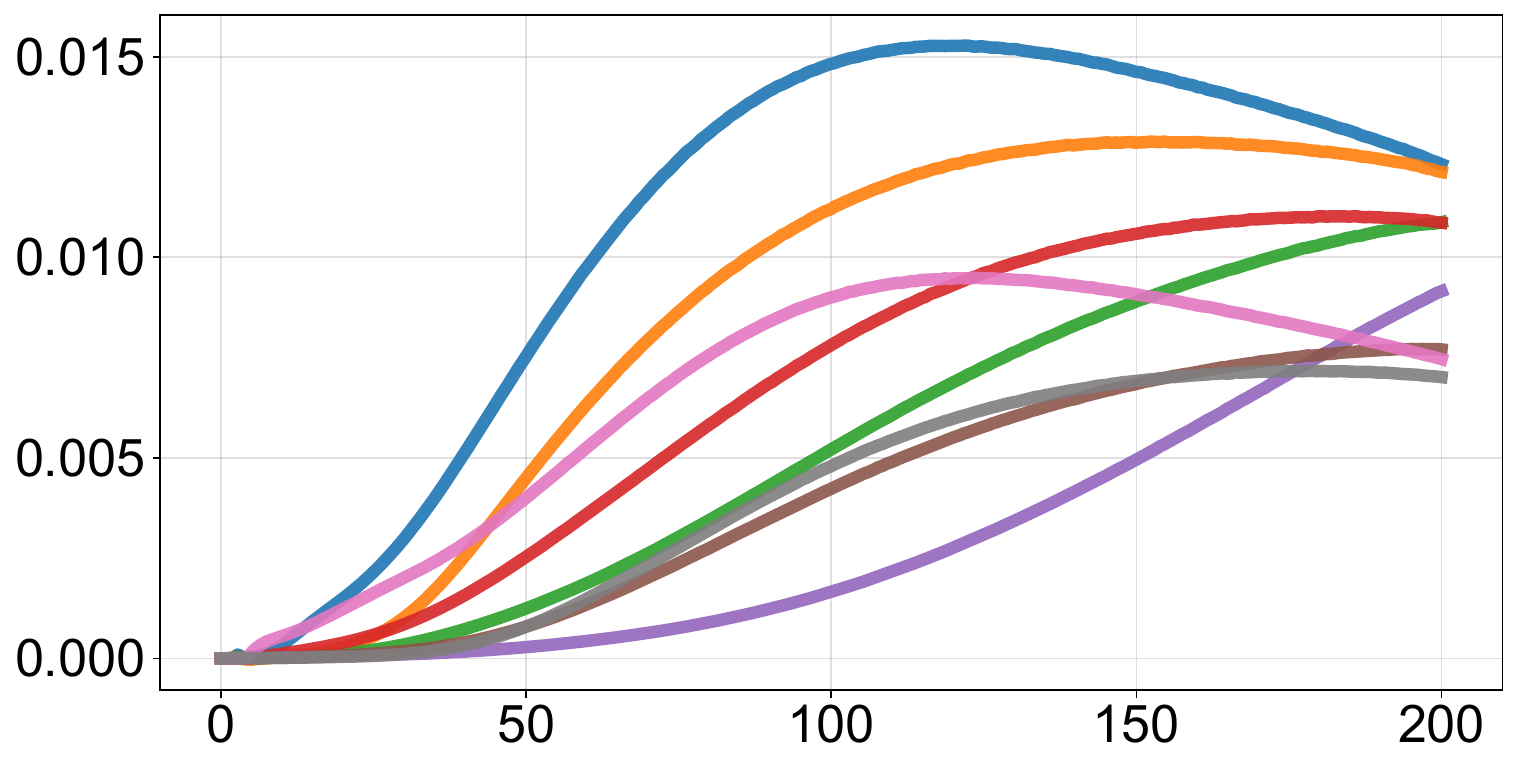}
\end{subfigure}\hfill
\begin{subfigure}[t]{0.32\textwidth}\centering
\scriptsize\textbf{Layer 25}\par\smallskip
\includegraphics[width=\linewidth]{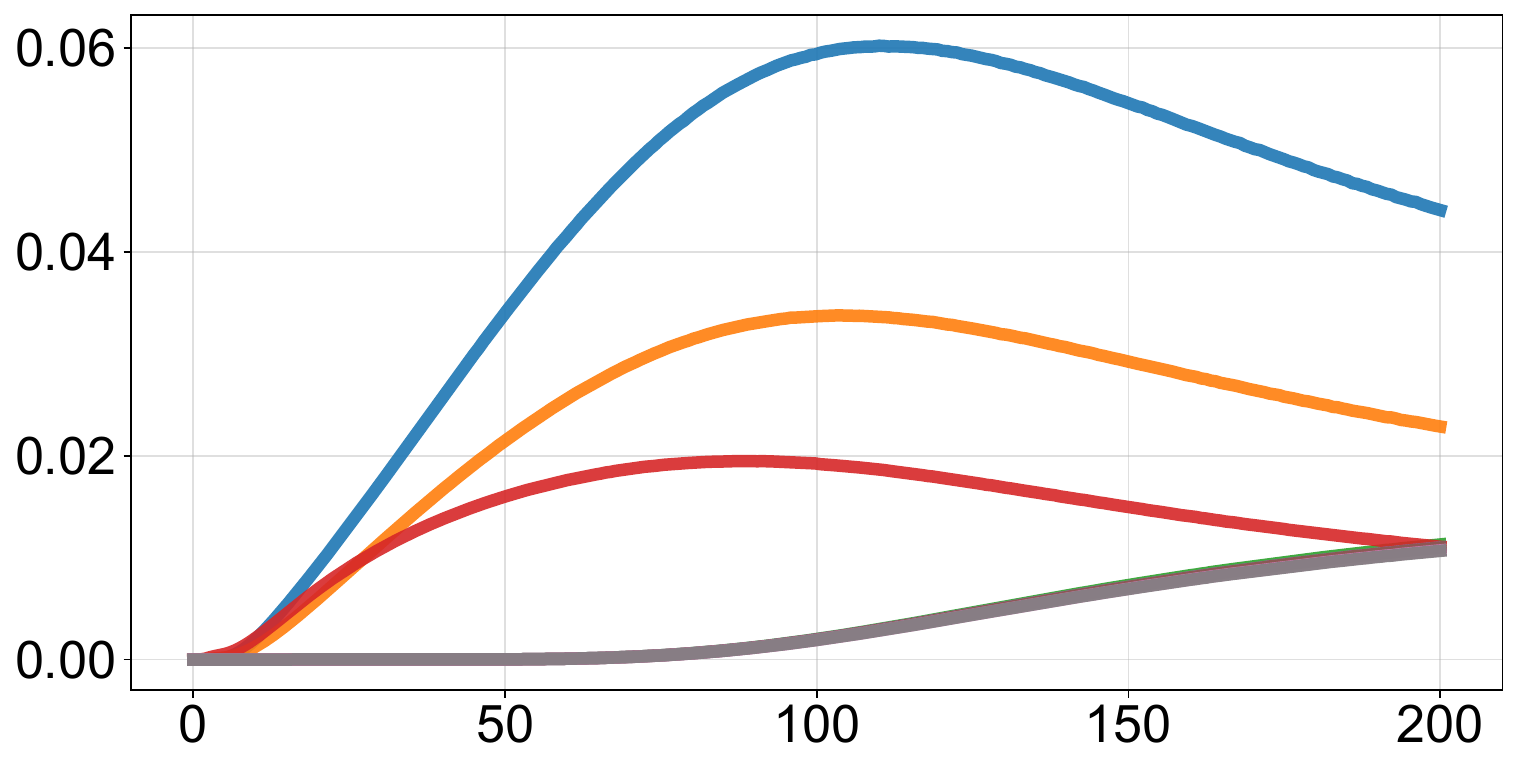}
\end{subfigure}\hfill
\begin{subfigure}[t]{0.32\textwidth}\centering
\scriptsize\textbf{Layer 35}\par\smallskip
\includegraphics[width=\linewidth]{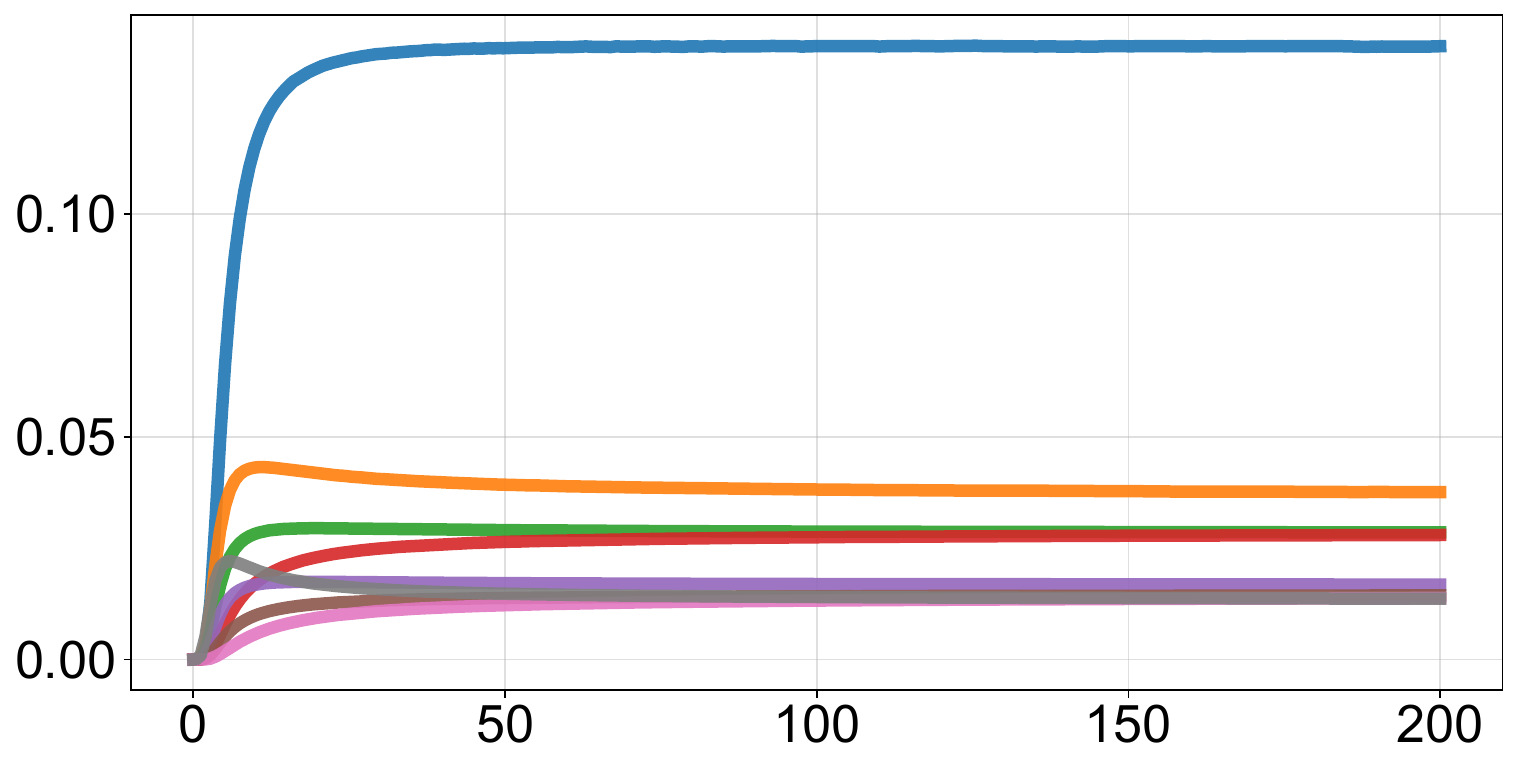}
\end{subfigure}
\rowtitle{Humorous}
\begin{subfigure}[t]{0.32\textwidth}\centering
\scriptsize\textbf{Layer 5}\par\smallskip
\includegraphics[width=\linewidth]{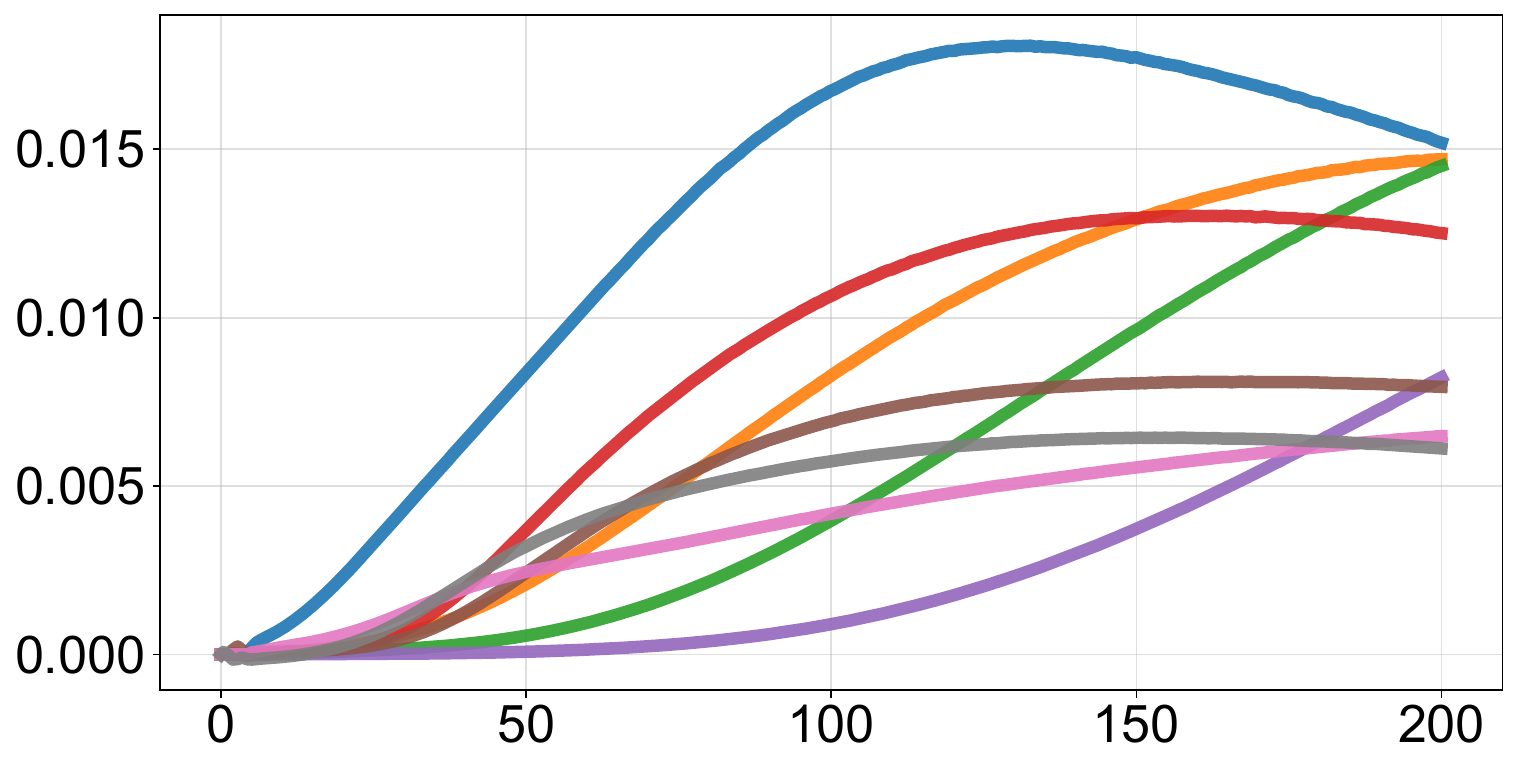}
\end{subfigure}\hfill
\begin{subfigure}[t]{0.32\textwidth}\centering
\scriptsize\textbf{Layer 25}\par\smallskip
\includegraphics[width=\linewidth]{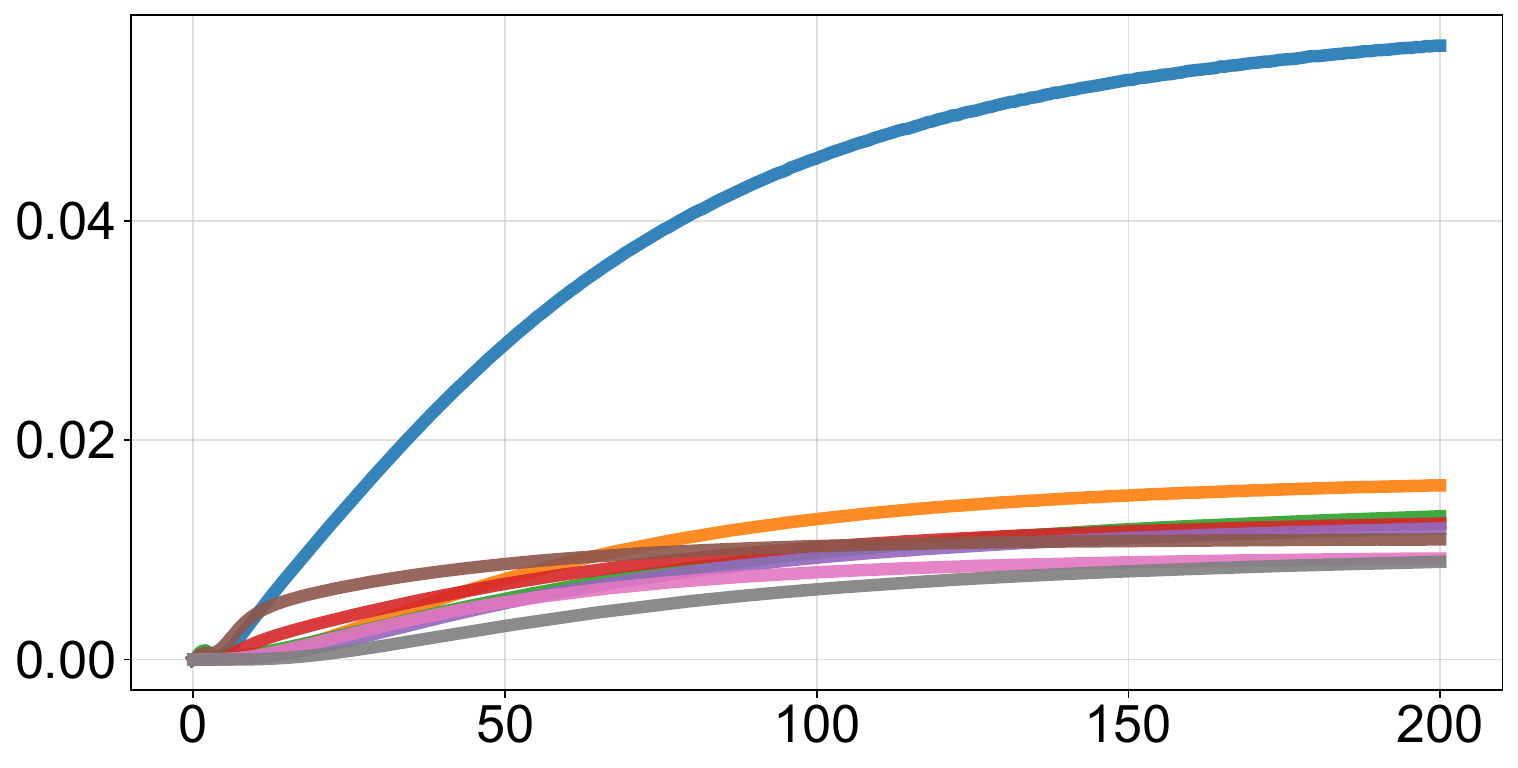}
\end{subfigure}\hfill
\begin{subfigure}[t]{0.32\textwidth}\centering
\scriptsize\textbf{Layer 35}\par\smallskip
\includegraphics[width=\linewidth]{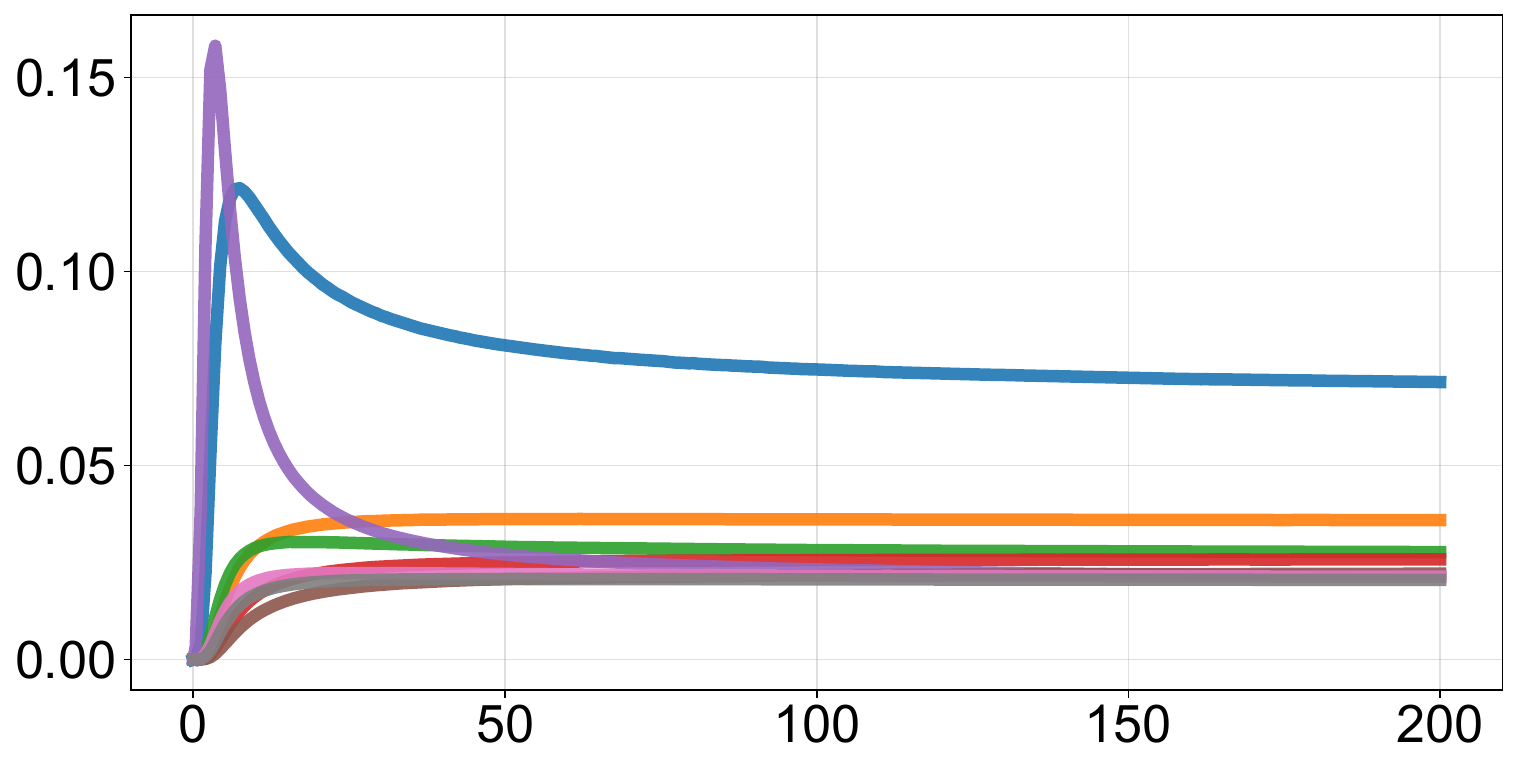}
\end{subfigure}
\rowtitle{Joy}
\begin{subfigure}[t]{0.32\textwidth}\centering
\scriptsize\textbf{Layer 5}\par\smallskip
\includegraphics[width=\linewidth]{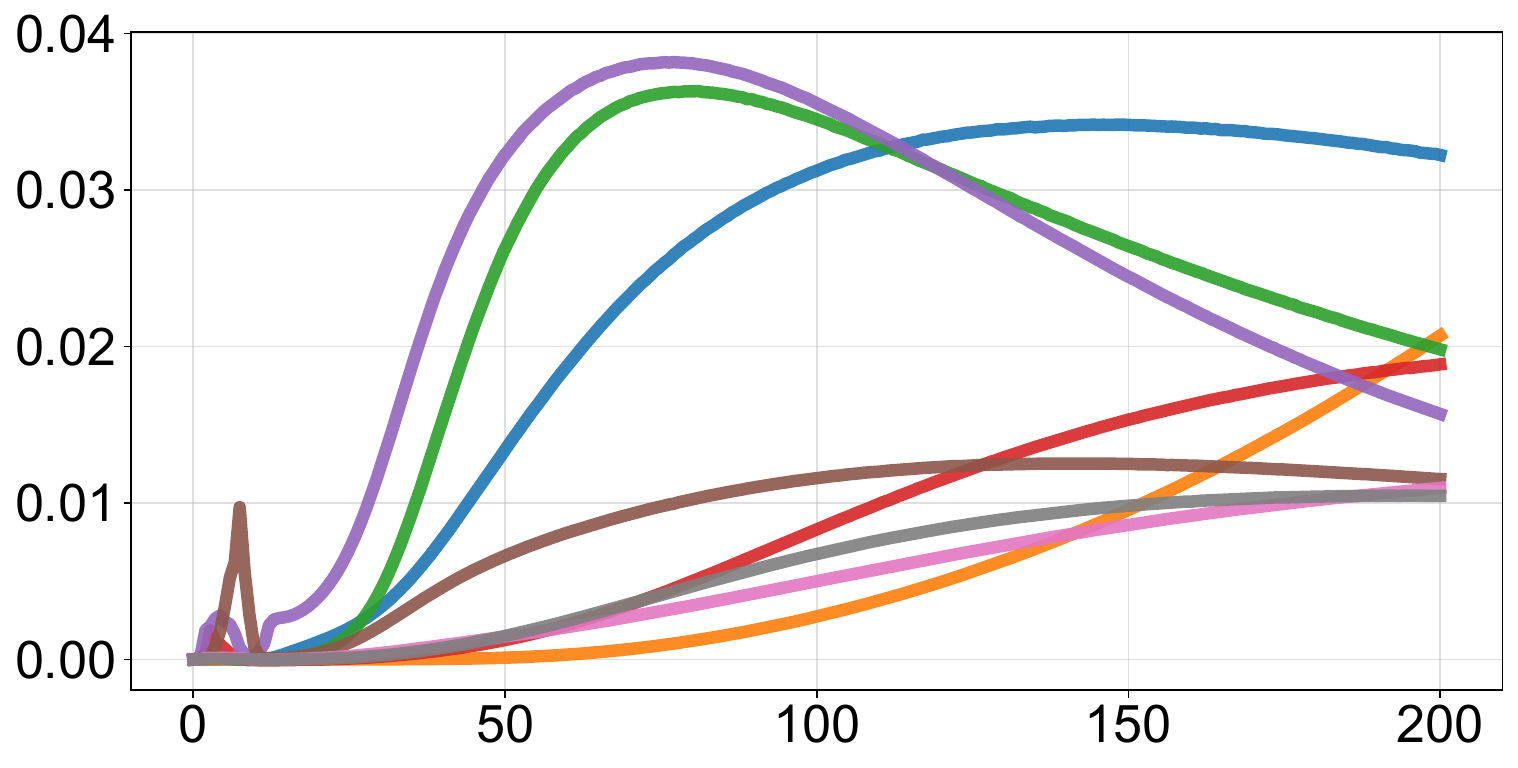}
\end{subfigure}\hfill
\begin{subfigure}[t]{0.32\textwidth}\centering
\scriptsize\textbf{Layer 25}\par\smallskip
\includegraphics[width=\linewidth]{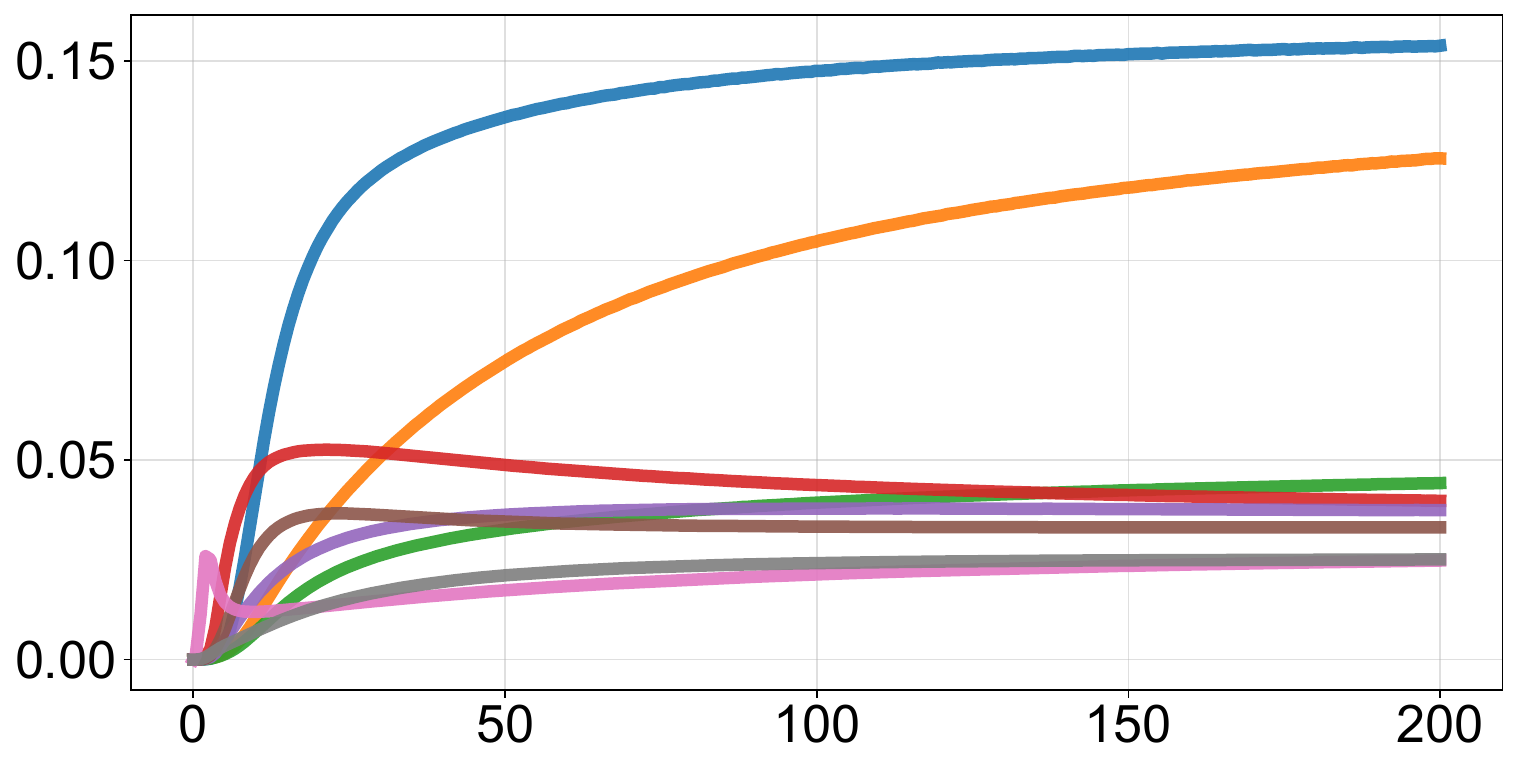}
\end{subfigure}\hfill
\begin{subfigure}[t]{0.32\textwidth}\centering
\scriptsize\textbf{Layer 35}\par\smallskip
\includegraphics[width=\linewidth]{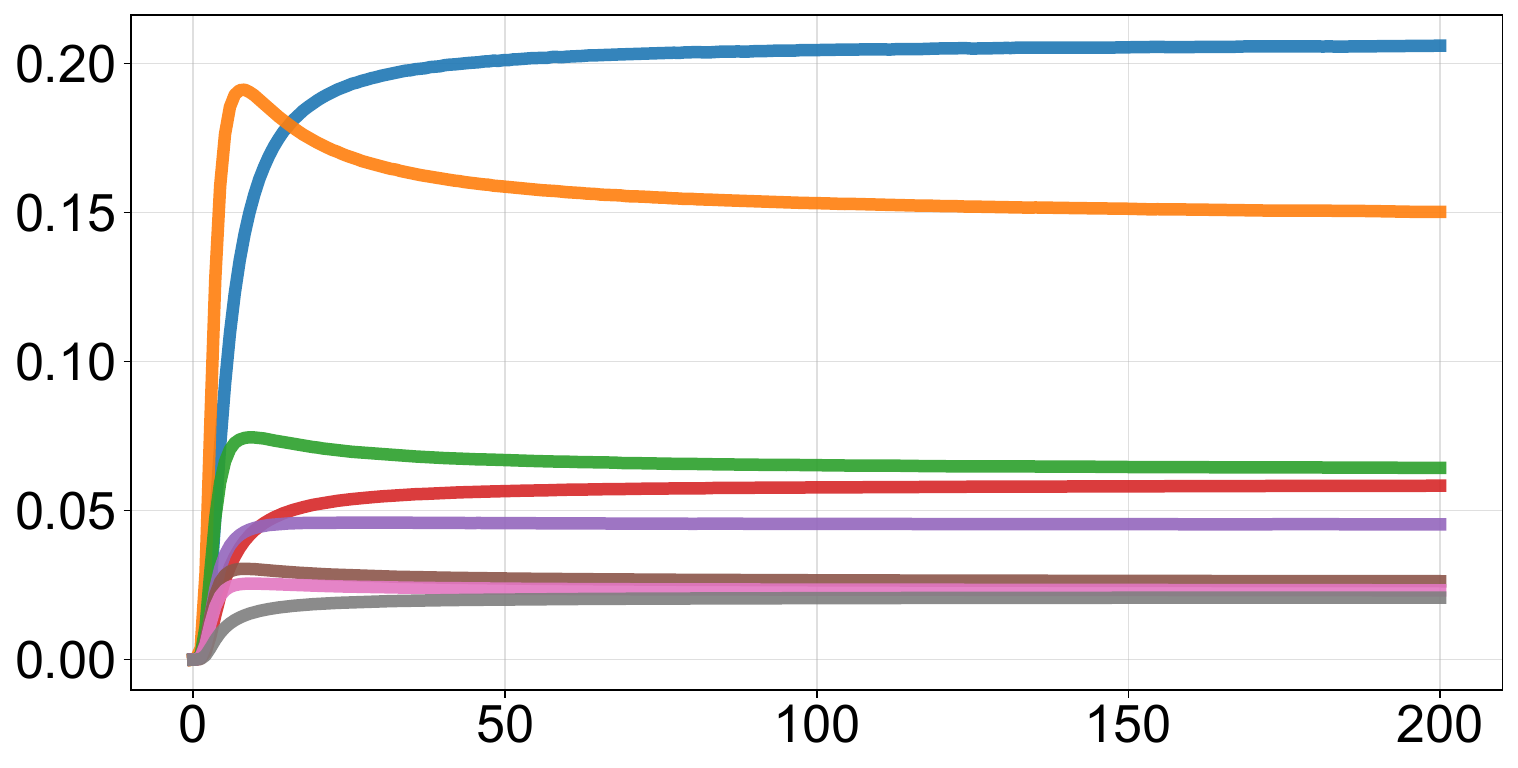}
\end{subfigure}
\caption{\label{fig:next-token-low-data-gpt2-large}Effect of steering strength on next-token probability shifts for the original concepts on openai-community/gpt2-large in the low-data setting with 10 prompt pairs ($|P| = |N| = 10$). Rows correspond to the steered concept. Columns correspond to steering layers 5, 25, 35.}
\end{figure}
\clearpage
\begin{figure}[p]
\centering
\noindent{\small\textbf{Steered model:} google/gemma-3-1b-it}\par\smallskip
\rowtitle{Depression}
\begin{subfigure}[t]{0.32\textwidth}\centering
\scriptsize\textbf{Layer 0}\par\smallskip
\includegraphics[width=\linewidth]{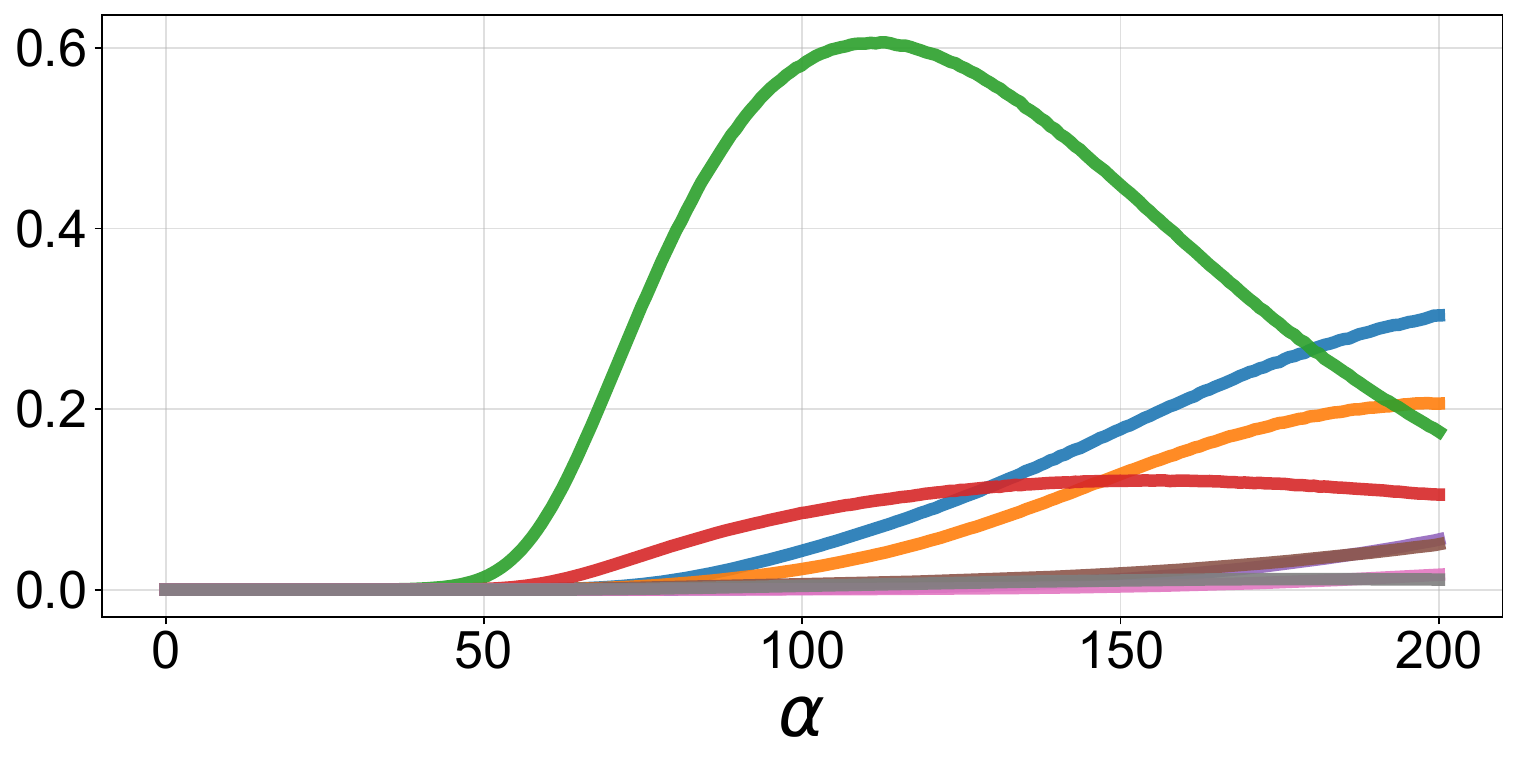}
\end{subfigure}\hfill
\begin{subfigure}[t]{0.32\textwidth}\centering
\scriptsize\textbf{Layer 12}\par\smallskip
\includegraphics[width=\linewidth]{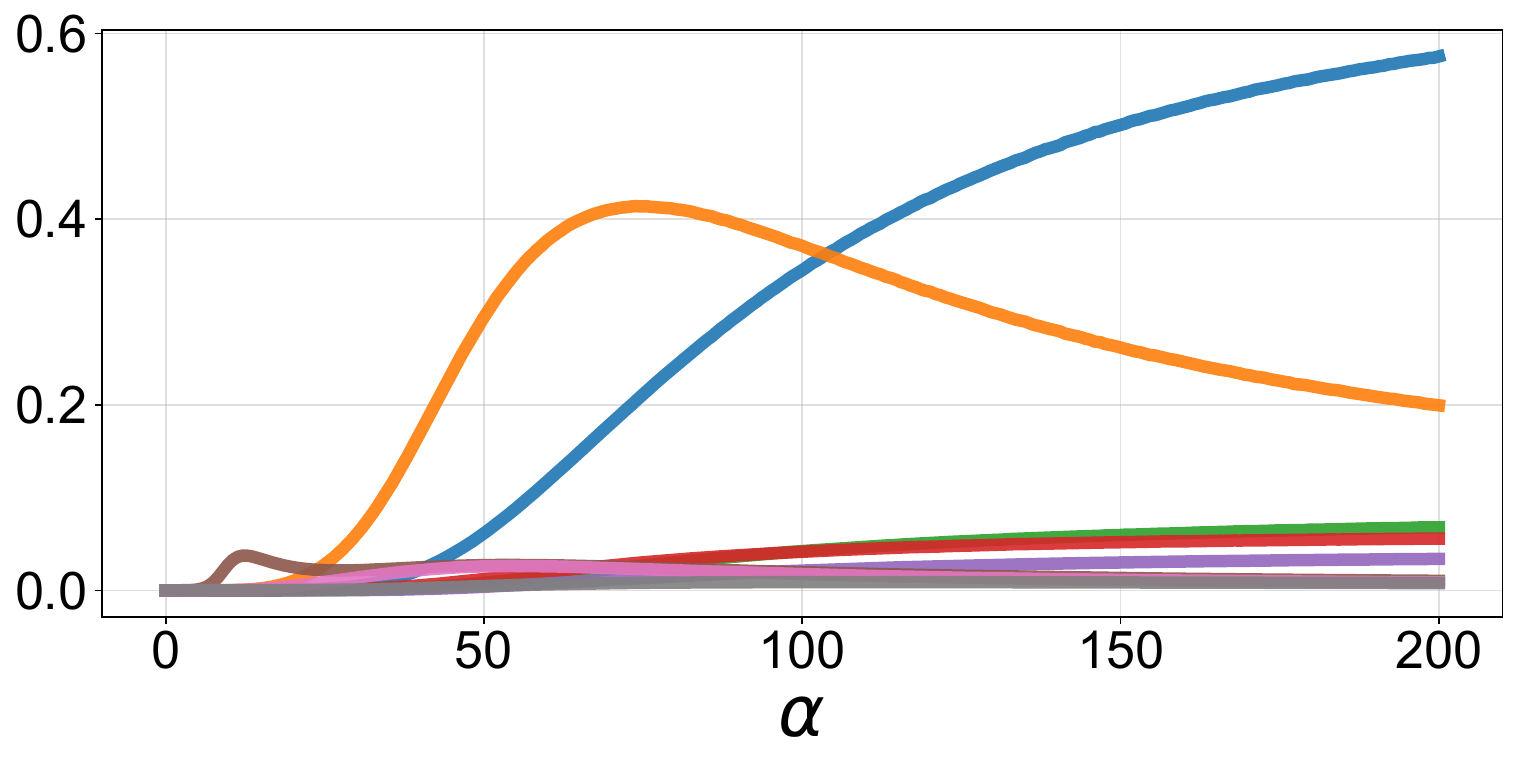}
\end{subfigure}\hfill
\begin{subfigure}[t]{0.32\textwidth}\centering
\scriptsize\textbf{Layer 25}\par\smallskip
\includegraphics[width=\linewidth]{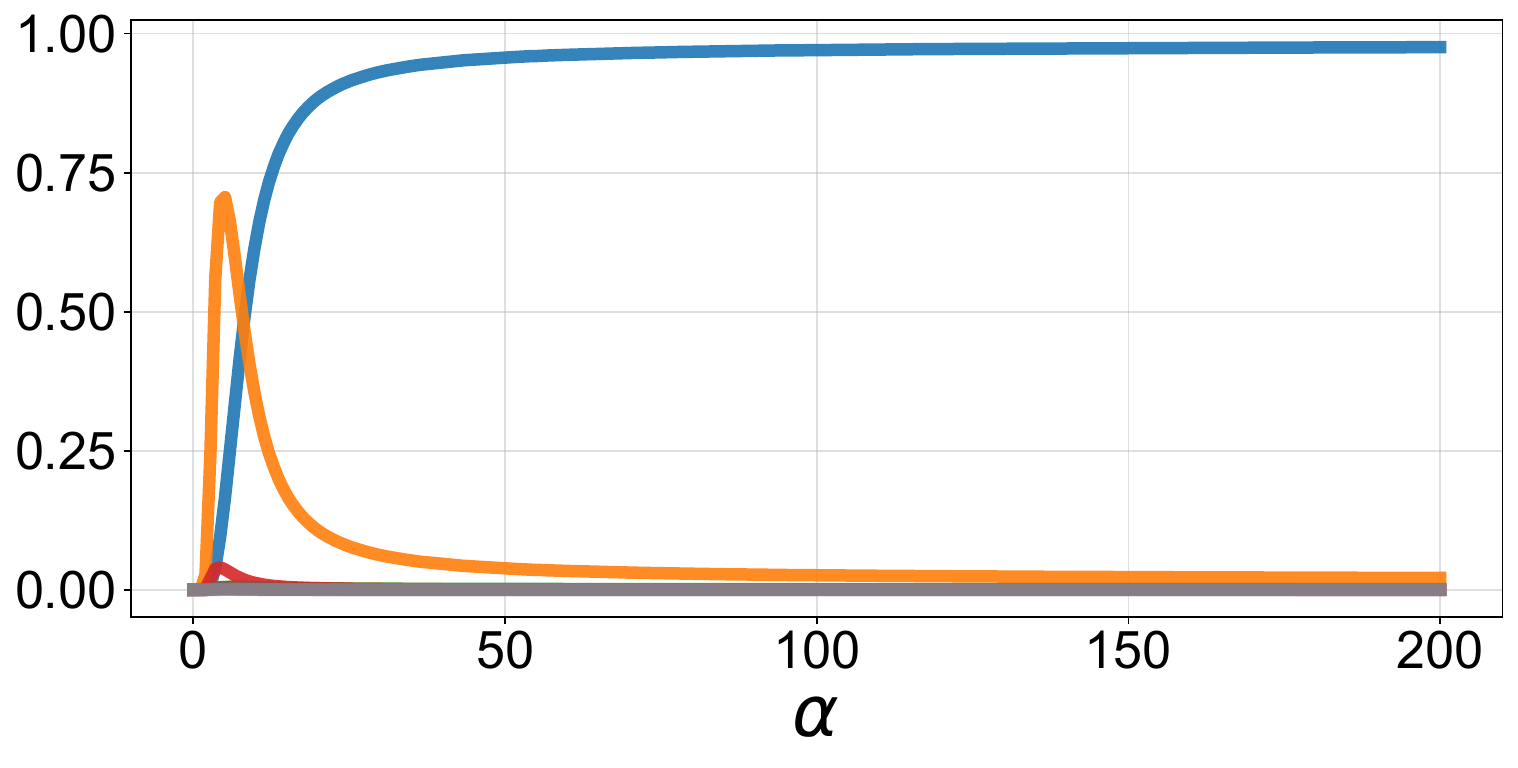}
\end{subfigure}
\rowtitle{Evil}
\begin{subfigure}[t]{0.32\textwidth}\centering
\scriptsize\textbf{Layer 0}\par\smallskip
\includegraphics[width=\linewidth]{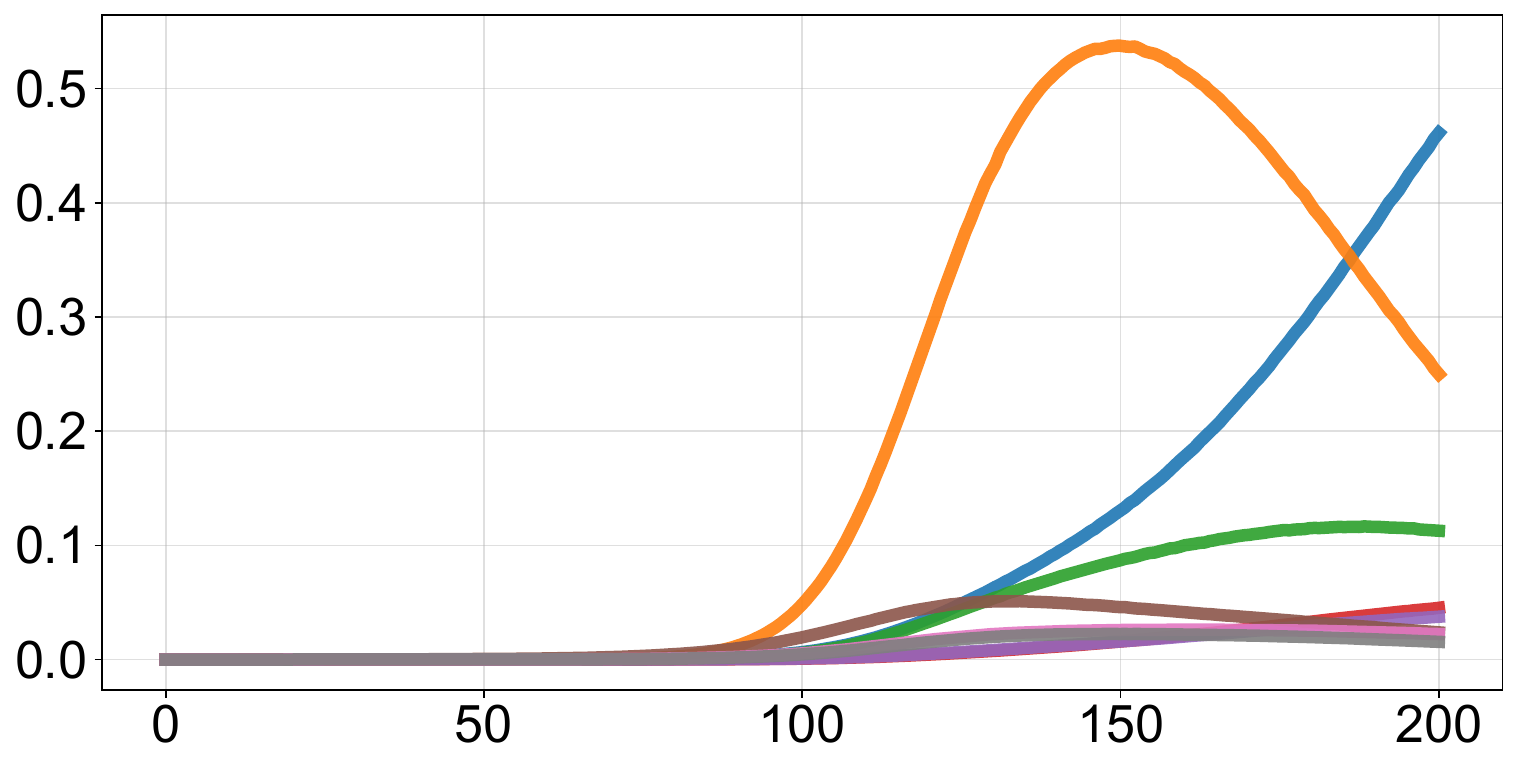}
\end{subfigure}\hfill
\begin{subfigure}[t]{0.32\textwidth}\centering
\scriptsize\textbf{Layer 12}\par\smallskip
\includegraphics[width=\linewidth]{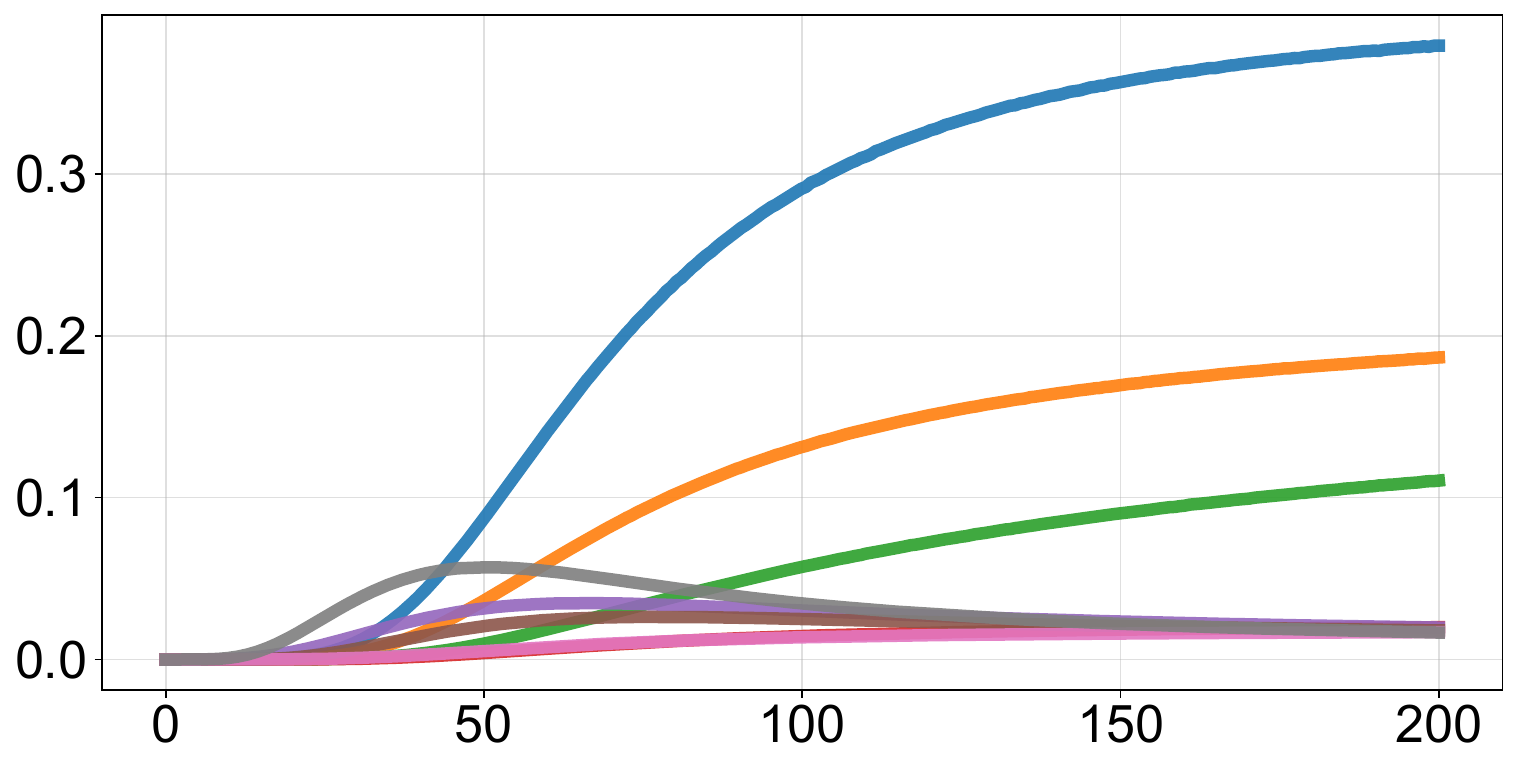}
\end{subfigure}\hfill
\begin{subfigure}[t]{0.32\textwidth}\centering
\scriptsize\textbf{Layer 25}\par\smallskip
\includegraphics[width=\linewidth]{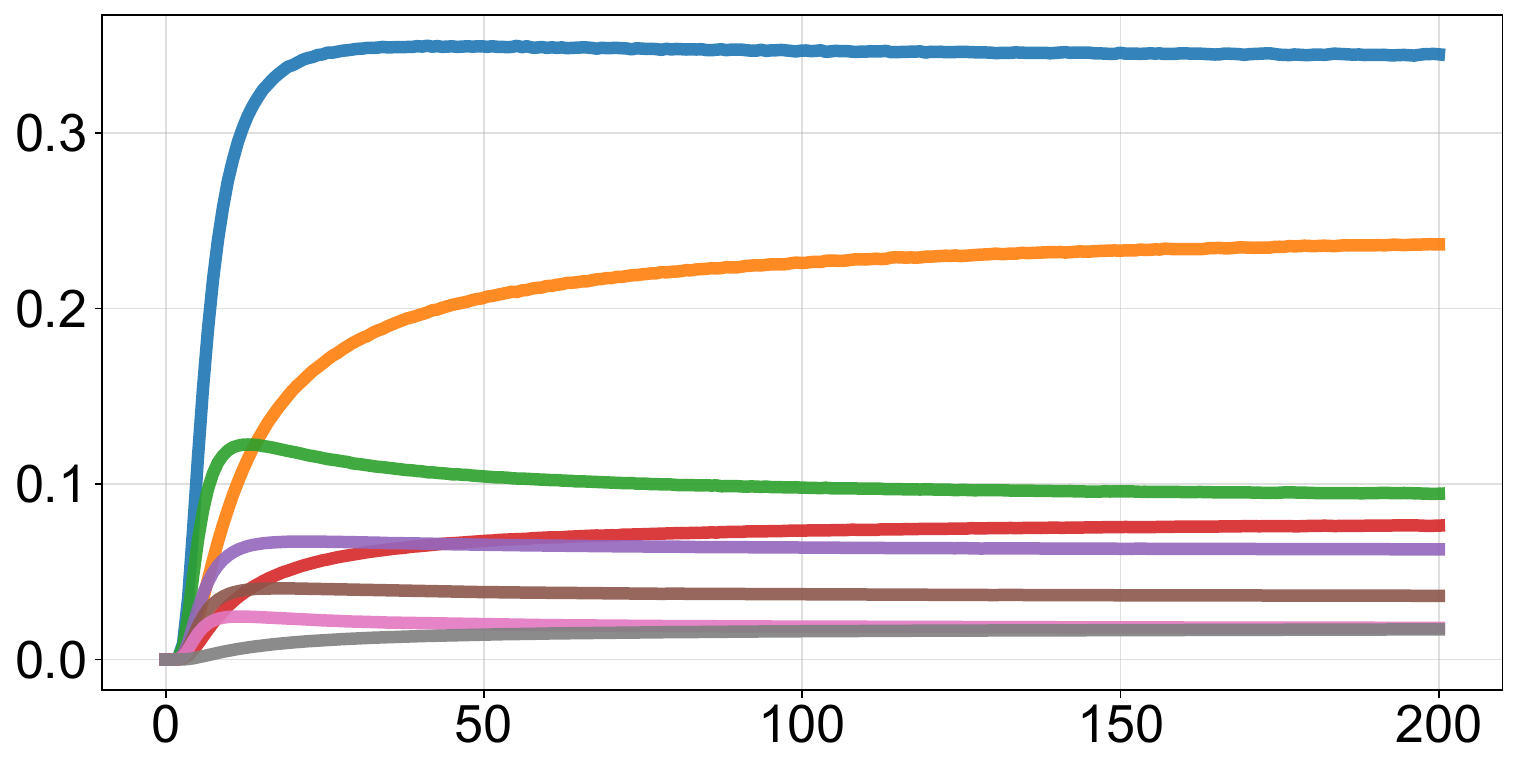}
\end{subfigure}
\rowtitle{Humorous}
\begin{subfigure}[t]{0.32\textwidth}\centering
\scriptsize\textbf{Layer 0}\par\smallskip
\includegraphics[width=\linewidth]{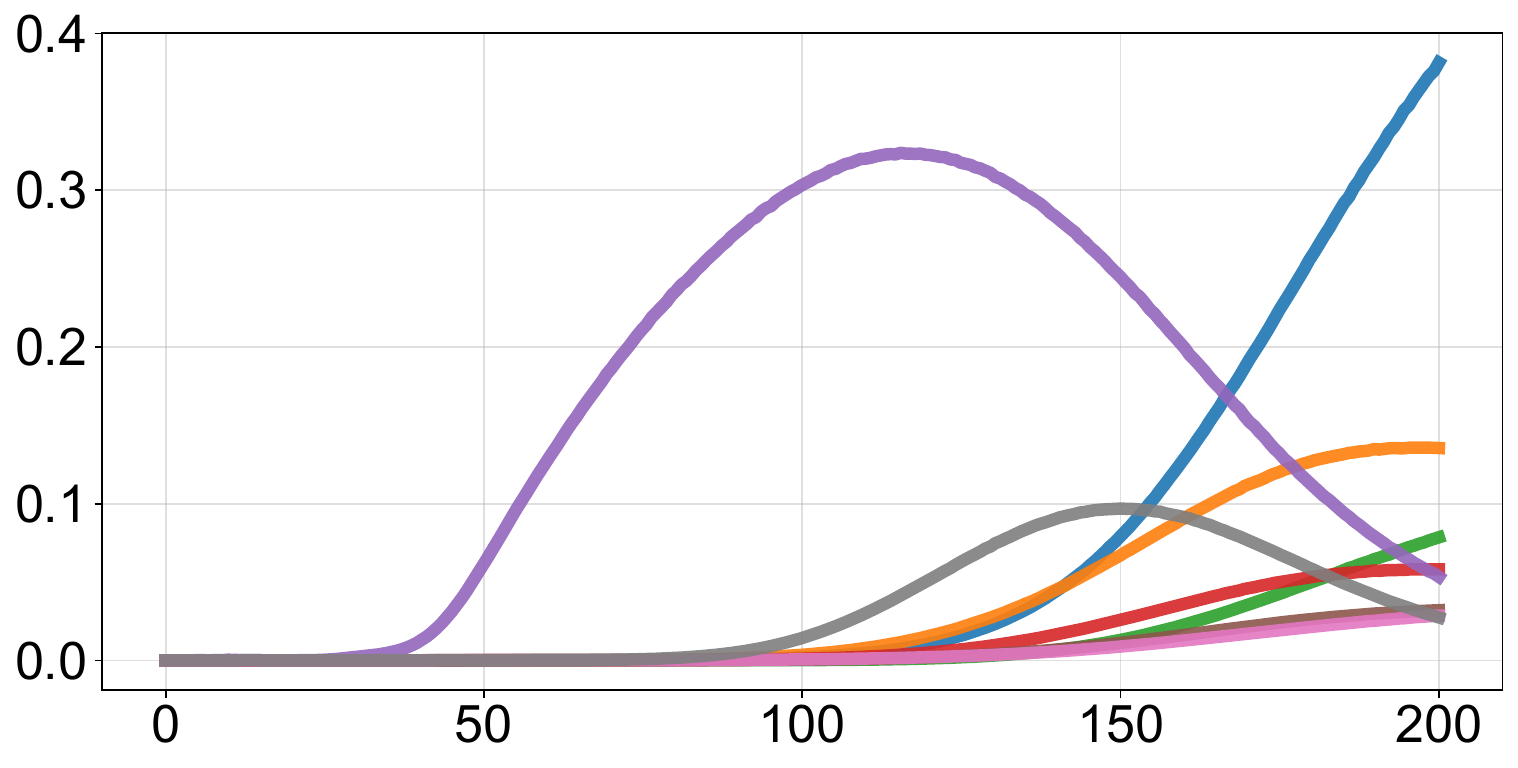}
\end{subfigure}\hfill
\begin{subfigure}[t]{0.32\textwidth}\centering
\scriptsize\textbf{Layer 12}\par\smallskip
\includegraphics[width=\linewidth]{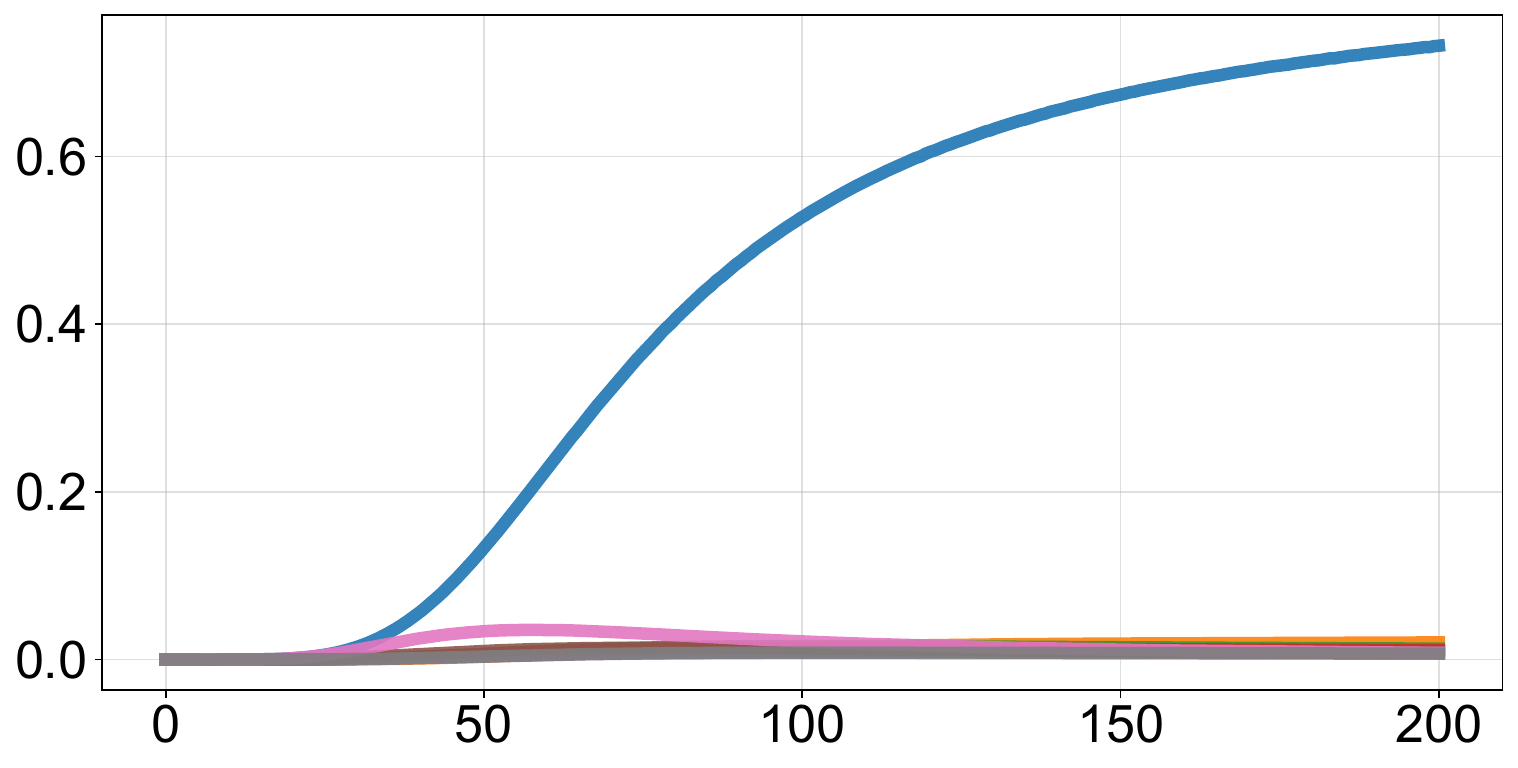}
\end{subfigure}\hfill
\begin{subfigure}[t]{0.32\textwidth}\centering
\scriptsize\textbf{Layer 25}\par\smallskip
\includegraphics[width=\linewidth]{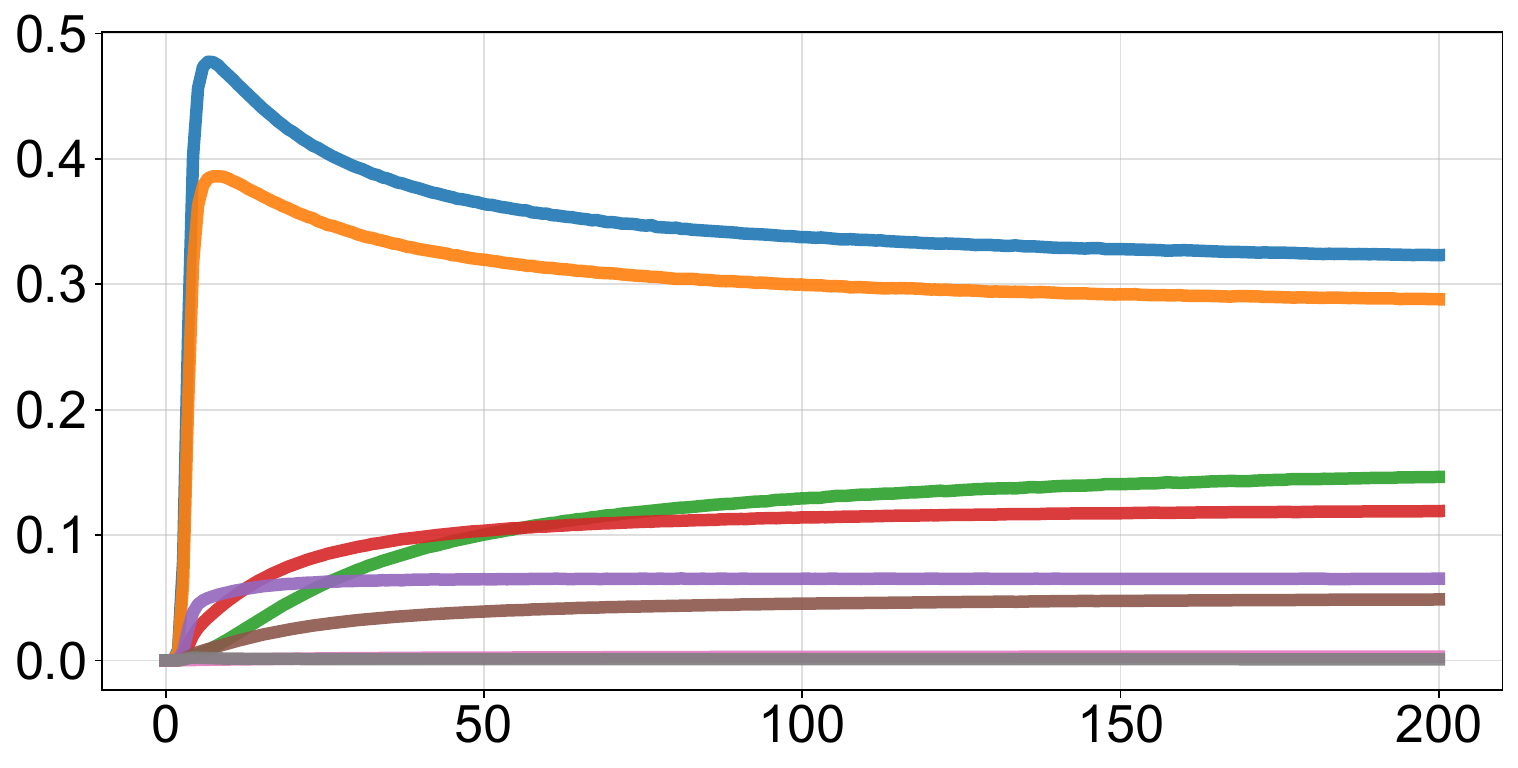}
\end{subfigure}
\rowtitle{Joy}
\begin{subfigure}[t]{0.32\textwidth}\centering
\scriptsize\textbf{Layer 0}\par\smallskip
\includegraphics[width=\linewidth]{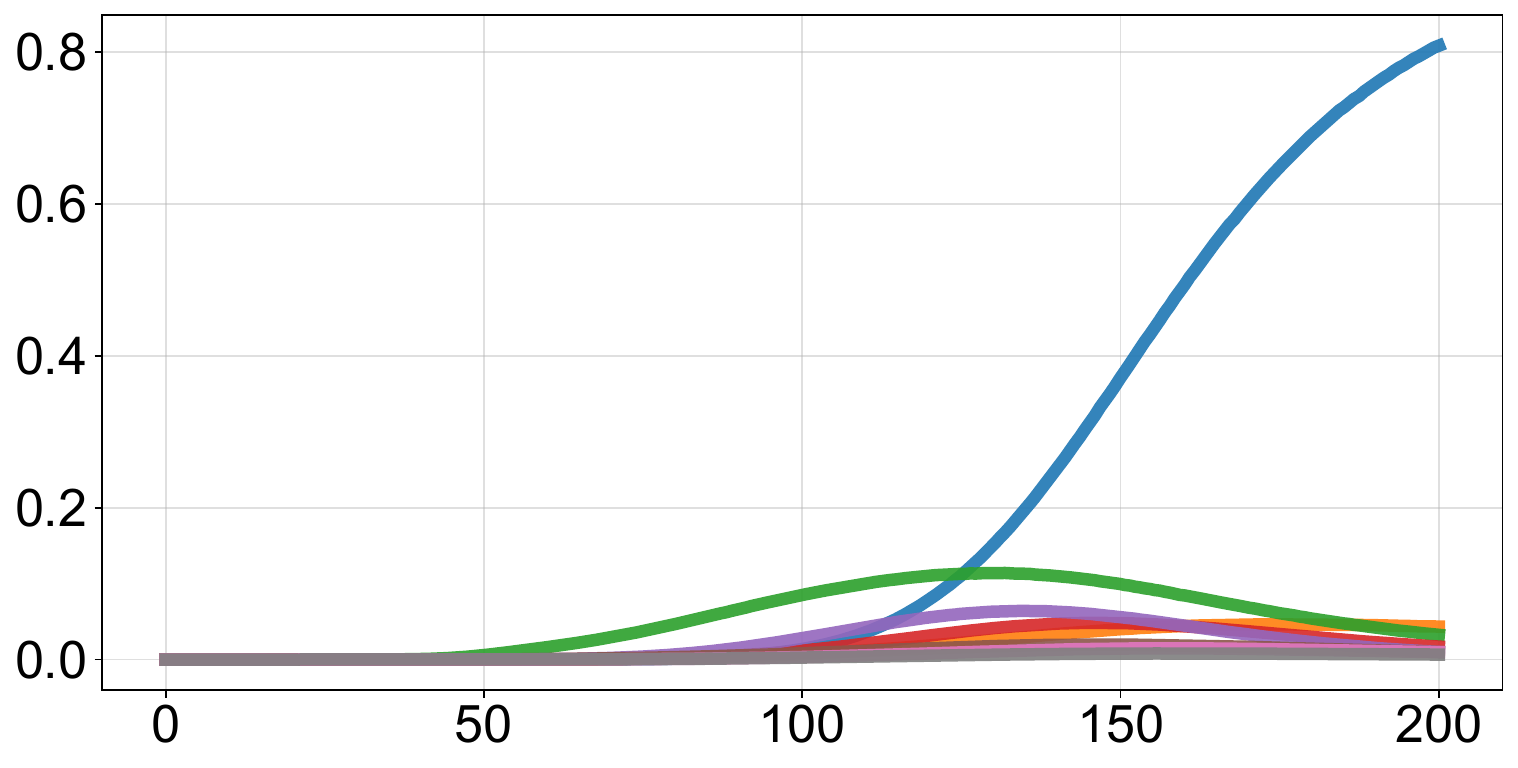}
\end{subfigure}\hfill
\begin{subfigure}[t]{0.32\textwidth}\centering
\scriptsize\textbf{Layer 12}\par\smallskip
\includegraphics[width=\linewidth]{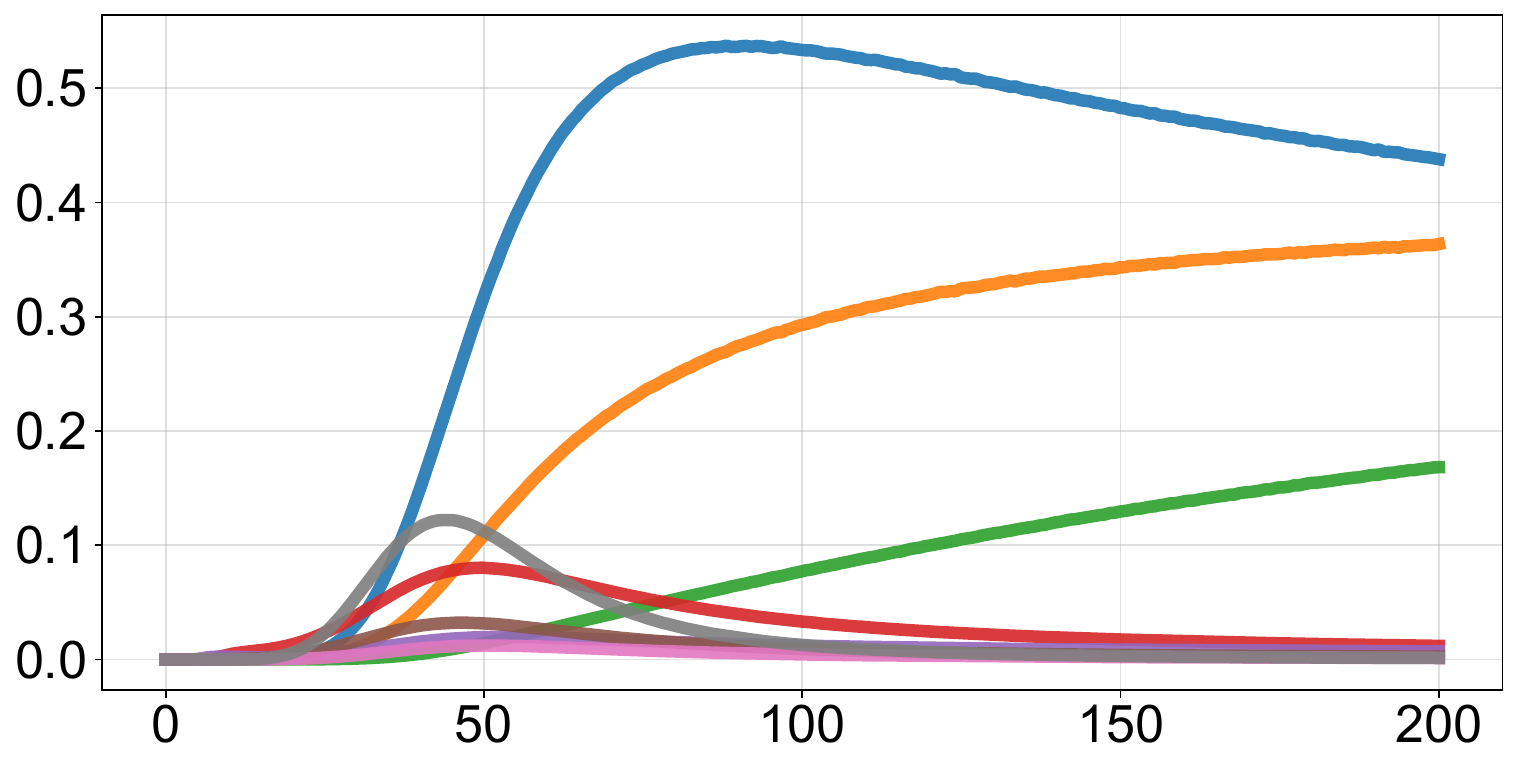}
\end{subfigure}\hfill
\begin{subfigure}[t]{0.32\textwidth}\centering
\scriptsize\textbf{Layer 25}\par\smallskip
\includegraphics[width=\linewidth]{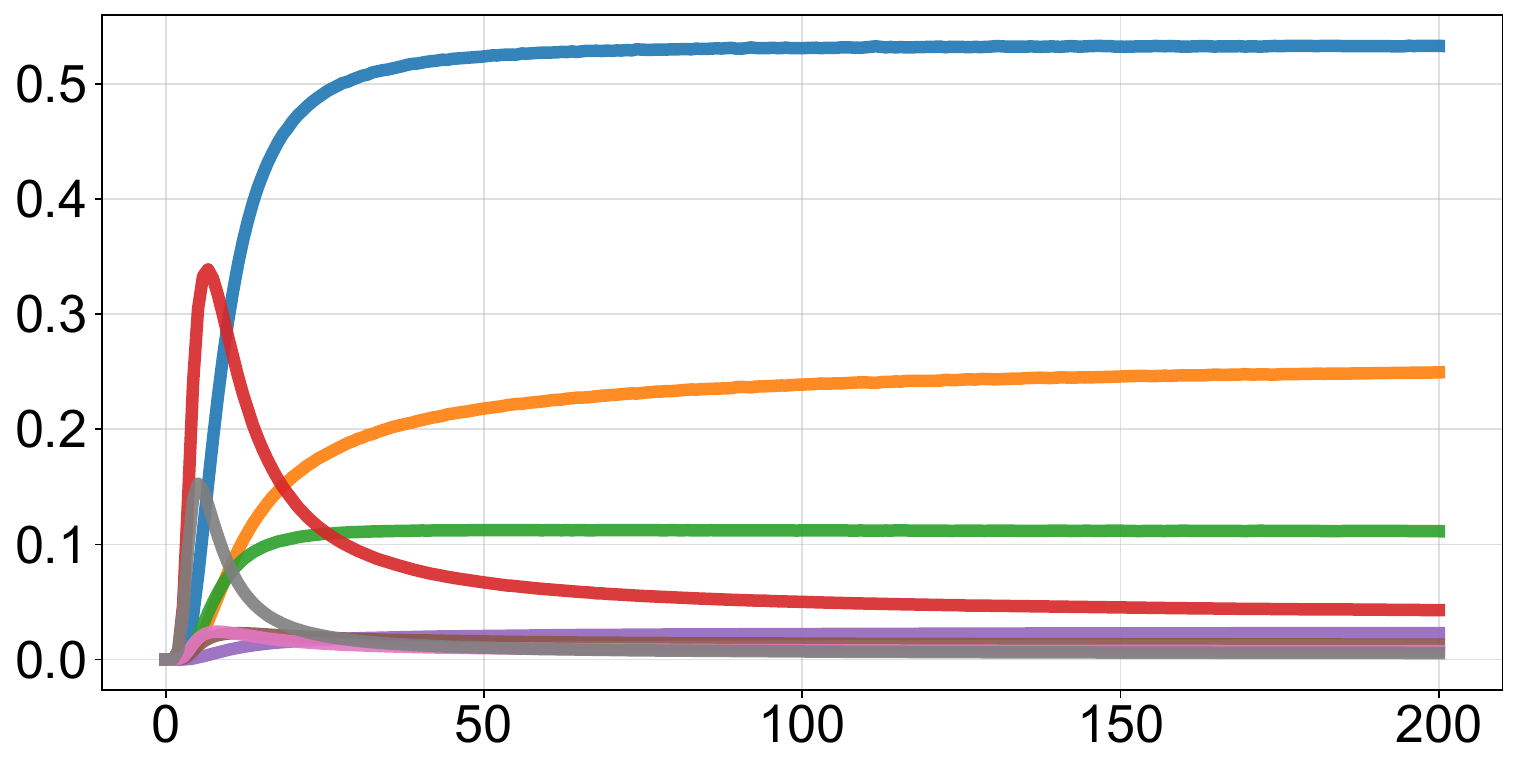}
\end{subfigure}
\caption{\label{fig:next-token-low-data-gemma}Effect of steering strength on next-token probability shifts for the original concepts on google/gemma-3-1b-it in the low-data setting with 10 prompt pairs ($|P| = |N| = 10$). Rows correspond to the steered concept. Columns correspond to steering layers 0, 12, 25.}
\end{figure}

\begin{figure}[p]
\centering
\begin{subfigure}[t]{0.245\textwidth}\centering
\panelmodel{Llama-2-7B-Chat-HF}
\panellayer{15}
\includegraphics[width=\linewidth]{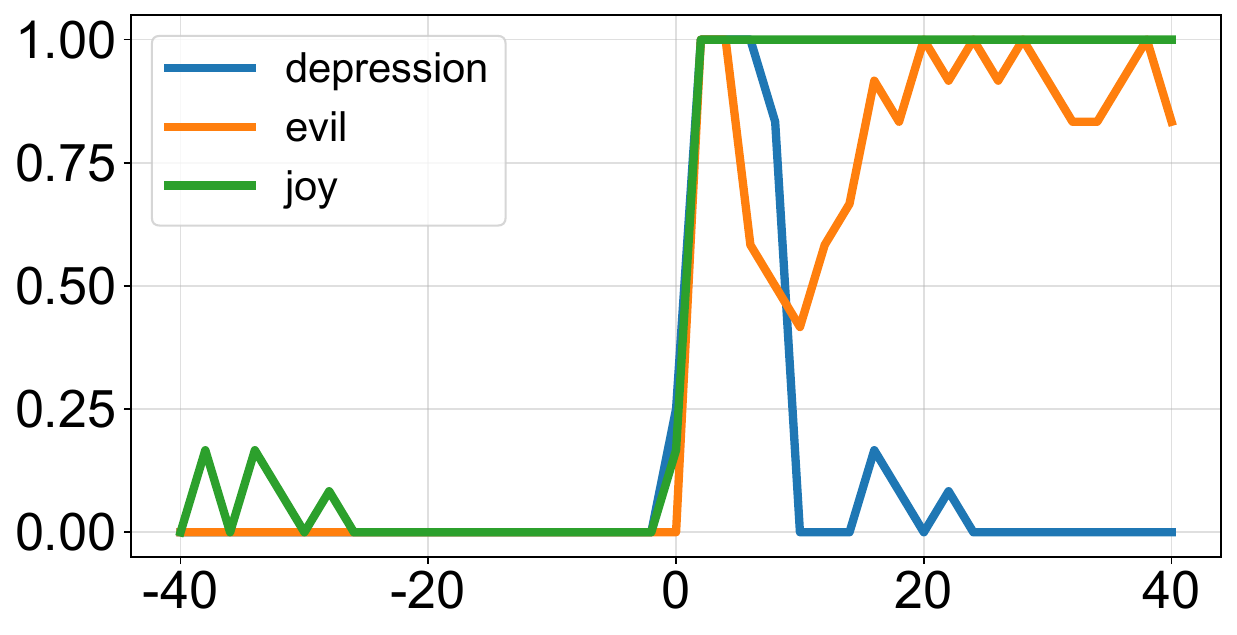}
\end{subfigure}\hfill
\begin{subfigure}[t]{0.245\textwidth}\centering
\panelmodel{Qwen3-8B}
\panellayer{17}
\includegraphics[width=\linewidth]{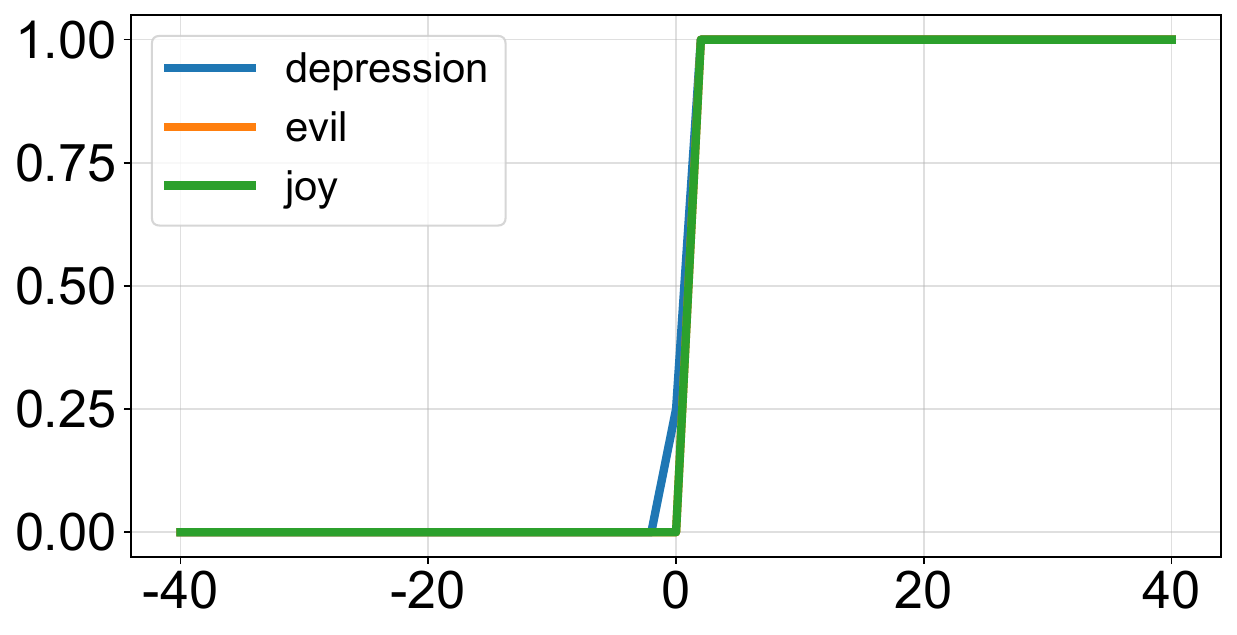}
\end{subfigure}\hfill
\begin{subfigure}[t]{0.245\textwidth}\centering
\panelmodel{GPT2-large}
\panellayer{17}
\includegraphics[width=\linewidth]{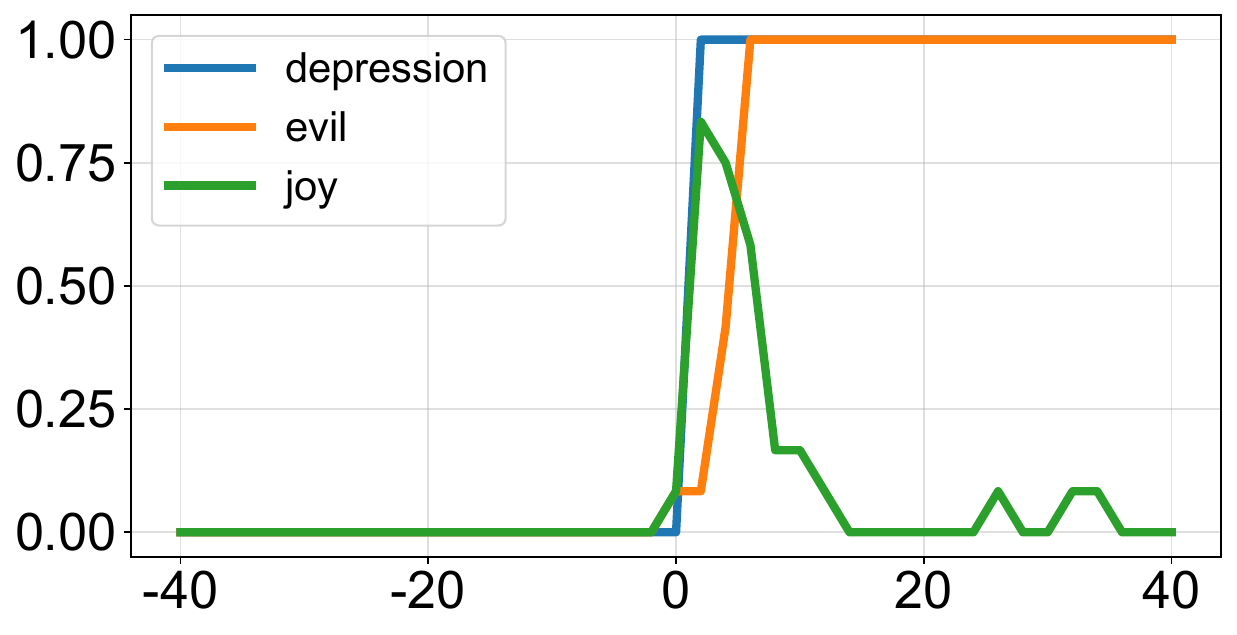}
\end{subfigure}\hfill
\begin{subfigure}[t]{0.245\textwidth}\centering
\panelmodel{Gemma-3-1B-IT}
\panellayer{12}
\includegraphics[width=\linewidth]{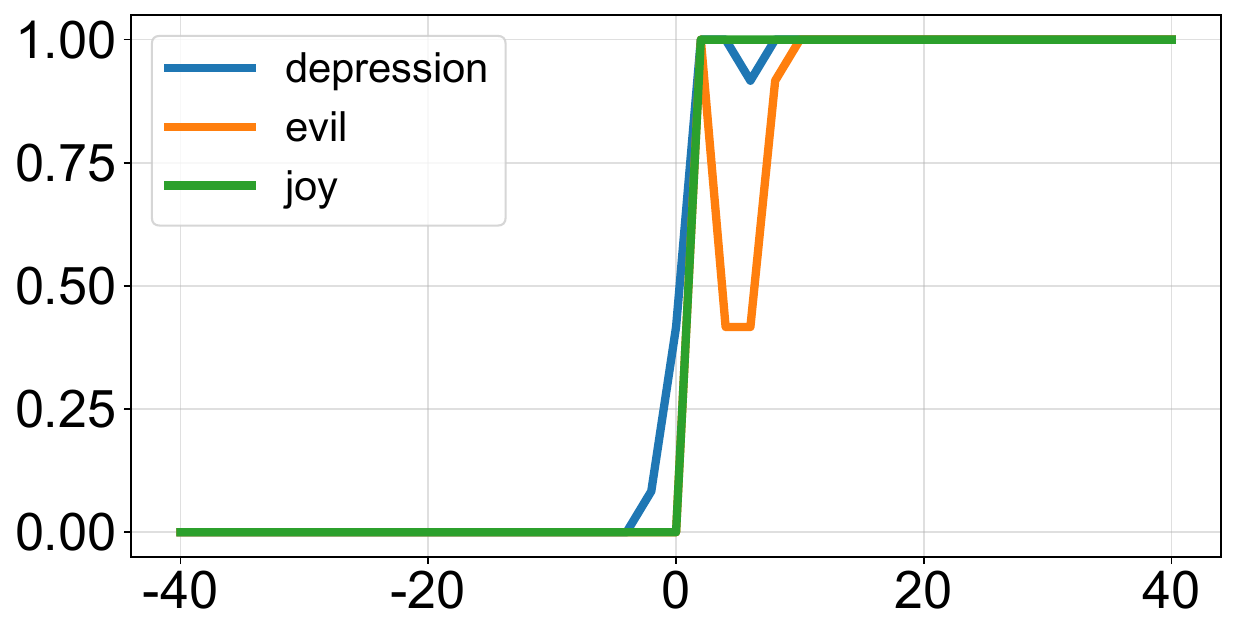}
\end{subfigure}
\rowtitle{}
\begin{subfigure}[t]{0.245\textwidth}\centering
\panellayer{31}
\includegraphics[width=\linewidth]{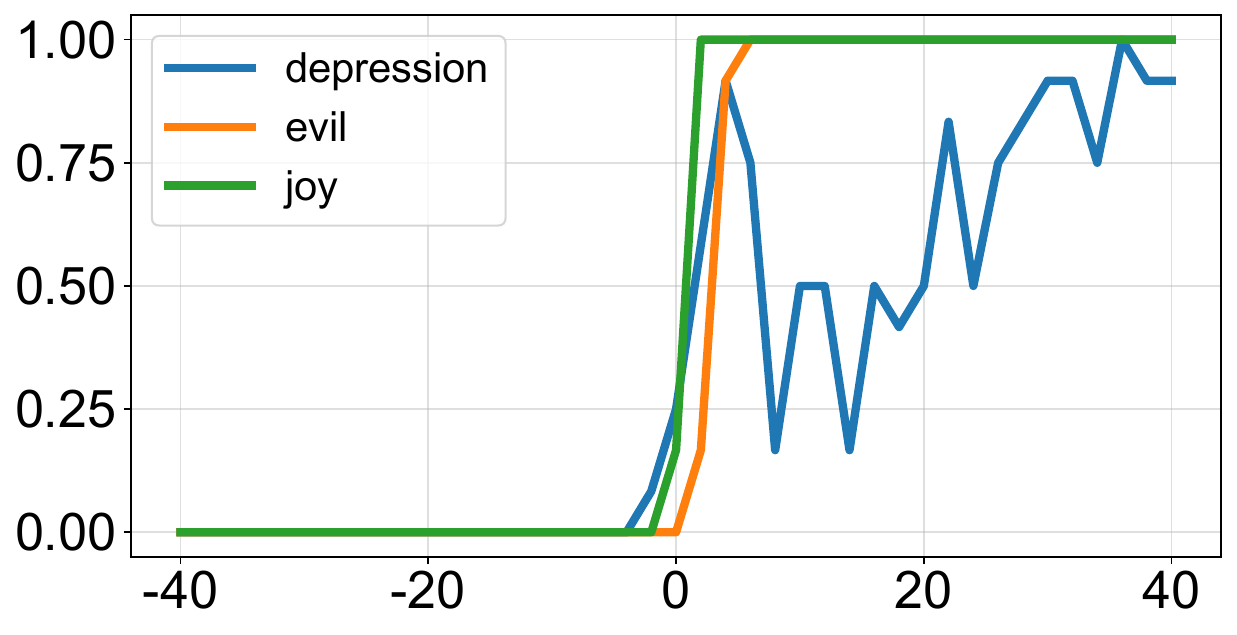}
\end{subfigure}\hfill
\begin{subfigure}[t]{0.245\textwidth}\centering
\panellayer{35}
\includegraphics[width=\linewidth]{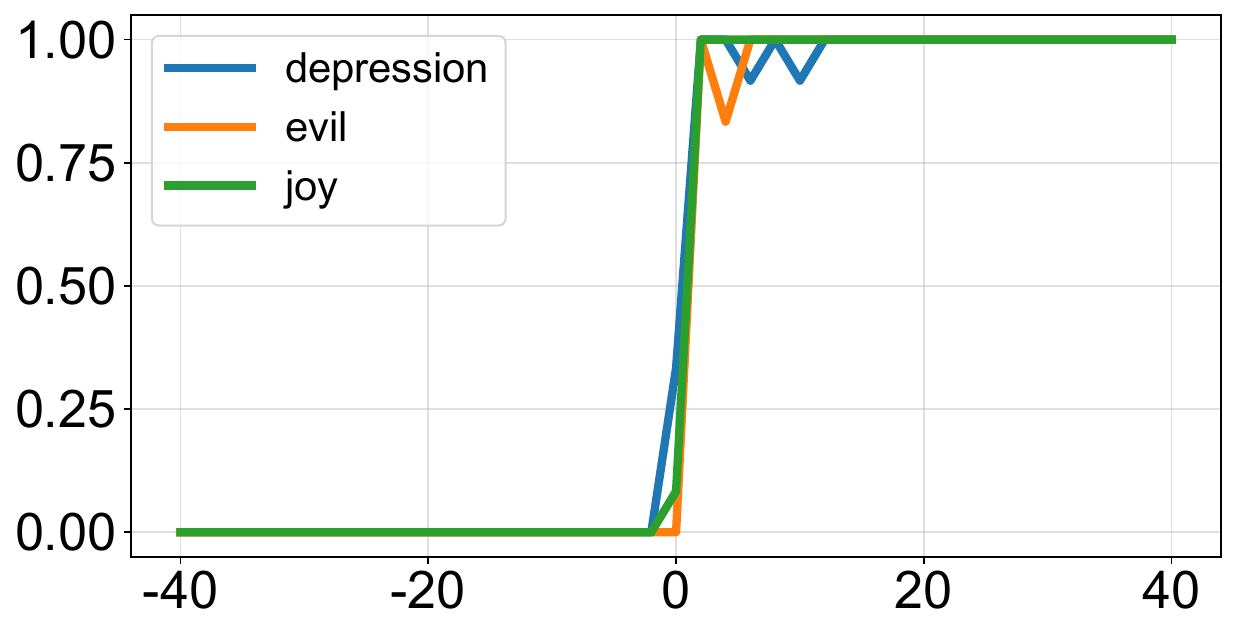}
\end{subfigure}\hfill
\begin{subfigure}[t]{0.245\textwidth}\centering
\panellayer{35}
\includegraphics[width=\linewidth]{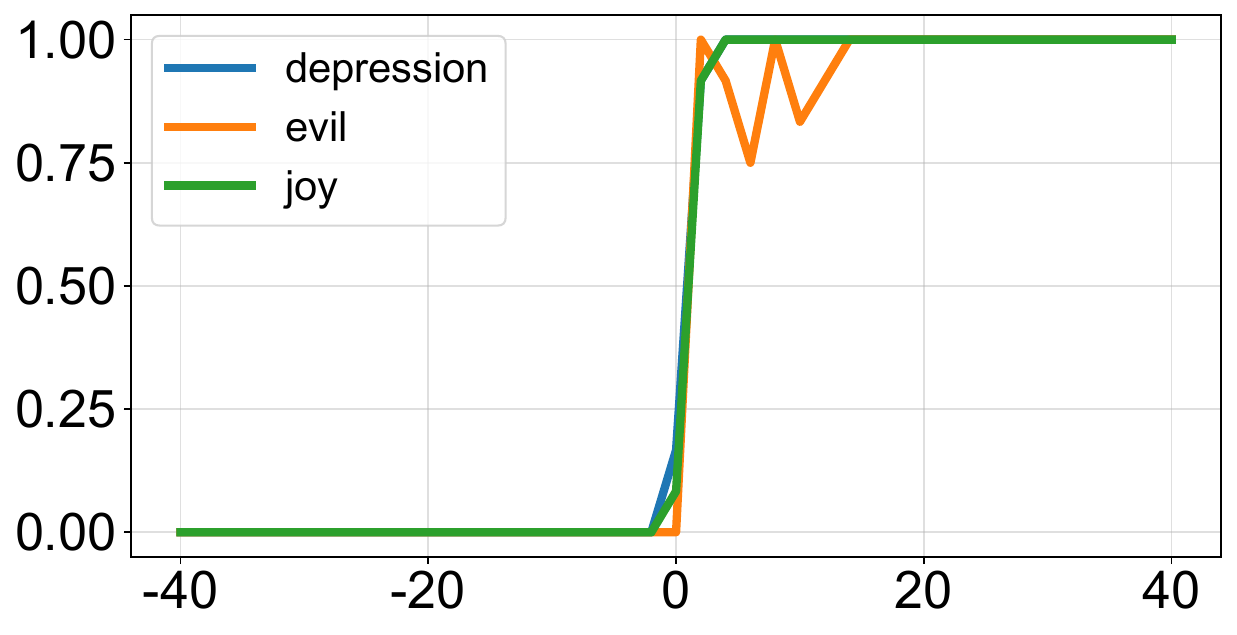}
\end{subfigure}\hfill
\begin{subfigure}[t]{0.245\textwidth}\centering
\panellayer{25}
\includegraphics[width=\linewidth]{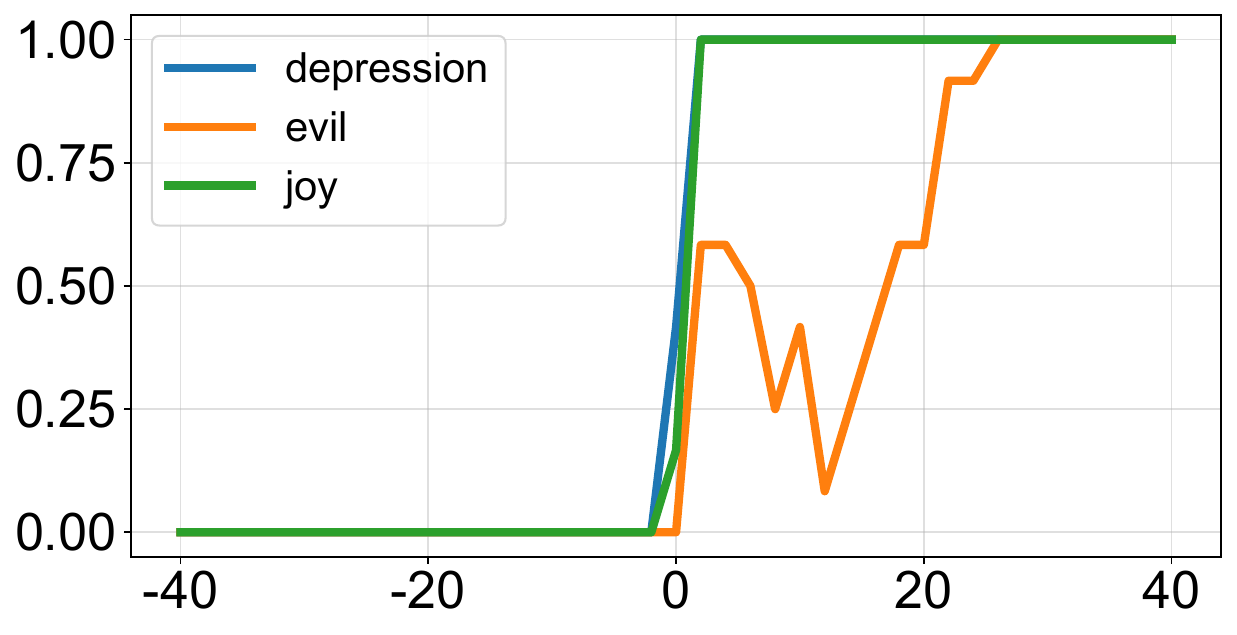}
\end{subfigure}
\caption{\label{fig:concept-prob-low-data}Influence of steering strength on concept probability in the low-data setting with only 10 prompt pairs ($|P| = |N| = 10$). The top row labels the steered model for each column, and every panel header gives the exact steered layer index. Each plot contains the concept-probability curves for the original concepts.}
\end{figure}

\begin{figure}[p]
\centering
\noindent{\small\textbf{Steered model:} Qwen/Qwen3-8B}\par\smallskip
\rowtitle{Steering layer 17}
\begin{subfigure}[t]{0.245\textwidth}\centering
\scriptsize\textbf{Judge model: Qwen3-0.6B-Base}\par\smallskip
\includegraphics[width=\linewidth]{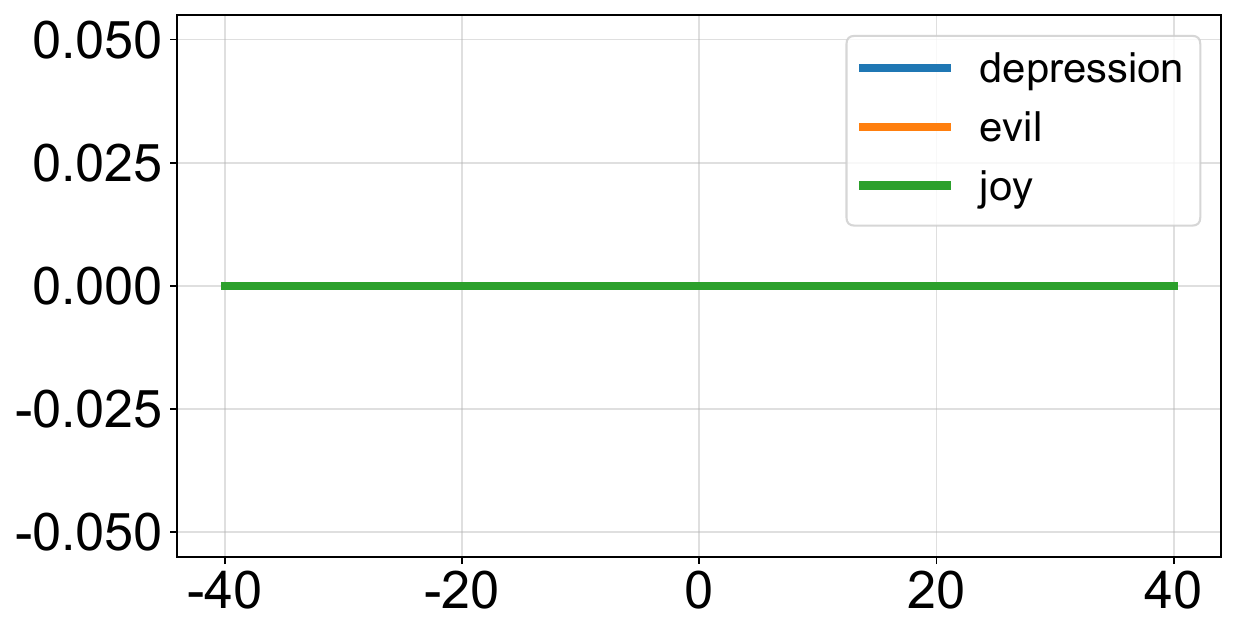}
\end{subfigure}\hfill
\begin{subfigure}[t]{0.245\textwidth}\centering
\scriptsize\textbf{Qwen2.5-7B-Instruct}\par\smallskip
\includegraphics[width=\linewidth]{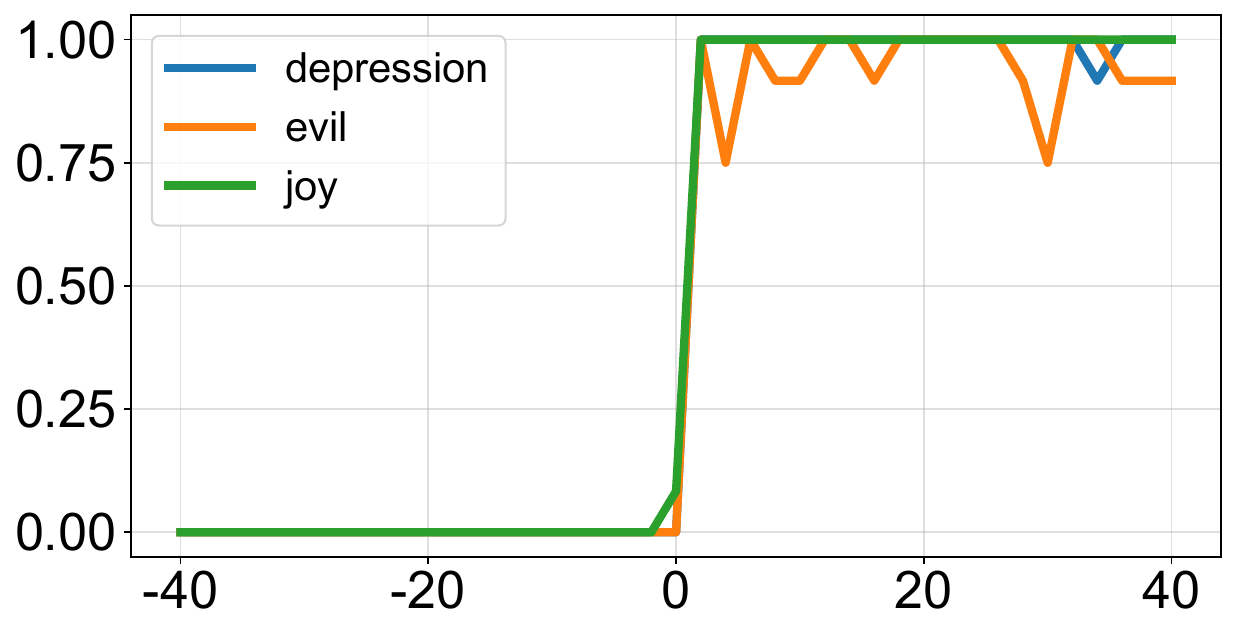}
\end{subfigure}\hfill
\begin{subfigure}[t]{0.245\textwidth}\centering
\scriptsize\textbf{Gemma-3-12B-IT}\par\smallskip
\includegraphics[width=\linewidth]{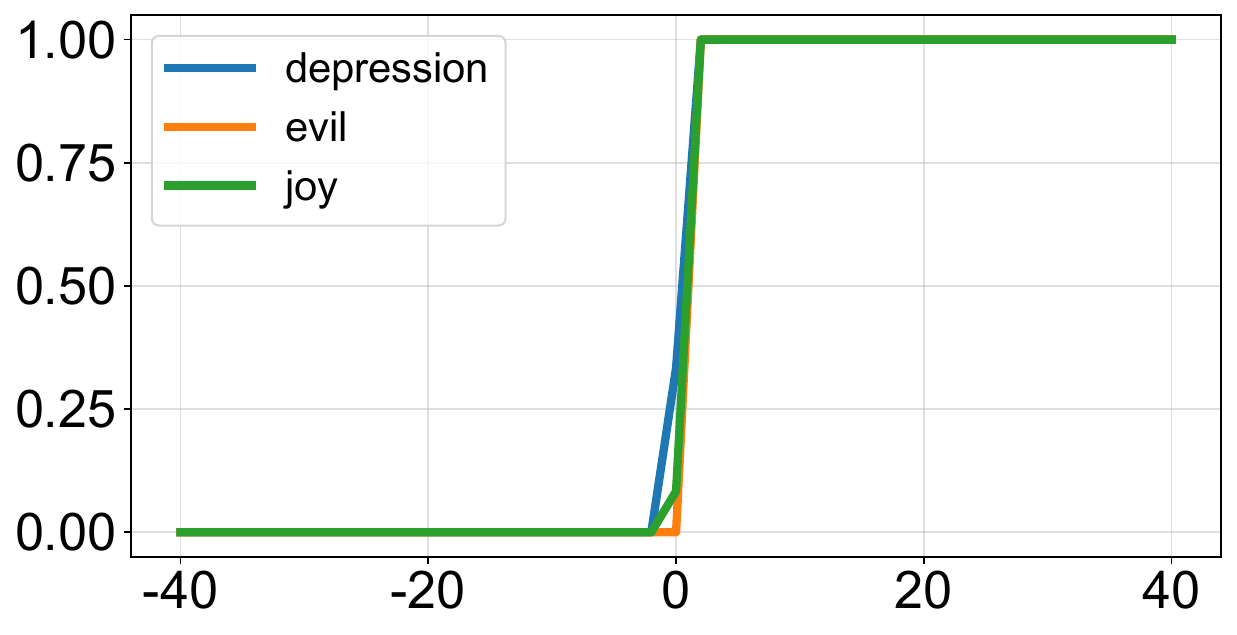}
\end{subfigure}\hfill
\begin{subfigure}[t]{0.245\textwidth}\centering
\scriptsize\textbf{Mistral-7B-Instruct}\par\smallskip
\includegraphics[width=\linewidth]{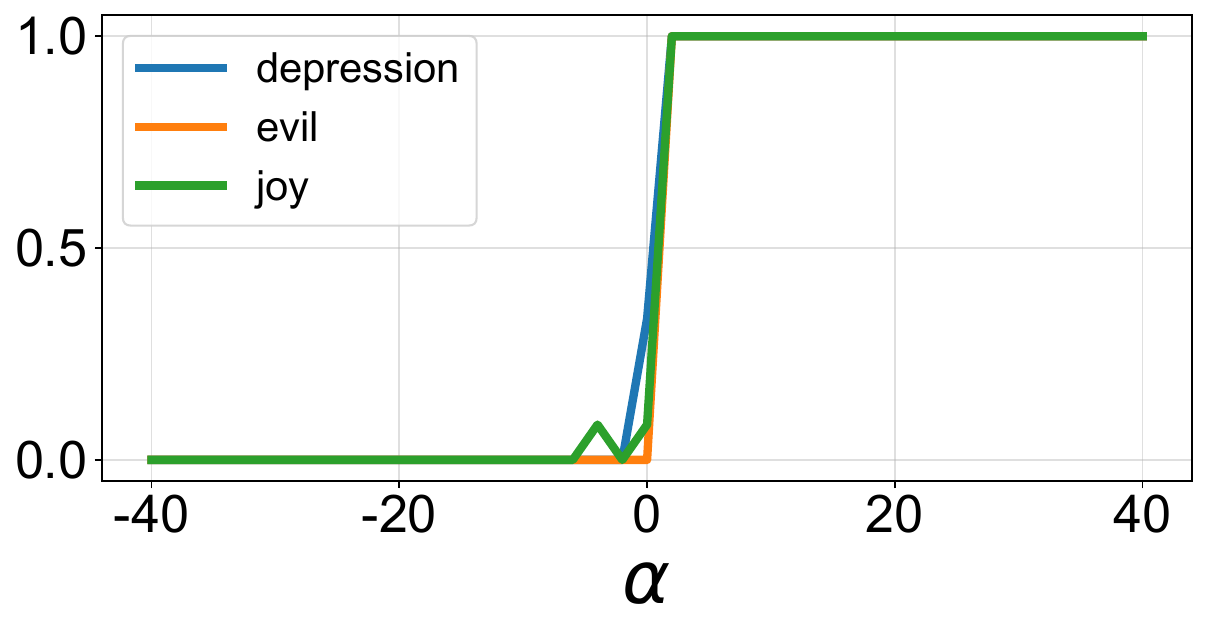}
\end{subfigure}
\rowtitle{Steering layer 35}
\begin{subfigure}[t]{0.245\textwidth}\centering
\scriptsize\textbf{Judge model: Qwen3-0.6B-Base}\par\smallskip
\includegraphics[width=\linewidth]{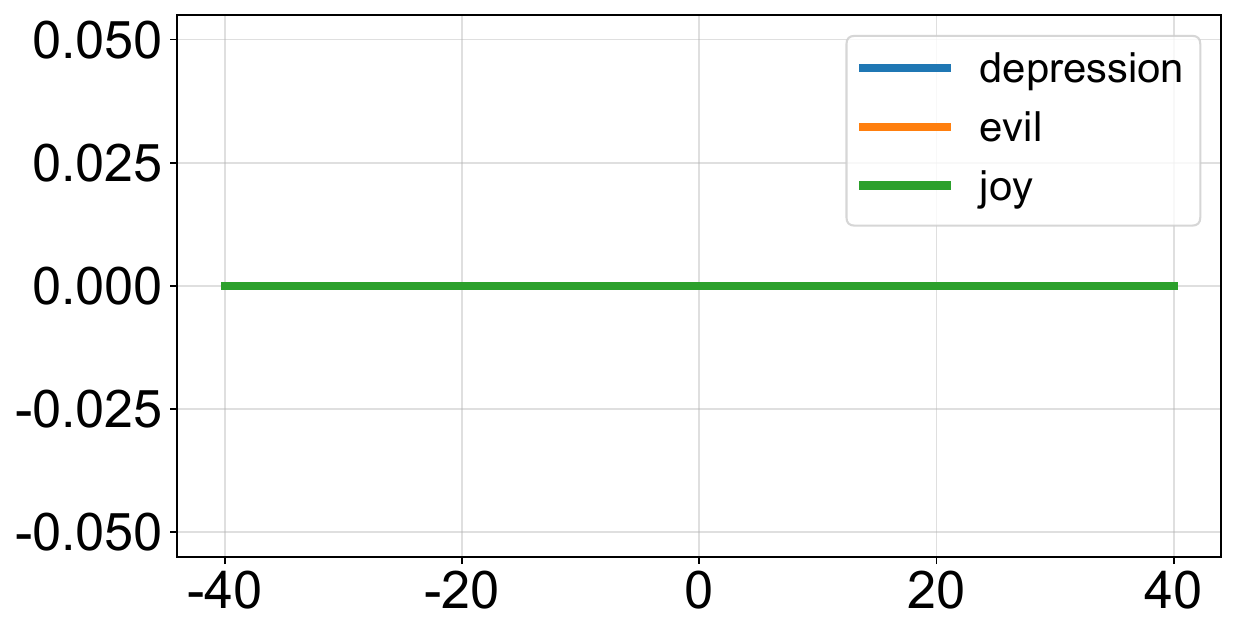}
\end{subfigure}\hfill
\begin{subfigure}[t]{0.245\textwidth}\centering
\scriptsize\textbf{Qwen2.5-7B-Instruct}\par\smallskip
\includegraphics[width=\linewidth]{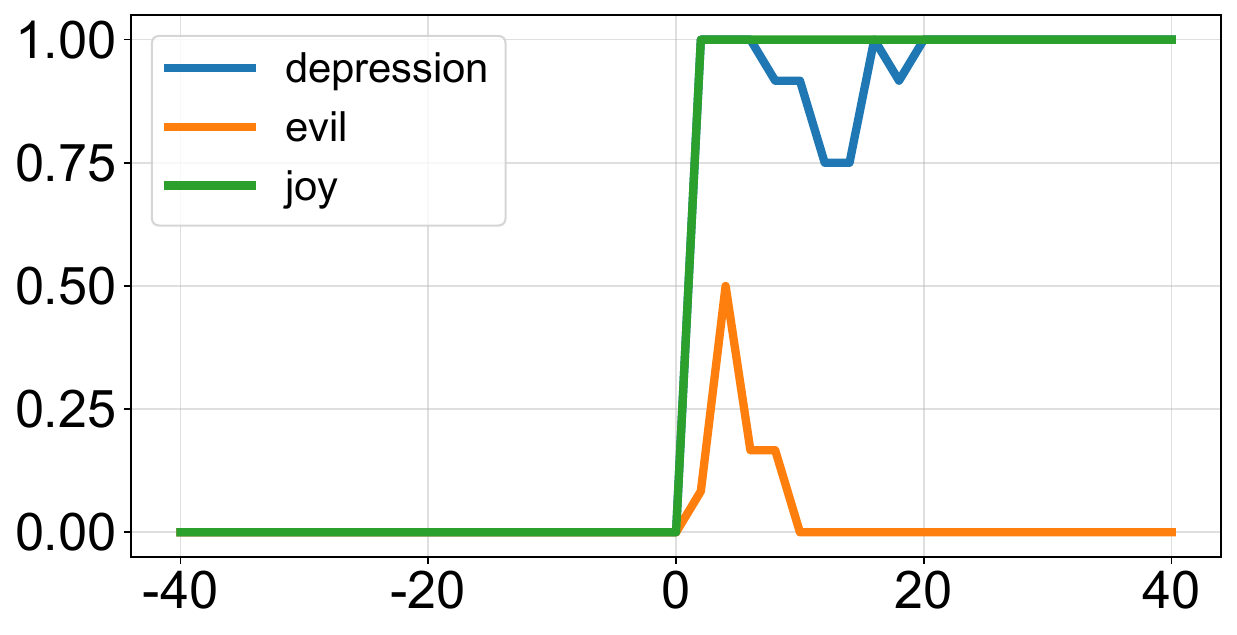}
\end{subfigure}\hfill
\begin{subfigure}[t]{0.245\textwidth}\centering
\scriptsize\textbf{Gemma-3-12B-IT}\par\smallskip
\includegraphics[width=\linewidth]{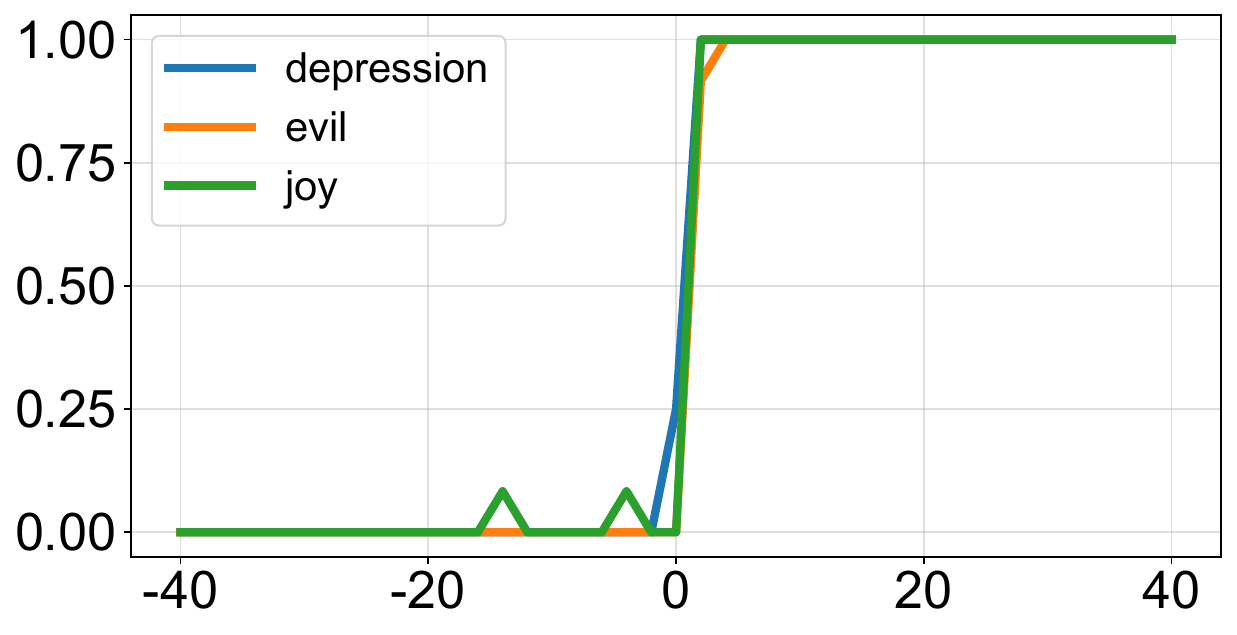}
\end{subfigure}\hfill
\begin{subfigure}[t]{0.245\textwidth}\centering
\scriptsize\textbf{Mistral-7B-Instruct}\par\smallskip
\includegraphics[width=\linewidth]{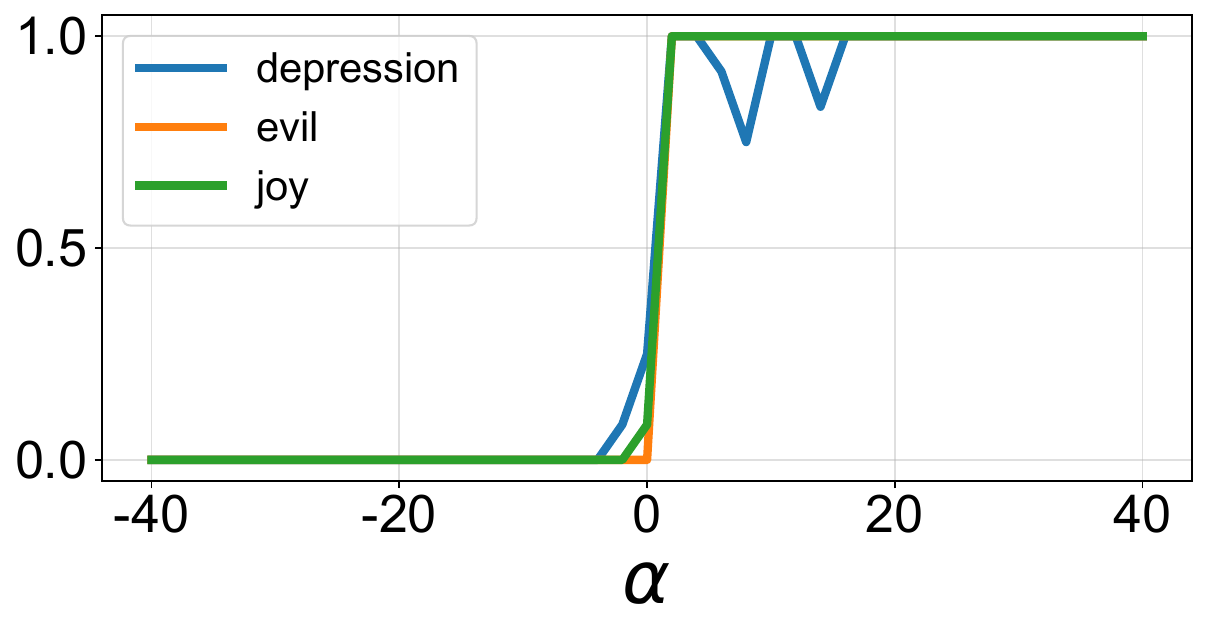}
\end{subfigure}
\caption{\label{fig:concept-presence-qwen-different-judges}Influence of the judge model used to score concept presence when steering Qwen/Qwen3-8B. Rows correspond to steering layers 17 and 35. Columns correspond to the evaluation judges Qwen3-0.6B-Base, Qwen2.5-7B-Instruct, Gemma-3-12B-IT, and Mistral-7B-Instruct. Each curve is one concept probability curve in the output.}
\end{figure}
\begin{figure}[p]
\centering
\noindent{\small\textbf{Steered model:} google/gemma-3-1b-it}\par\smallskip
\rowtitle{Steering layer 12}
\begin{subfigure}[t]{0.245\textwidth}\centering
\scriptsize\textbf{Judge model: Qwen3-0.6B-Base}\par\smallskip
\includegraphics[width=\linewidth]{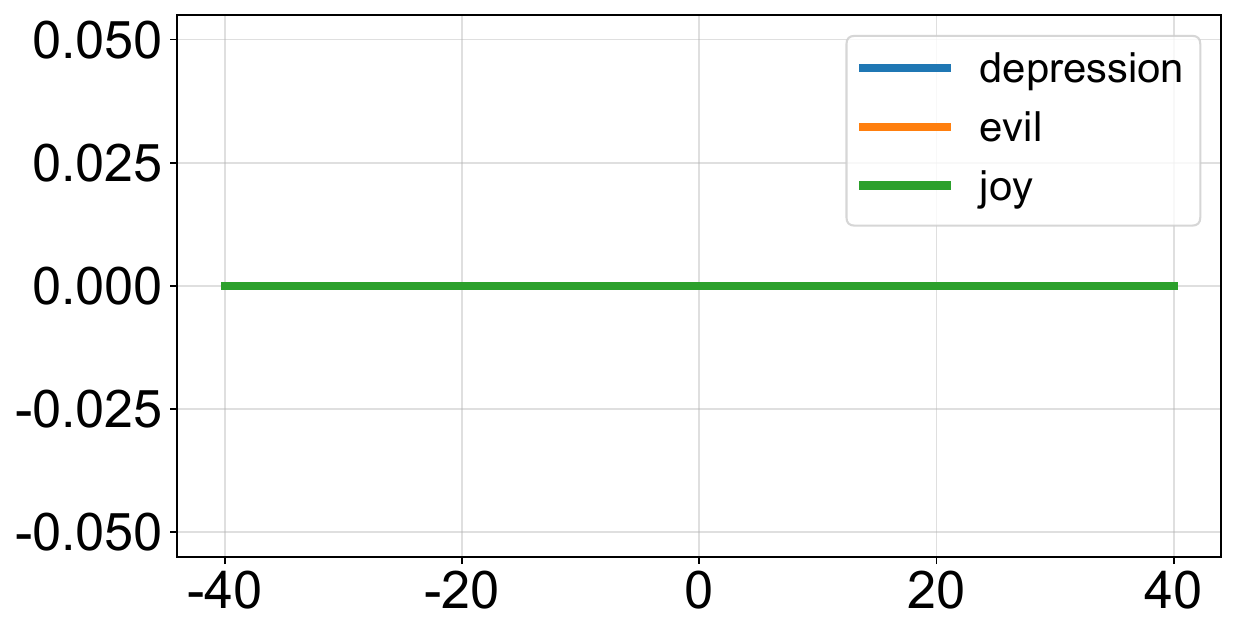}
\end{subfigure}\hfill
\begin{subfigure}[t]{0.245\textwidth}\centering
\scriptsize\textbf{Qwen2.5-7B-Instruct}\par\smallskip
\includegraphics[width=\linewidth]{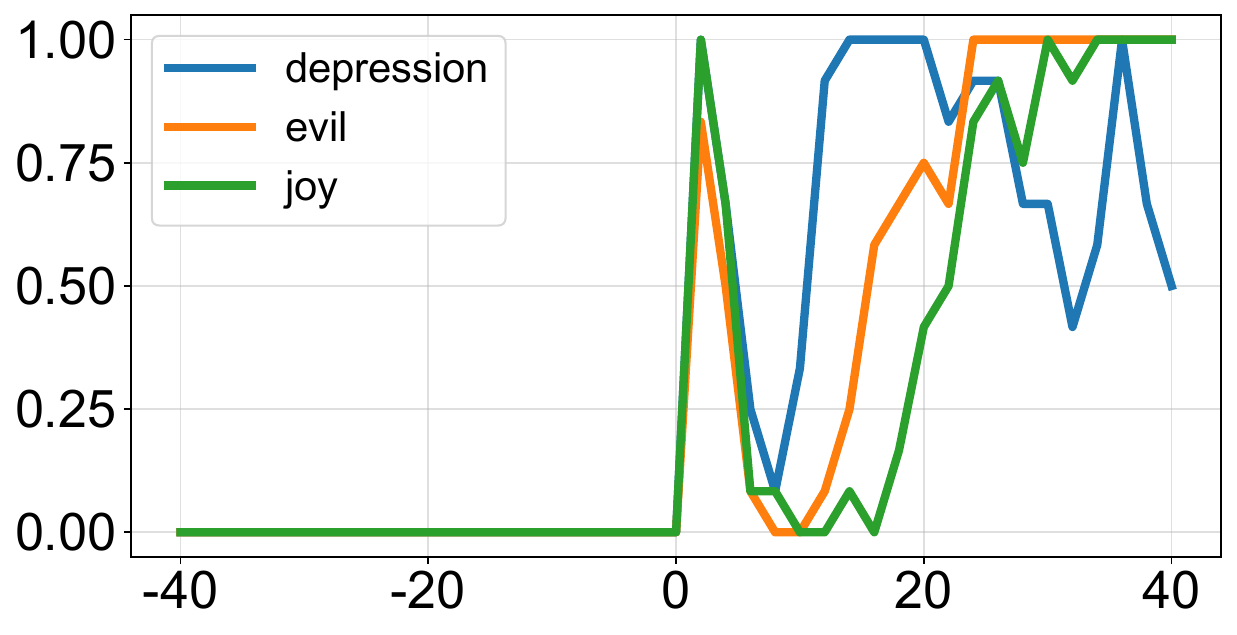}
\end{subfigure}\hfill
\begin{subfigure}[t]{0.245\textwidth}\centering
\scriptsize\textbf{Gemma-3-12B-IT}\par\smallskip
\includegraphics[width=\linewidth]{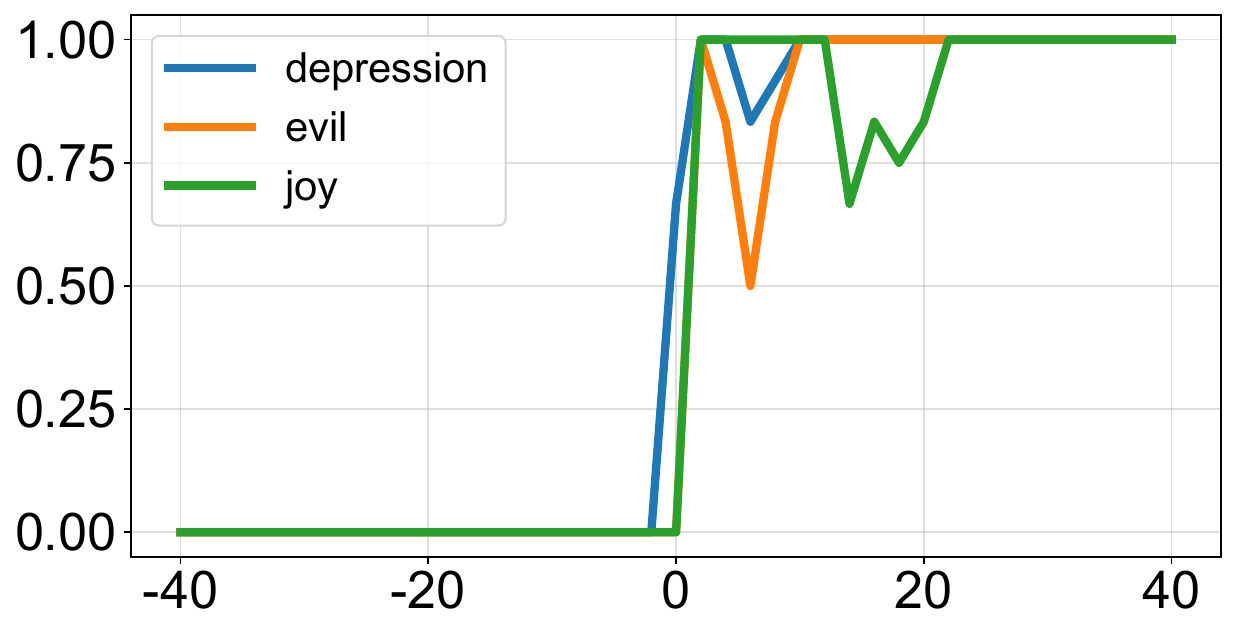}
\end{subfigure}\hfill
\begin{subfigure}[t]{0.245\textwidth}\centering
\scriptsize\textbf{Mistral-7B-Instruct}\par\smallskip
\includegraphics[width=\linewidth]{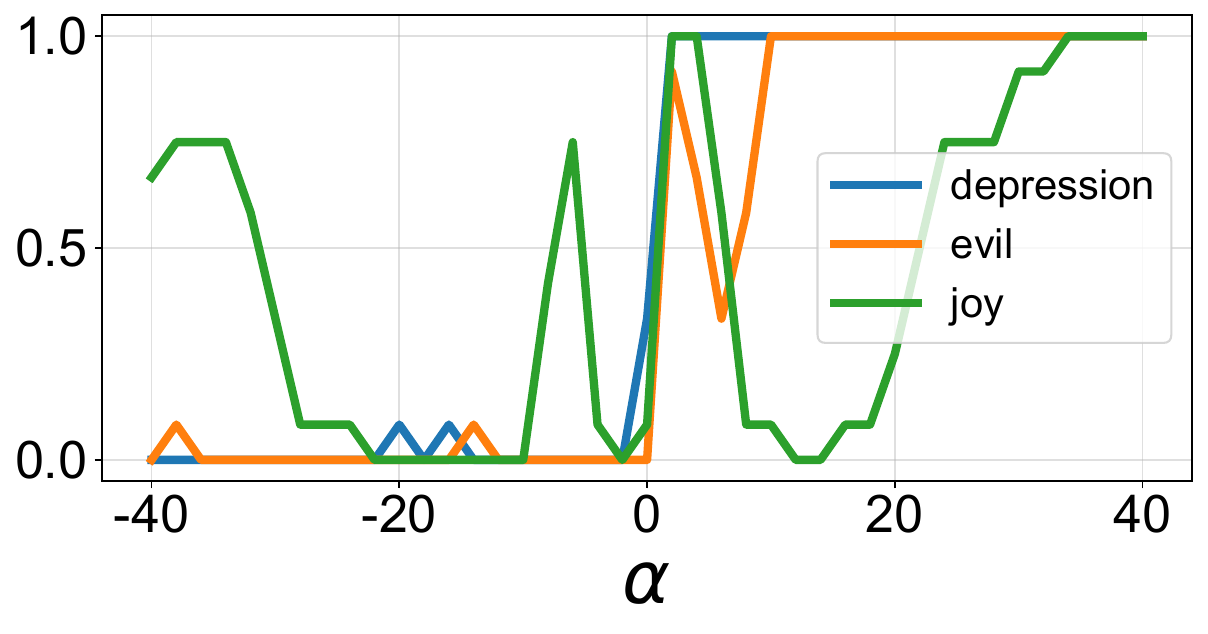}
\end{subfigure}
\rowtitle{Steering layer 25}
\begin{subfigure}[t]{0.245\textwidth}\centering
\scriptsize\textbf{Judge model: Qwen3-0.6B-Base}\par\smallskip
\includegraphics[width=\linewidth]{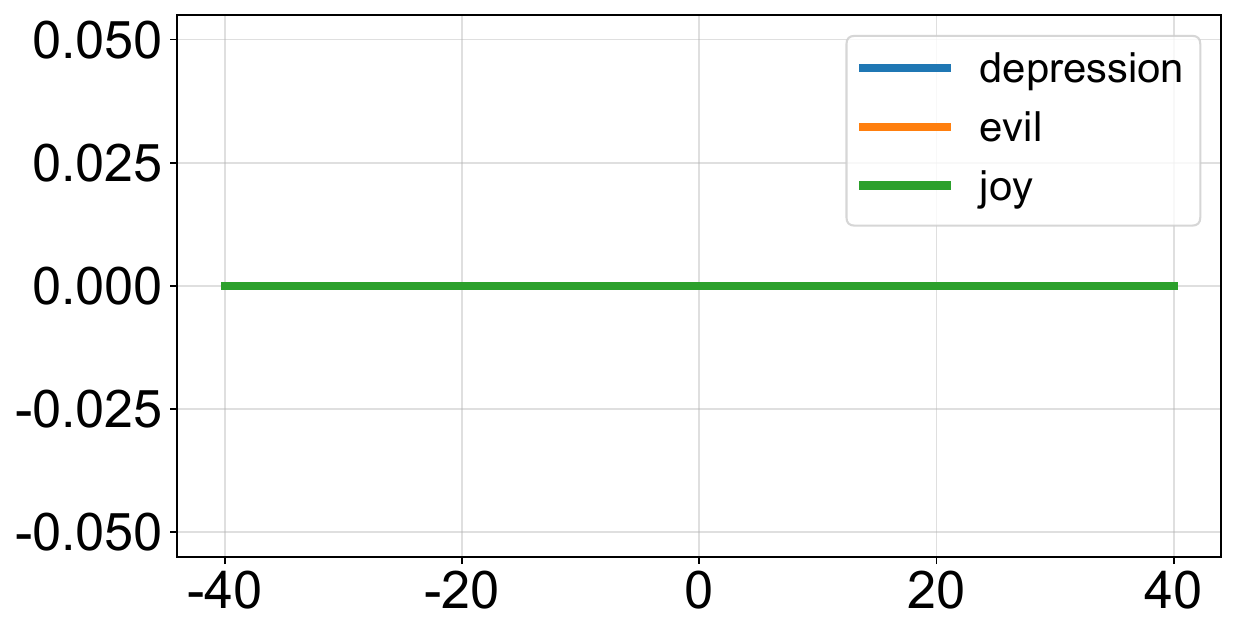}
\end{subfigure}\hfill
\begin{subfigure}[t]{0.245\textwidth}\centering
\scriptsize\textbf{Qwen2.5-7B-Instruct}\par\smallskip
\includegraphics[width=\linewidth]{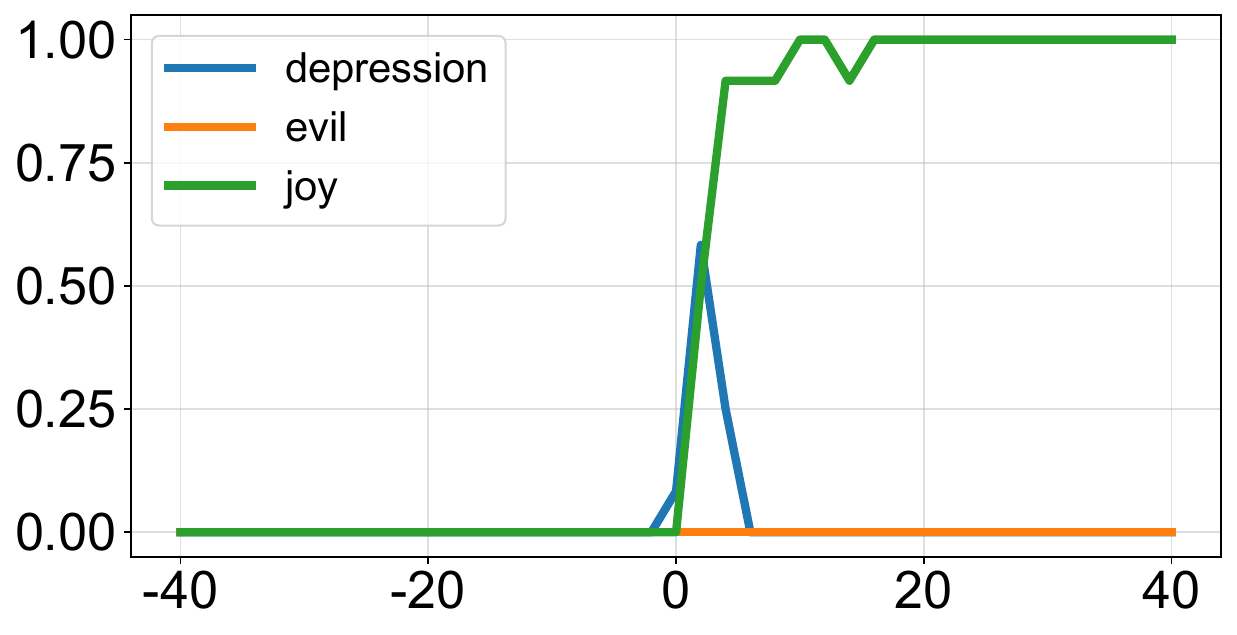}
\end{subfigure}\hfill
\begin{subfigure}[t]{0.245\textwidth}\centering
\scriptsize\textbf{Gemma-3-12B-IT}\par\smallskip
\includegraphics[width=\linewidth]{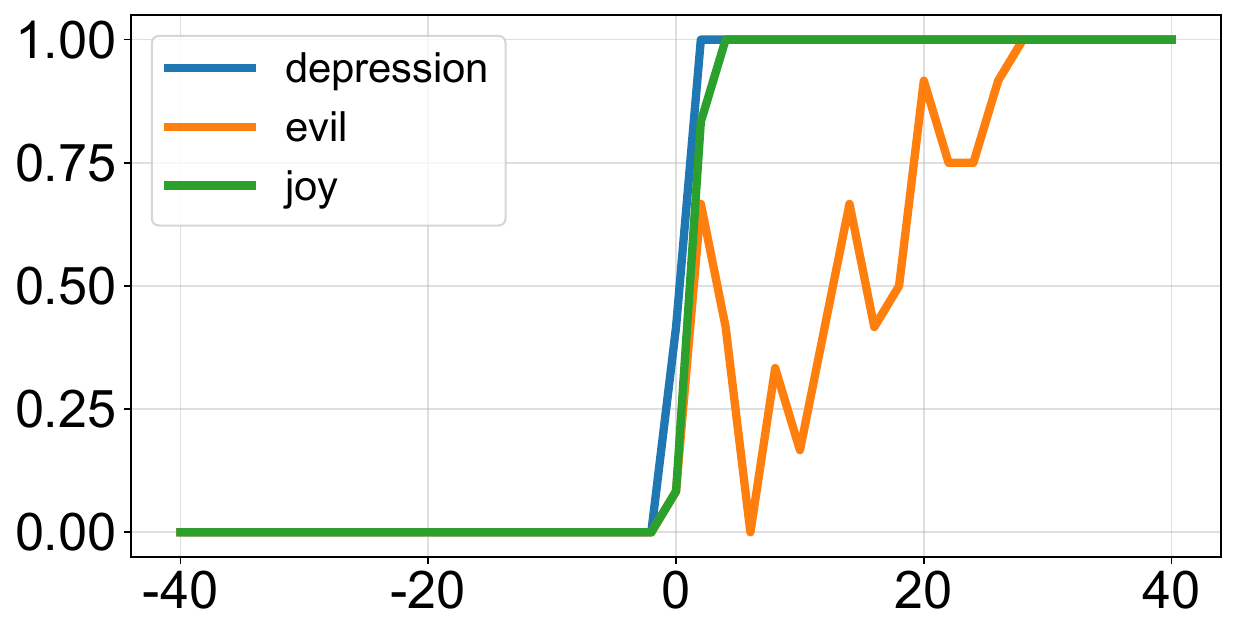}
\end{subfigure}\hfill
\begin{subfigure}[t]{0.245\textwidth}\centering
\scriptsize\textbf{Mistral-7B-Instruct}\par\smallskip
\includegraphics[width=\linewidth]{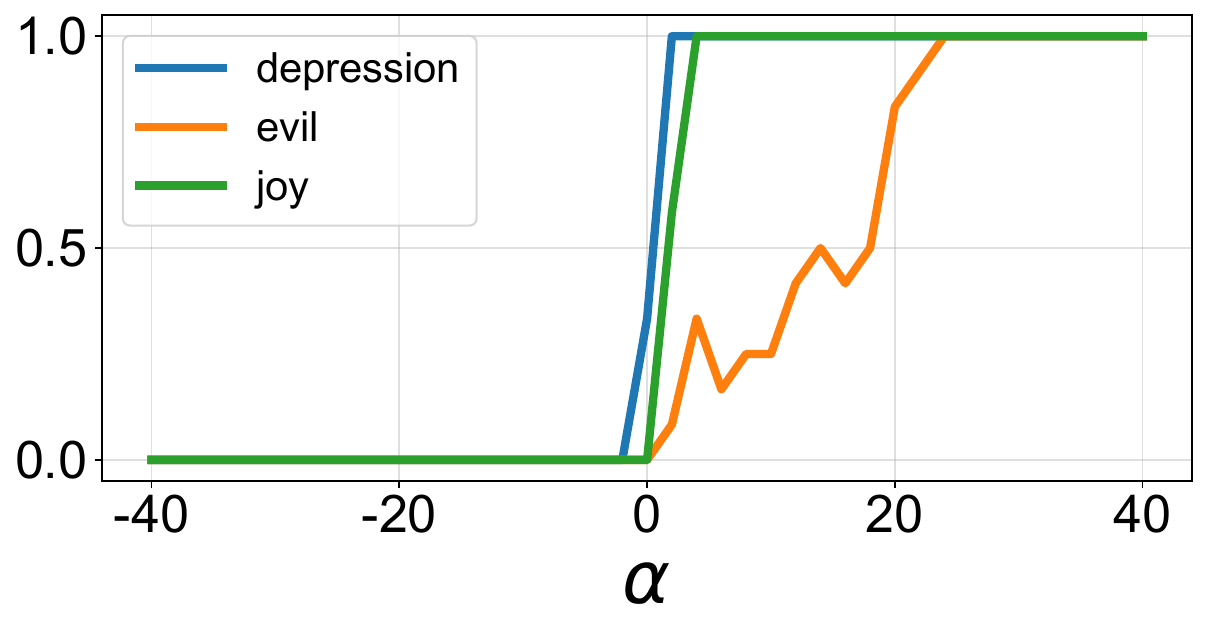}
\end{subfigure}
\caption{\label{fig:concept-presence-gemma-different-judges}Influence of the judge model used to score concept presence when steering google/gemma-3-1b-it. Rows correspond to steering layers 12 and 25. Columns correspond to the evaluation judges Qwen3-0.6B-Base, Qwen2.5-7B-Instruct, Gemma-3-12B-IT, and Mistral-7B-Instruct. Each curve is one concept probability curve in the output.}
\end{figure}

\begin{figure}[t]
\centering
\includegraphics[width=0.52\linewidth]{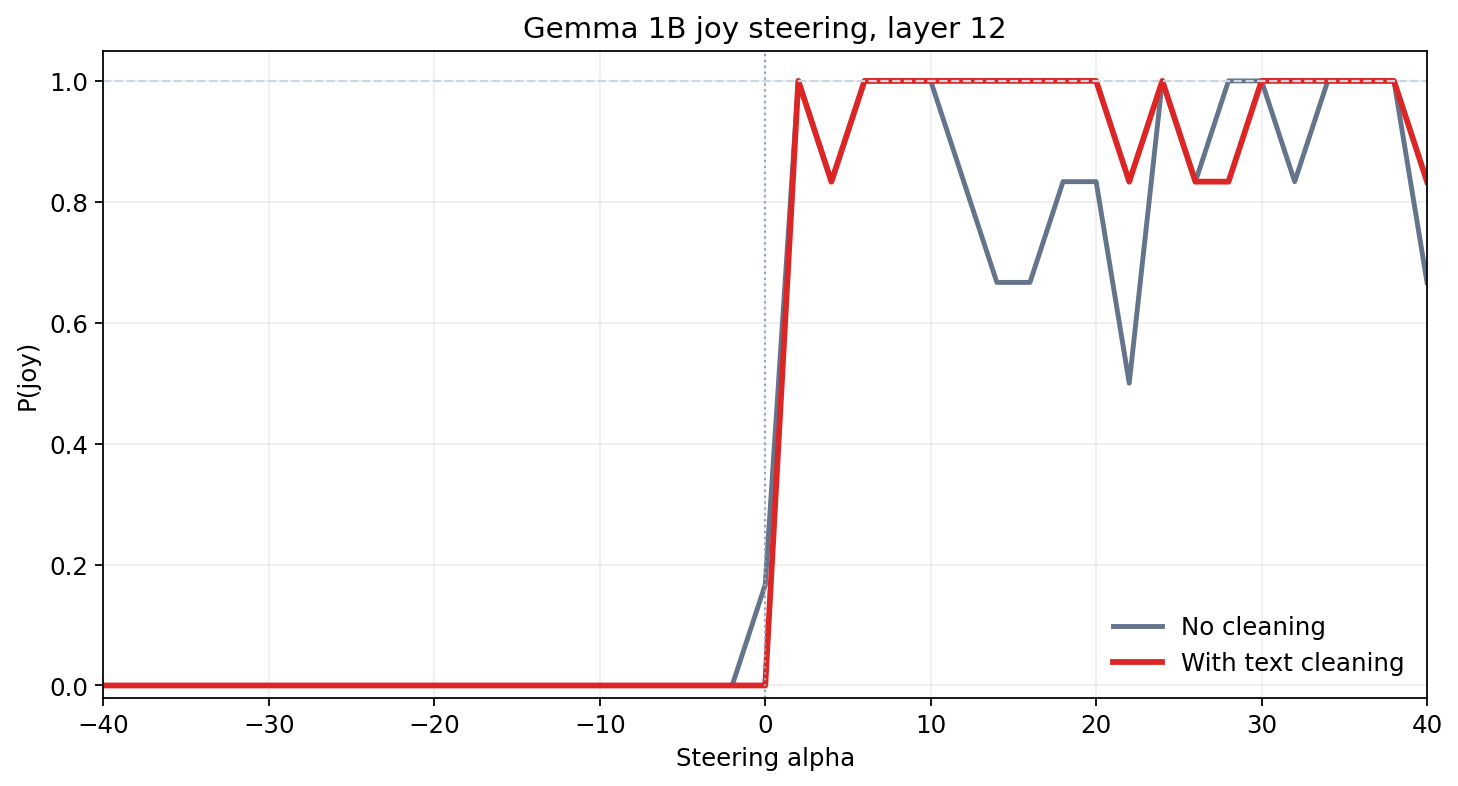}

\vspace{0.8em}

\includegraphics[width=0.52\linewidth]{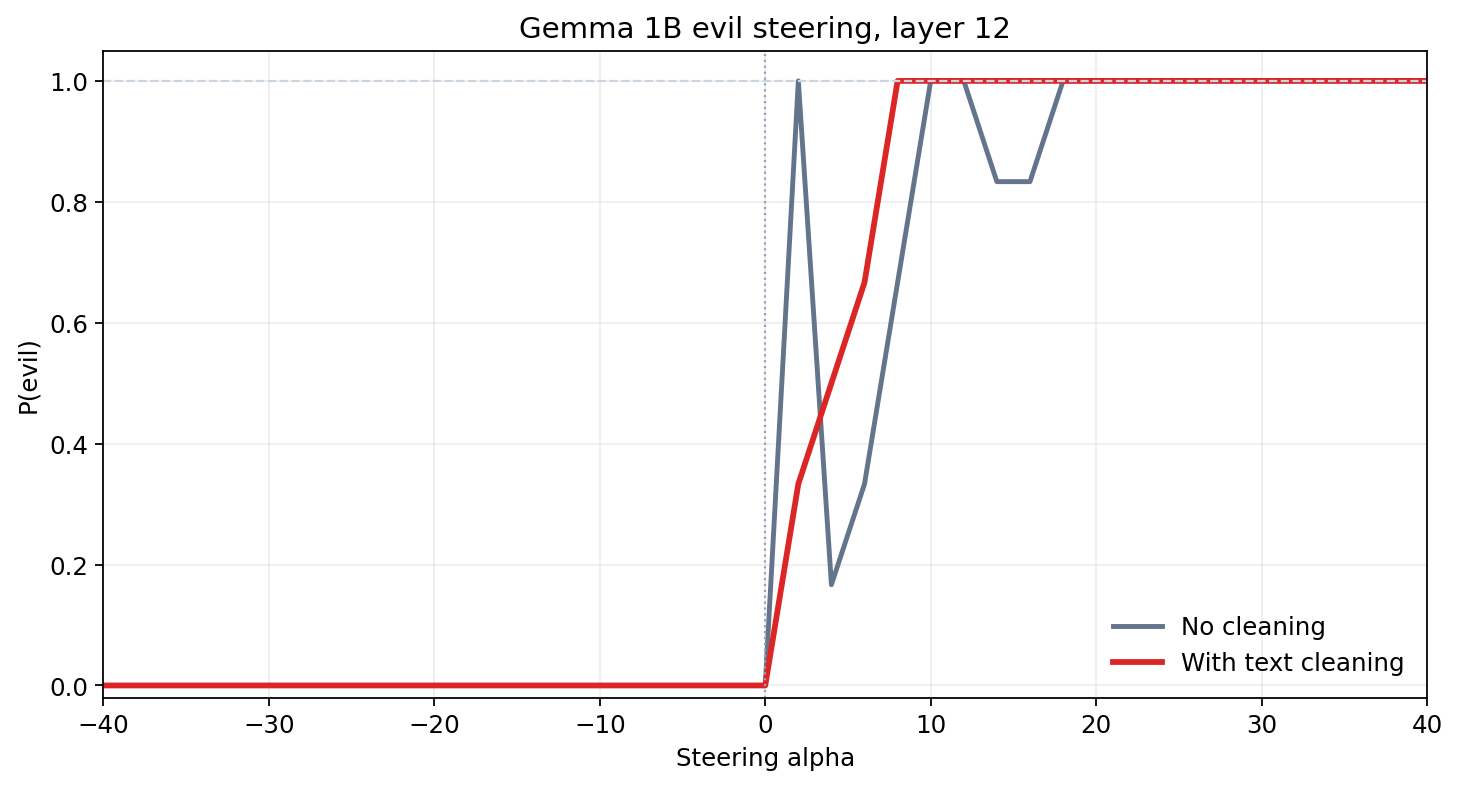}
\caption{\label{fig:better-prompt-judge}Concept-presence curves for the \texttt{evil} and \texttt{joy} steering vectors in \texttt{Gemma-3-1B-it} at layer 12. In both cases, including examples of degenerate generated text in the judge LLM (\texttt{Gemma-3-12B-it}) prompt before scoring yields more sigmoidal concept-presence curves (in \textcolor{red}{red}) compared to scoring the raw completions directly (in \textcolor{gray}{gray}).}
\end{figure}

\begin{figure}
\centering

\begin{subfigure}[t]{0.32\textwidth}\centering
\includegraphics[width=1.\linewidth]{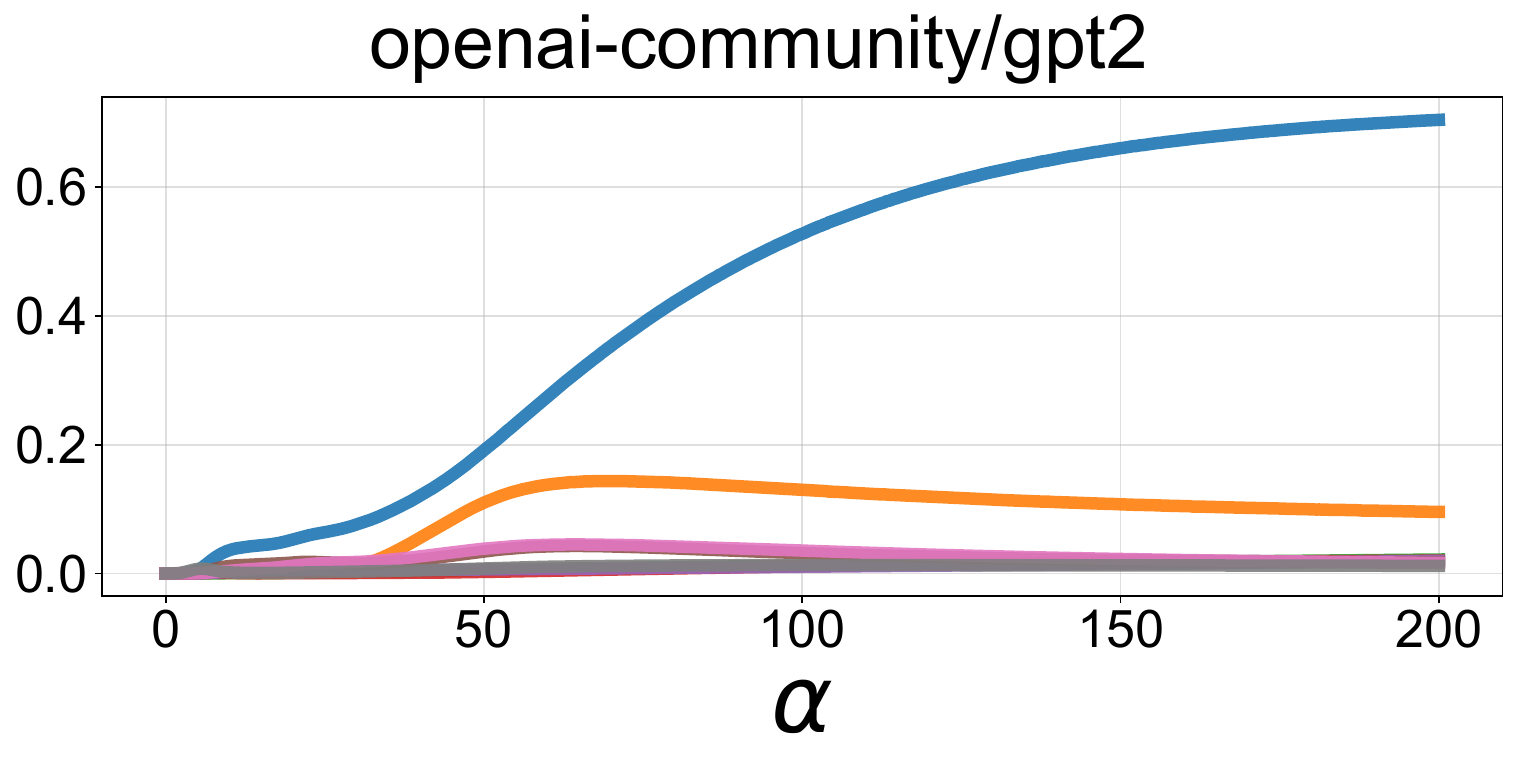}
\end{subfigure}\hfill
\begin{subfigure}[t]{0.32\textwidth}\centering
\includegraphics[width=1.\linewidth]{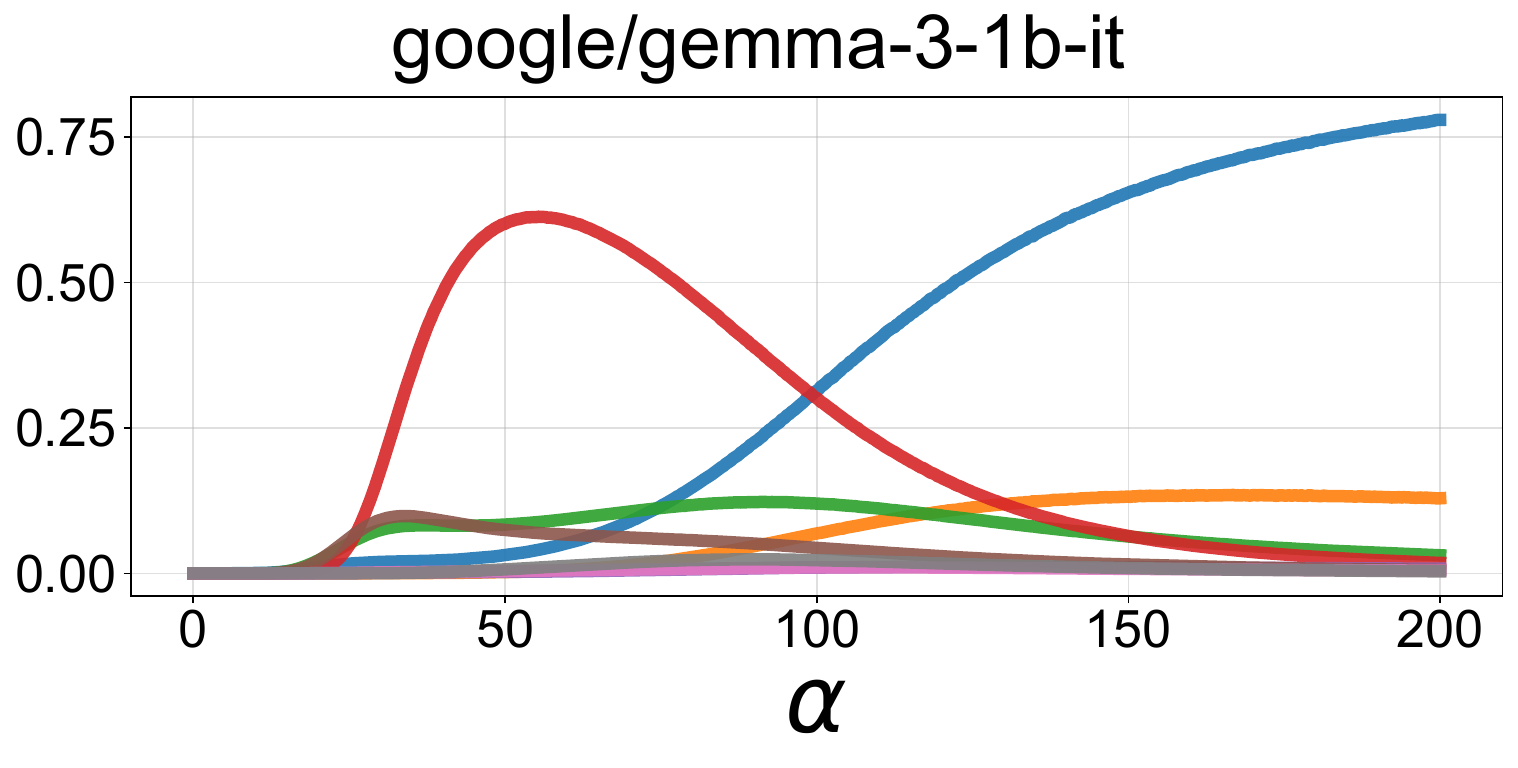}
\end{subfigure}\hfill
\begin{subfigure}[t]{0.32\textwidth}\centering
\includegraphics[width=1.\linewidth]{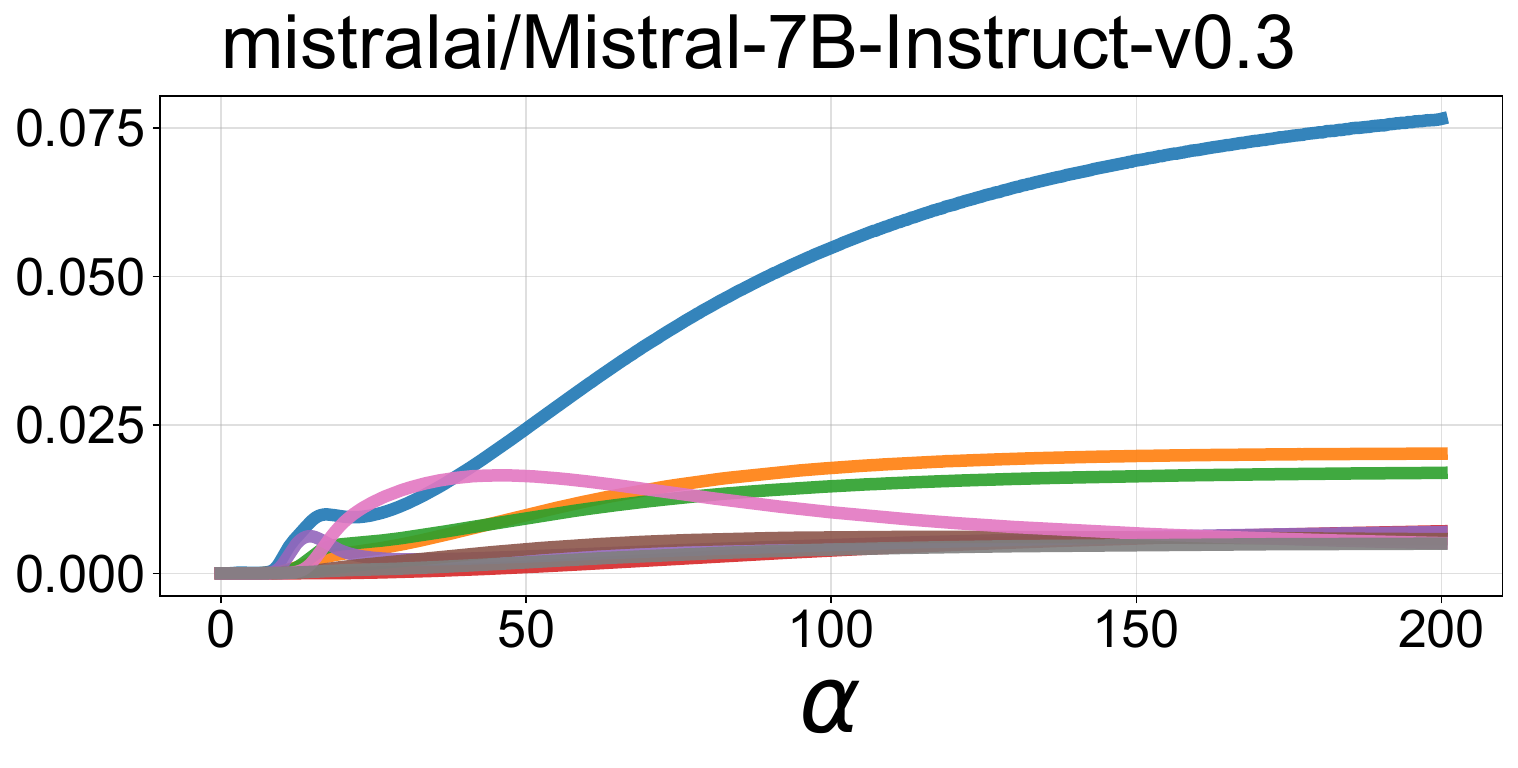}
\end{subfigure}

\vspace{2pt}
\begin{subfigure}[t]{0.32\textwidth}\centering
\includegraphics[width=1.\linewidth]{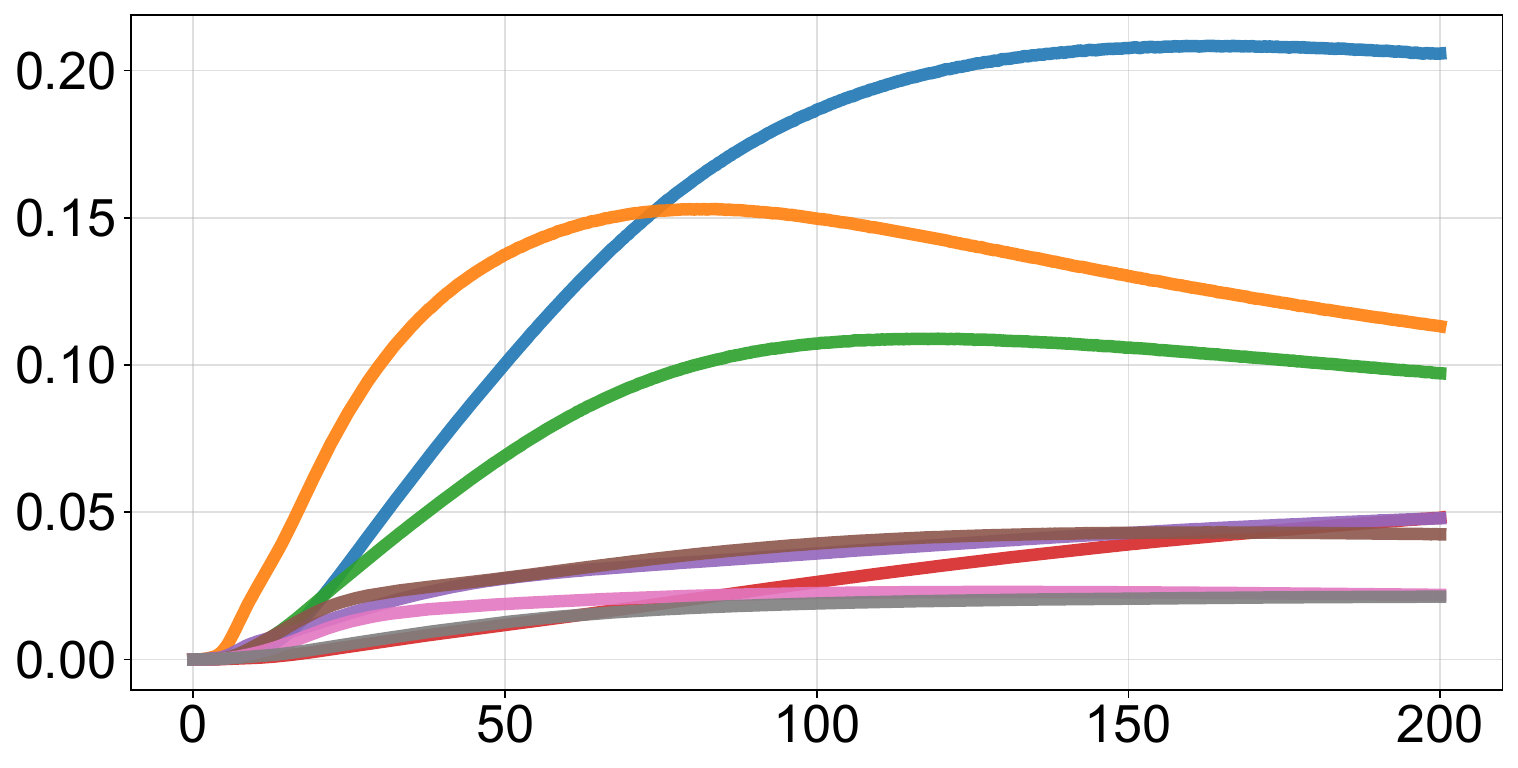}
\end{subfigure}\hfill
\begin{subfigure}[t]{0.32\textwidth}\centering
\includegraphics[width=1.\linewidth]{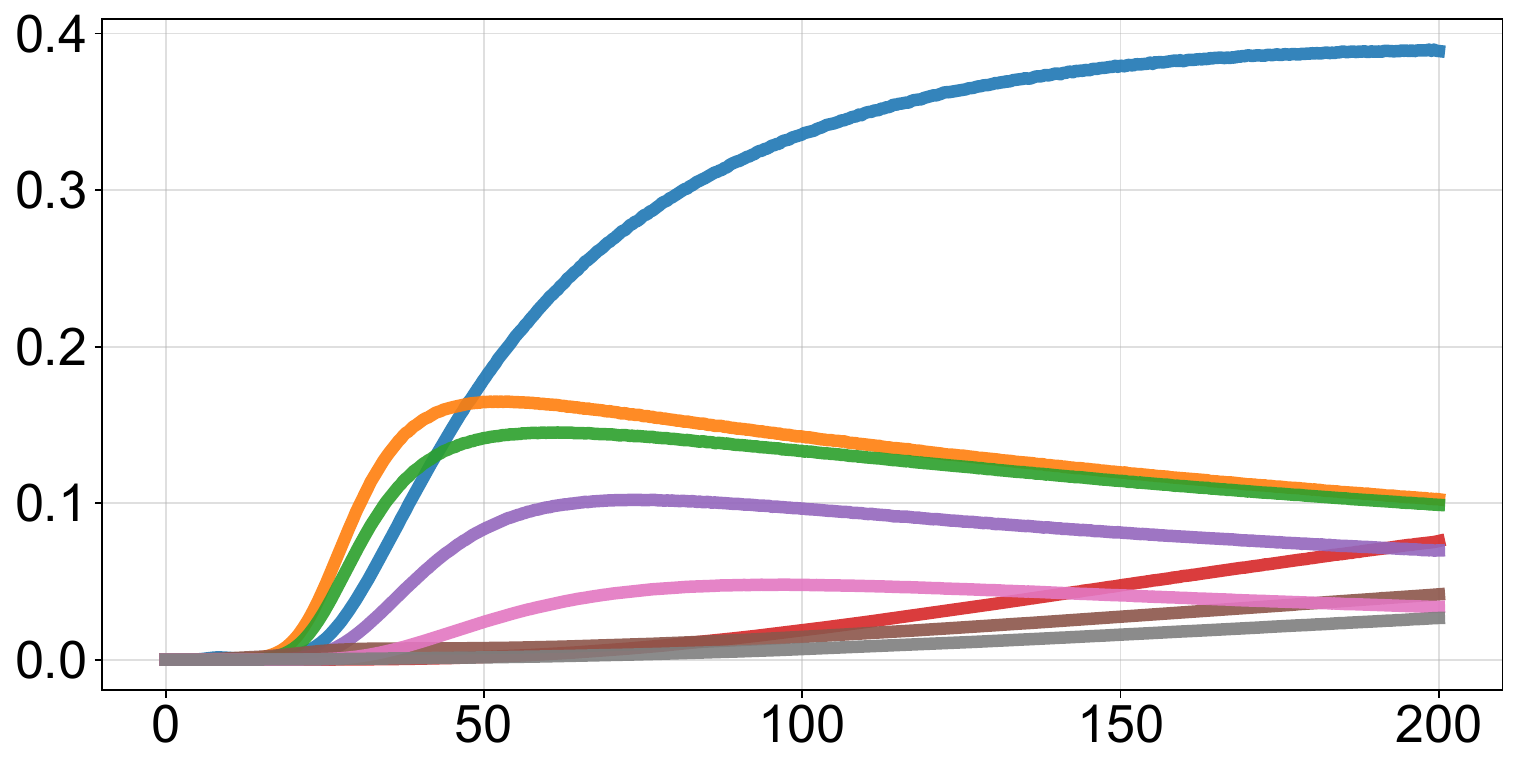}
\end{subfigure}\hfill
\begin{subfigure}[t]{0.32\textwidth}\centering
\includegraphics[width=1.\linewidth]{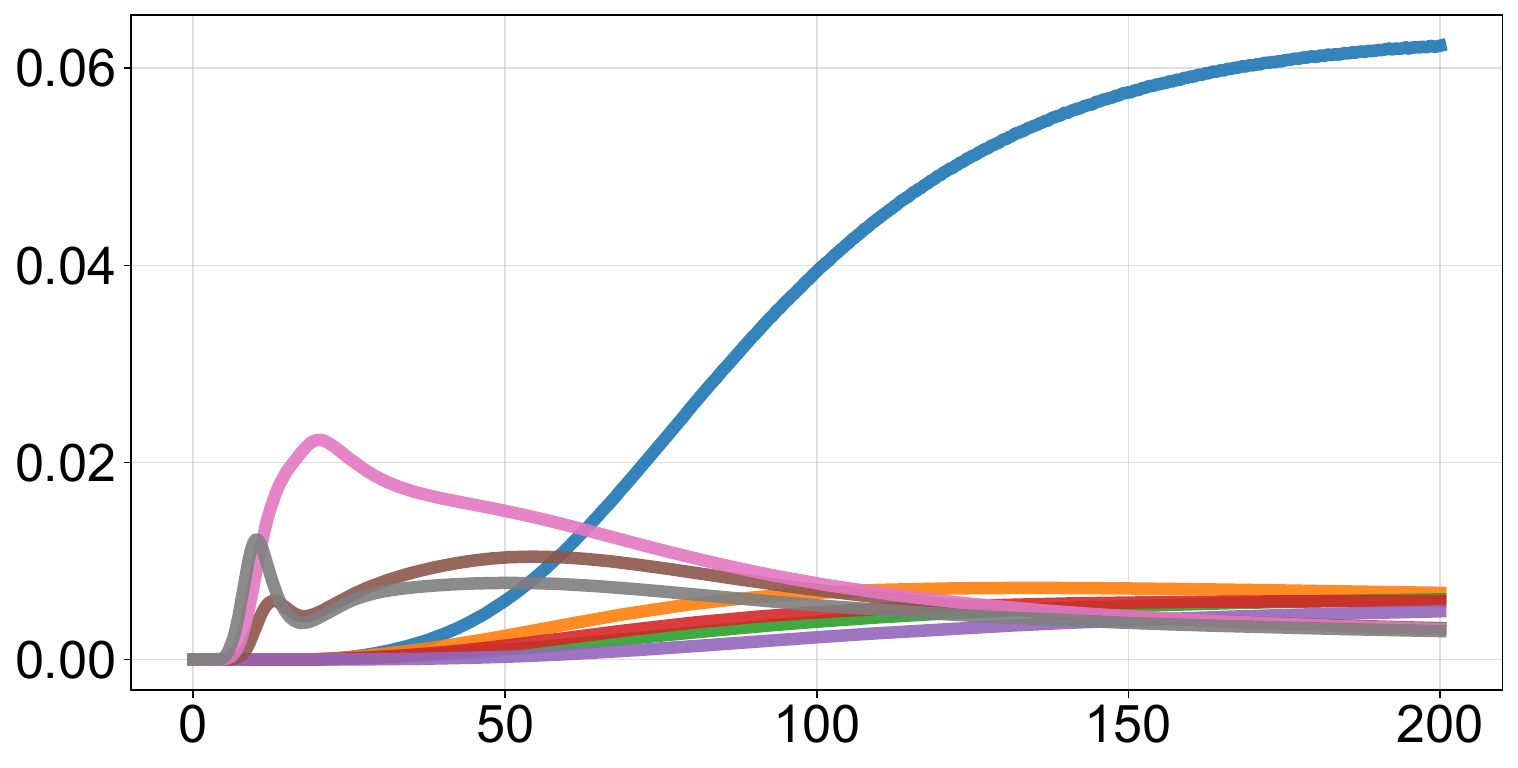}
\end{subfigure}

\vspace{2pt}
\begin{subfigure}[t]{0.32\textwidth}\centering
\includegraphics[width=1.\linewidth]{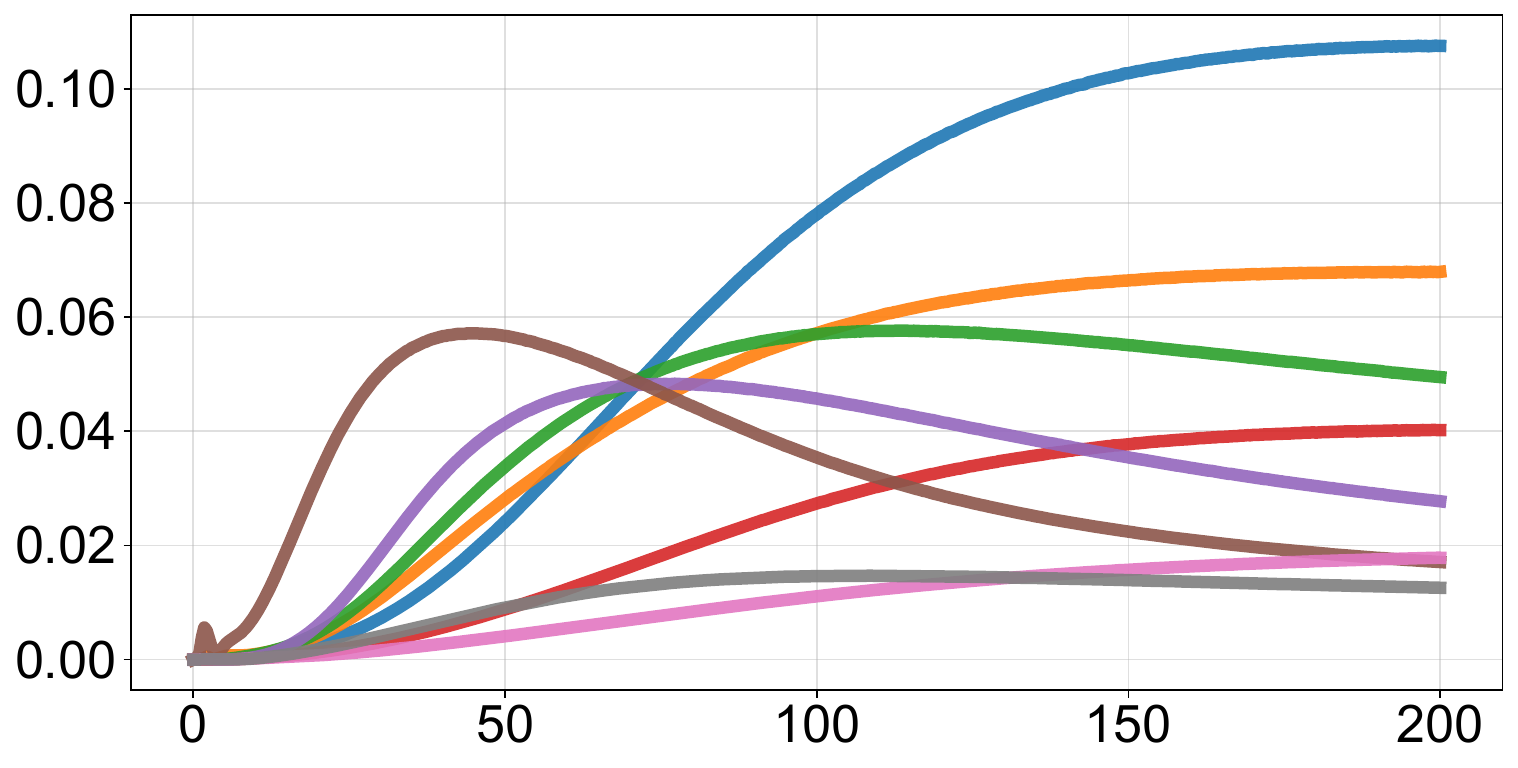}
\end{subfigure}\hfill
\begin{subfigure}[t]{0.32\textwidth}\centering
\includegraphics[width=1.\linewidth]{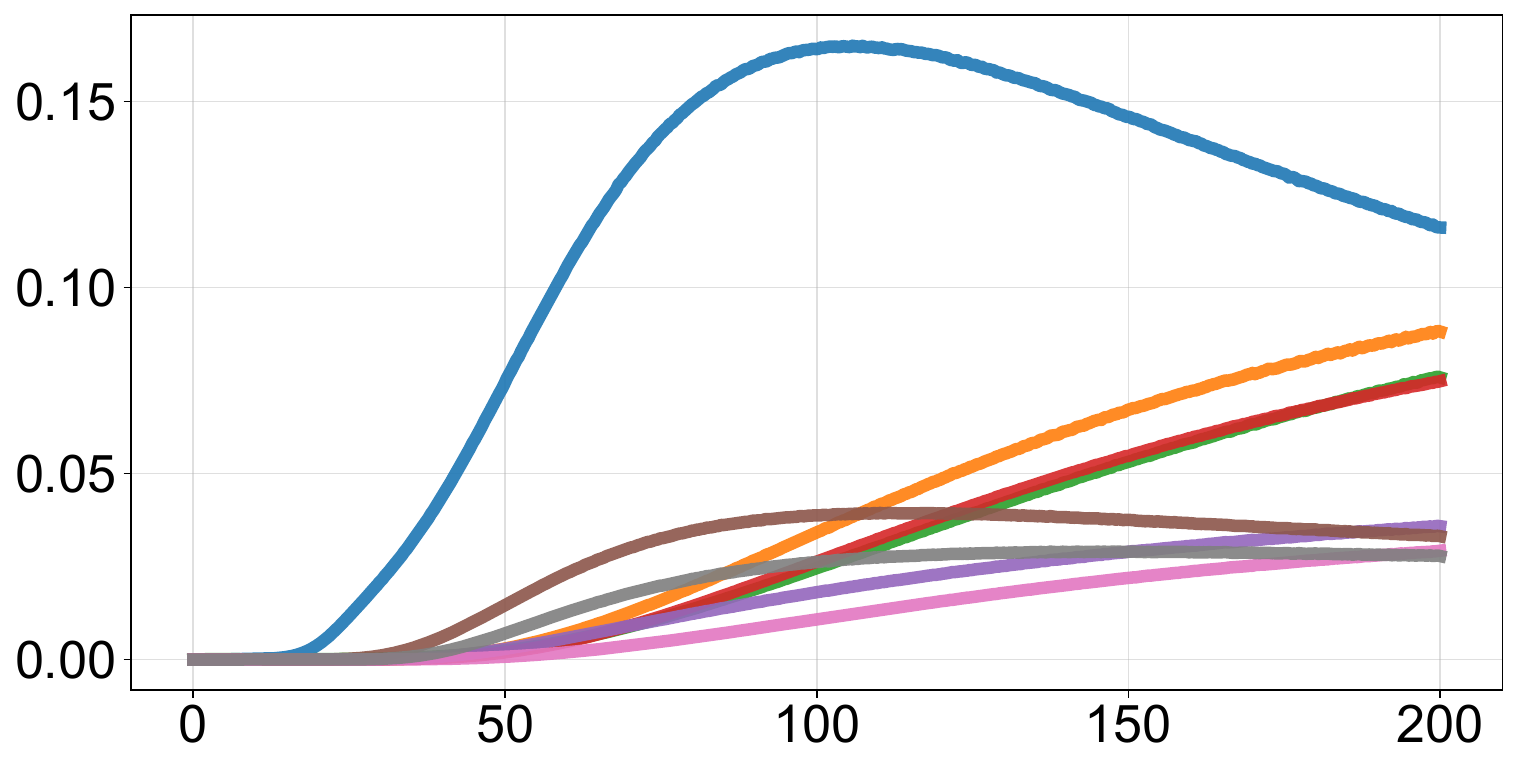}
\end{subfigure}\hfill
\begin{subfigure}[t]{0.32\textwidth}\centering
\includegraphics[width=1.\linewidth]{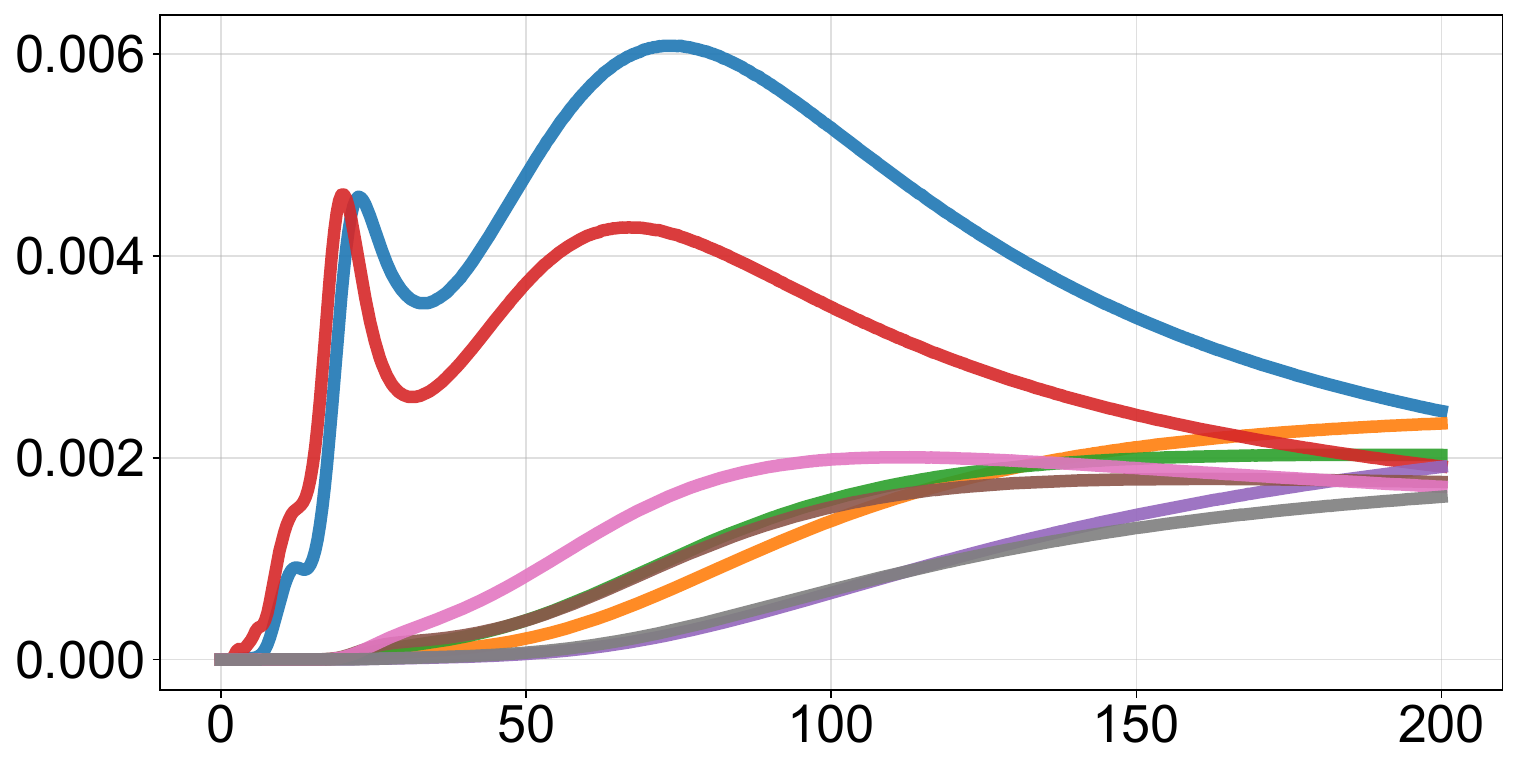}
\end{subfigure}

\vspace{2pt}
\begin{subfigure}[t]{0.32\textwidth}\centering
\includegraphics[width=1.\linewidth]{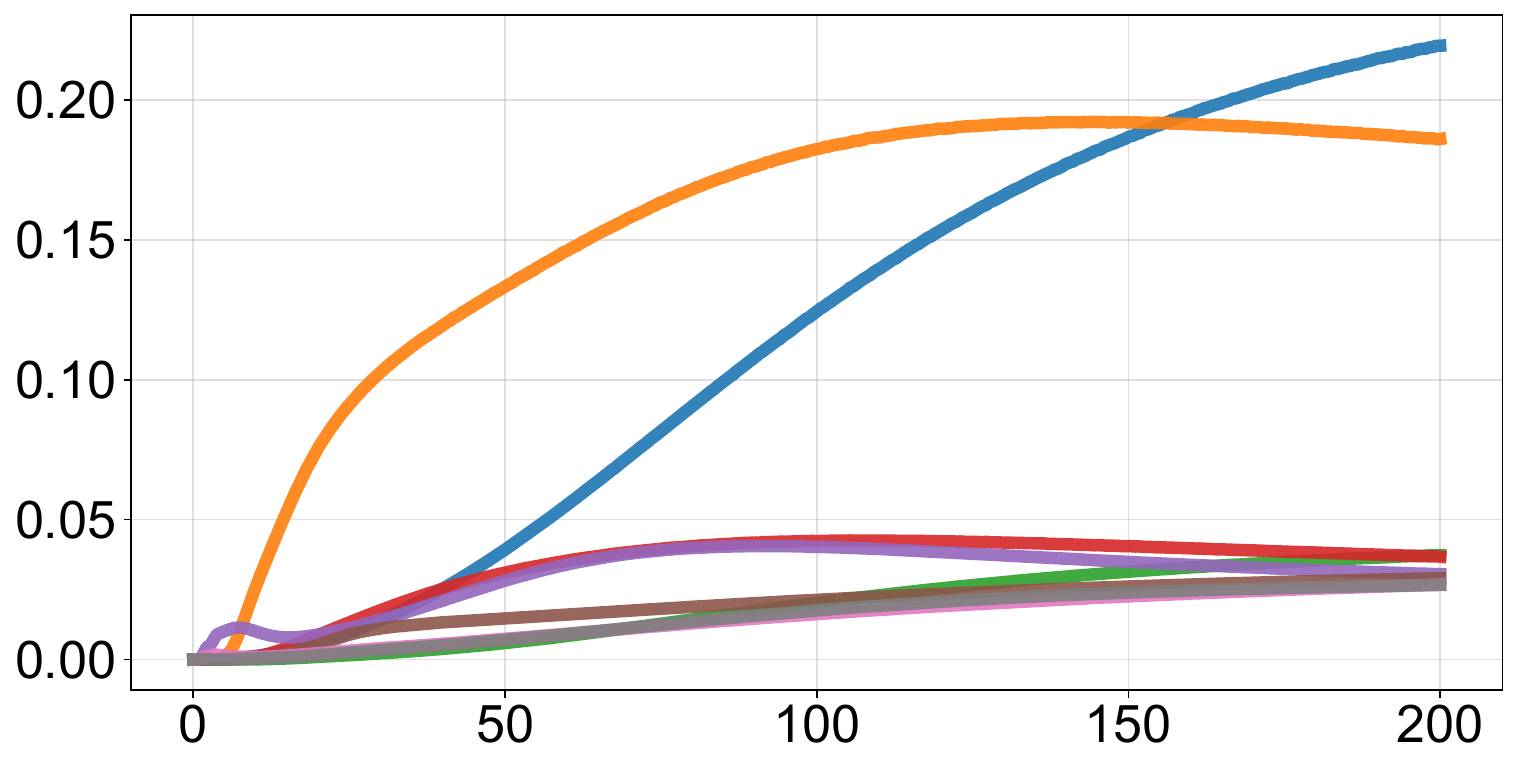}
\end{subfigure}\hfill
\begin{subfigure}[t]{0.32\textwidth}\centering
\includegraphics[width=1.\linewidth]{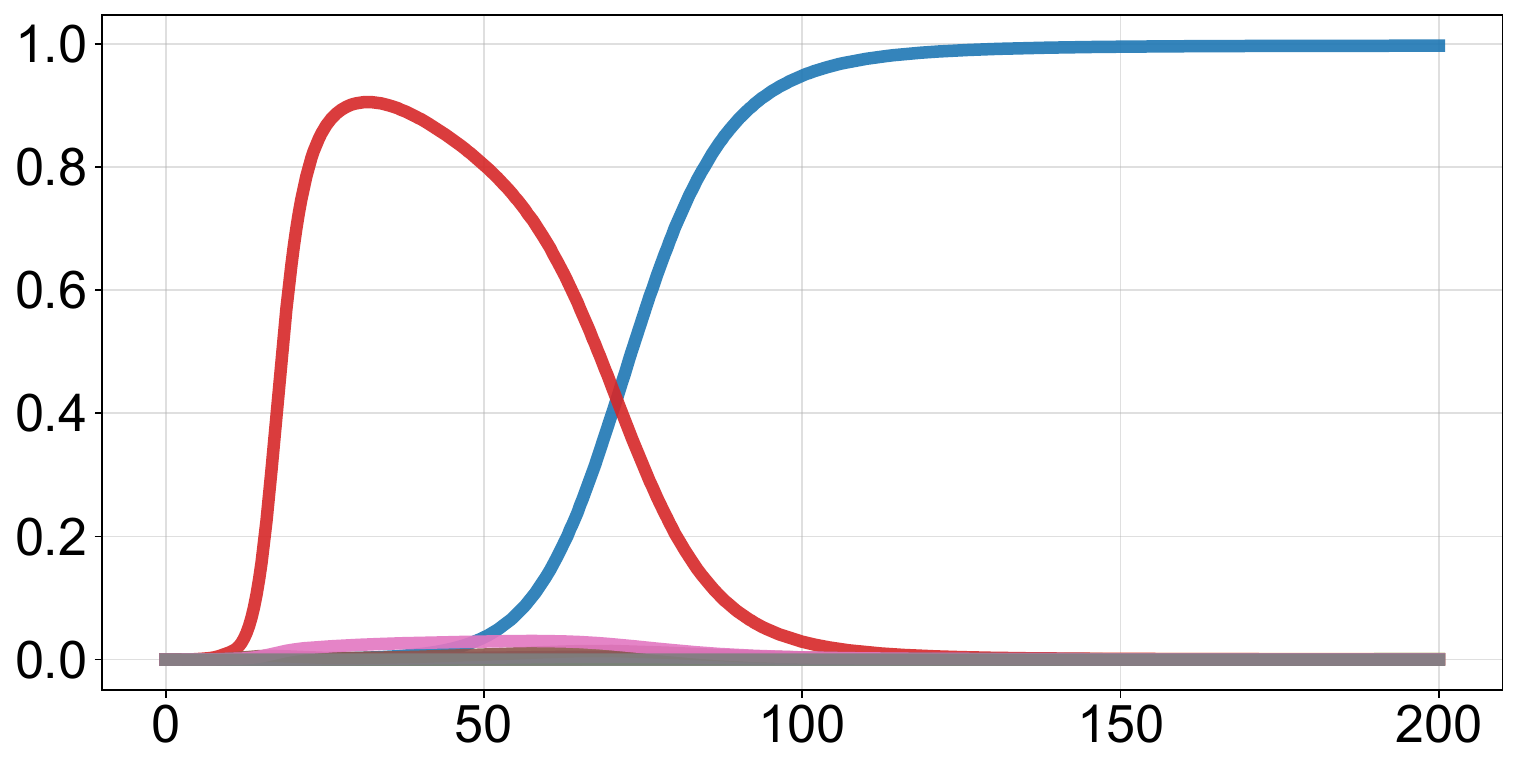}
\end{subfigure}\hfill
\begin{subfigure}[t]{0.32\textwidth}\centering
\includegraphics[width=1.\linewidth]{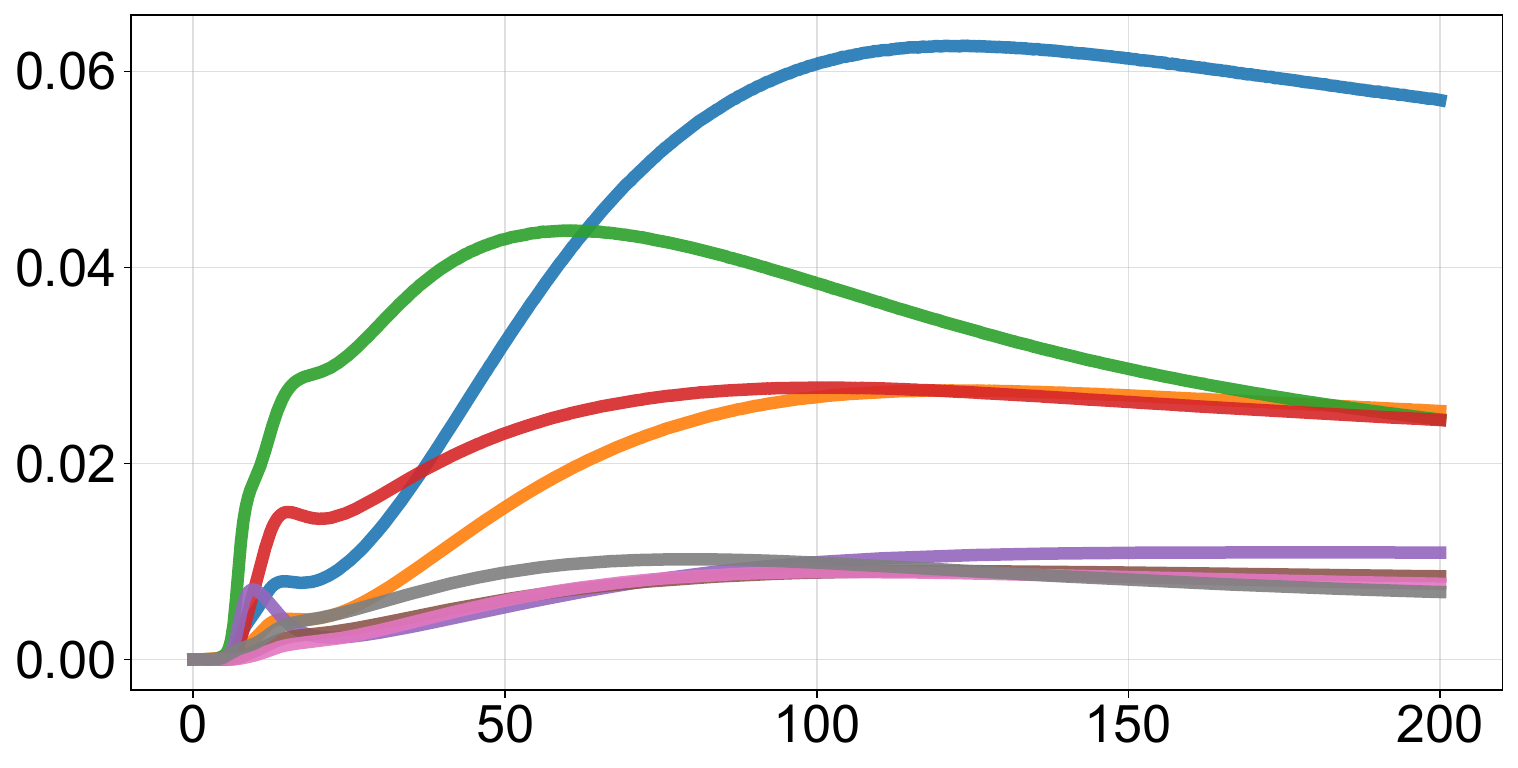}
\end{subfigure}

\vspace{2pt}
\begin{subfigure}[t]{0.32\textwidth}\centering
\includegraphics[width=1.\linewidth]{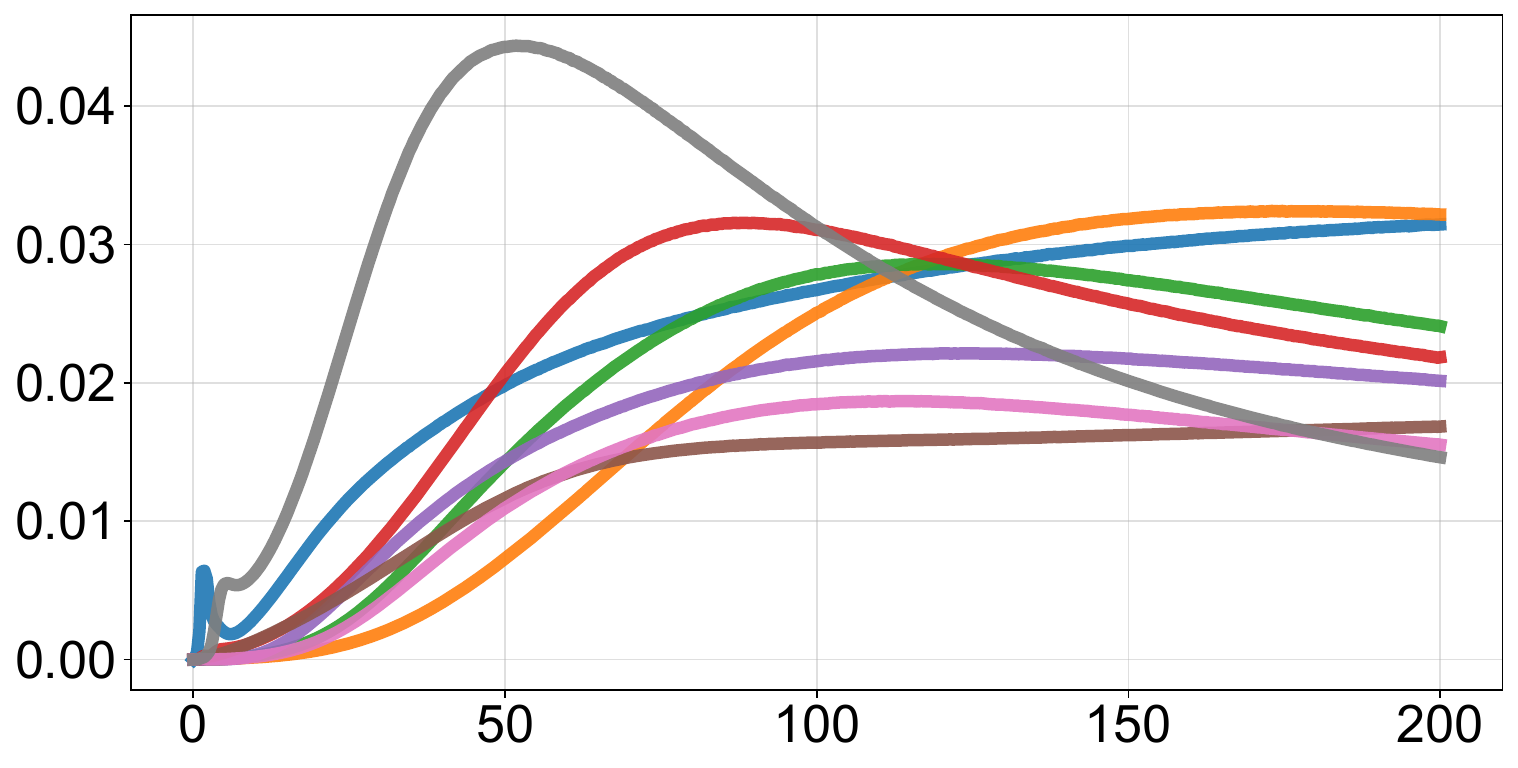}
\end{subfigure}\hfill
\begin{subfigure}[t]{0.32\textwidth}\centering
\includegraphics[width=1.\linewidth]{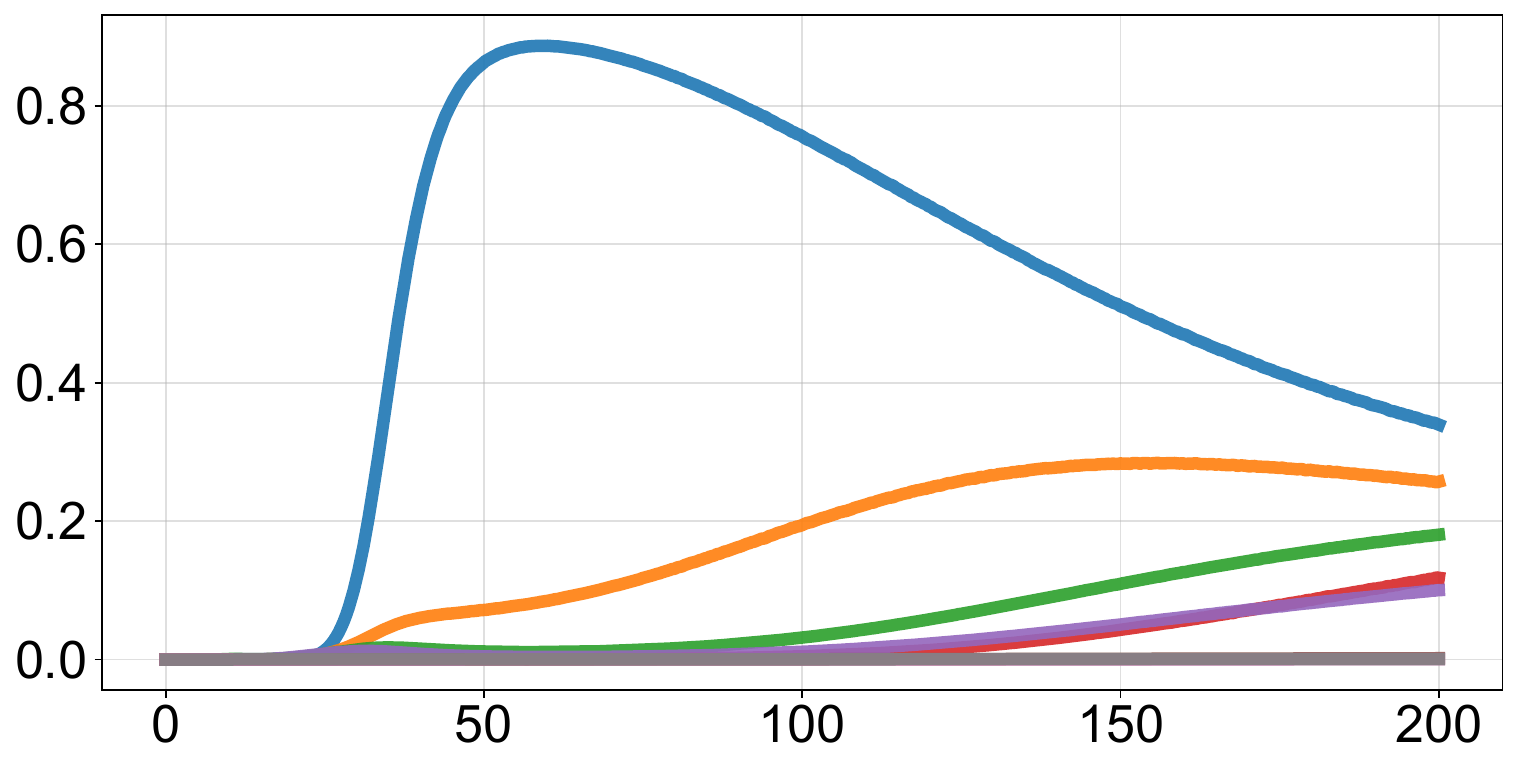}
\end{subfigure}\hfill
\begin{subfigure}[t]{0.32\textwidth}\centering
\includegraphics[width=1.\linewidth]{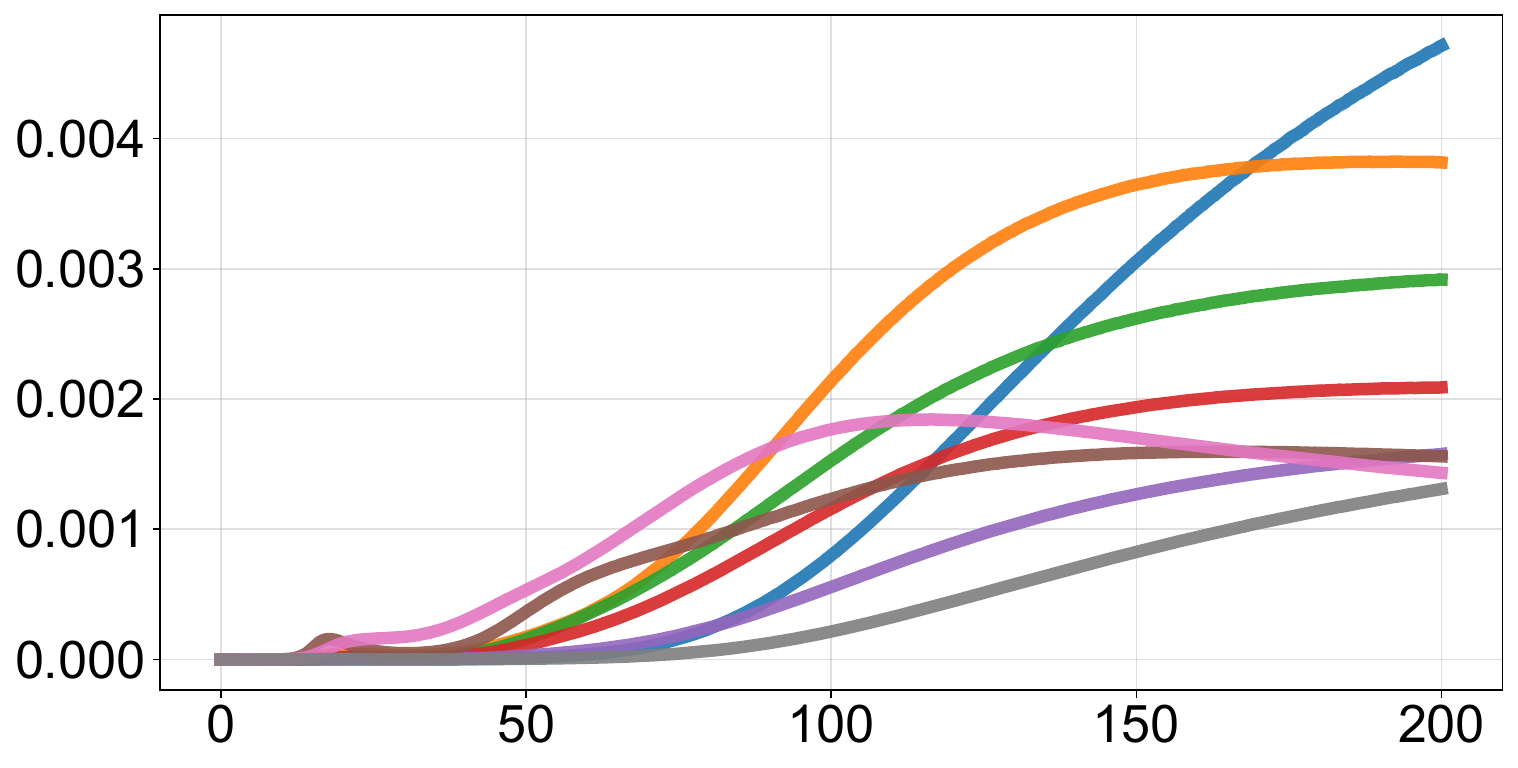}
\end{subfigure}

\vspace{2pt}
\begin{subfigure}[t]{0.32\textwidth}\centering
\includegraphics[width=1.\linewidth]{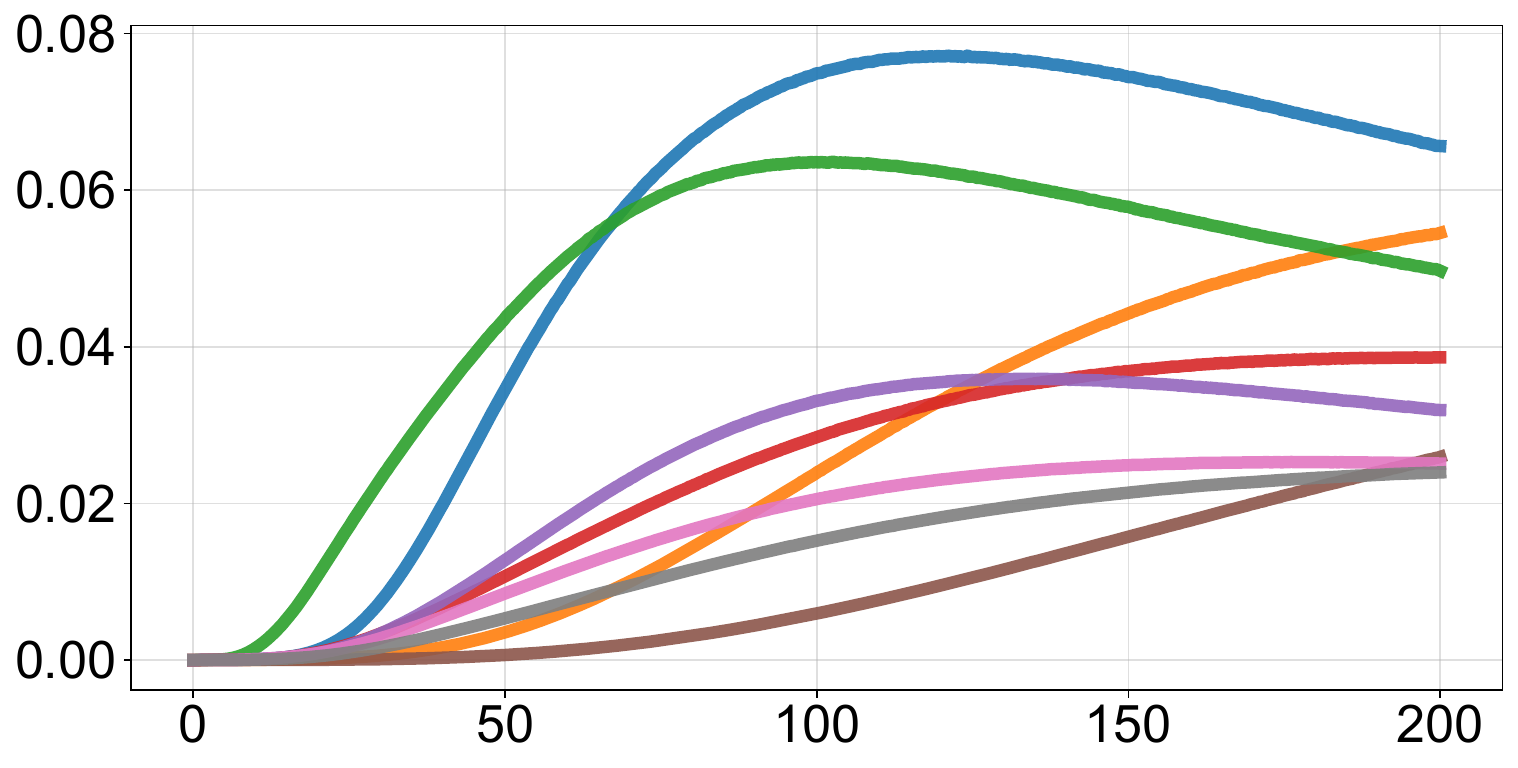}
\end{subfigure}\hfill
\begin{subfigure}[t]{0.32\textwidth}\centering
\includegraphics[width=1.\linewidth]{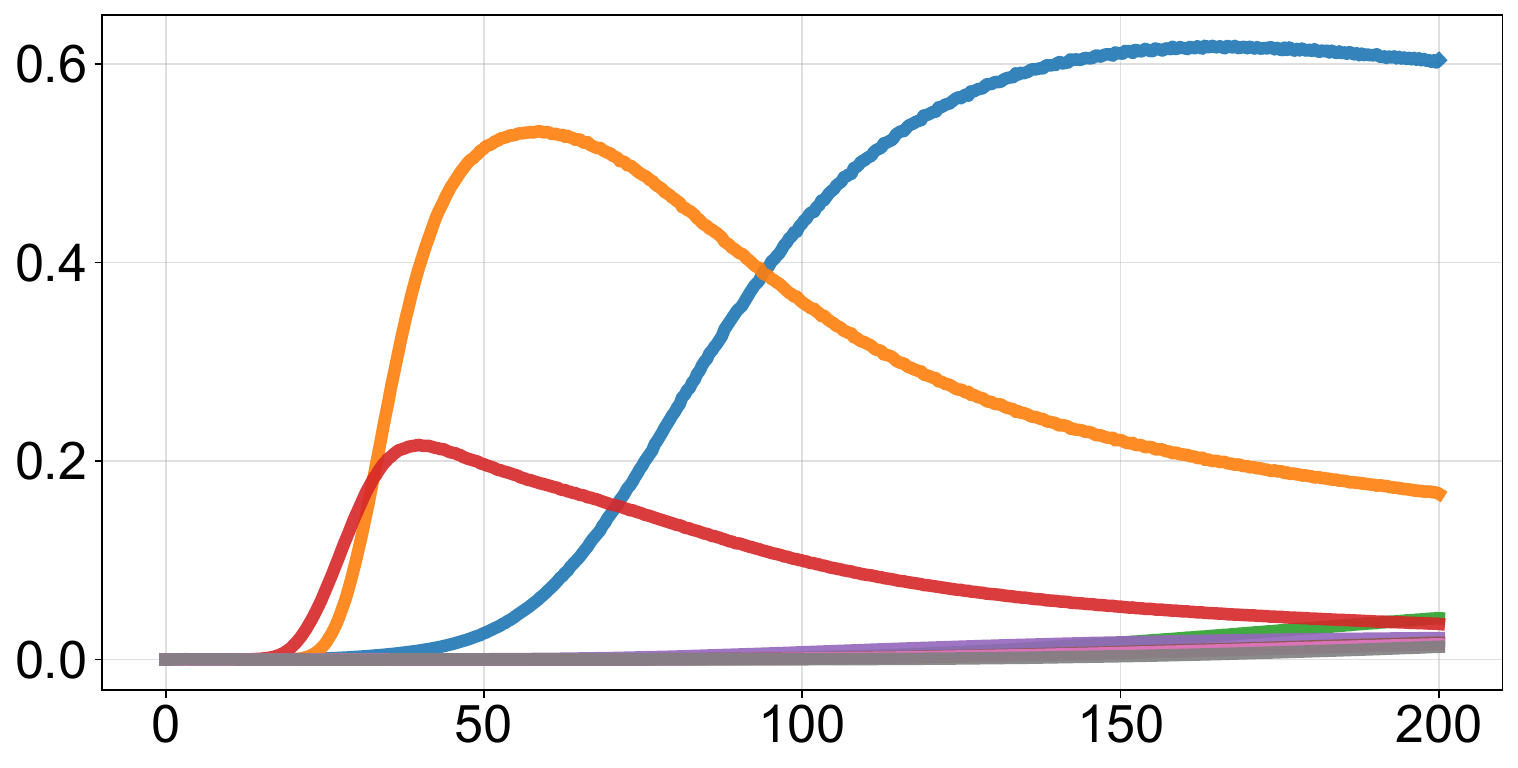}
\end{subfigure}\hfill
\begin{subfigure}[t]{0.32\textwidth}\centering
\includegraphics[width=1.\linewidth]{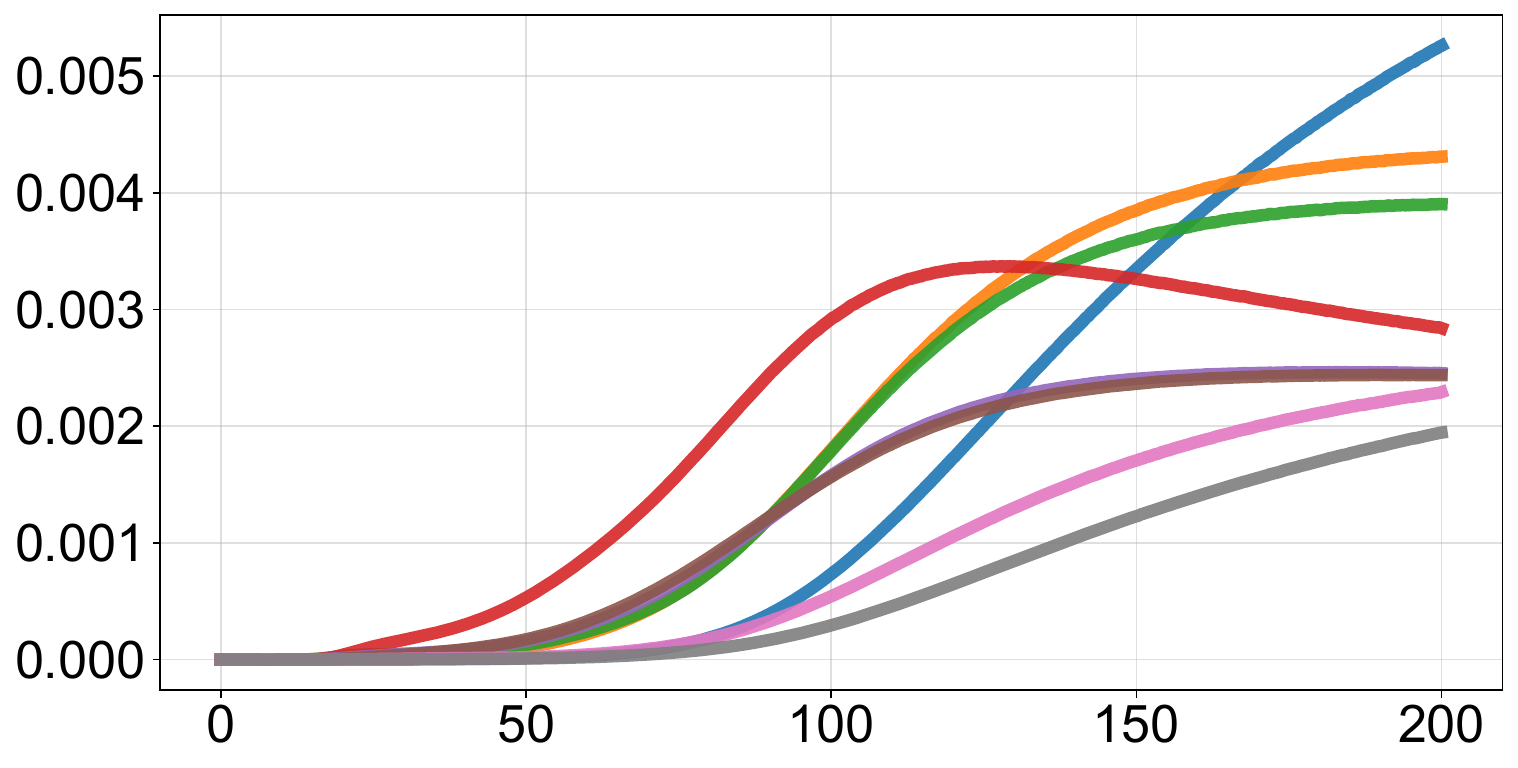}
\end{subfigure}

\vspace{-4pt}
\caption{\label{fig:exp-next-token-appendix-pos-layer-early}Effect of steering strength $\alpha>0$ on next-token probability shifts $\Delta p(z,\alpha)$ for the concepts (top to bottom): depression, evil, impolite, joy, lying, and apathetic. Each row of three plots corresponds to a single steered concept. Steering is applied at an \textbf{early} layer in each model (we steer always the same layer for that model). Each curve corresponds to a token $z$ selected among the eight highest-probability tokens at $\alpha=200$. This \textbf{partially} matches Theorem~\ref{th:non-monotonicity}: most tokens exhibit a bump, while a few increase throughout. Notably, many selected tokens are not concept-related, consistent with the observation that steering early layers often yields worse results~\citep{chen_et_al_2025a}.}
\end{figure}

\begin{figure}
\centering

\begin{subfigure}[t]{0.32\textwidth}\centering
\includegraphics[width=1.\linewidth]{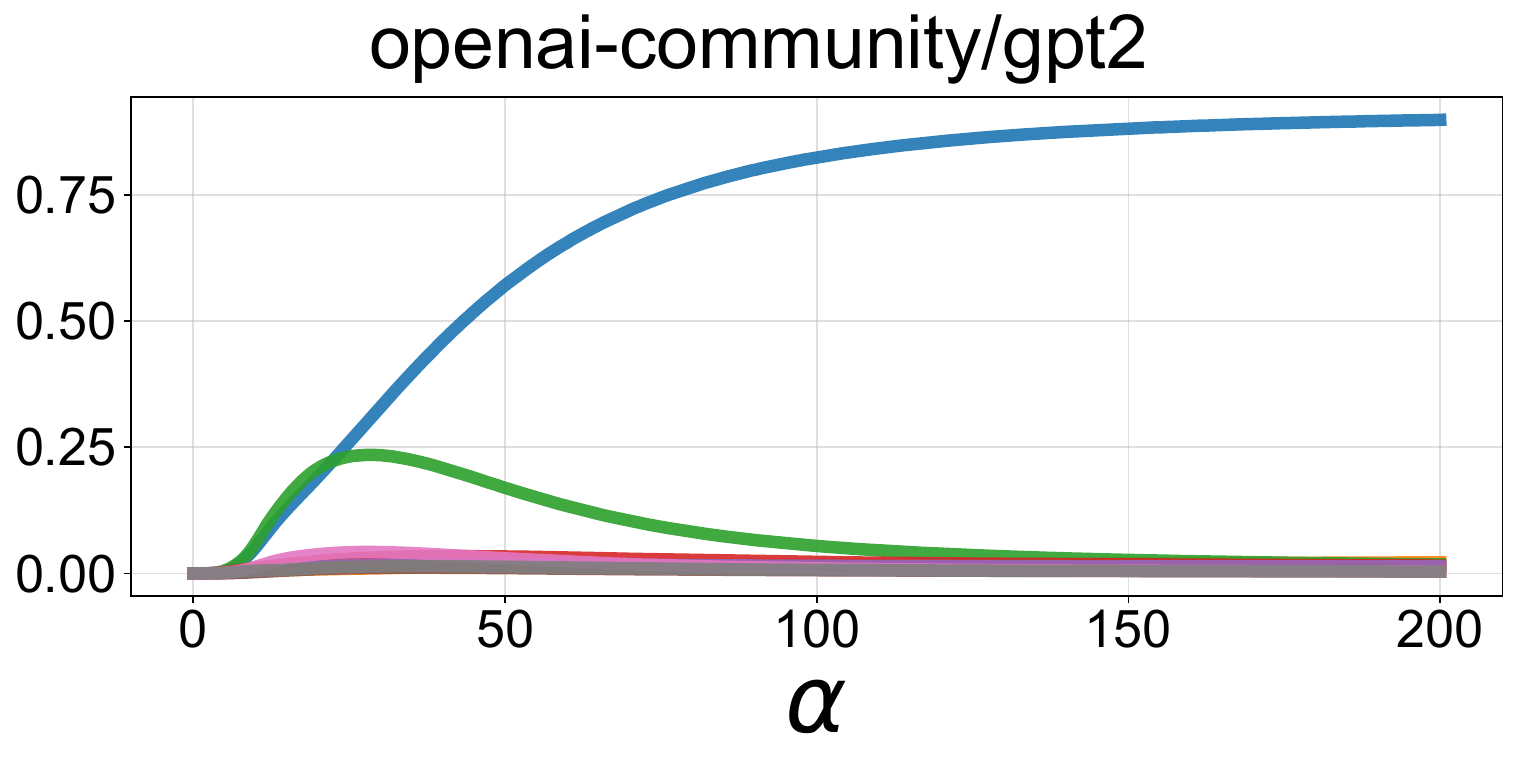}
\end{subfigure}\hfill
\begin{subfigure}[t]{0.32\textwidth}\centering
\includegraphics[width=1.\linewidth]{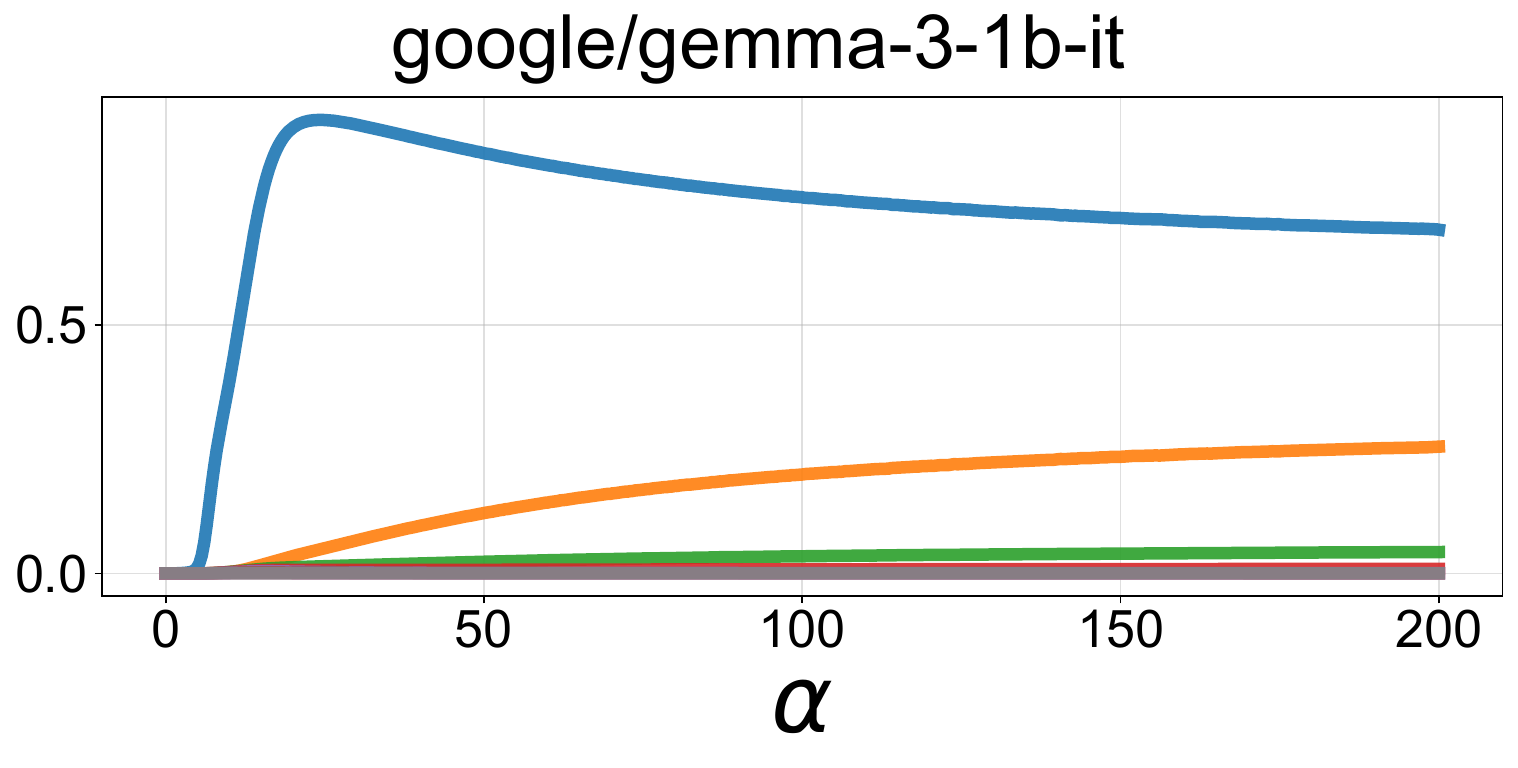}
\end{subfigure}\hfill
\begin{subfigure}[t]{0.32\textwidth}\centering
\includegraphics[width=1.\linewidth]{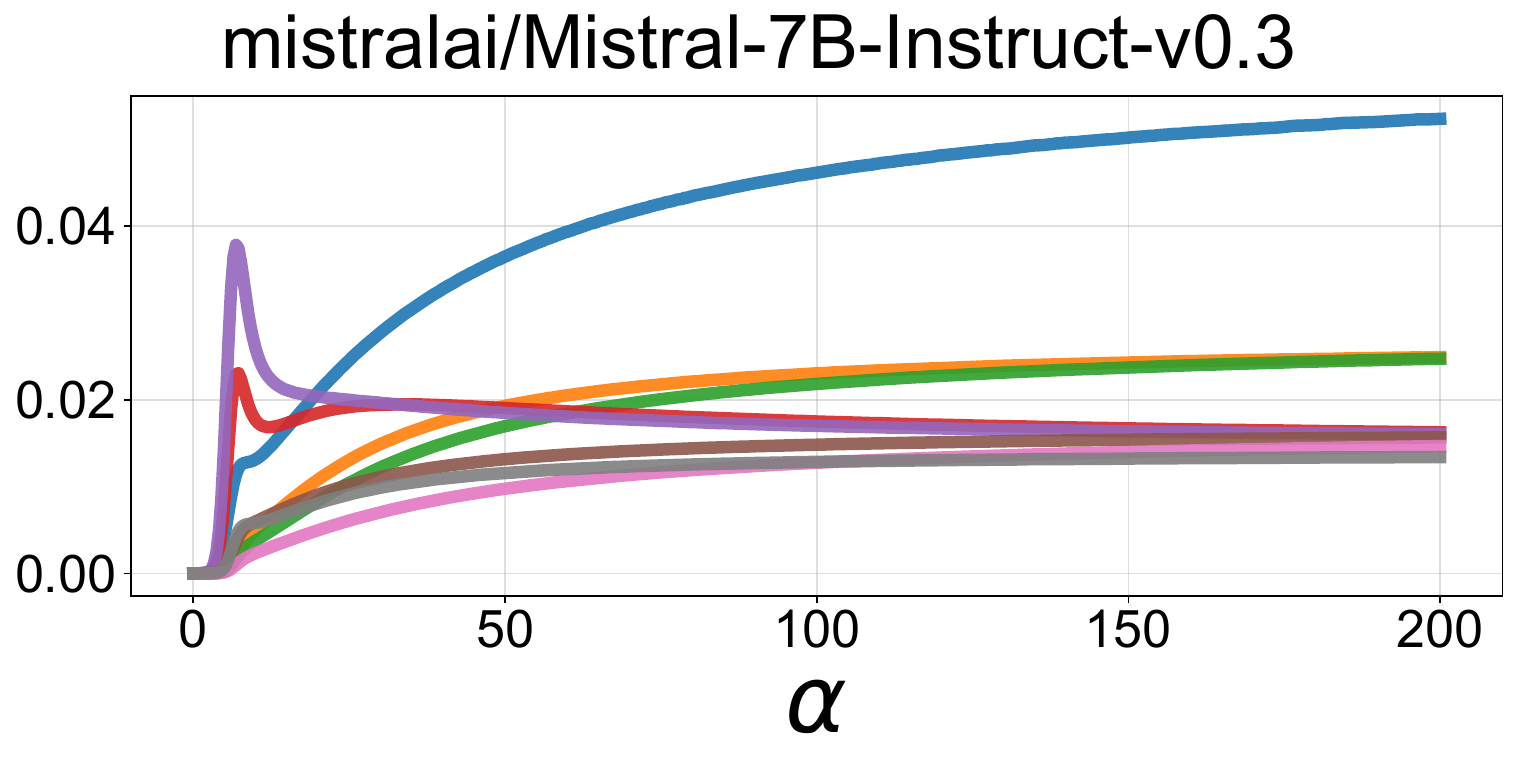}
\end{subfigure}

\vspace{2pt}
\begin{subfigure}[t]{0.32\textwidth}\centering
\includegraphics[width=1.\linewidth]{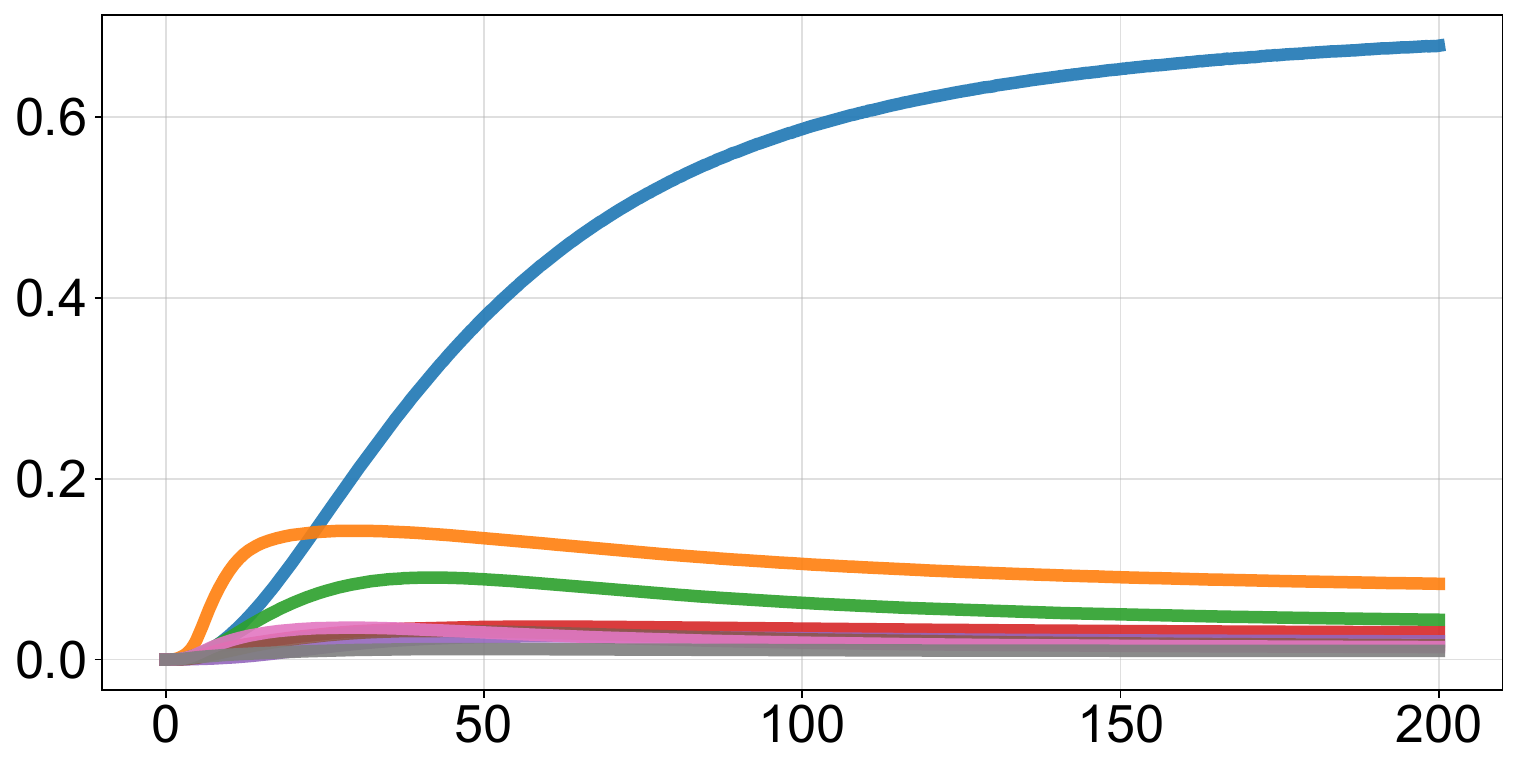}
\end{subfigure}\hfill
\begin{subfigure}[t]{0.32\textwidth}\centering
\includegraphics[width=1.\linewidth]{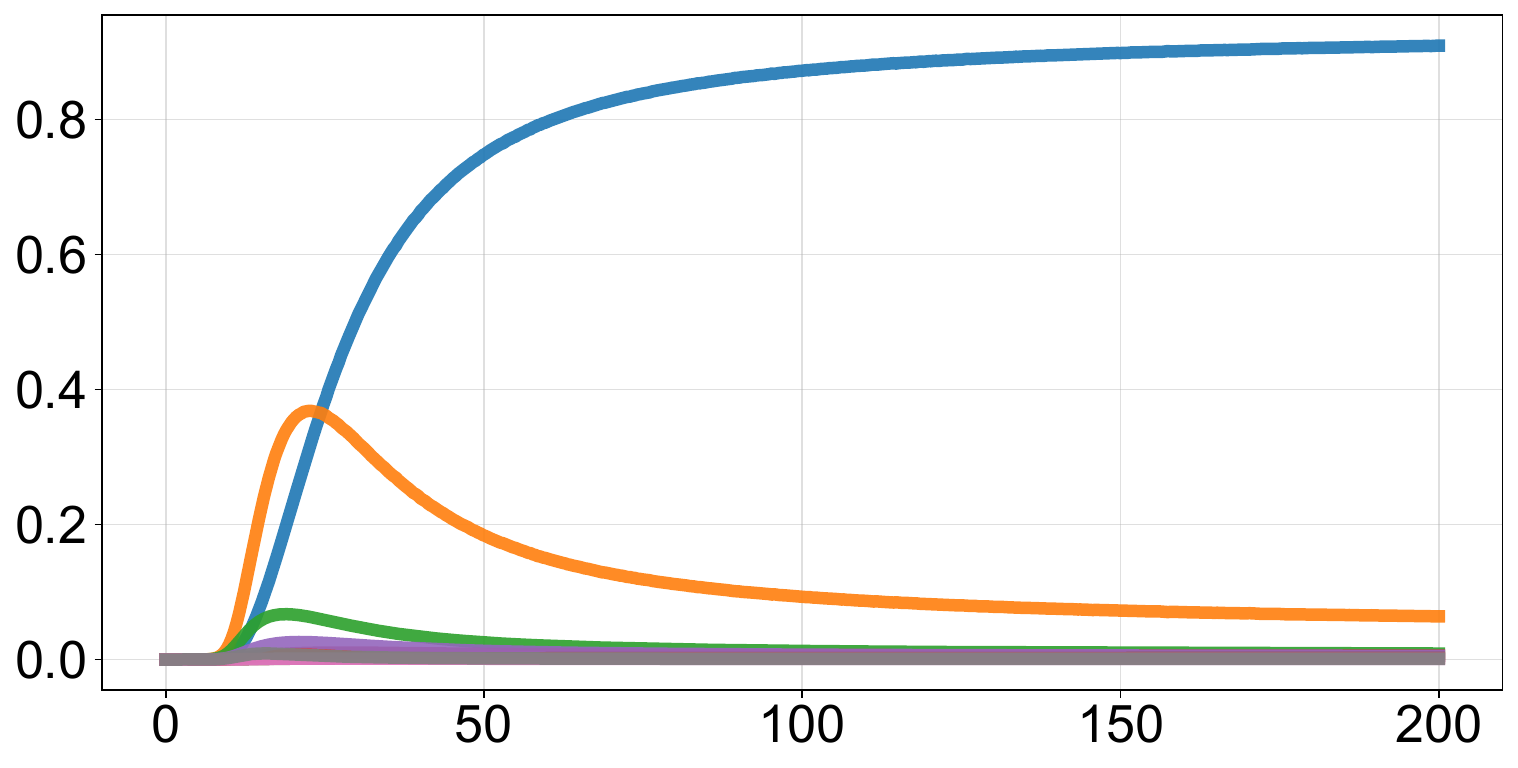}
\end{subfigure}\hfill
\begin{subfigure}[t]{0.32\textwidth}\centering
\includegraphics[width=1.\linewidth]{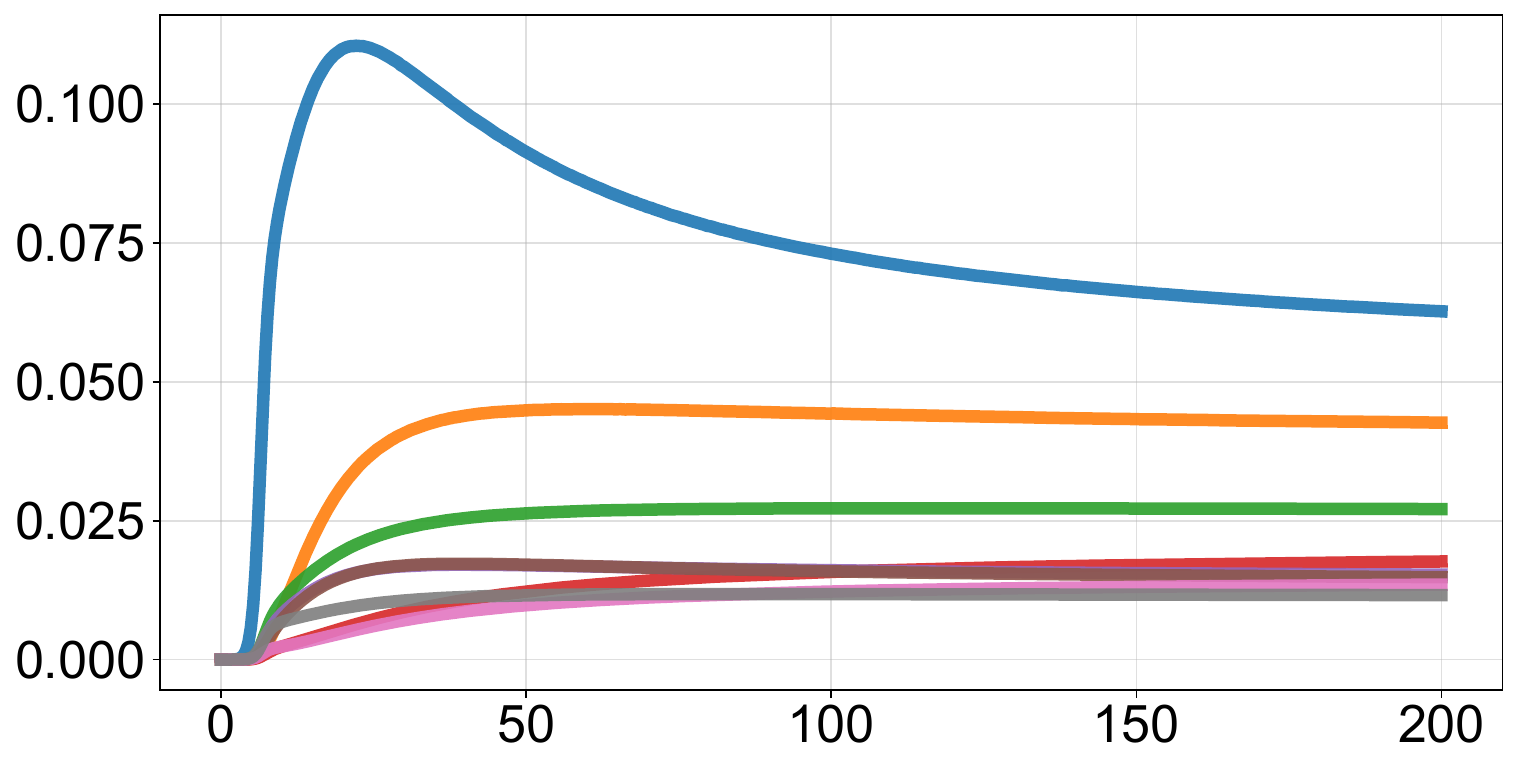}
\end{subfigure}

\vspace{2pt}
\begin{subfigure}[t]{0.32\textwidth}\centering
\includegraphics[width=1.\linewidth]{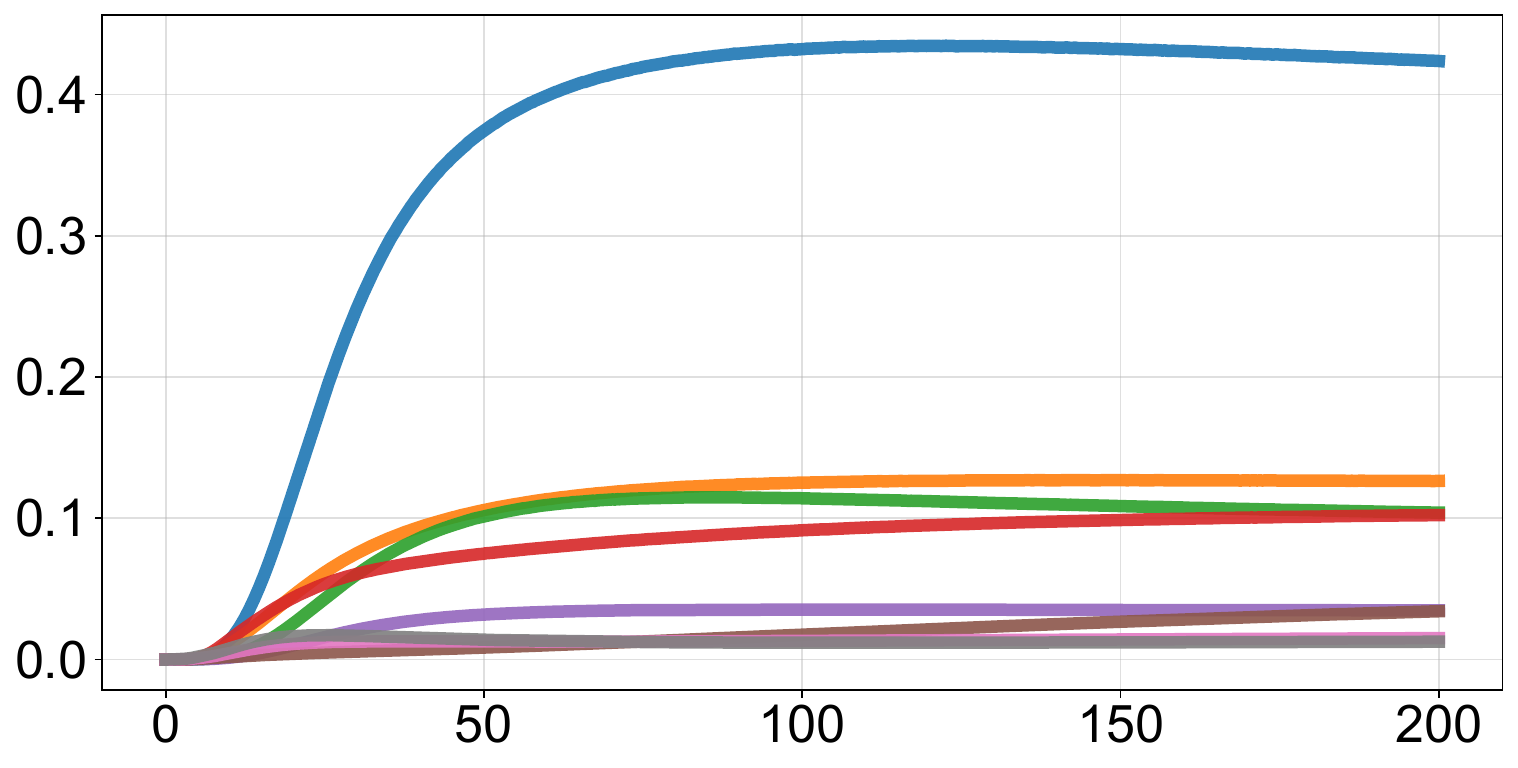}
\end{subfigure}\hfill
\begin{subfigure}[t]{0.32\textwidth}\centering
\includegraphics[width=1.\linewidth]{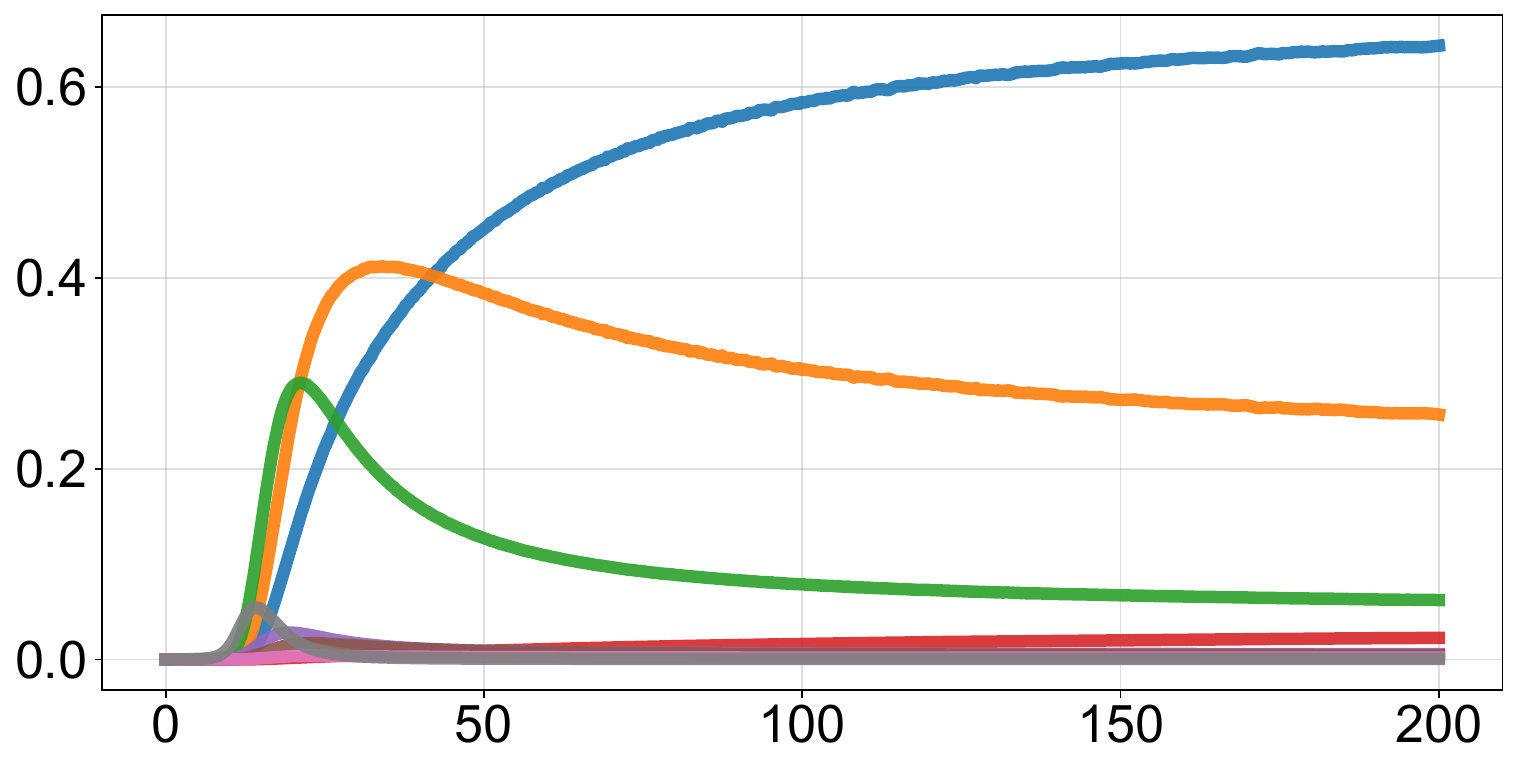}
\end{subfigure}\hfill
\begin{subfigure}[t]{0.32\textwidth}\centering
\includegraphics[width=1.\linewidth]{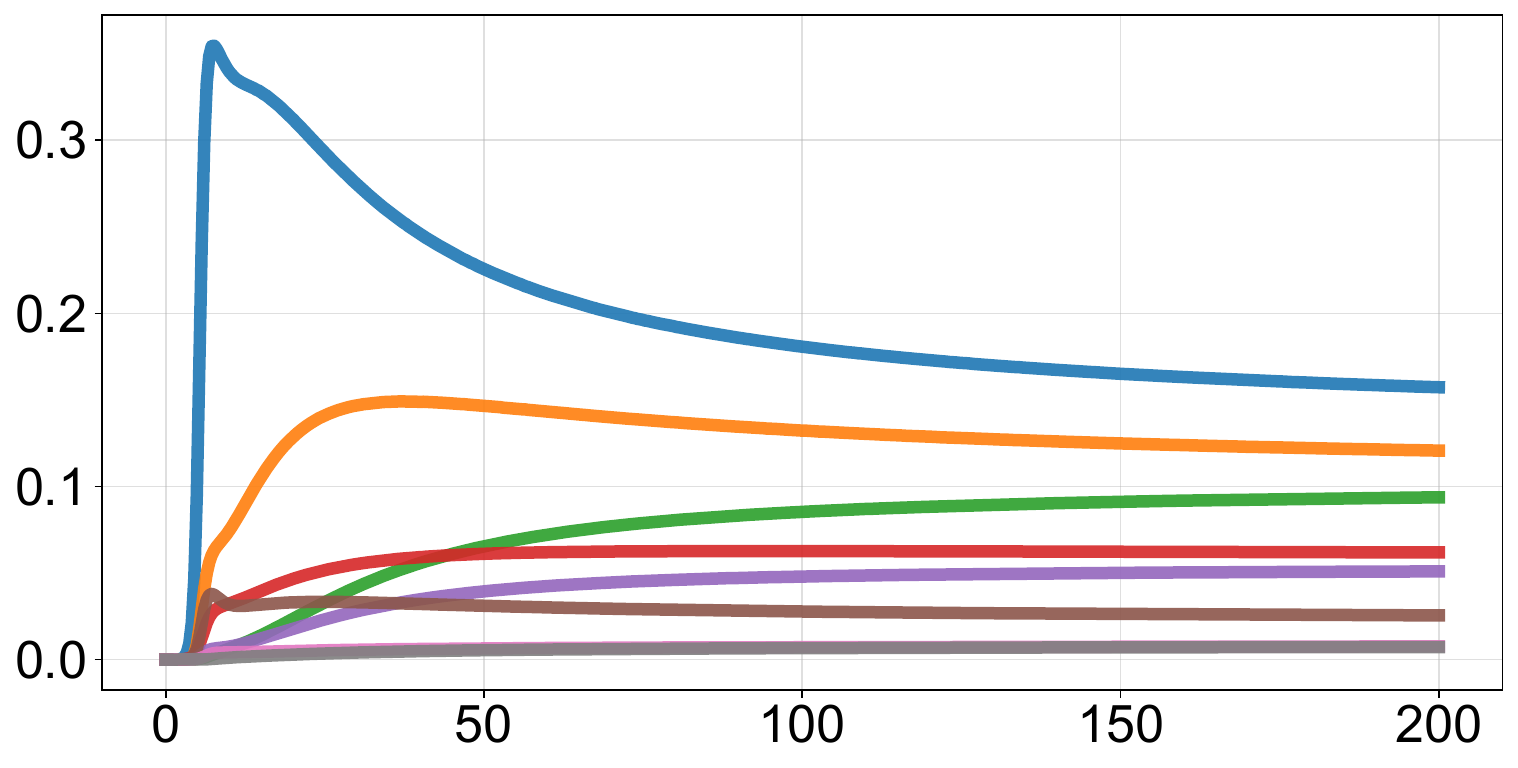}
\end{subfigure}

\vspace{2pt}
\begin{subfigure}[t]{0.32\textwidth}\centering
\includegraphics[width=1.\linewidth]{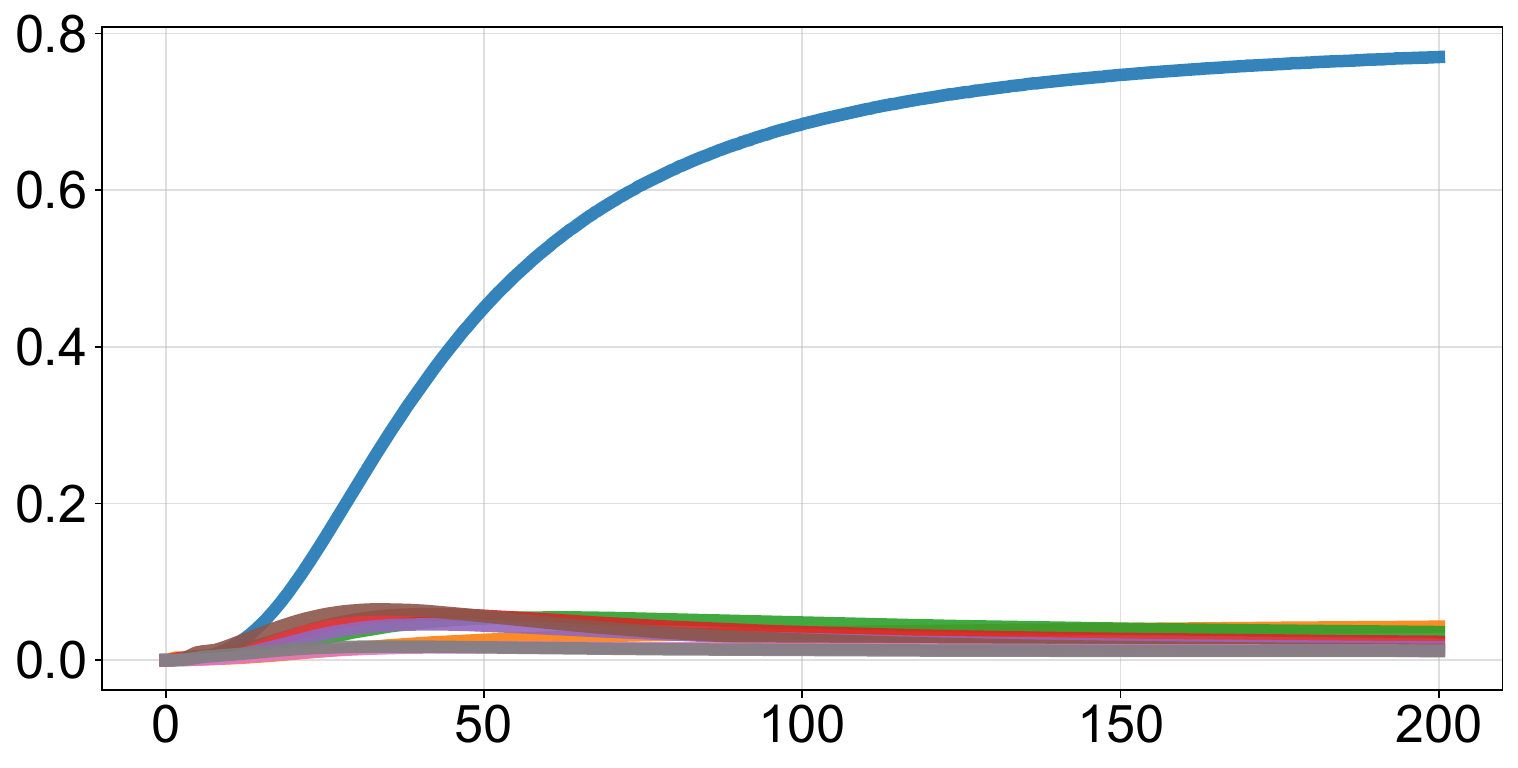}
\end{subfigure}\hfill
\begin{subfigure}[t]{0.32\textwidth}\centering
\includegraphics[width=1.\linewidth]{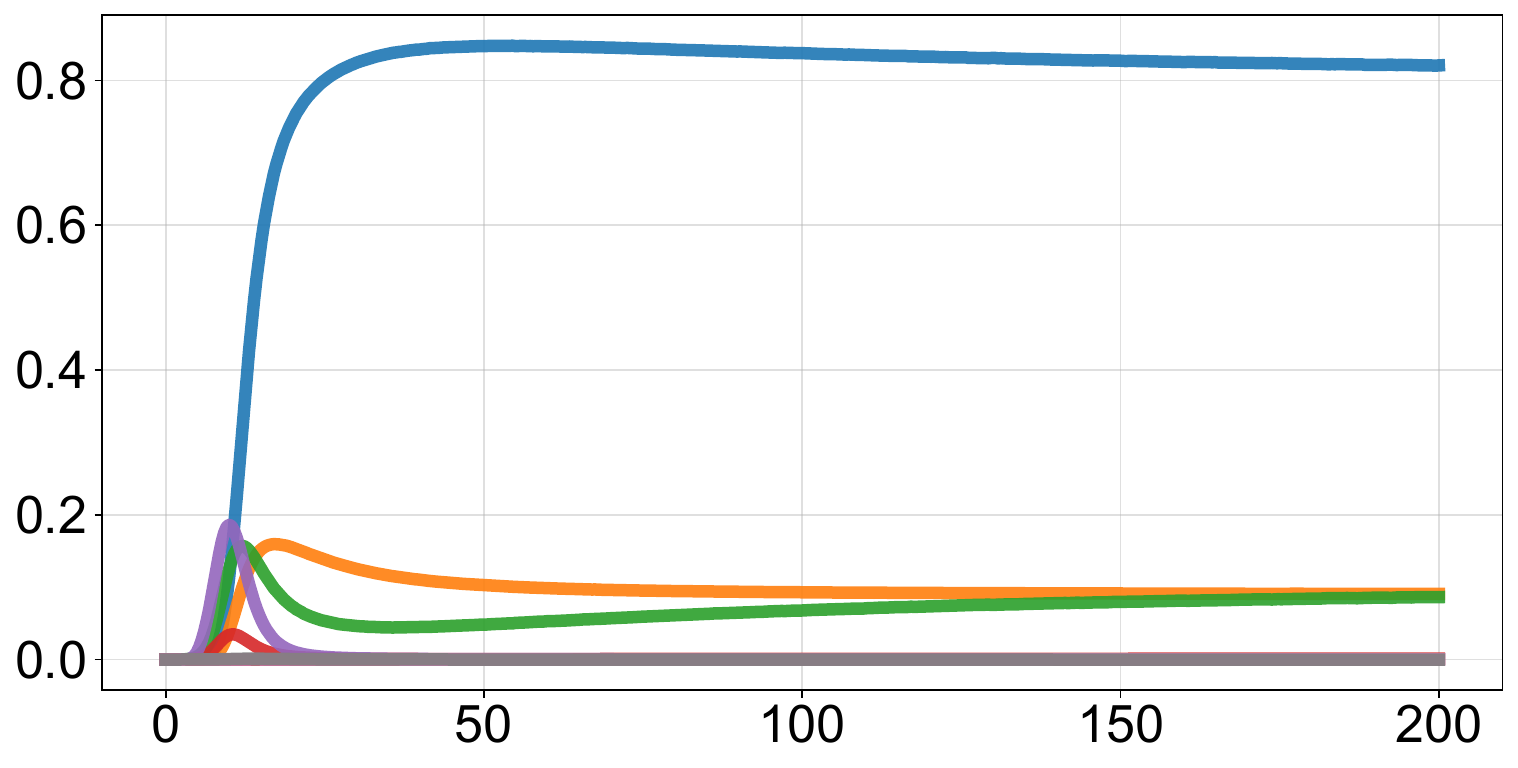}
\end{subfigure}\hfill
\begin{subfigure}[t]{0.32\textwidth}\centering
\includegraphics[width=1.\linewidth]{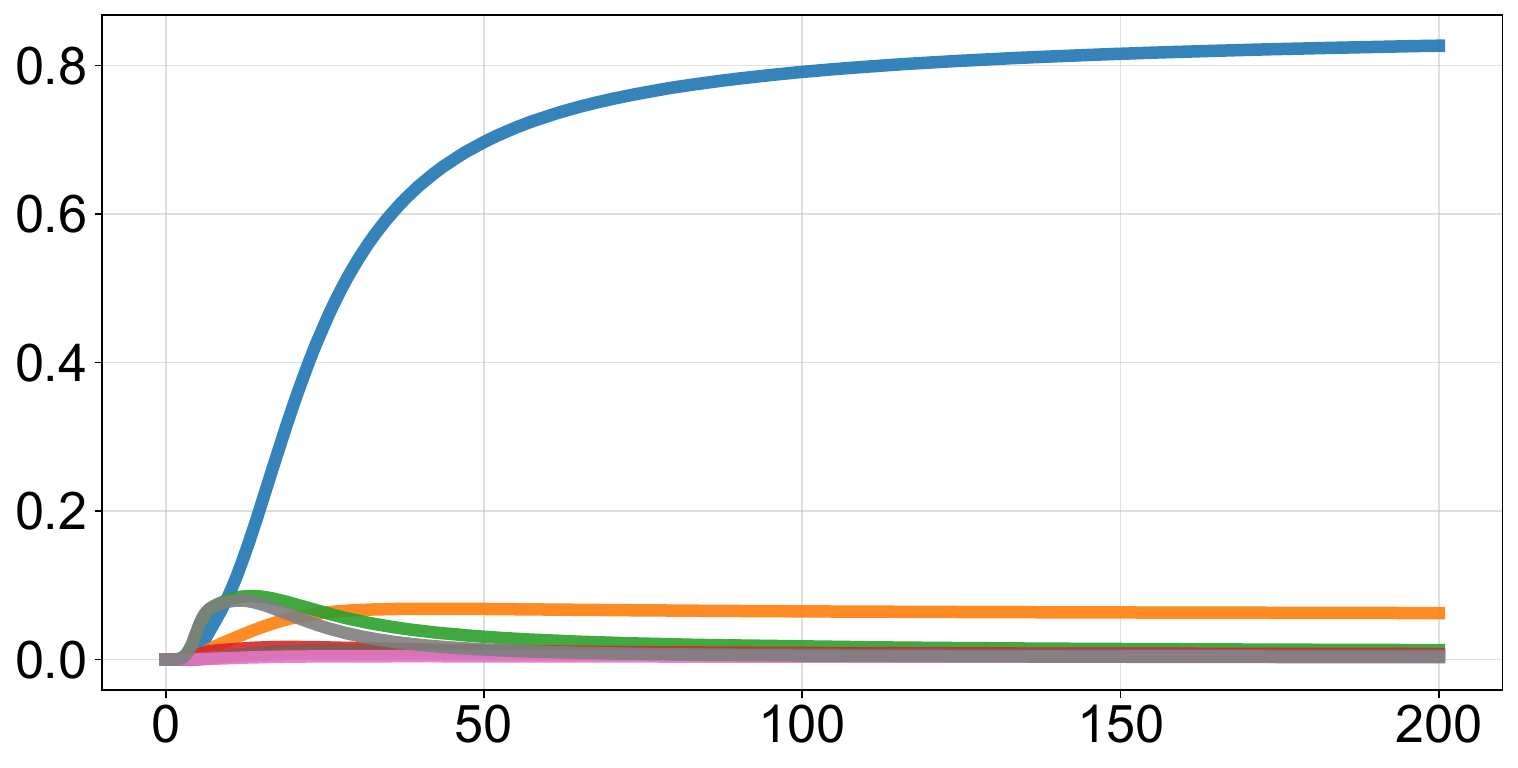}
\end{subfigure}

\vspace{2pt}
\begin{subfigure}[t]{0.32\textwidth}\centering
\includegraphics[width=1.\linewidth]{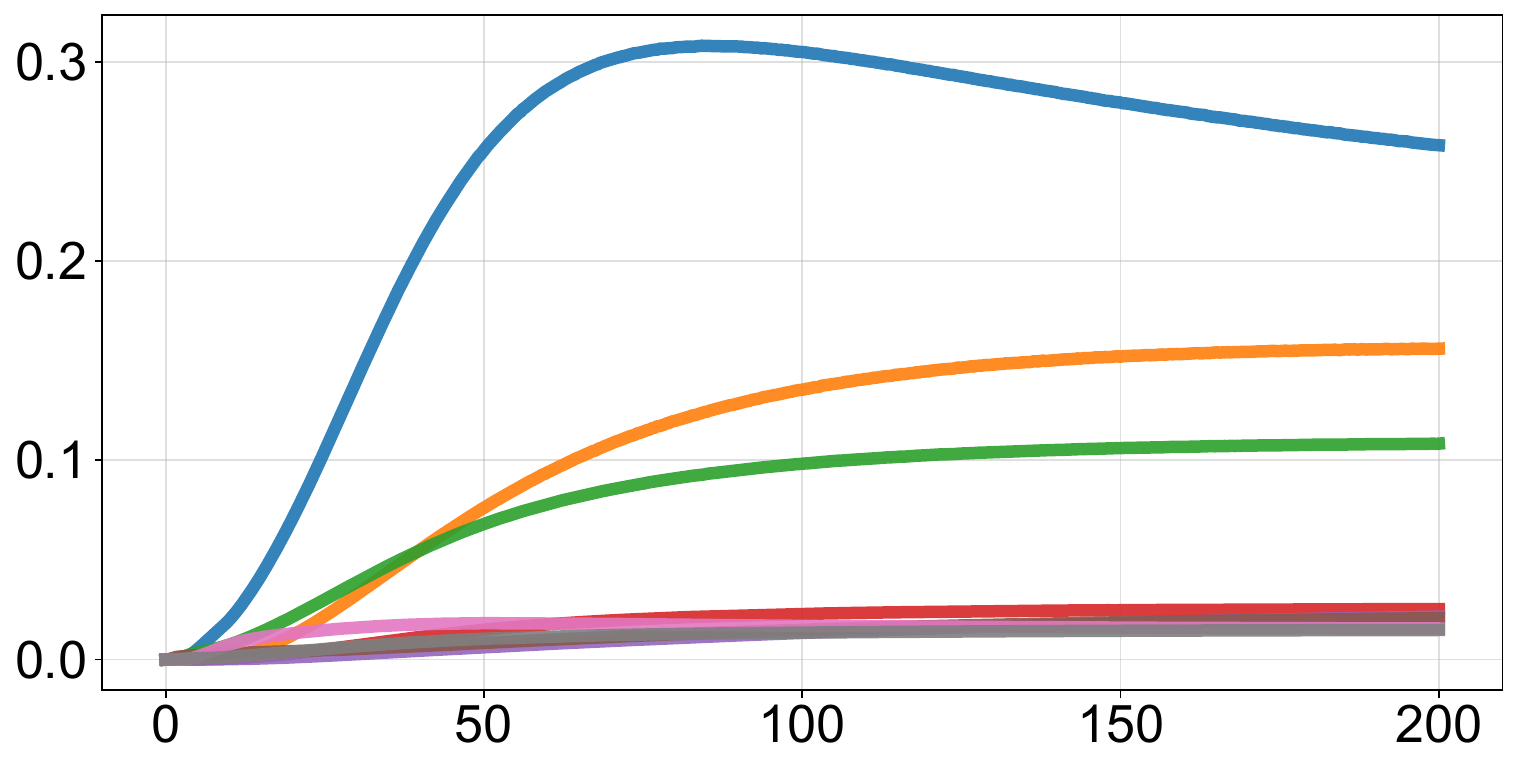}
\end{subfigure}\hfill
\begin{subfigure}[t]{0.32\textwidth}\centering
\includegraphics[width=1.\linewidth]{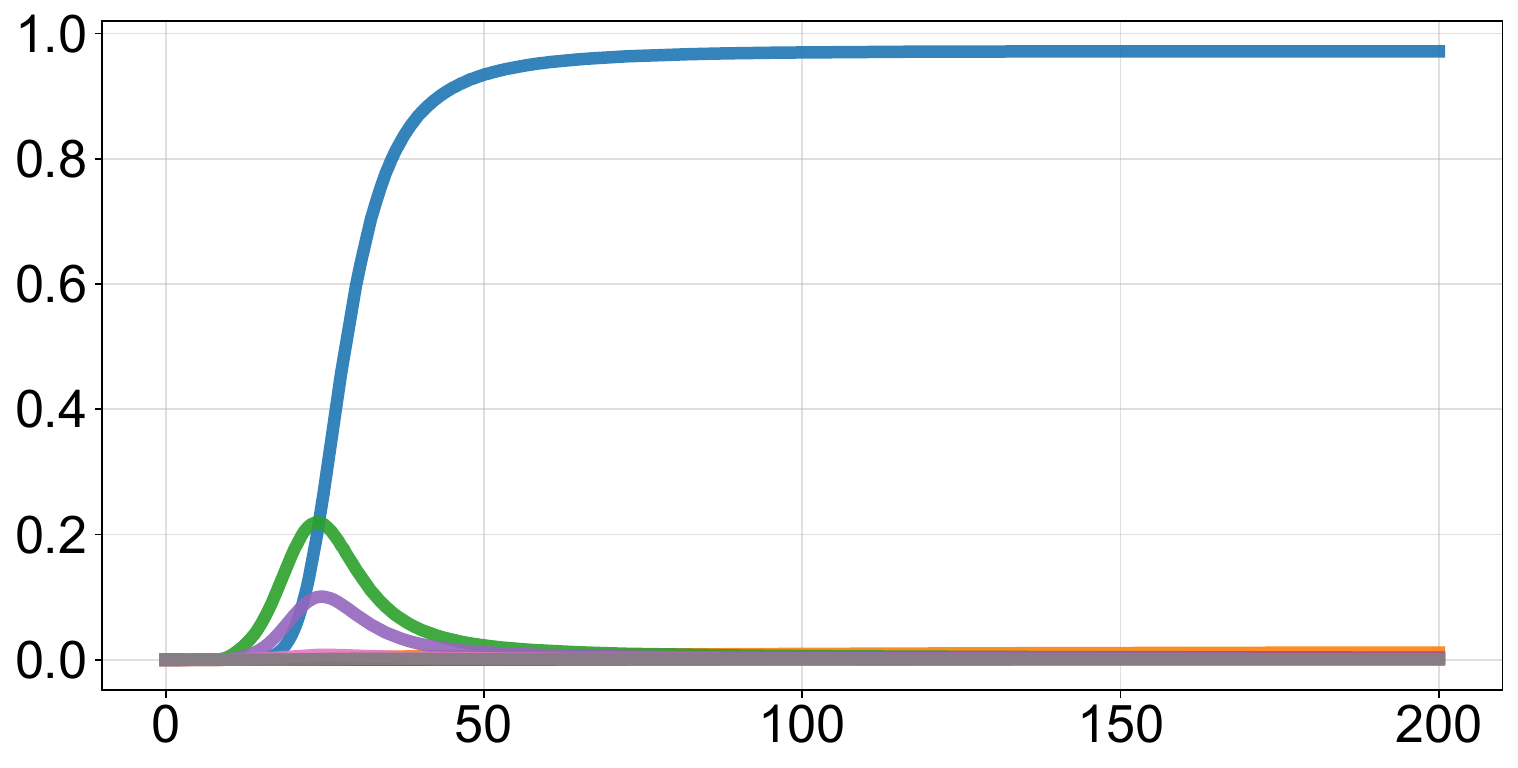}
\end{subfigure}\hfill
\begin{subfigure}[t]{0.32\textwidth}\centering
\includegraphics[width=1.\linewidth]{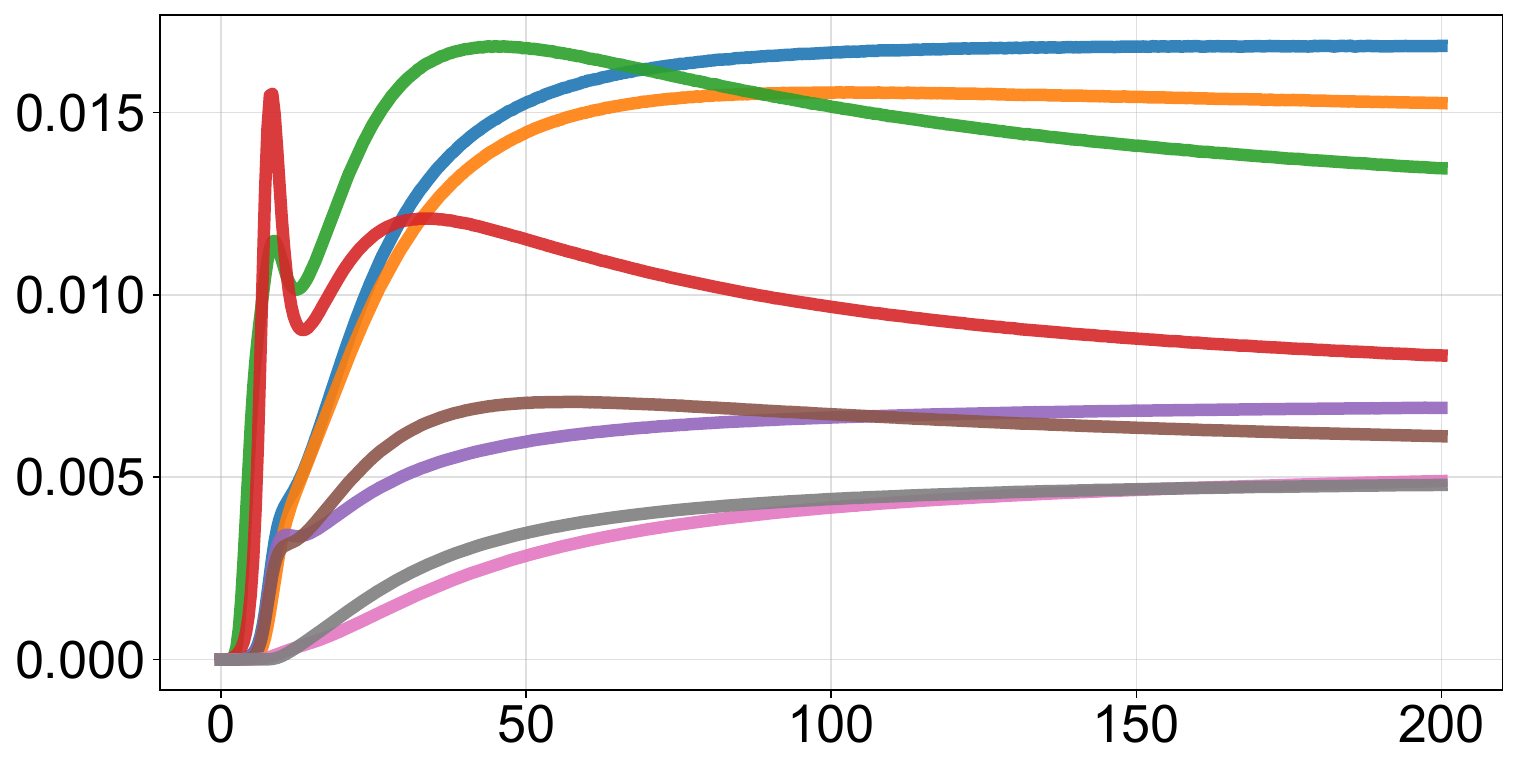}
\end{subfigure}

\vspace{2pt}
\begin{subfigure}[t]{0.32\textwidth}\centering
\includegraphics[width=1.\linewidth]{figures/appendix_pos_next_token_early_layer/openai-community-gpt2_apathetic_3_pos.pdf}
\end{subfigure}\hfill
\begin{subfigure}[t]{0.32\textwidth}\centering
\includegraphics[width=1.\linewidth]{figures/appendix_pos_next_token_early_layer/google-gemma-3-1b-it_apathetic_8_pos.pdf}
\end{subfigure}\hfill
\begin{subfigure}[t]{0.32\textwidth}\centering
\includegraphics[width=1.\linewidth]{figures/appendix_pos_next_token_early_layer/mistralai-mistral-7b-instruct-v0-3_apathetic_10_pos.pdf}
\end{subfigure}

\vspace{-4pt}
\caption{\label{fig:exp-next-token-appendix-pos-layer-middle}Effect of steering strength $\alpha>0$ on next-token probability shifts $\Delta p(z,\alpha)$ for the concepts (top to bottom): depression, evil, impolite, joy, lying, and apathetic. Each row of three plots corresponds to a single steered concept. Steering is applied at a \textbf{middle} layer in each model (we steer always the same layer for that model). Each curve corresponds to a token $z$ selected among the eight highest-probability tokens at $\alpha=200$. This matches Theorem~\ref{th:non-monotonicity}: most tokens exhibit a bump, while a few increase throughout. Notably, the selected tokens are related to the steered concept.}
\end{figure}
\begin{figure}
\centering

\begin{subfigure}[t]{0.32\textwidth}\centering
\includegraphics[width=1.\linewidth]{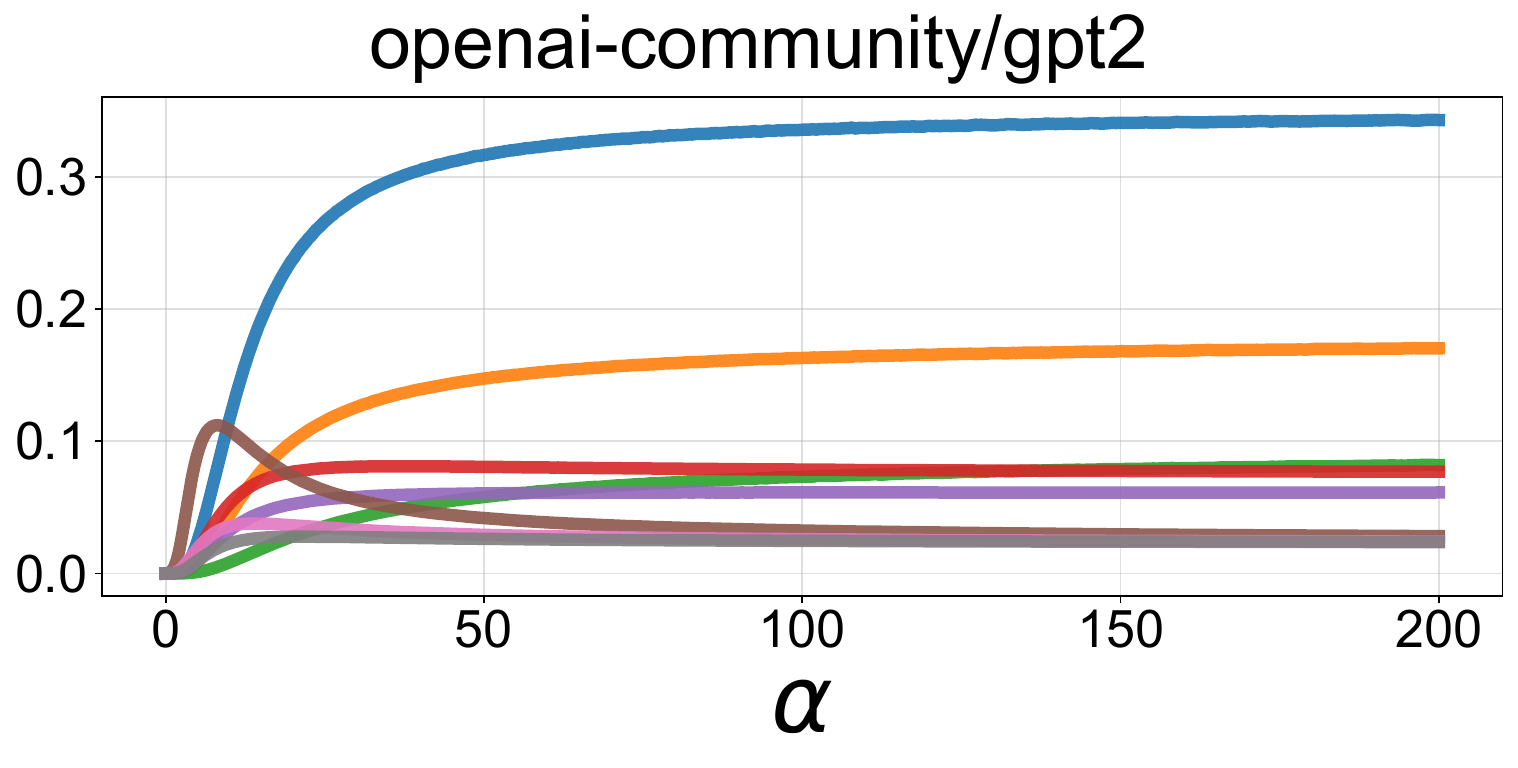}
\end{subfigure}\hfill
\begin{subfigure}[t]{0.32\textwidth}\centering
\includegraphics[width=1.\linewidth]{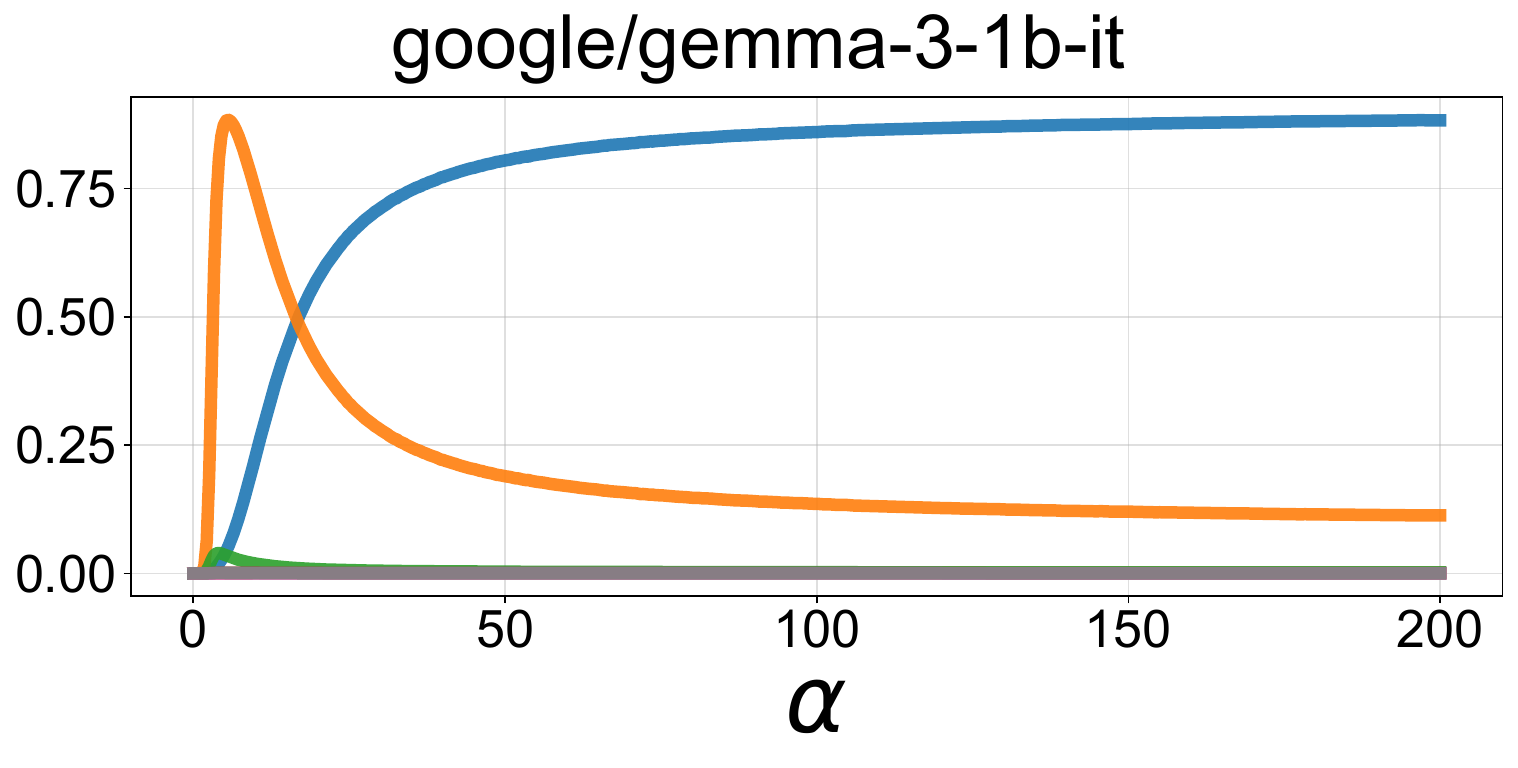}
\end{subfigure}\hfill
\begin{subfigure}[t]{0.32\textwidth}\centering
\includegraphics[width=1.\linewidth]{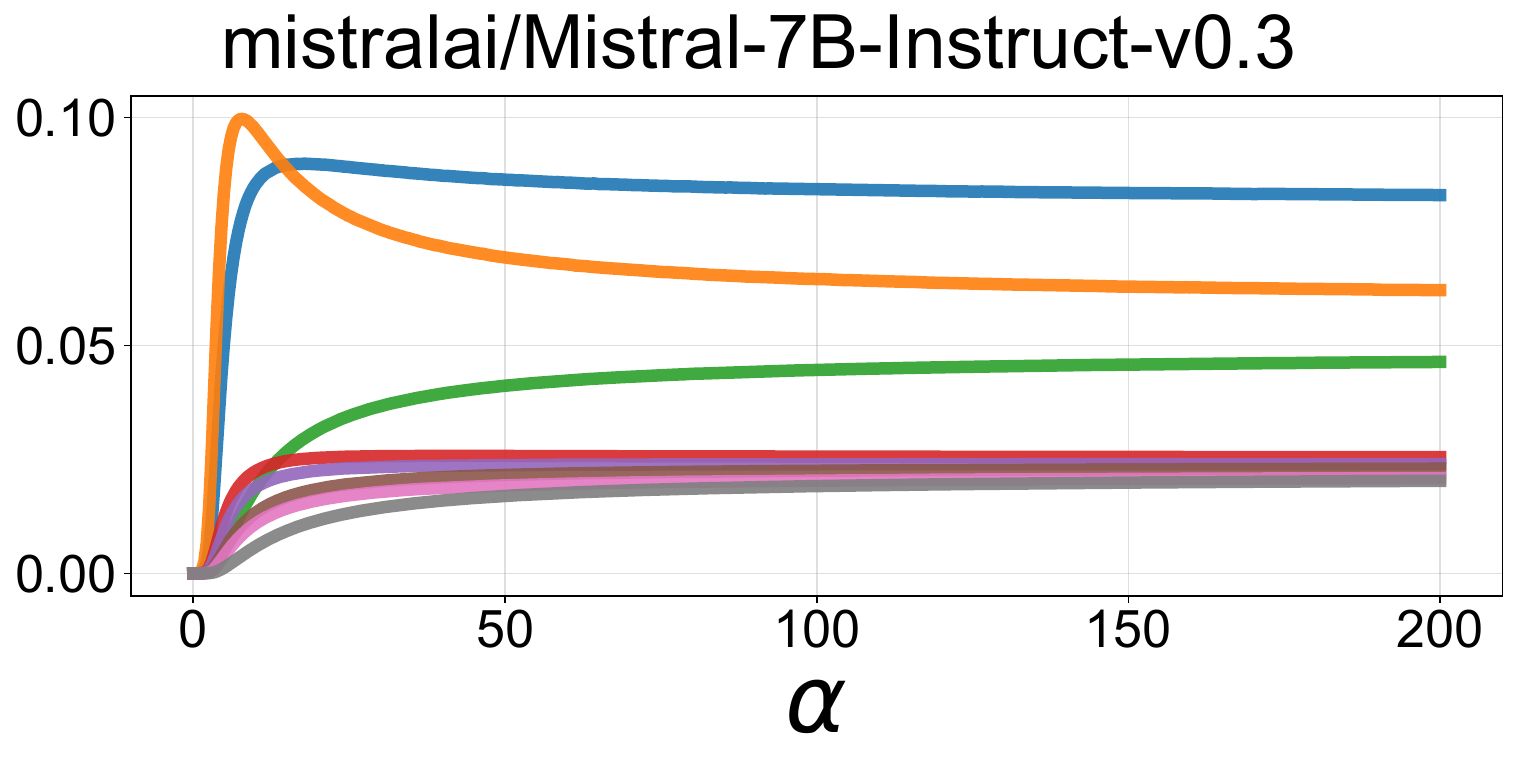}
\end{subfigure}

\vspace{2pt}
\begin{subfigure}[t]{0.32\textwidth}\centering
\includegraphics[width=1.\linewidth]{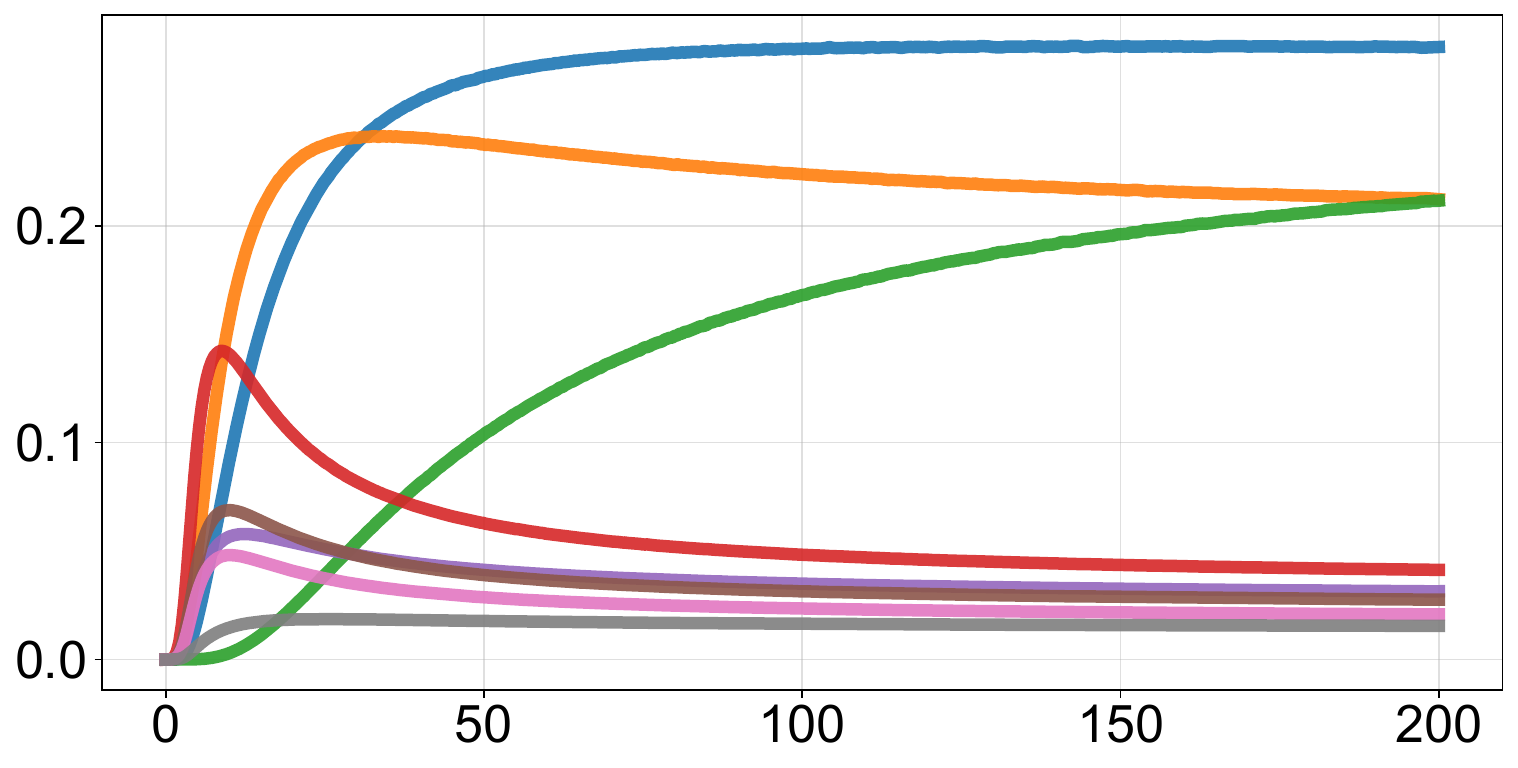}
\end{subfigure}\hfill
\begin{subfigure}[t]{0.32\textwidth}\centering
\includegraphics[width=1.\linewidth]{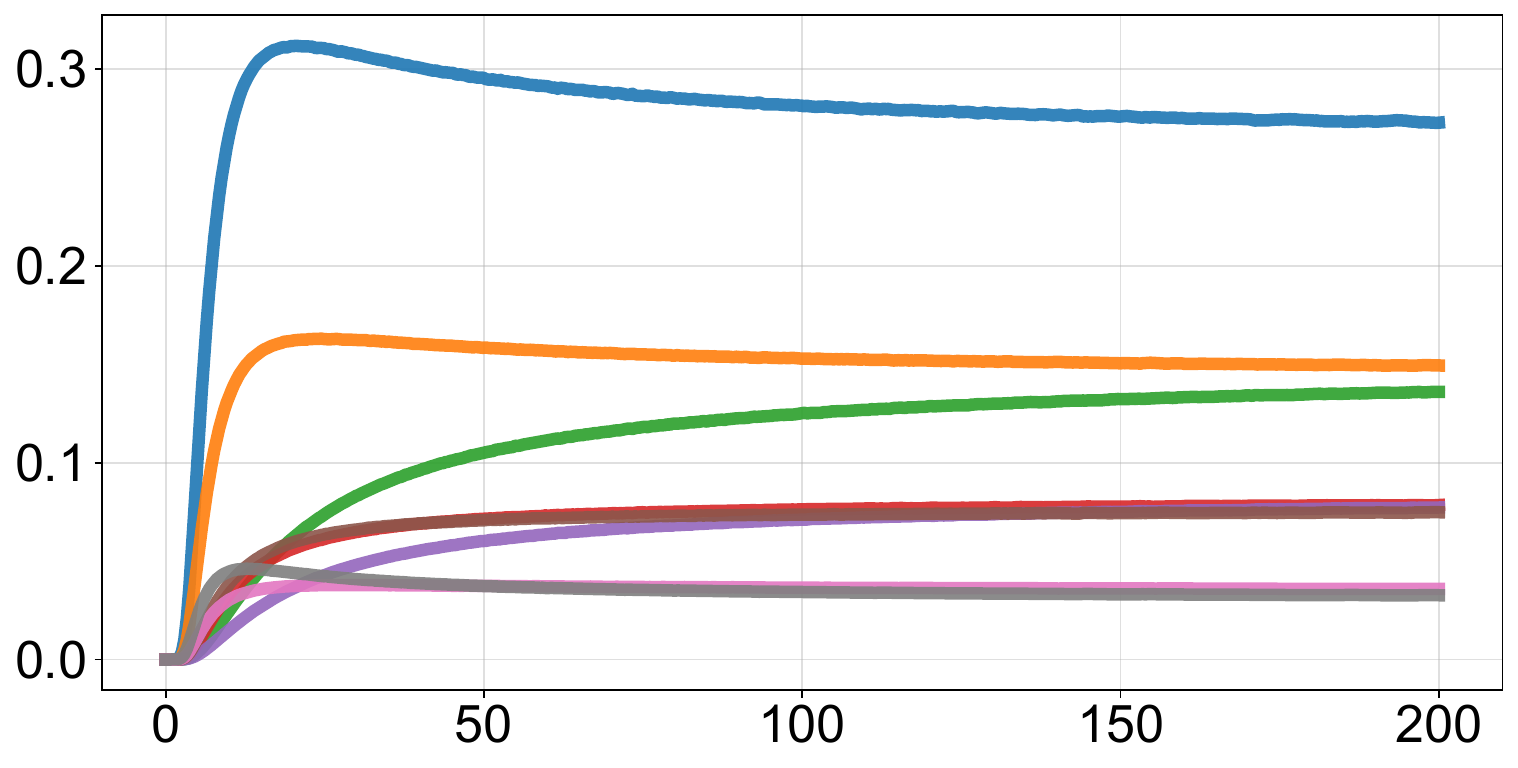}
\end{subfigure}\hfill
\begin{subfigure}[t]{0.32\textwidth}\centering
\includegraphics[width=1.\linewidth]{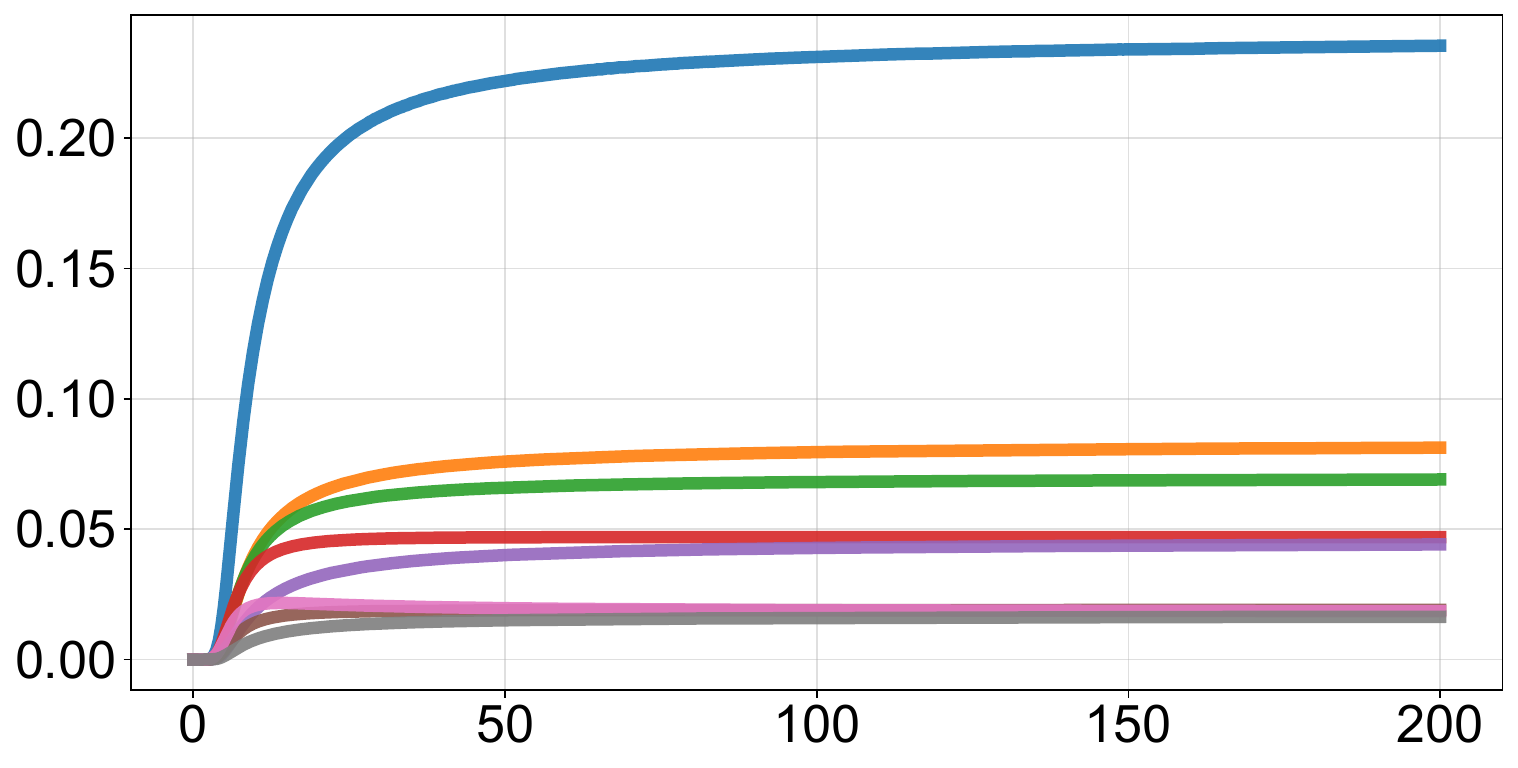}
\end{subfigure}

\vspace{2pt}
\begin{subfigure}[t]{0.32\textwidth}\centering
\includegraphics[width=1.\linewidth]{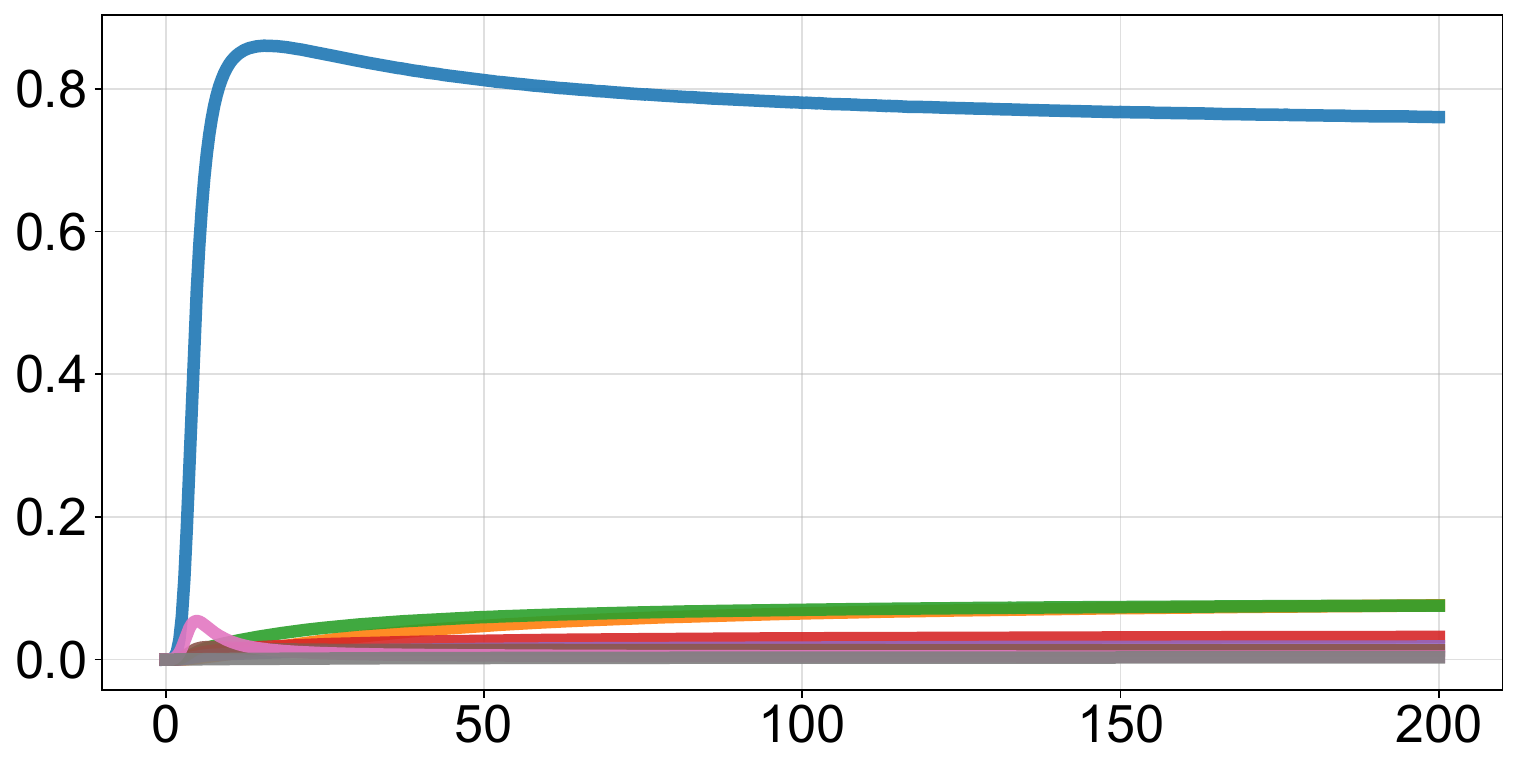}
\end{subfigure}\hfill
\begin{subfigure}[t]{0.32\textwidth}\centering
\includegraphics[width=1.\linewidth]{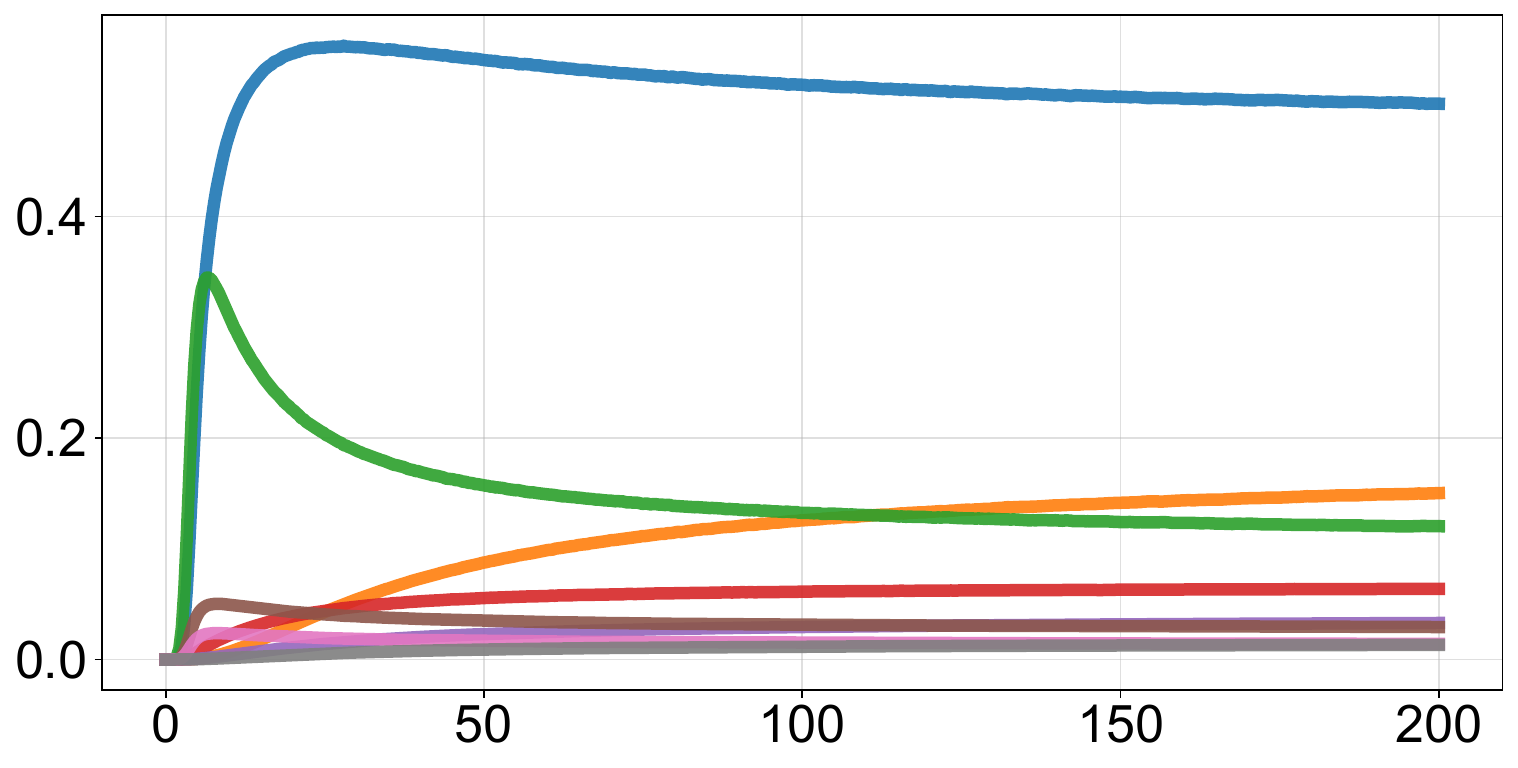}
\end{subfigure}\hfill
\begin{subfigure}[t]{0.32\textwidth}\centering
\includegraphics[width=1.\linewidth]{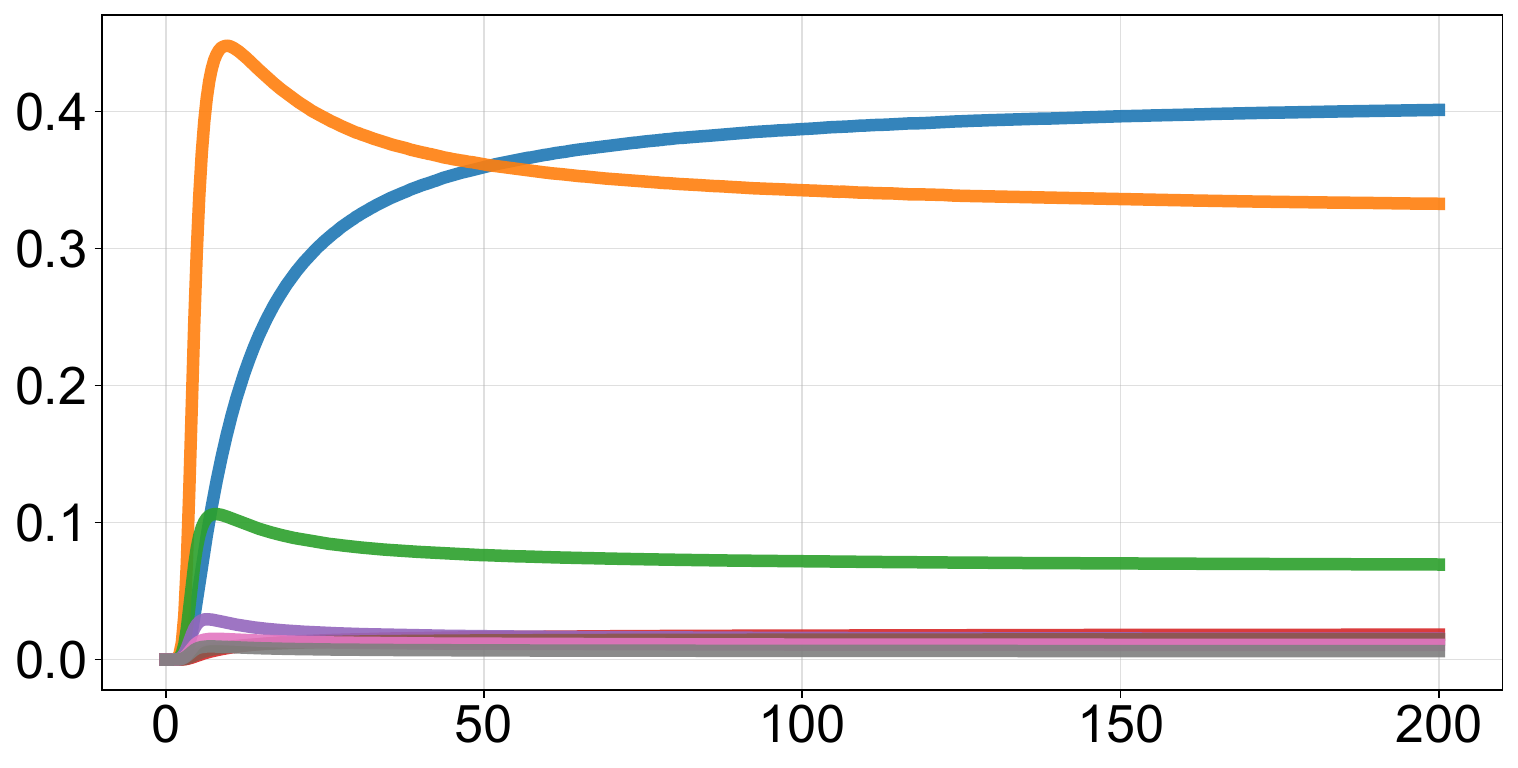}
\end{subfigure}

\vspace{2pt}
\begin{subfigure}[t]{0.32\textwidth}\centering
\includegraphics[width=1.\linewidth]{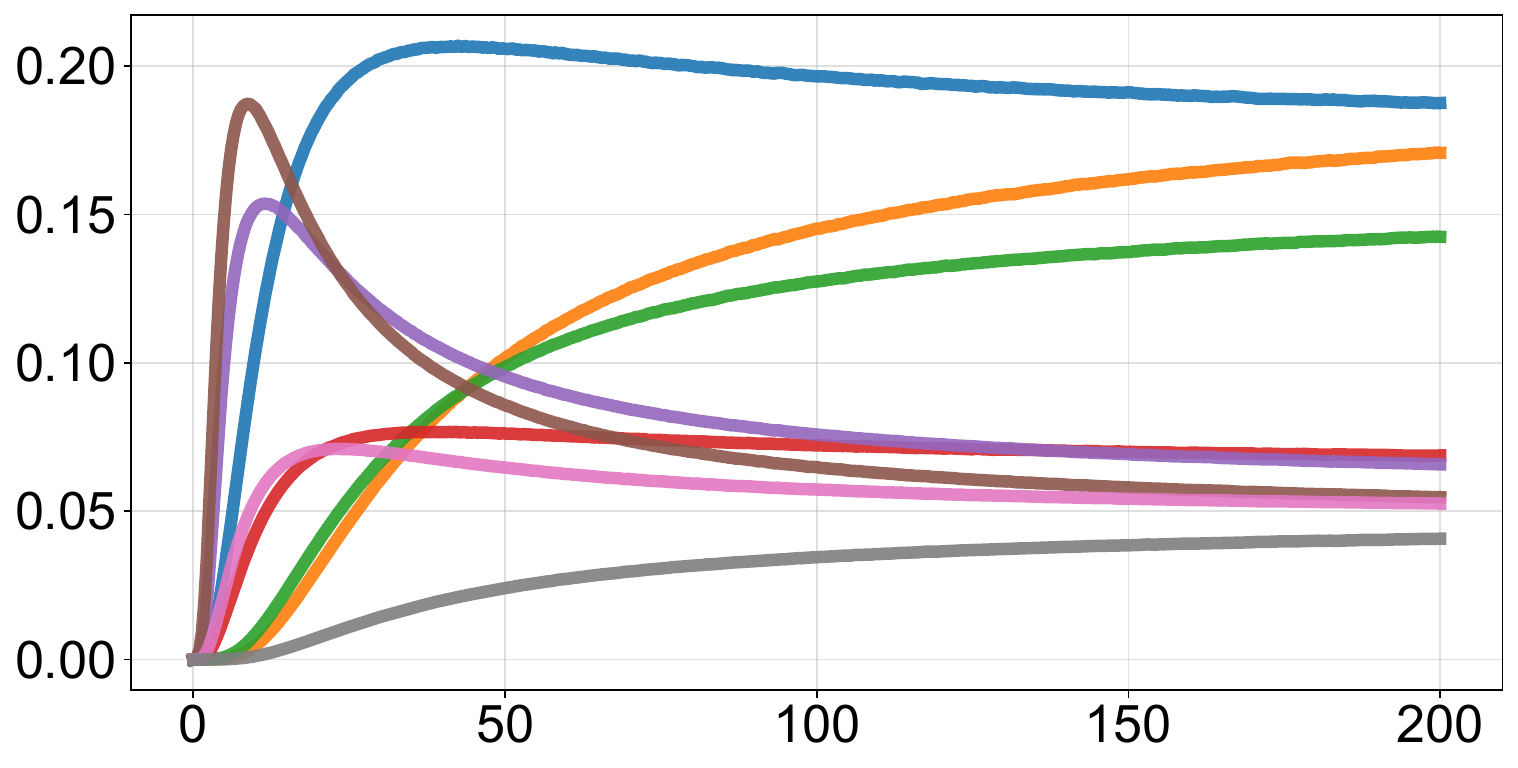}
\end{subfigure}\hfill
\begin{subfigure}[t]{0.32\textwidth}\centering
\includegraphics[width=1.\linewidth]{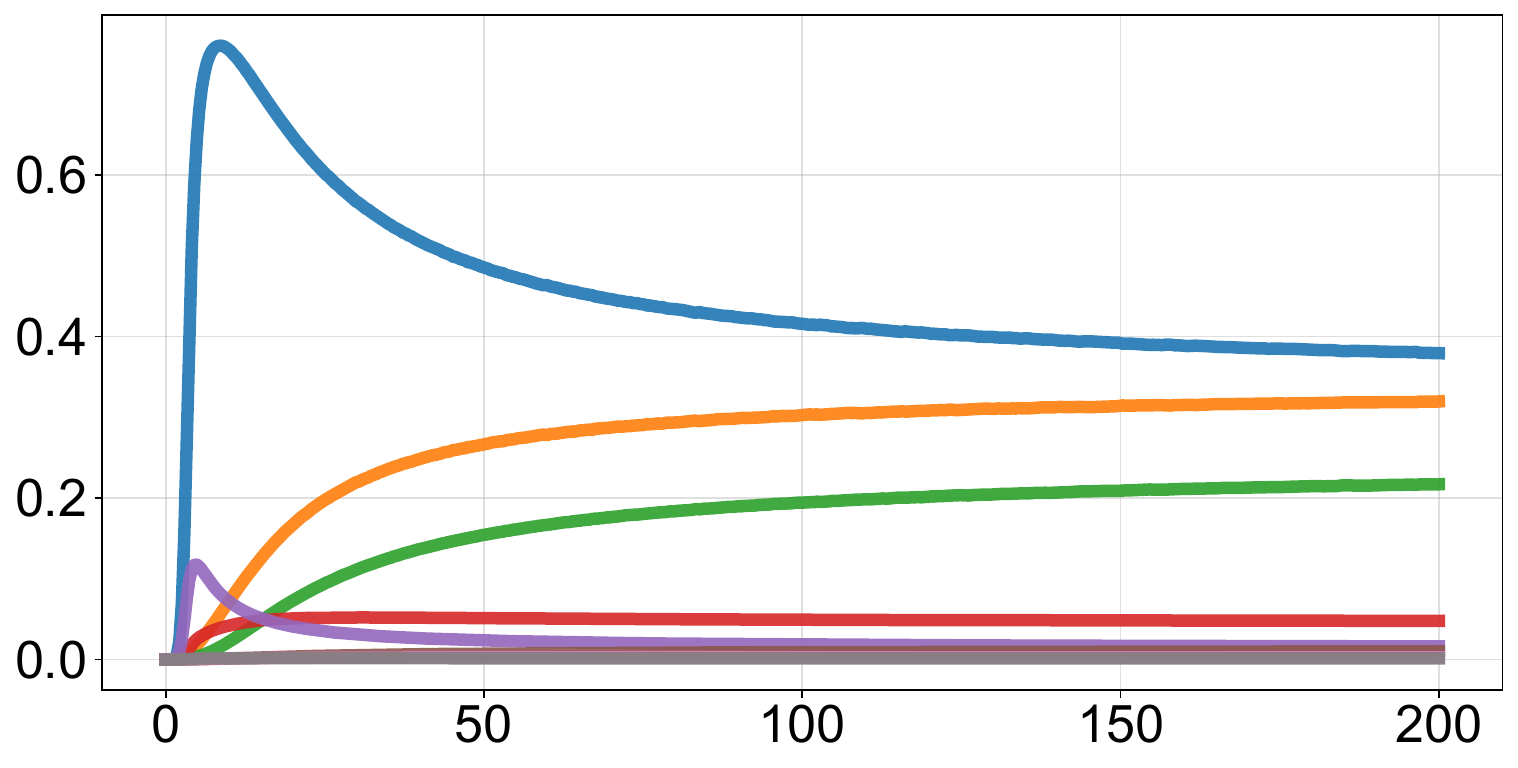}
\end{subfigure}\hfill
\begin{subfigure}[t]{0.32\textwidth}\centering
\includegraphics[width=1.\linewidth]{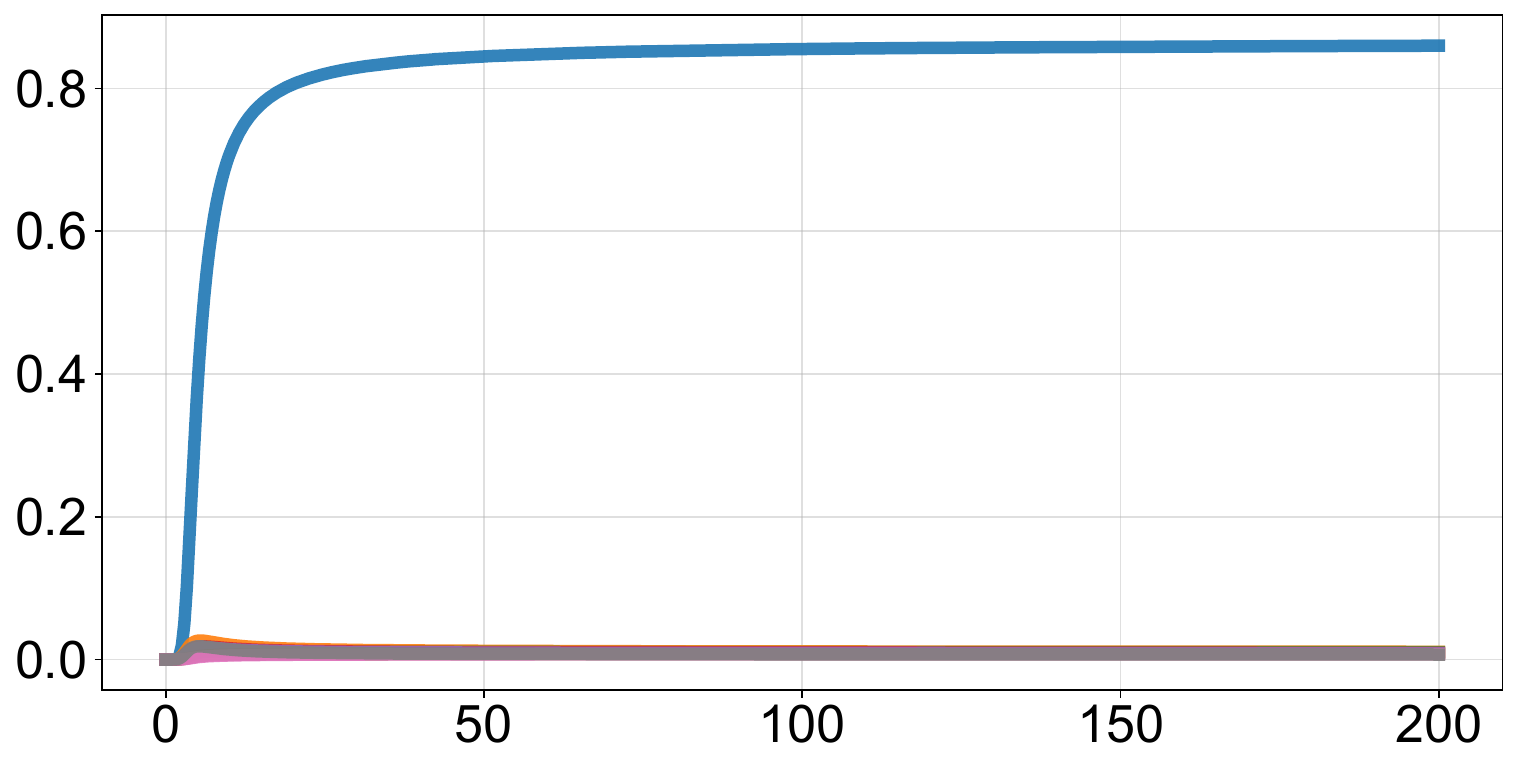}
\end{subfigure}

\vspace{2pt}
\begin{subfigure}[t]{0.32\textwidth}\centering
\includegraphics[width=1.\linewidth]{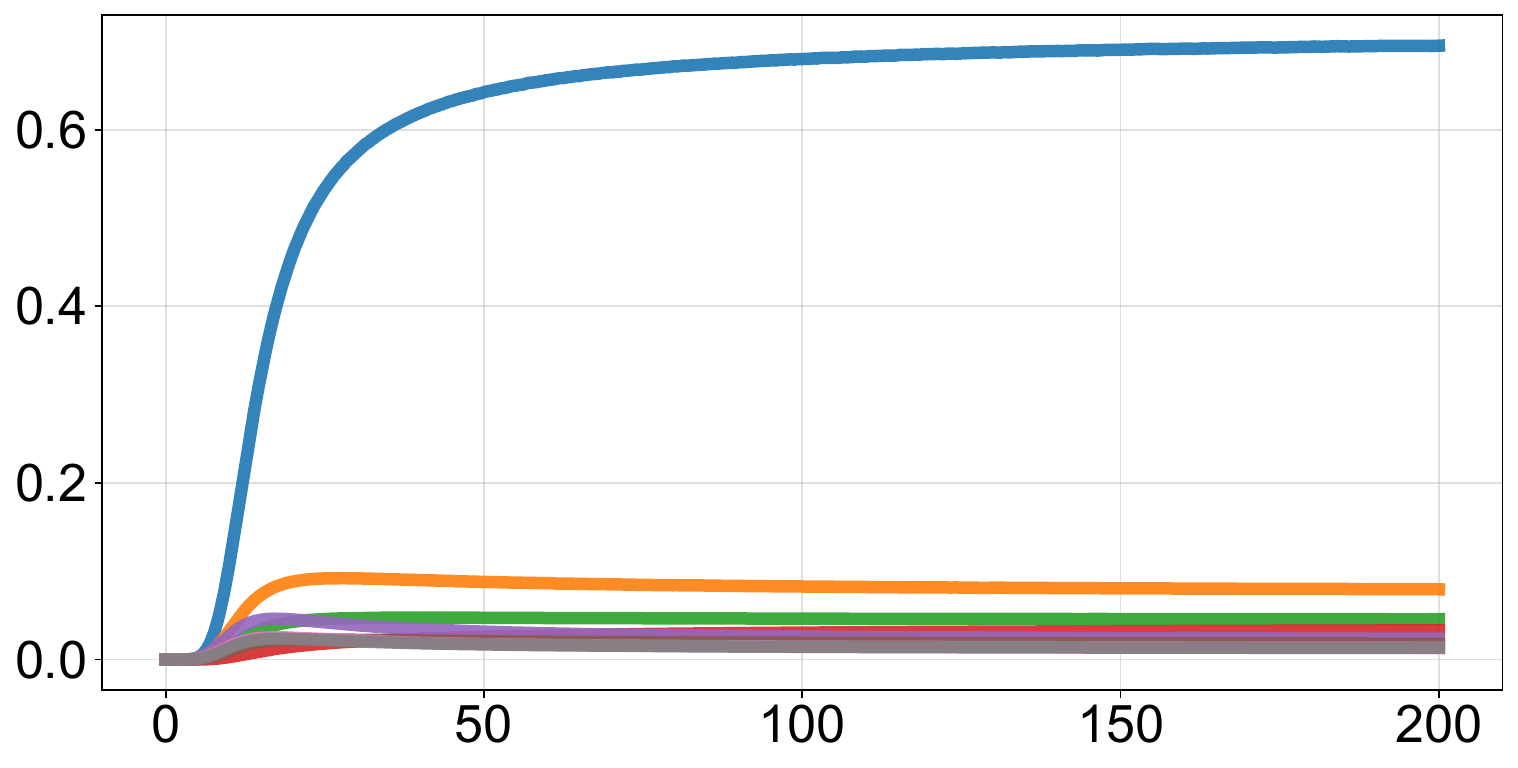}
\end{subfigure}\hfill
\begin{subfigure}[t]{0.32\textwidth}\centering
\includegraphics[width=1.\linewidth]{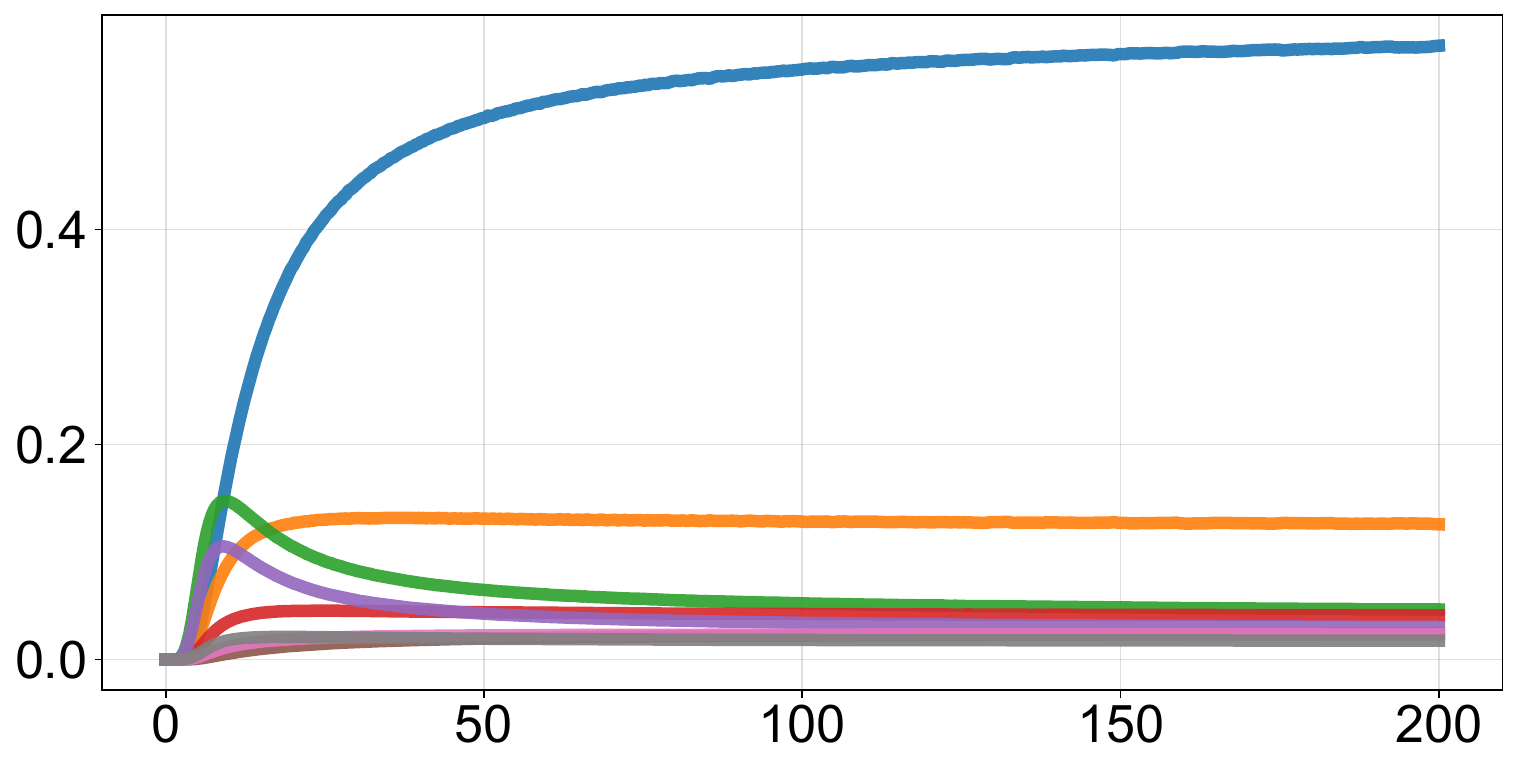}
\end{subfigure}\hfill
\begin{subfigure}[t]{0.32\textwidth}\centering
\includegraphics[width=1.\linewidth]{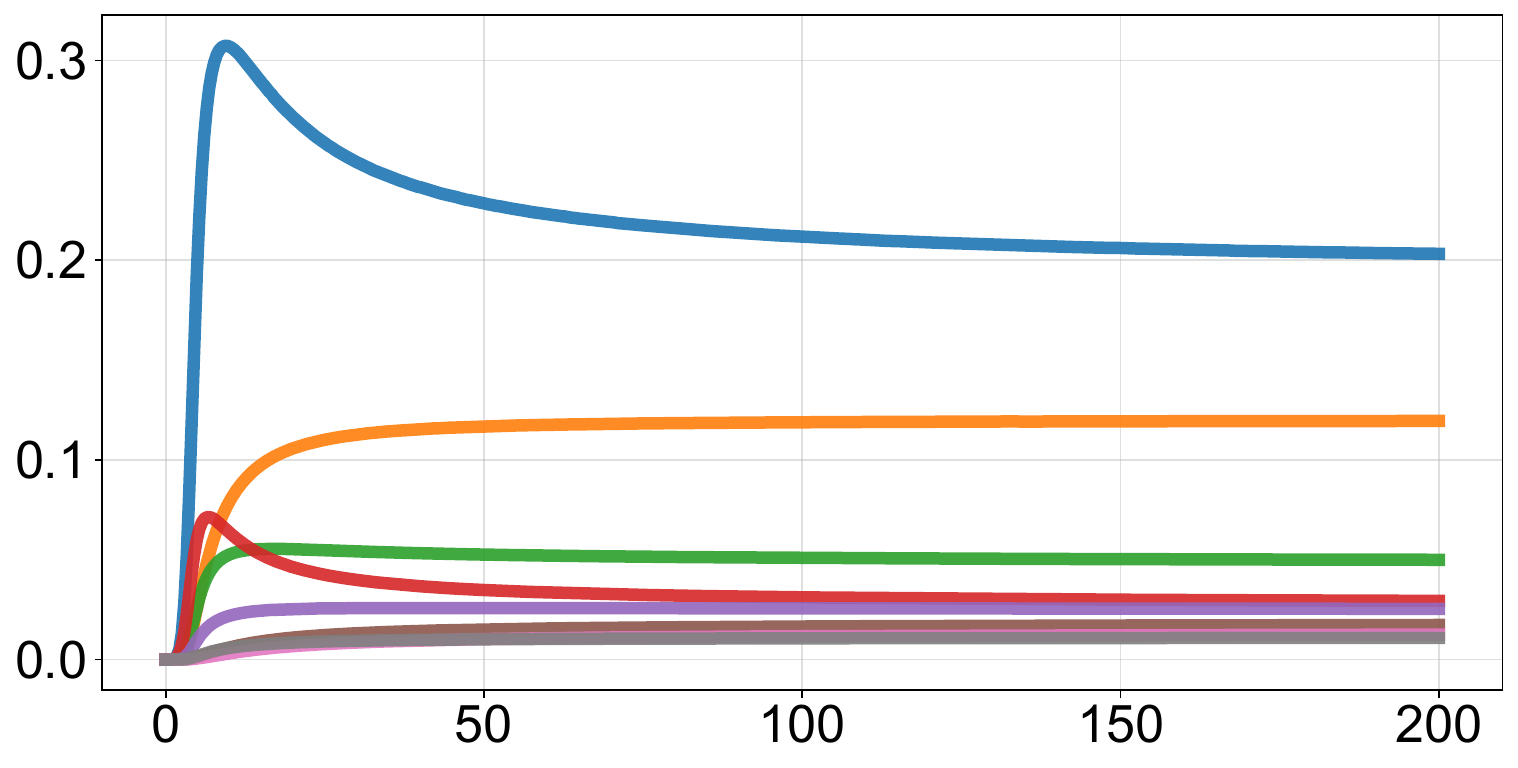}
\end{subfigure}

\vspace{2pt}
\begin{subfigure}[t]{0.32\textwidth}\centering
\includegraphics[width=1.\linewidth]{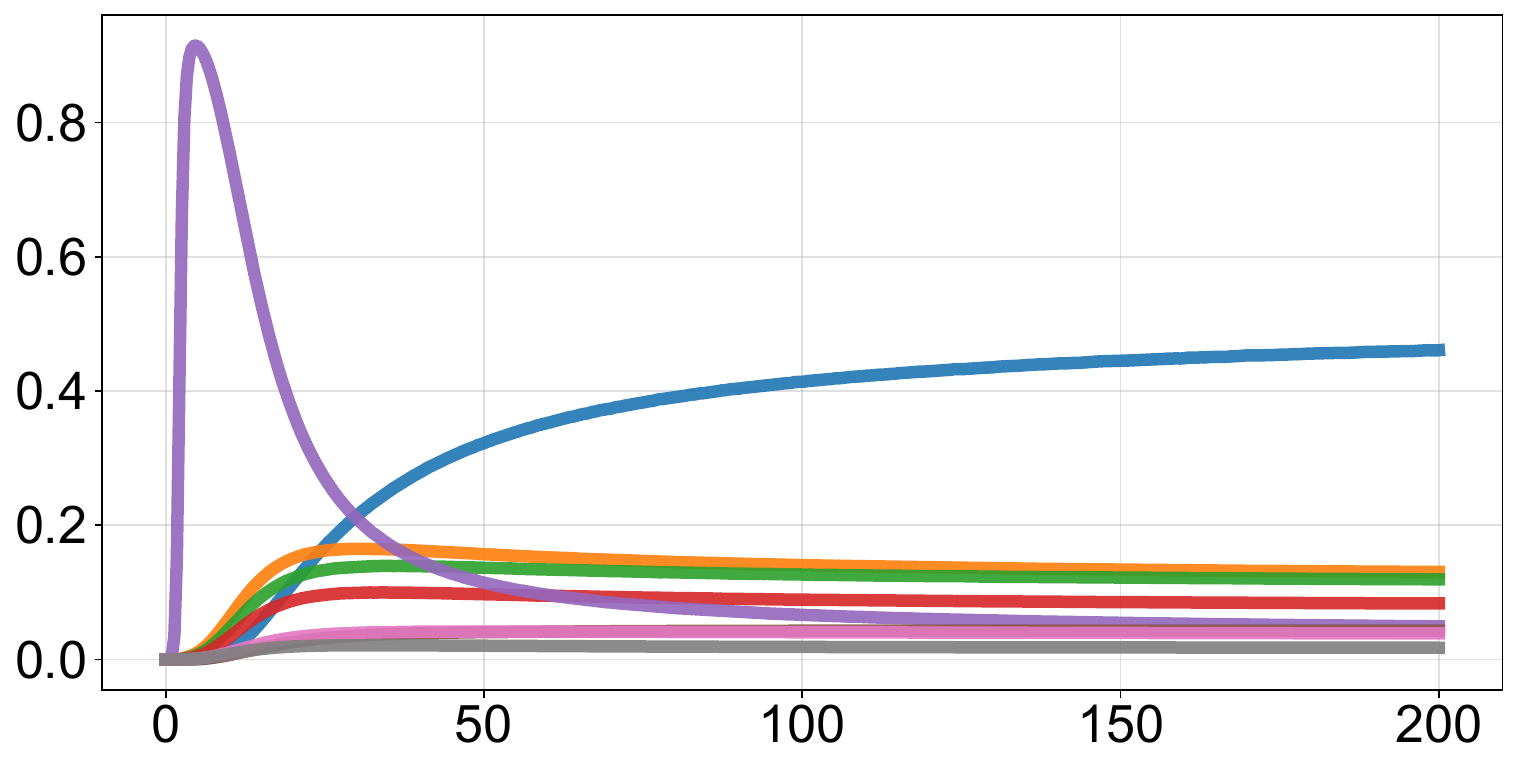}
\end{subfigure}\hfill
\begin{subfigure}[t]{0.32\textwidth}\centering
\includegraphics[width=1.\linewidth]{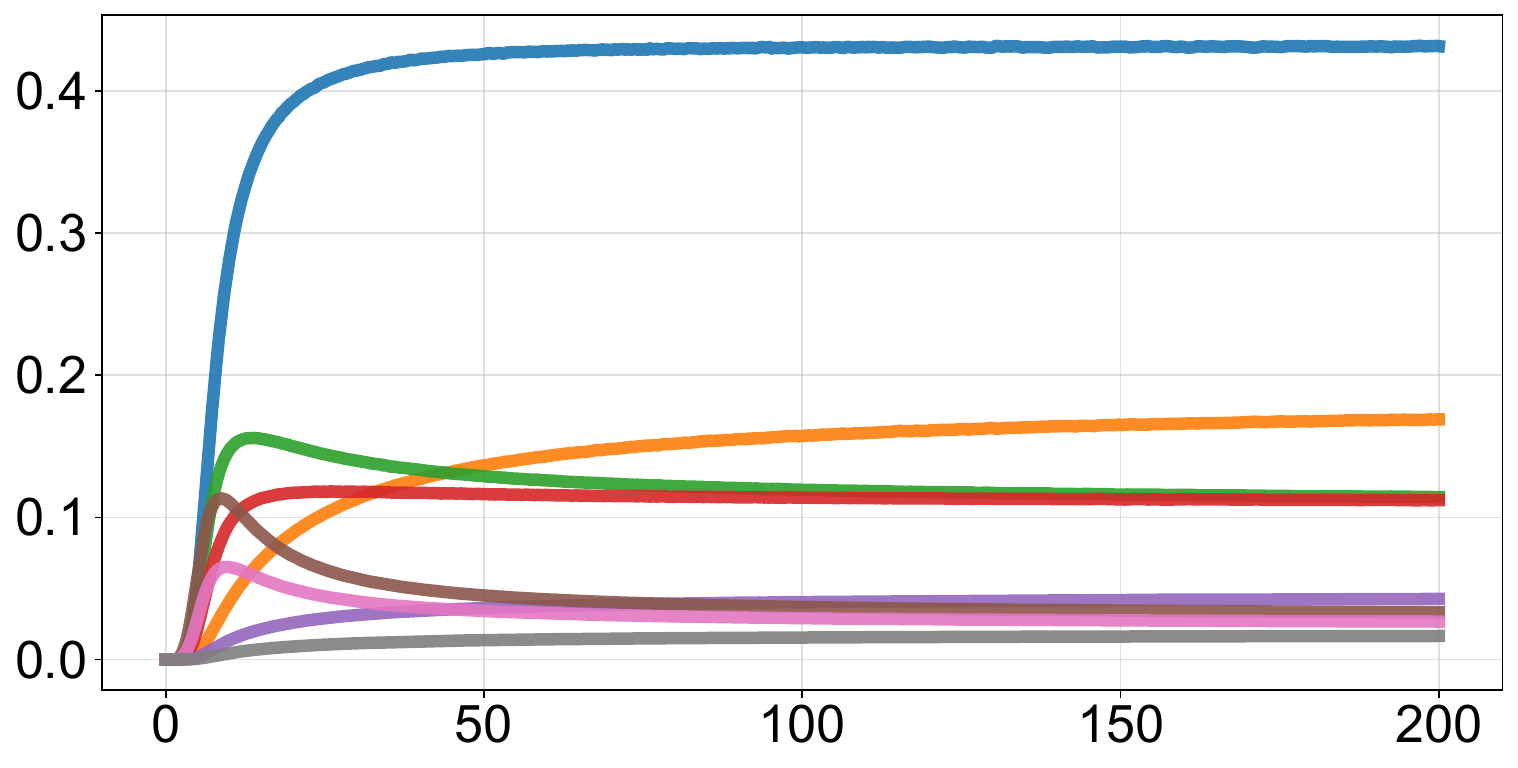}
\end{subfigure}\hfill
\begin{subfigure}[t]{0.32\textwidth}\centering
\includegraphics[width=1.\linewidth]{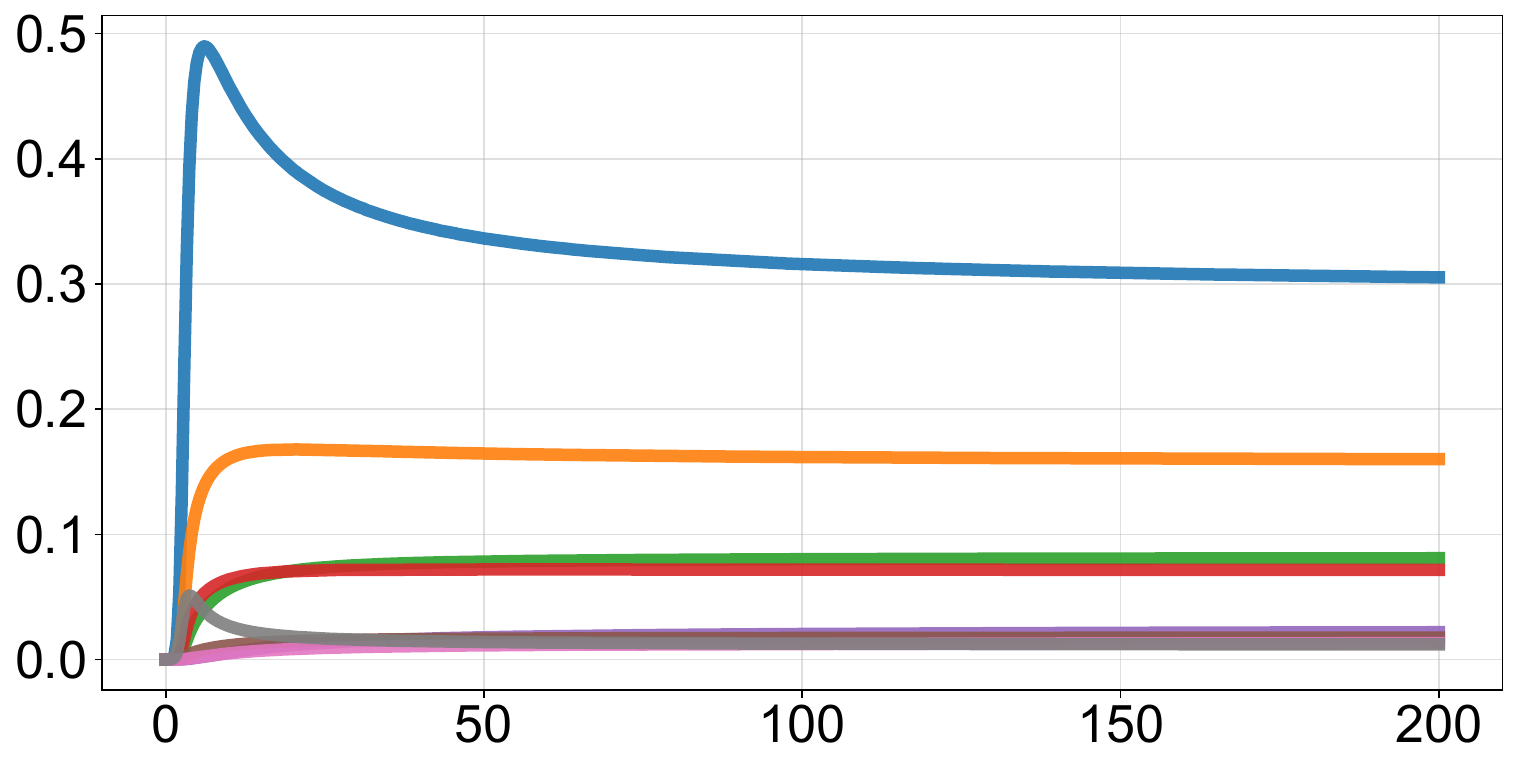}
\end{subfigure}

\vspace{-4pt}
\caption{\label{fig:exp-next-token-appendix-pos-layer-late}Effect of steering strength $\alpha>0$ on next-token probability shifts $\Delta p(z,\alpha)$ for the concepts (top to bottom): depression, evil, impolite, joy, lying, and apathetic. Each row of three plots corresponds to a single steered concept. Steering is applied at the \textbf{last} layer in each model (we steer always the same layer for that model). Each curve corresponds to a token $z$ selected among the eight highest-probability tokens at $\alpha=200$. This matches Theorem~\ref{th:non-monotonicity}: most tokens exhibit a bump, while a few increase throughout. Notably, the selected tokens are mostly related to the steered concept.}
\end{figure}

\begin{figure}[p]
\centering
\noindent{\small\textbf{Steered model:} Qwen/Qwen3-4B}\par\smallskip
\rowtitle{Coding}
\begin{subfigure}[t]{0.32\textwidth}\centering
\scriptsize\textbf{Layer 5}\par\smallskip
\includegraphics[width=\linewidth]{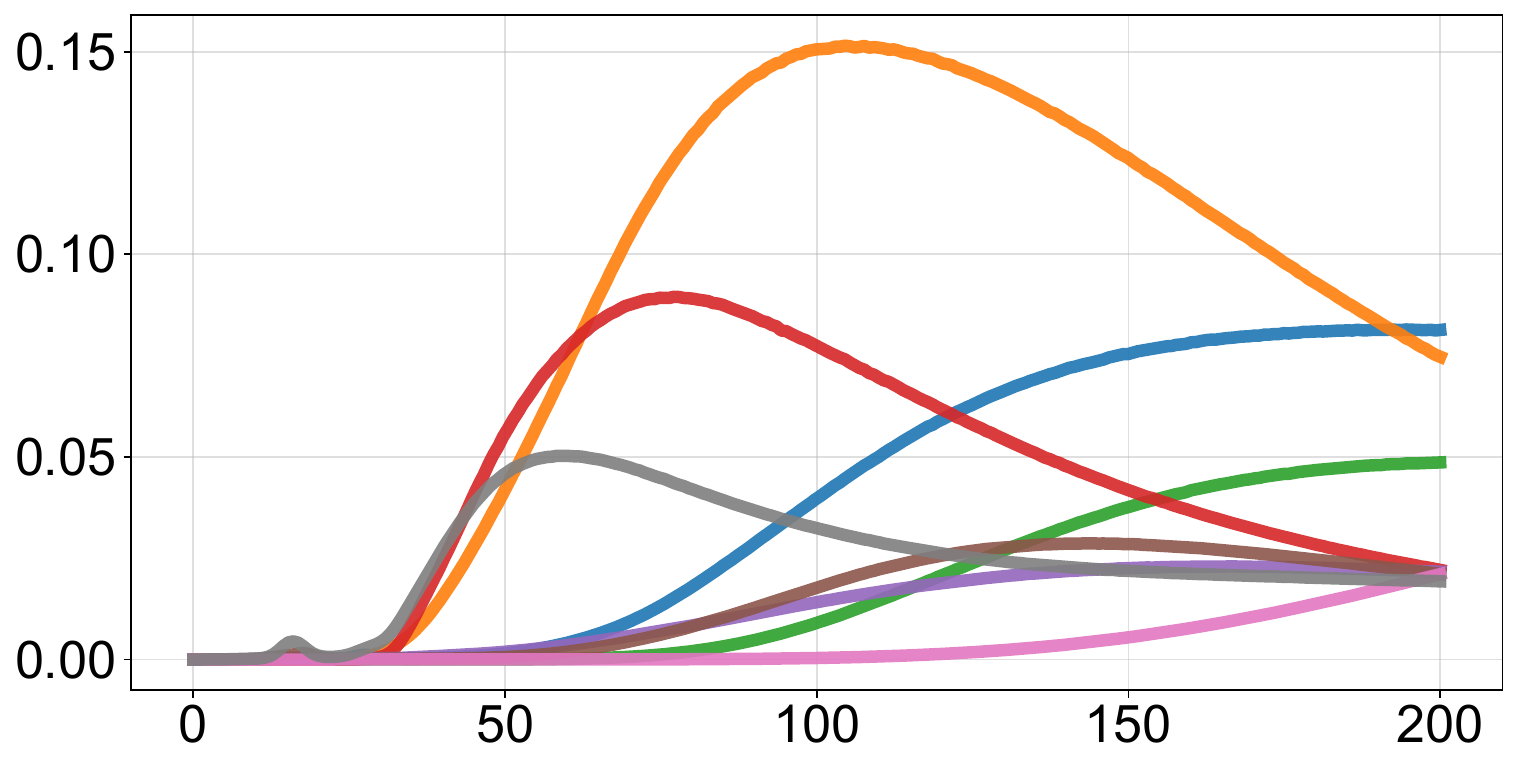}
\end{subfigure}\hfill
\begin{subfigure}[t]{0.32\textwidth}\centering
\scriptsize\textbf{Layer 23}\par\smallskip
\includegraphics[width=\linewidth]{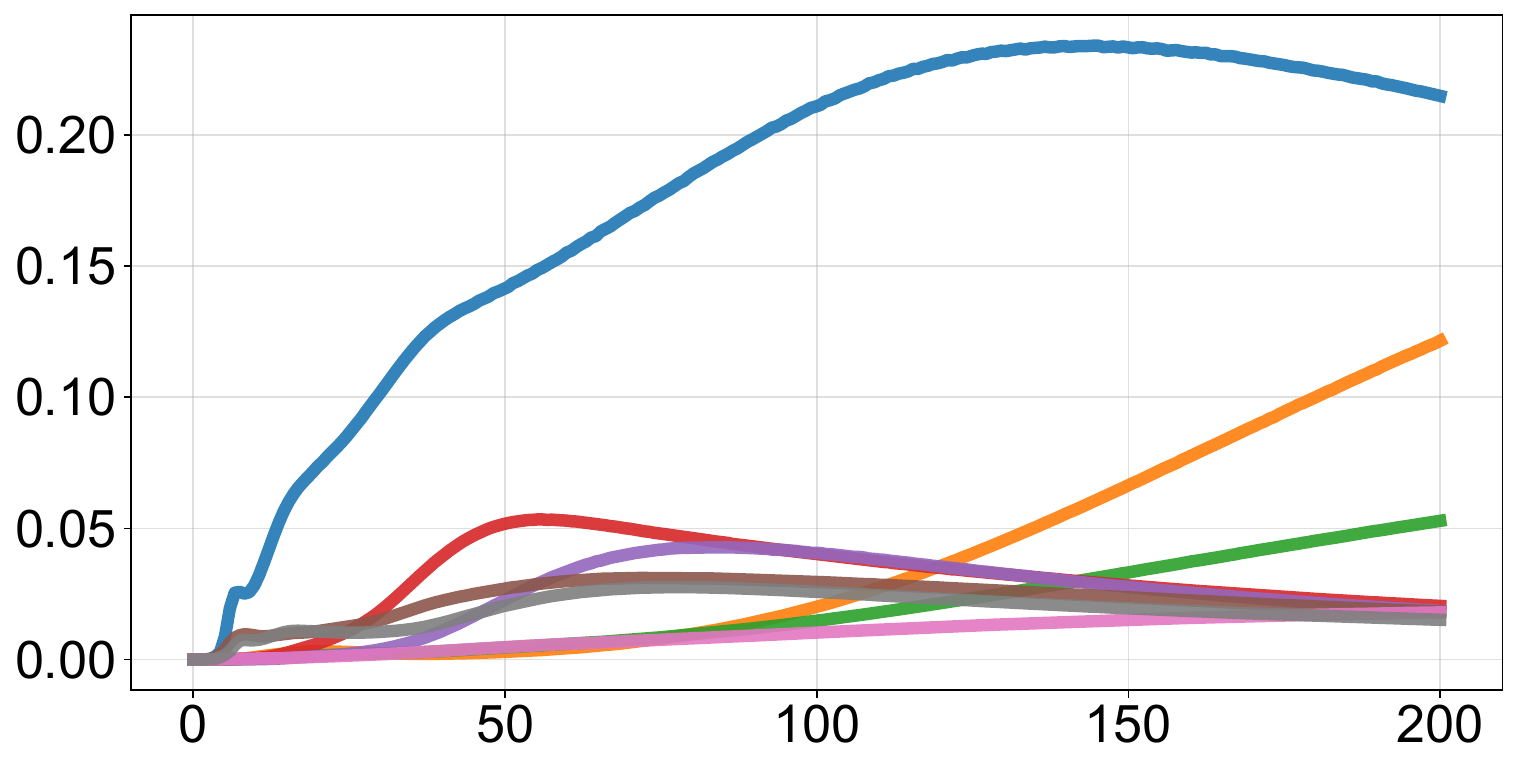}
\end{subfigure}\hfill
\begin{subfigure}[t]{0.32\textwidth}\centering
\scriptsize\textbf{Layer 32}\par\smallskip
\includegraphics[width=\linewidth]{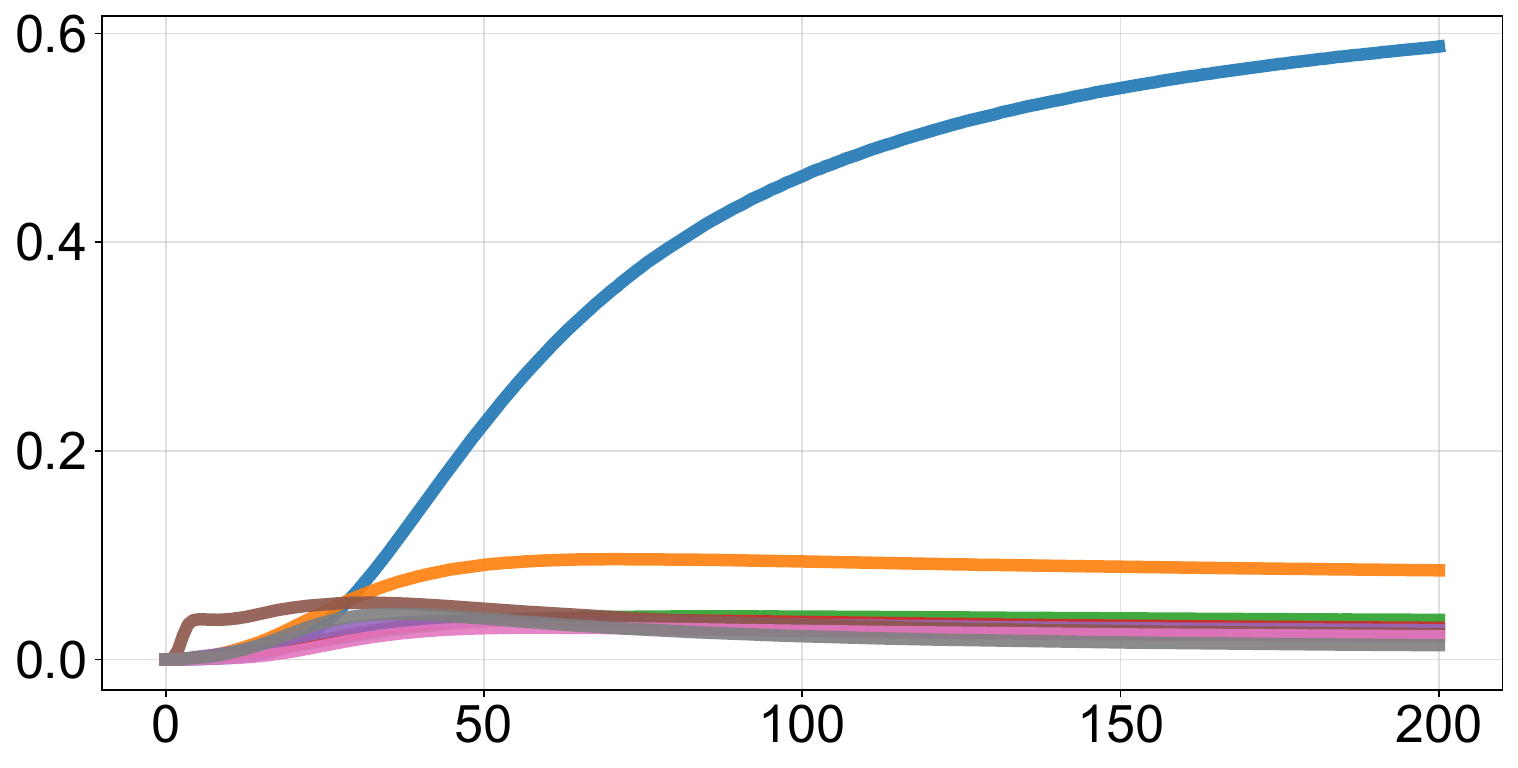}
\end{subfigure}
\rowtitle{Math}
\begin{subfigure}[t]{0.32\textwidth}\centering
\scriptsize\textbf{Layer 5}\par\smallskip
\includegraphics[width=\linewidth]{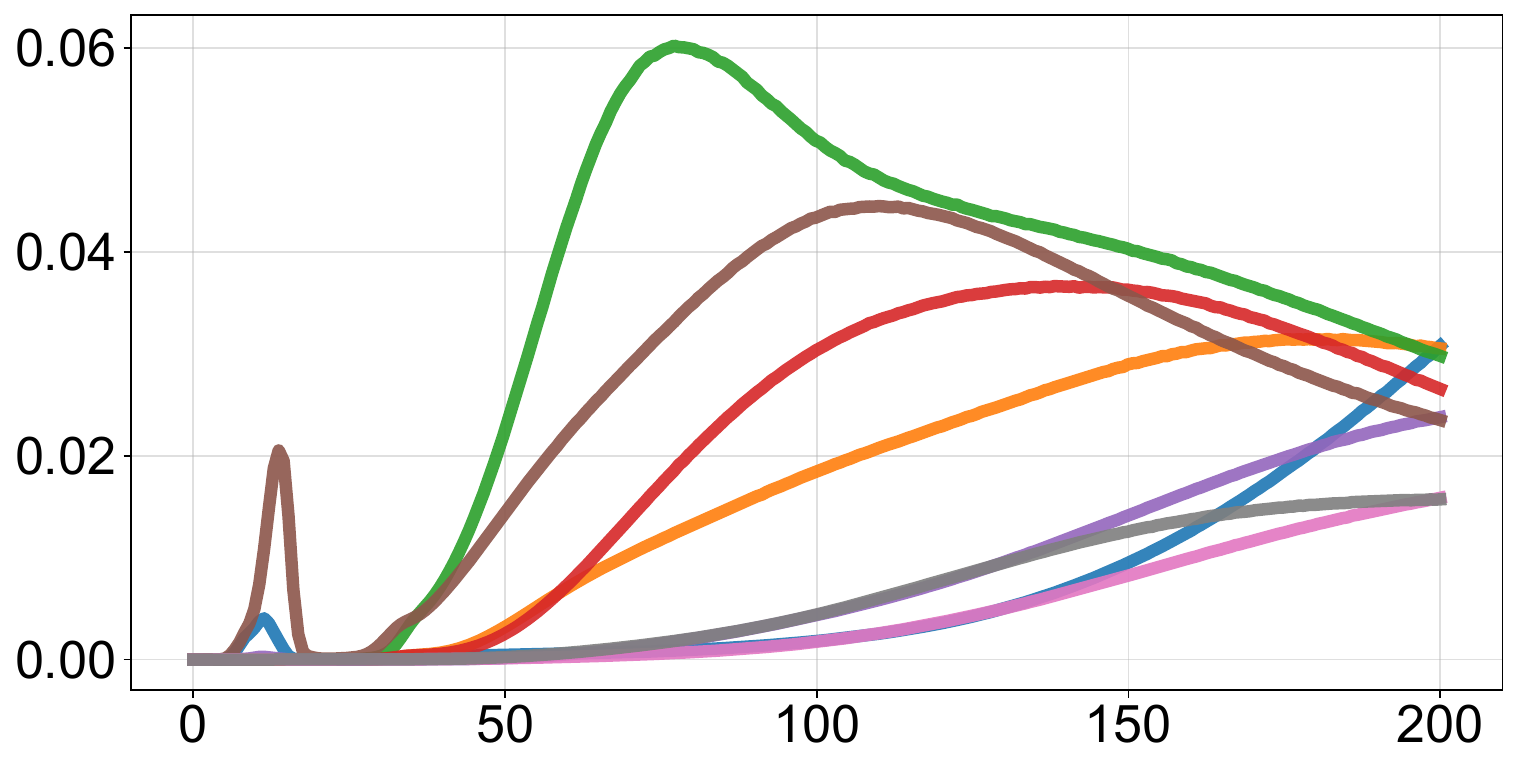}
\end{subfigure}\hfill
\begin{subfigure}[t]{0.32\textwidth}\centering
\scriptsize\textbf{Layer 23}\par\smallskip
\includegraphics[width=\linewidth]{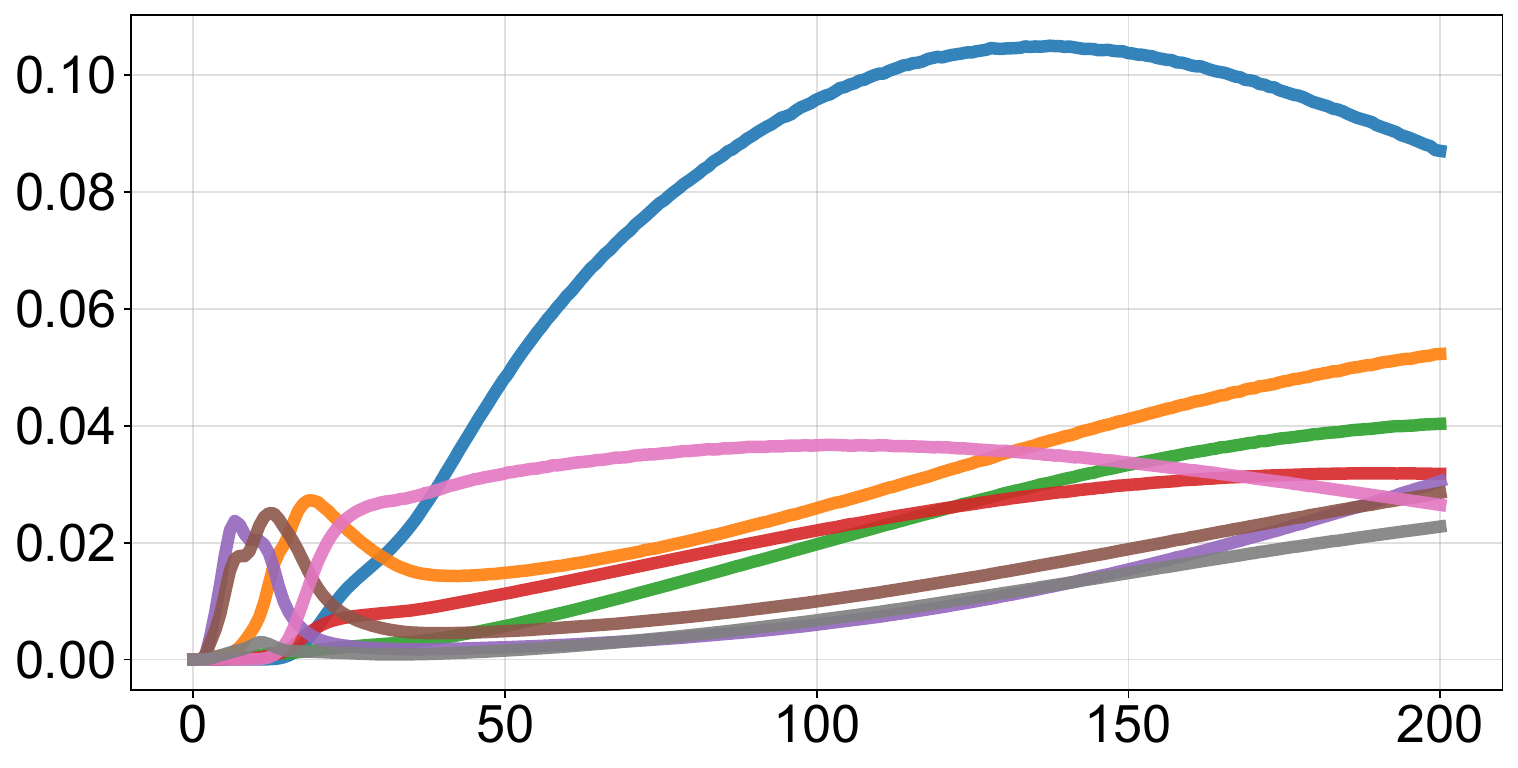}
\end{subfigure}\hfill
\begin{subfigure}[t]{0.32\textwidth}\centering
\scriptsize\textbf{Layer 32}\par\smallskip
\includegraphics[width=\linewidth]{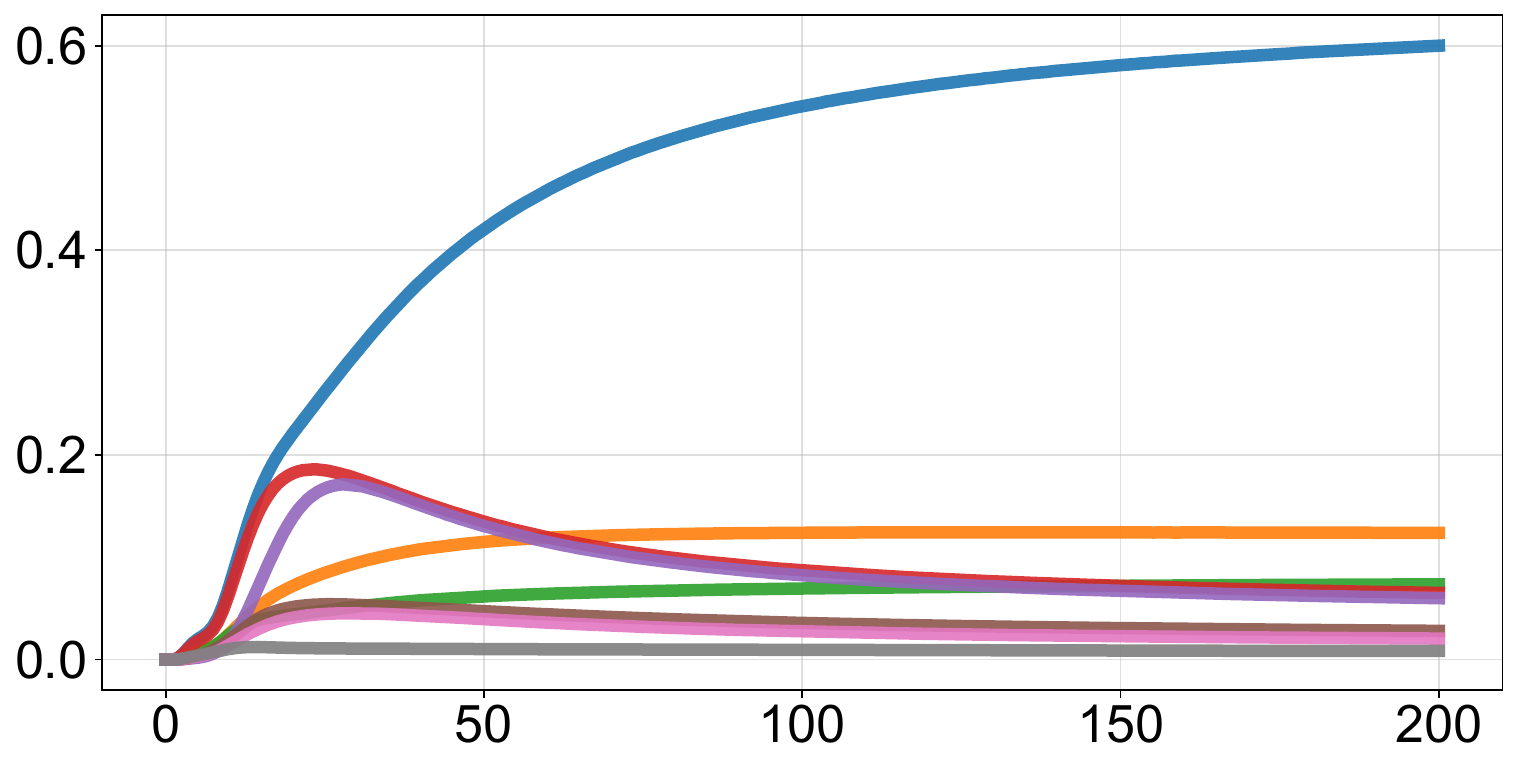}
\end{subfigure}
\rowtitle{Medical}
\begin{subfigure}[t]{0.32\textwidth}\centering
\scriptsize\textbf{Layer 5}\par\smallskip
\includegraphics[width=\linewidth]{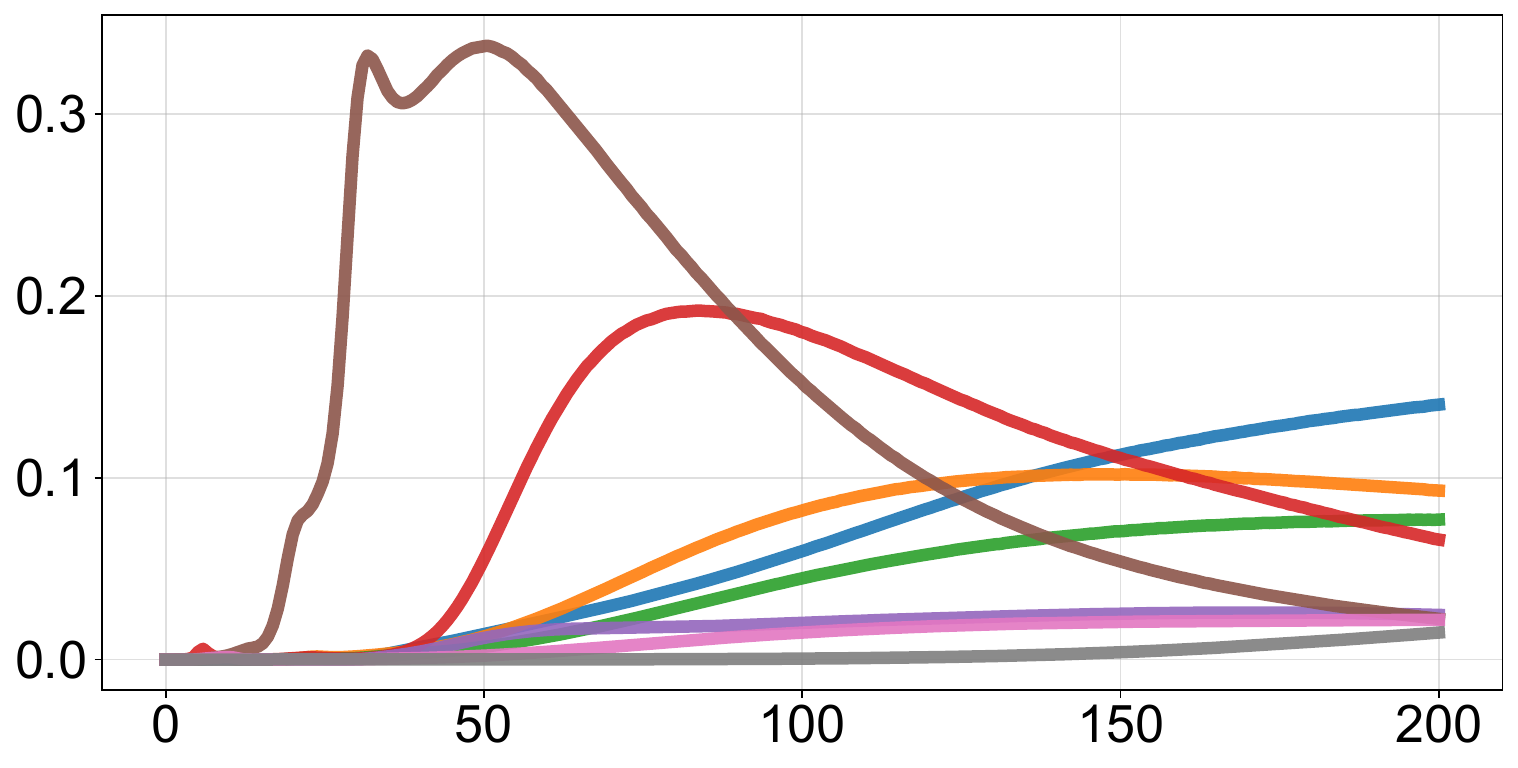}
\end{subfigure}\hfill
\begin{subfigure}[t]{0.32\textwidth}\centering
\scriptsize\textbf{Layer 23}\par\smallskip
\includegraphics[width=\linewidth]{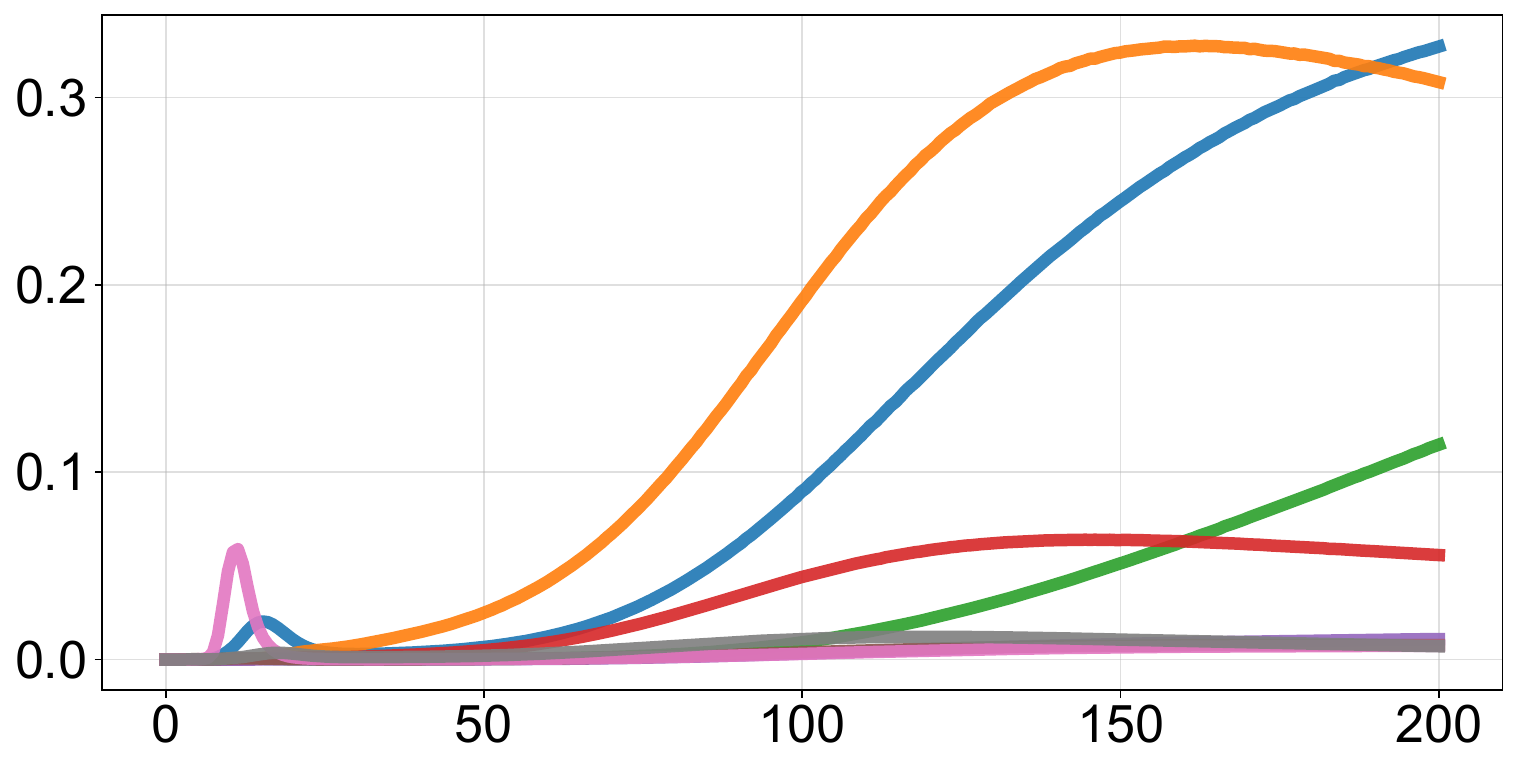}
\end{subfigure}\hfill
\begin{subfigure}[t]{0.32\textwidth}\centering
\scriptsize\textbf{Layer 32}\par\smallskip
\includegraphics[width=\linewidth]{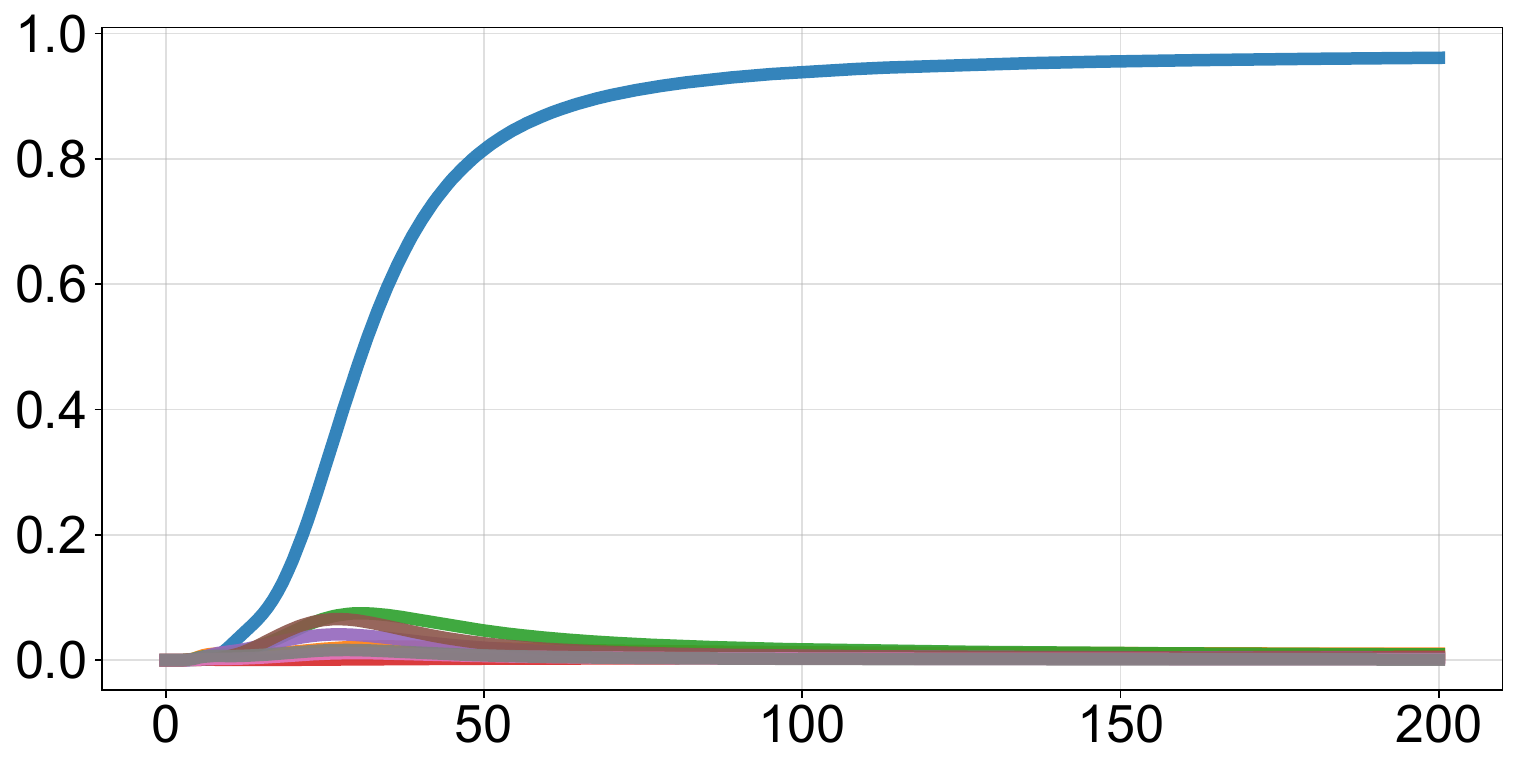}
\end{subfigure}
\rowtitle{Sycophancy}
\begin{subfigure}[t]{0.32\textwidth}\centering
\scriptsize\textbf{Layer 5}\par\smallskip
\includegraphics[width=\linewidth]{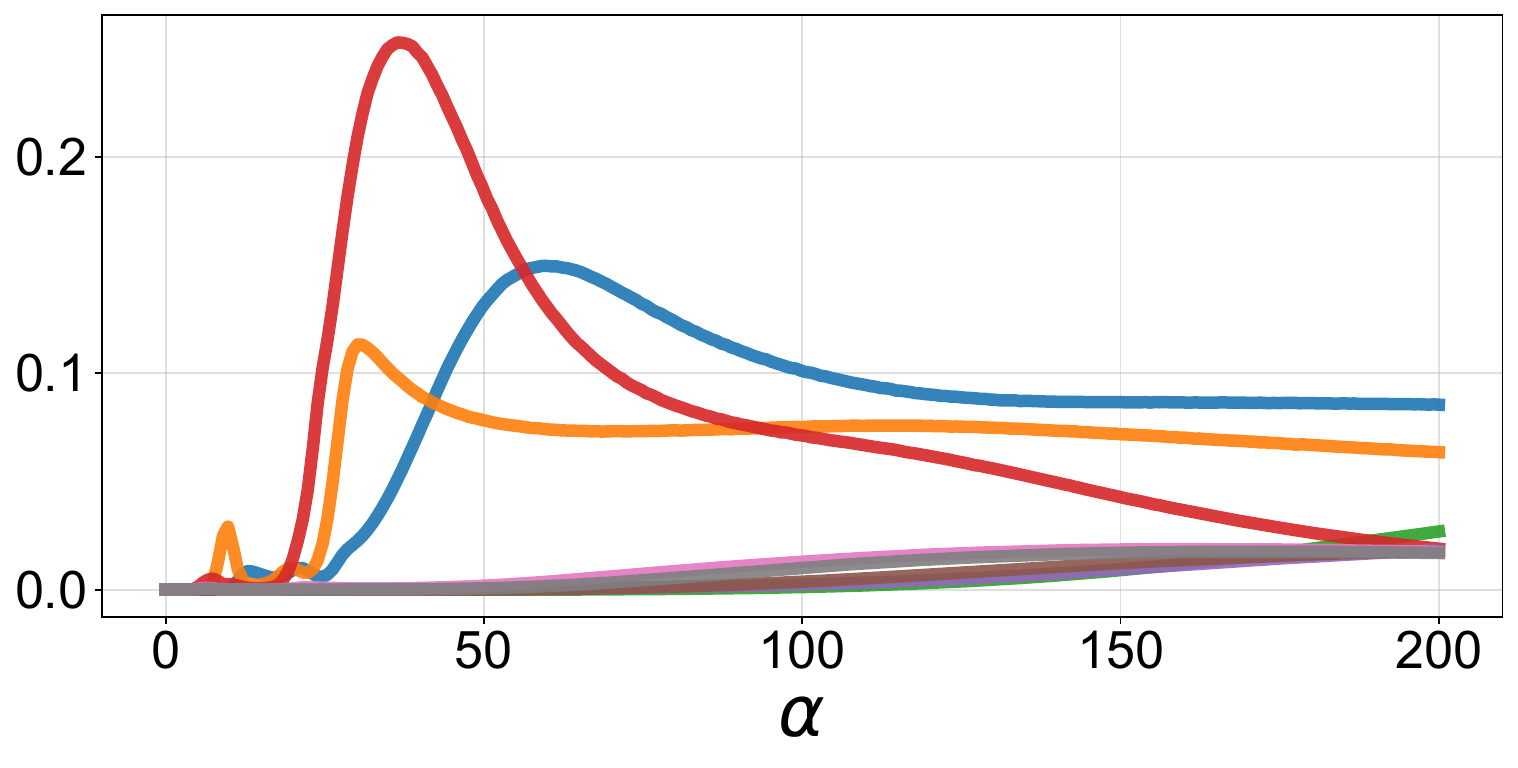}
\end{subfigure}\hfill
\begin{subfigure}[t]{0.32\textwidth}\centering
\scriptsize\textbf{Layer 23}\par\smallskip
\includegraphics[width=\linewidth]{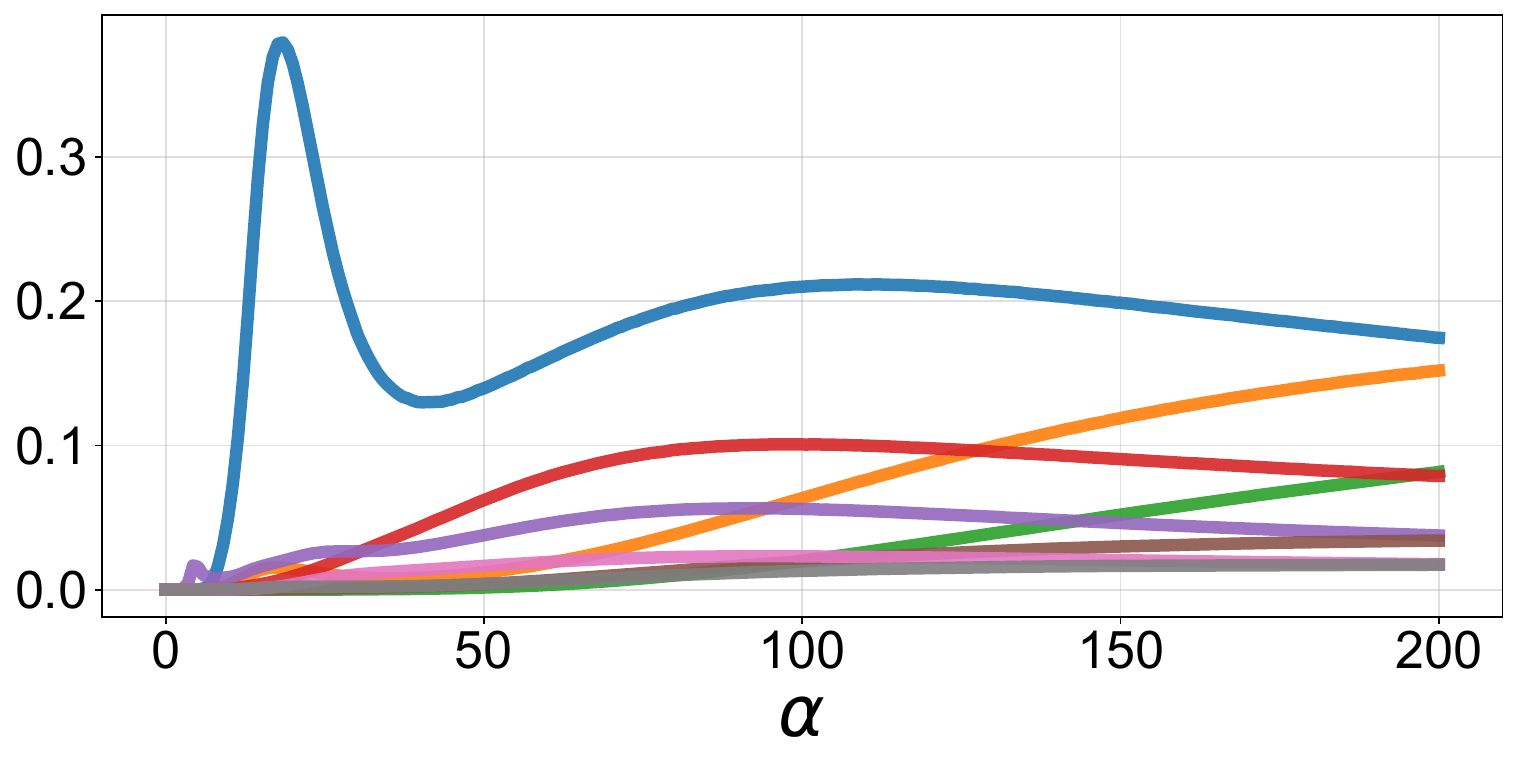}
\end{subfigure}\hfill
\begin{subfigure}[t]{0.32\textwidth}\centering
\scriptsize\textbf{Layer 32}\par\smallskip
\includegraphics[width=\linewidth]{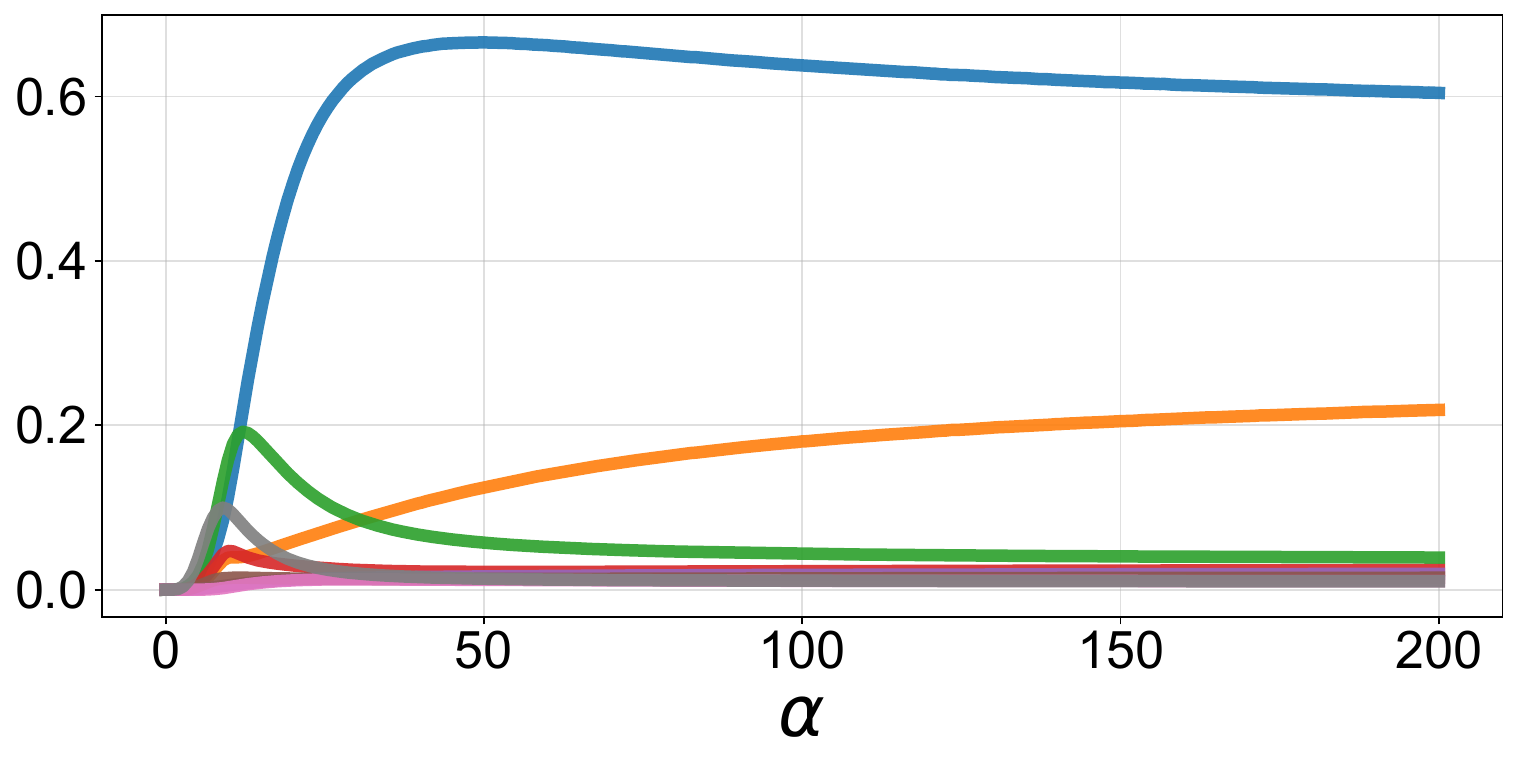}
\end{subfigure}
\caption{Effect of steering strength $\alpha>0$ on next-token probability shifts $\Delta p(z,\alpha)$ for the concepts (top to bottom): coding, math, medical, and sycophancy. Each row of three plots corresponds to a single steered concept. Columns correspond to steering layers 5, 23, 32. Each curve corresponds to a token $z$ selected among the eight highest-probability tokens at $\alpha=200$.}
\end{figure}
\clearpage
\begin{figure}[p]
\centering
\noindent{\small\textbf{Steered model:} mistralai/Mistral-7B-Instruct-v0.3}\par\smallskip
\rowtitle{Coding}
\begin{subfigure}[t]{0.32\textwidth}\centering
\scriptsize\textbf{Layer 11}\par\smallskip
\includegraphics[width=\linewidth]{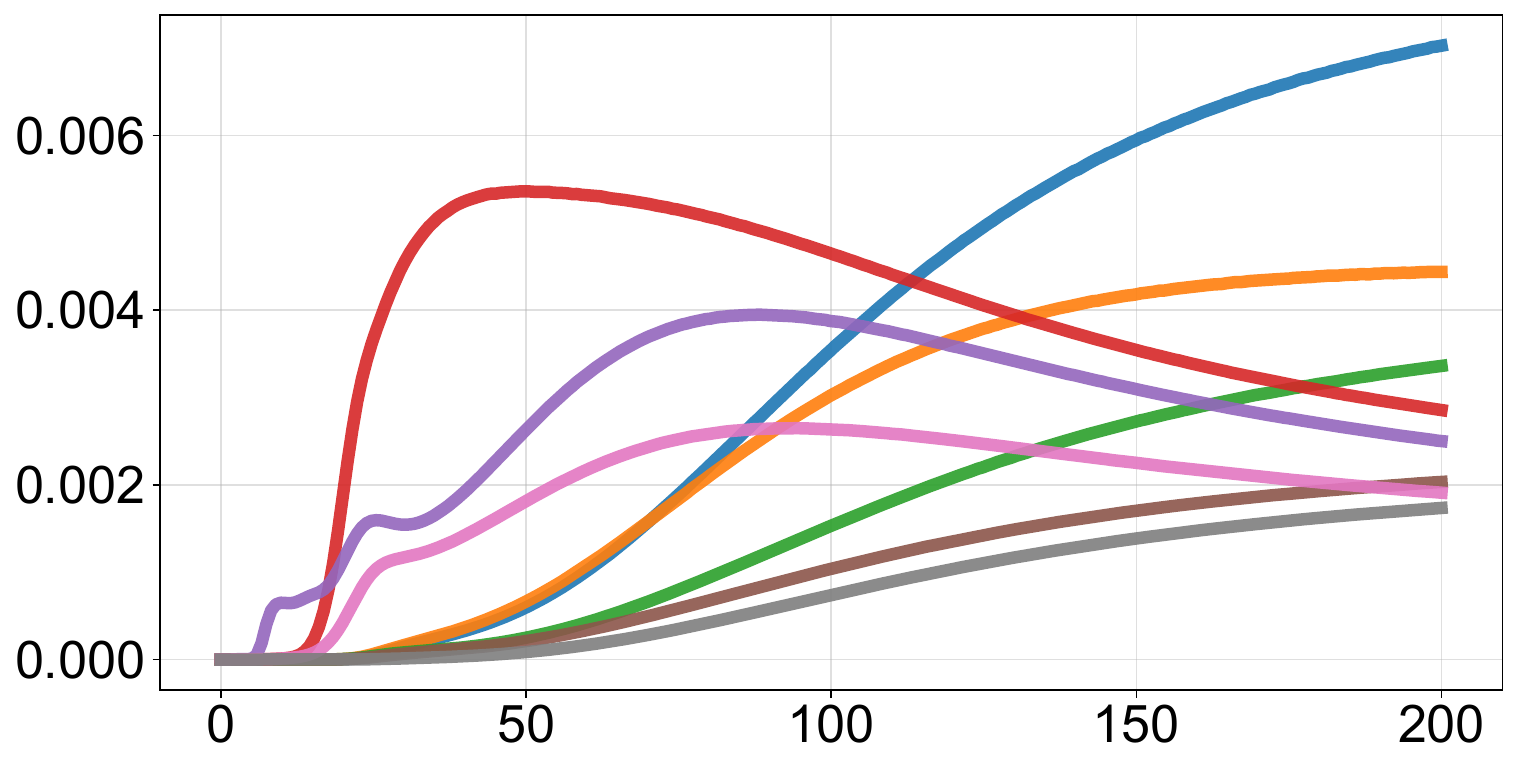}
\end{subfigure}\hfill
\begin{subfigure}[t]{0.32\textwidth}\centering
\scriptsize\textbf{Layer 22}\par\smallskip
\includegraphics[width=\linewidth]{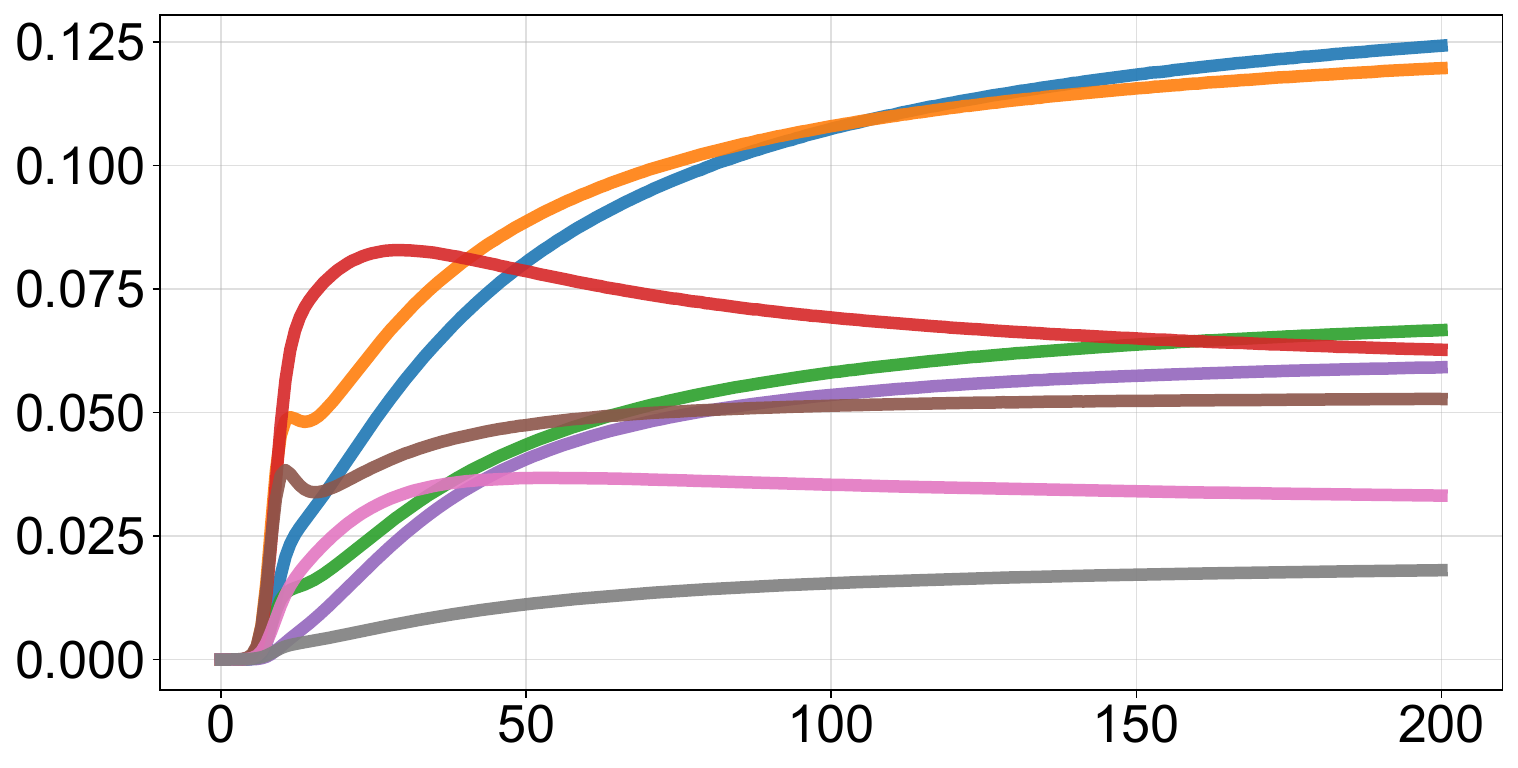}
\end{subfigure}\hfill
\begin{subfigure}[t]{0.32\textwidth}\centering
\scriptsize\textbf{Layer 31}\par\smallskip
\includegraphics[width=\linewidth]{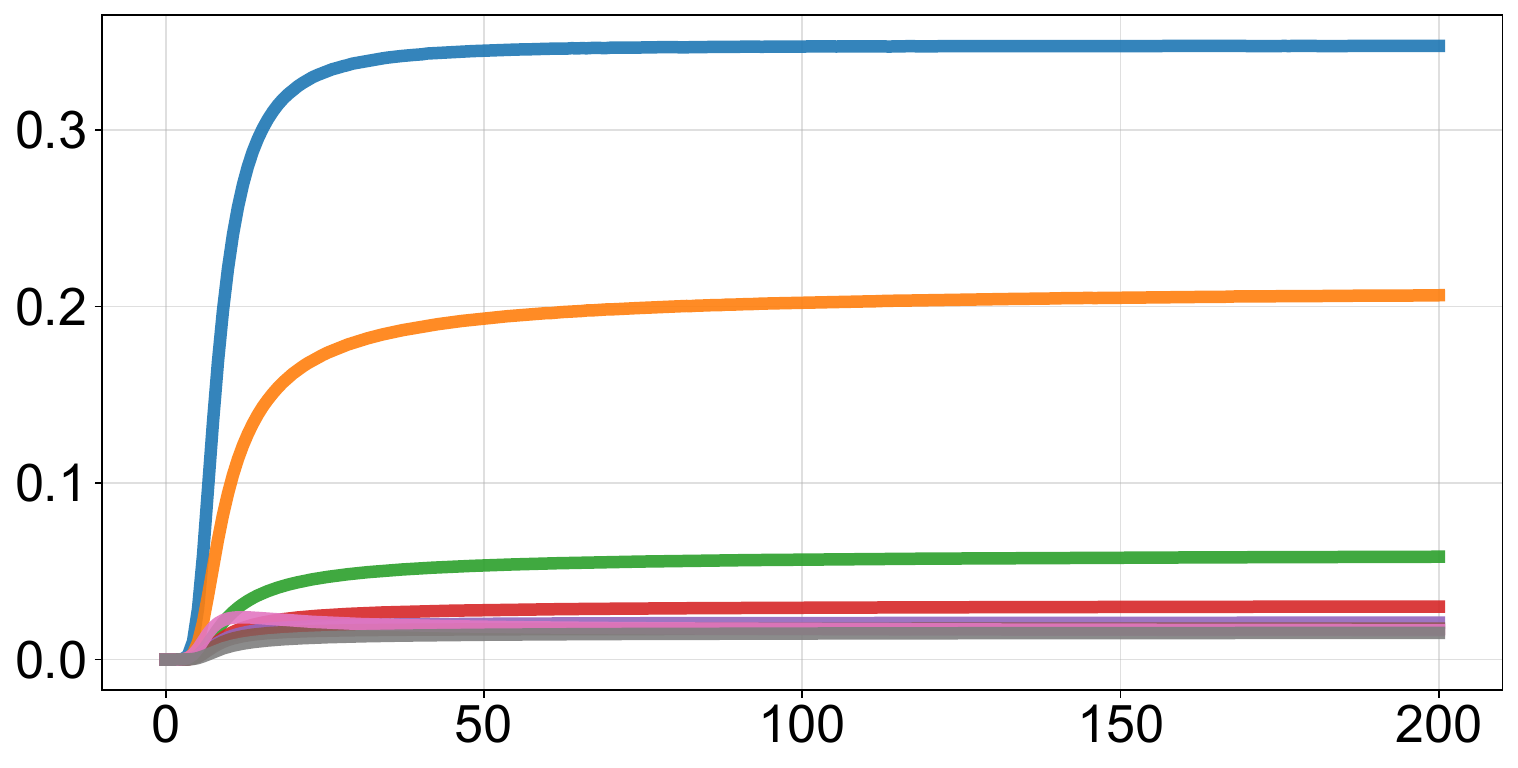}
\end{subfigure}
\rowtitle{Math}
\begin{subfigure}[t]{0.32\textwidth}\centering
\scriptsize\textbf{Layer 11}\par\smallskip
\includegraphics[width=\linewidth]{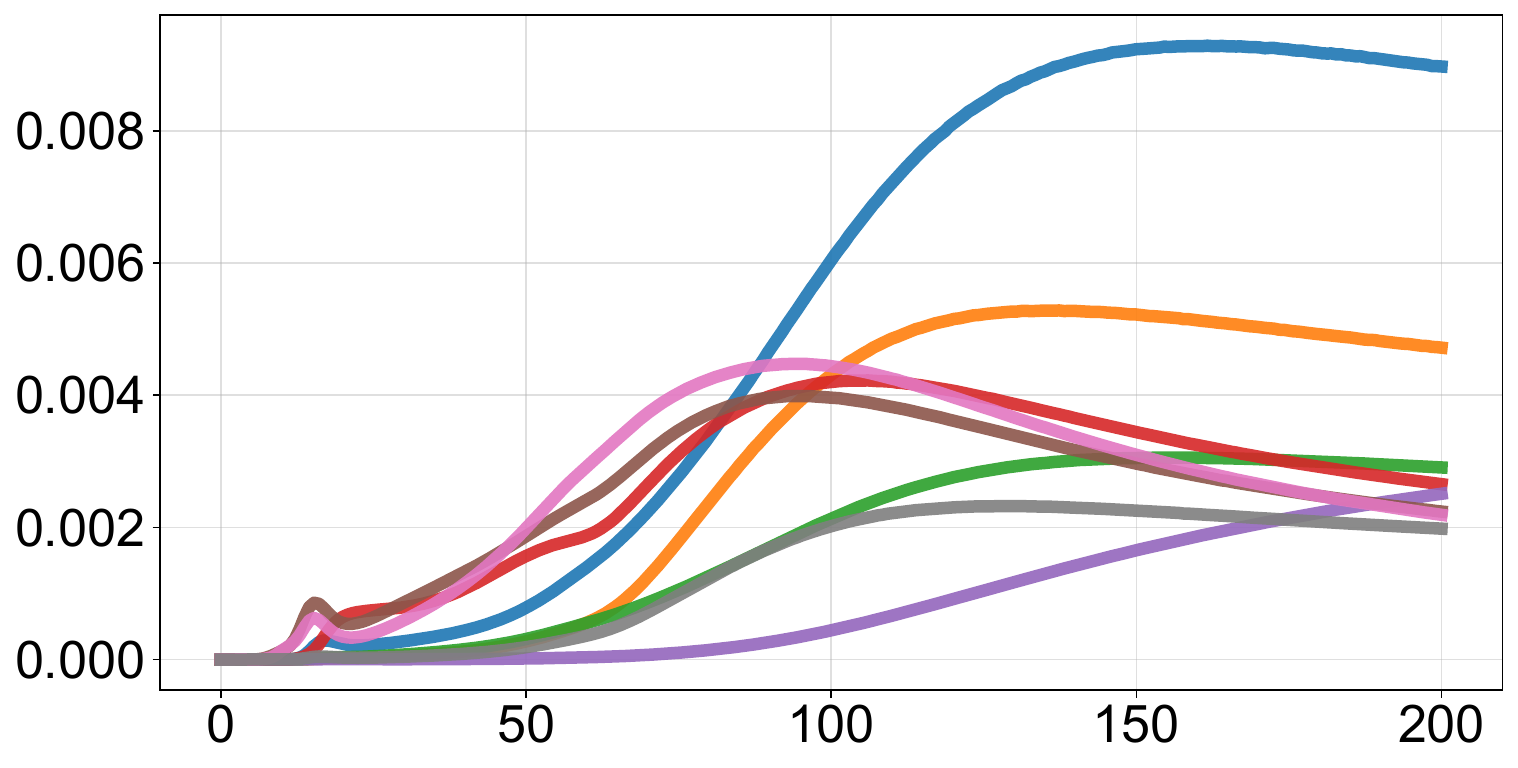}
\end{subfigure}\hfill
\begin{subfigure}[t]{0.32\textwidth}\centering
\scriptsize\textbf{Layer 22}\par\smallskip
\includegraphics[width=\linewidth]{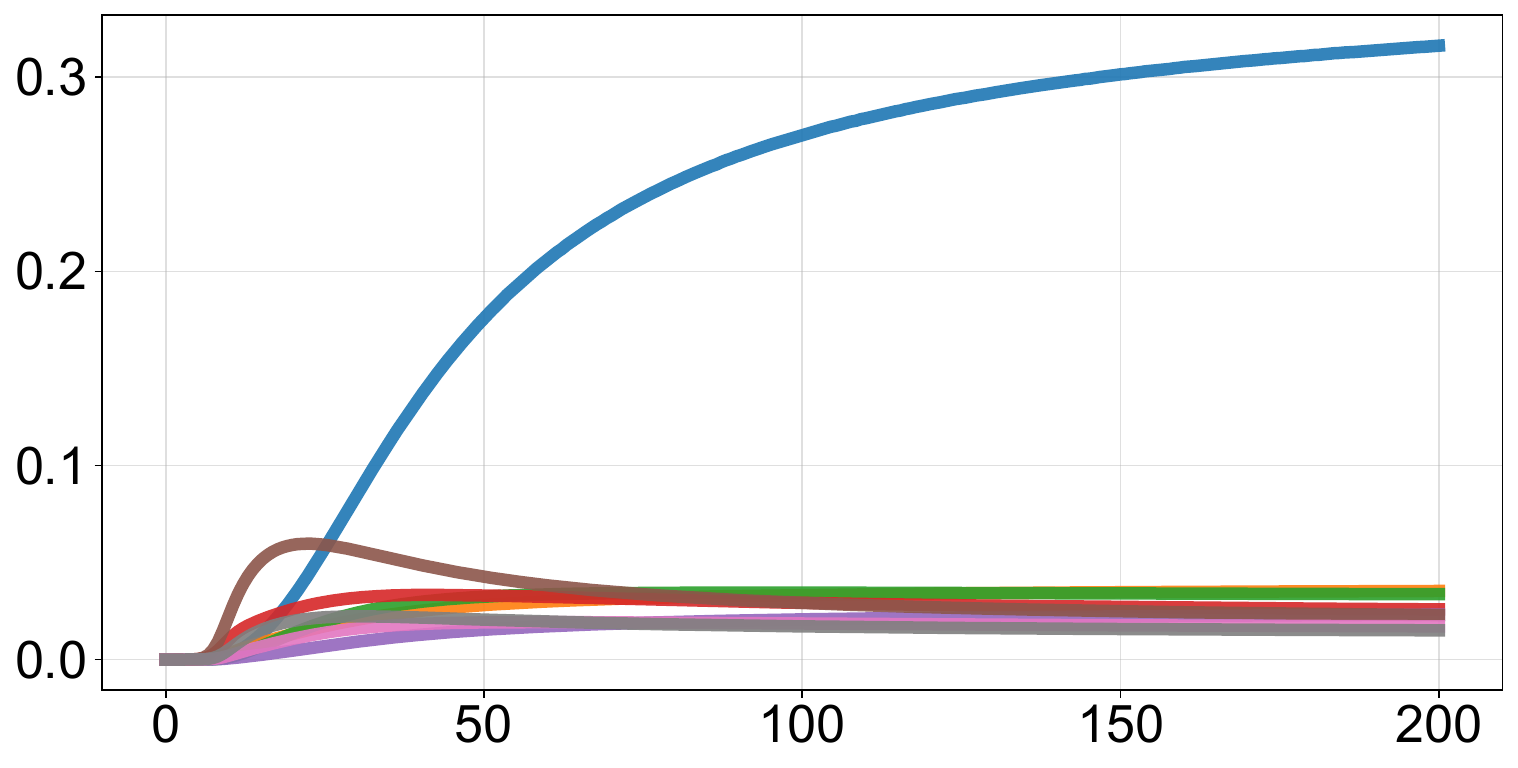}
\end{subfigure}\hfill
\begin{subfigure}[t]{0.32\textwidth}\centering
\scriptsize\textbf{Layer 31}\par\smallskip
\includegraphics[width=\linewidth]{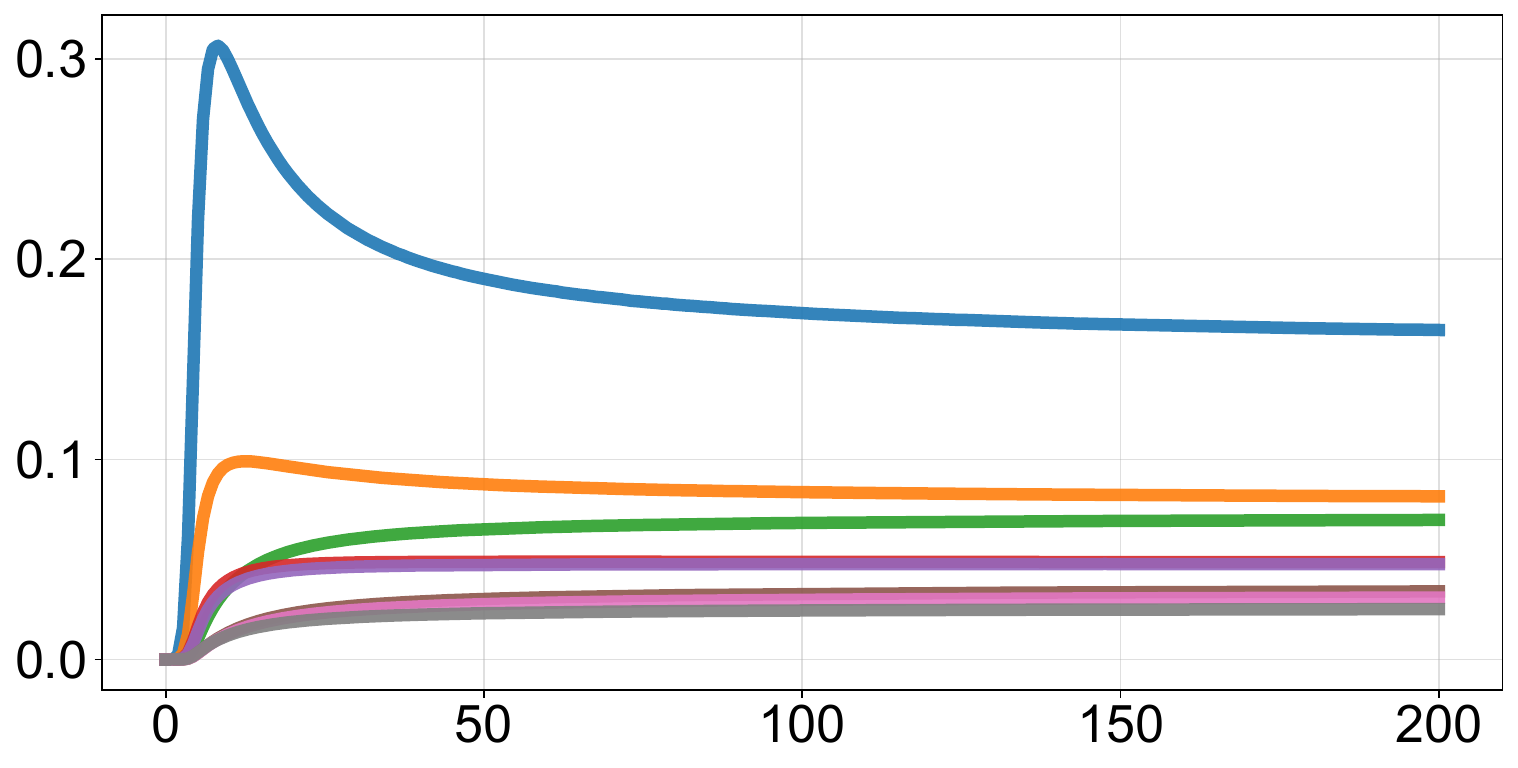}
\end{subfigure}
\rowtitle{Medical}
\begin{subfigure}[t]{0.32\textwidth}\centering
\scriptsize\textbf{Layer 11}\par\smallskip
\includegraphics[width=\linewidth]{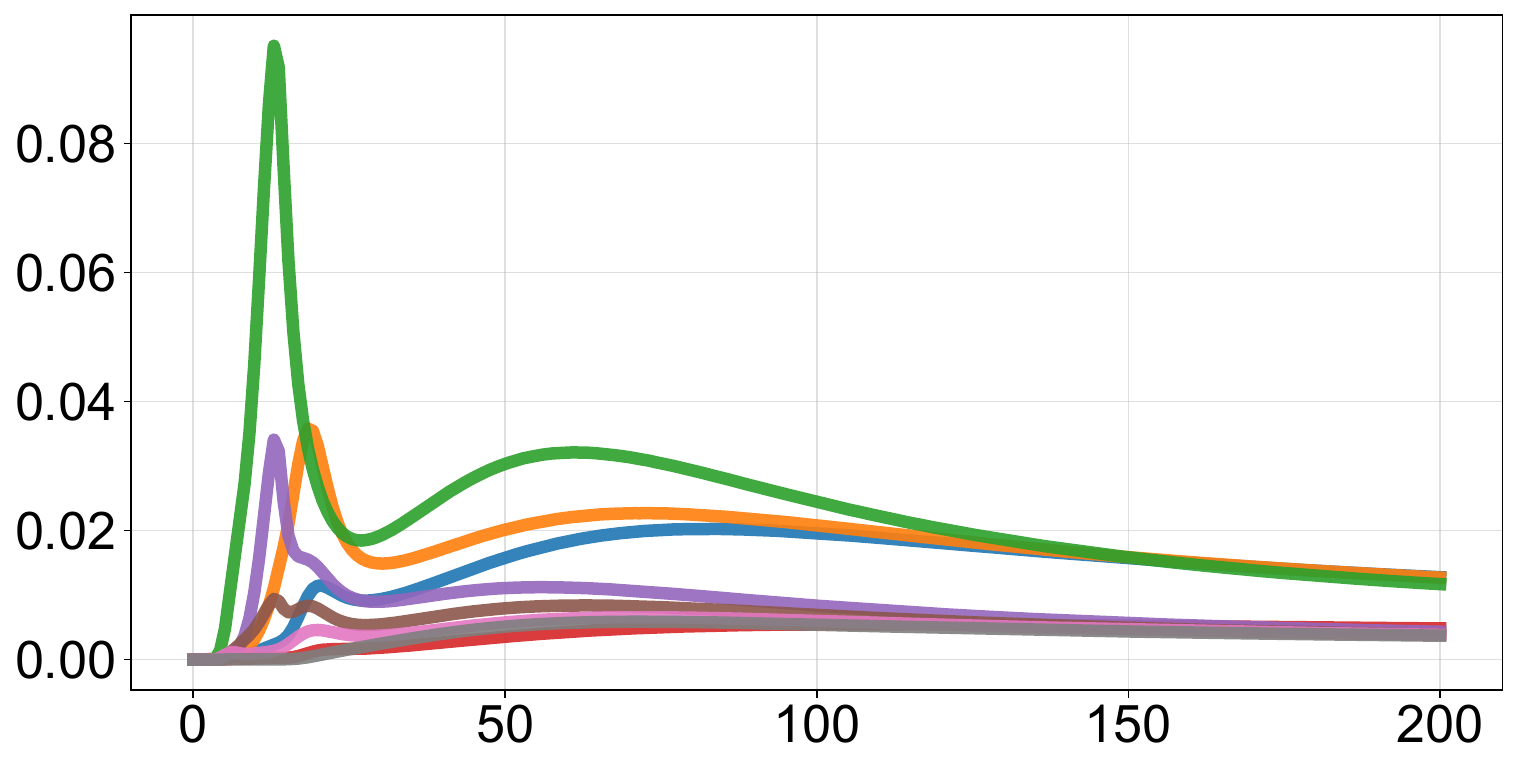}
\end{subfigure}\hfill
\begin{subfigure}[t]{0.32\textwidth}\centering
\scriptsize\textbf{Layer 22}\par\smallskip
\includegraphics[width=\linewidth]{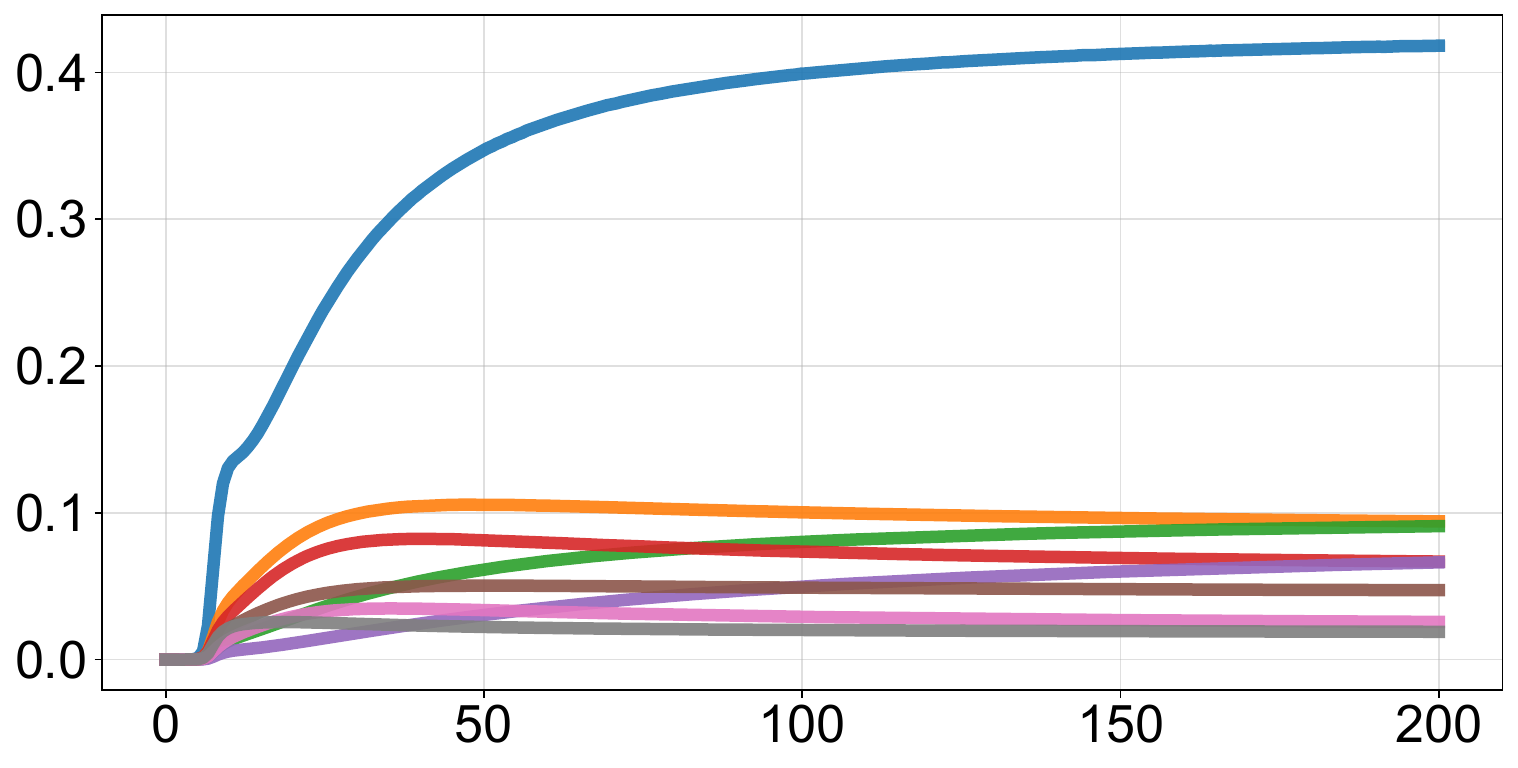}
\end{subfigure}\hfill
\begin{subfigure}[t]{0.32\textwidth}\centering
\scriptsize\textbf{Layer 31}\par\smallskip
\includegraphics[width=\linewidth]{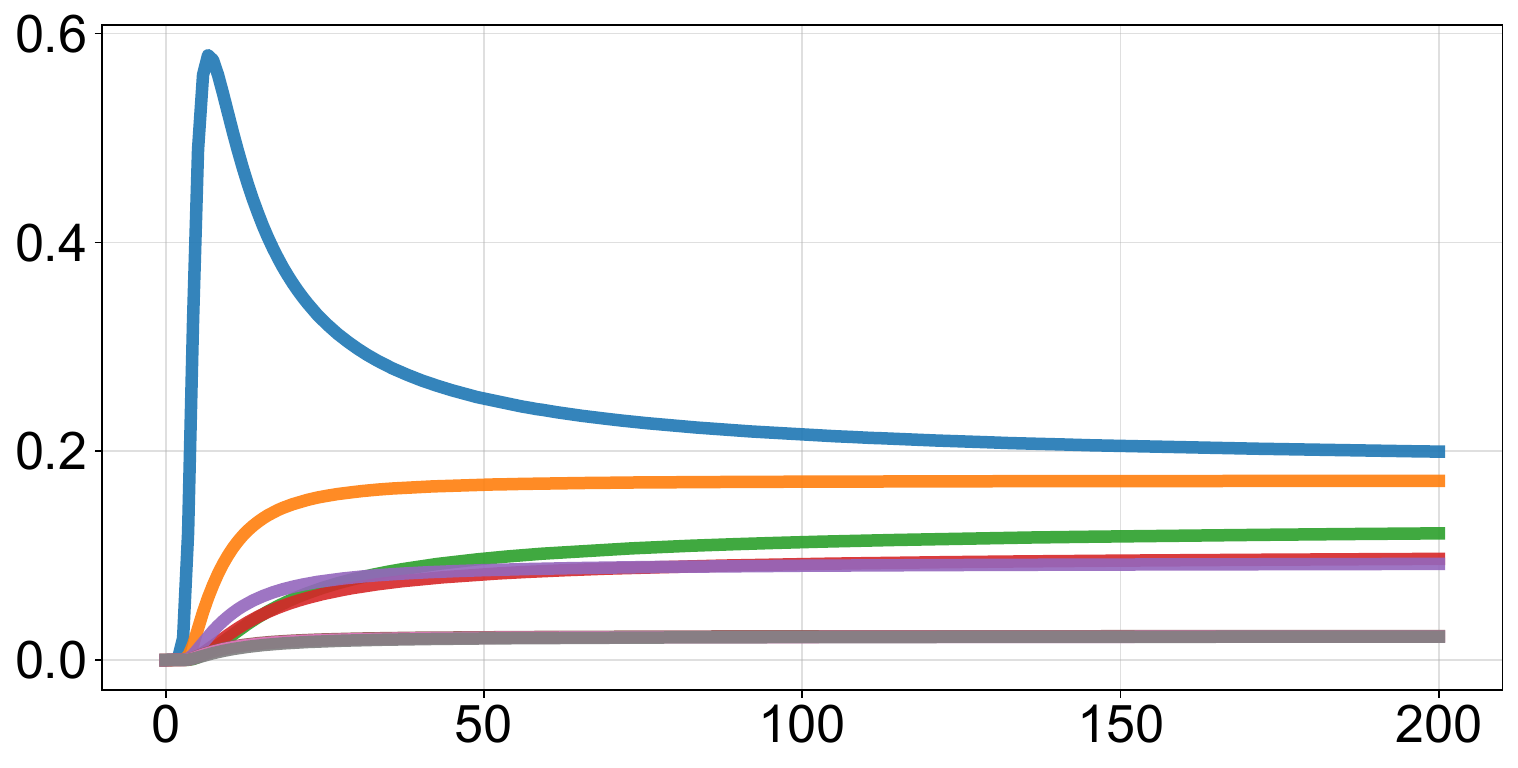}
\end{subfigure}
\rowtitle{Sycophancy}
\begin{subfigure}[t]{0.32\textwidth}\centering
\scriptsize\textbf{Layer 11}\par\smallskip
\includegraphics[width=\linewidth]{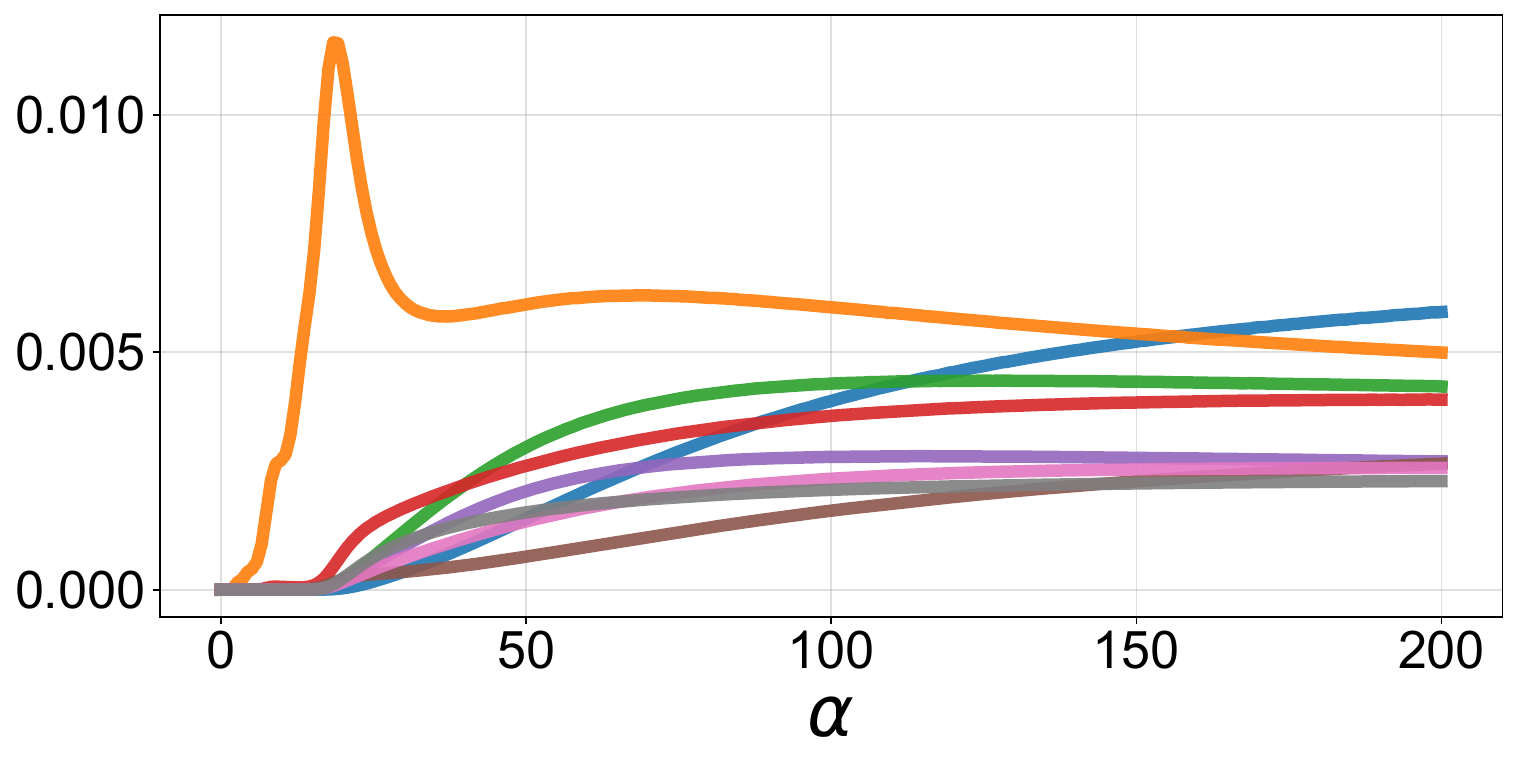}
\end{subfigure}\hfill
\begin{subfigure}[t]{0.32\textwidth}\centering
\scriptsize\textbf{Layer 22}\par\smallskip
\includegraphics[width=\linewidth]{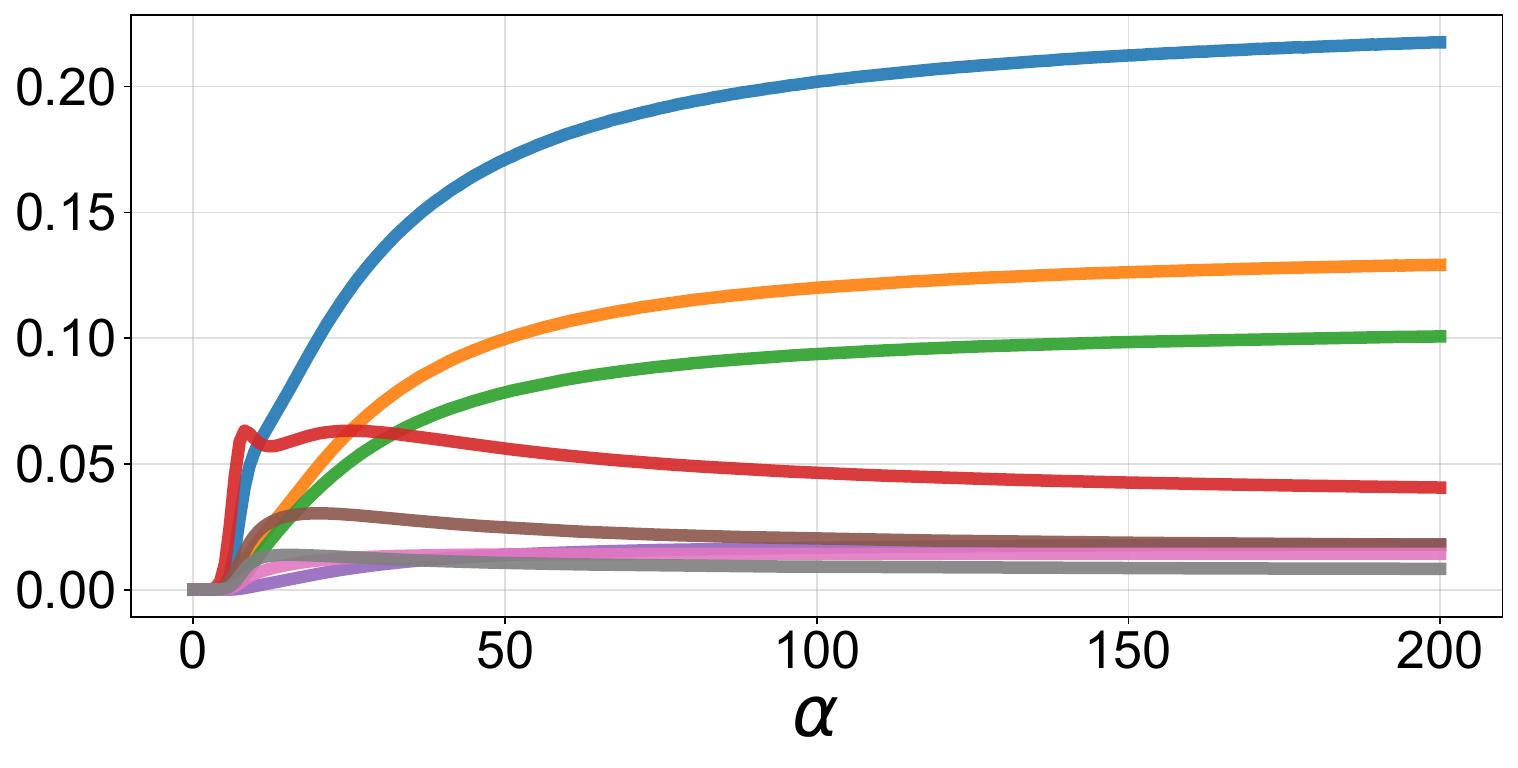}
\end{subfigure}\hfill
\begin{subfigure}[t]{0.32\textwidth}\centering
\scriptsize\textbf{Layer 31}\par\smallskip
\includegraphics[width=\linewidth]{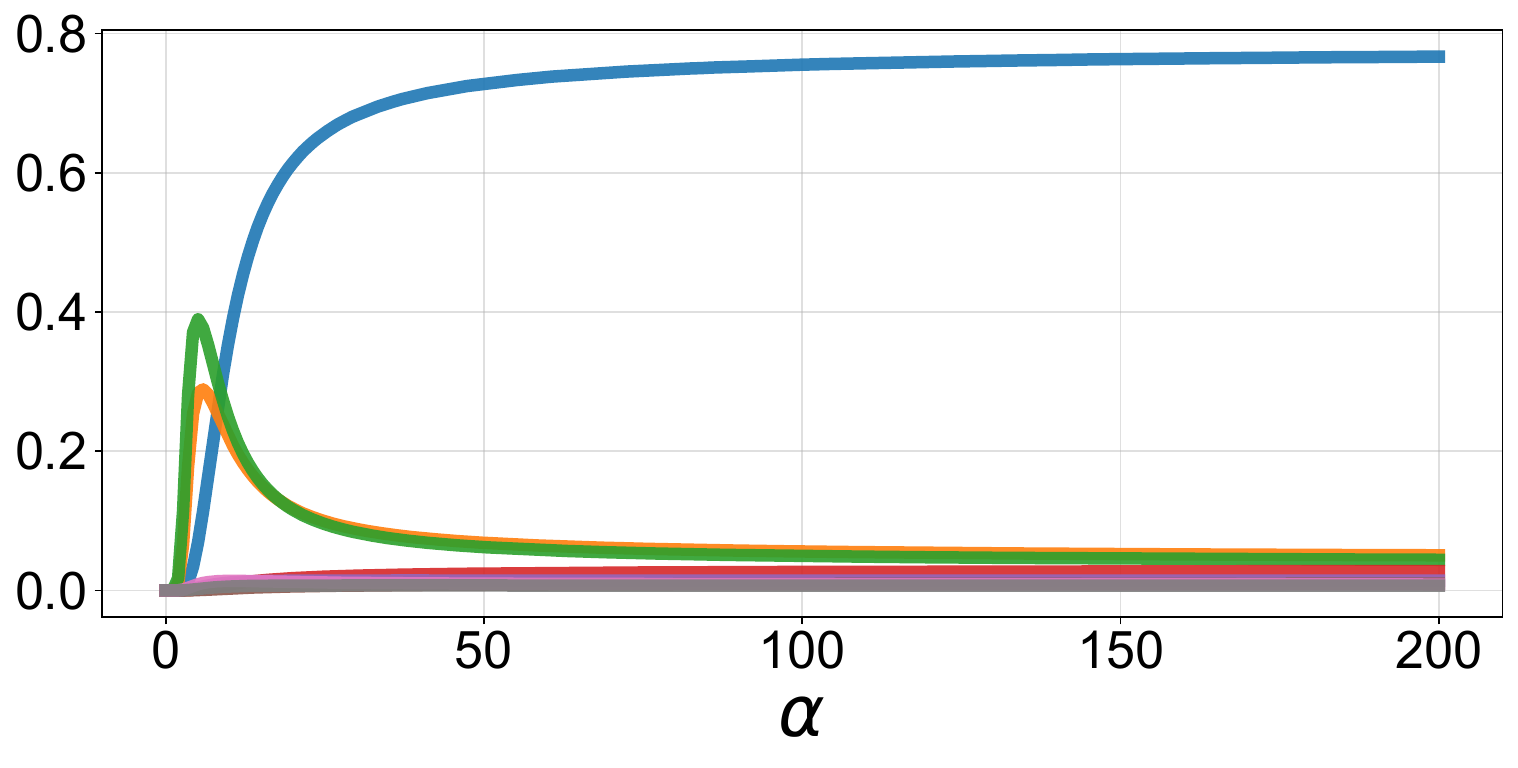}
\end{subfigure}
\caption{Effect of steering strength $\alpha>0$ on next-token probability shifts $\Delta p(z,\alpha)$ for the concepts (top to bottom): coding, math, medical, and sycophancy. Each row of three plots corresponds to a single steered concept. Columns correspond to steering layers 11, 22, 31. Each curve corresponds to a token $z$ selected among the eight highest-probability tokens at $\alpha=200$.}
\end{figure}
\clearpage
\begin{figure}[p]
\centering
\noindent{\small\textbf{Steered model:} openai-community/gpt2-large}\par\smallskip
\rowtitle{Coding}
\begin{subfigure}[t]{0.32\textwidth}\centering
\scriptsize\textbf{Layer 8}\par\smallskip
\includegraphics[width=\linewidth]{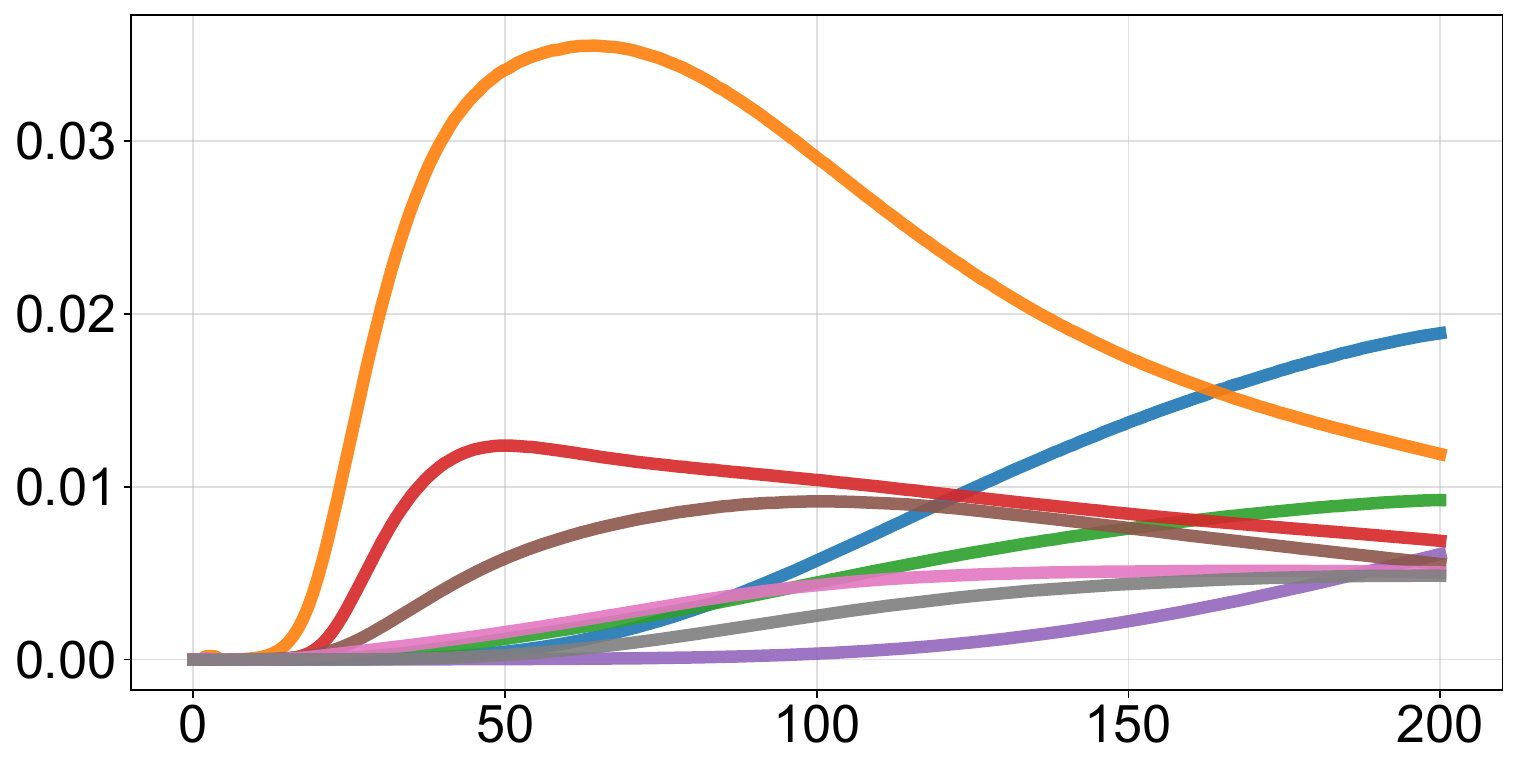}
\end{subfigure}\hfill
\begin{subfigure}[t]{0.32\textwidth}\centering
\scriptsize\textbf{Layer 24}\par\smallskip
\includegraphics[width=\linewidth]{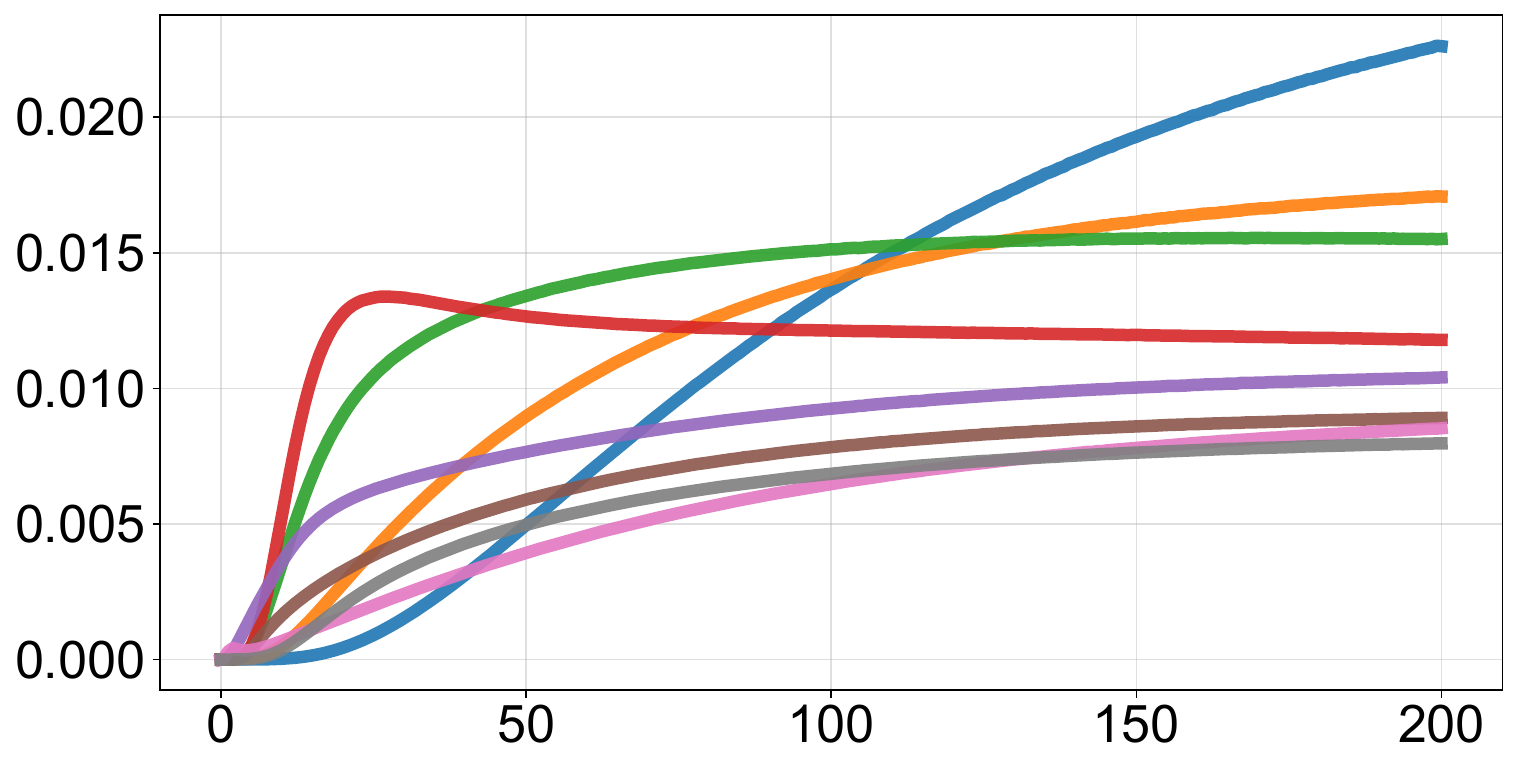}
\end{subfigure}\hfill
\begin{subfigure}[t]{0.32\textwidth}\centering
\scriptsize\textbf{Layer 35}\par\smallskip
\includegraphics[width=\linewidth]{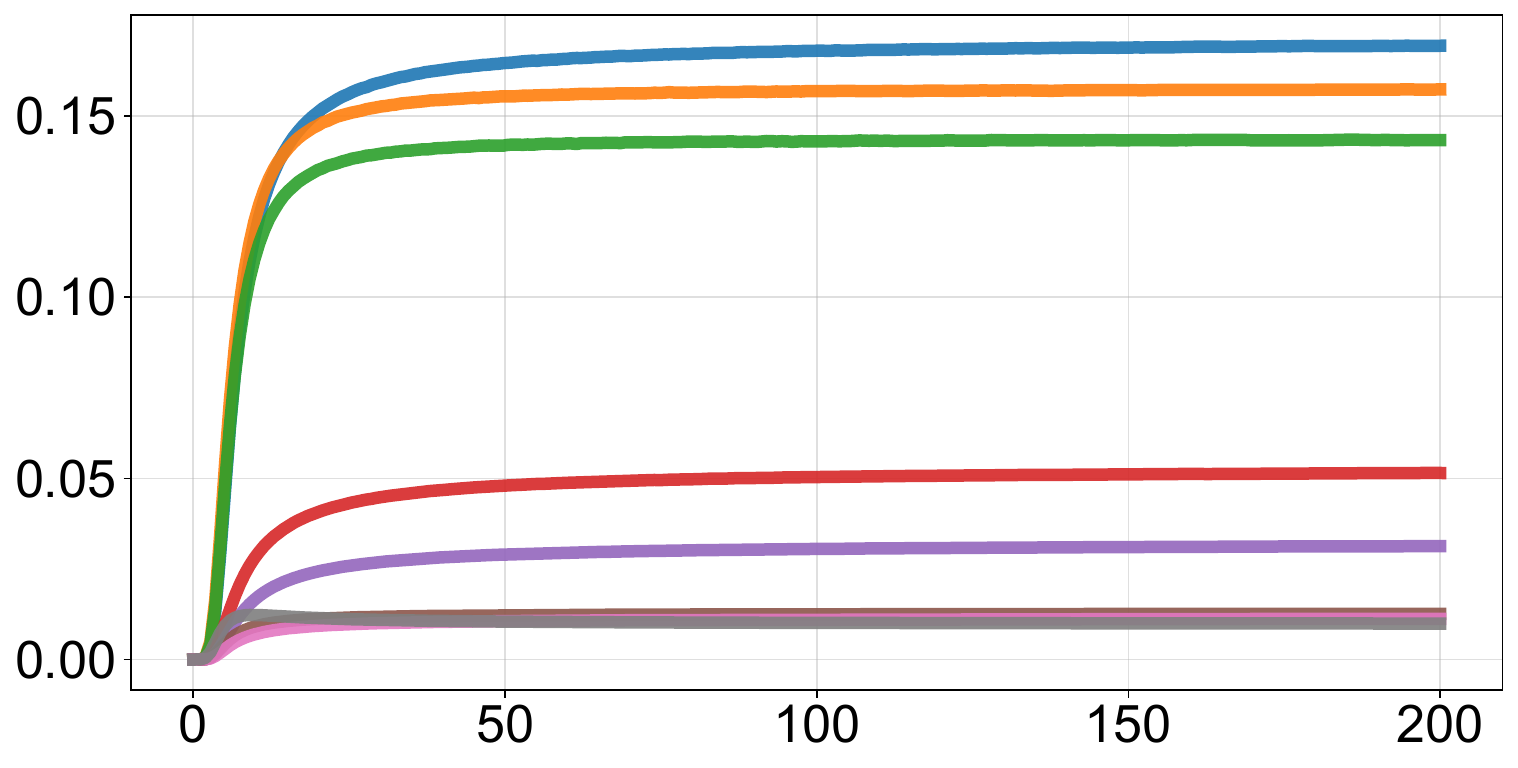}
\end{subfigure}
\rowtitle{Math}
\begin{subfigure}[t]{0.32\textwidth}\centering
\scriptsize\textbf{Layer 8}\par\smallskip
\includegraphics[width=\linewidth]{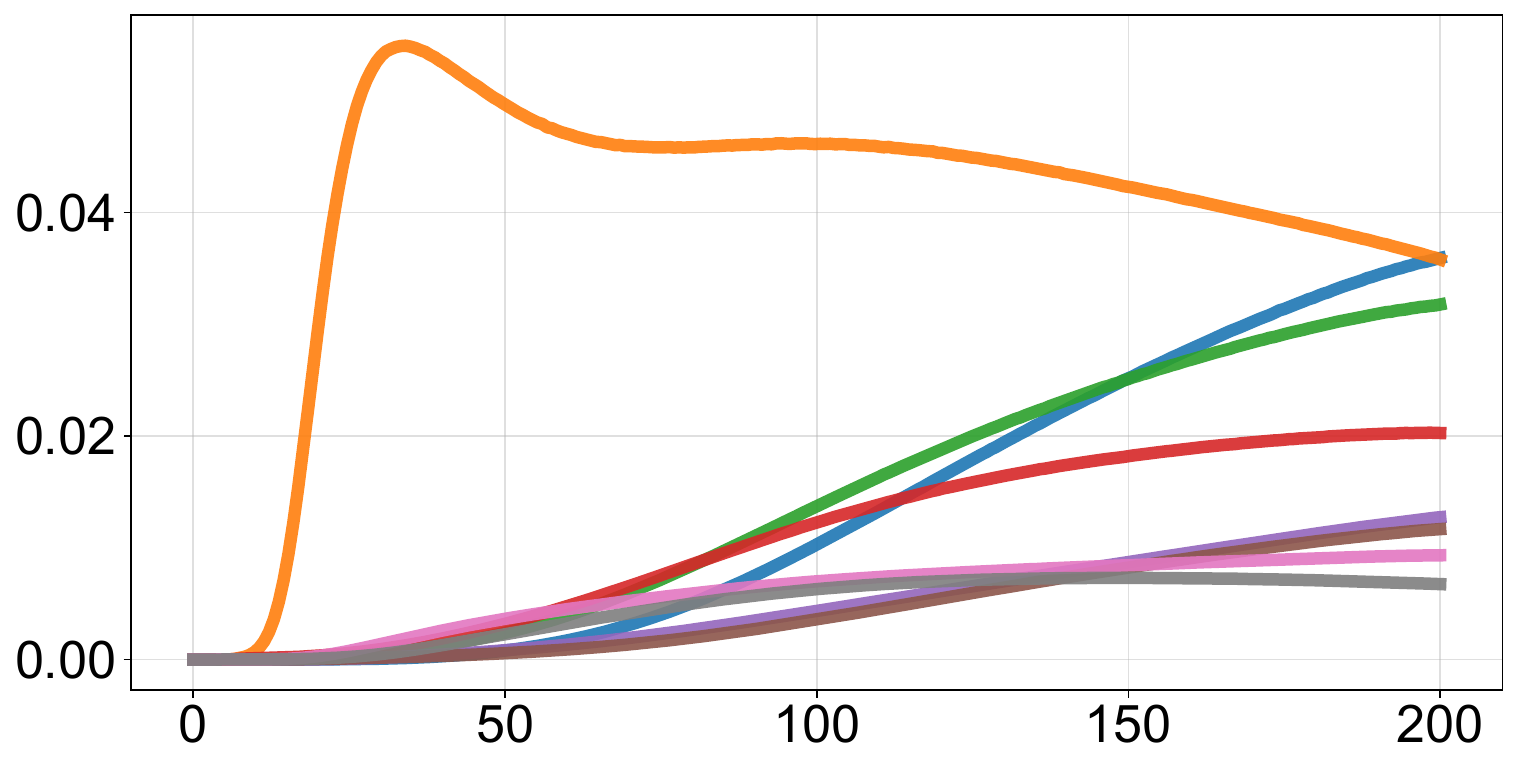}
\end{subfigure}\hfill
\begin{subfigure}[t]{0.32\textwidth}\centering
\scriptsize\textbf{Layer 24}\par\smallskip
\includegraphics[width=\linewidth]{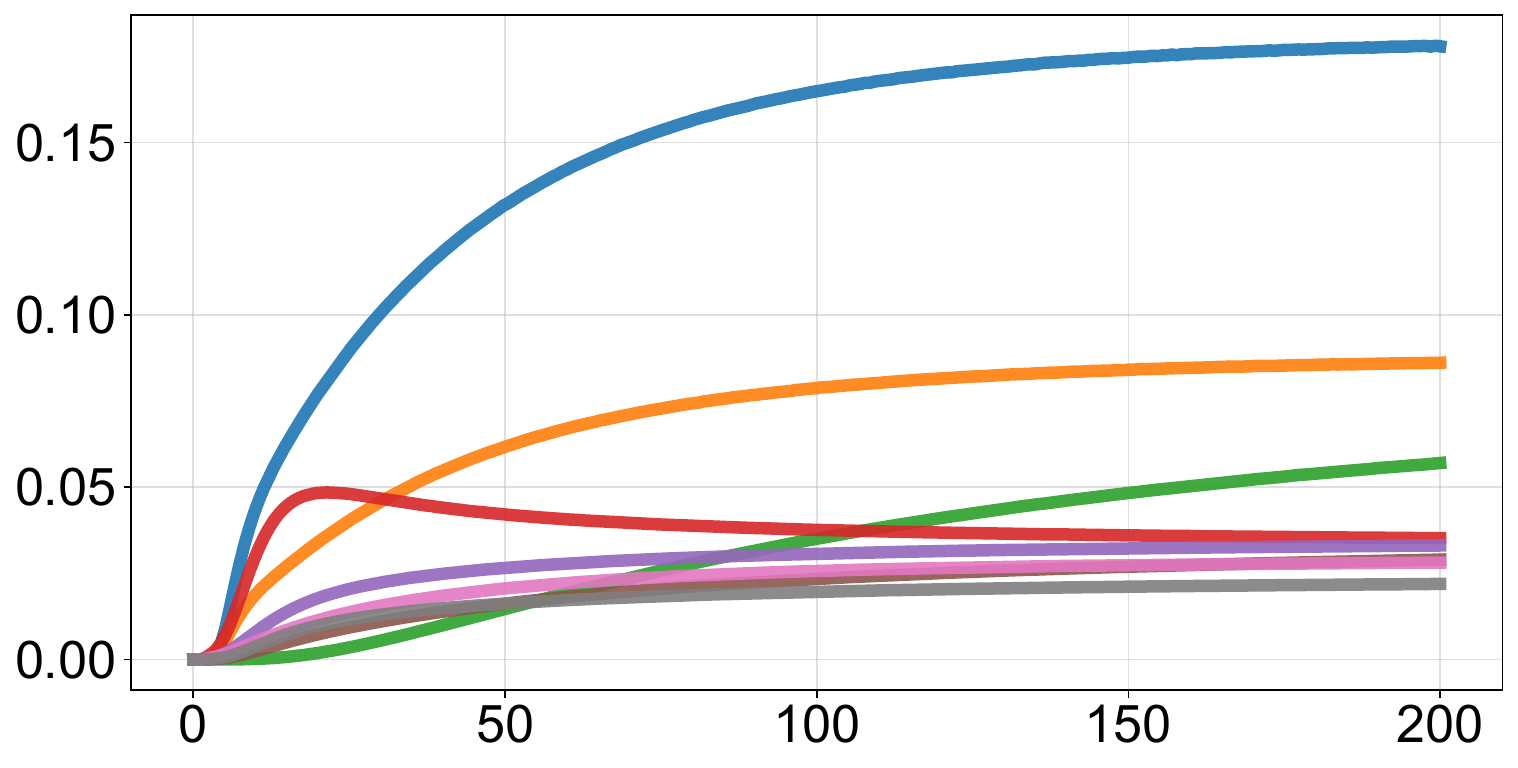}
\end{subfigure}\hfill
\begin{subfigure}[t]{0.32\textwidth}\centering
\scriptsize\textbf{Layer 35}\par\smallskip
\includegraphics[width=\linewidth]{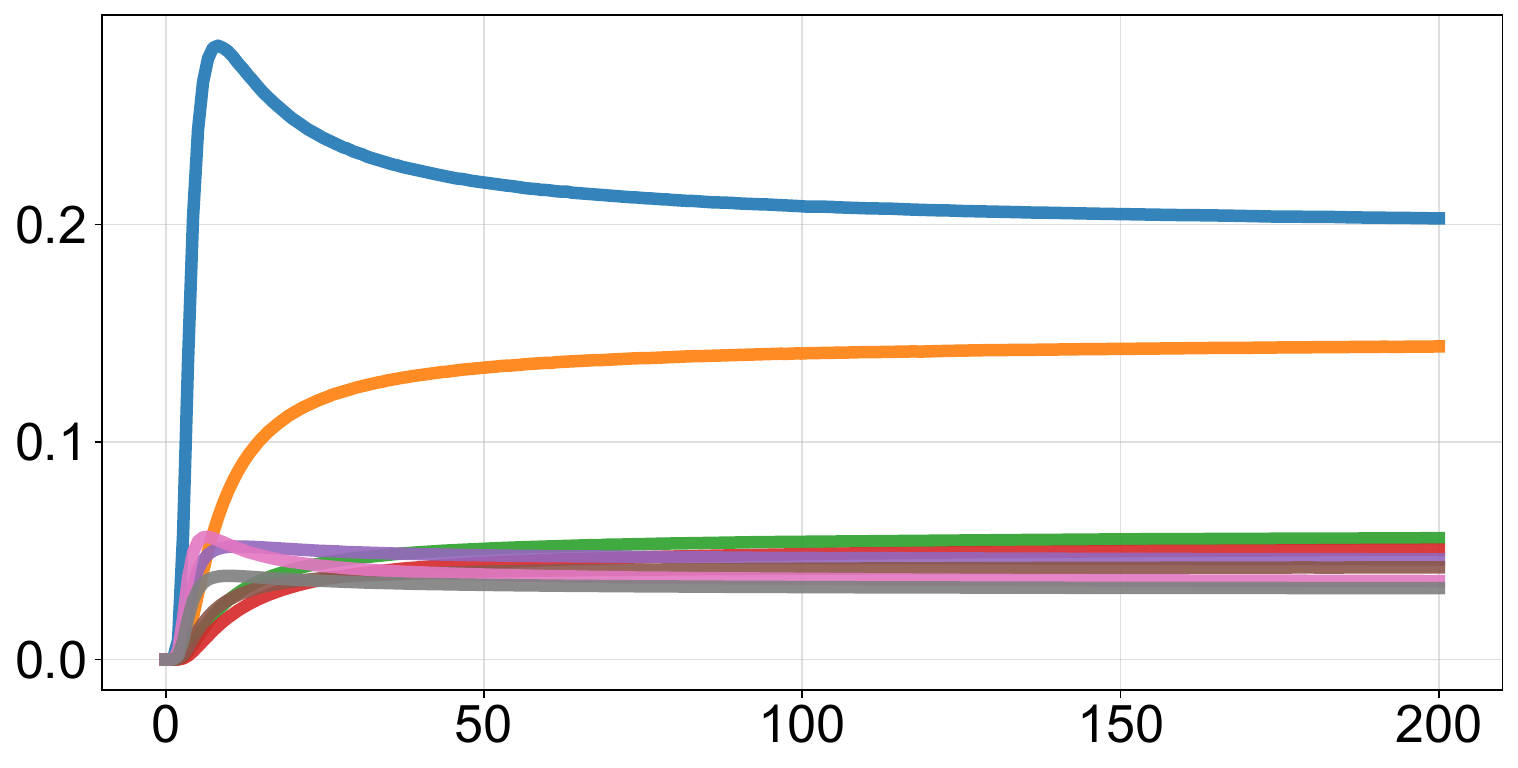}
\end{subfigure}
\rowtitle{Medical}
\begin{subfigure}[t]{0.32\textwidth}\centering
\scriptsize\textbf{Layer 8}\par\smallskip
\includegraphics[width=\linewidth]{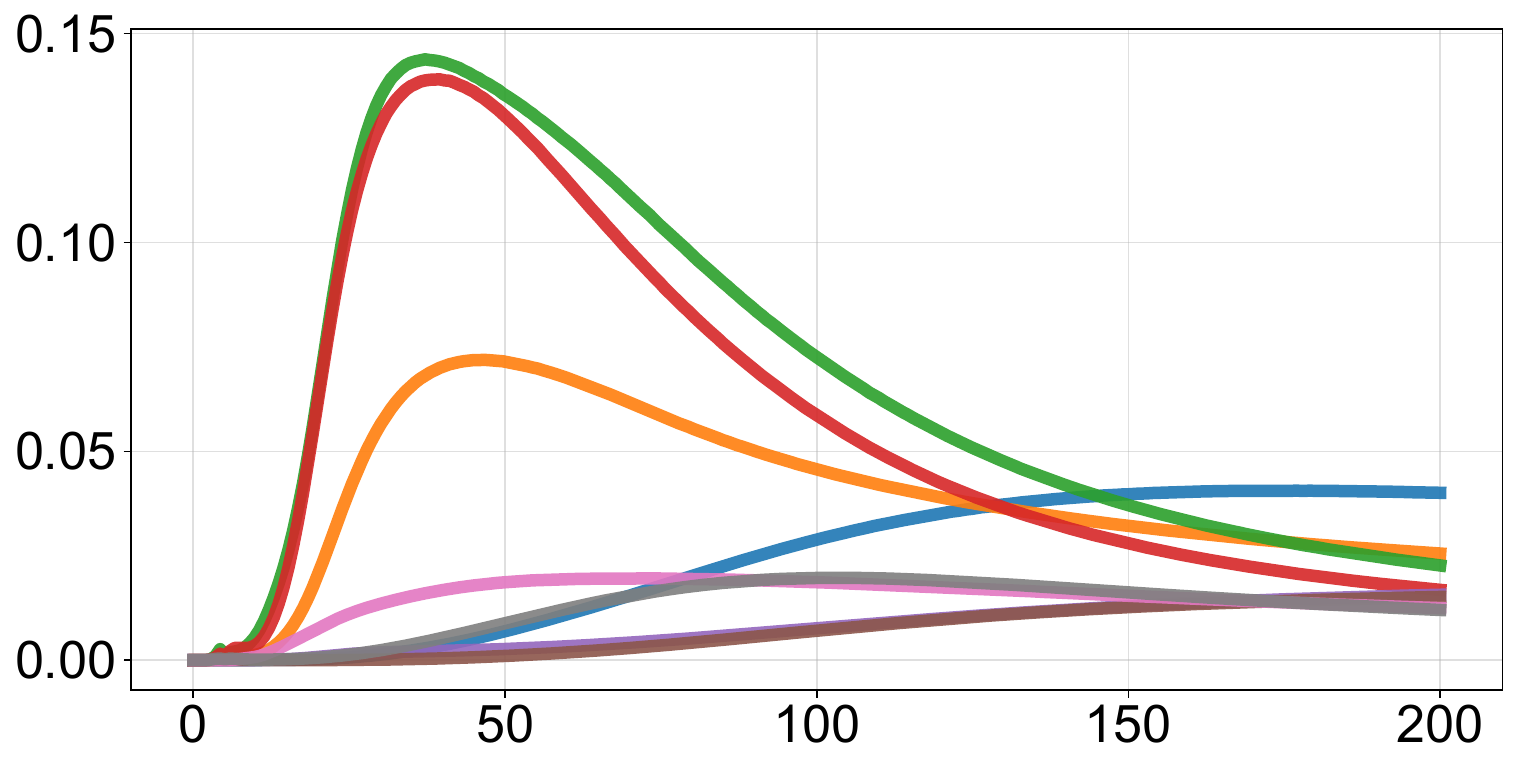}
\end{subfigure}\hfill
\begin{subfigure}[t]{0.32\textwidth}\centering
\scriptsize\textbf{Layer 24}\par\smallskip
\includegraphics[width=\linewidth]{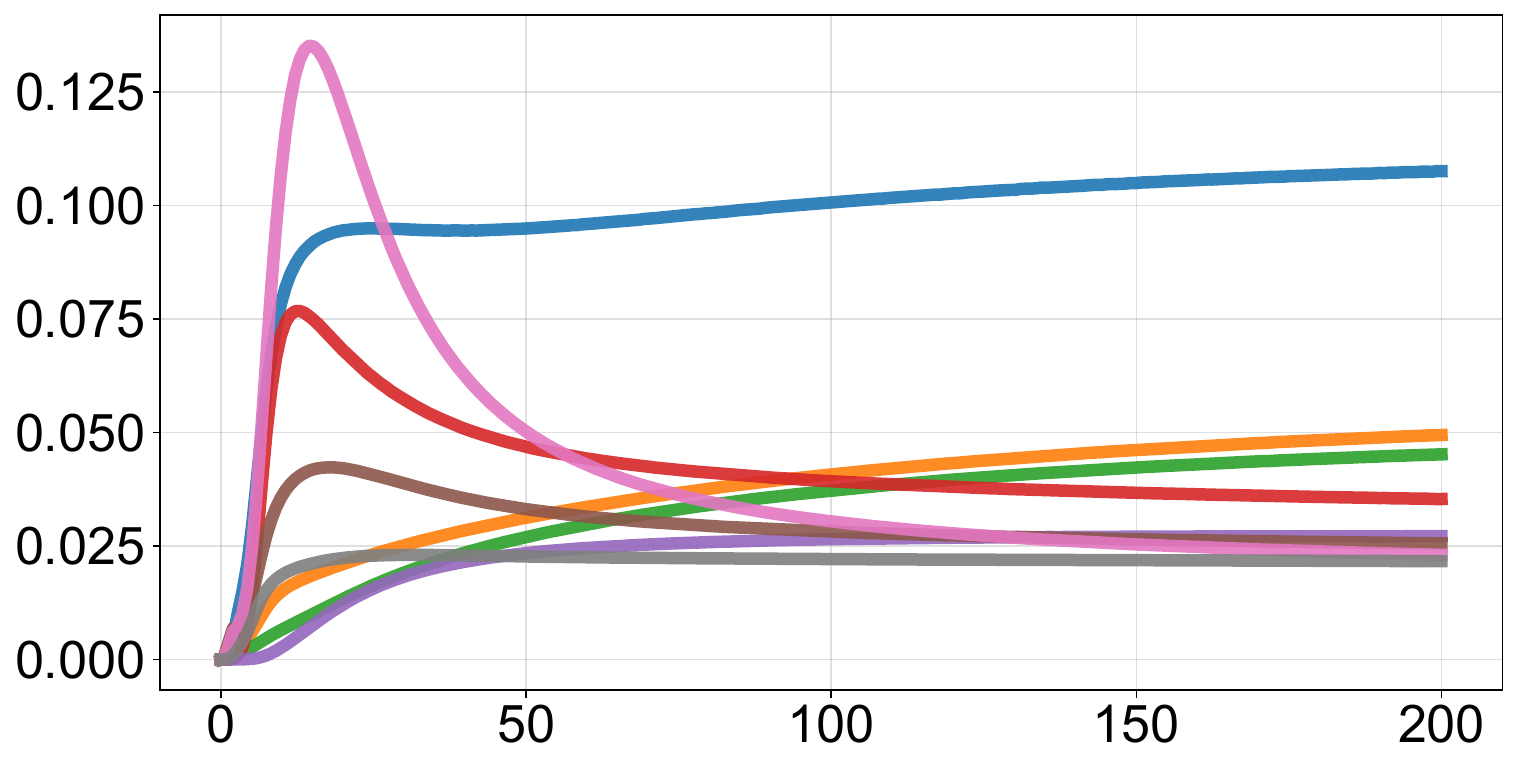}
\end{subfigure}\hfill
\begin{subfigure}[t]{0.32\textwidth}\centering
\scriptsize\textbf{Layer 35}\par\smallskip
\includegraphics[width=\linewidth]{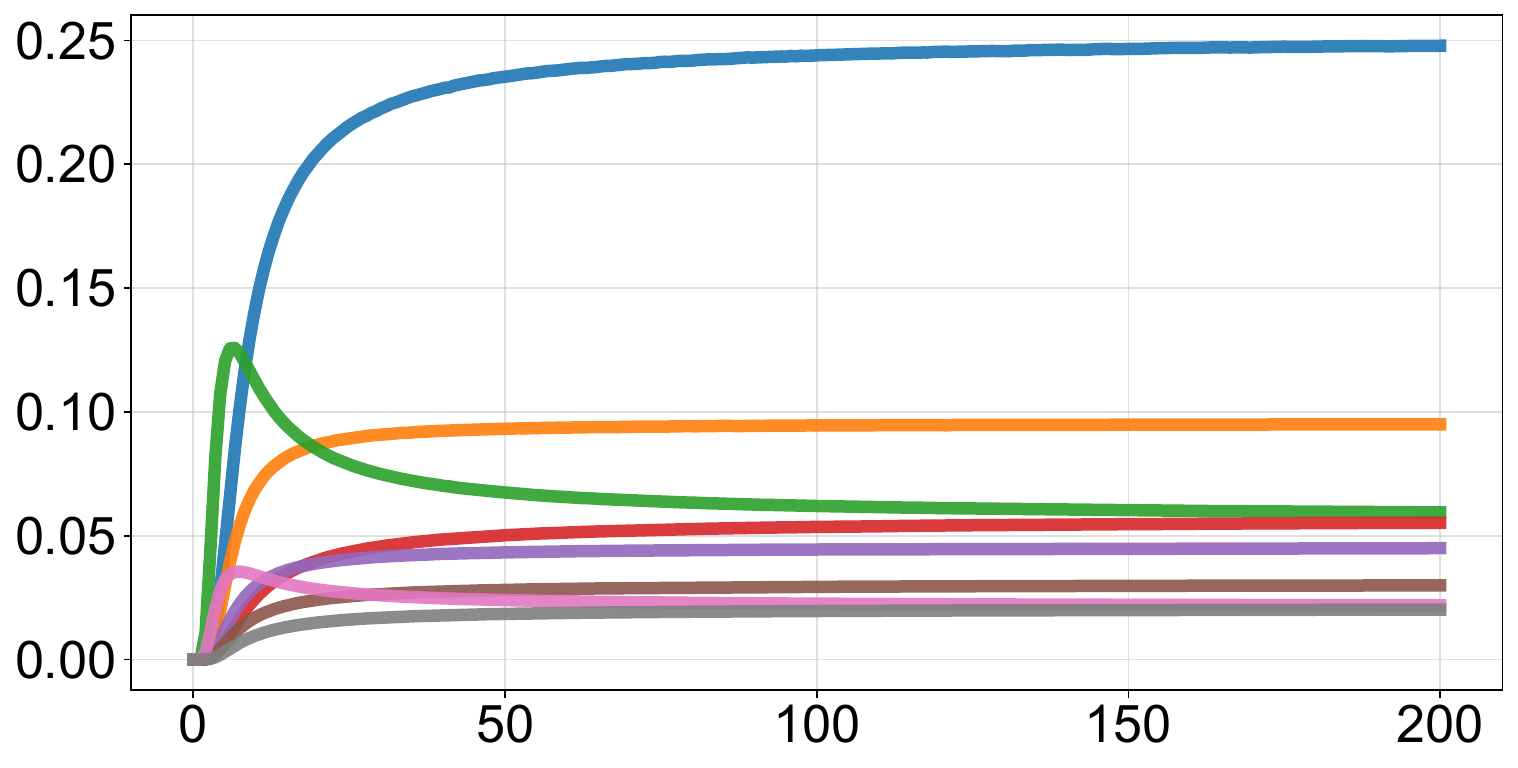}
\end{subfigure}
\rowtitle{Sycophancy}
\begin{subfigure}[t]{0.32\textwidth}\centering
\scriptsize\textbf{Layer 8}\par\smallskip
\includegraphics[width=\linewidth]{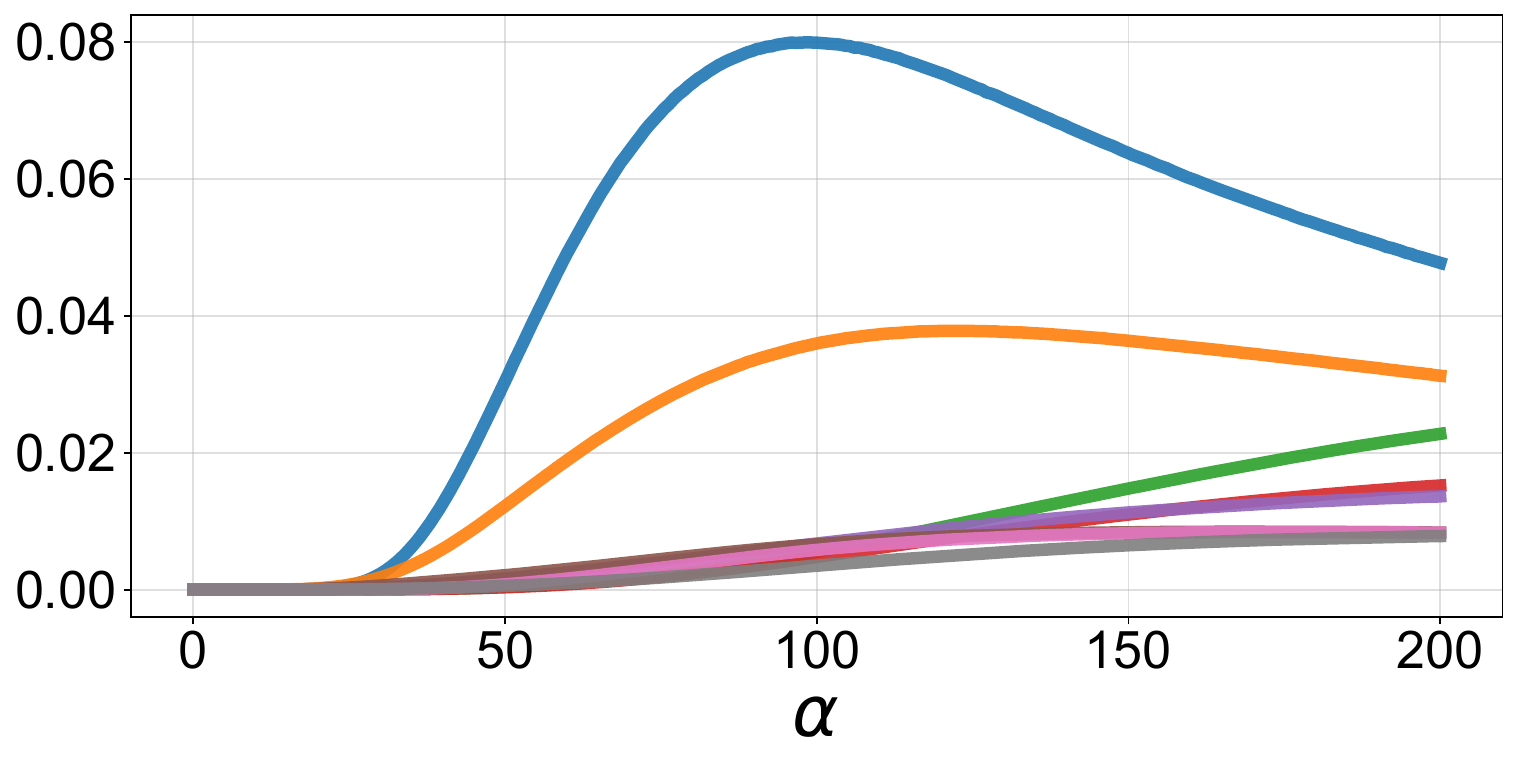}
\end{subfigure}\hfill
\begin{subfigure}[t]{0.32\textwidth}\centering
\scriptsize\textbf{Layer 24}\par\smallskip
\includegraphics[width=\linewidth]{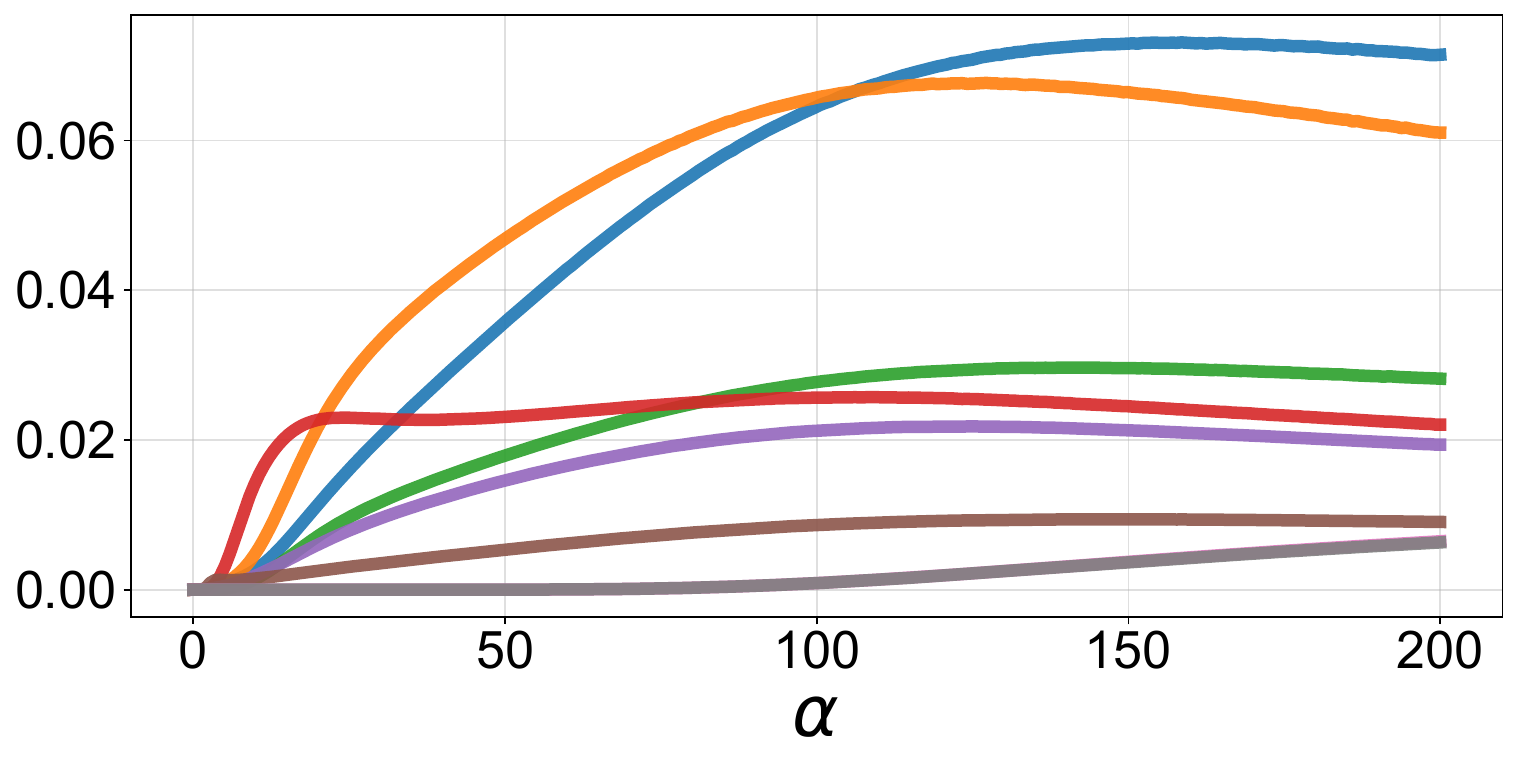}
\end{subfigure}\hfill
\begin{subfigure}[t]{0.32\textwidth}\centering
\scriptsize\textbf{Layer 35}\par\smallskip
\includegraphics[width=\linewidth]{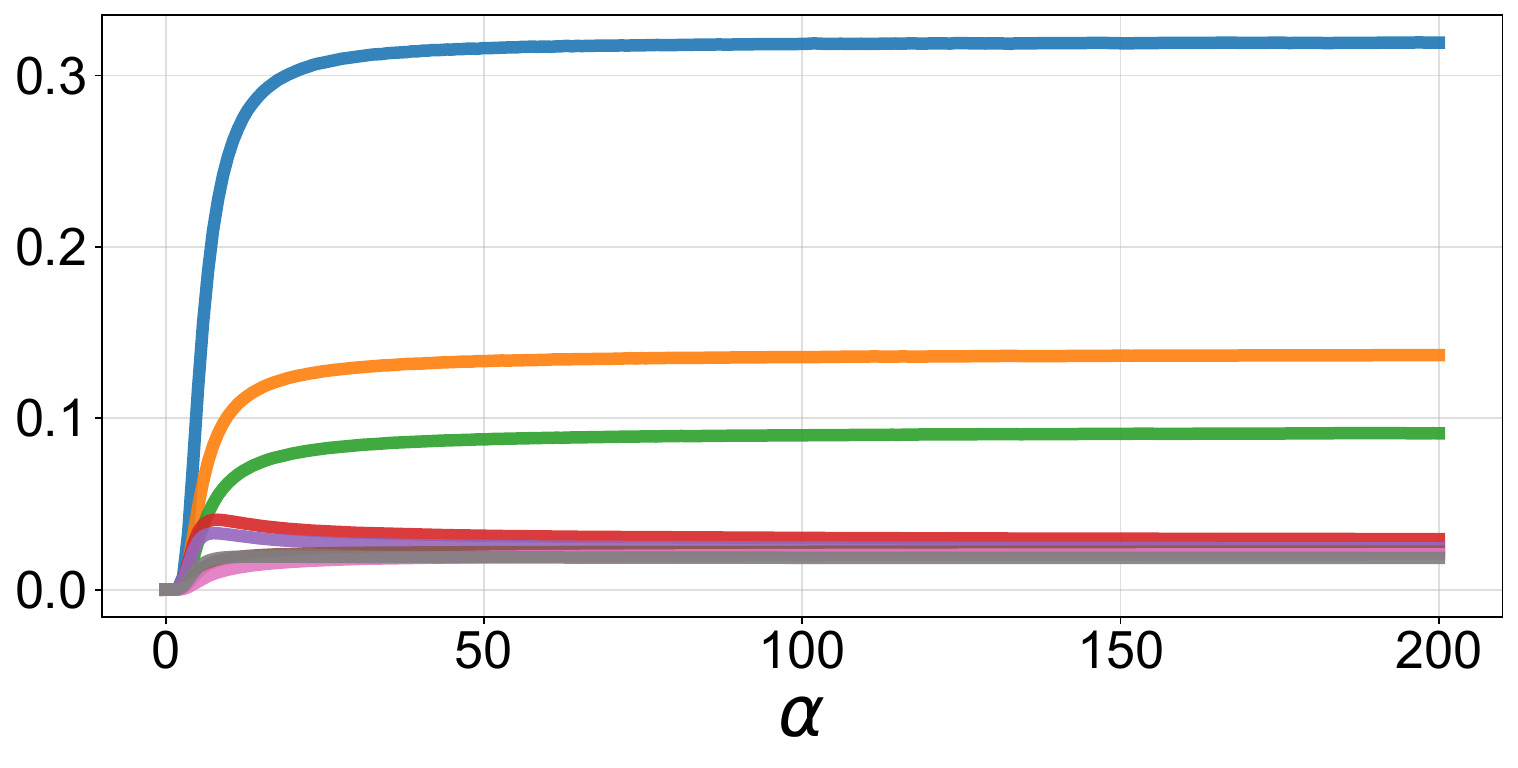}
\end{subfigure}
\caption{Effect of steering strength $\alpha>0$ on next-token probability shifts $\Delta p(z,\alpha)$ for the concepts (top to bottom): coding, math, medical, and sycophancy. Each row of three plots corresponds to a single steered concept. Columns correspond to steering layers 8, 24, 35. Each curve corresponds to a token $z$ selected among the eight highest-probability tokens at $\alpha=200$.}
\end{figure}
\clearpage
\begin{figure}[p]
\centering
\noindent{\small\textbf{Steered model:} google/gemma-3-1b-it}\par\smallskip
\rowtitle{Coding}
\begin{subfigure}[t]{0.32\textwidth}\centering
\scriptsize\textbf{Layer 5}\par\smallskip
\includegraphics[width=\linewidth]{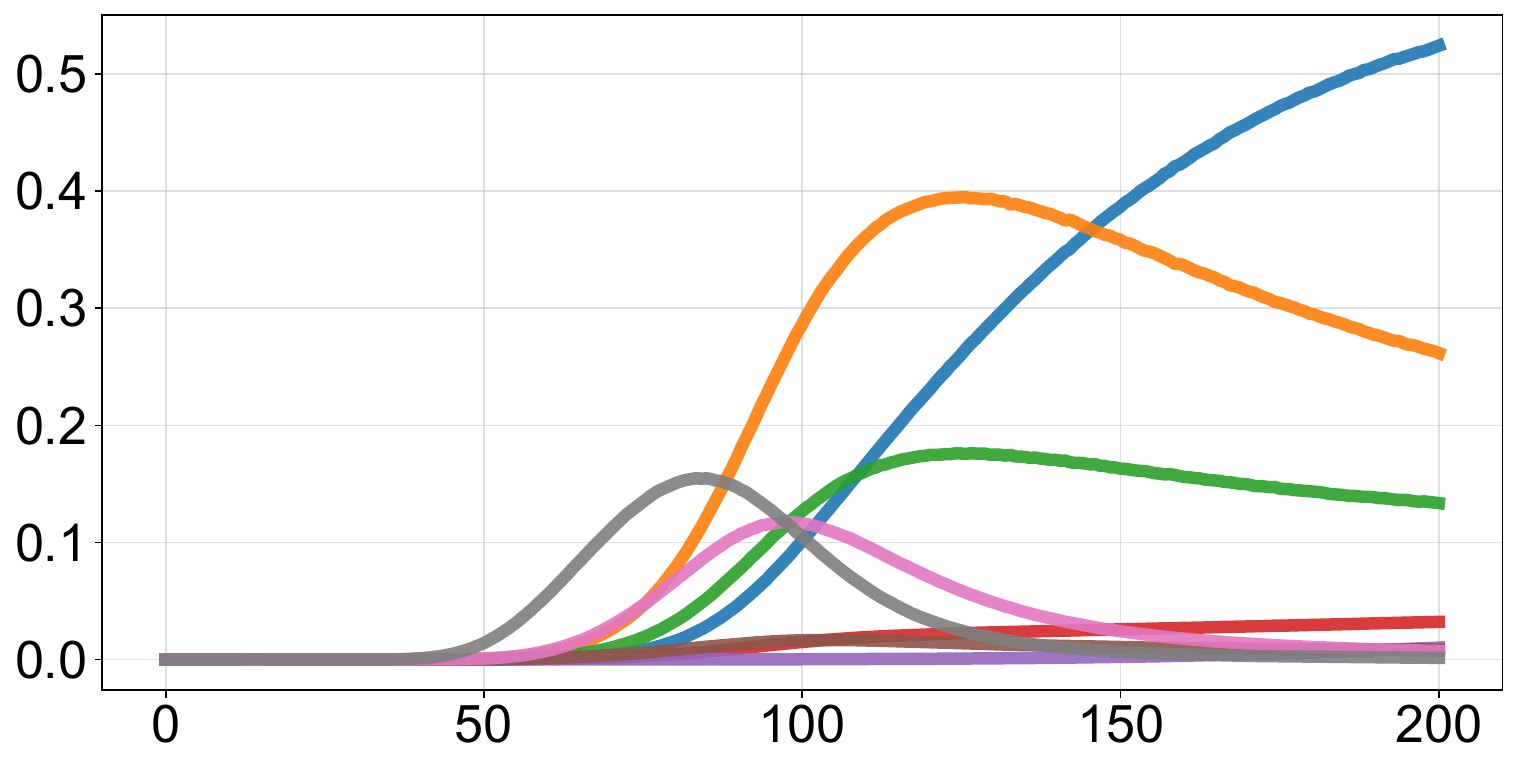}
\end{subfigure}\hfill
\begin{subfigure}[t]{0.32\textwidth}\centering
\scriptsize\textbf{Layer 15}\par\smallskip
\includegraphics[width=\linewidth]{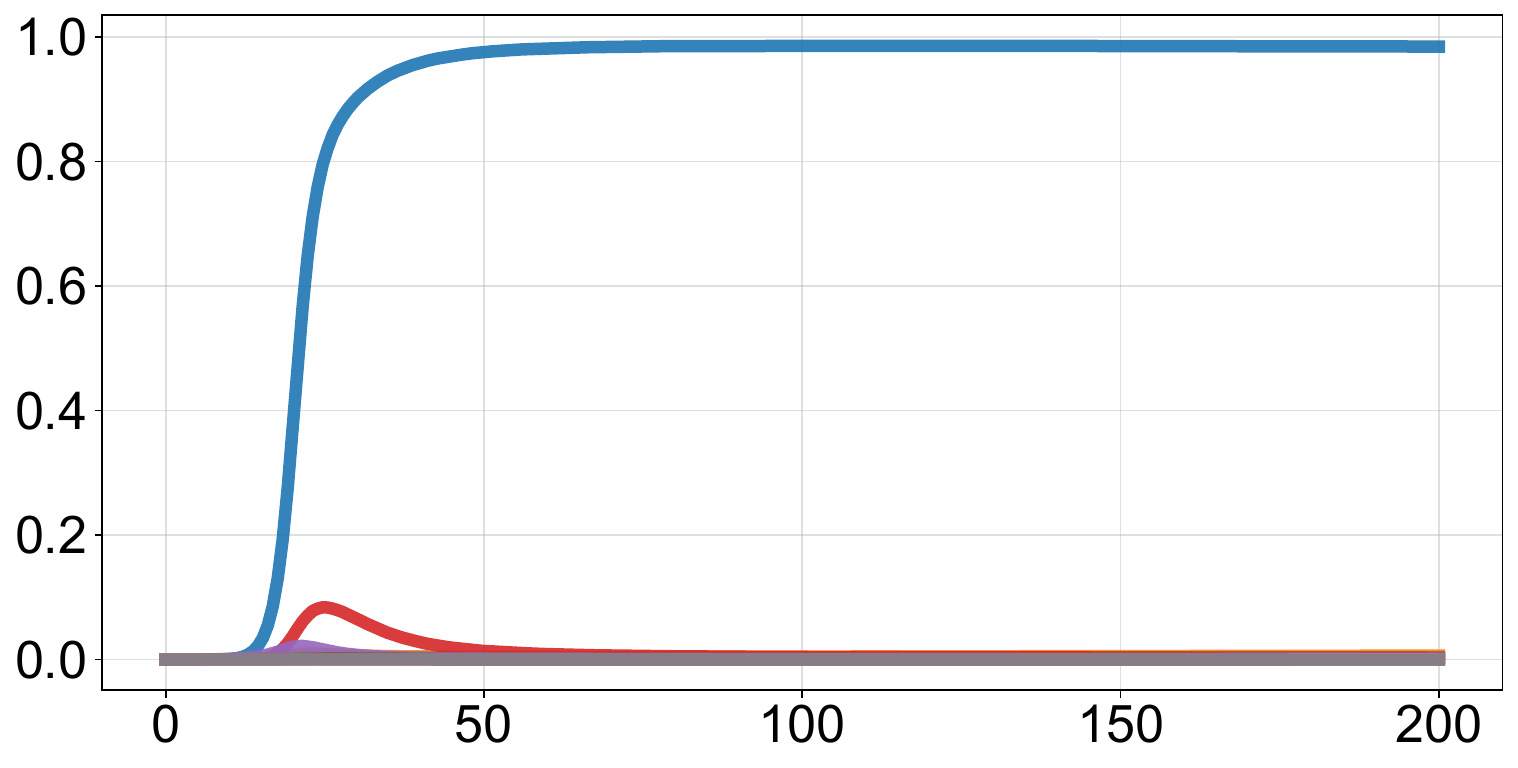}
\end{subfigure}\hfill
\begin{subfigure}[t]{0.32\textwidth}\centering
\scriptsize\textbf{Layer 25}\par\smallskip
\includegraphics[width=\linewidth]{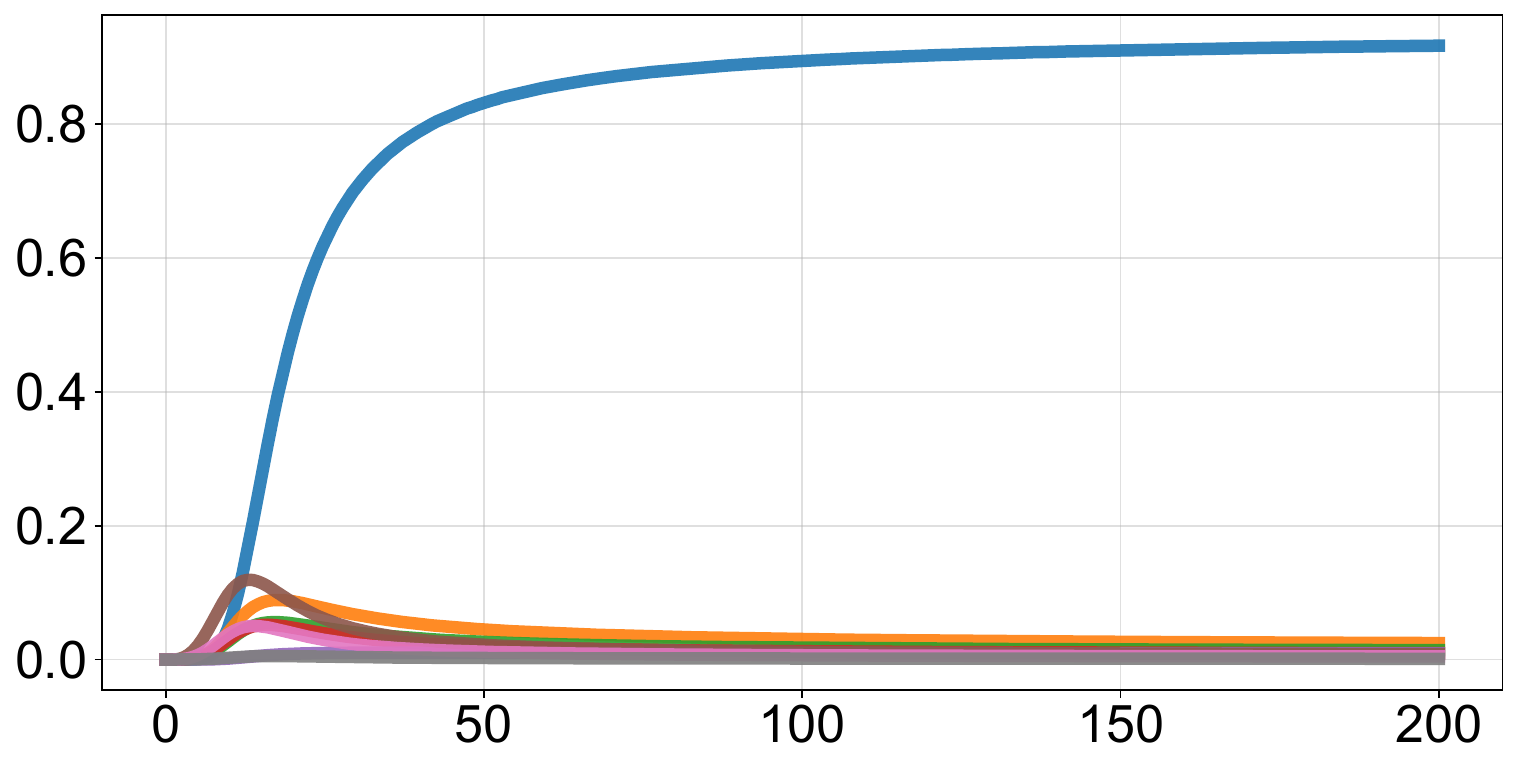}
\end{subfigure}
\rowtitle{Math}
\begin{subfigure}[t]{0.32\textwidth}\centering
\scriptsize\textbf{Layer 5}\par\smallskip
\includegraphics[width=\linewidth]{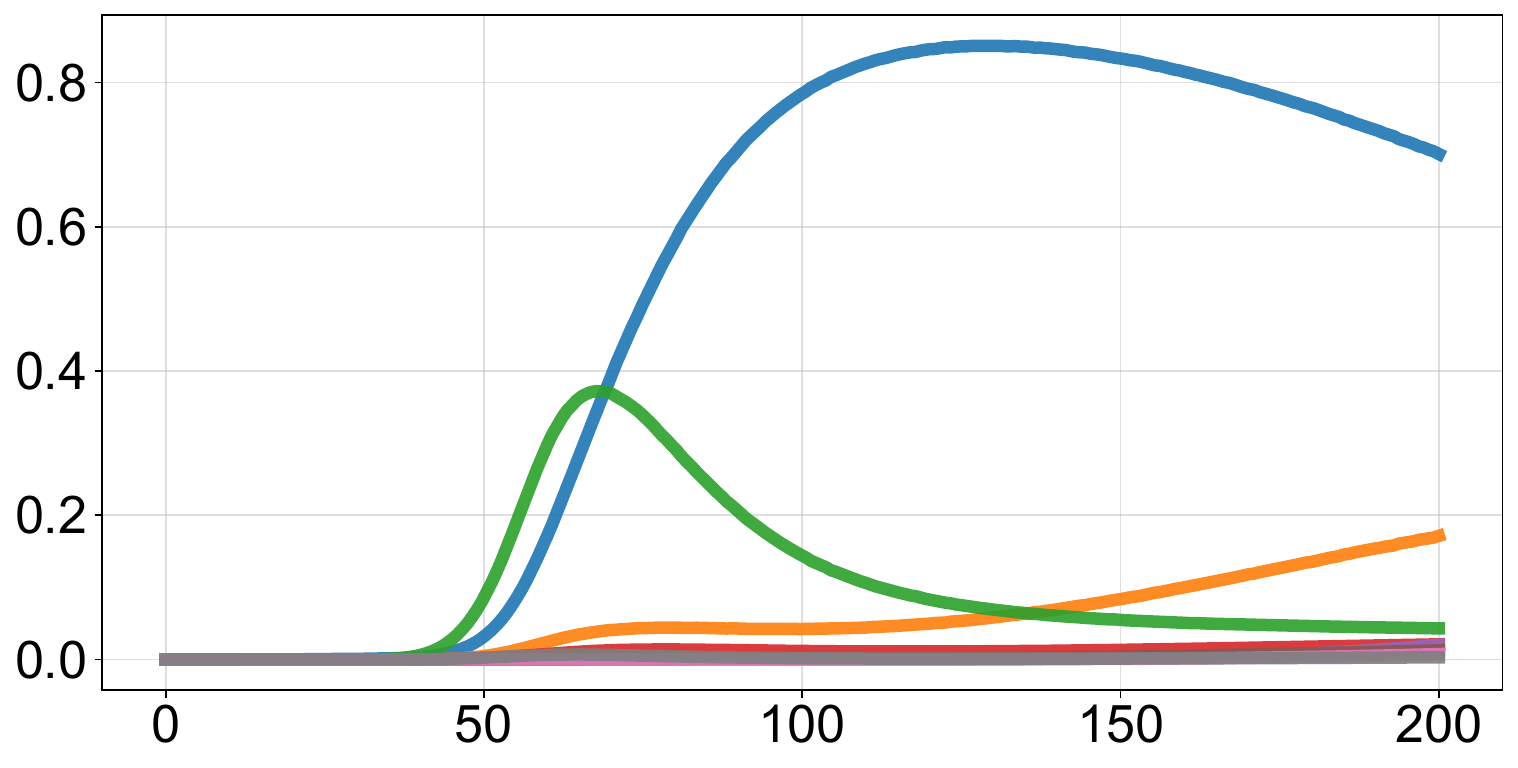}
\end{subfigure}\hfill
\begin{subfigure}[t]{0.32\textwidth}\centering
\scriptsize\textbf{Layer 15}\par\smallskip
\includegraphics[width=\linewidth]{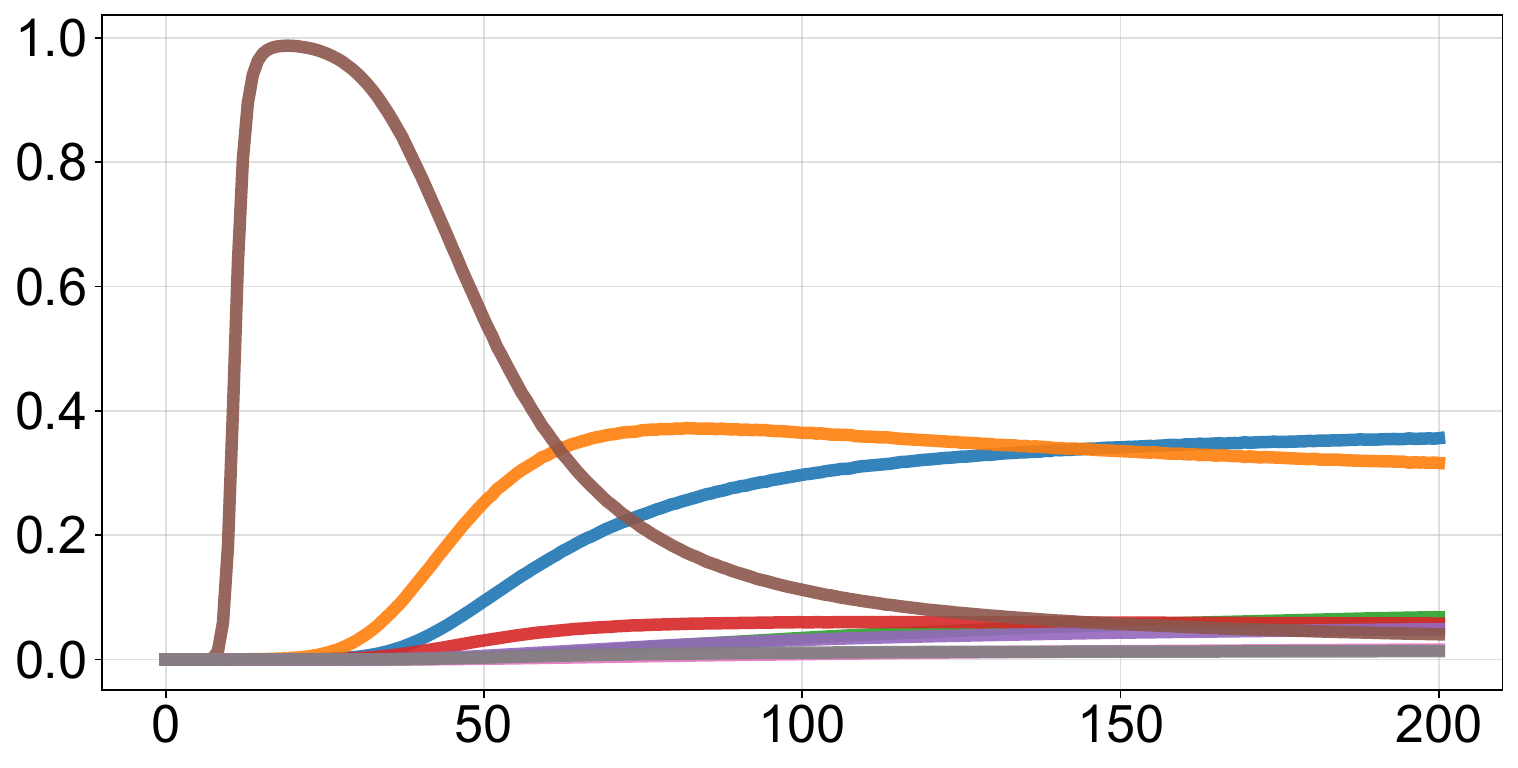}
\end{subfigure}\hfill
\begin{subfigure}[t]{0.32\textwidth}\centering
\scriptsize\textbf{Layer 25}\par\smallskip
\includegraphics[width=\linewidth]{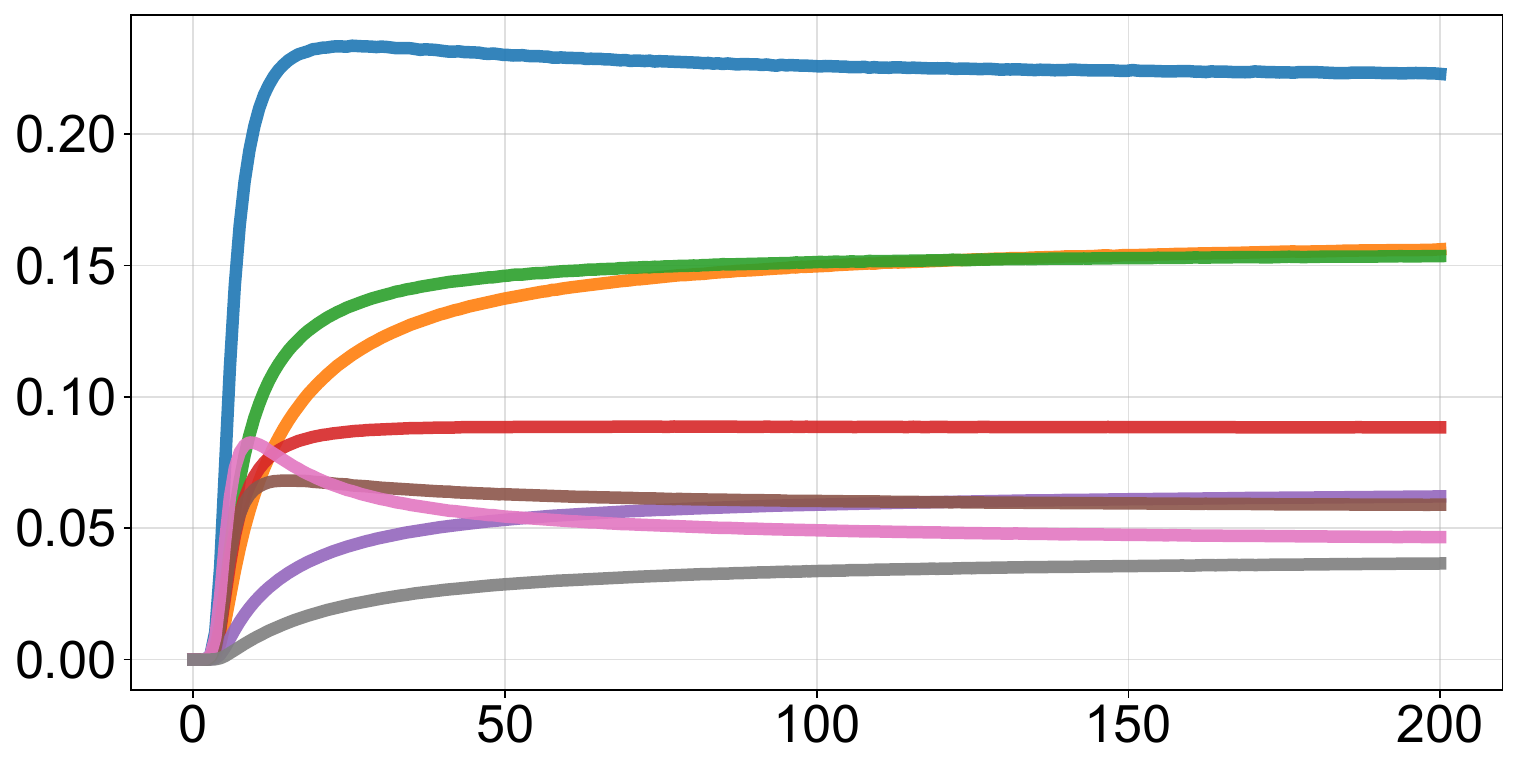}
\end{subfigure}
\rowtitle{Medical}
\begin{subfigure}[t]{0.32\textwidth}\centering
\scriptsize\textbf{Layer 5}\par\smallskip
\includegraphics[width=\linewidth]{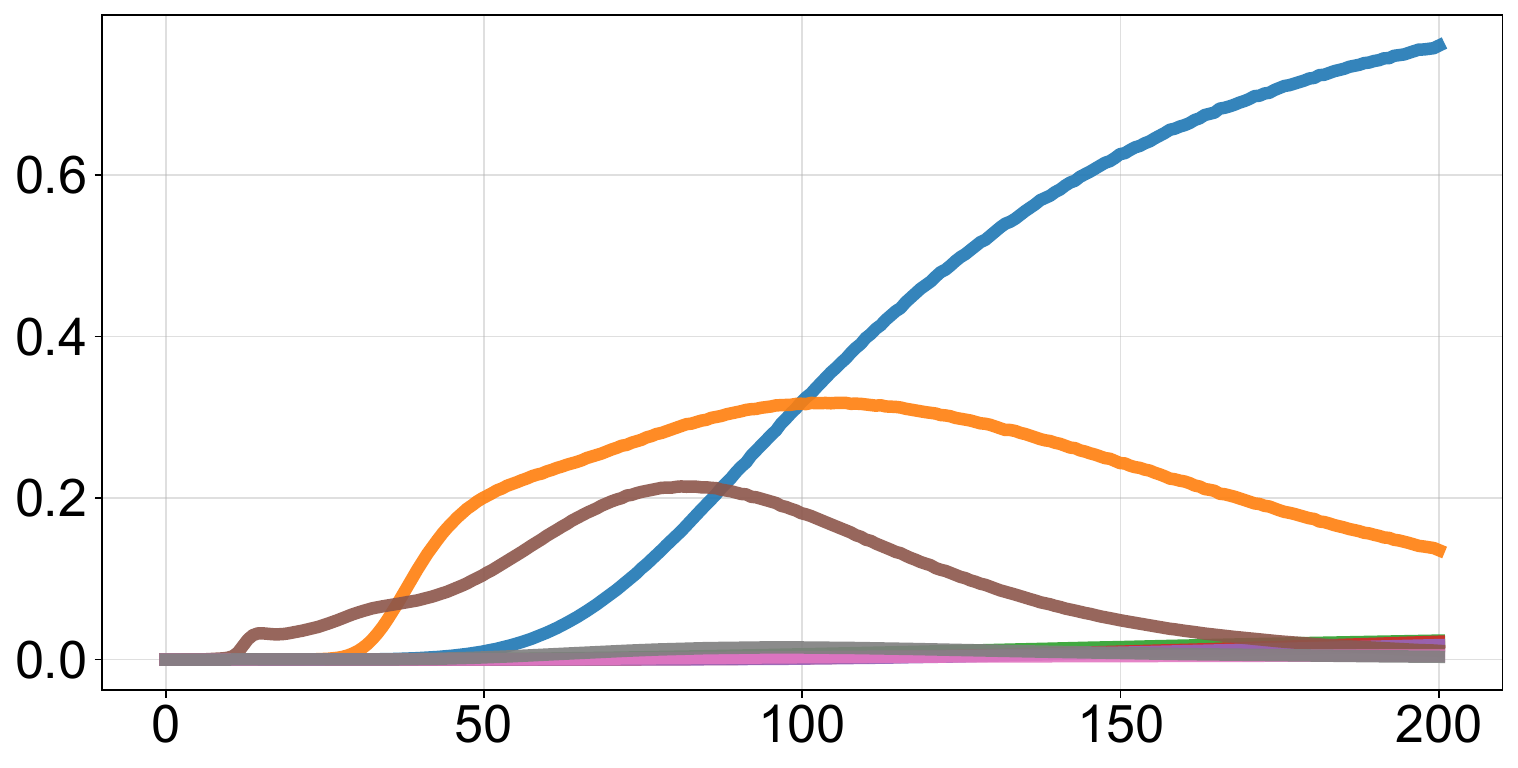}
\end{subfigure}\hfill
\begin{subfigure}[t]{0.32\textwidth}\centering
\scriptsize\textbf{Layer 15}\par\smallskip
\includegraphics[width=\linewidth]{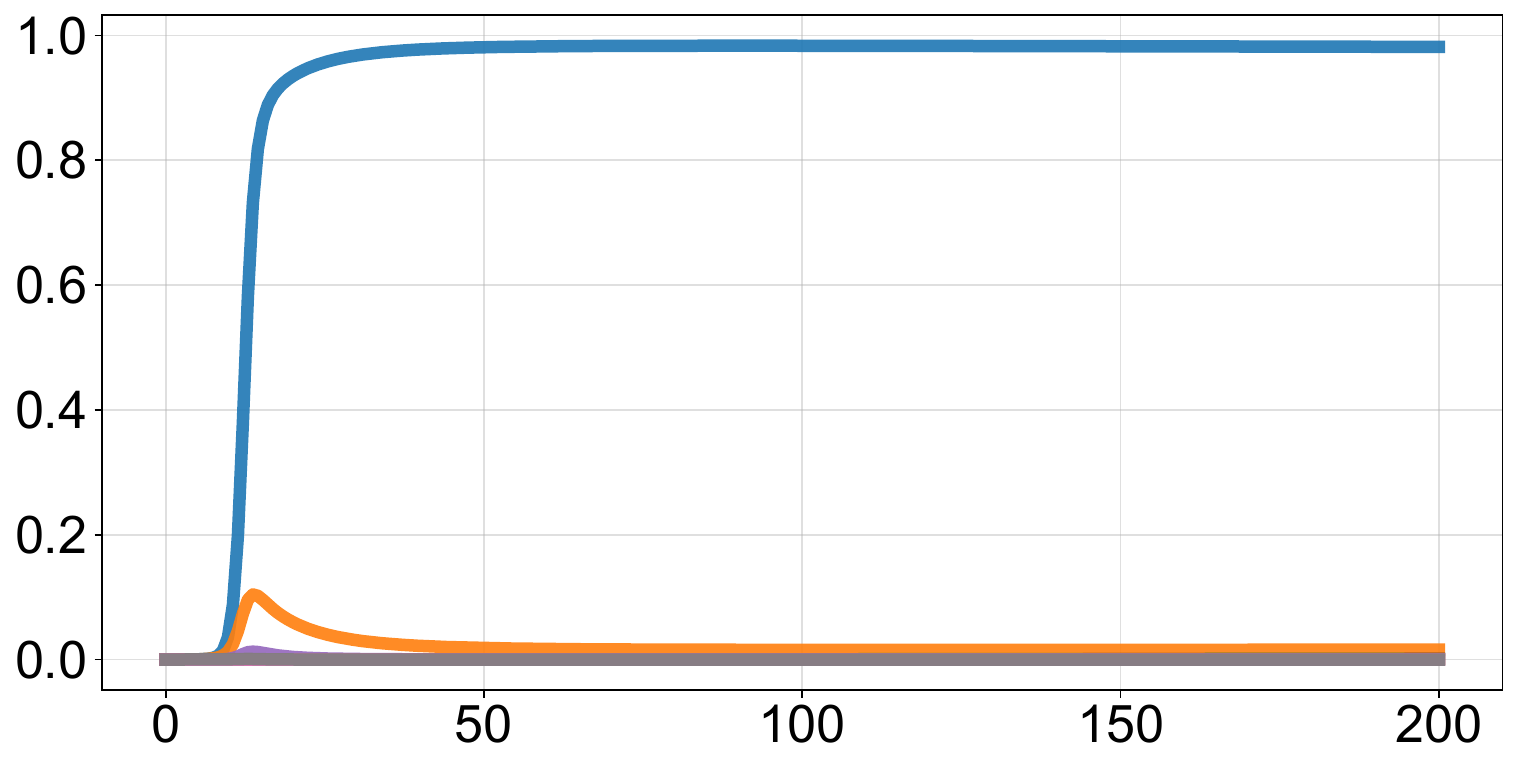}
\end{subfigure}\hfill
\begin{subfigure}[t]{0.32\textwidth}\centering
\scriptsize\textbf{Layer 25}\par\smallskip
\includegraphics[width=\linewidth]{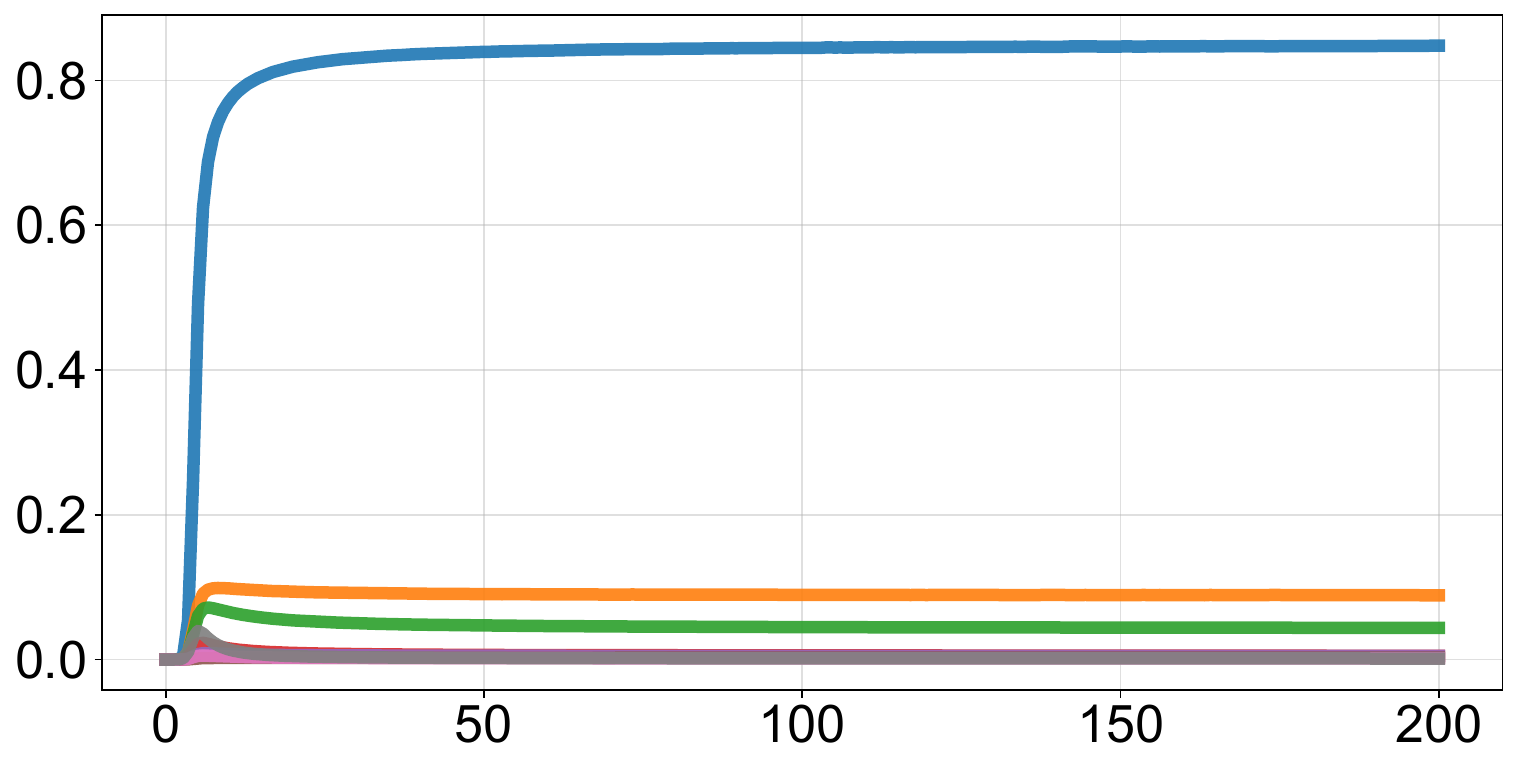}
\end{subfigure}
\rowtitle{Sycophancy}
\begin{subfigure}[t]{0.32\textwidth}\centering
\scriptsize\textbf{Layer 5}\par\smallskip
\includegraphics[width=\linewidth]{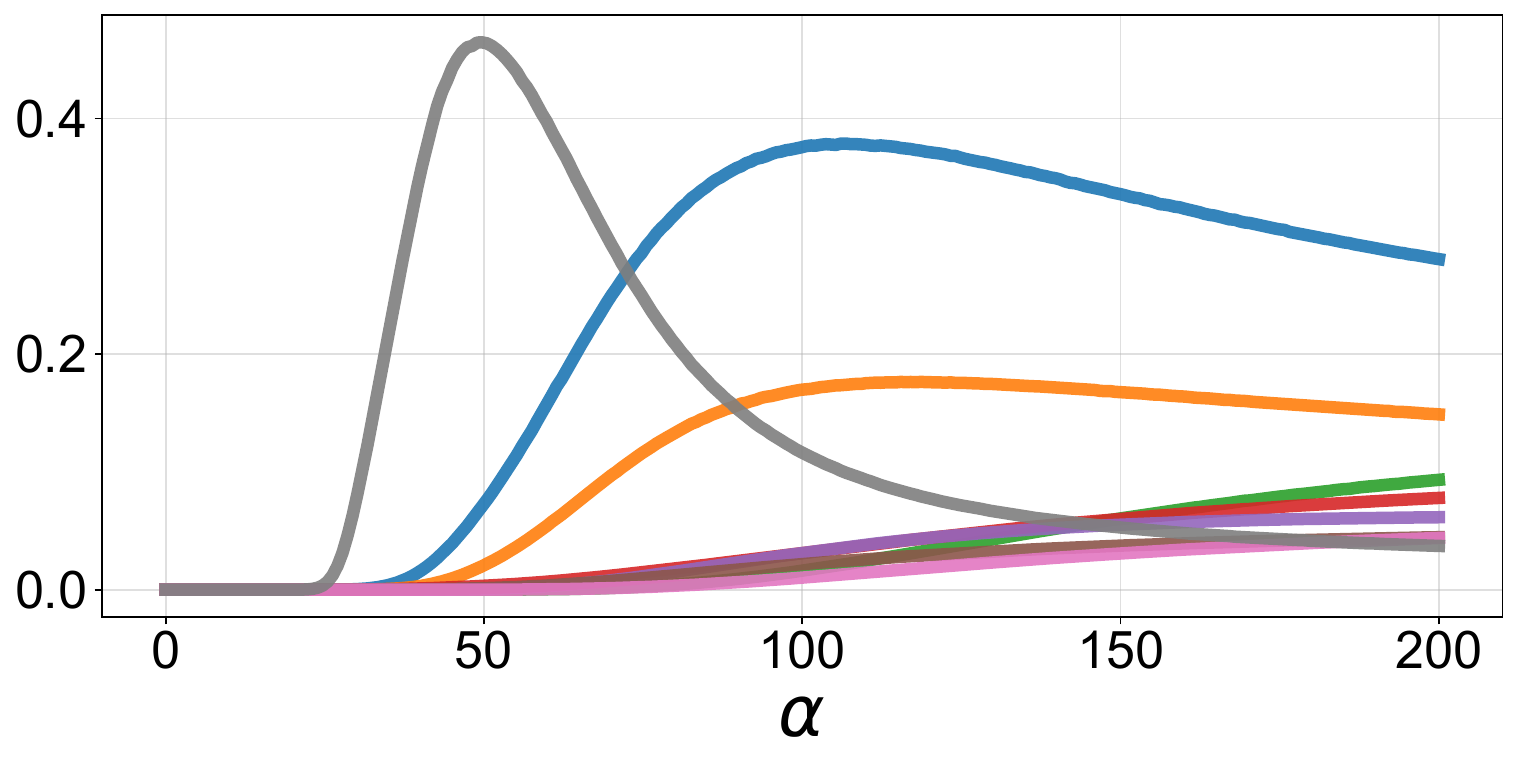}
\end{subfigure}\hfill
\begin{subfigure}[t]{0.32\textwidth}\centering
\scriptsize\textbf{Layer 15}\par\smallskip
\includegraphics[width=\linewidth]{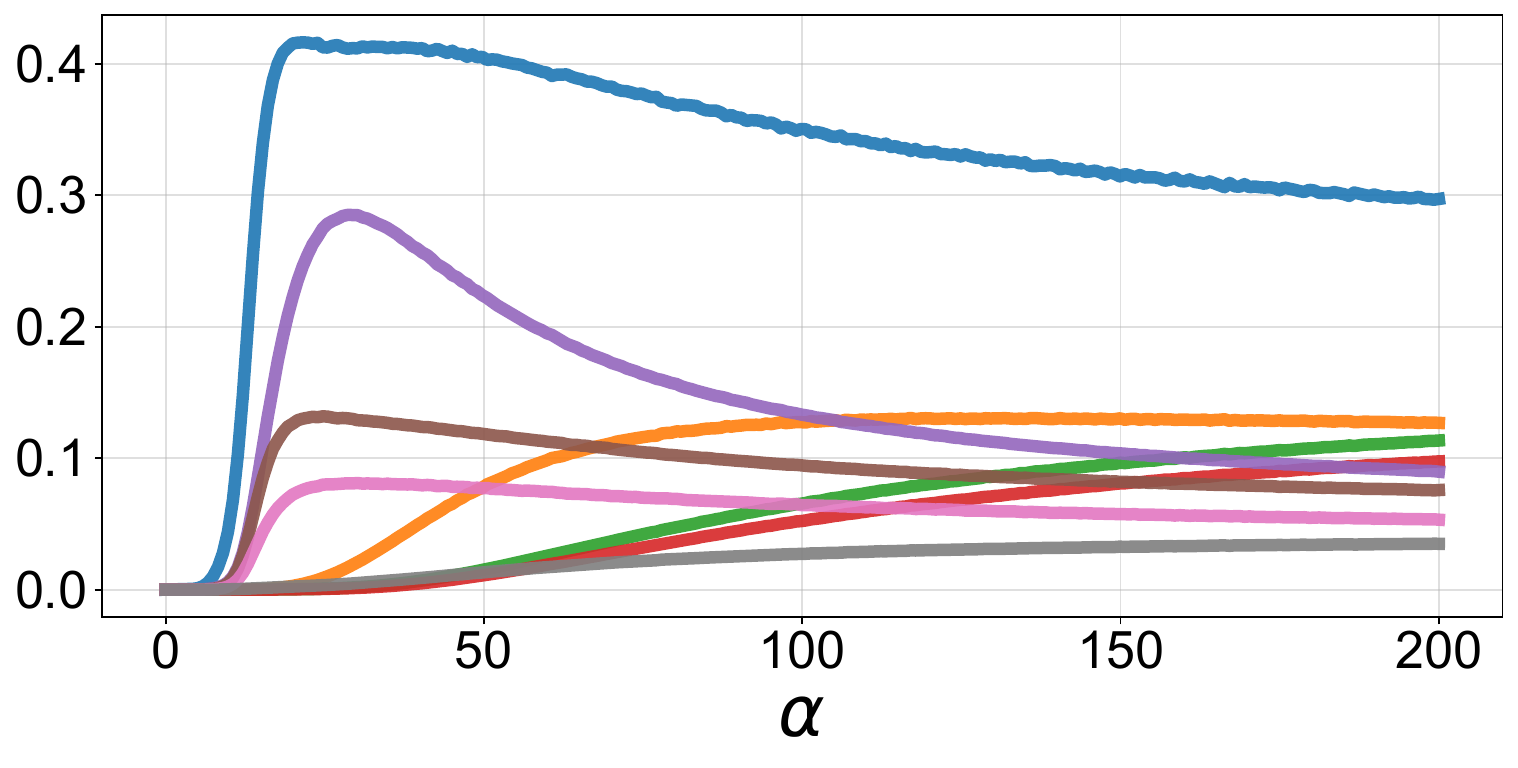}
\end{subfigure}\hfill
\begin{subfigure}[t]{0.32\textwidth}\centering
\scriptsize\textbf{Layer 25}\par\smallskip
\includegraphics[width=\linewidth]{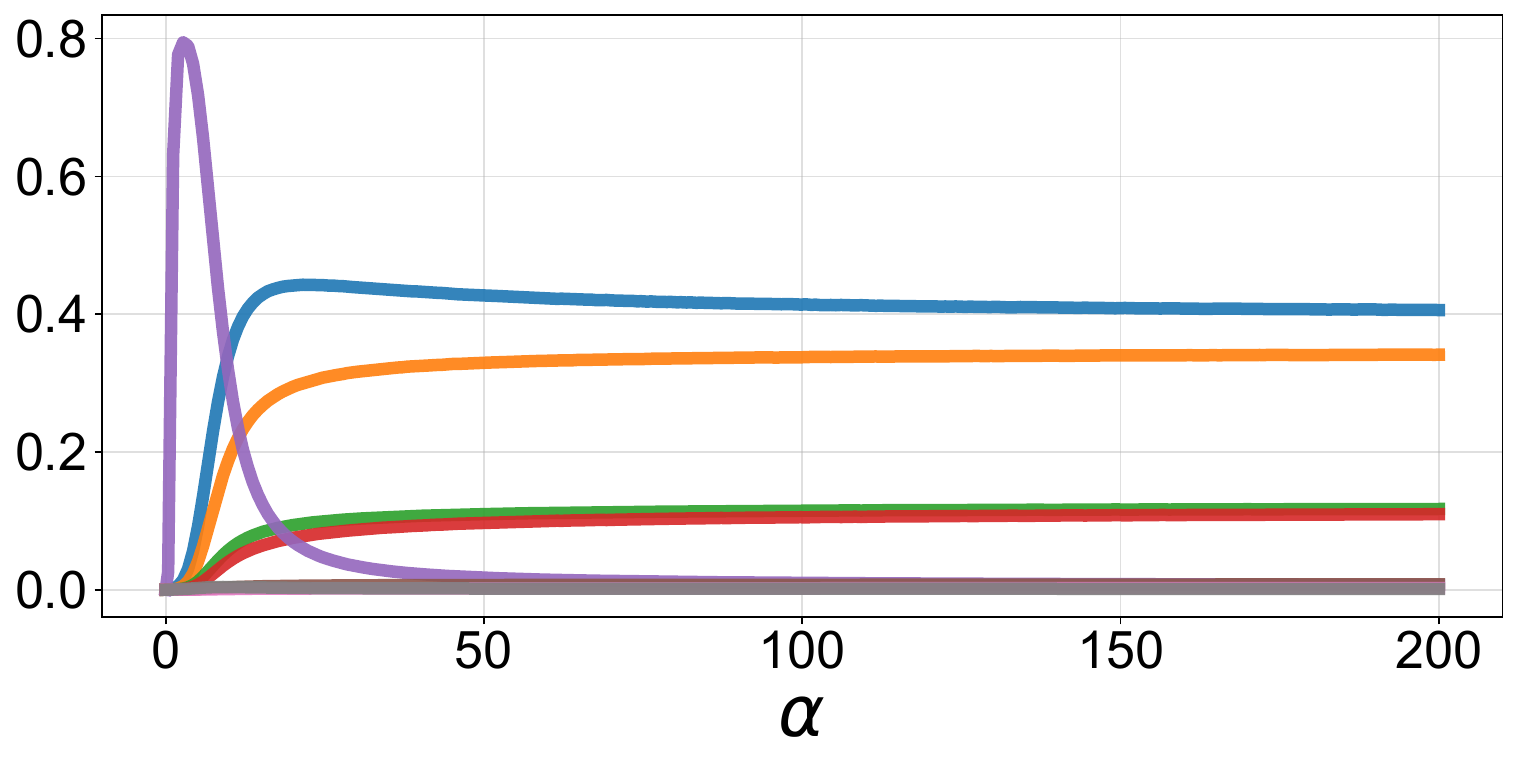}
\end{subfigure}
\caption{\label{fig:next-token-harder-concepts-gemma}Effect of steering strength $\alpha>0$ on next-token probability shifts $\Delta p(z,\alpha)$ for the concepts (top to bottom): coding, math, medical, and sycophancy. Each row of three plots corresponds to a single steered concept. Columns correspond to steering layers 5, 15, 25. Each curve corresponds to a token $z$ selected among the eight highest-probability tokens at $\alpha=200$.}
\end{figure}

\begin{figure}[p]
\centering
\noindent{\small\textbf{Steered model:} google/gemma-3-1b-it}\par\smallskip
\begin{subfigure}[t]{0.19\textwidth}\centering
\scriptsize\textbf{Layer 0}\par\smallskip
\includegraphics[width=\linewidth]{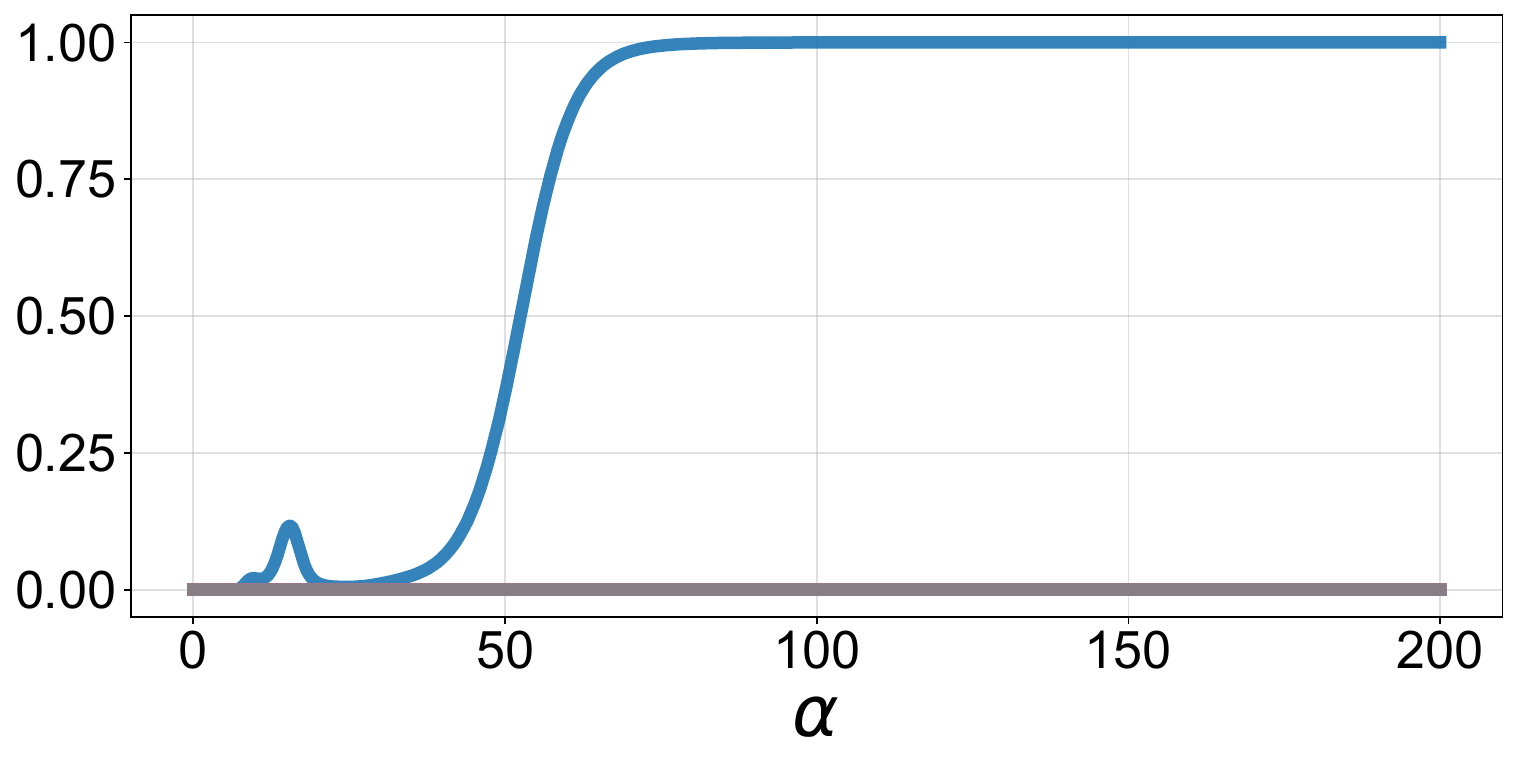}
\end{subfigure}\hfill
\begin{subfigure}[t]{0.19\textwidth}\centering
\scriptsize\textbf{Layer 1}\par\smallskip
\includegraphics[width=\linewidth]{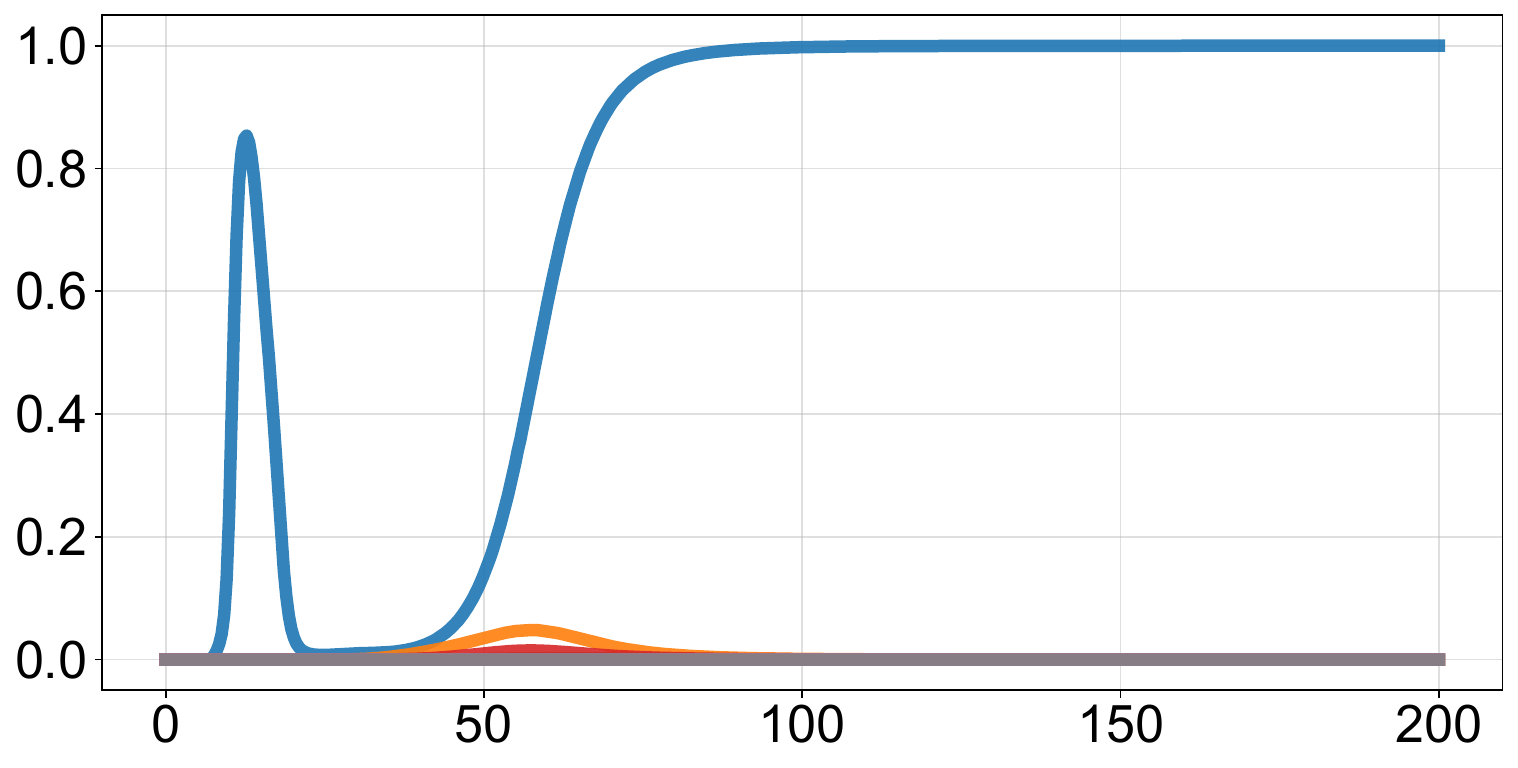}
\end{subfigure}\hfill
\begin{subfigure}[t]{0.19\textwidth}\centering
\scriptsize\textbf{Layer 2}\par\smallskip
\includegraphics[width=\linewidth]{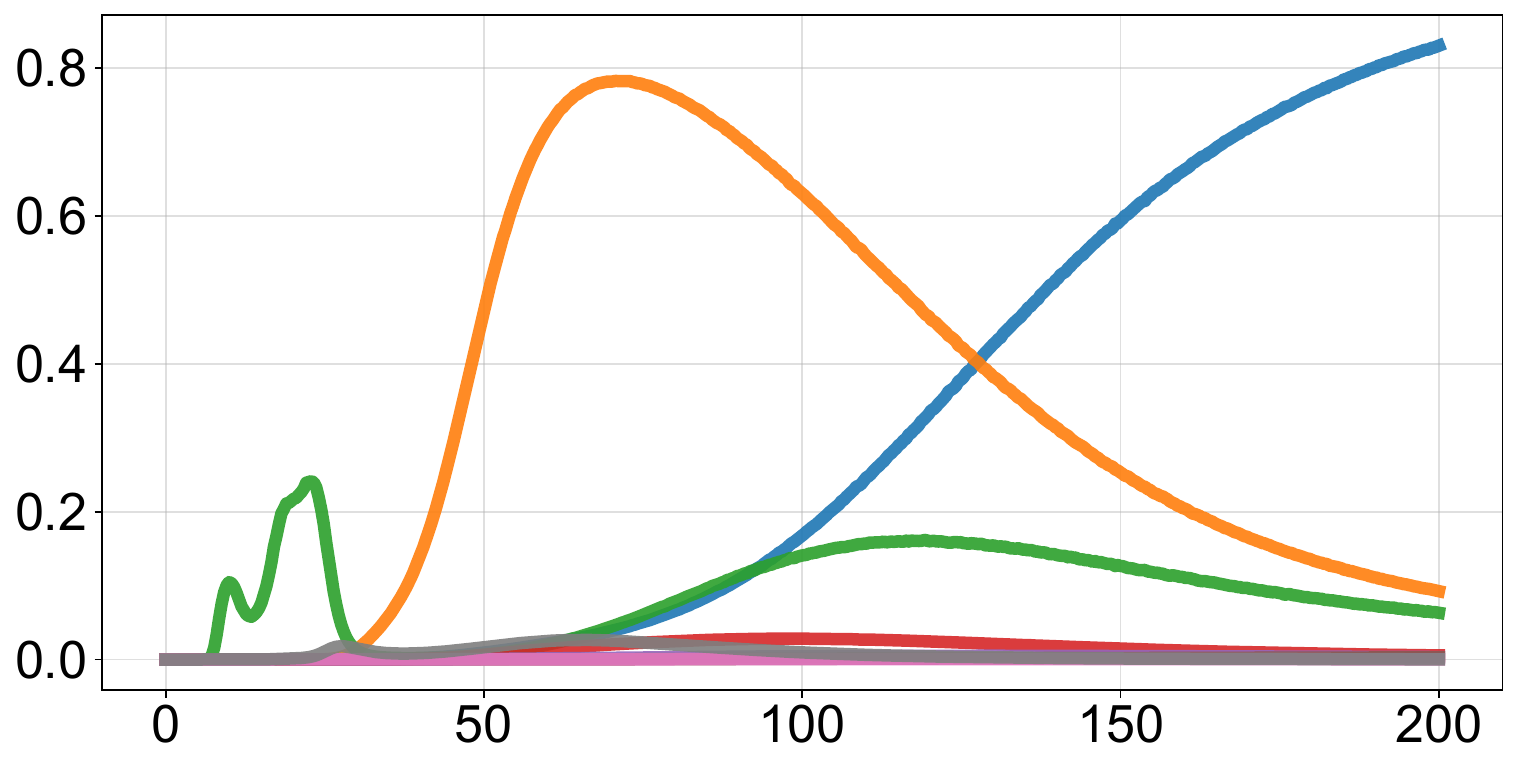}
\end{subfigure}\hfill
\begin{subfigure}[t]{0.19\textwidth}\centering
\scriptsize\textbf{Layer 3}\par\smallskip
\includegraphics[width=\linewidth]{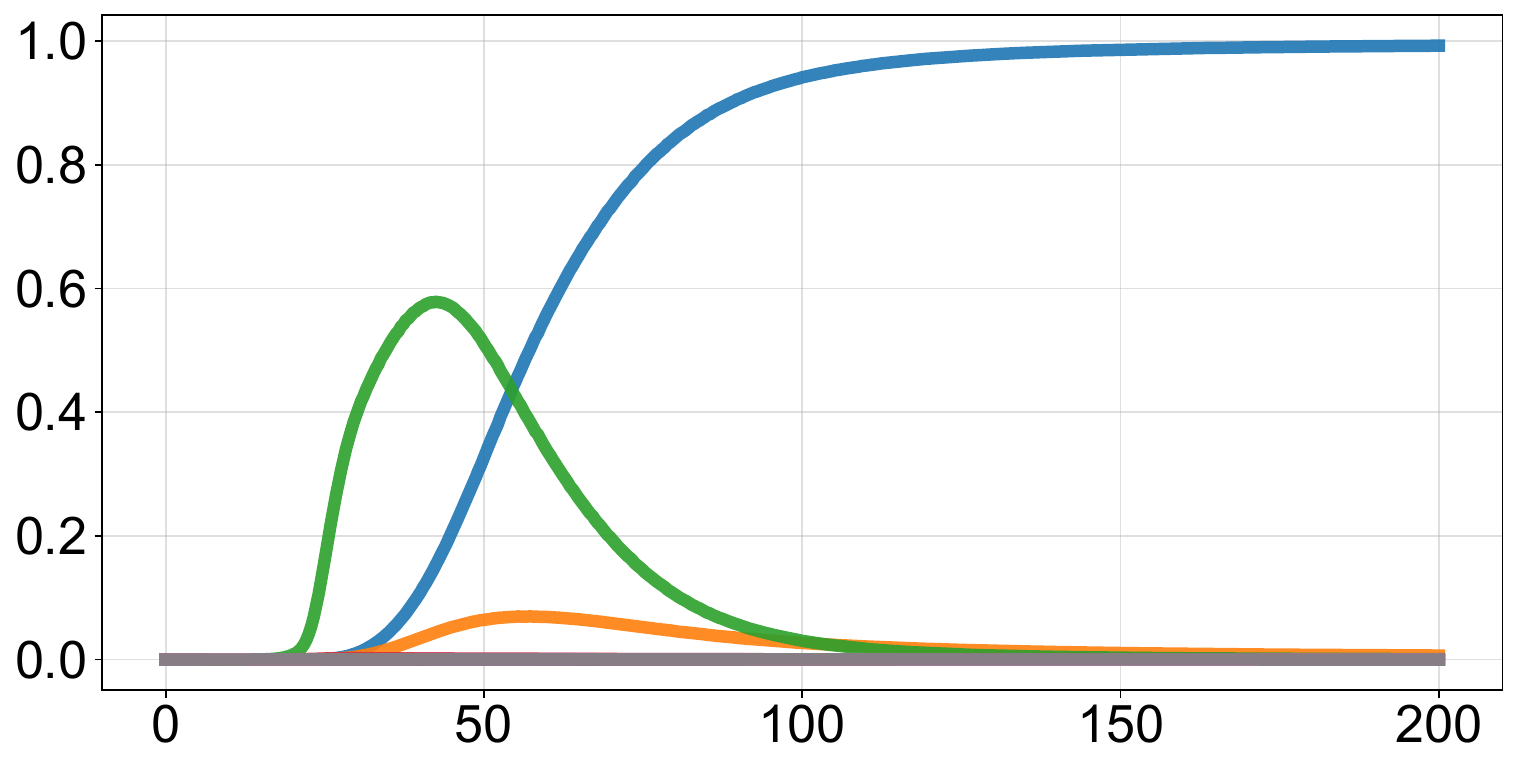}
\end{subfigure}\hfill
\begin{subfigure}[t]{0.19\textwidth}\centering
\scriptsize\textbf{Layer 4}\par\smallskip
\includegraphics[width=\linewidth]{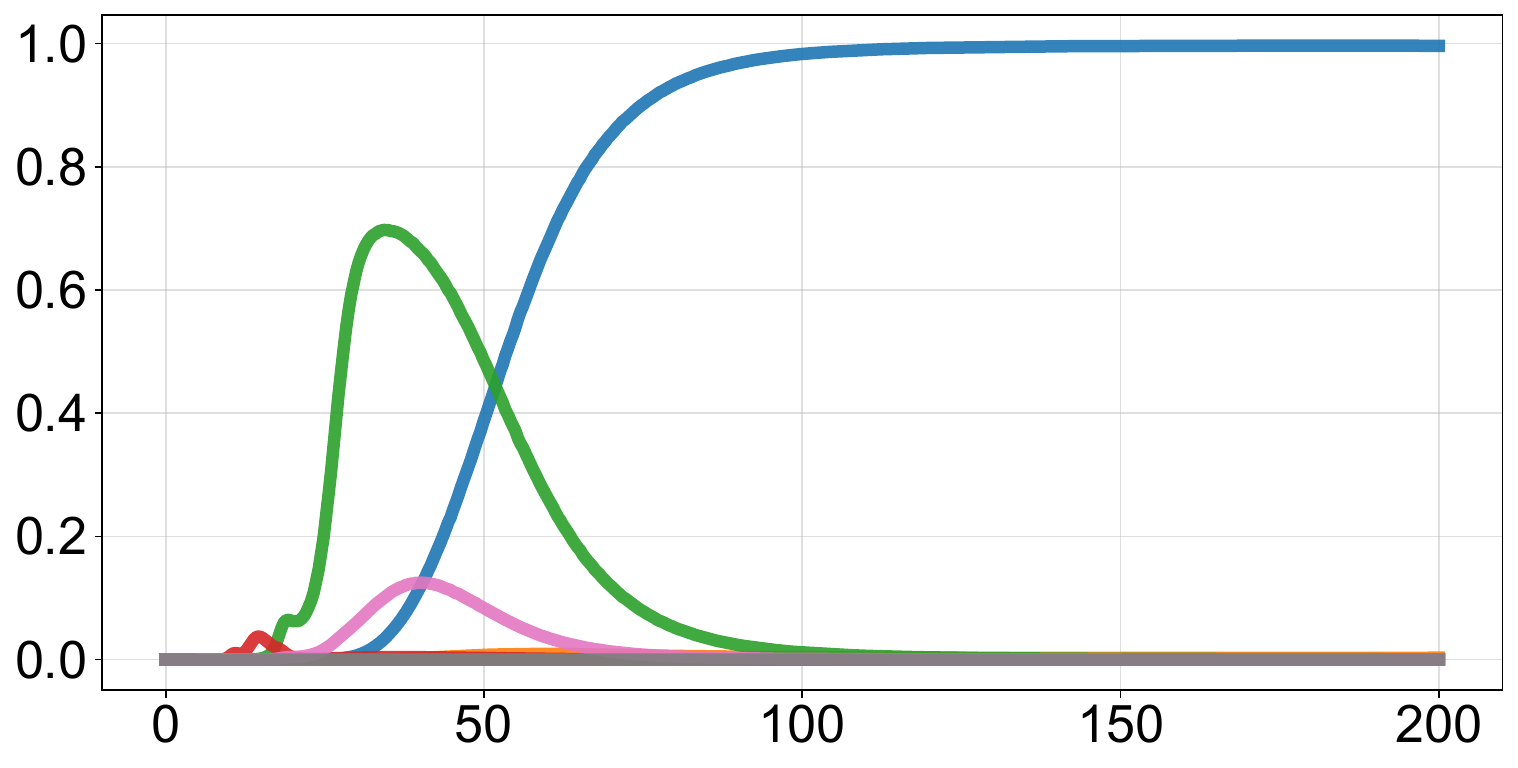}
\end{subfigure}
\begin{subfigure}[t]{0.19\textwidth}\centering
\scriptsize\textbf{Layer 5}\par\smallskip
\includegraphics[width=\linewidth]{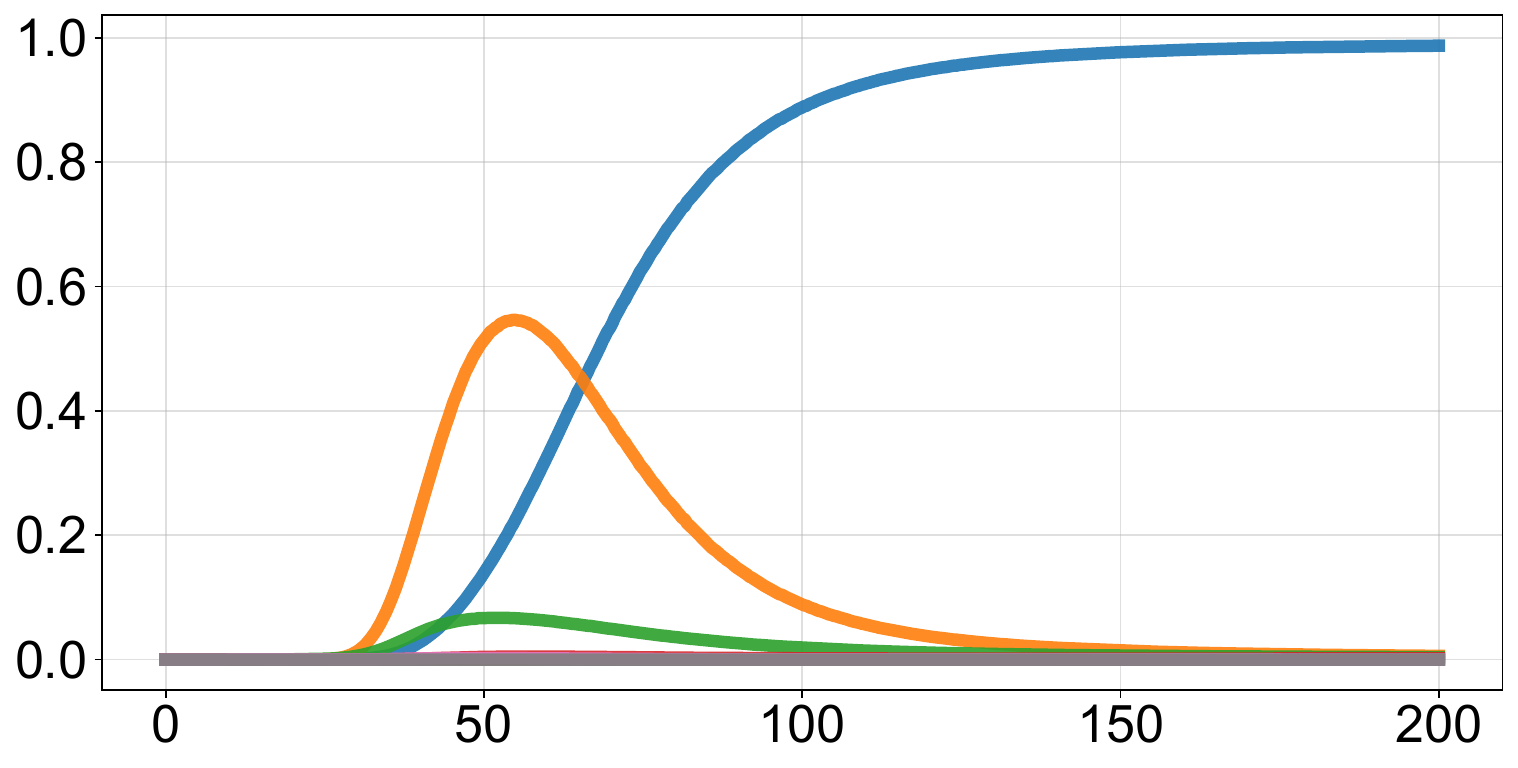}
\end{subfigure}\hfill
\begin{subfigure}[t]{0.19\textwidth}\centering
\scriptsize\textbf{Layer 6}\par\smallskip
\includegraphics[width=\linewidth]{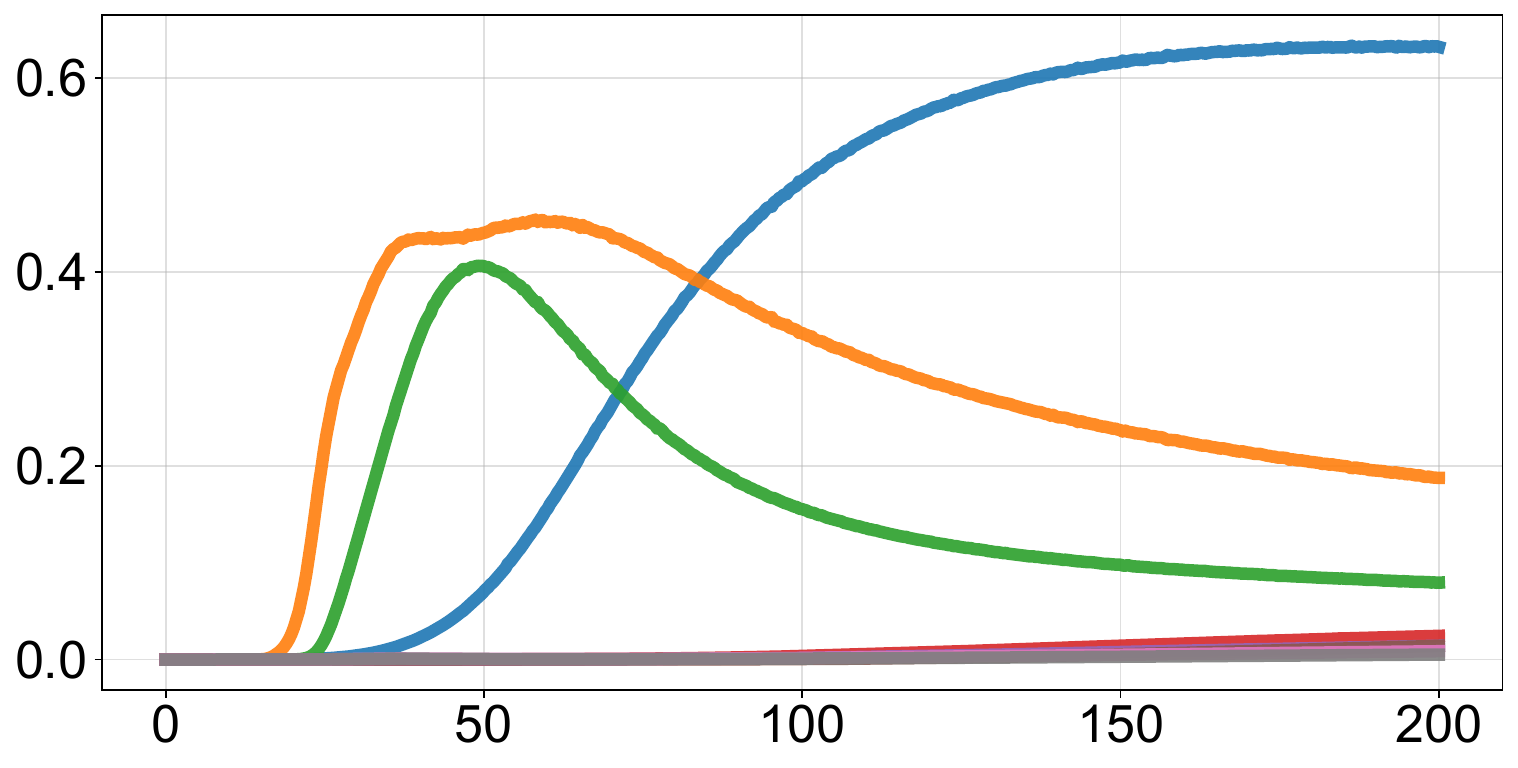}
\end{subfigure}\hfill
\begin{subfigure}[t]{0.19\textwidth}\centering
\scriptsize\textbf{Layer 7}\par\smallskip
\includegraphics[width=\linewidth]{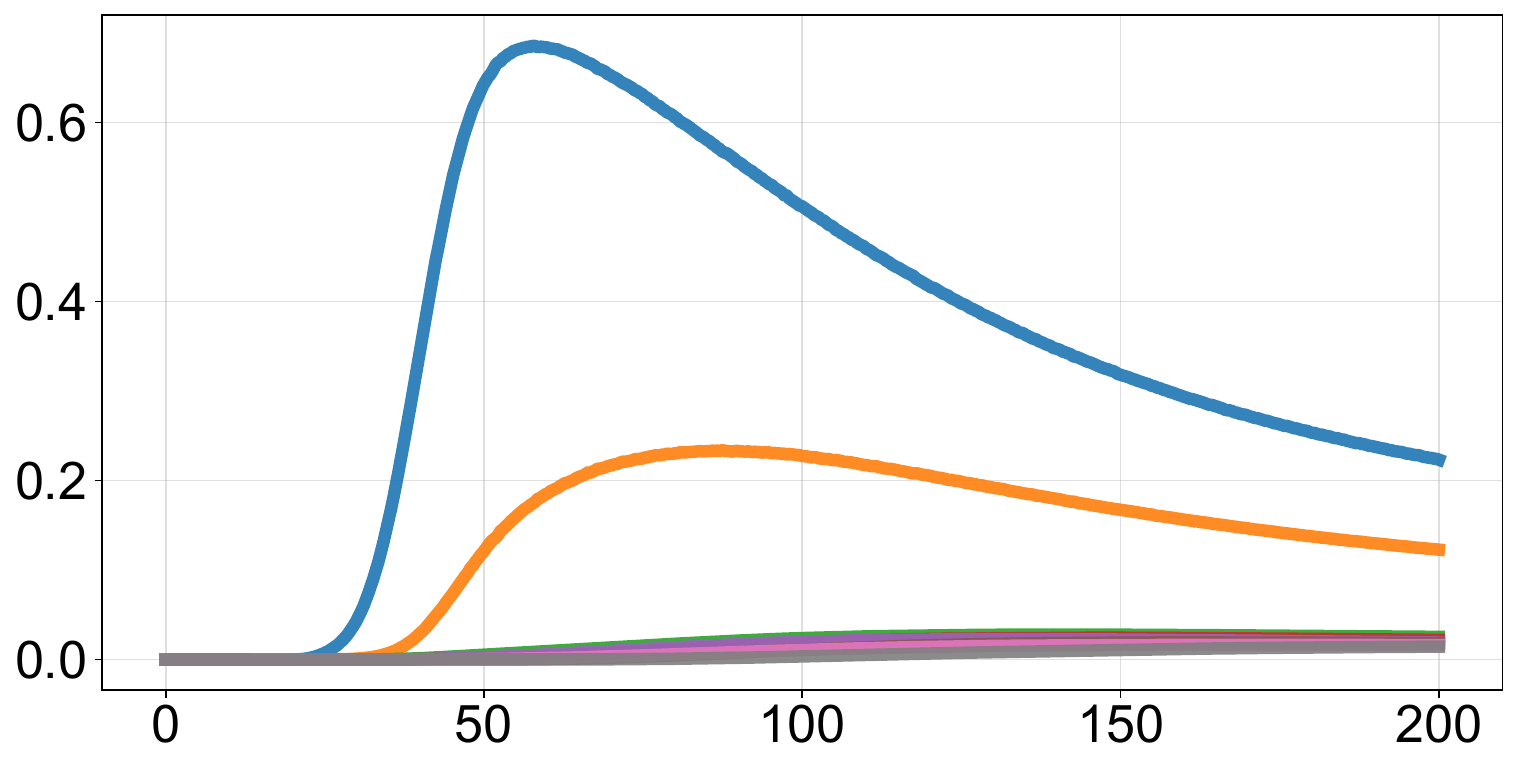}
\end{subfigure}\hfill
\begin{subfigure}[t]{0.19\textwidth}\centering
\scriptsize\textbf{Layer 8}\par\smallskip
\includegraphics[width=\linewidth]{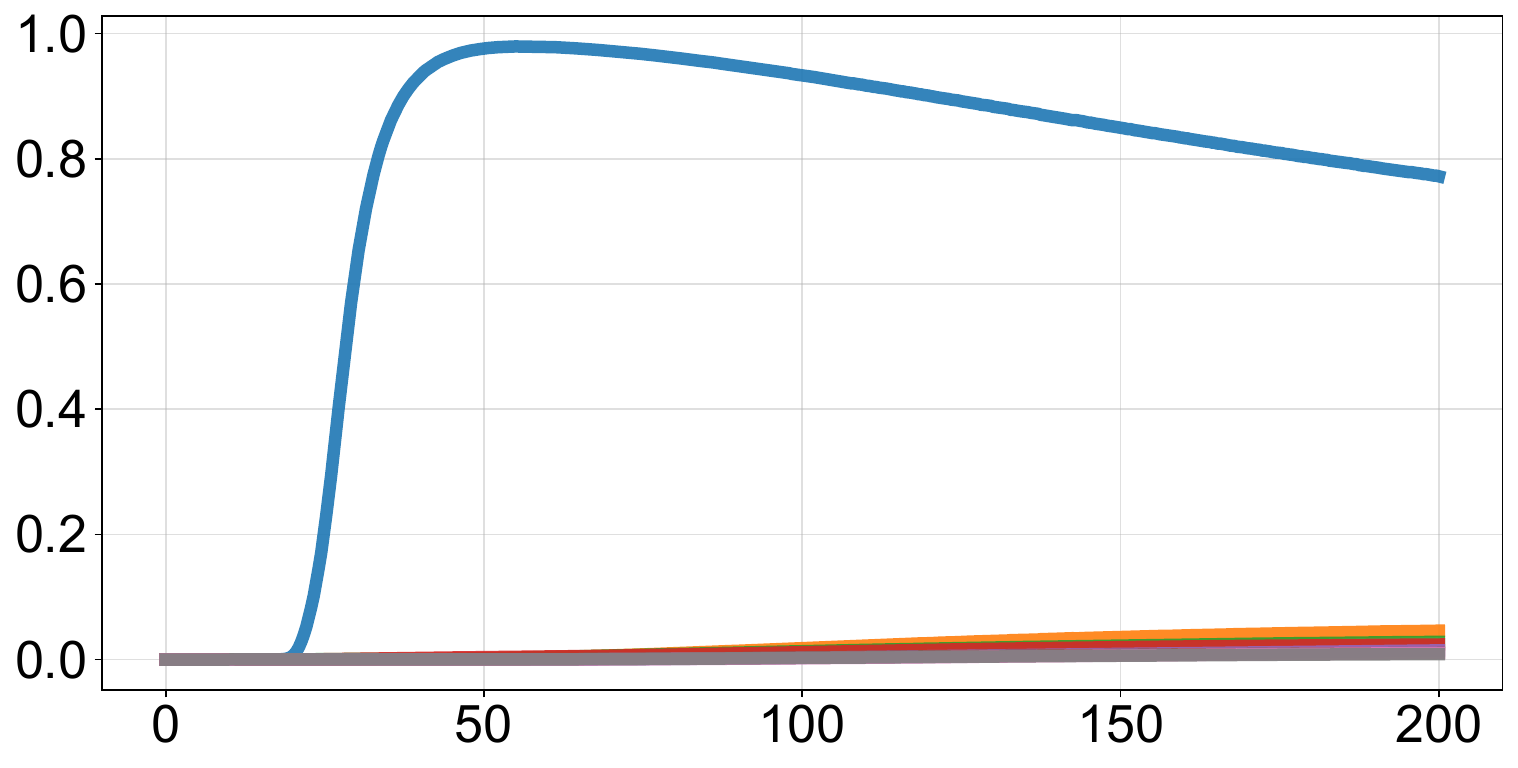}
\end{subfigure}\hfill
\begin{subfigure}[t]{0.19\textwidth}\centering
\scriptsize\textbf{Layer 9}\par\smallskip
\includegraphics[width=\linewidth]{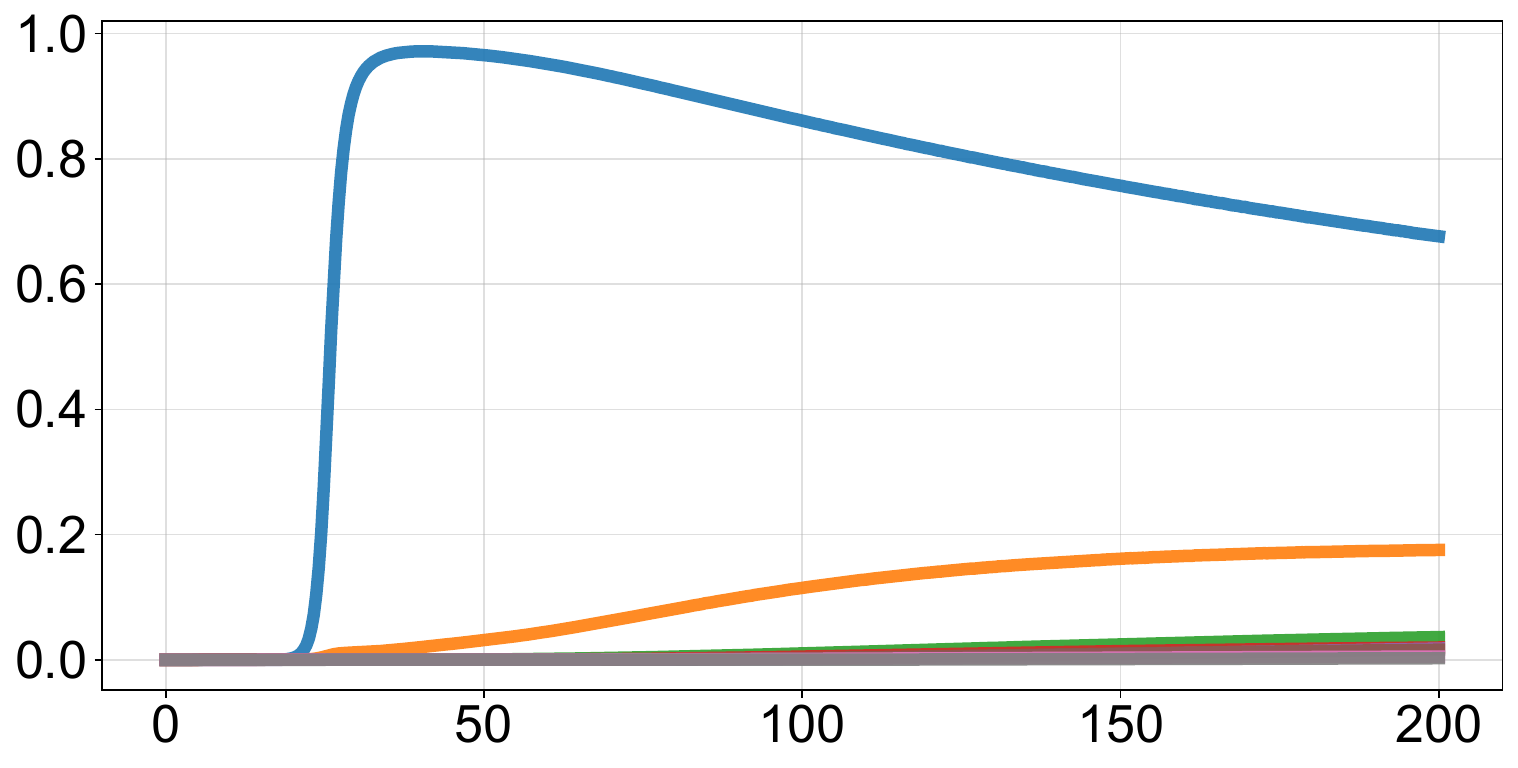}
\end{subfigure}
\begin{subfigure}[t]{0.19\textwidth}\centering
\scriptsize\textbf{Layer 10}\par\smallskip
\includegraphics[width=\linewidth]{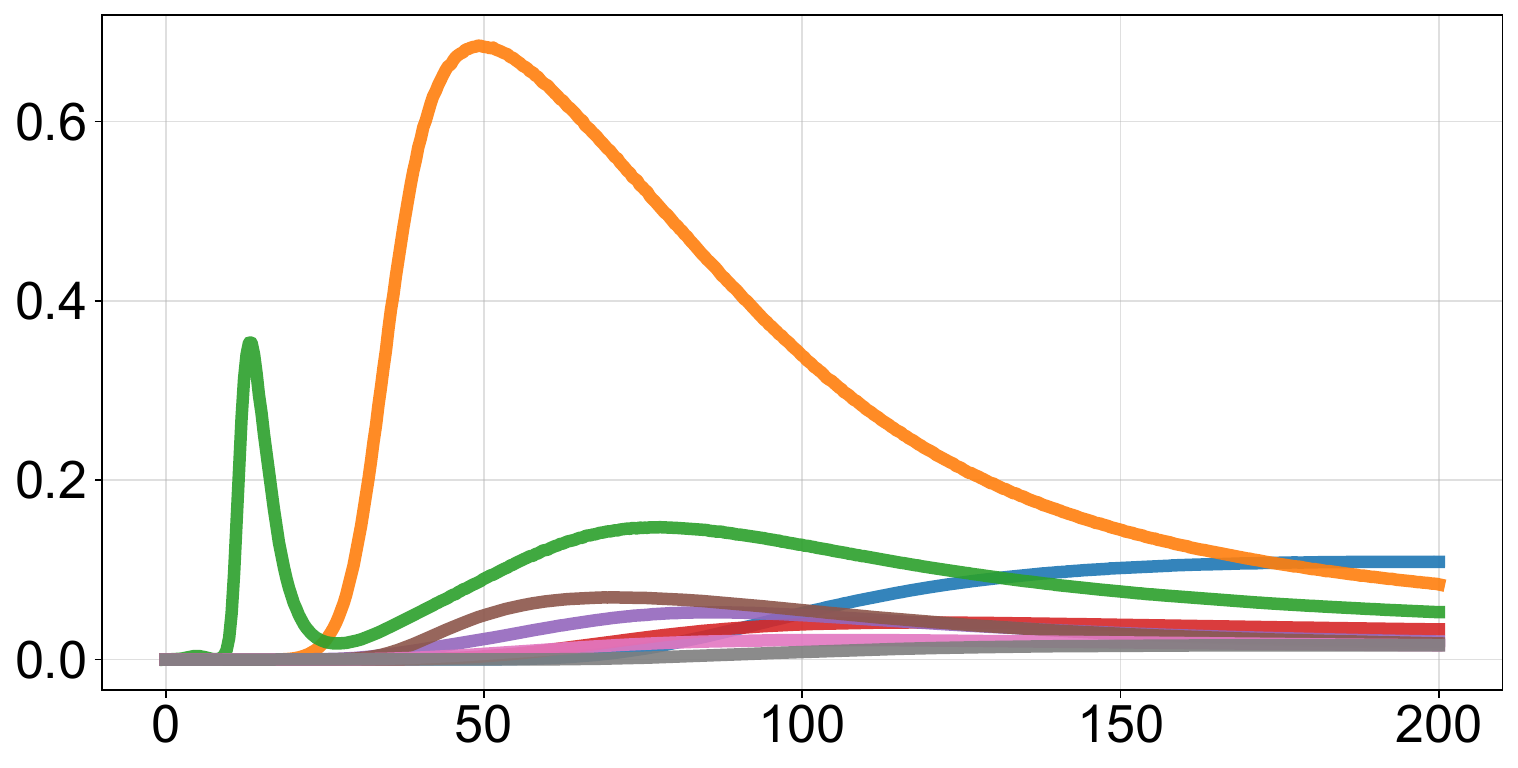}
\end{subfigure}\hfill
\begin{subfigure}[t]{0.19\textwidth}\centering
\scriptsize\textbf{Layer 11}\par\smallskip
\includegraphics[width=\linewidth]{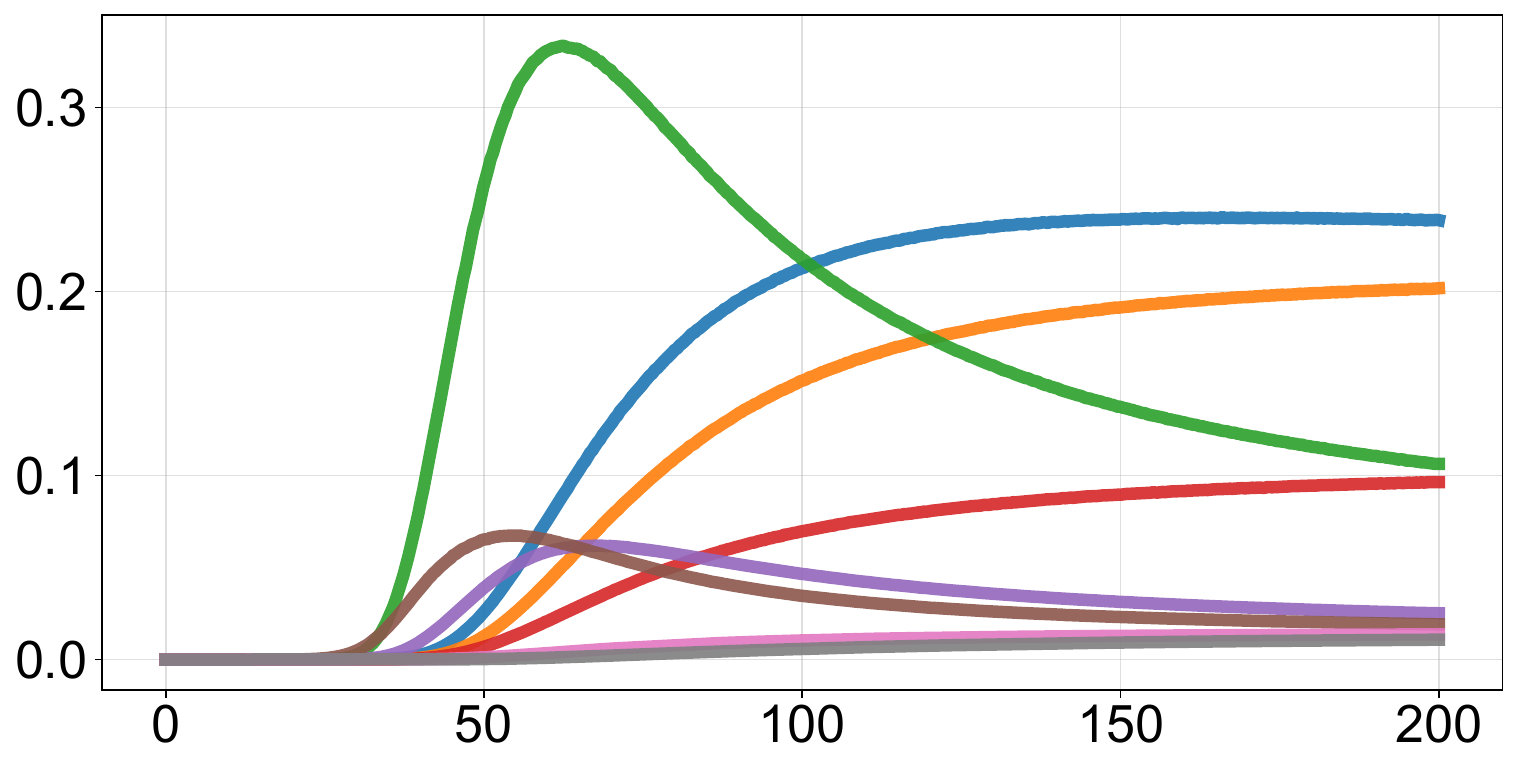}
\end{subfigure}\hfill
\begin{subfigure}[t]{0.19\textwidth}\centering
\scriptsize\textbf{Layer 12}\par\smallskip
\includegraphics[width=\linewidth]{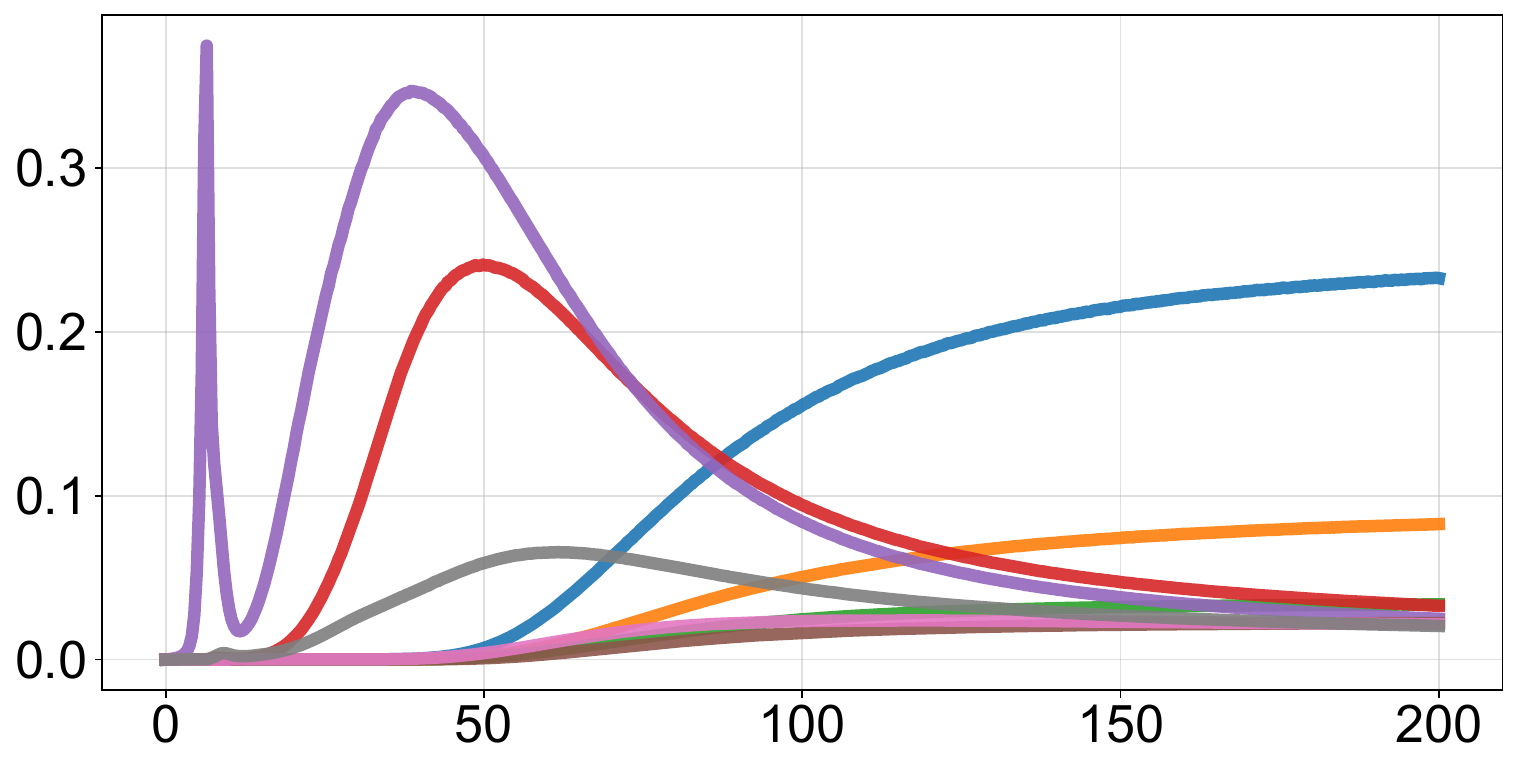}
\end{subfigure}\hfill
\begin{subfigure}[t]{0.19\textwidth}\centering
\scriptsize\textbf{Layer 13}\par\smallskip
\includegraphics[width=\linewidth]{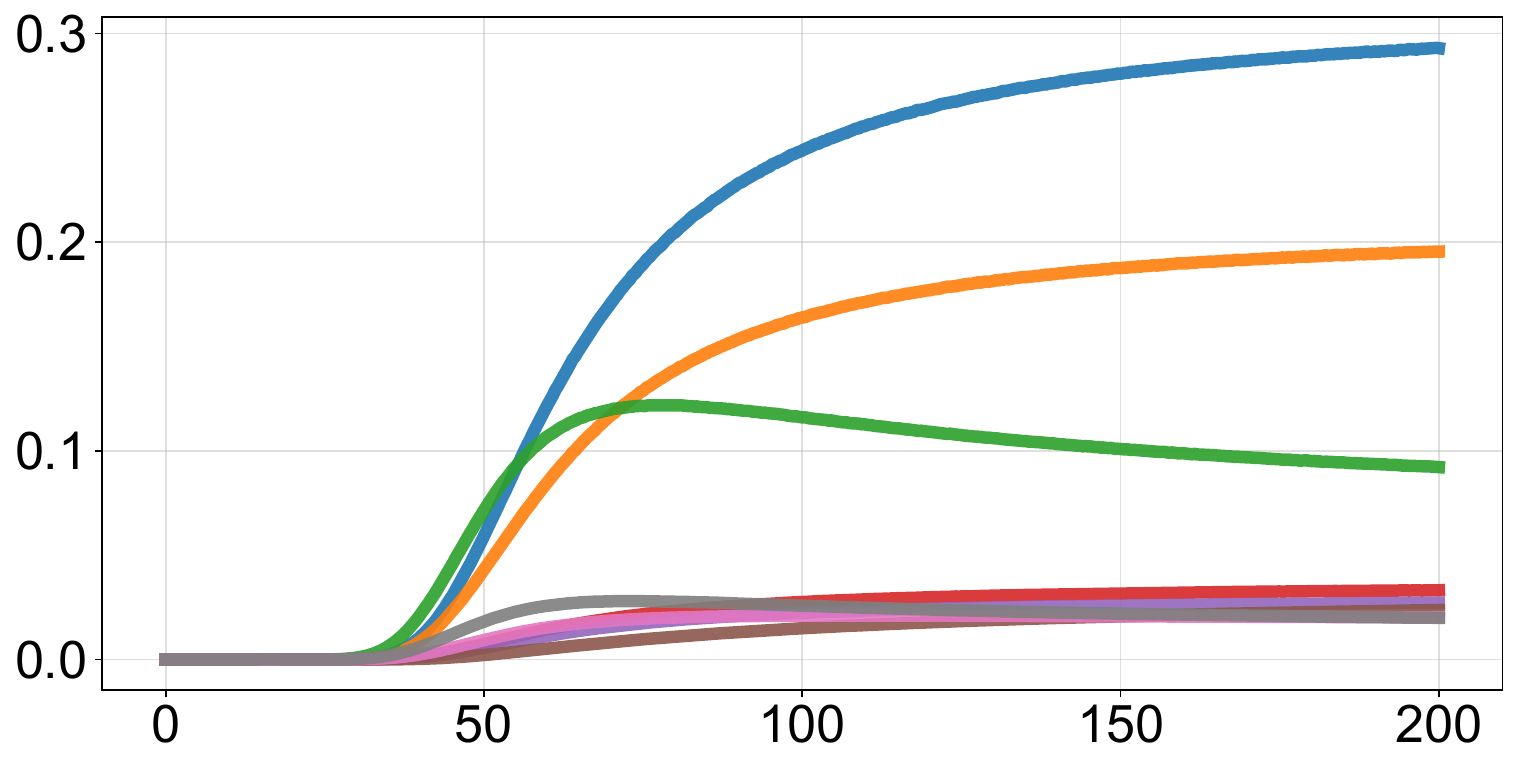}
\end{subfigure}\hfill
\begin{subfigure}[t]{0.19\textwidth}\centering
\scriptsize\textbf{Layer 14}\par\smallskip
\includegraphics[width=\linewidth]{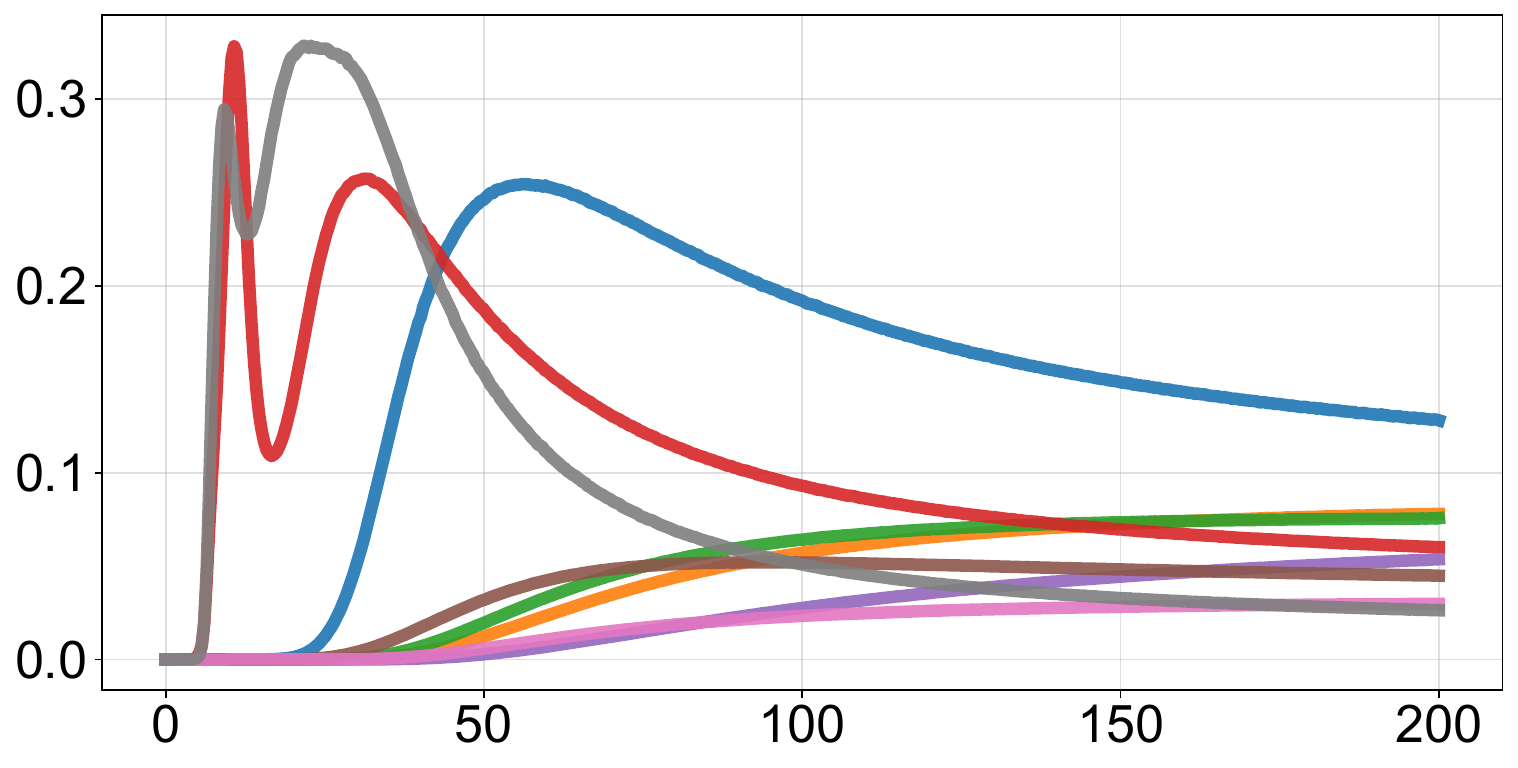}
\end{subfigure}
\begin{subfigure}[t]{0.19\textwidth}\centering
\scriptsize\textbf{Layer 15}\par\smallskip
\includegraphics[width=\linewidth]{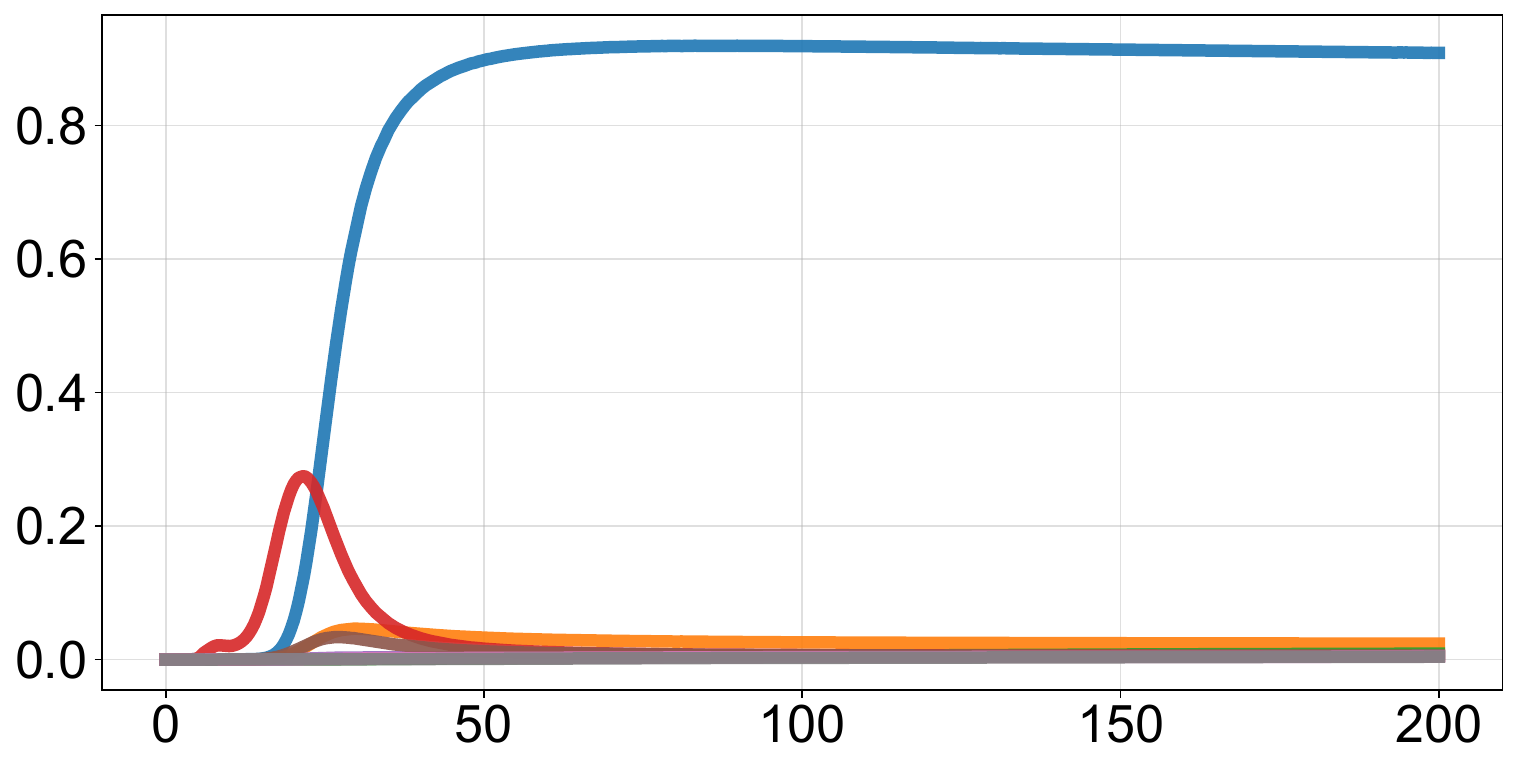}
\end{subfigure}\hfill
\begin{subfigure}[t]{0.19\textwidth}\centering
\scriptsize\textbf{Layer 16}\par\smallskip
\includegraphics[width=\linewidth]{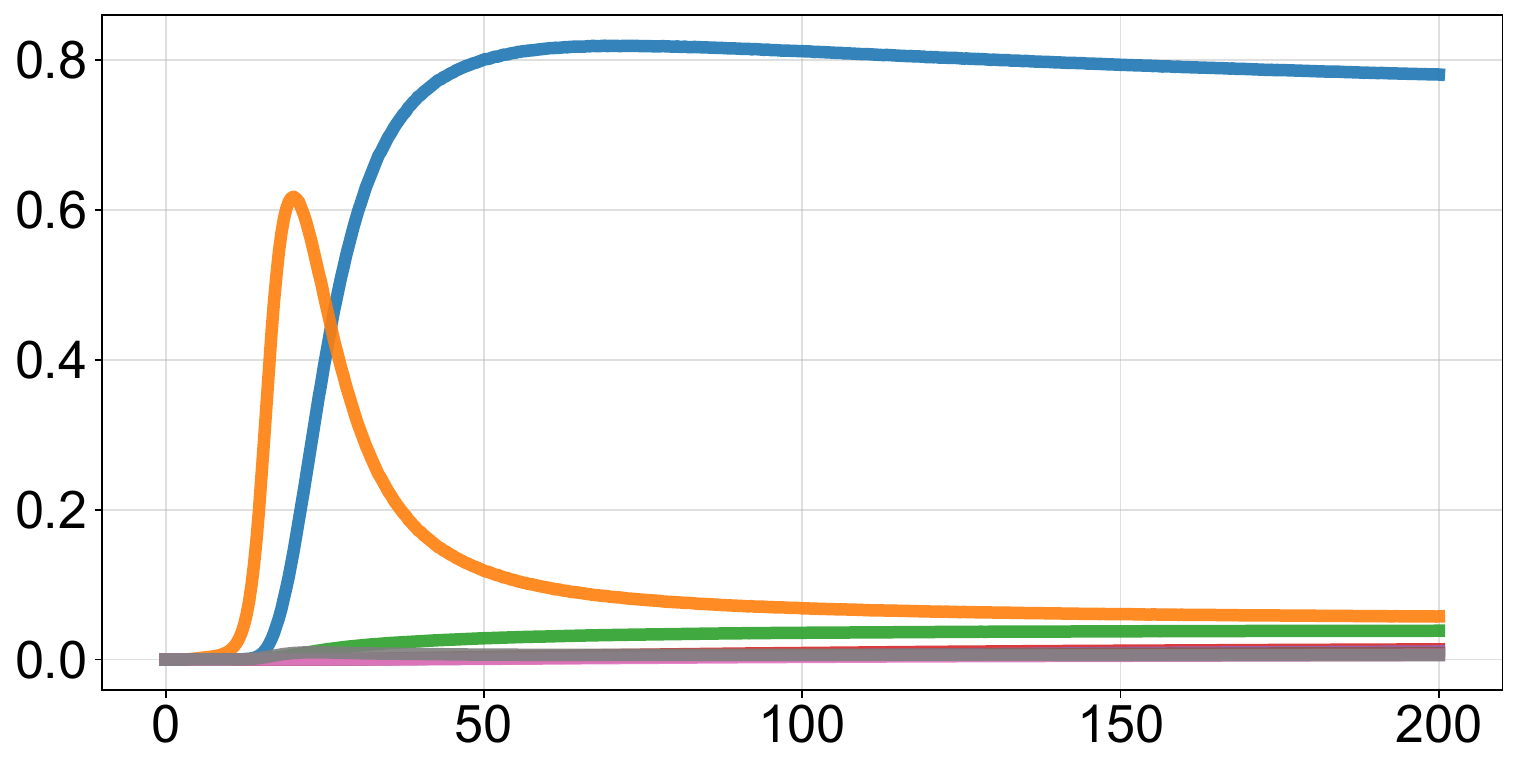}
\end{subfigure}\hfill
\begin{subfigure}[t]{0.19\textwidth}\centering
\scriptsize\textbf{Layer 17}\par\smallskip
\includegraphics[width=\linewidth]{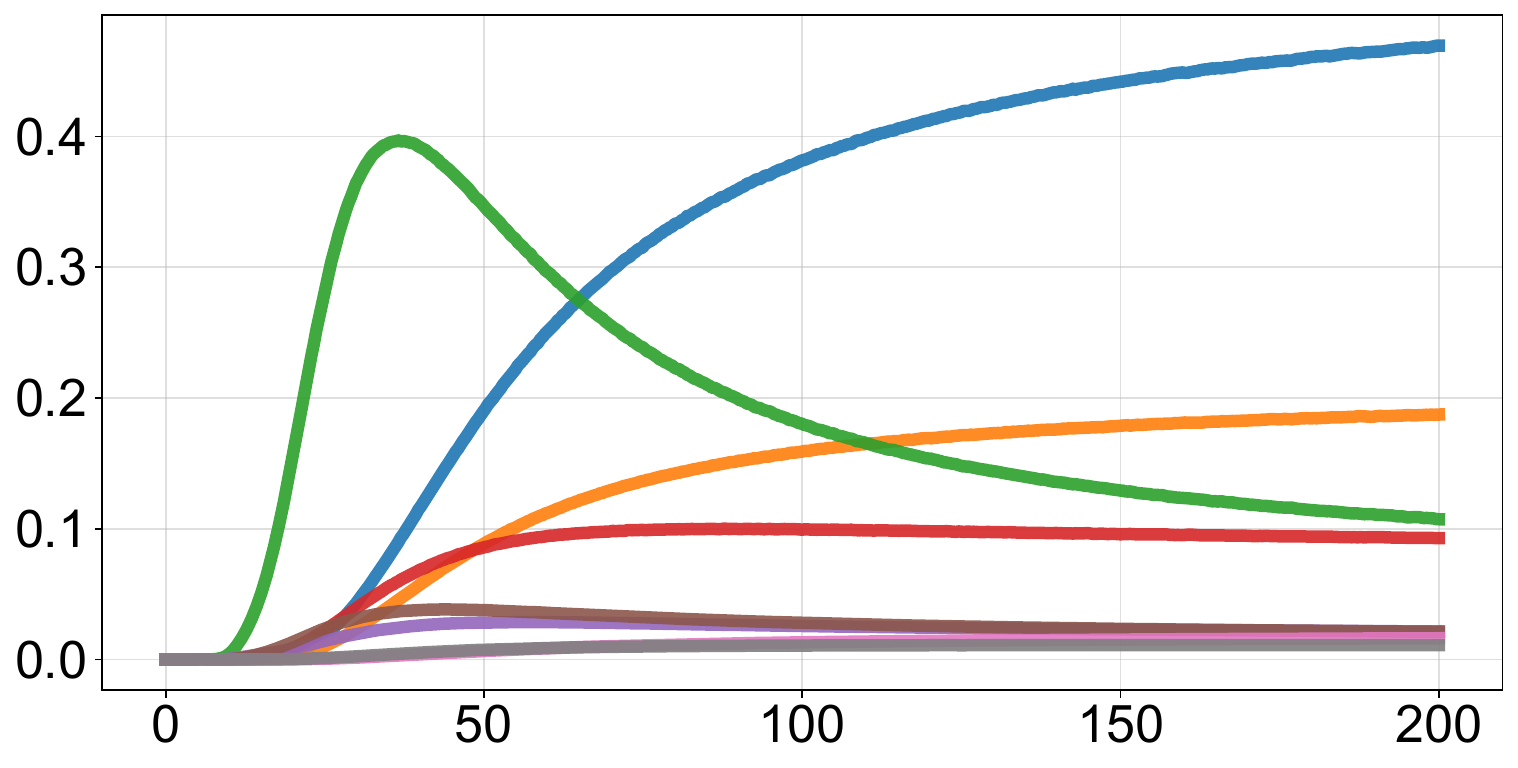}
\end{subfigure}\hfill
\begin{subfigure}[t]{0.19\textwidth}\centering
\scriptsize\textbf{Layer 18}\par\smallskip
\includegraphics[width=\linewidth]{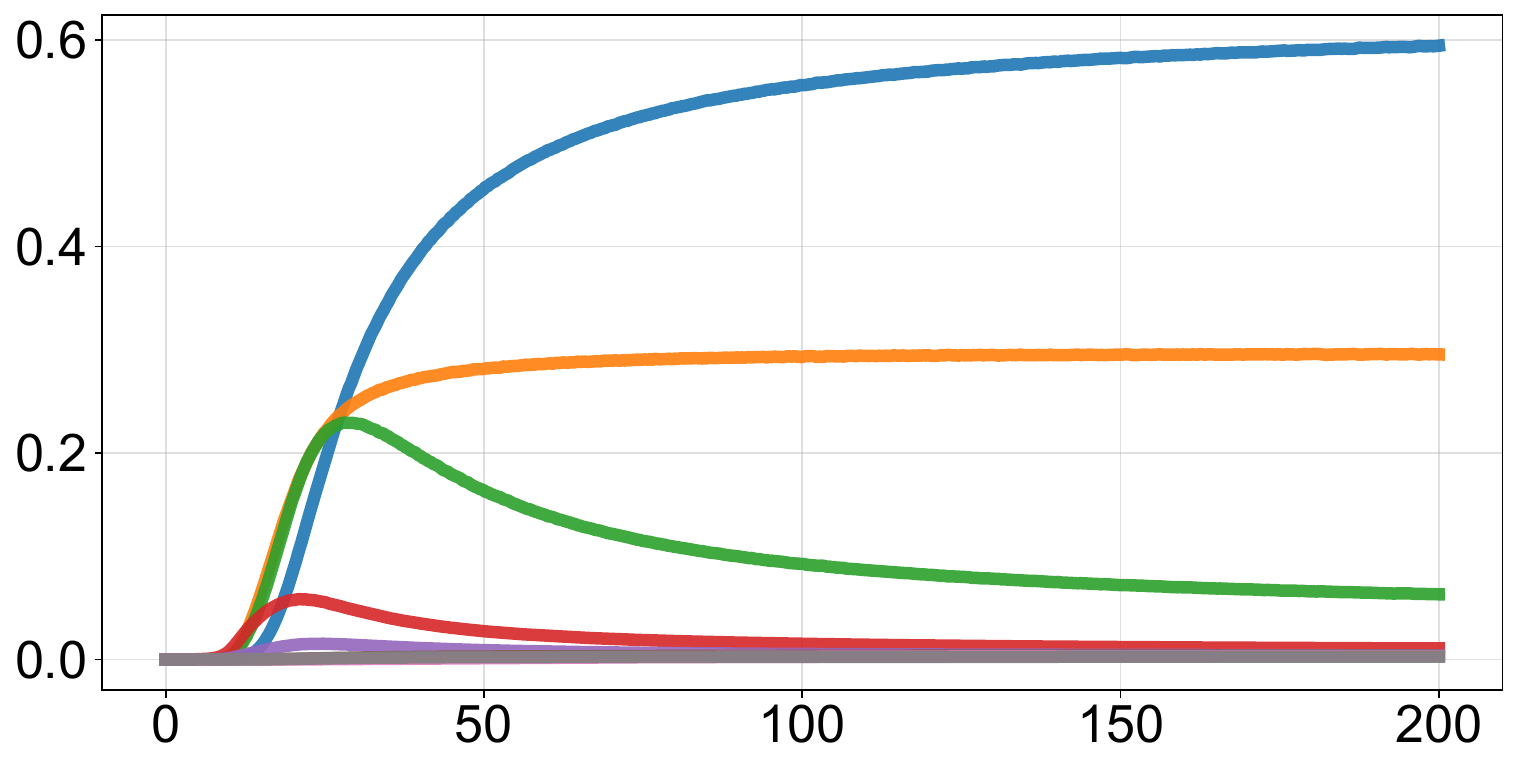}
\end{subfigure}\hfill
\begin{subfigure}[t]{0.19\textwidth}\centering
\scriptsize\textbf{Layer 19}\par\smallskip
\includegraphics[width=\linewidth]{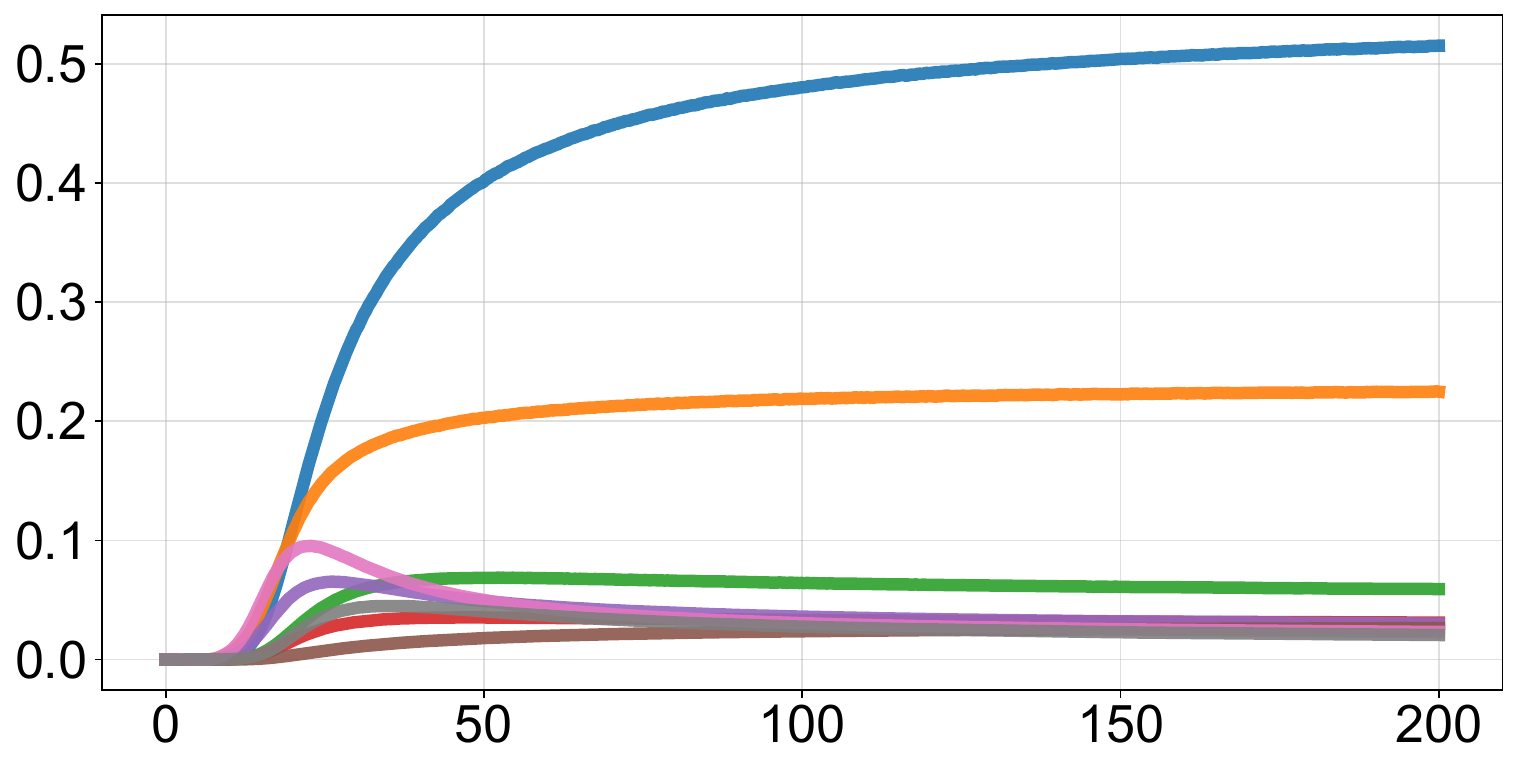}
\end{subfigure}
\begin{subfigure}[t]{0.19\textwidth}\centering
\scriptsize\textbf{Layer 20}\par\smallskip
\includegraphics[width=\linewidth]{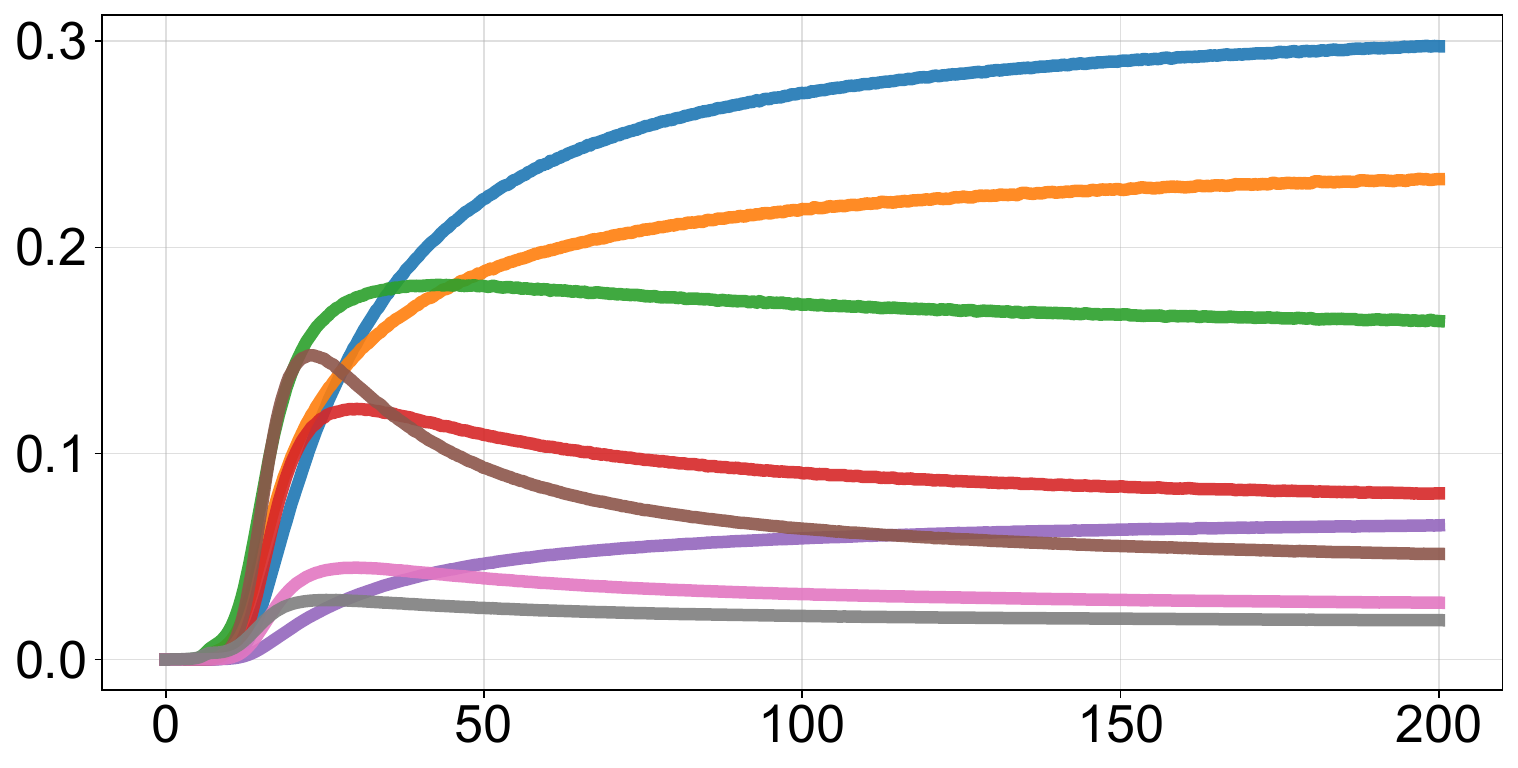}
\end{subfigure}\hfill
\begin{subfigure}[t]{0.19\textwidth}\centering
\scriptsize\textbf{Layer 21}\par\smallskip
\includegraphics[width=\linewidth]{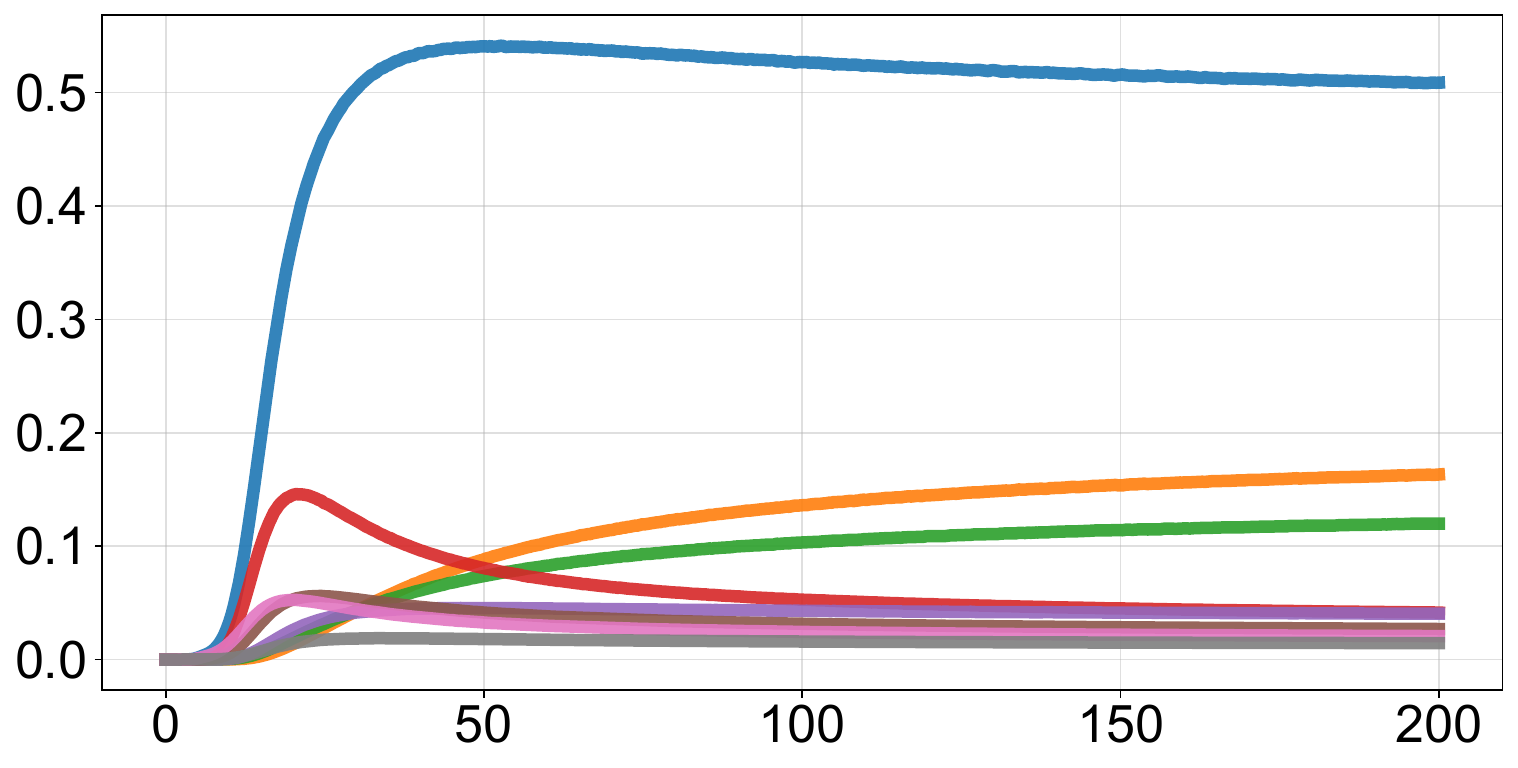}
\end{subfigure}\hfill
\begin{subfigure}[t]{0.19\textwidth}\centering
\scriptsize\textbf{Layer 22}\par\smallskip
\includegraphics[width=\linewidth]{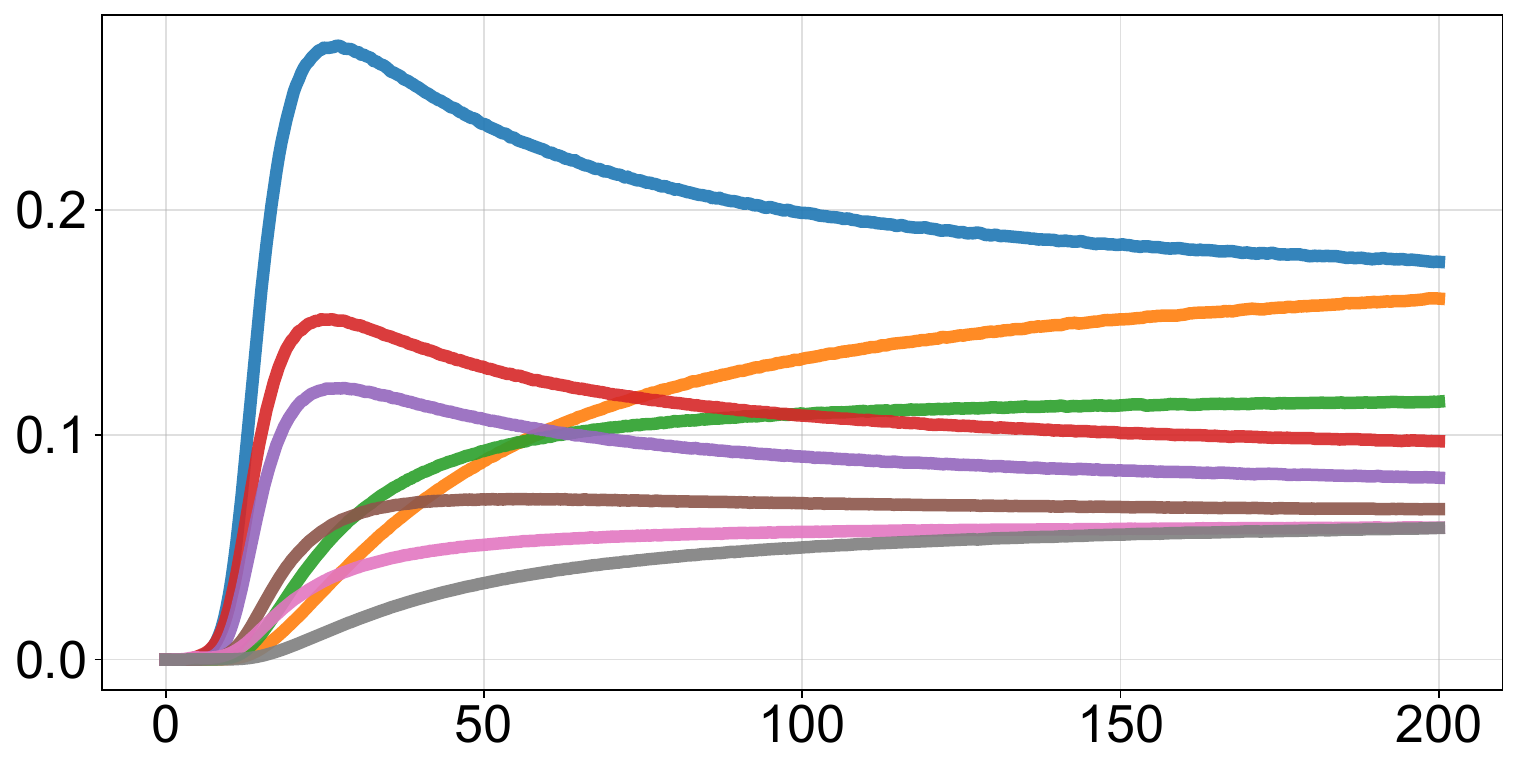}
\end{subfigure}\hfill
\begin{subfigure}[t]{0.19\textwidth}\centering
\scriptsize\textbf{Layer 23}\par\smallskip
\includegraphics[width=\linewidth]{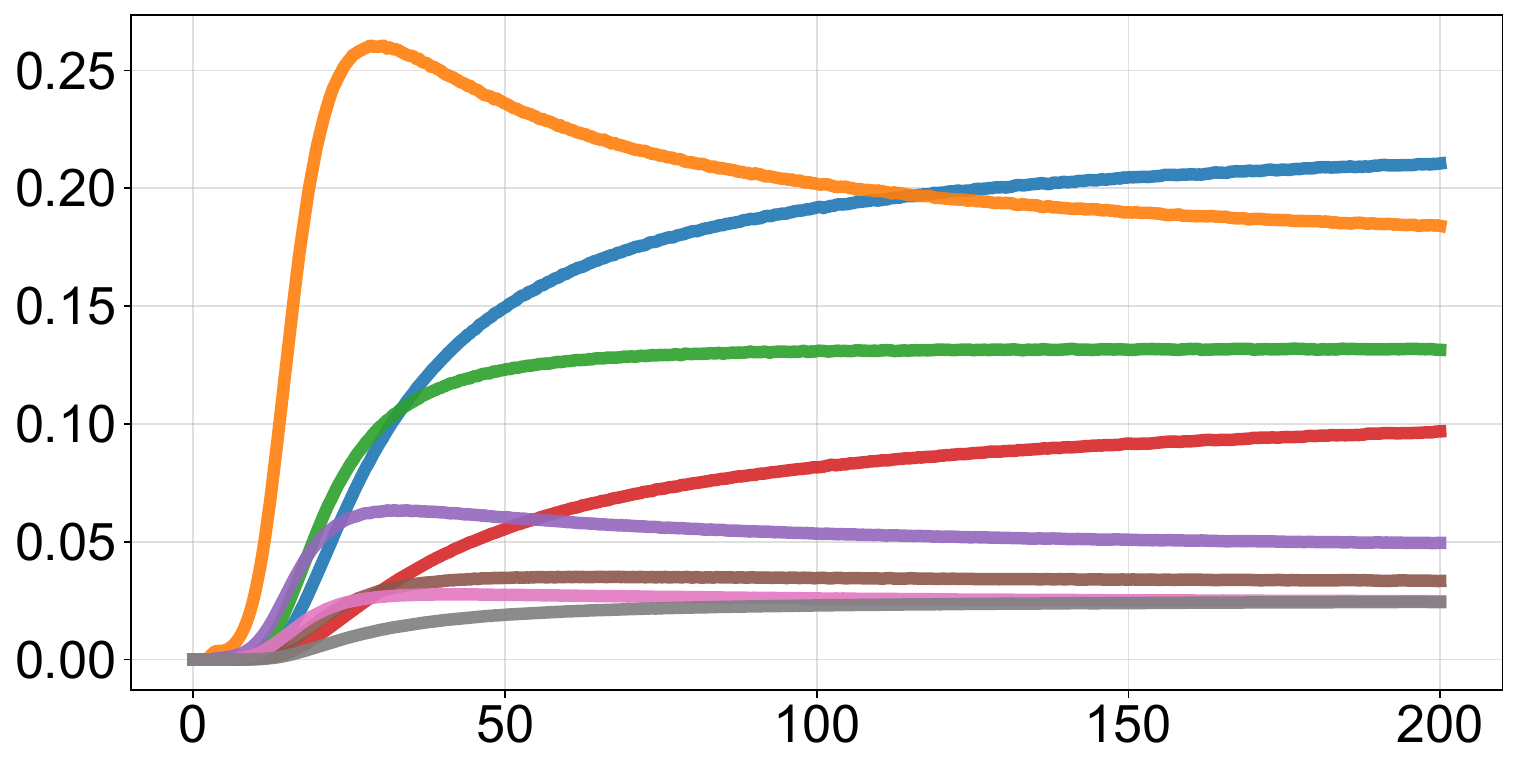}
\end{subfigure}\hfill
\begin{subfigure}[t]{0.19\textwidth}\centering
\scriptsize\textbf{Layer 24}\par\smallskip
\includegraphics[width=\linewidth]{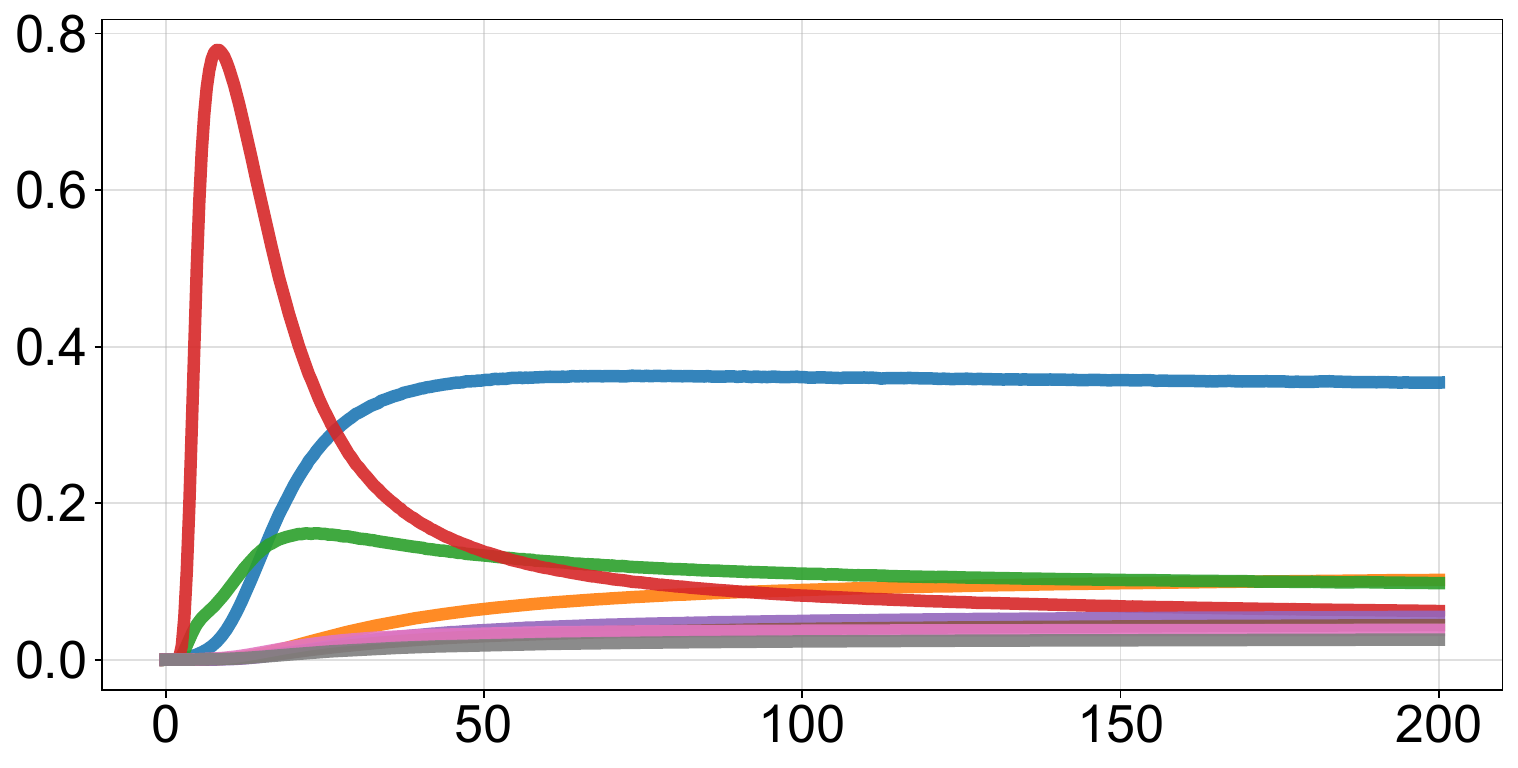}
\end{subfigure}
\caption{\label{fig:all-layers-next-token-gemma-joy}Next-token probability shifts $\Delta p(z, \alpha)$ for the concept joy when steering google/gemma-3-1b-it at every available layer individually, steering only the last-token representation $\hbf^{(\ell)}_{(-1)}$. Panels are ordered by layer index from left to right and top to bottom.}
\end{figure}
\clearpage
\begin{figure}[p]
\centering
\noindent{\small\textbf{Steered model:} Qwen/Qwen3-8B}\par\smallskip
\begin{subfigure}[t]{0.19\textwidth}\centering
\scriptsize\textbf{Layer 0}\par\smallskip
\includegraphics[width=\linewidth]{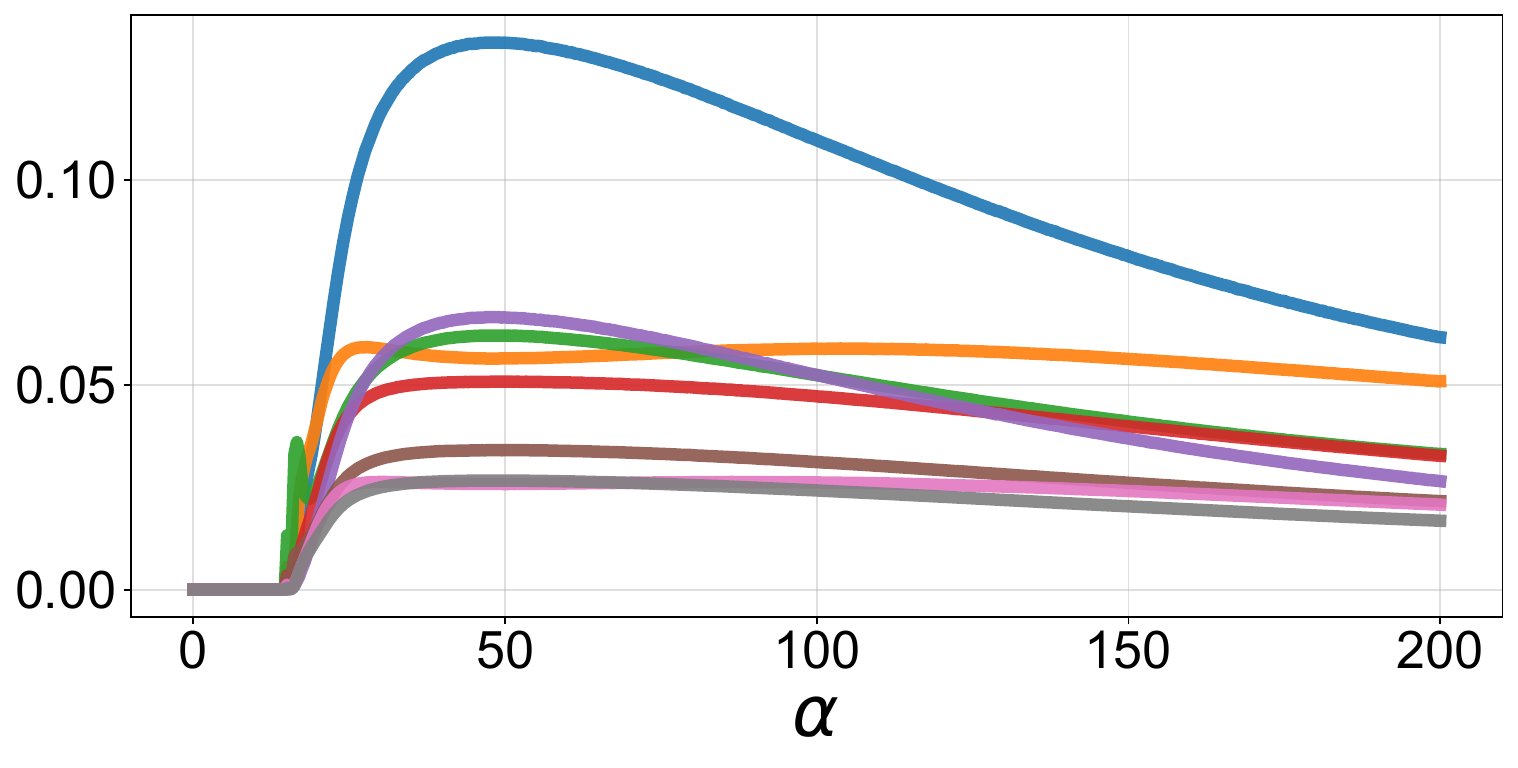}
\end{subfigure}\hfill
\begin{subfigure}[t]{0.19\textwidth}\centering
\scriptsize\textbf{Layer 1}\par\smallskip
\includegraphics[width=\linewidth]{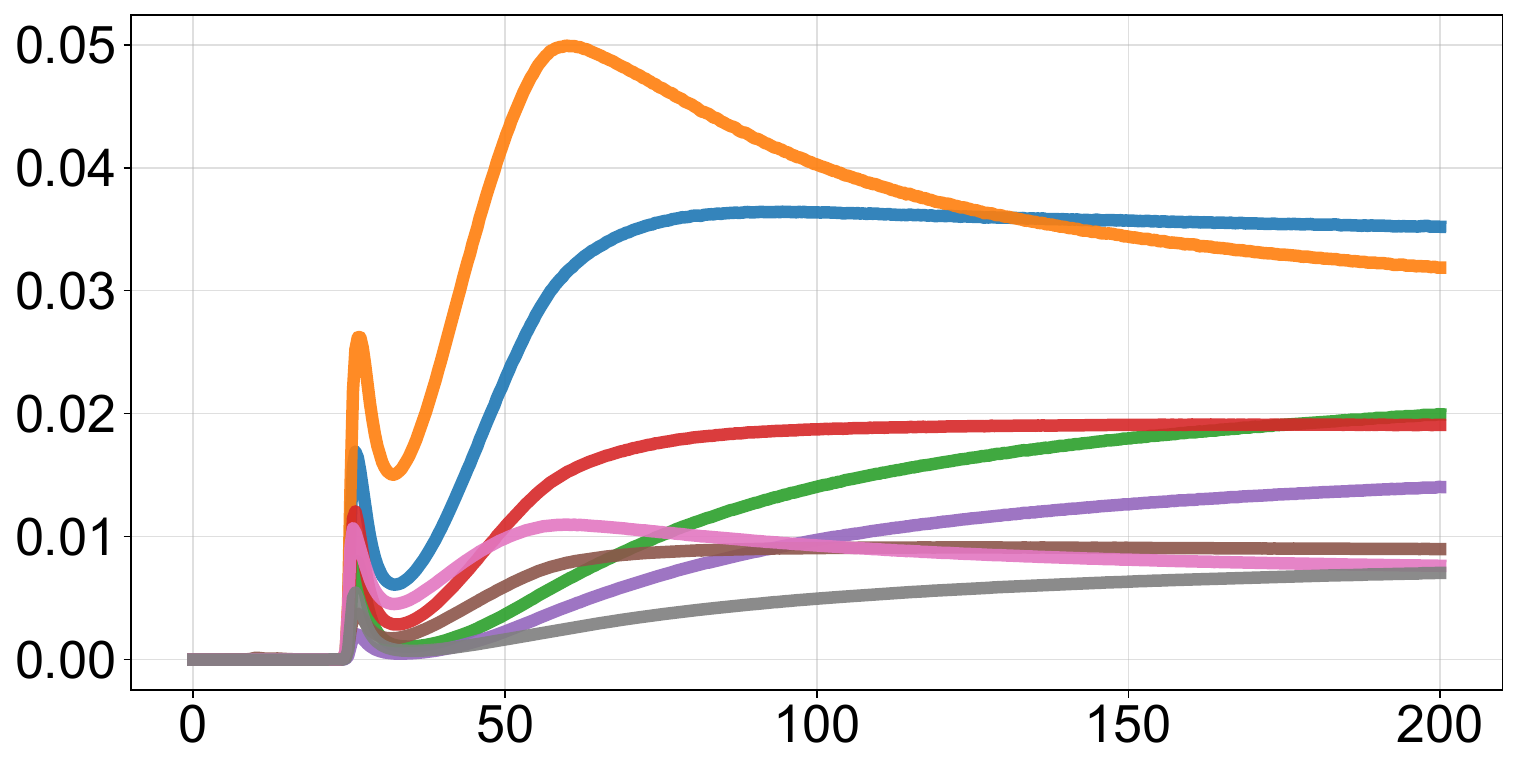}
\end{subfigure}\hfill
\begin{subfigure}[t]{0.19\textwidth}\centering
\scriptsize\textbf{Layer 2}\par\smallskip
\includegraphics[width=\linewidth]{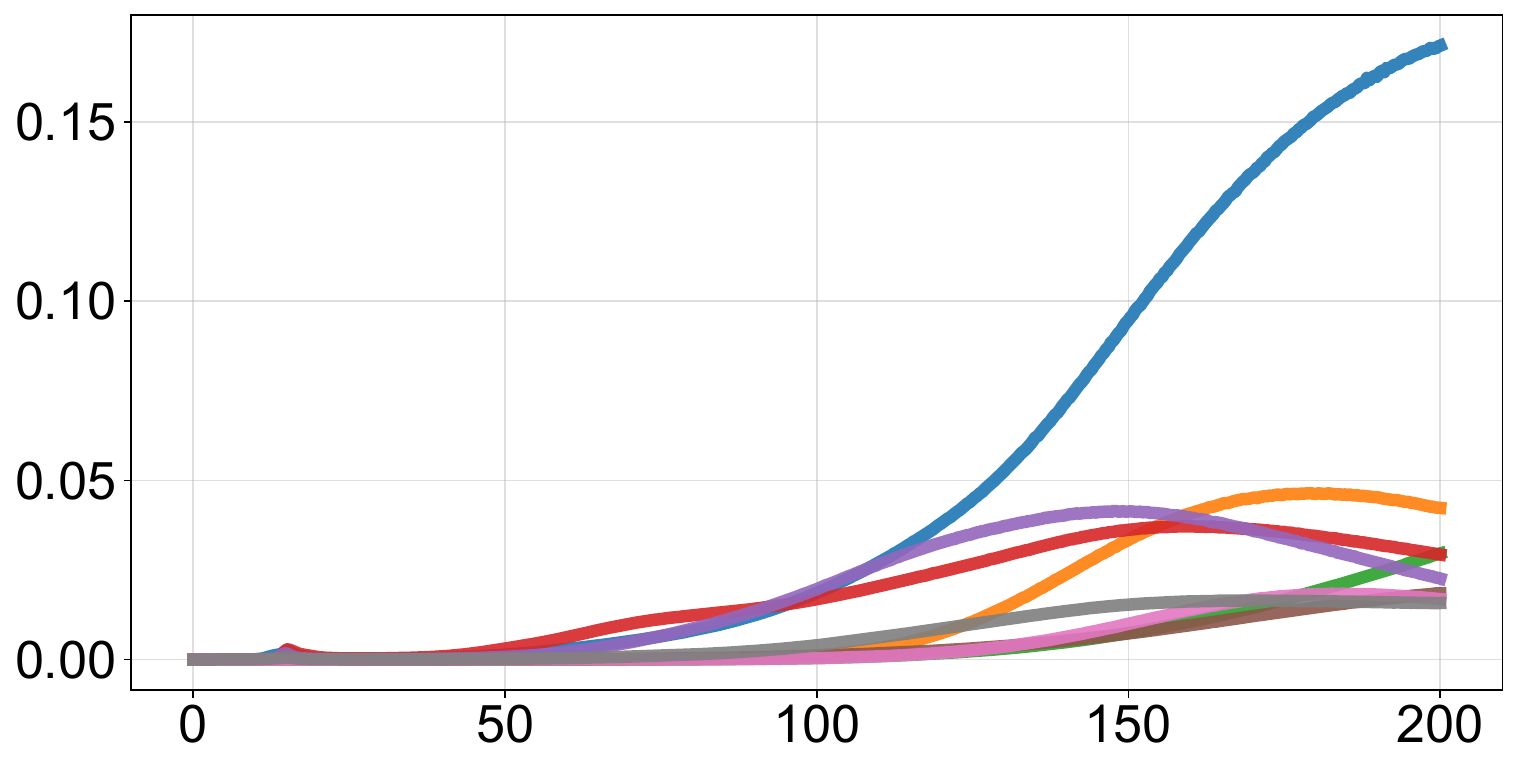}
\end{subfigure}\hfill
\begin{subfigure}[t]{0.19\textwidth}\centering
\scriptsize\textbf{Layer 3}\par\smallskip
\includegraphics[width=\linewidth]{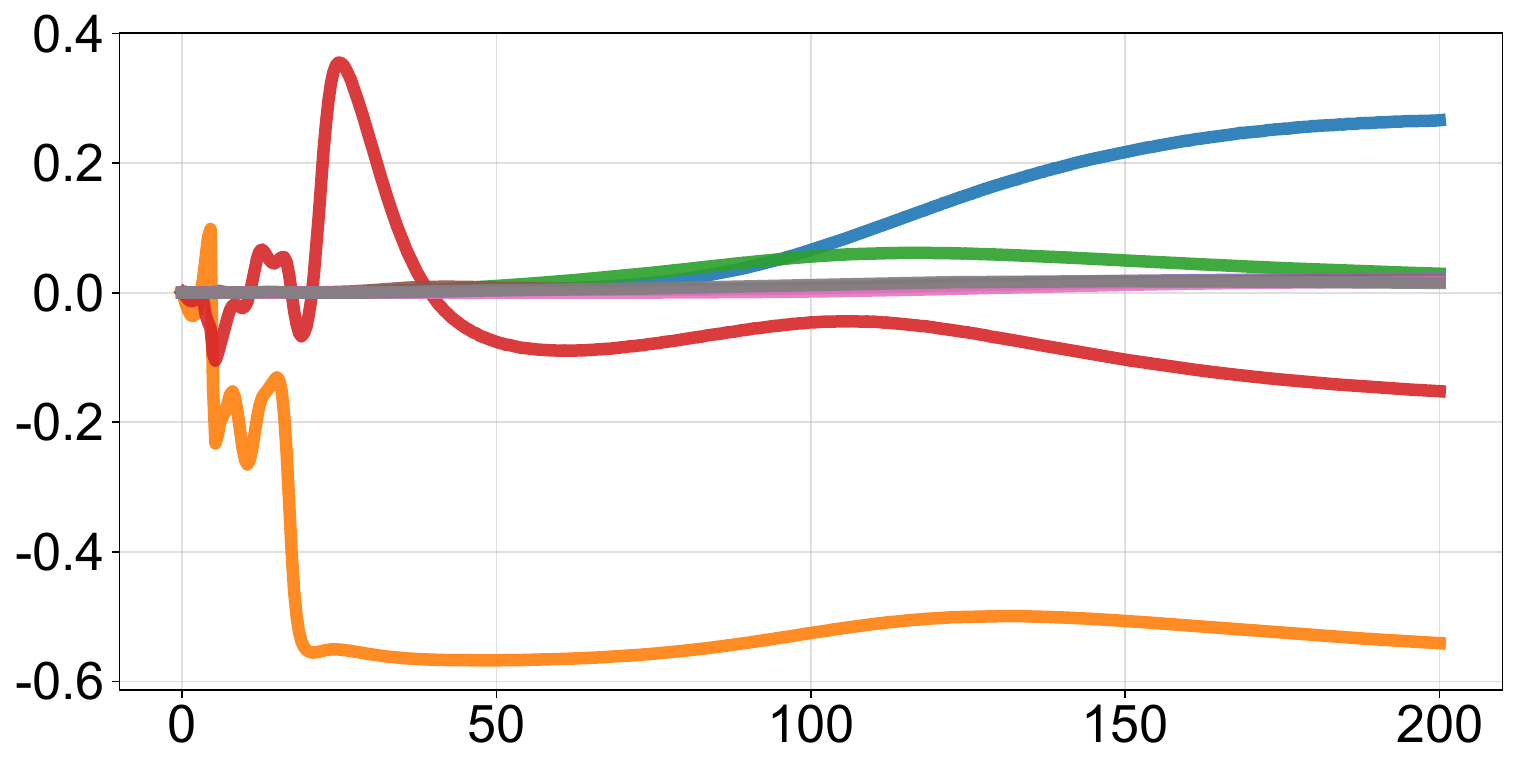}
\end{subfigure}\hfill
\begin{subfigure}[t]{0.19\textwidth}\centering
\scriptsize\textbf{Layer 4}\par\smallskip
\includegraphics[width=\linewidth]{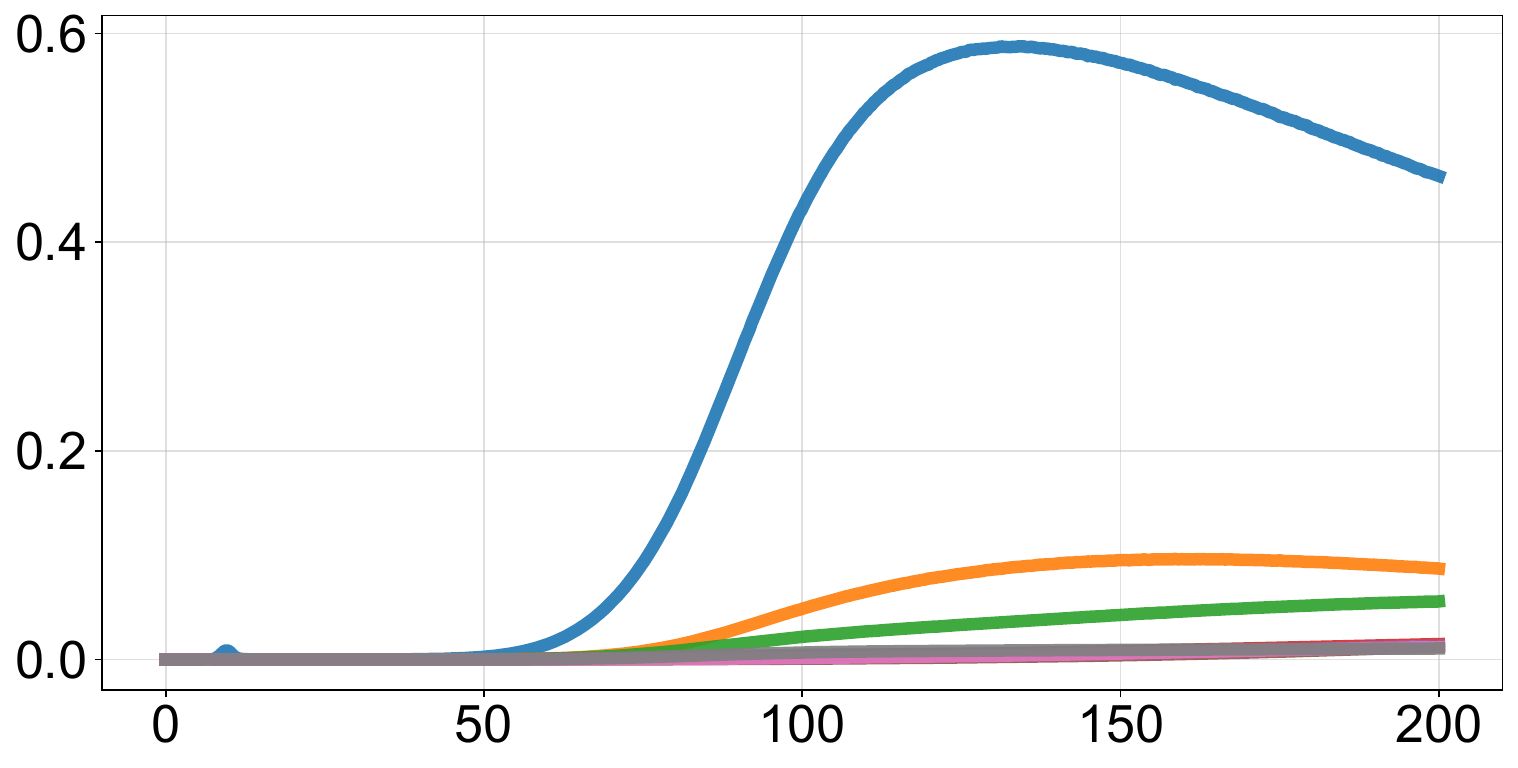}
\end{subfigure}
\begin{subfigure}[t]{0.19\textwidth}\centering
\scriptsize\textbf{Layer 5}\par\smallskip
\includegraphics[width=\linewidth]{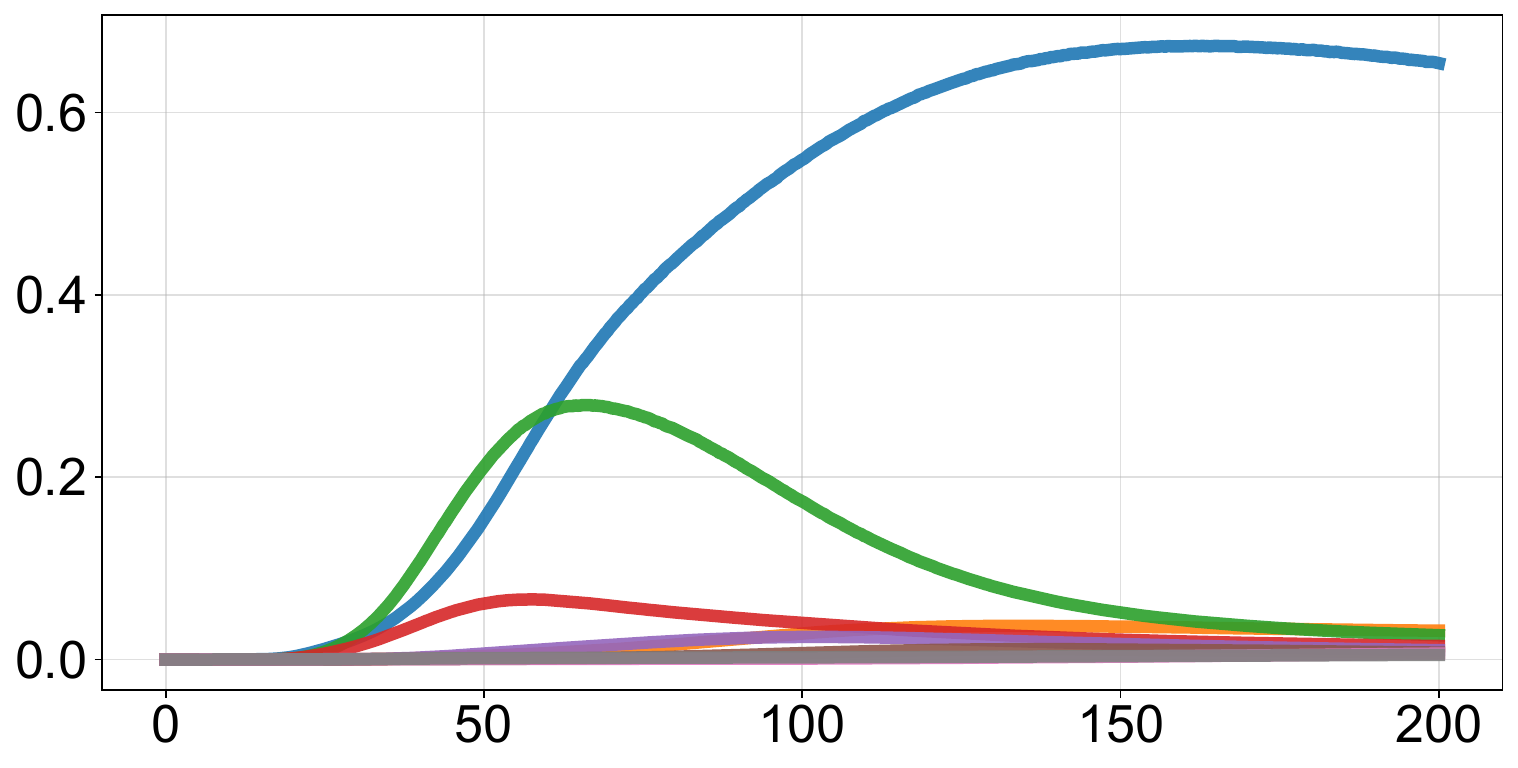}
\end{subfigure}\hfill
\begin{subfigure}[t]{0.19\textwidth}\centering
\scriptsize\textbf{Layer 6}\par\smallskip
\includegraphics[width=\linewidth]{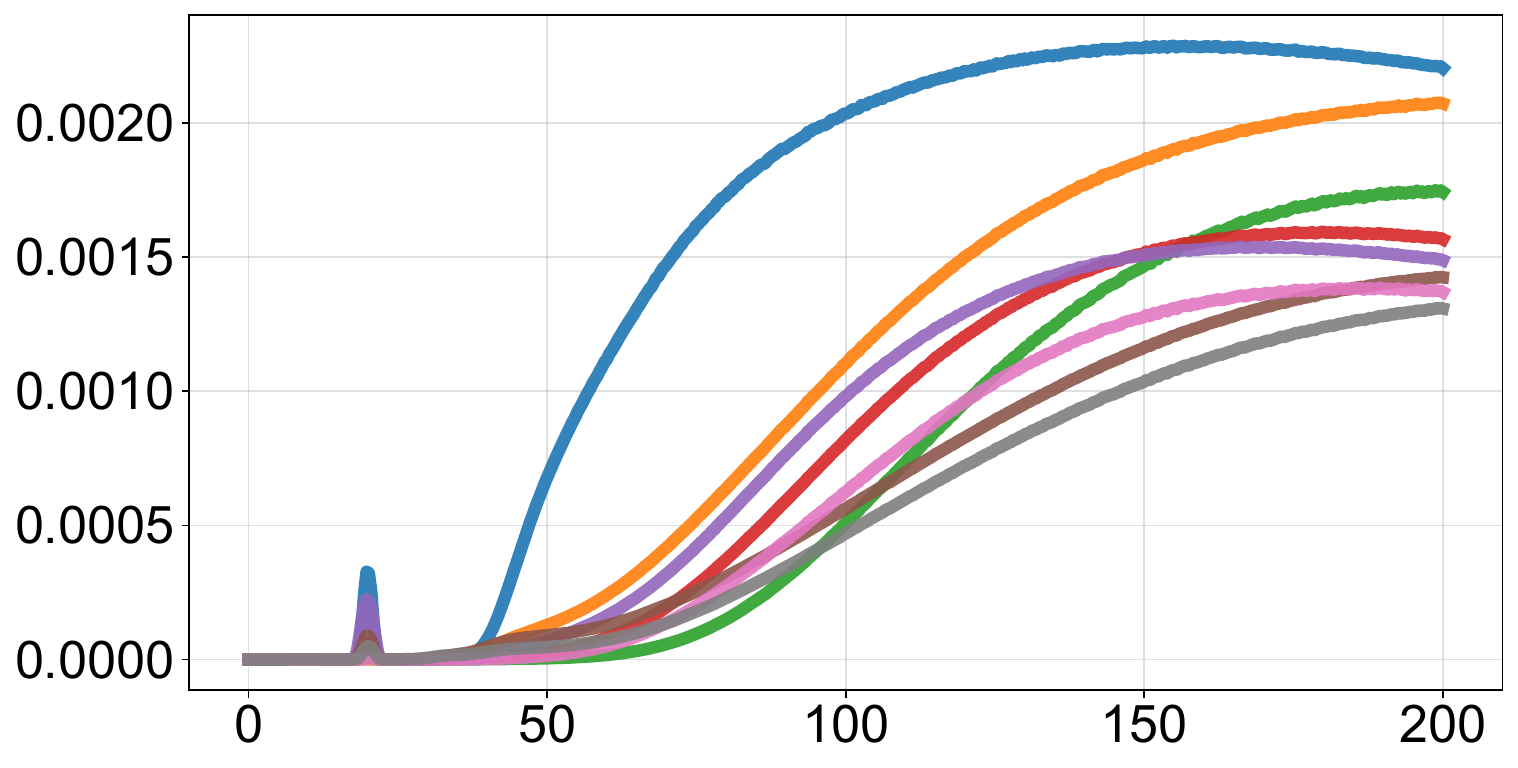}
\end{subfigure}\hfill
\begin{subfigure}[t]{0.19\textwidth}\centering
\scriptsize\textbf{Layer 7}\par\smallskip
\includegraphics[width=\linewidth]{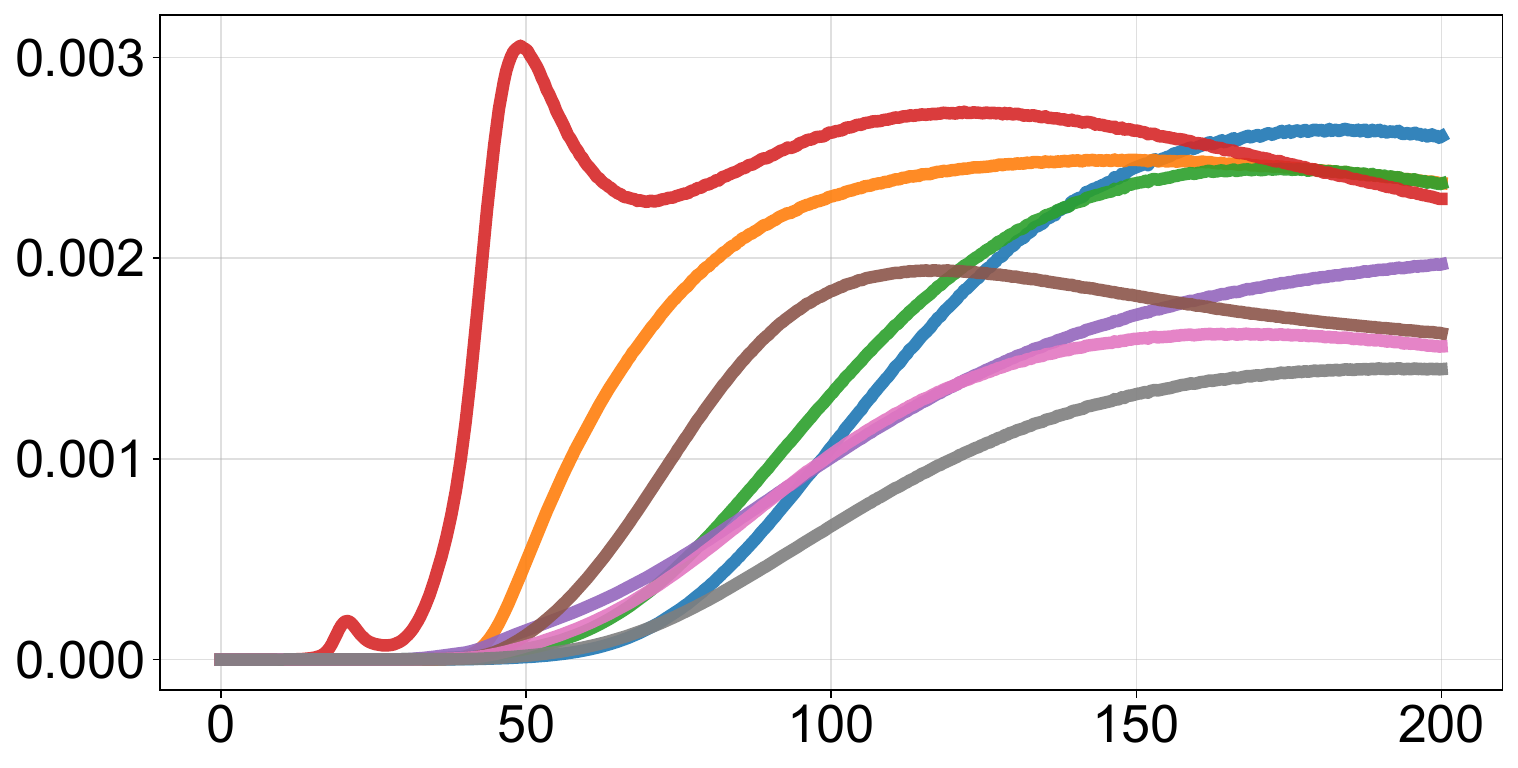}
\end{subfigure}\hfill
\begin{subfigure}[t]{0.19\textwidth}\centering
\scriptsize\textbf{Layer 8}\par\smallskip
\includegraphics[width=\linewidth]{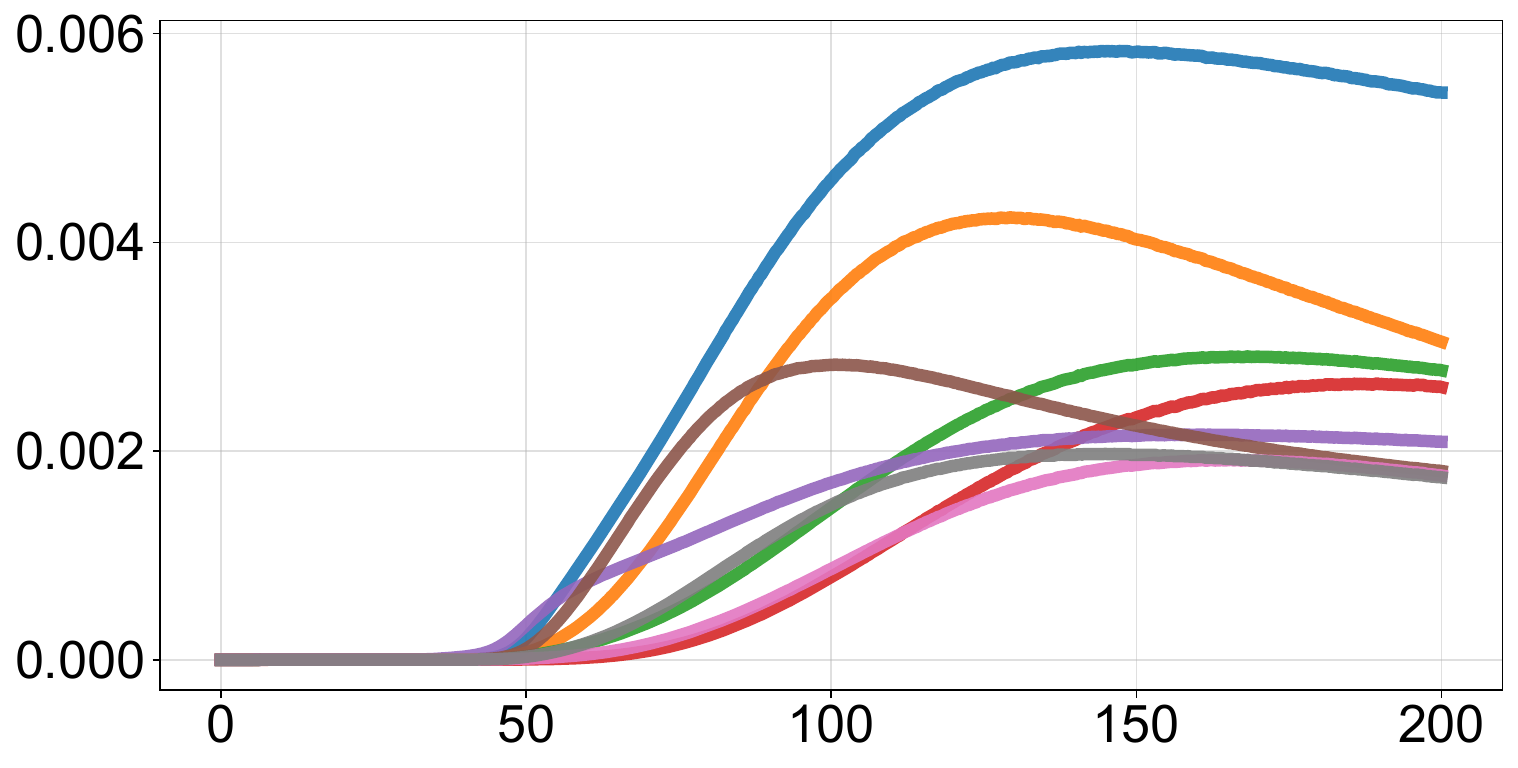}
\end{subfigure}\hfill
\begin{subfigure}[t]{0.19\textwidth}\centering
\scriptsize\textbf{Layer 9}\par\smallskip
\includegraphics[width=\linewidth]{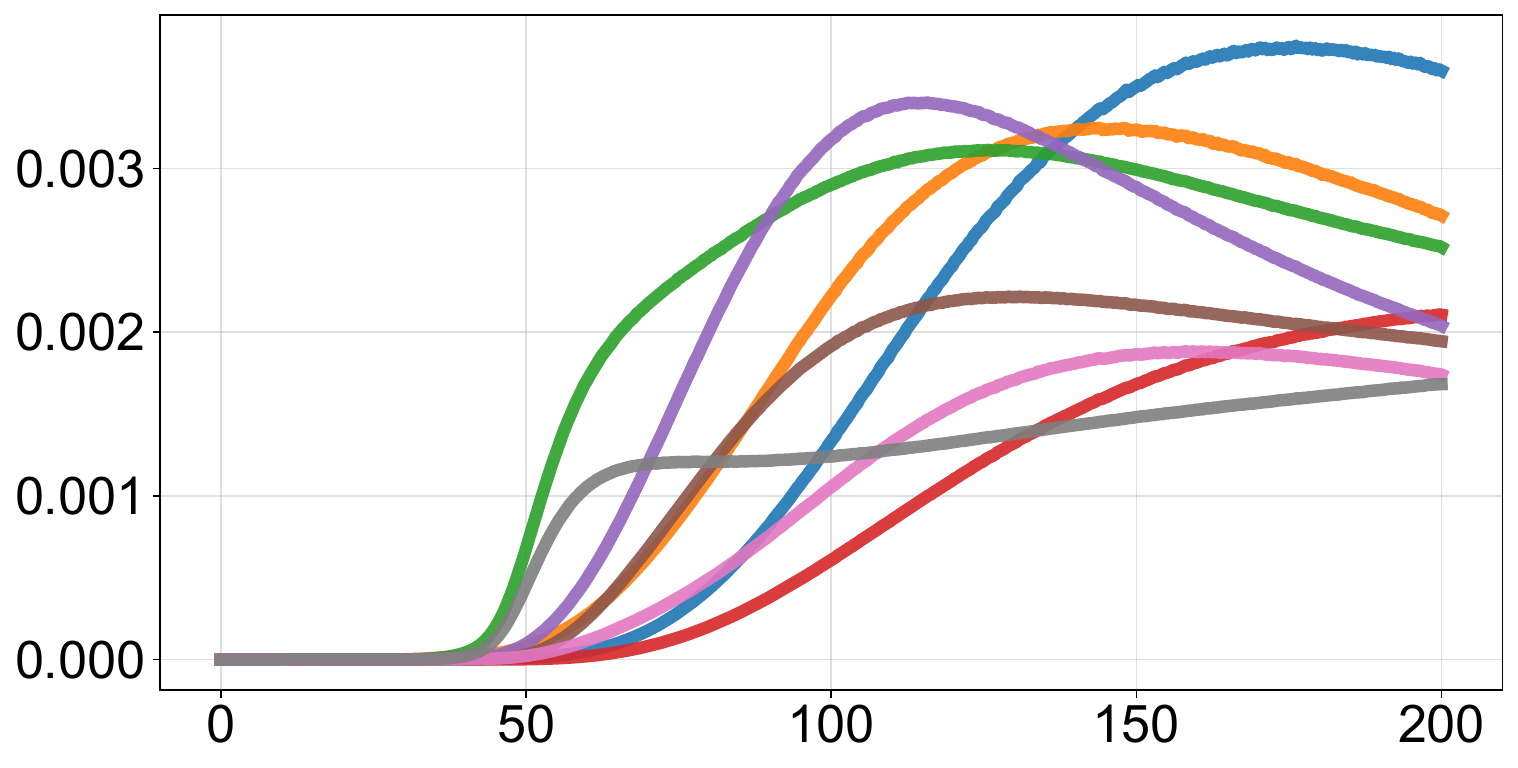}
\end{subfigure}
\begin{subfigure}[t]{0.19\textwidth}\centering
\scriptsize\textbf{Layer 10}\par\smallskip
\includegraphics[width=\linewidth]{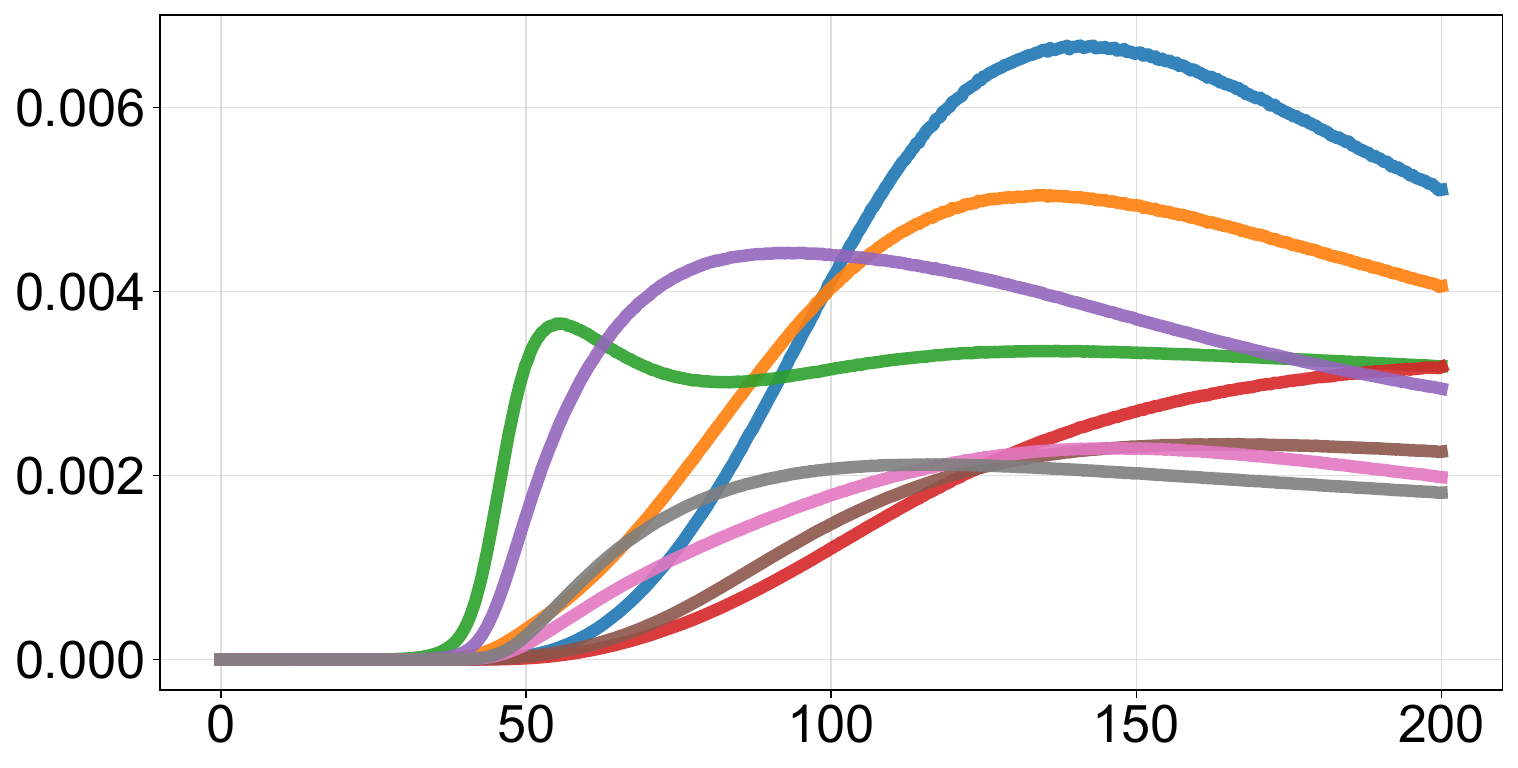}
\end{subfigure}\hfill
\begin{subfigure}[t]{0.19\textwidth}\centering
\scriptsize\textbf{Layer 11}\par\smallskip
\includegraphics[width=\linewidth]{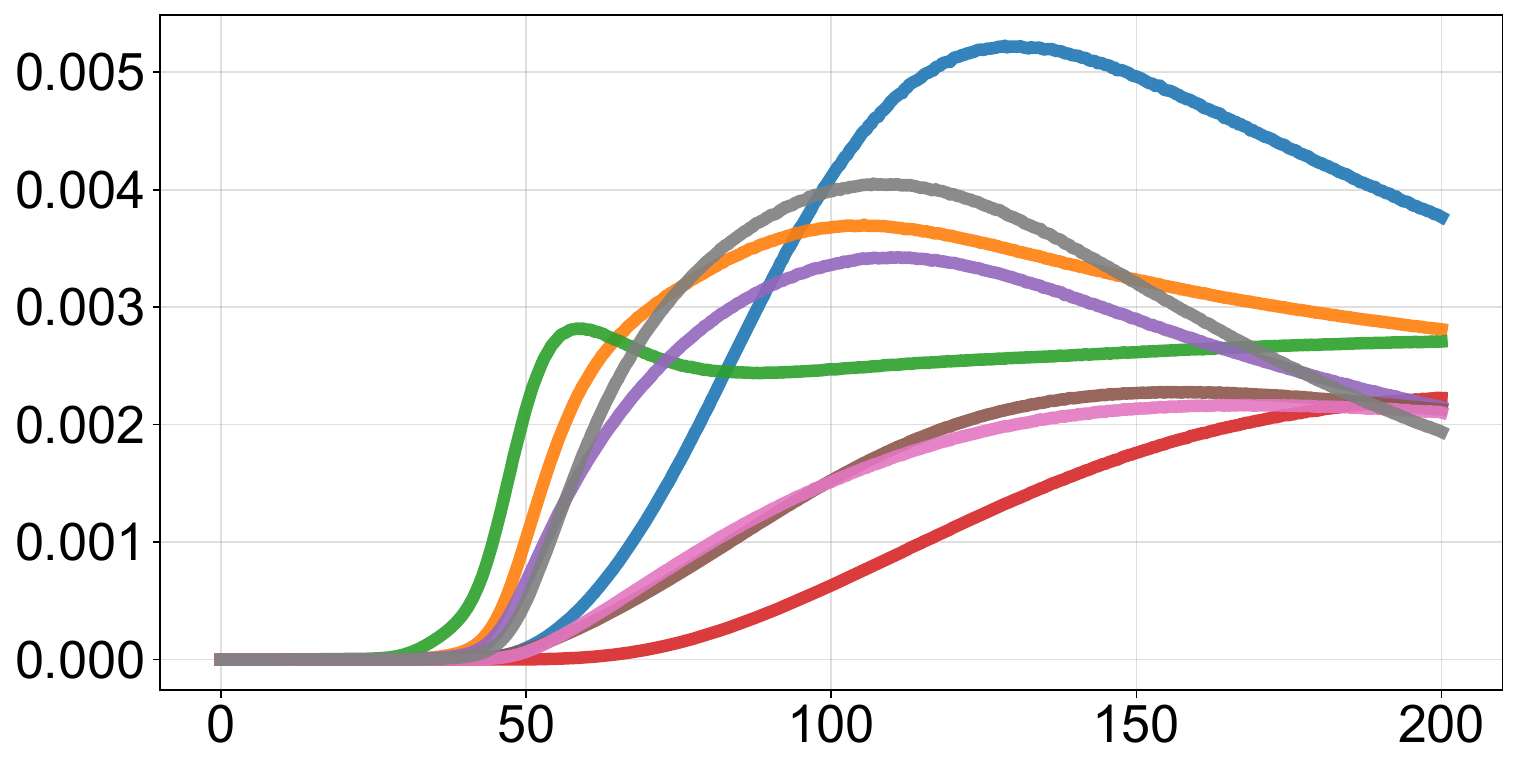}
\end{subfigure}\hfill
\begin{subfigure}[t]{0.19\textwidth}\centering
\scriptsize\textbf{Layer 12}\par\smallskip
\includegraphics[width=\linewidth]{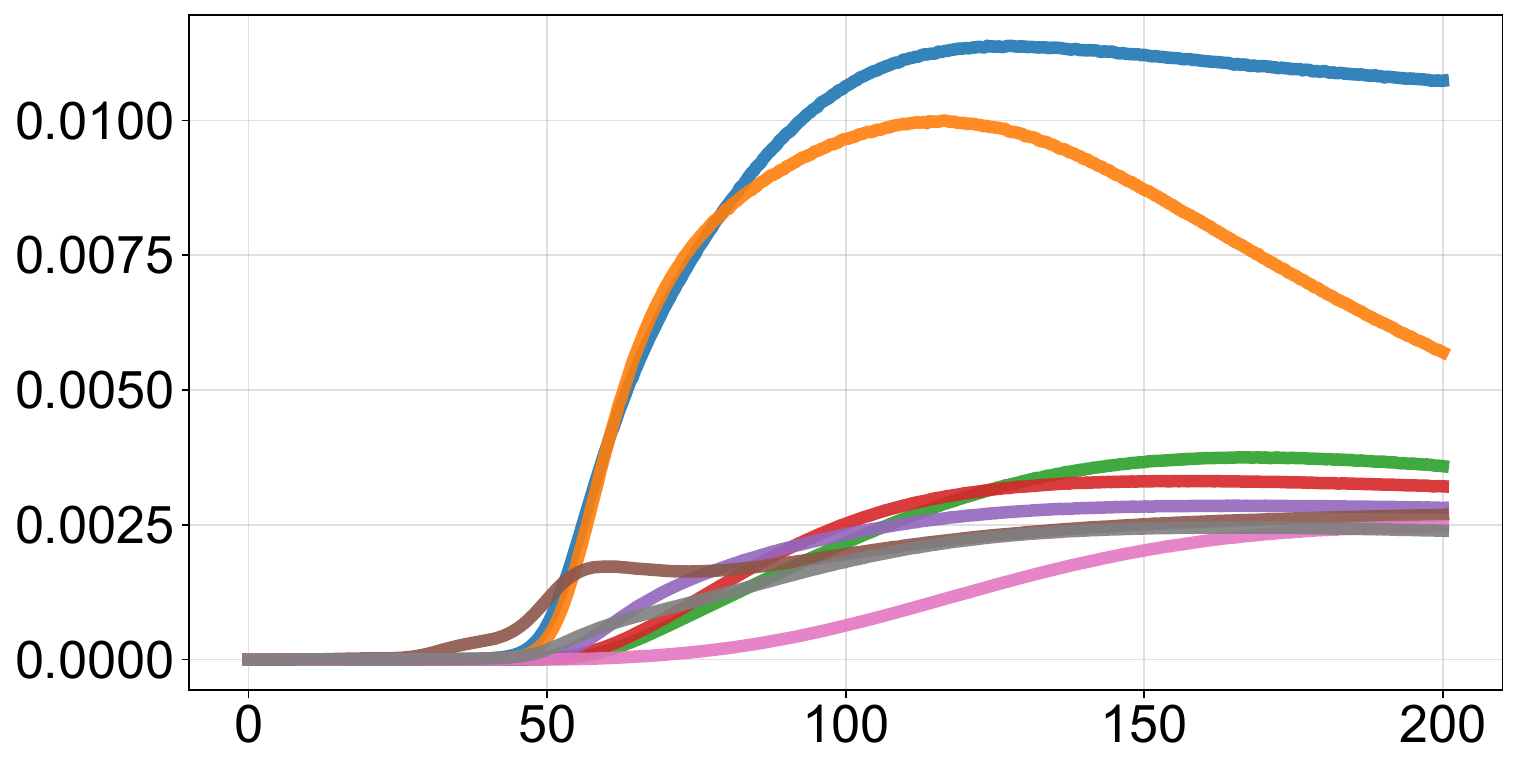}
\end{subfigure}\hfill
\begin{subfigure}[t]{0.19\textwidth}\centering
\scriptsize\textbf{Layer 13}\par\smallskip
\includegraphics[width=\linewidth]{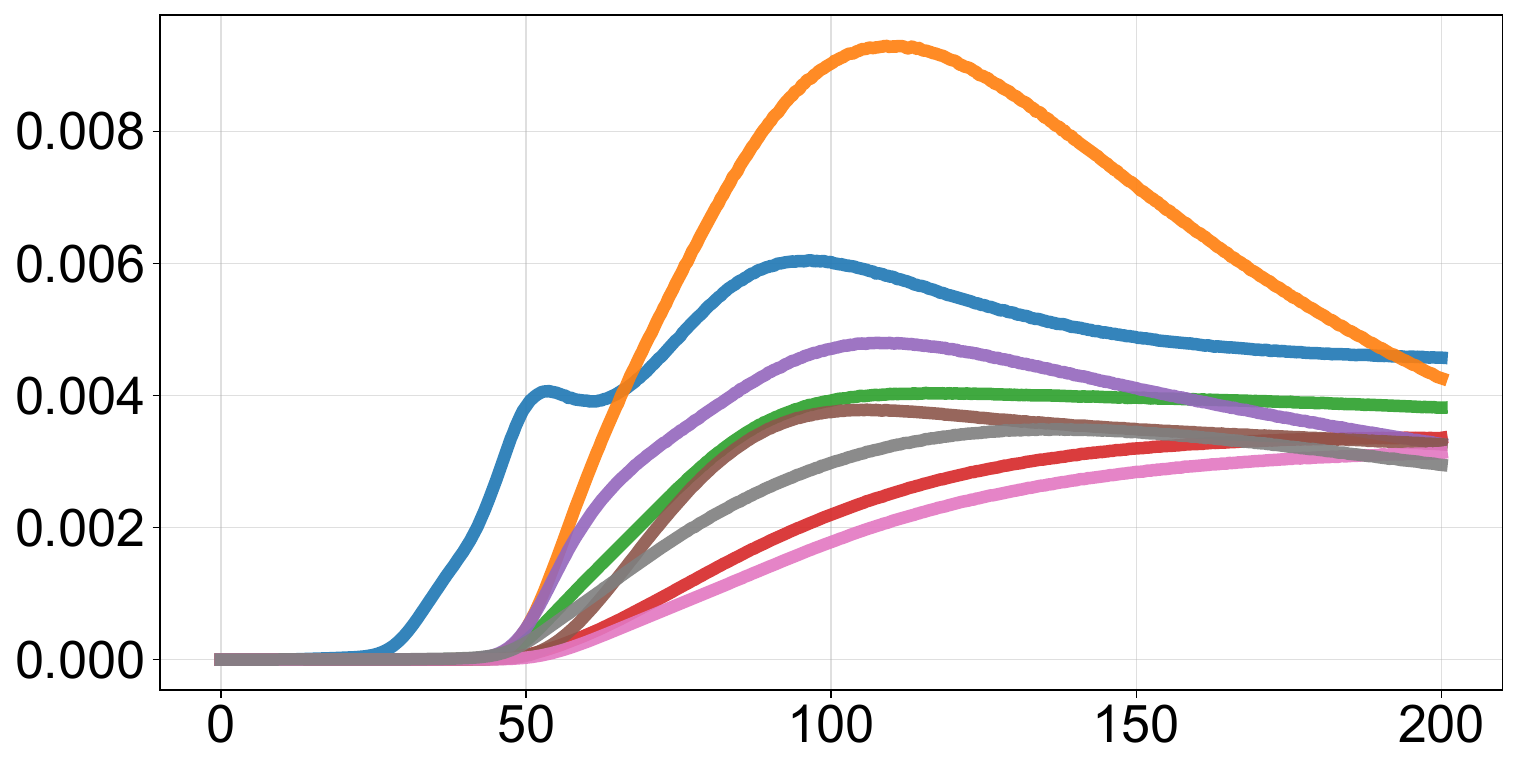}
\end{subfigure}\hfill
\begin{subfigure}[t]{0.19\textwidth}\centering
\scriptsize\textbf{Layer 14}\par\smallskip
\includegraphics[width=\linewidth]{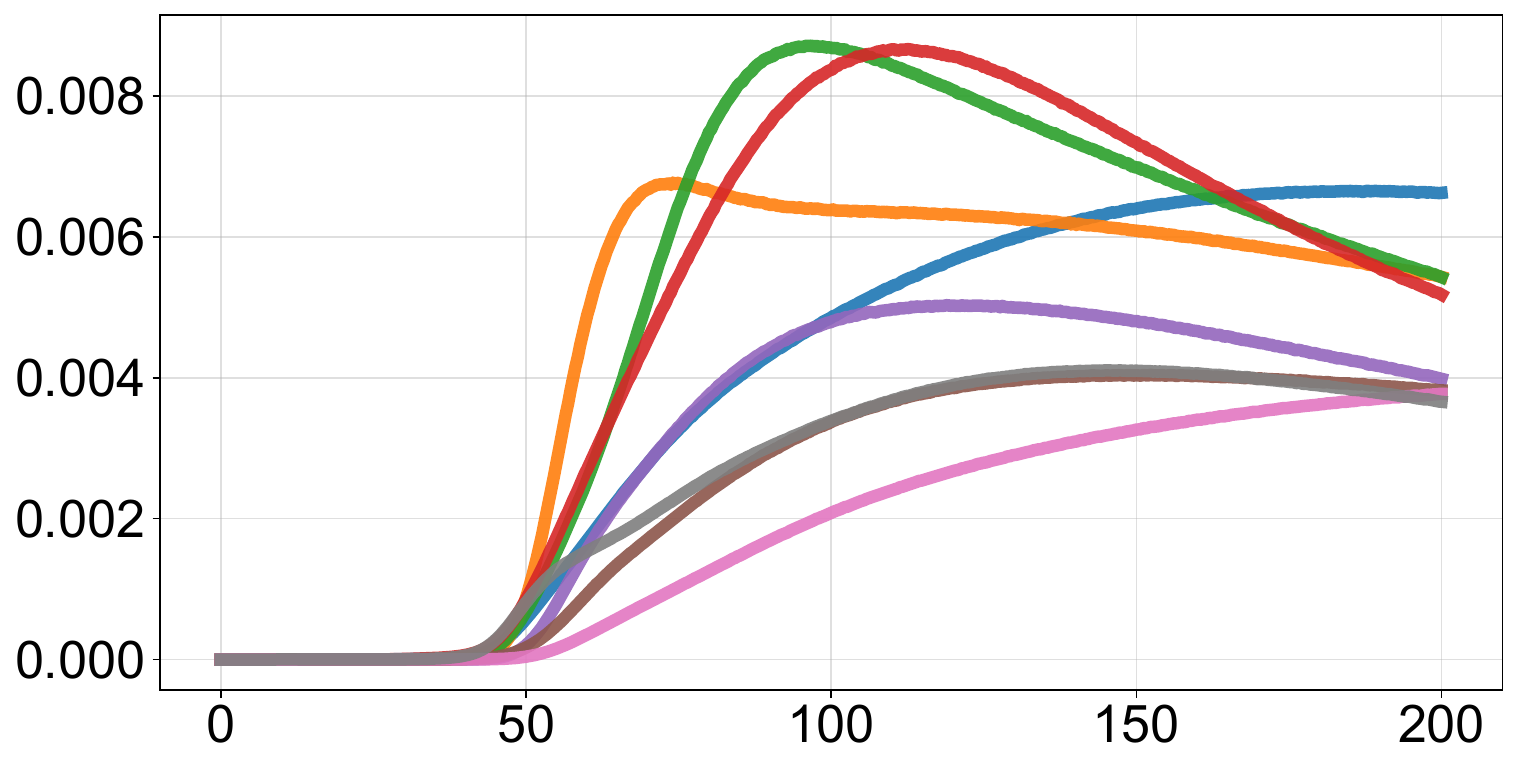}
\end{subfigure}
\begin{subfigure}[t]{0.19\textwidth}\centering
\scriptsize\textbf{Layer 15}\par\smallskip
\includegraphics[width=\linewidth]{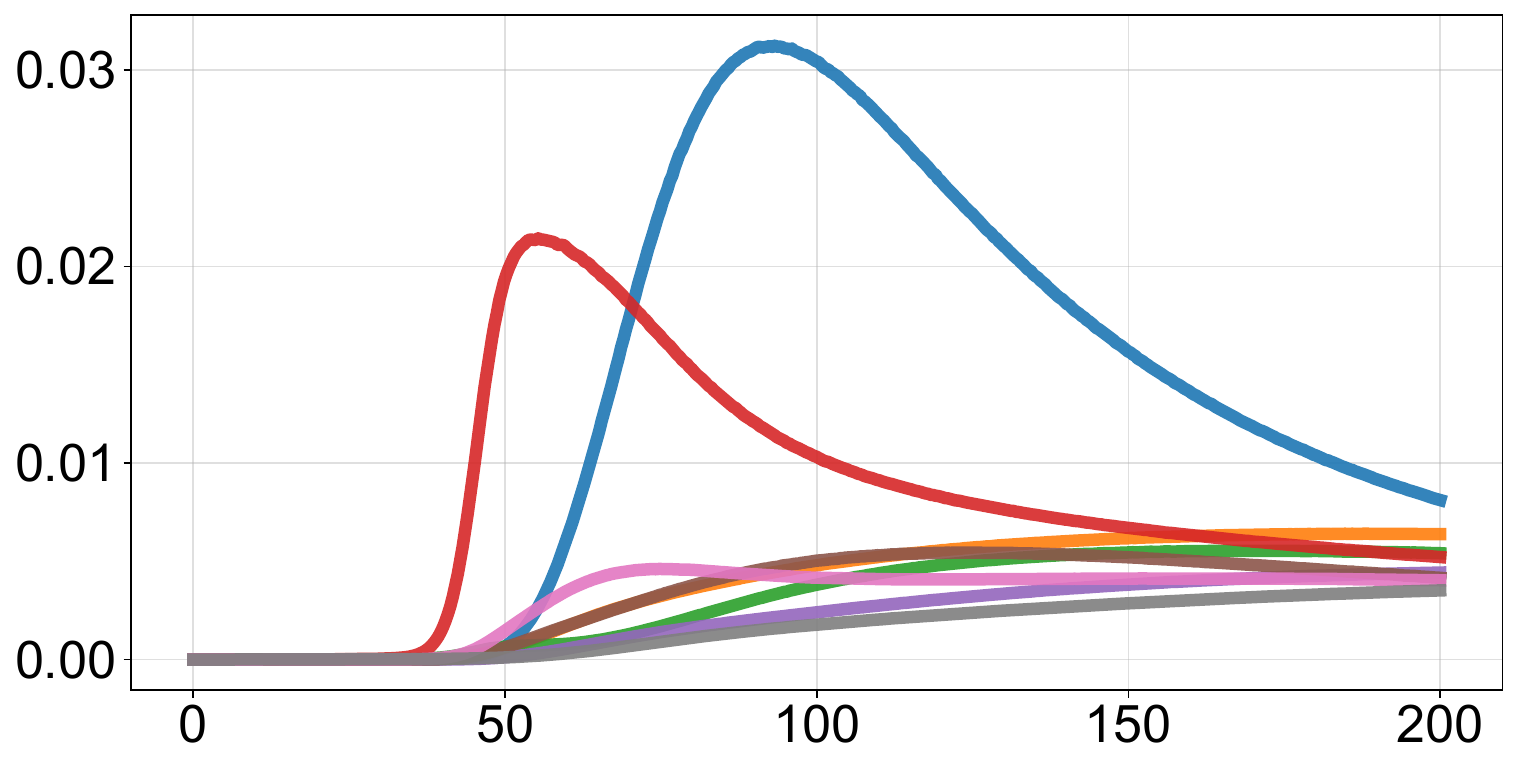}
\end{subfigure}\hfill
\begin{subfigure}[t]{0.19\textwidth}\centering
\scriptsize\textbf{Layer 16}\par\smallskip
\includegraphics[width=\linewidth]{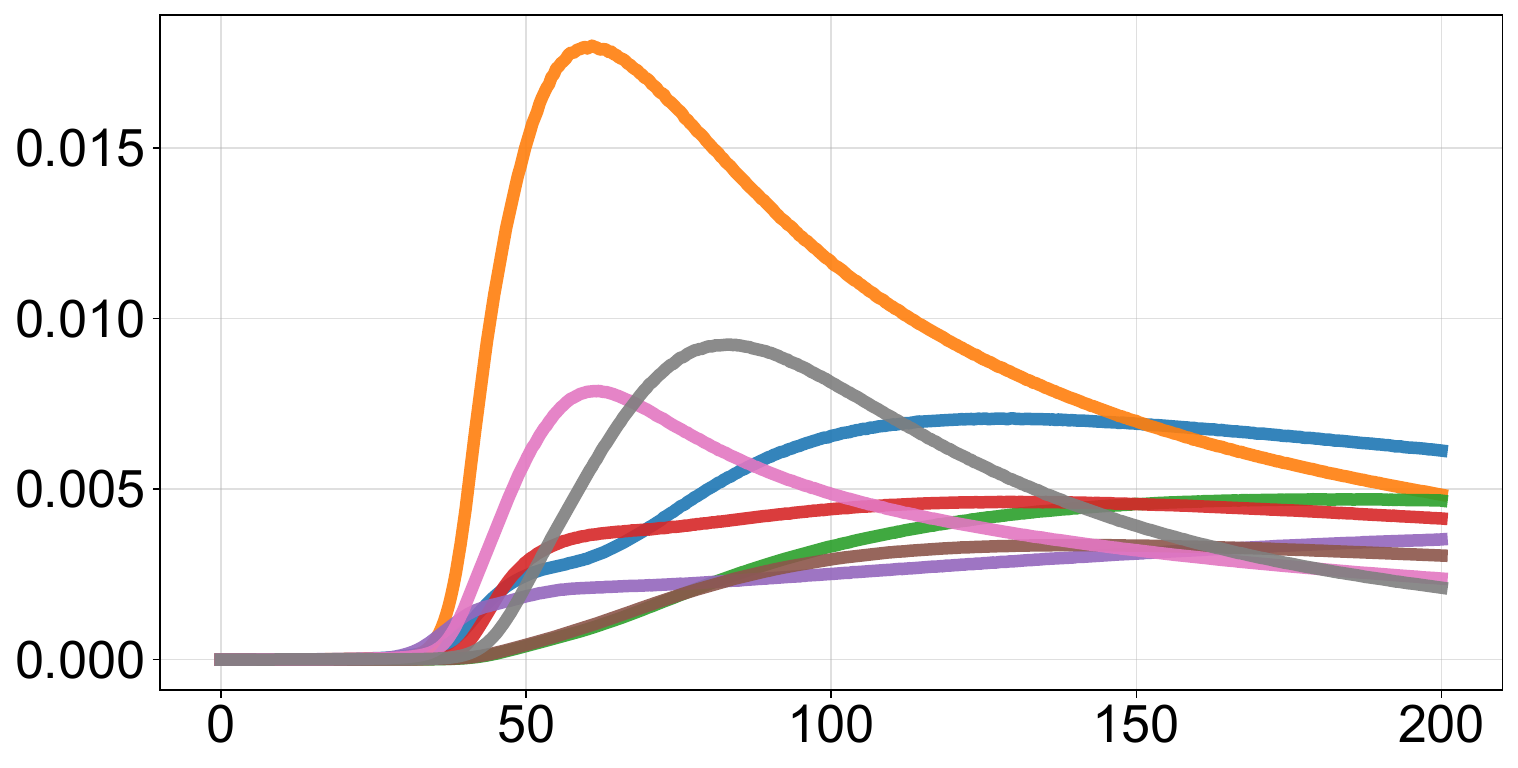}
\end{subfigure}\hfill
\begin{subfigure}[t]{0.19\textwidth}\centering
\scriptsize\textbf{Layer 17}\par\smallskip
\includegraphics[width=\linewidth]{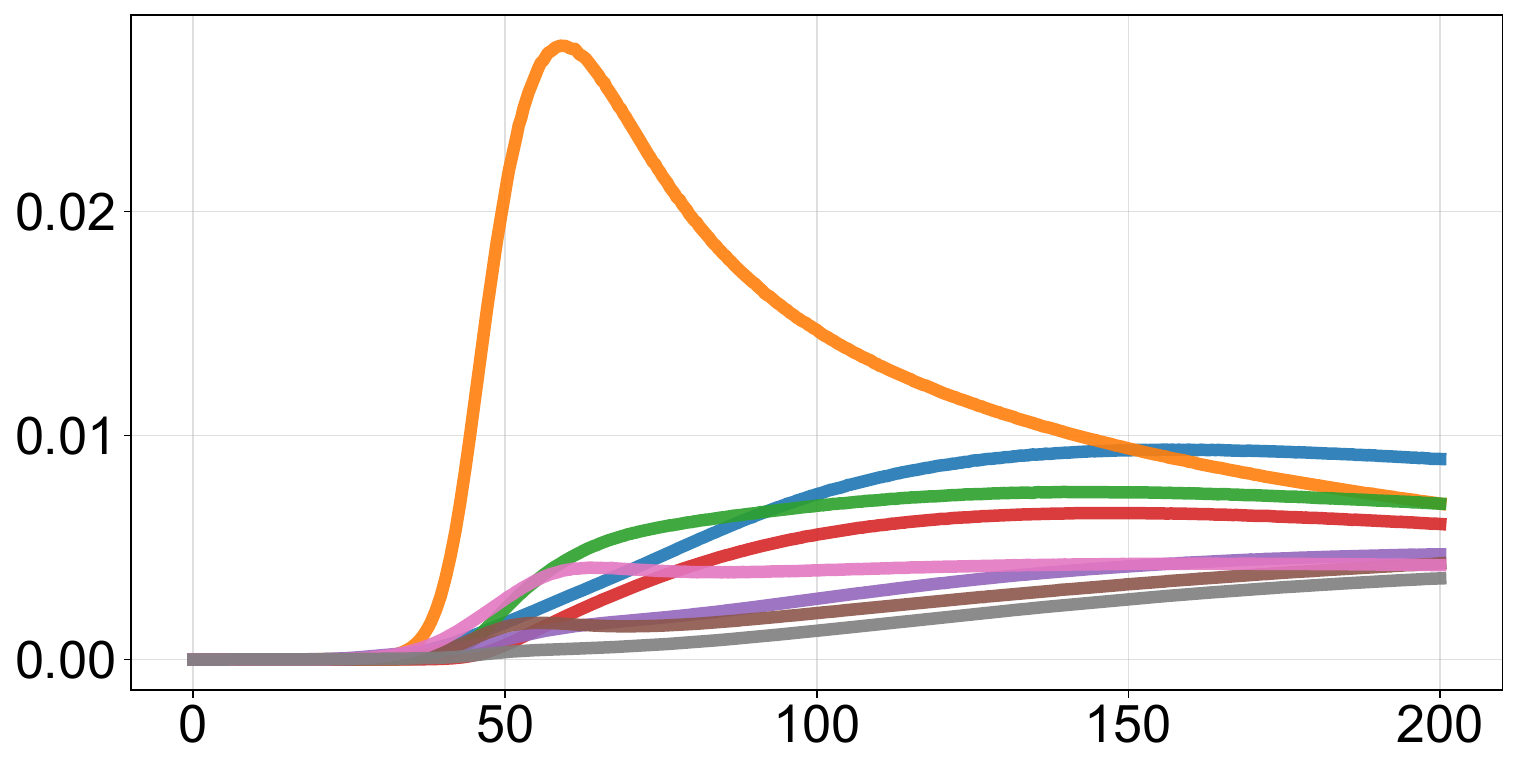}
\end{subfigure}\hfill
\begin{subfigure}[t]{0.19\textwidth}\centering
\scriptsize\textbf{Layer 18}\par\smallskip
\includegraphics[width=\linewidth]{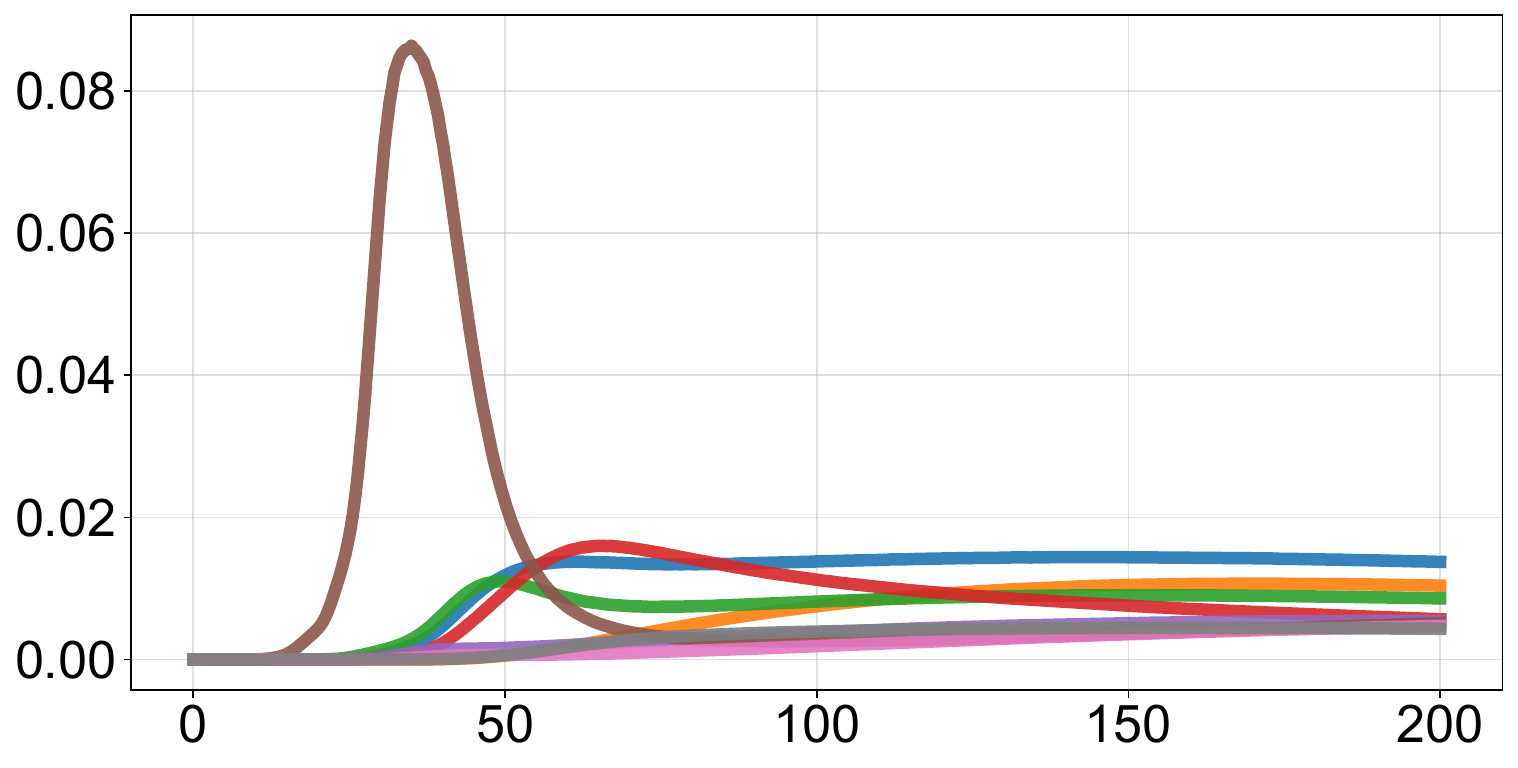}
\end{subfigure}\hfill
\begin{subfigure}[t]{0.19\textwidth}\centering
\scriptsize\textbf{Layer 19}\par\smallskip
\includegraphics[width=\linewidth]{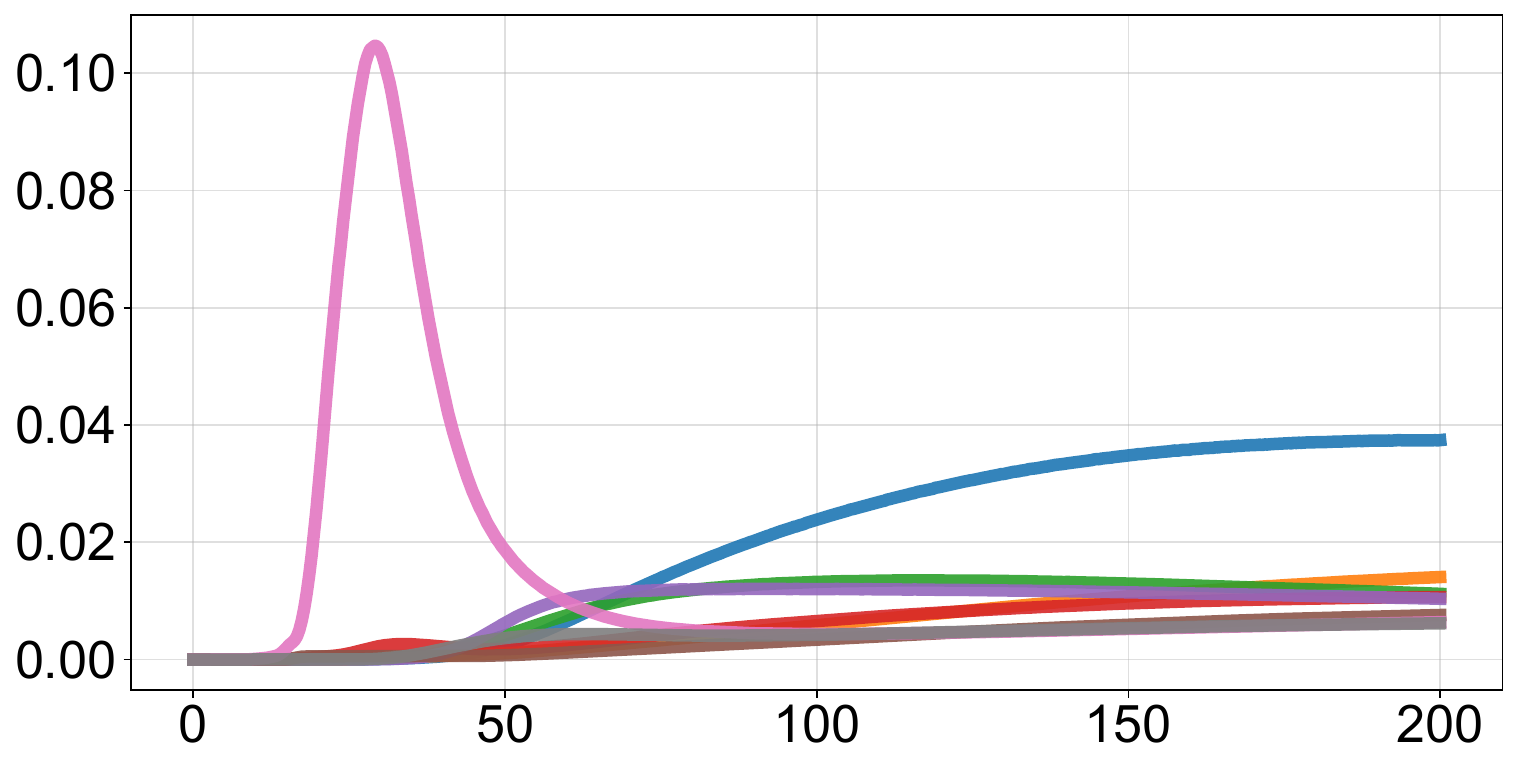}
\end{subfigure}
\begin{subfigure}[t]{0.19\textwidth}\centering
\scriptsize\textbf{Layer 20}\par\smallskip
\includegraphics[width=\linewidth]{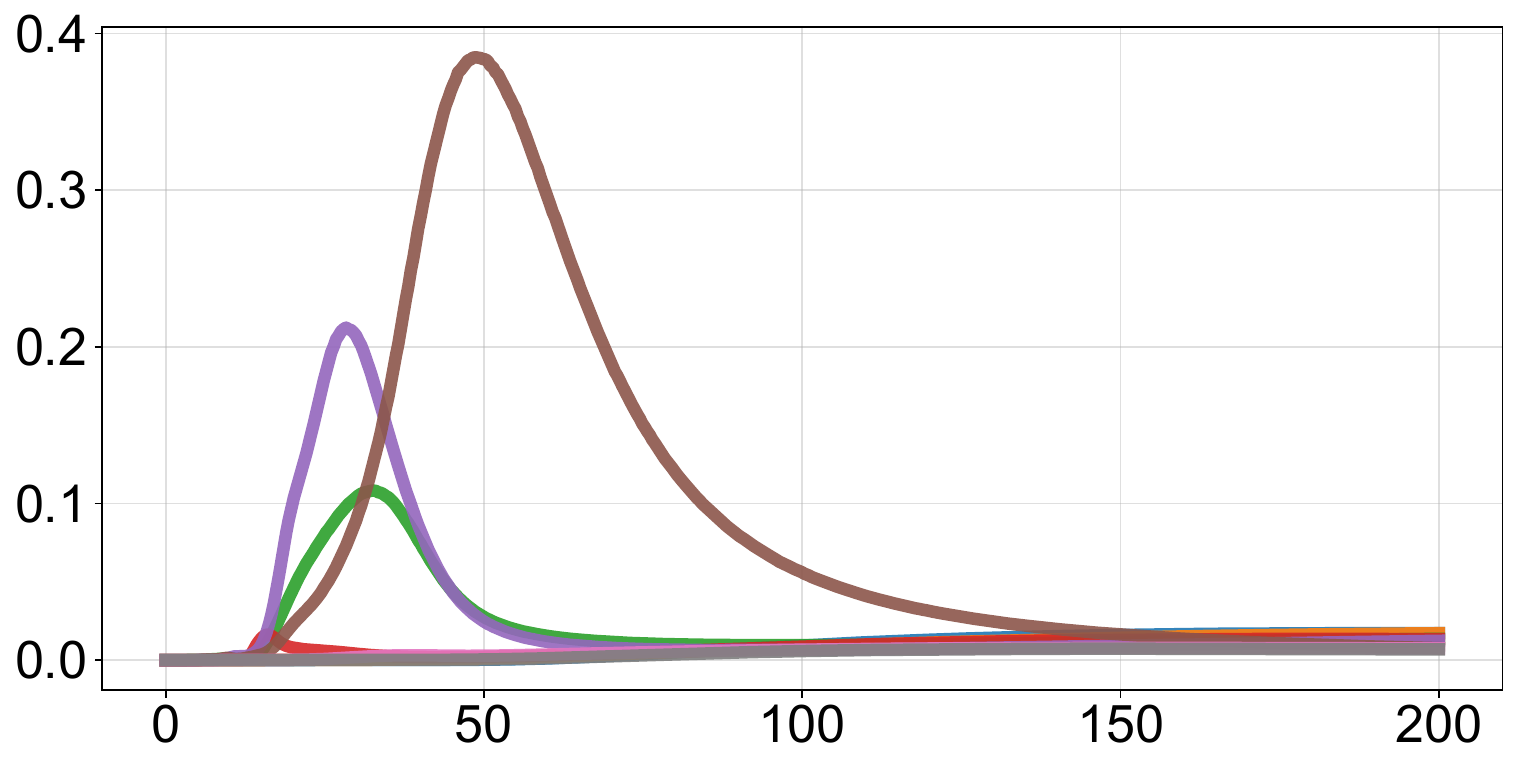}
\end{subfigure}\hfill
\begin{subfigure}[t]{0.19\textwidth}\centering
\scriptsize\textbf{Layer 21}\par\smallskip
\includegraphics[width=\linewidth]{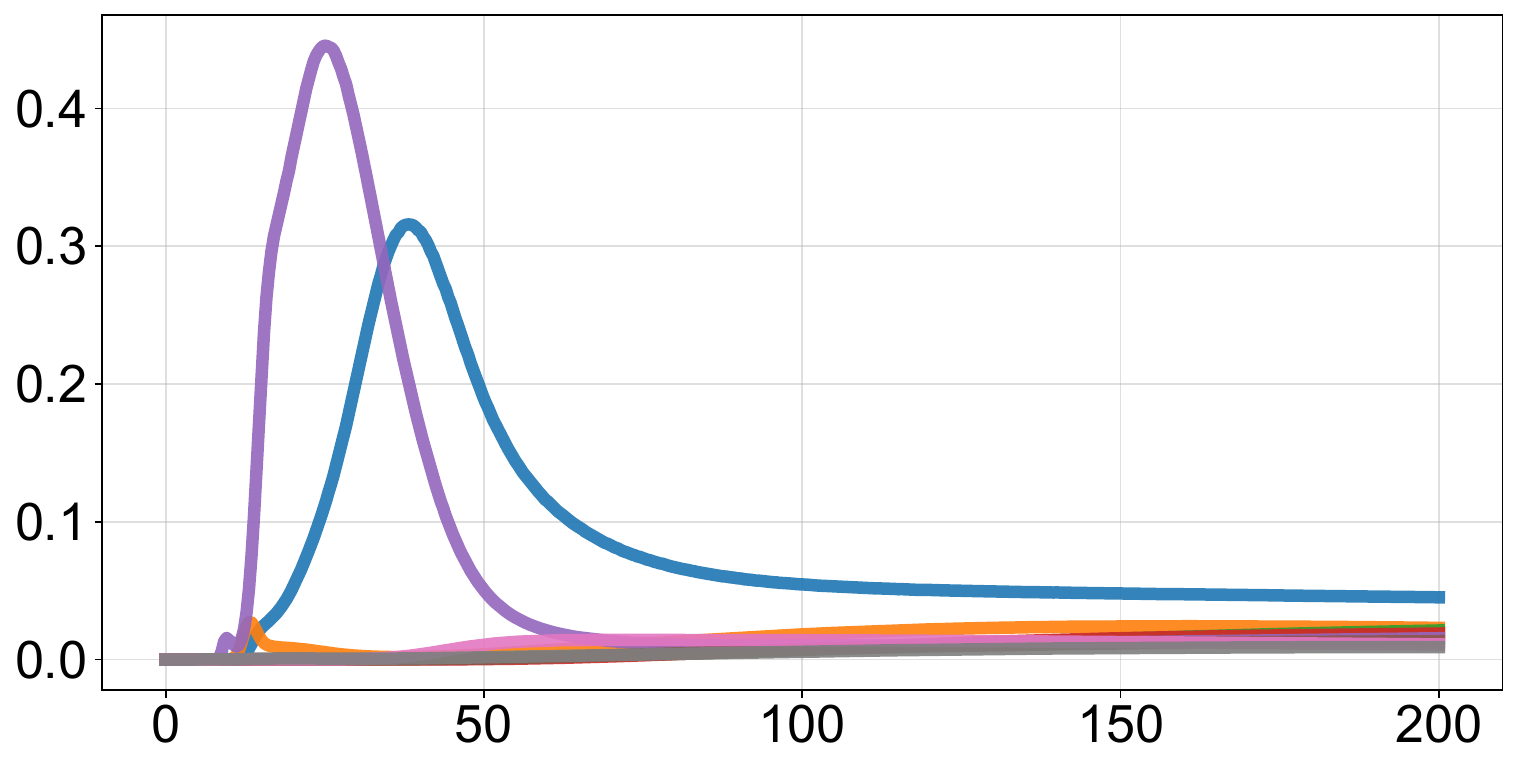}
\end{subfigure}\hfill
\begin{subfigure}[t]{0.19\textwidth}\centering
\scriptsize\textbf{Layer 22}\par\smallskip
\includegraphics[width=\linewidth]{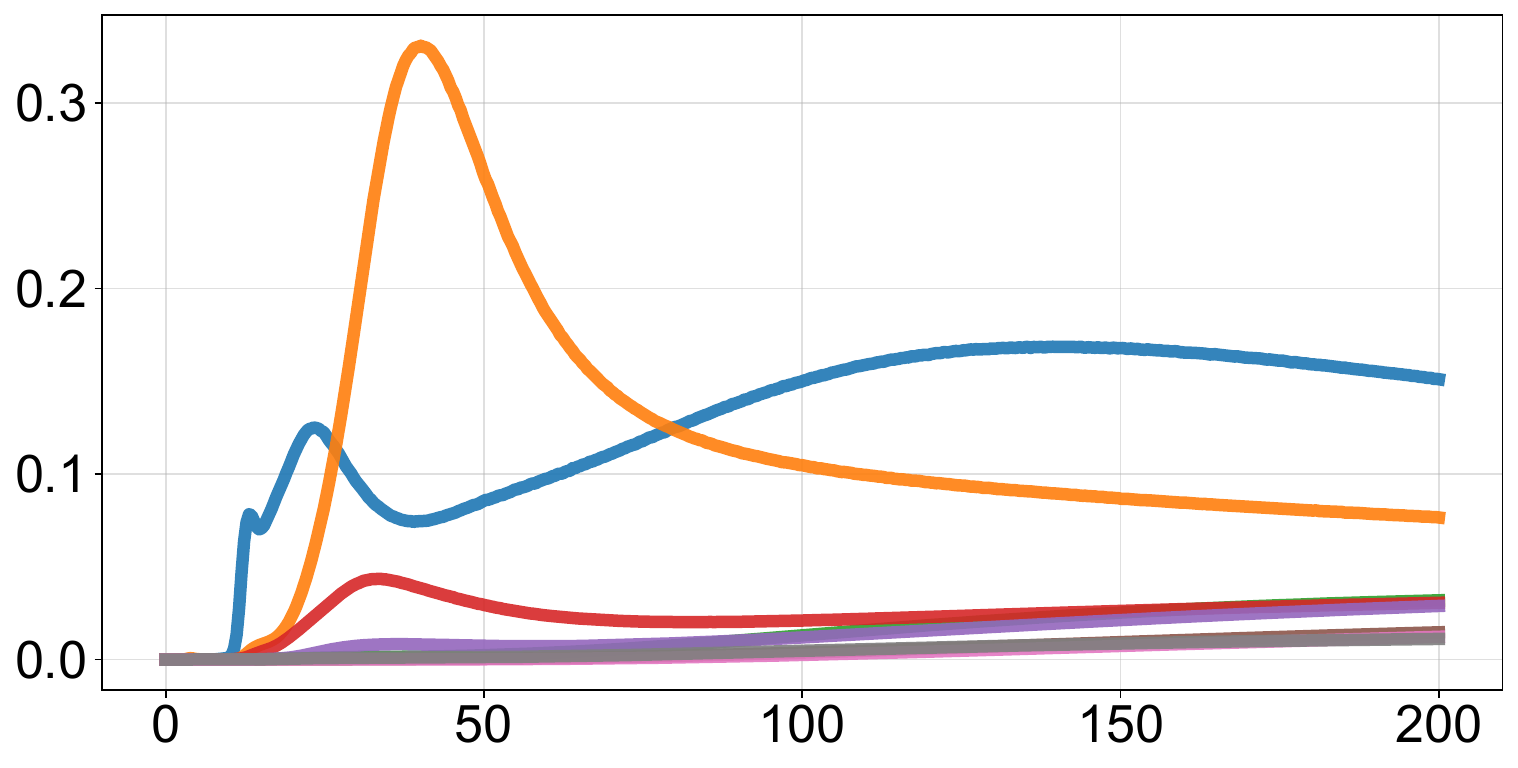}
\end{subfigure}\hfill
\begin{subfigure}[t]{0.19\textwidth}\centering
\scriptsize\textbf{Layer 23}\par\smallskip
\includegraphics[width=\linewidth]{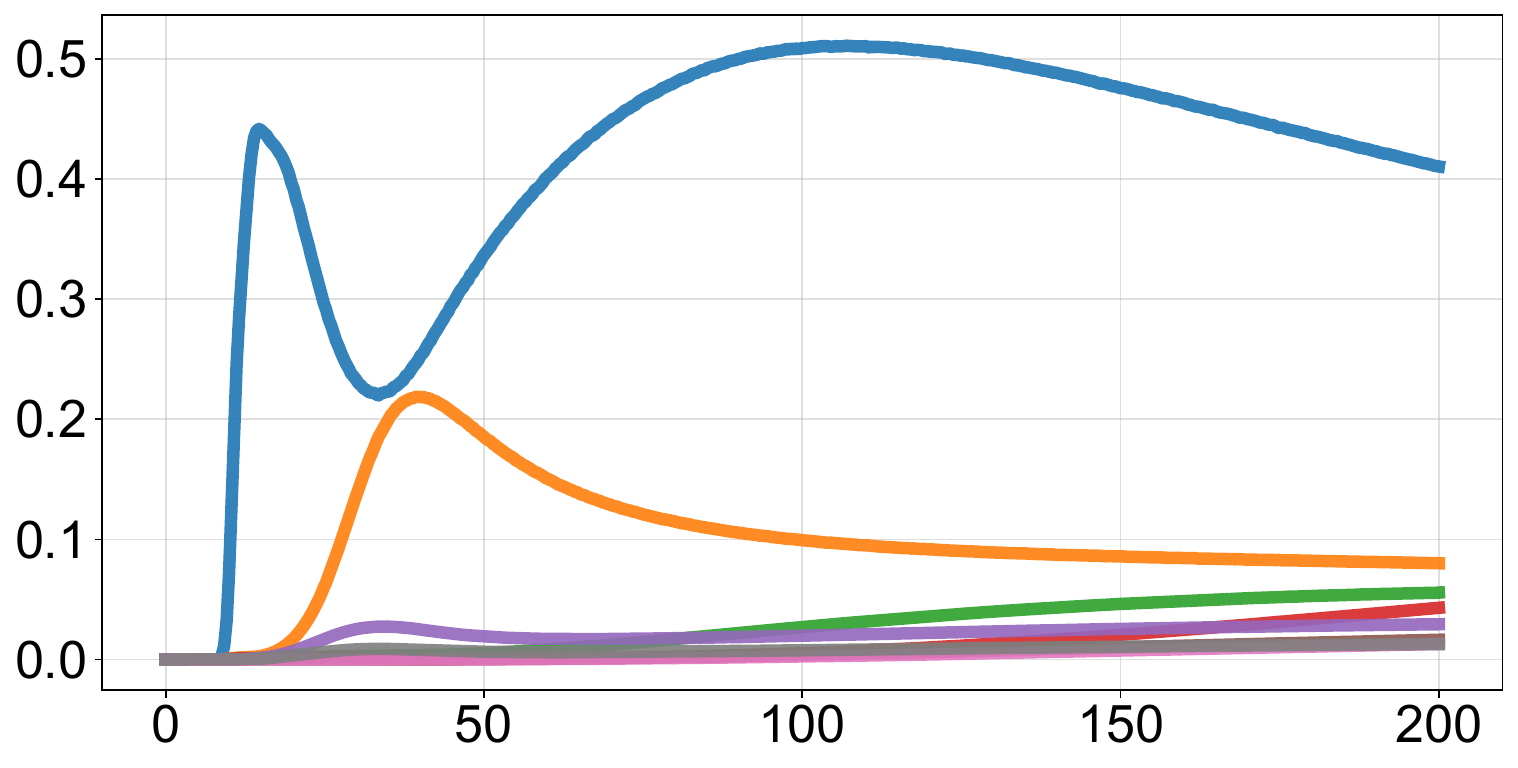}
\end{subfigure}\hfill
\begin{subfigure}[t]{0.19\textwidth}\centering
\scriptsize\textbf{Layer 24}\par\smallskip
\includegraphics[width=\linewidth]{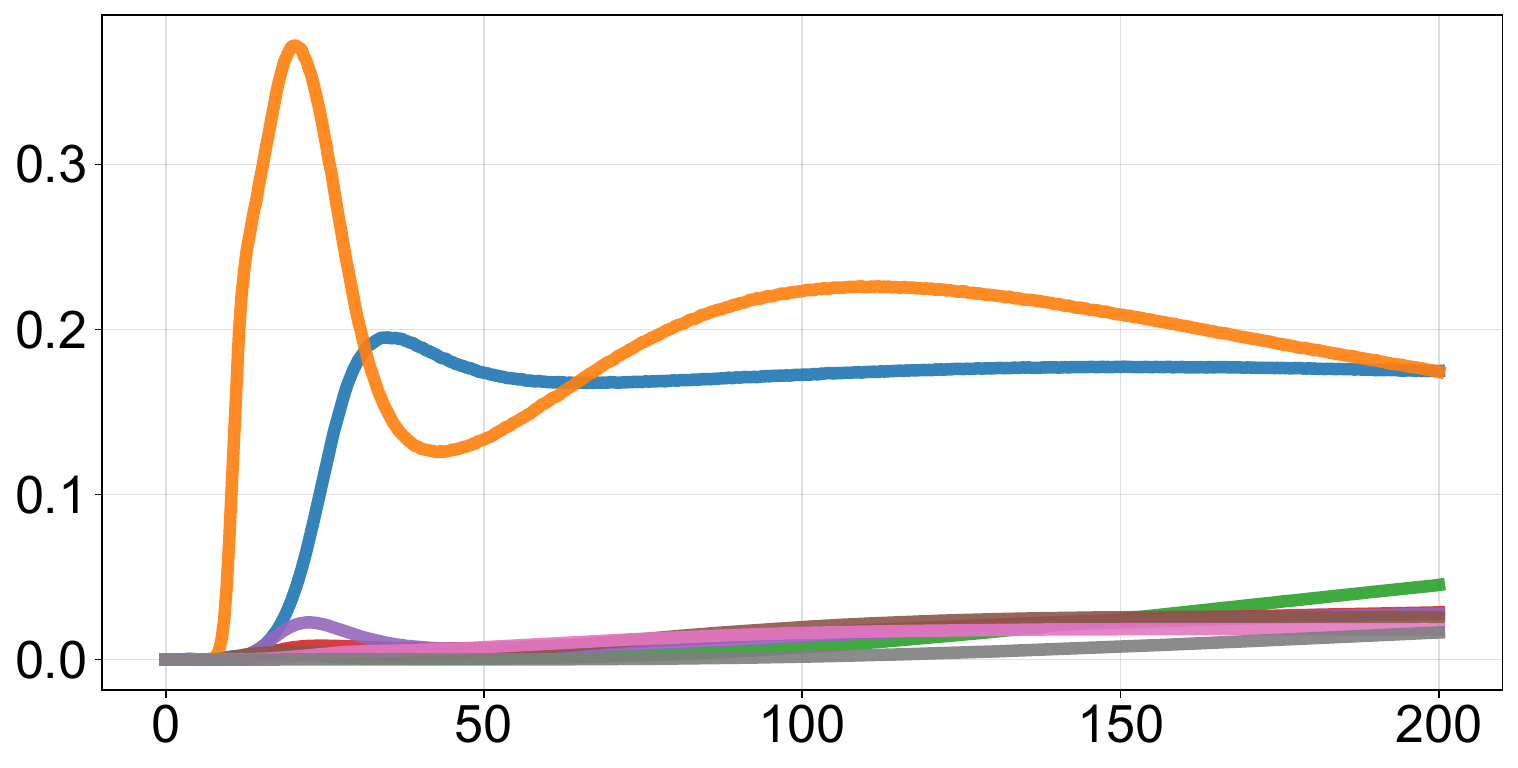}
\end{subfigure}
\begin{subfigure}[t]{0.19\textwidth}\centering
\scriptsize\textbf{Layer 25}\par\smallskip
\includegraphics[width=\linewidth]{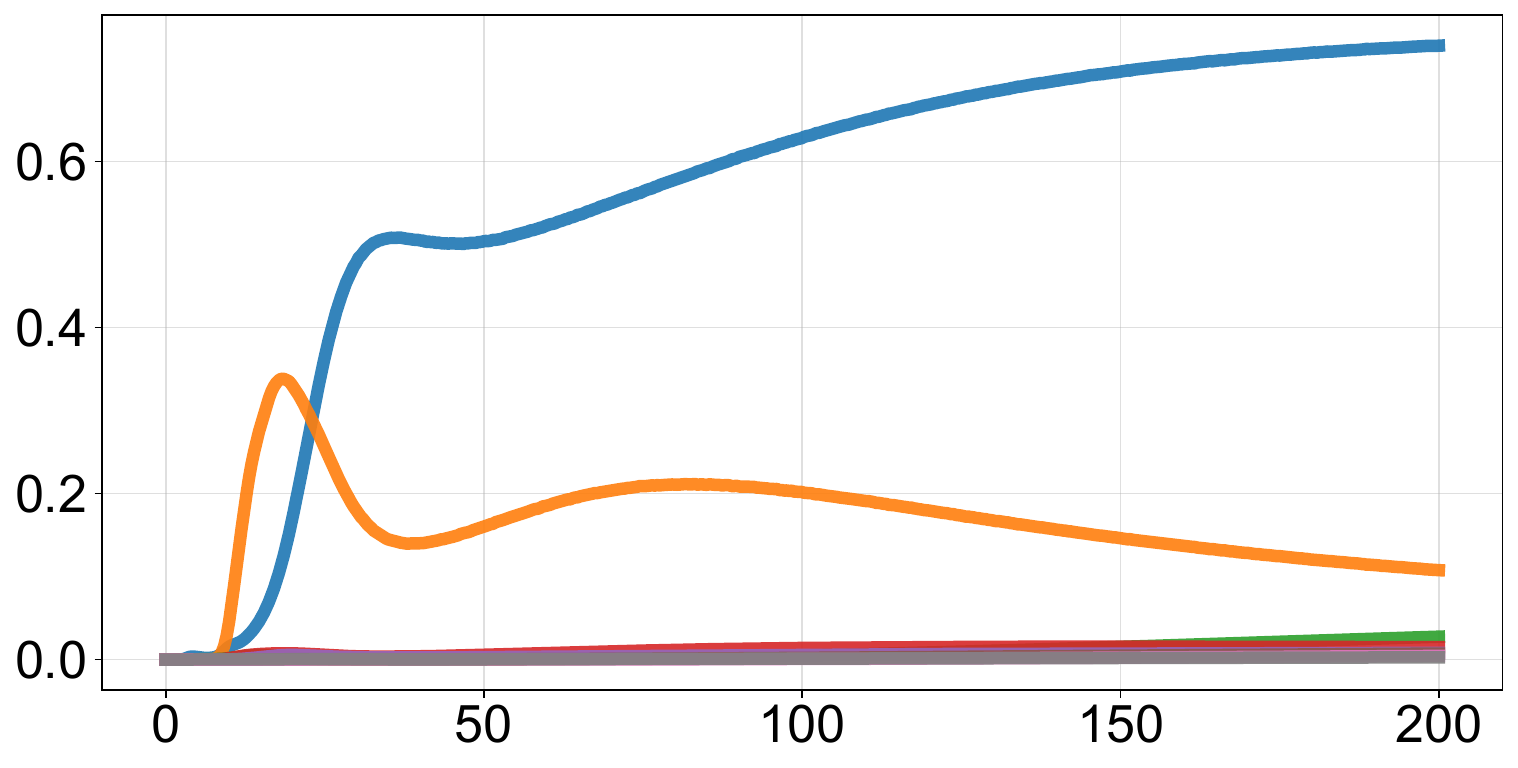}
\end{subfigure}\hfill
\begin{subfigure}[t]{0.19\textwidth}\centering
\scriptsize\textbf{Layer 26}\par\smallskip
\includegraphics[width=\linewidth]{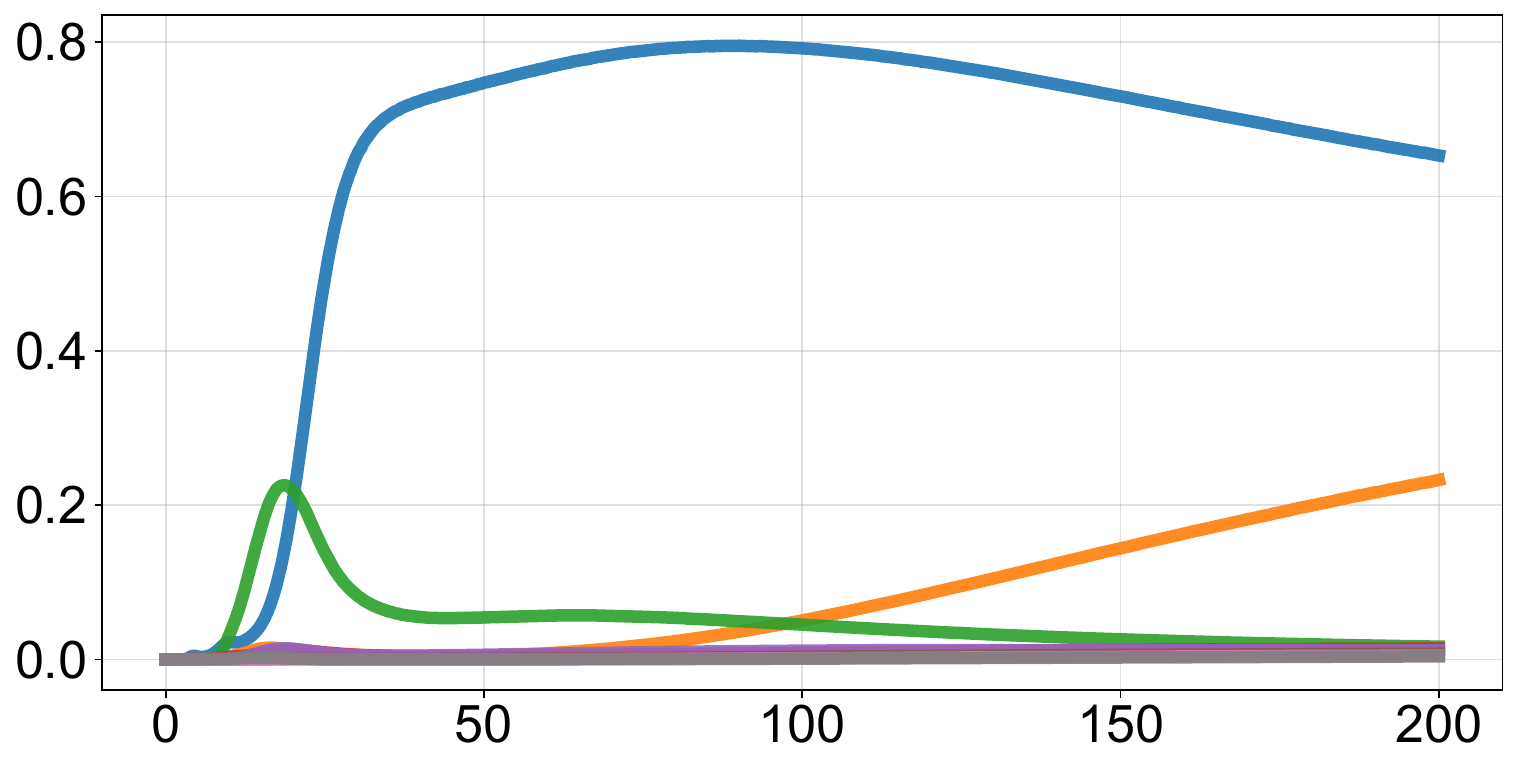}
\end{subfigure}\hfill
\begin{subfigure}[t]{0.19\textwidth}\centering
\scriptsize\textbf{Layer 27}\par\smallskip
\includegraphics[width=\linewidth]{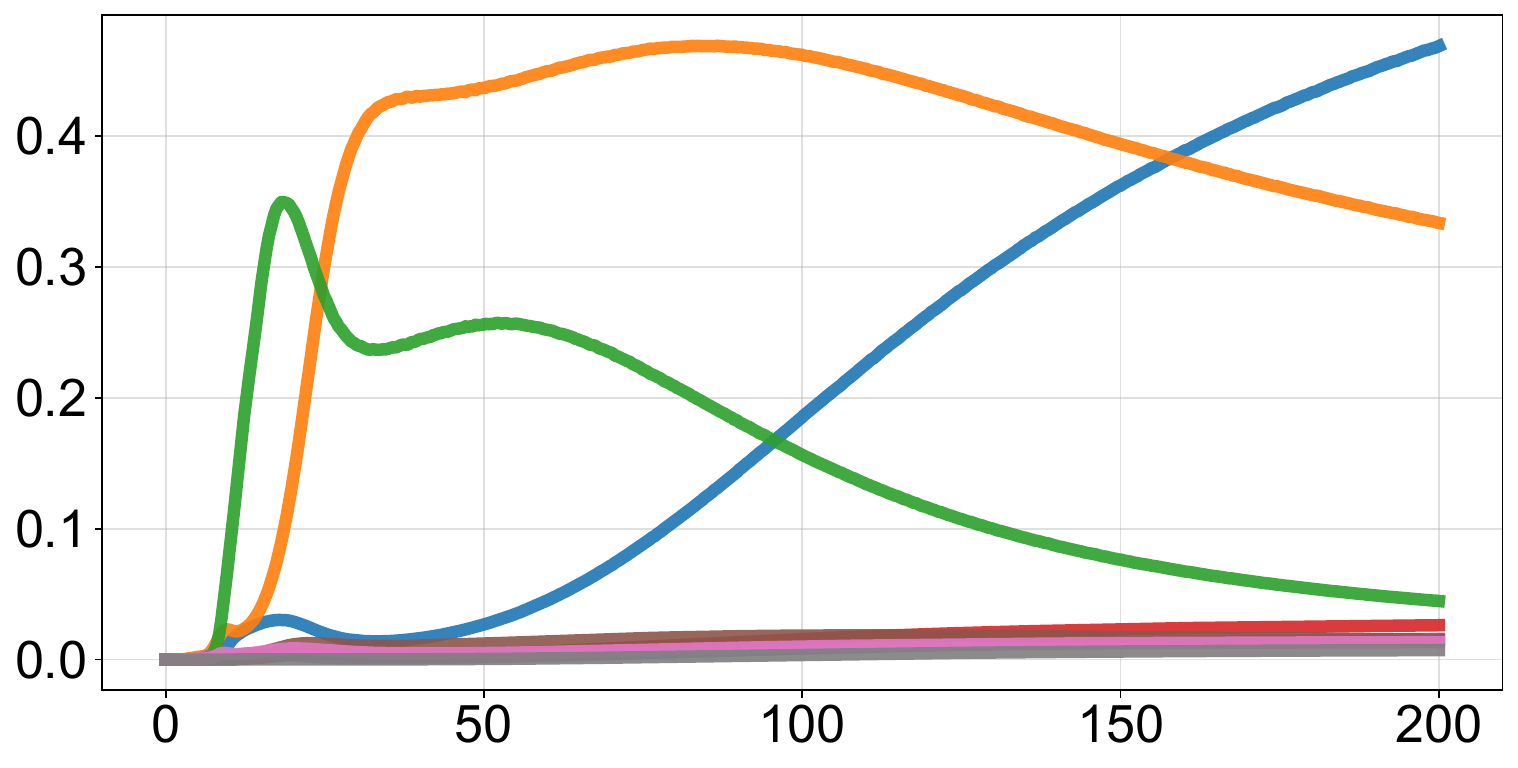}
\end{subfigure}\hfill
\begin{subfigure}[t]{0.19\textwidth}\centering
\scriptsize\textbf{Layer 28}\par\smallskip
\includegraphics[width=\linewidth]{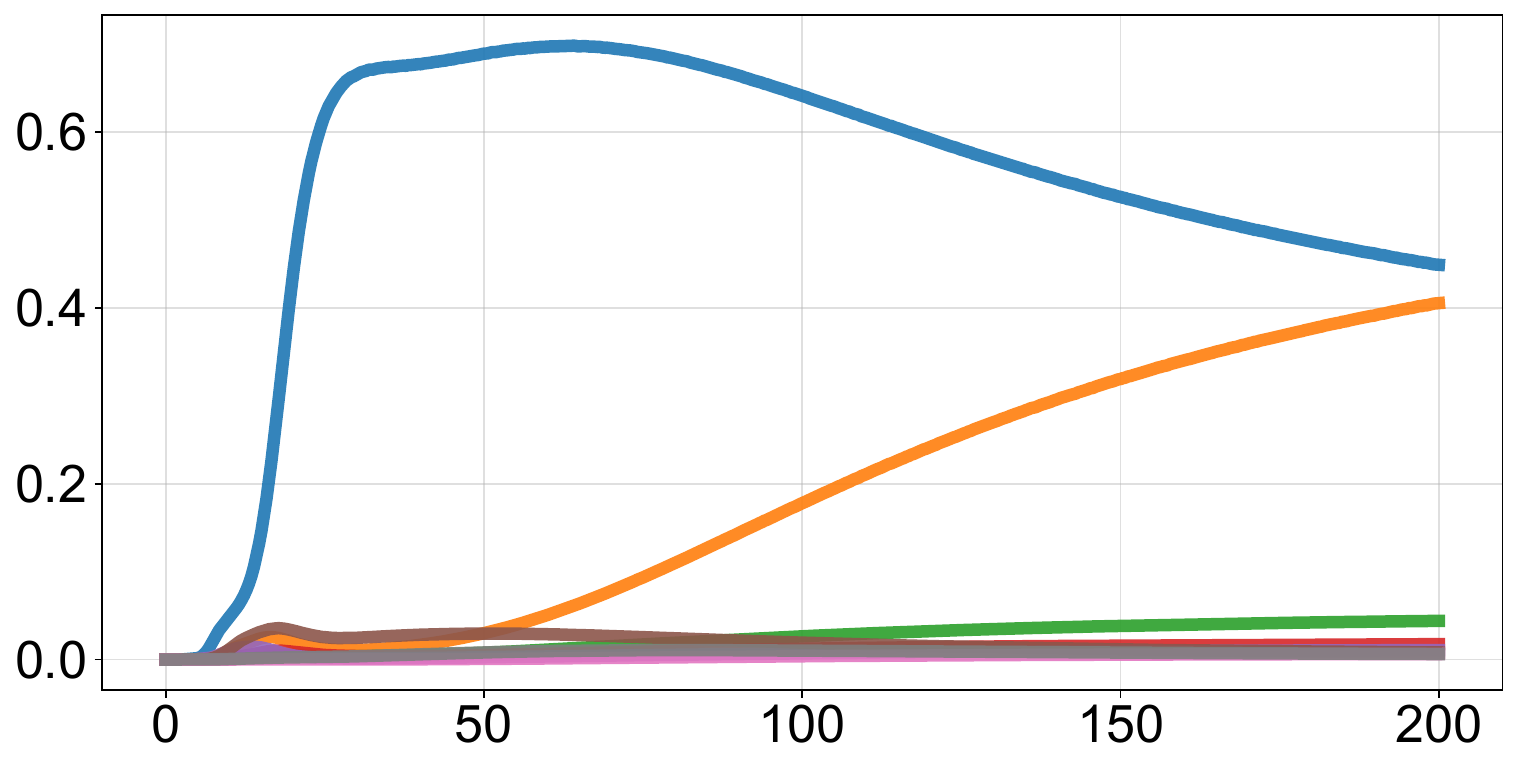}
\end{subfigure}\hfill
\begin{subfigure}[t]{0.19\textwidth}\centering
\scriptsize\textbf{Layer 29}\par\smallskip
\includegraphics[width=\linewidth]{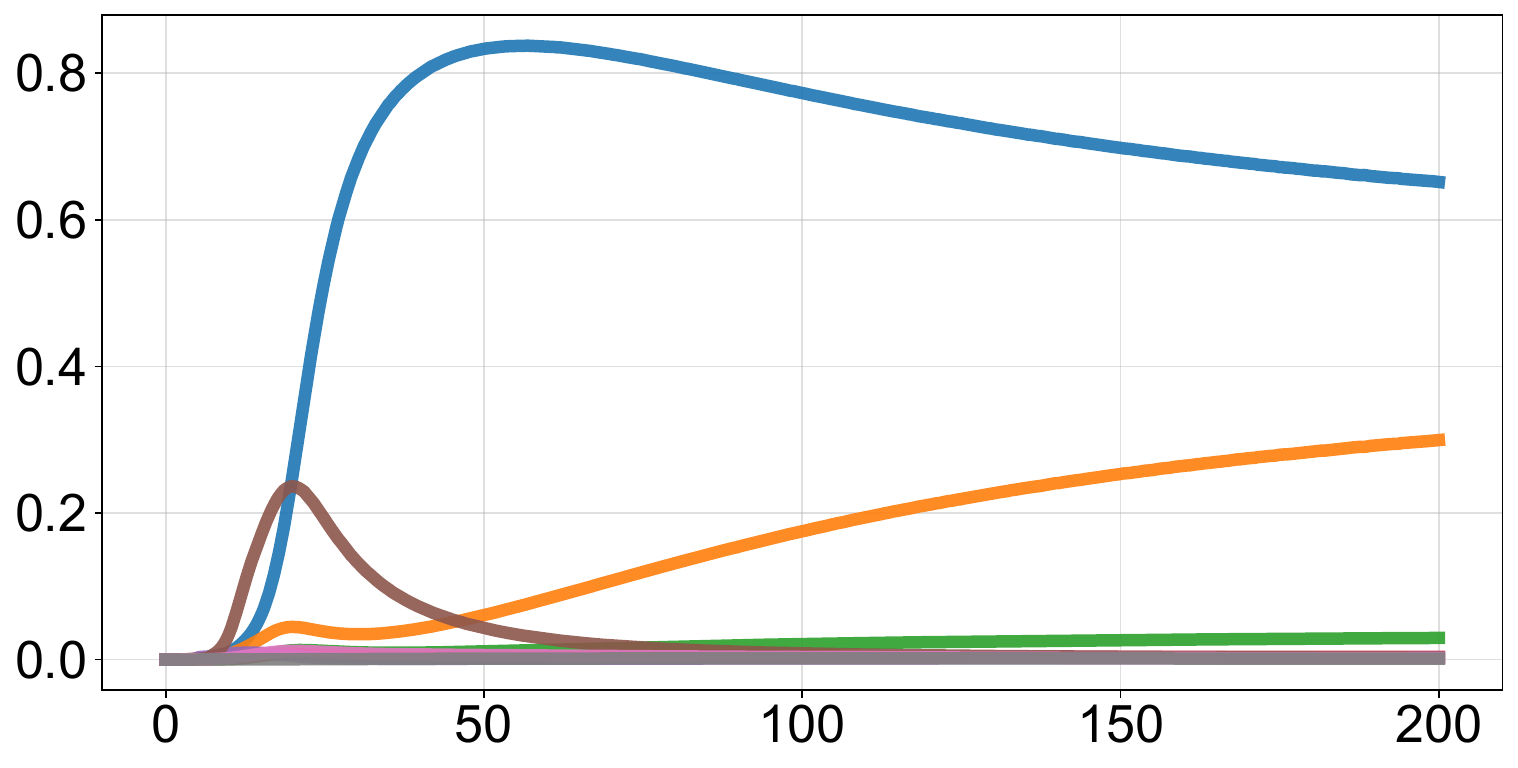}
\end{subfigure}
\begin{subfigure}[t]{0.19\textwidth}\centering
\scriptsize\textbf{Layer 30}\par\smallskip
\includegraphics[width=\linewidth]{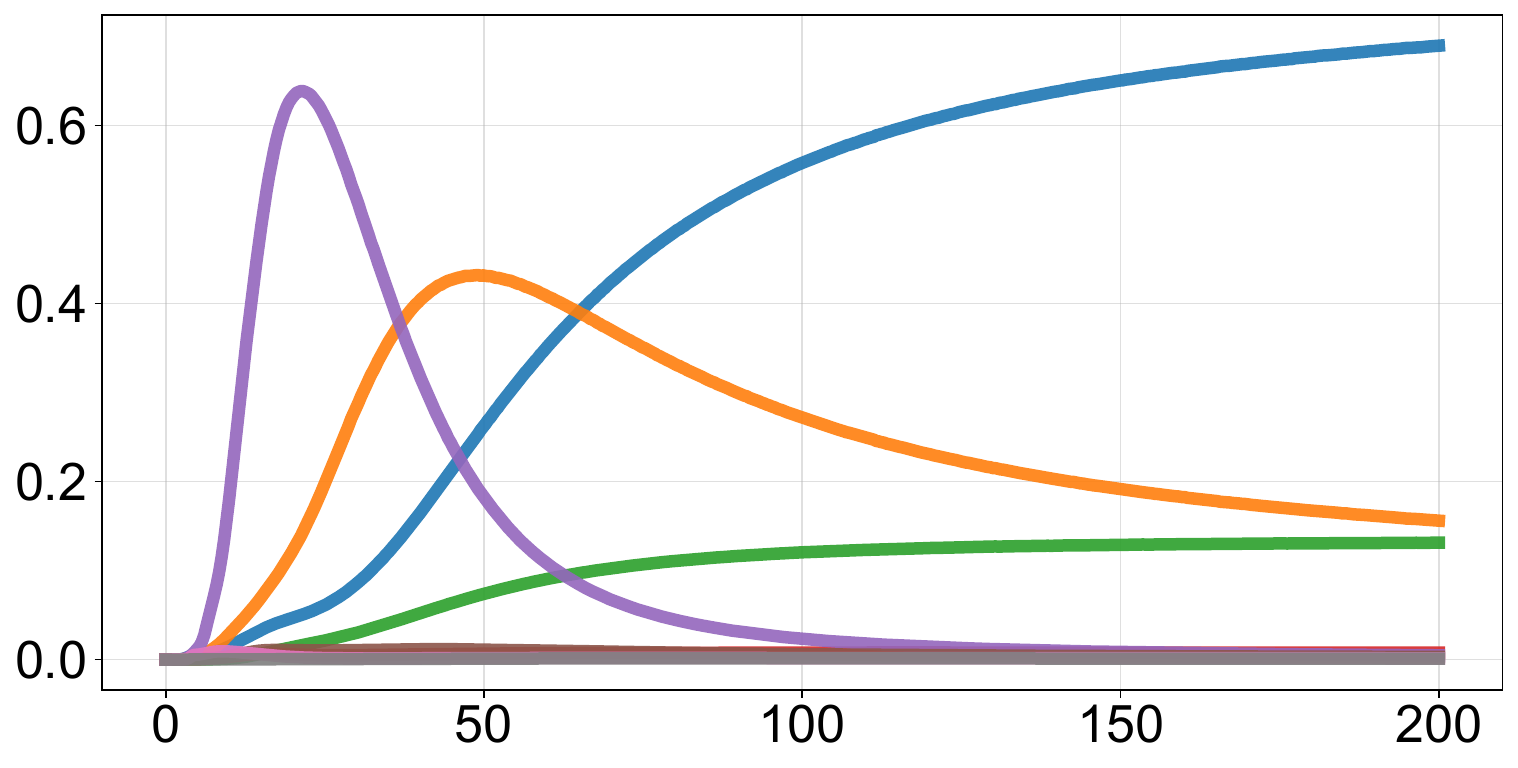}
\end{subfigure}\hfill
\begin{subfigure}[t]{0.19\textwidth}\centering
\scriptsize\textbf{Layer 31}\par\smallskip
\includegraphics[width=\linewidth]{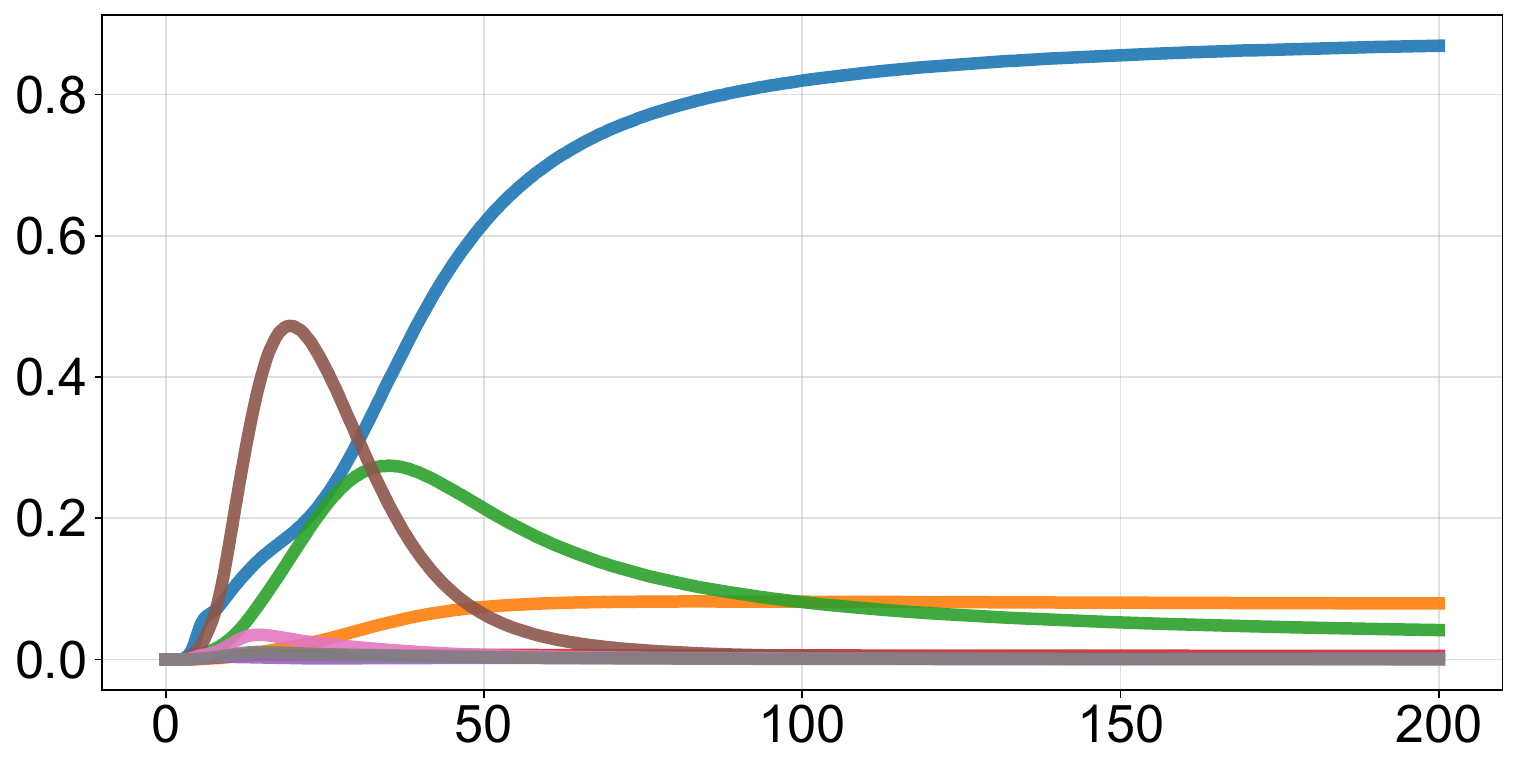}
\end{subfigure}\hfill
\begin{subfigure}[t]{0.19\textwidth}\centering
\scriptsize\textbf{Layer 32}\par\smallskip
\includegraphics[width=\linewidth]{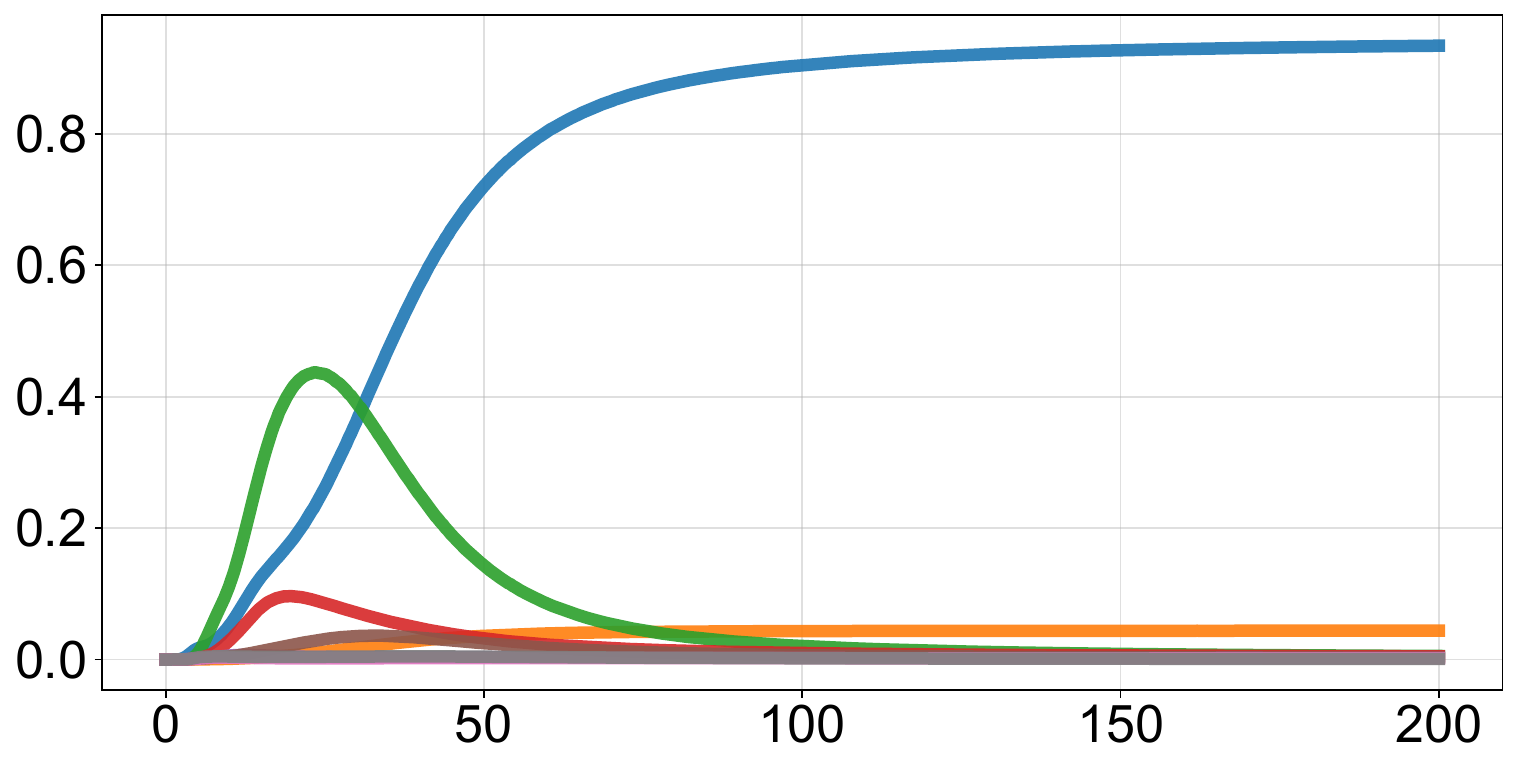}
\end{subfigure}\hfill
\begin{subfigure}[t]{0.19\textwidth}\centering
\scriptsize\textbf{Layer 33}\par\smallskip
\includegraphics[width=\linewidth]{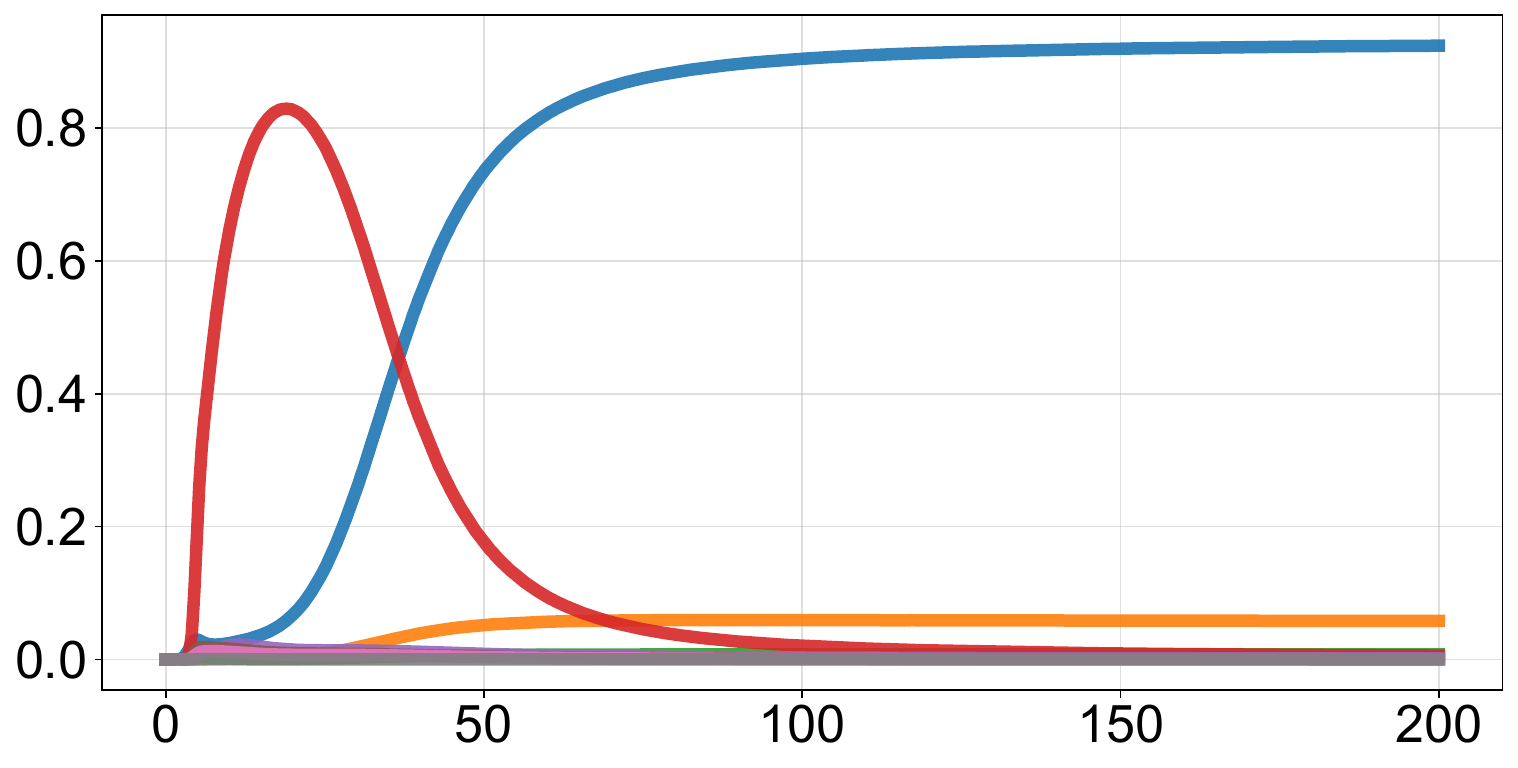}
\end{subfigure}\hfill
\begin{subfigure}[t]{0.19\textwidth}\centering
\scriptsize\textbf{Layer 34}\par\smallskip
\includegraphics[width=\linewidth]{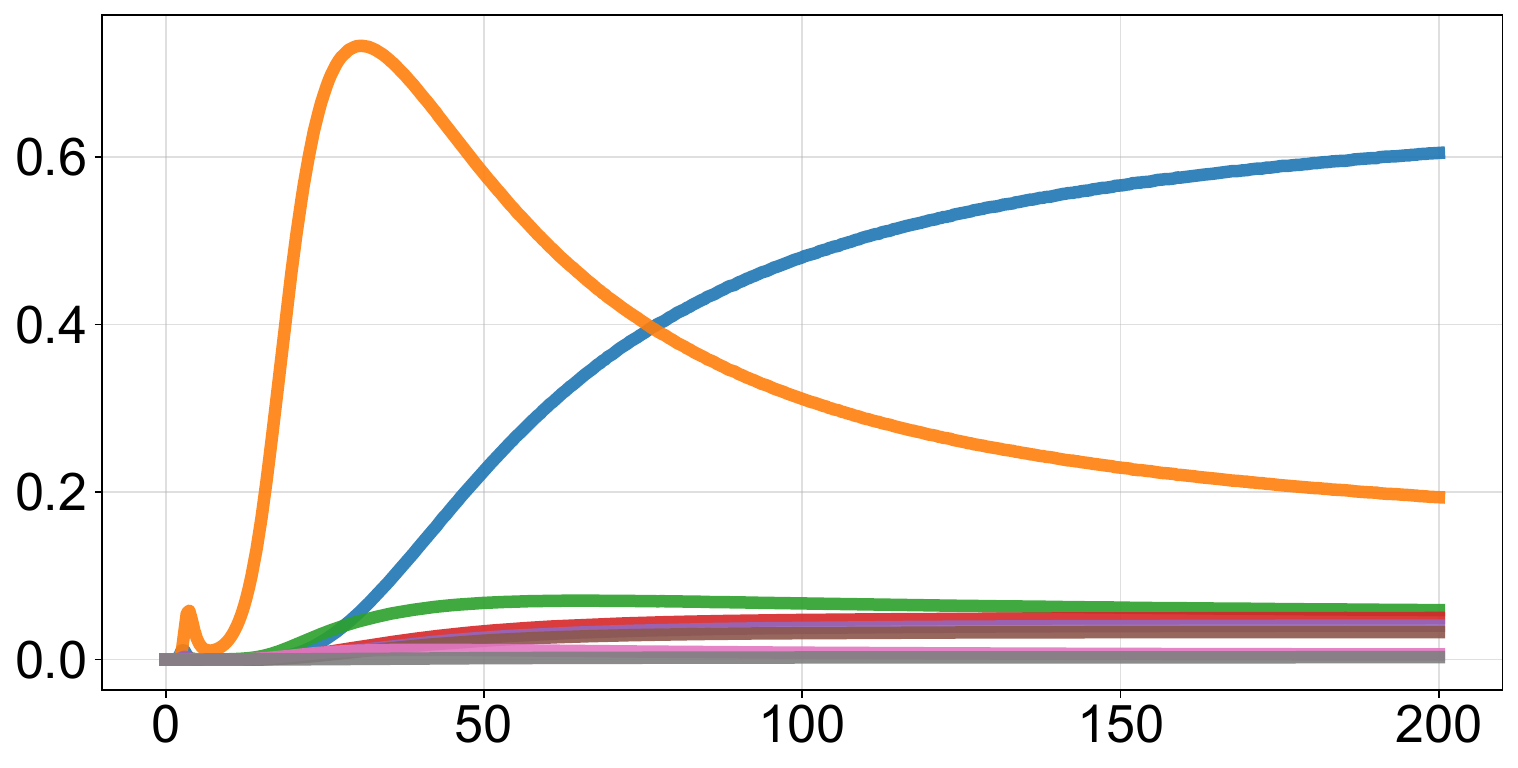}
\end{subfigure}
\caption{\label{fig:all-layers-next-token-qwen-joy}Next-token probability shifts for the concept joy when steering Qwen/Qwen3-8B at every available layer individually, steering only the last-token representation $\hbf^{(\ell)}_{(-1)}$. Panels are ordered by layer index from left to right and top to bottom.}
\end{figure}

\begin{figure}
\centering

\begin{subfigure}[t]{0.32\textwidth}\centering
\includegraphics[width=1.\linewidth]{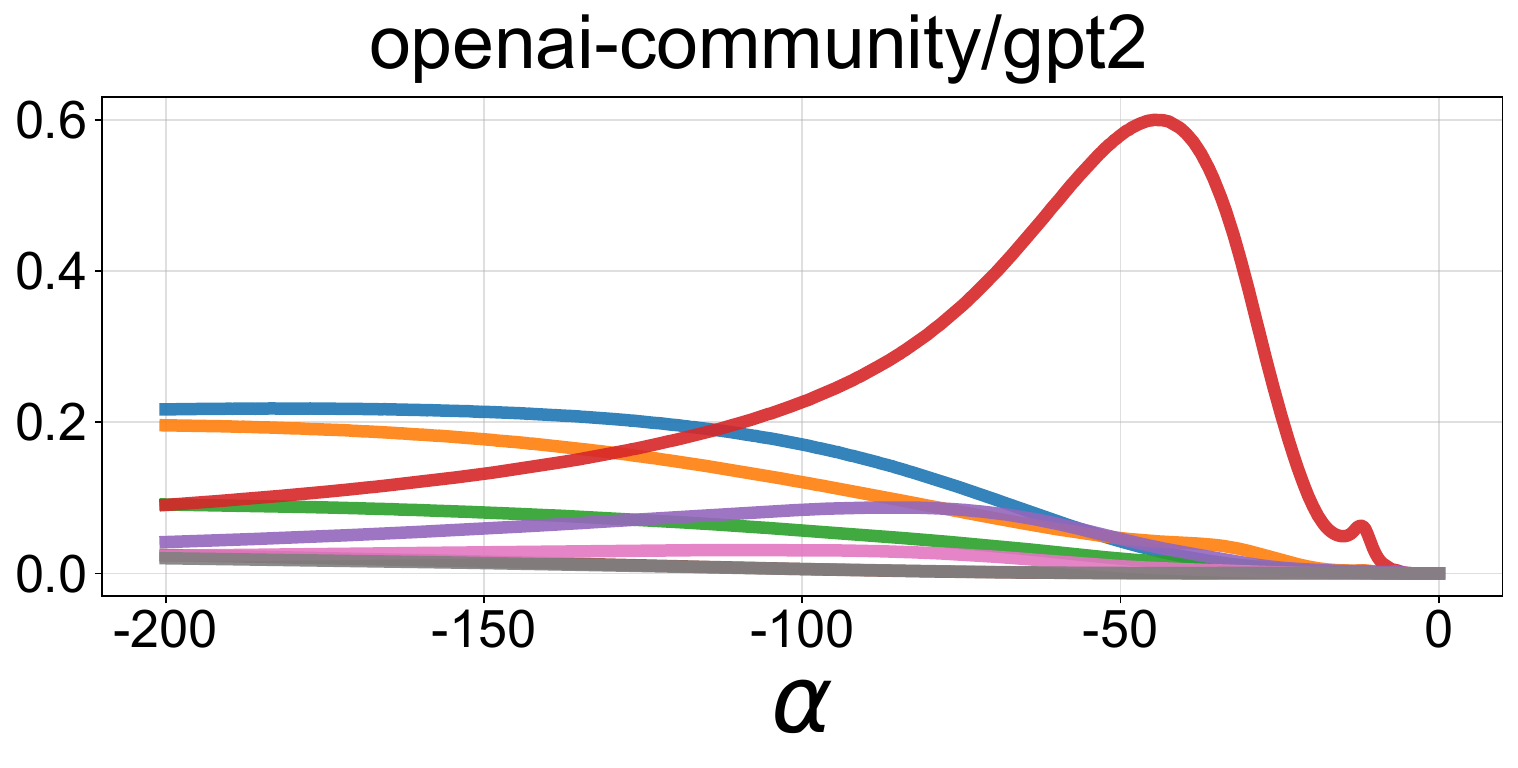}
\end{subfigure}\hfill
\begin{subfigure}[t]{0.32\textwidth}\centering
\includegraphics[width=1.\linewidth]{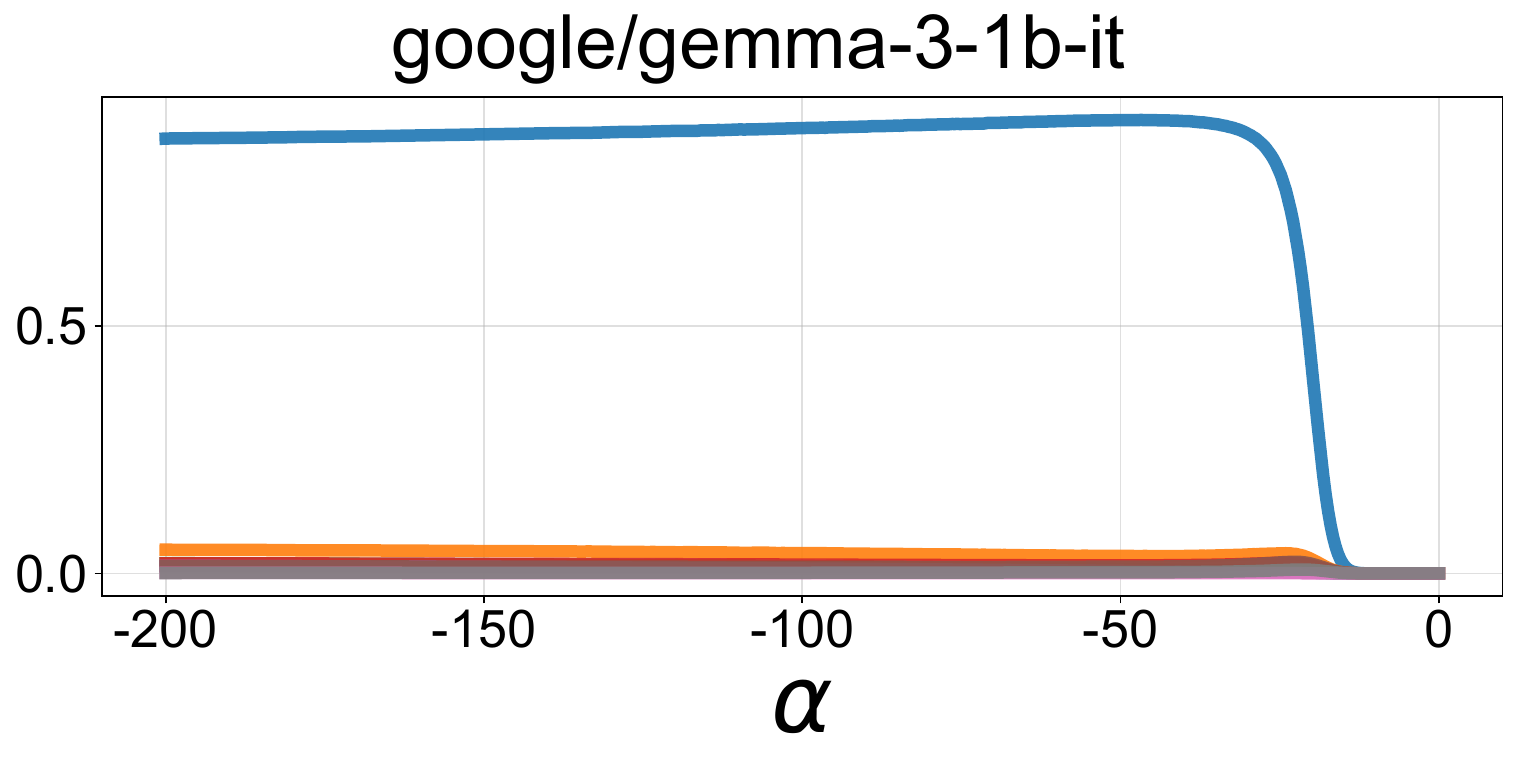}
\end{subfigure}\hfill
\begin{subfigure}[t]{0.32\textwidth}\centering
\includegraphics[width=1.\linewidth]{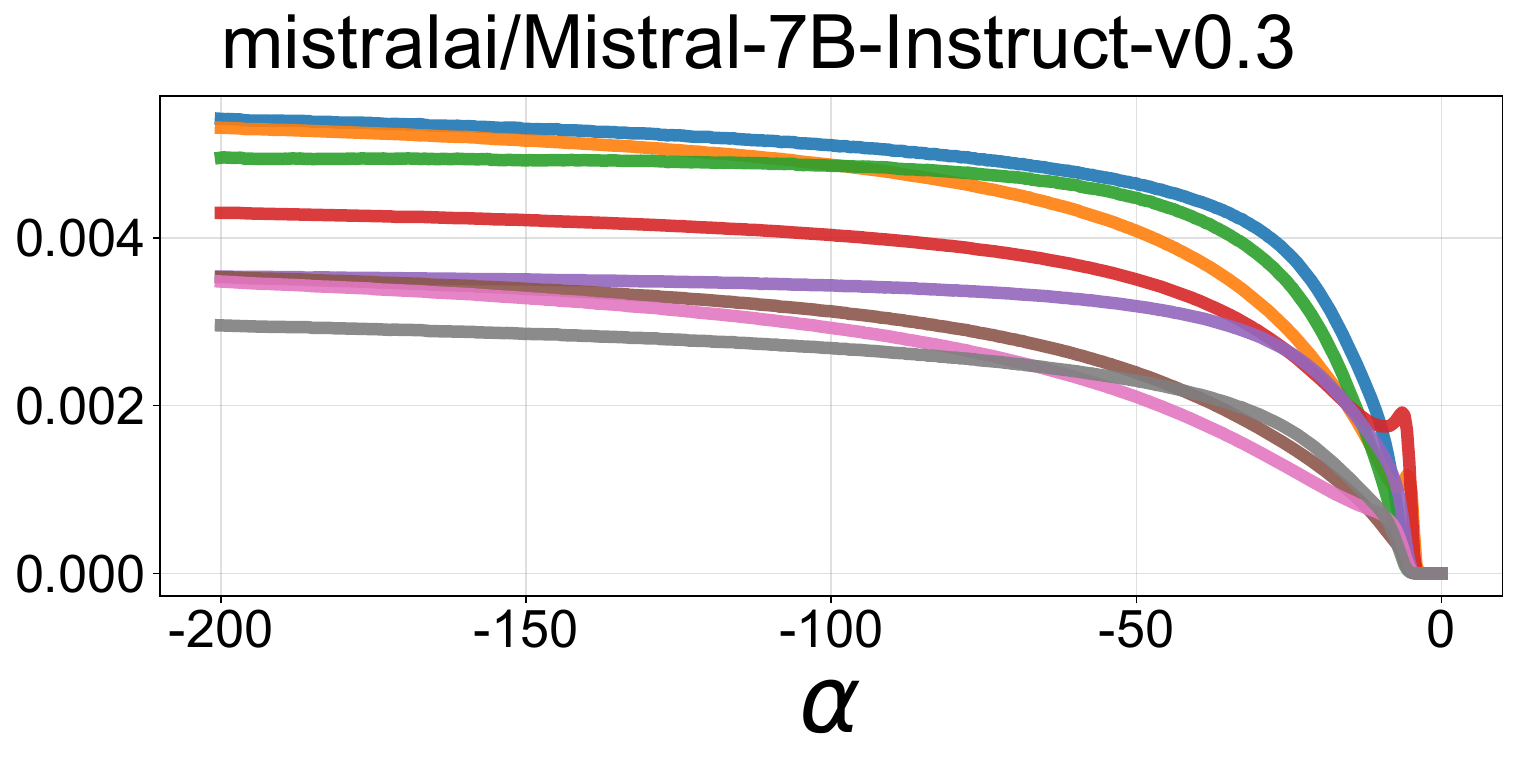}
\end{subfigure}

\vspace{2pt}
\begin{subfigure}[t]{0.32\textwidth}\centering
\includegraphics[width=1.\linewidth]{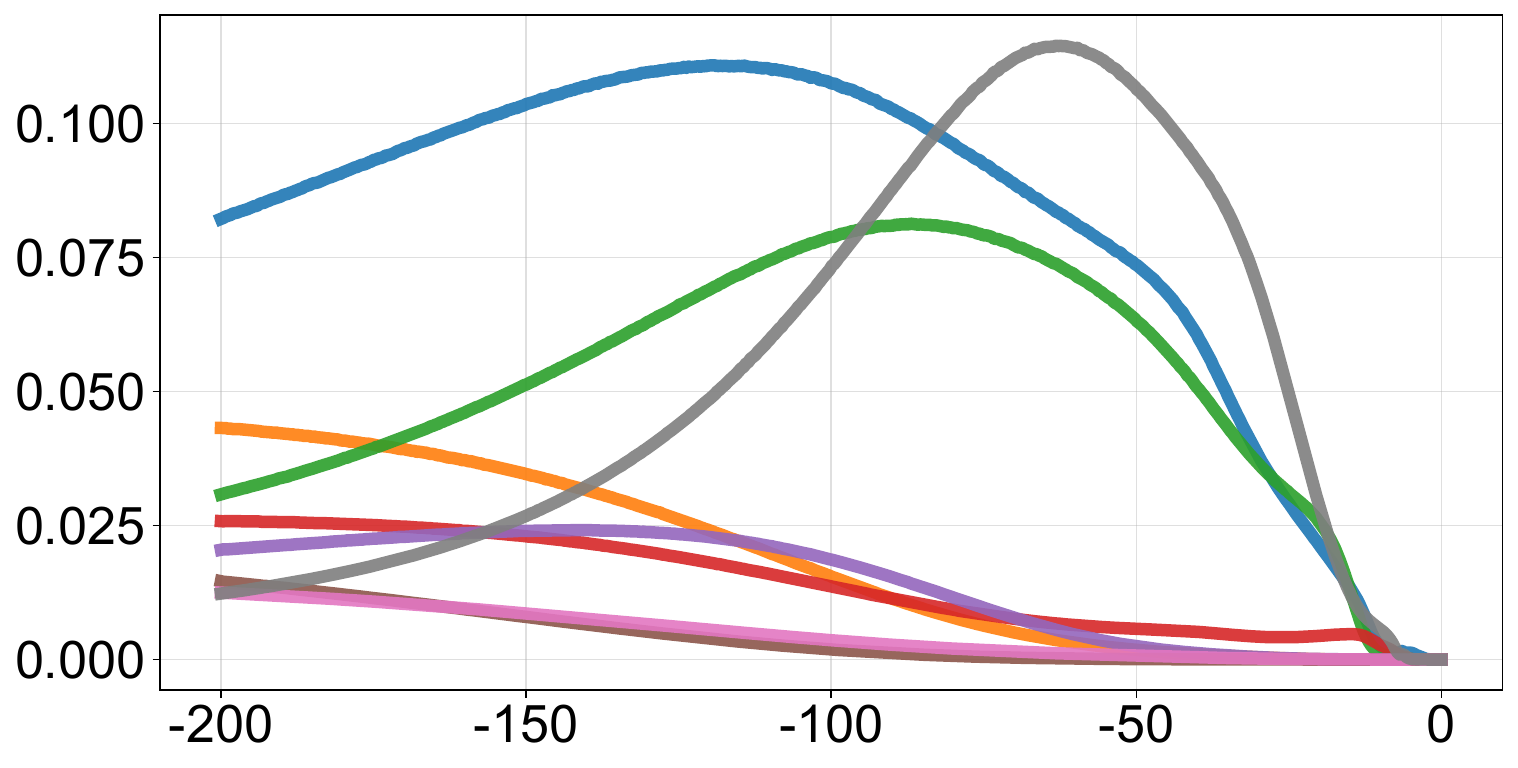}
\end{subfigure}\hfill
\begin{subfigure}[t]{0.32\textwidth}\centering
\includegraphics[width=1.\linewidth]{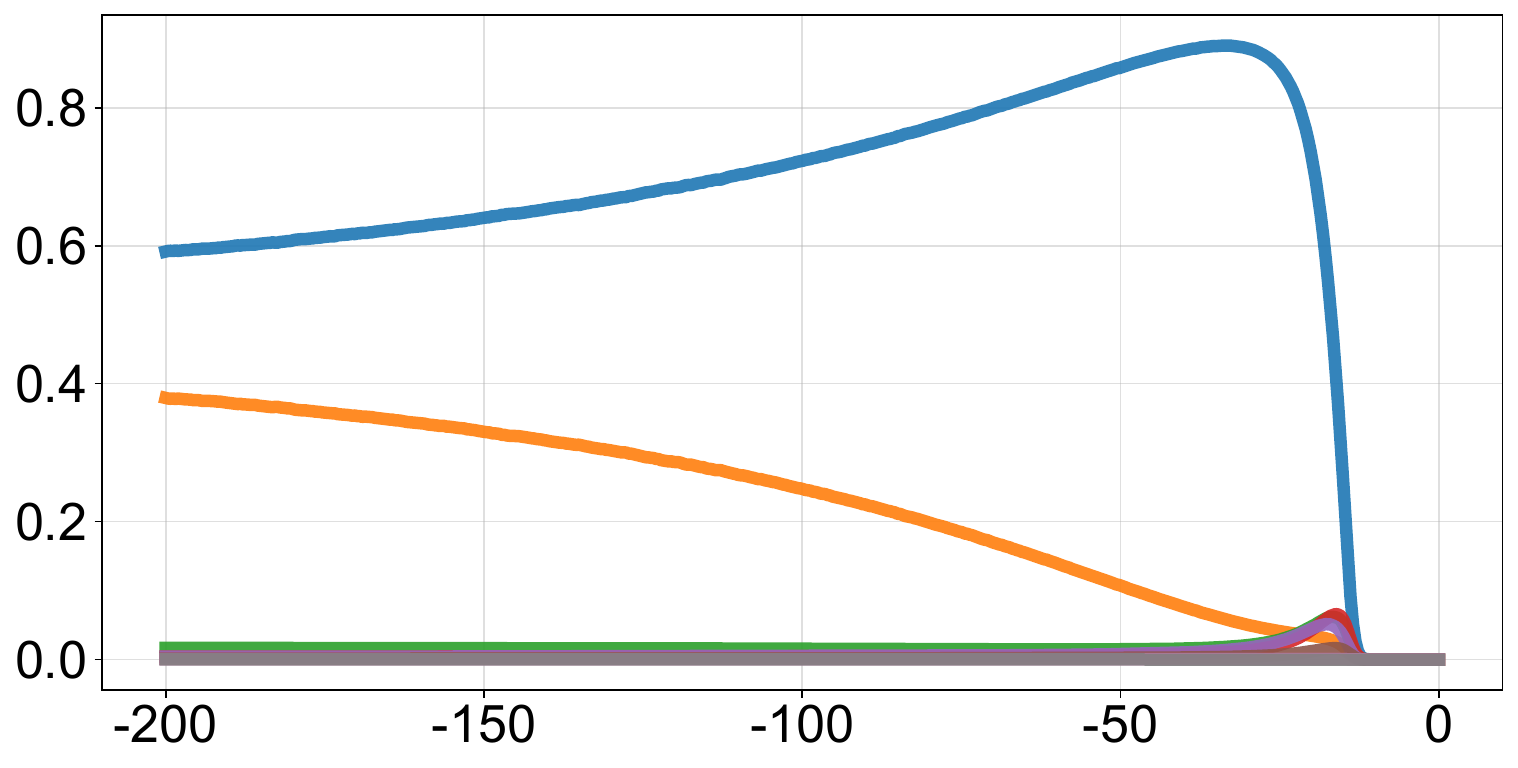}
\end{subfigure}\hfill
\begin{subfigure}[t]{0.32\textwidth}\centering
\includegraphics[width=1.\linewidth]{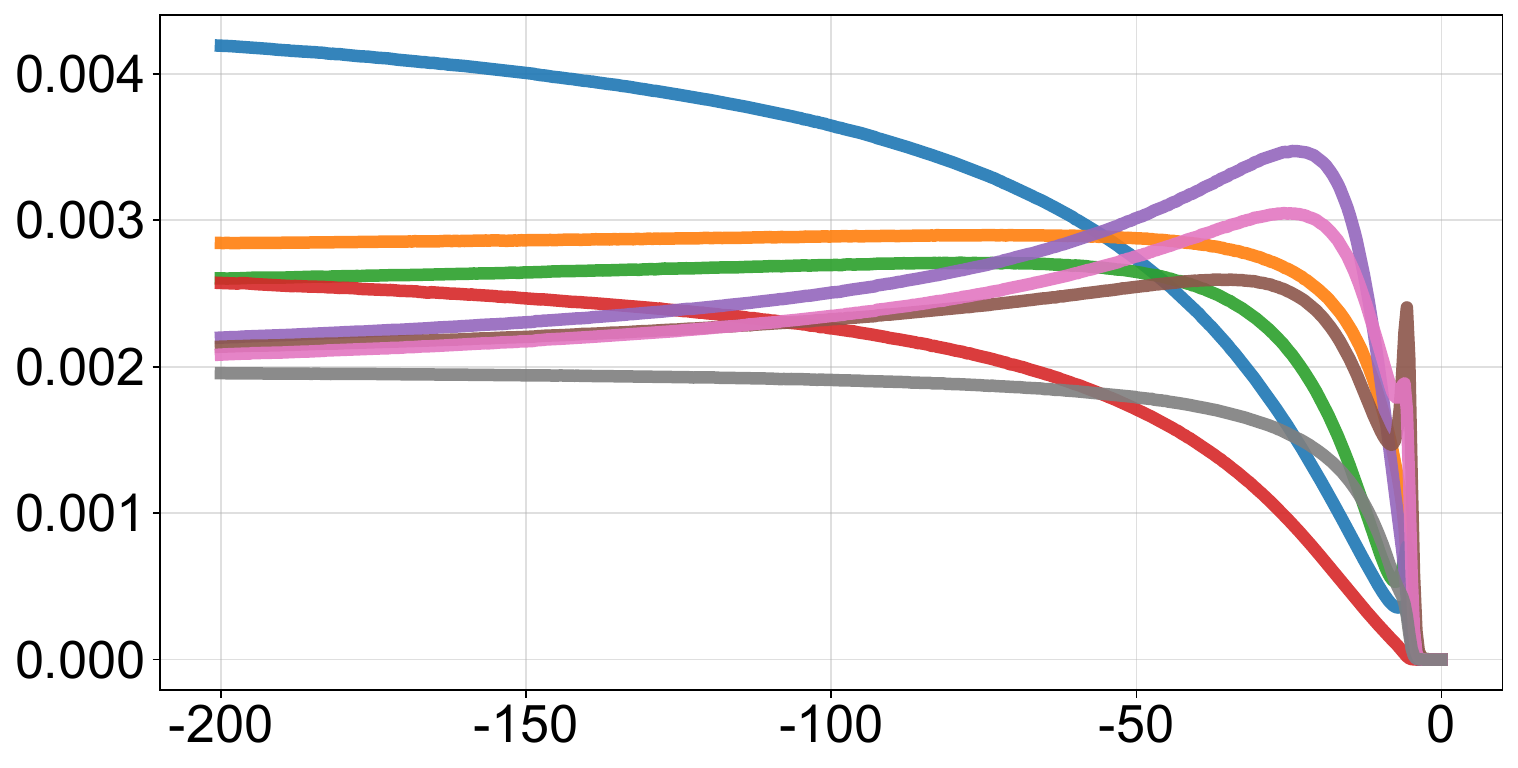}
\end{subfigure}

\vspace{2pt}
\begin{subfigure}[t]{0.32\textwidth}\centering
\includegraphics[width=1.\linewidth]{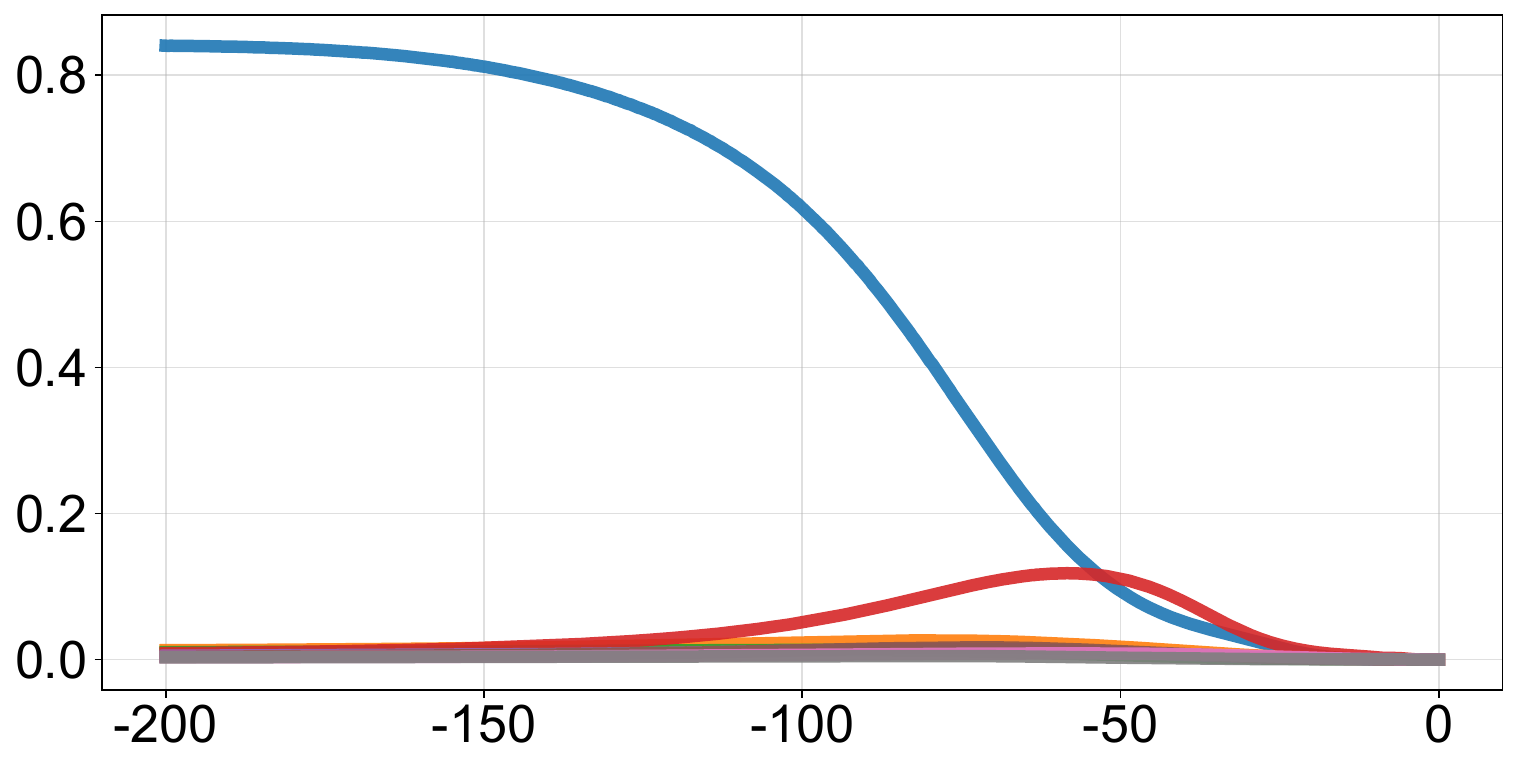}
\end{subfigure}\hfill
\begin{subfigure}[t]{0.32\textwidth}\centering
\includegraphics[width=1.\linewidth]{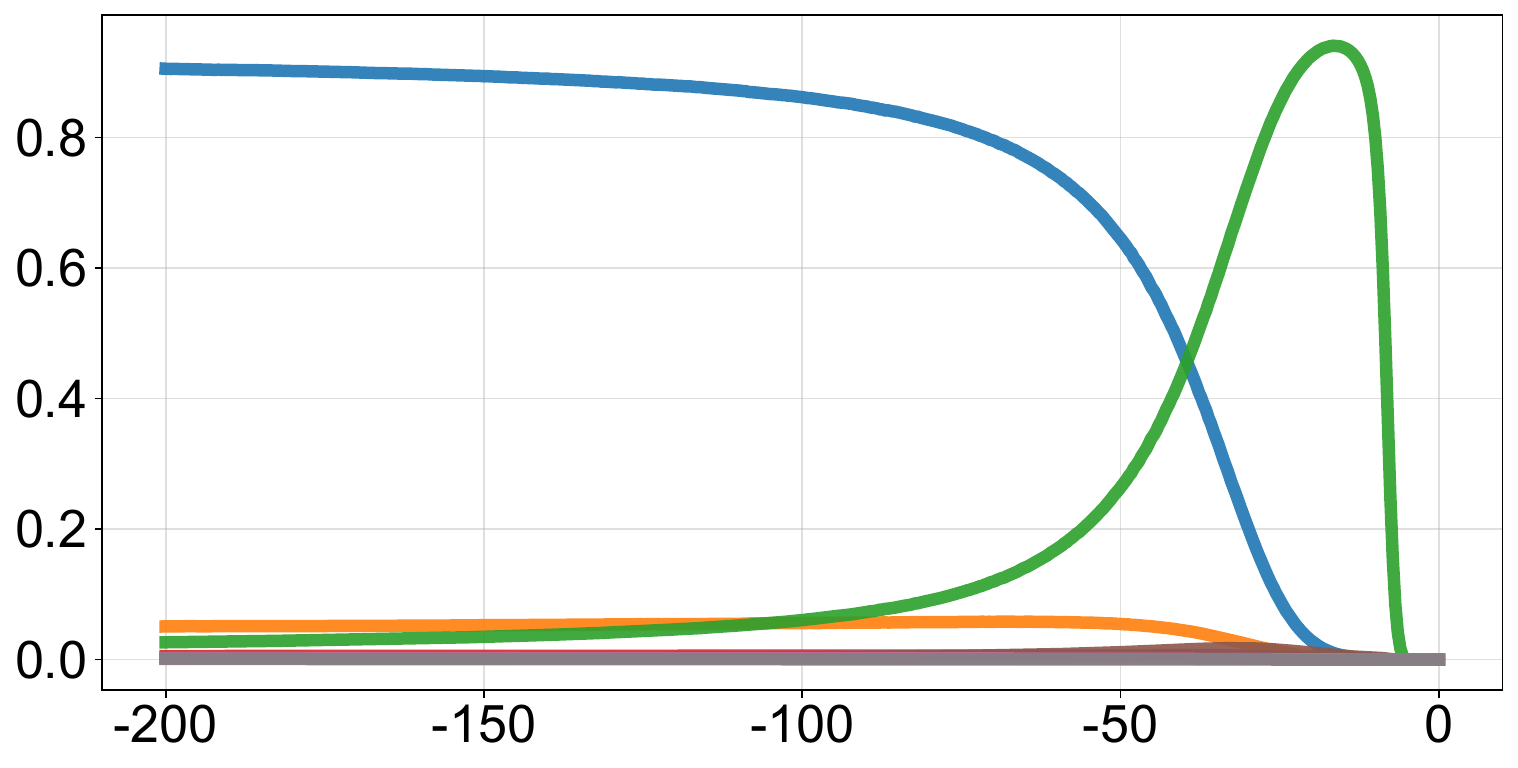}
\end{subfigure}\hfill
\begin{subfigure}[t]{0.32\textwidth}\centering
\includegraphics[width=1.\linewidth]{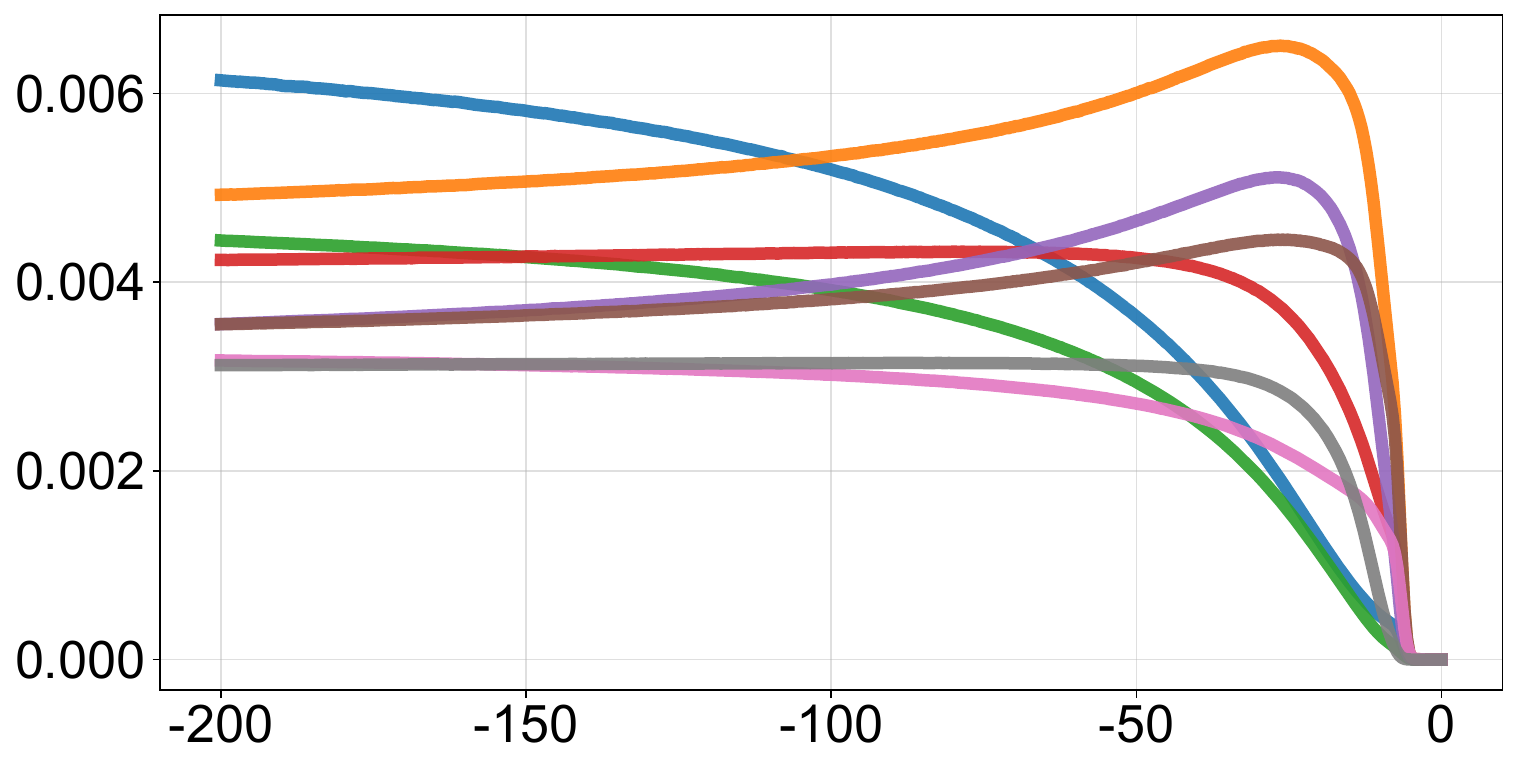}
\end{subfigure}

\vspace{2pt}
\begin{subfigure}[t]{0.32\textwidth}\centering
\includegraphics[width=1.\linewidth]{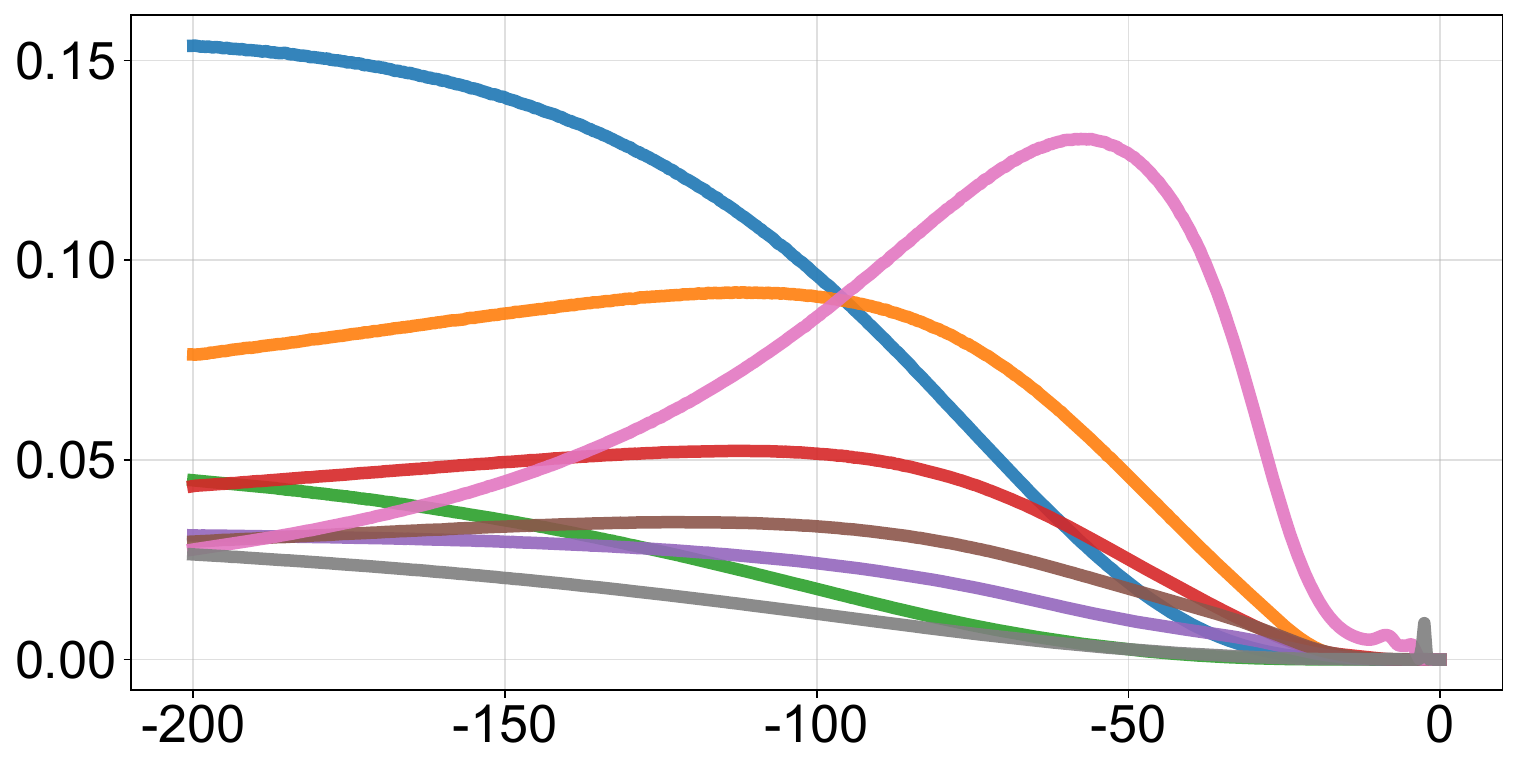}
\end{subfigure}\hfill
\begin{subfigure}[t]{0.32\textwidth}\centering
\includegraphics[width=1.\linewidth]{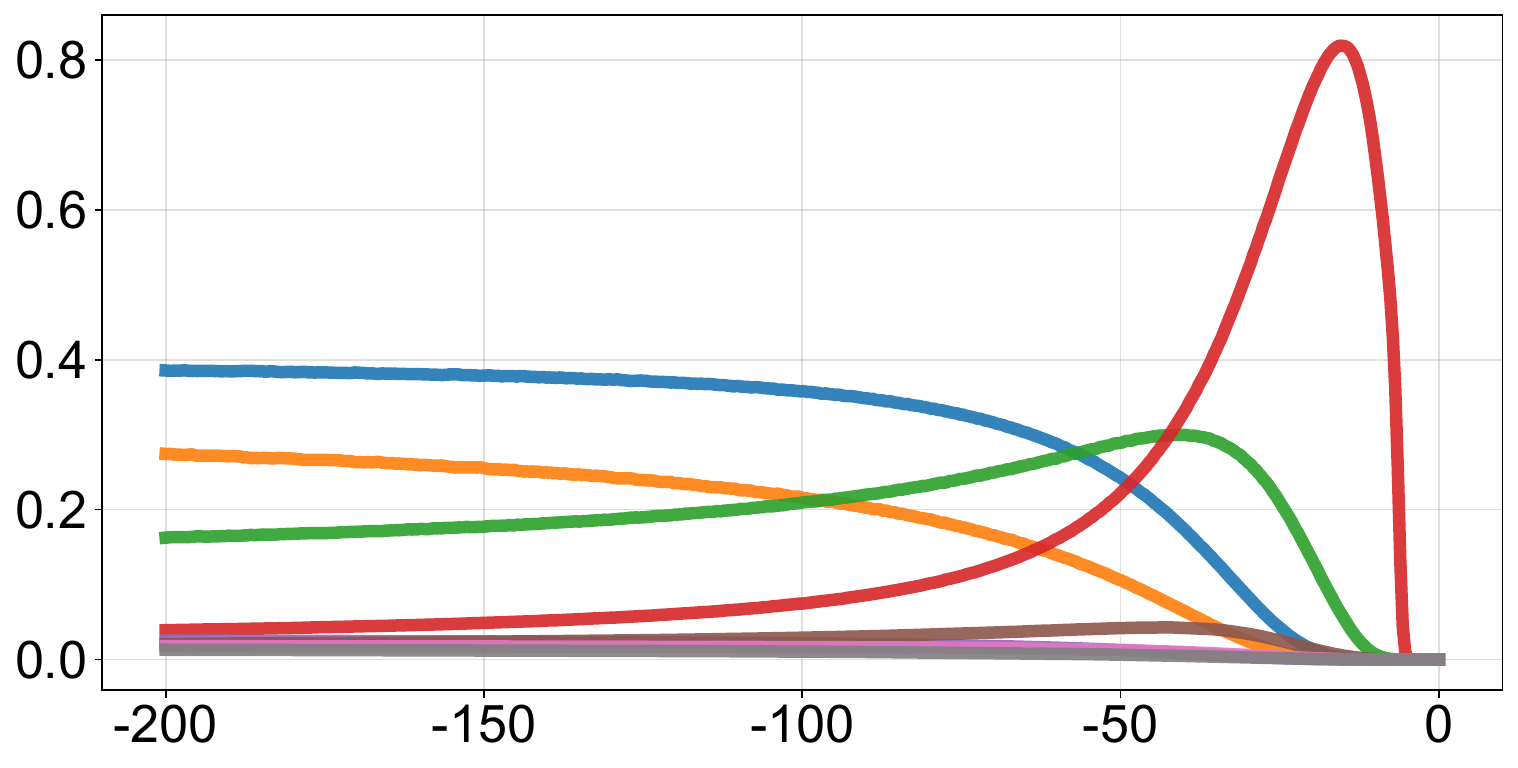}
\end{subfigure}\hfill
\begin{subfigure}[t]{0.32\textwidth}\centering
\includegraphics[width=1.\linewidth]{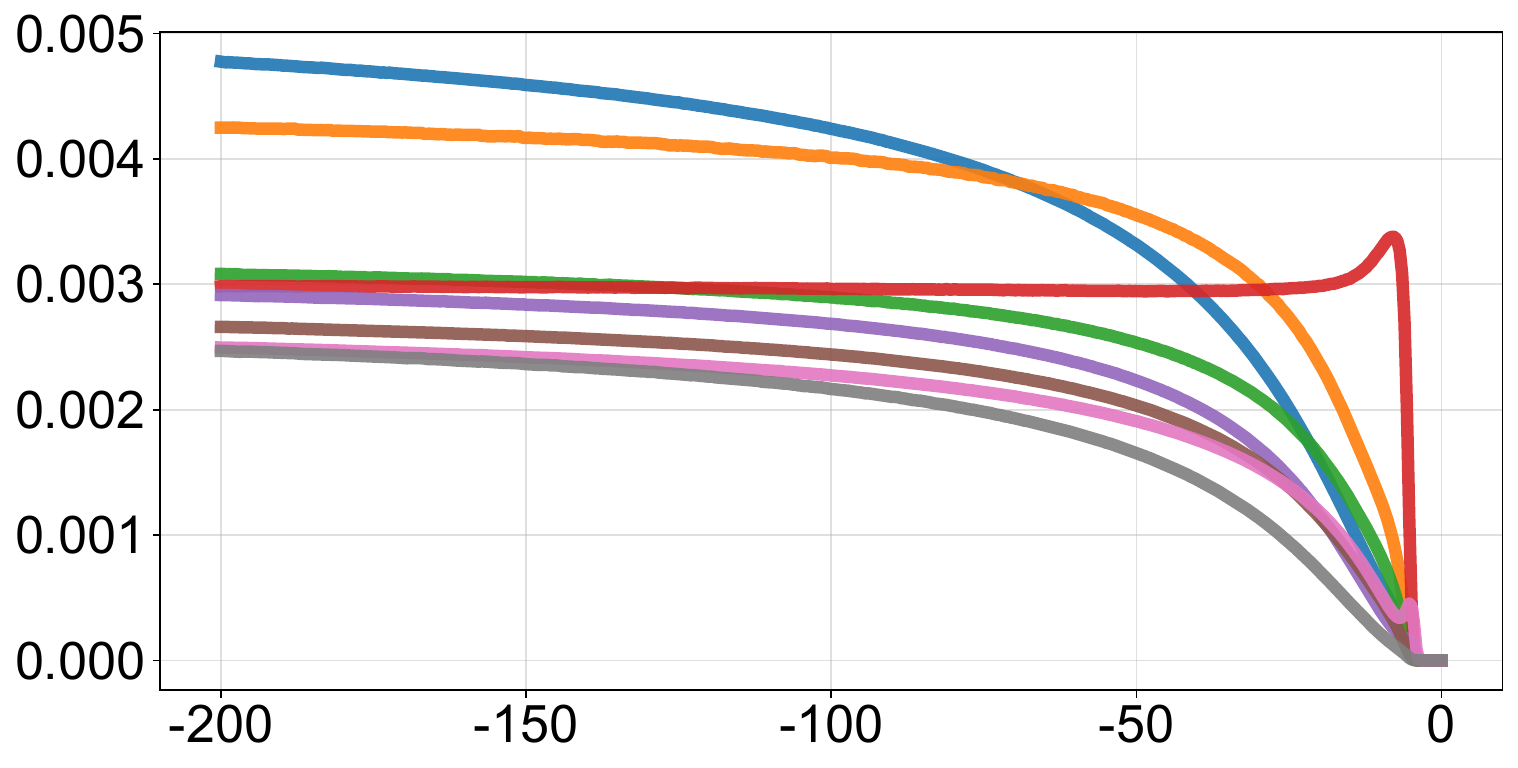}
\end{subfigure}

\vspace{2pt}
\begin{subfigure}[t]{0.32\textwidth}\centering
\includegraphics[width=1.\linewidth]{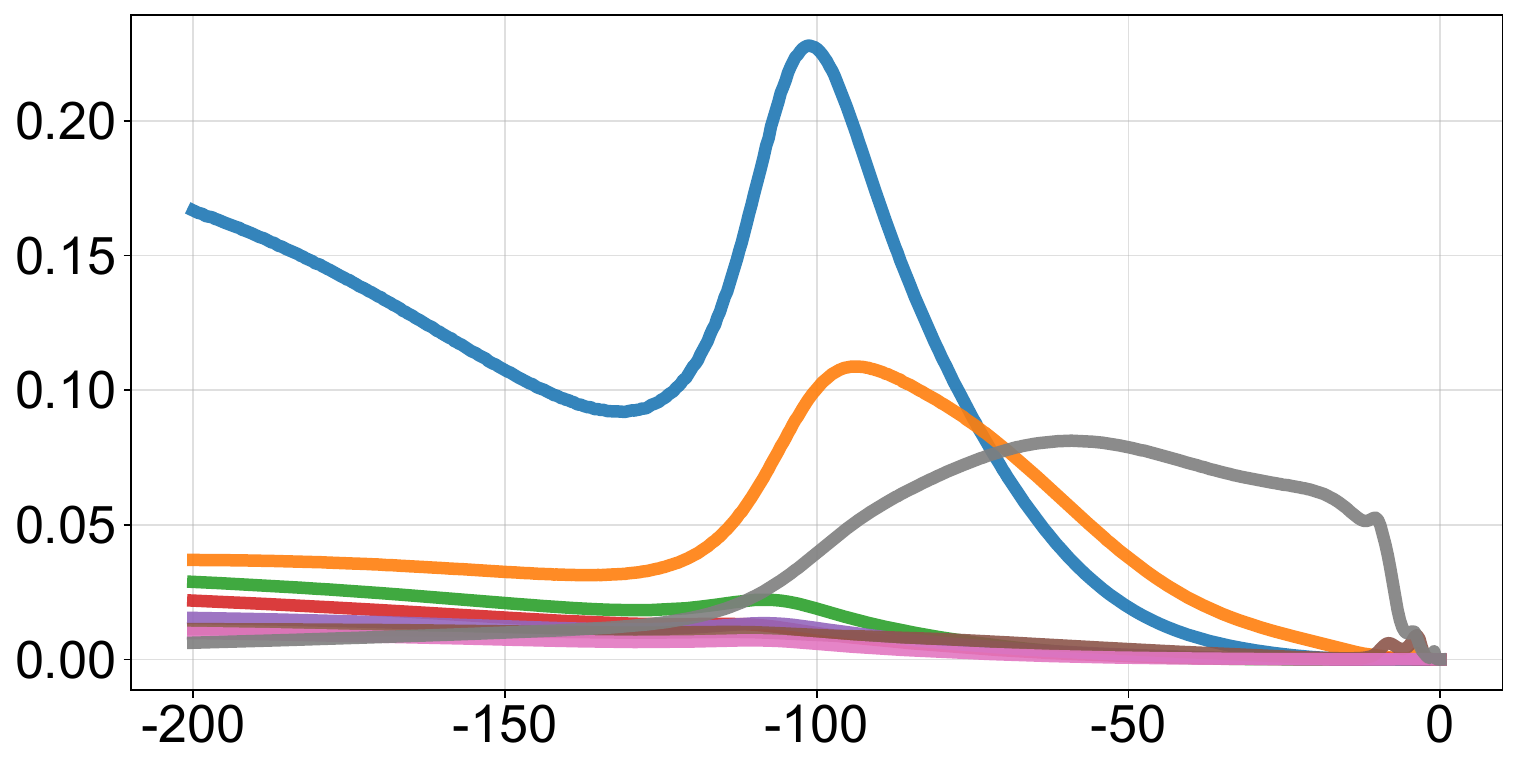}
\end{subfigure}\hfill
\begin{subfigure}[t]{0.32\textwidth}\centering
\includegraphics[width=1.\linewidth]{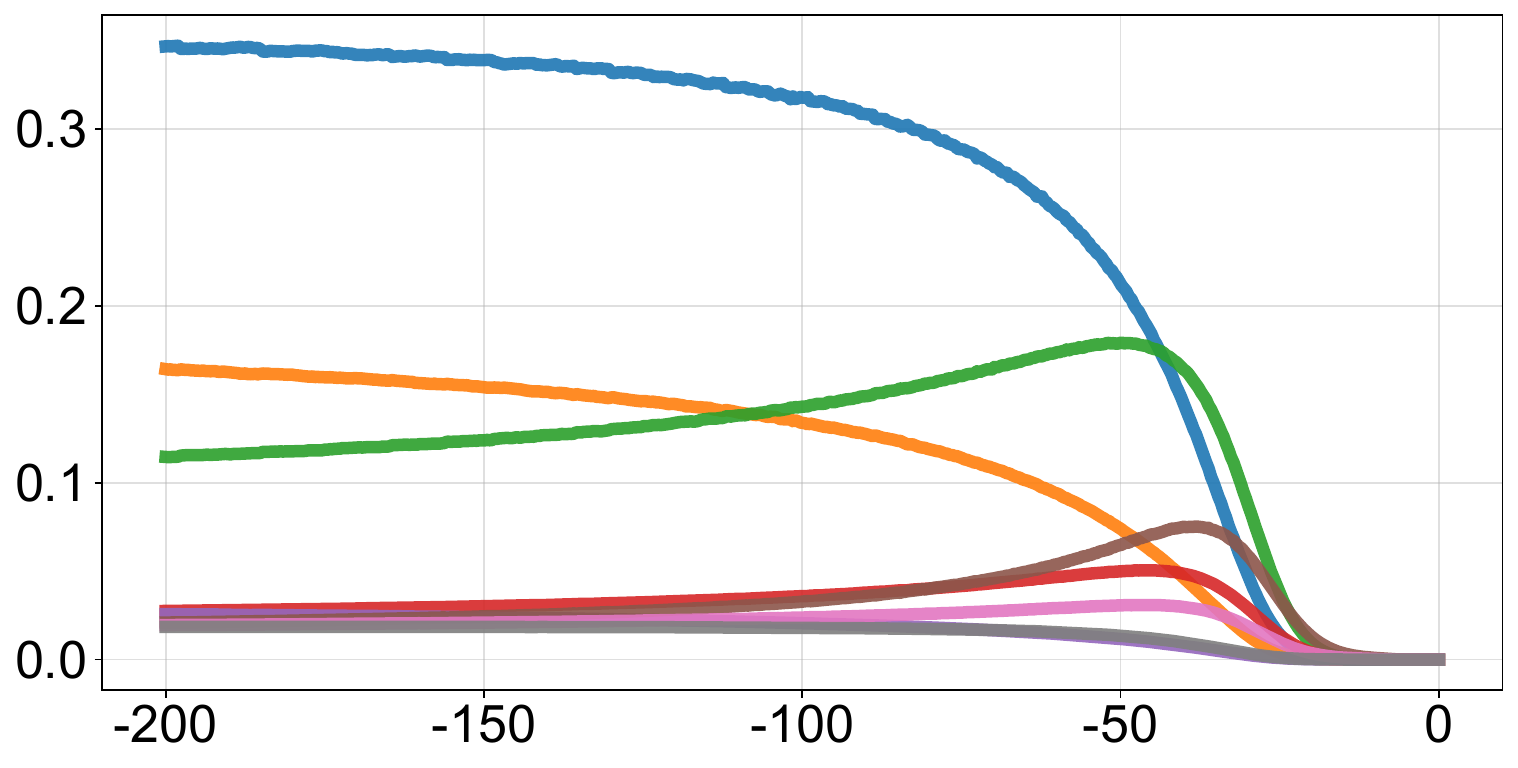}
\end{subfigure}\hfill
\begin{subfigure}[t]{0.32\textwidth}\centering
\includegraphics[width=1.\linewidth]{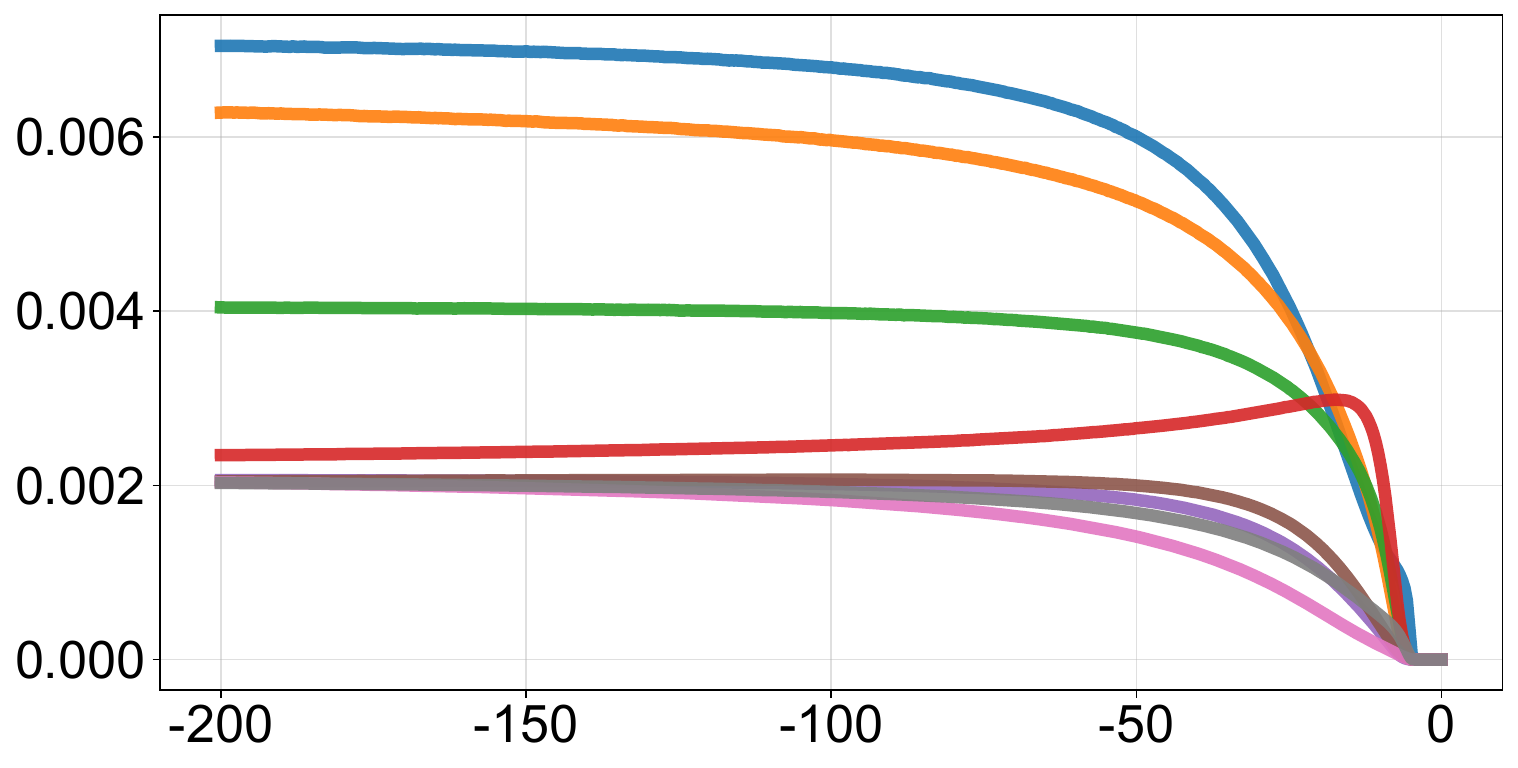}
\end{subfigure}

\vspace{2pt}
\begin{subfigure}[t]{0.32\textwidth}\centering
\includegraphics[width=1.\linewidth]{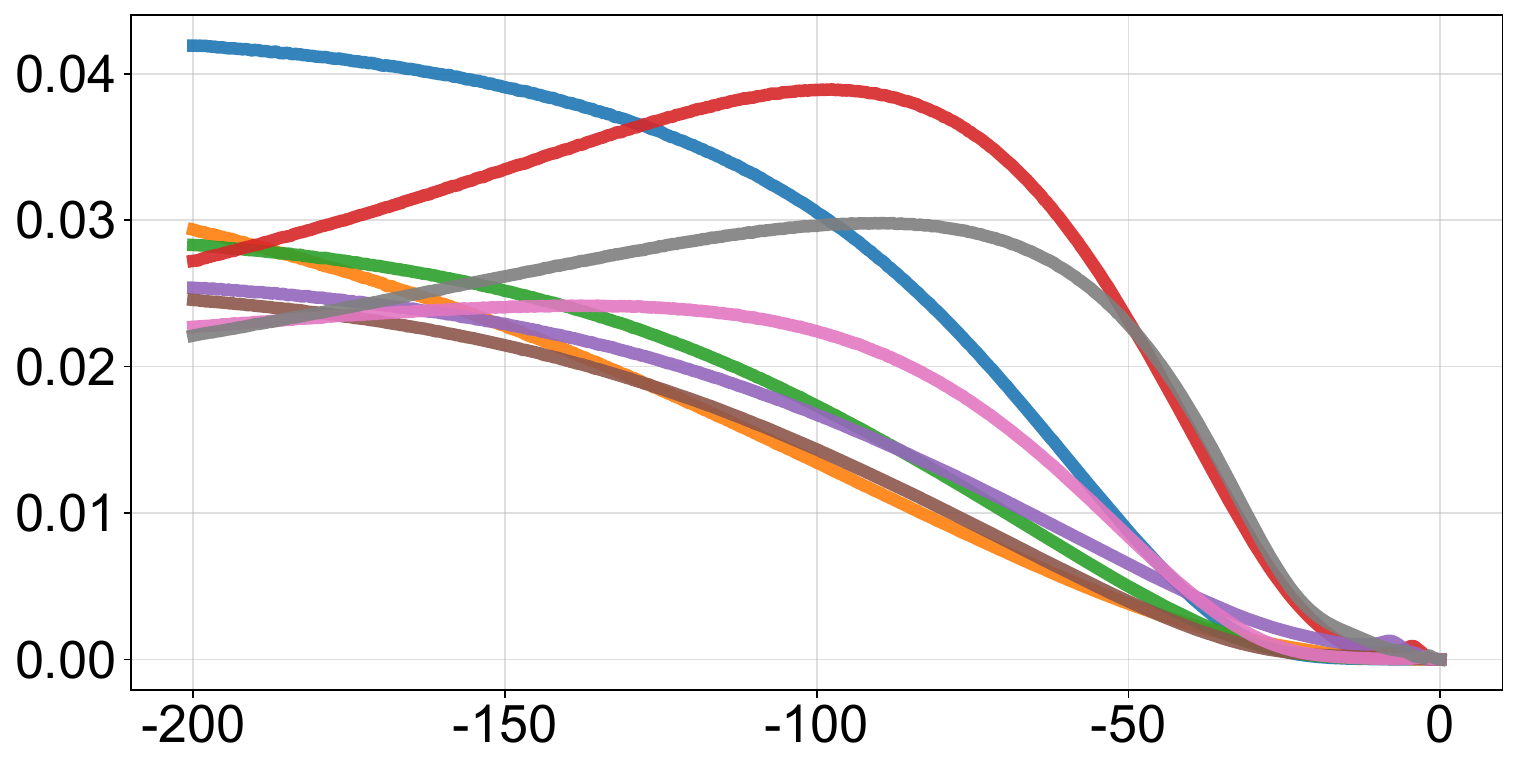}
\end{subfigure}\hfill
\begin{subfigure}[t]{0.32\textwidth}\centering
\includegraphics[width=1.\linewidth]{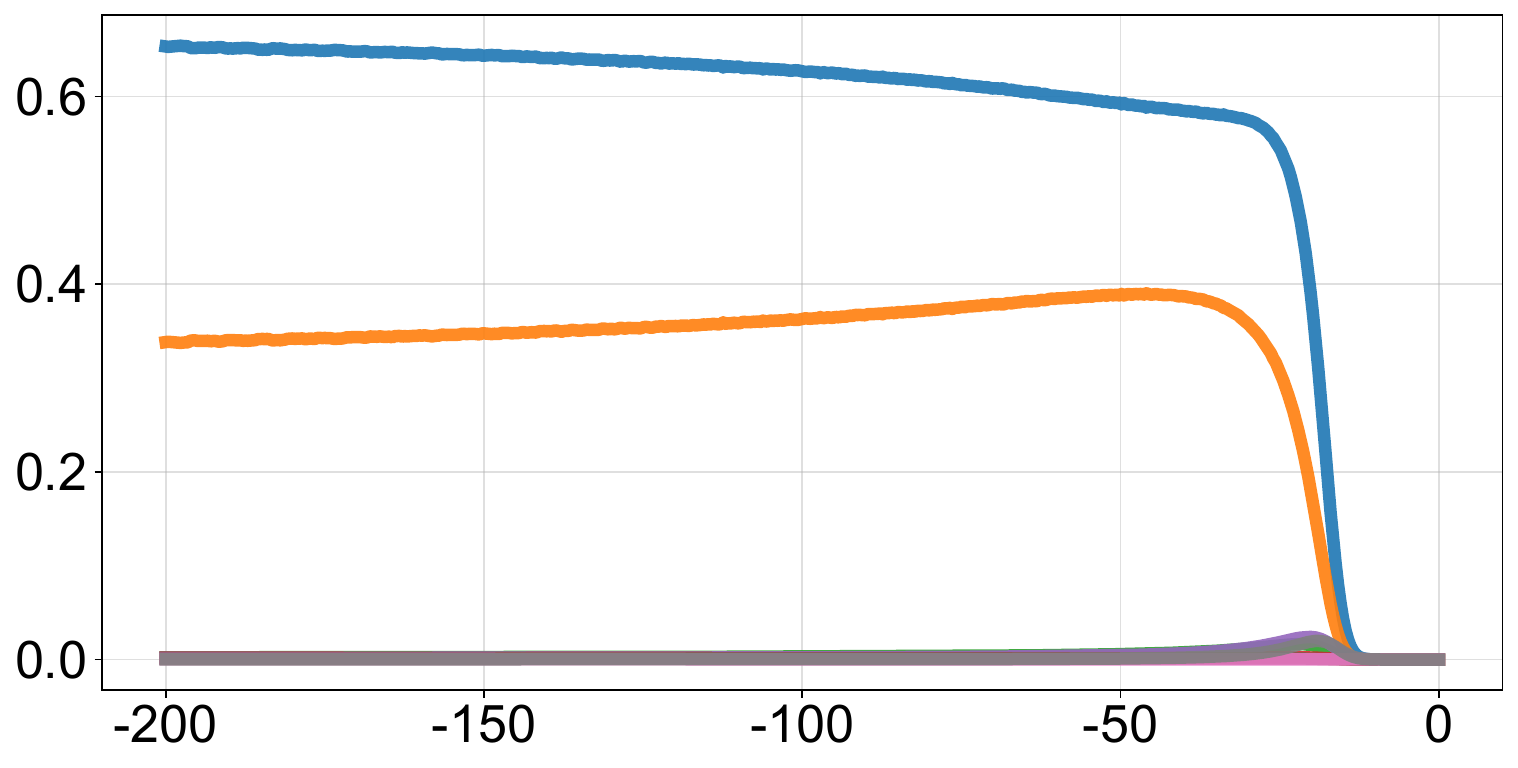}
\end{subfigure}\hfill
\begin{subfigure}[t]{0.32\textwidth}\centering
\includegraphics[width=1.\linewidth]{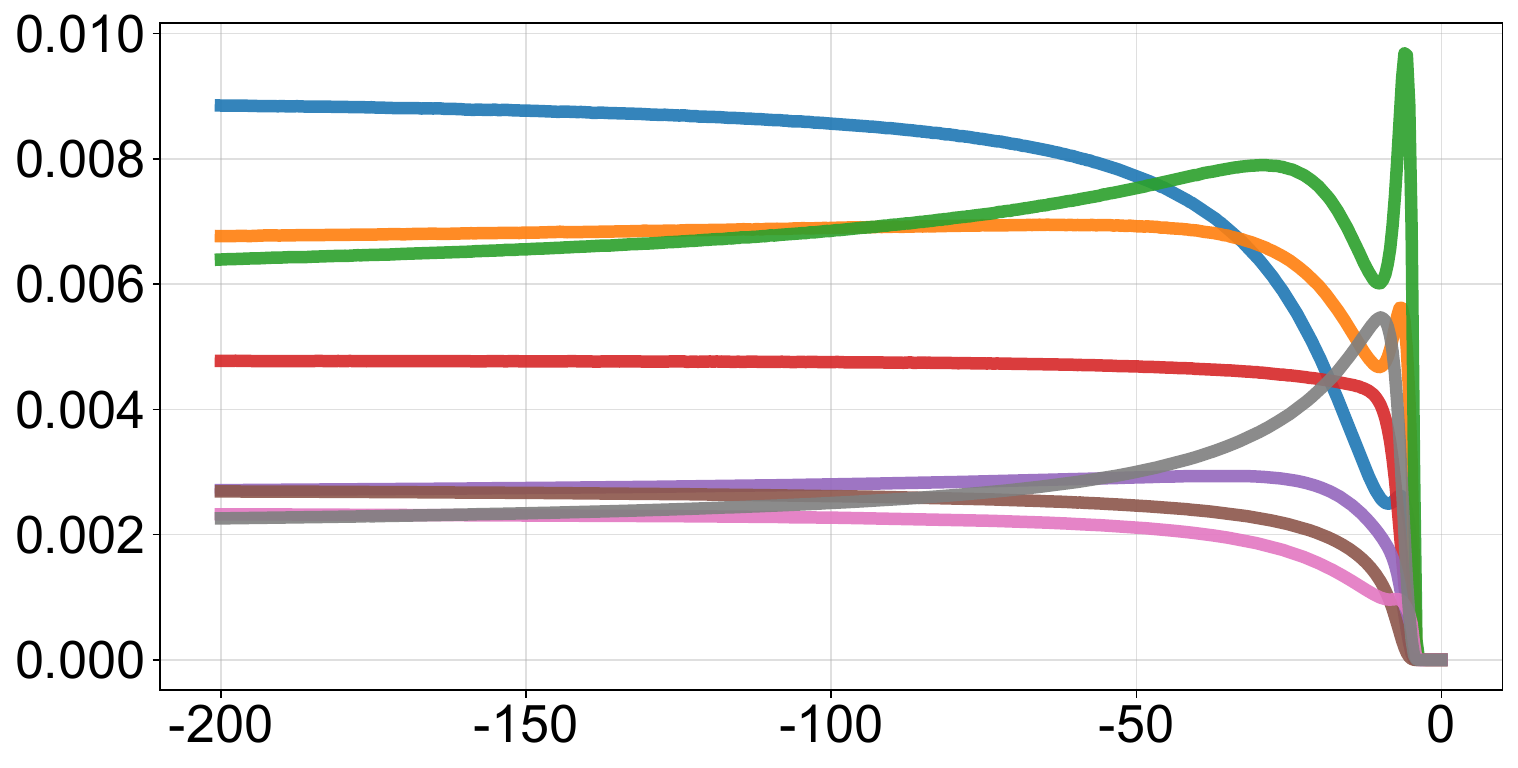}
\end{subfigure}

\vspace{-4pt}
\caption{\label{fig:exp-next-token-appendix-neg-layer-middle}Effect of steering strength $\alpha<0$ on next-token probability shifts $\Delta p(z,\alpha)$ for the concepts (top to bottom): depression, evil, impolite, joy, lying, and apathetic. Each row of three plots corresponds to a single steered concept. Steering is applied at a \textbf{middle} layer in each model (we steer always the same layer for that model). Each curve corresponds to a token $z$ selected among the eight highest-probability tokens at $\alpha=-200$. This matches Theorem~\ref{th:non-monotonicity}: most tokens exhibit a bump, while a few increase throughout. Notably, none of the selected tokens are related to the steered concept.}
\end{figure}

\begin{figure}[p]
\centering
\rowtitle{Steered model: google/gemma-3-1b-it}
\begin{subfigure}[t]{0.48\textwidth}\centering
\scriptsize\textbf{Coding}\par\smallskip
\includegraphics[width=\linewidth]{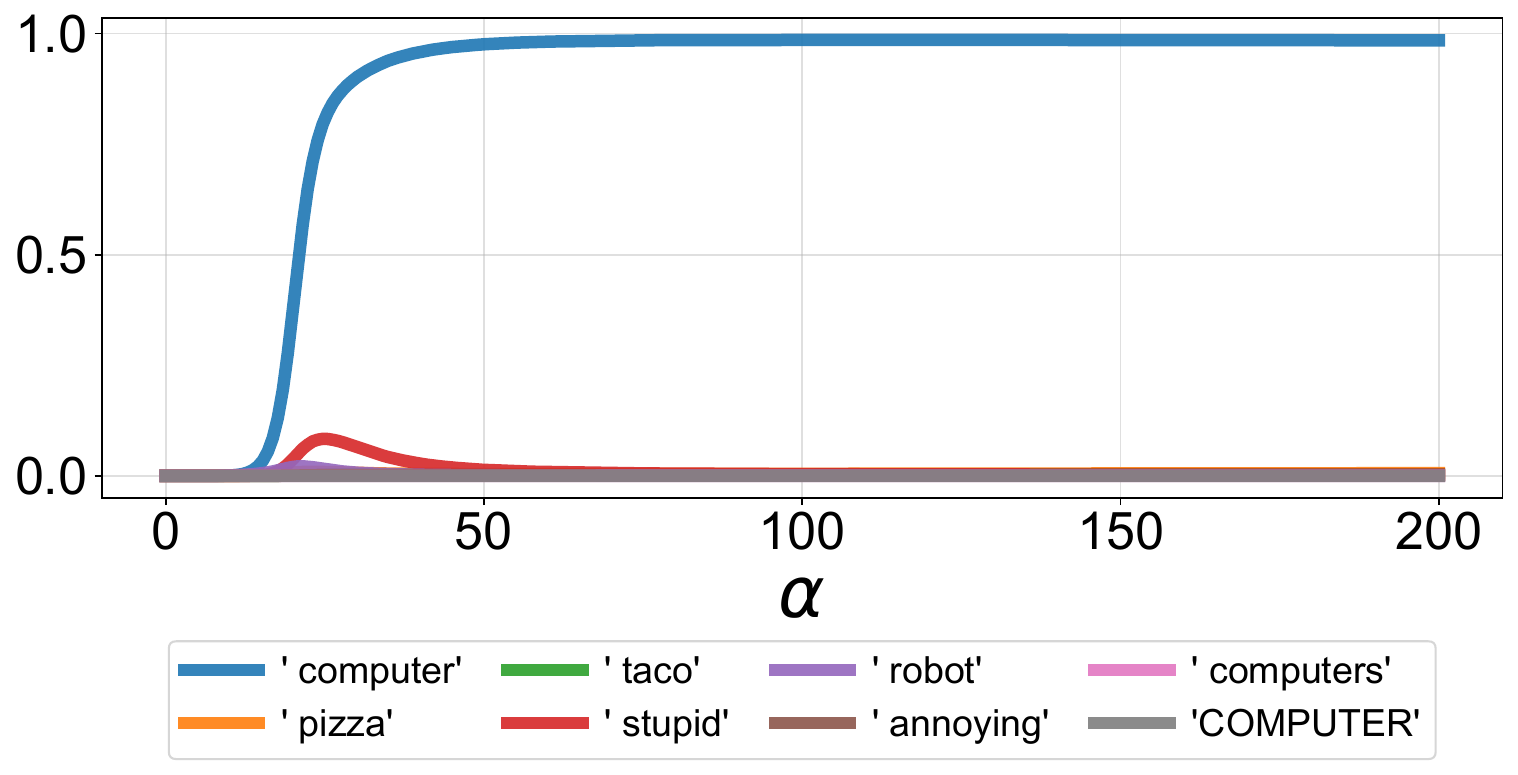}
\end{subfigure}\hfill
\begin{subfigure}[t]{0.48\textwidth}\centering
\scriptsize\textbf{Math}\par\smallskip
\includegraphics[width=\linewidth]{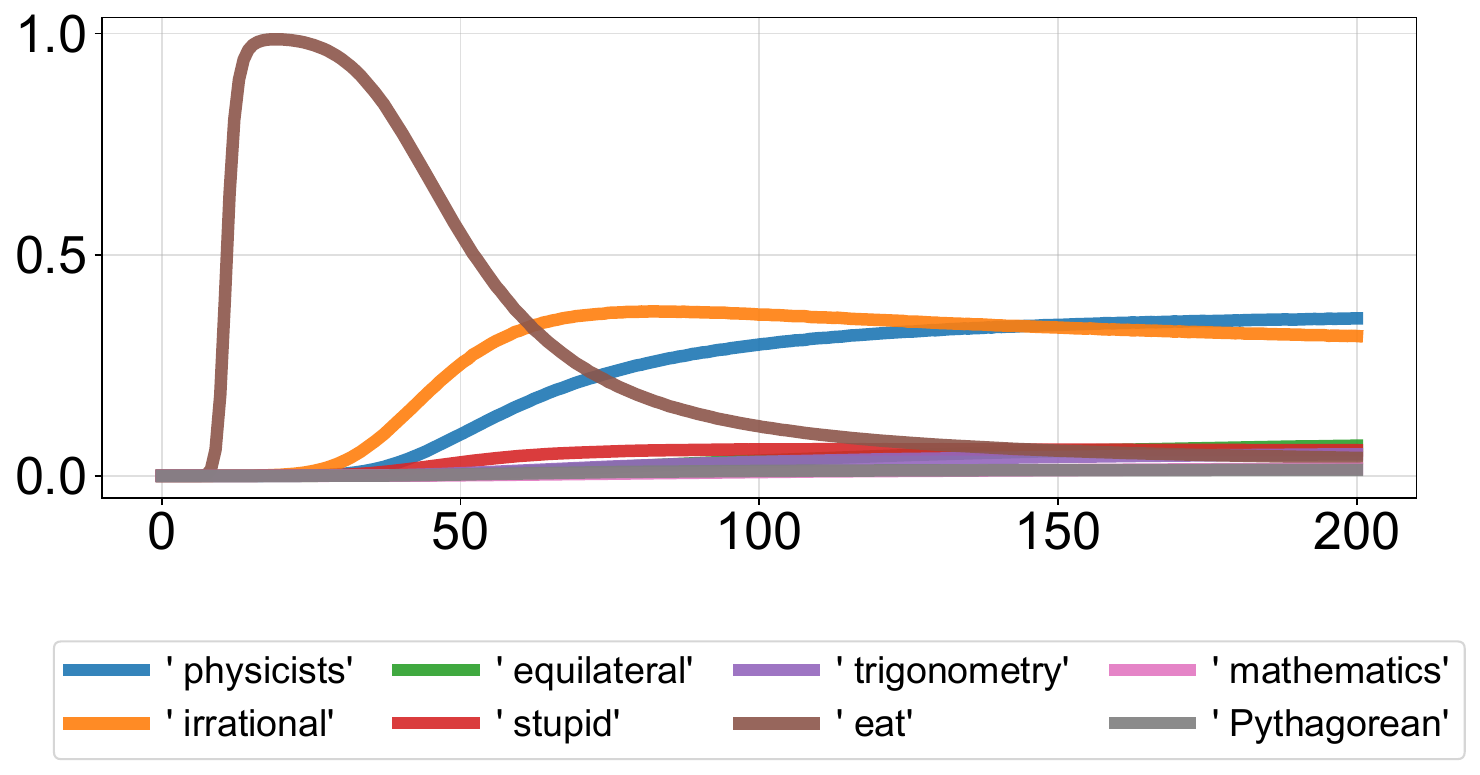}
\end{subfigure}\hfill
\begin{subfigure}[t]{0.48\textwidth}\centering
\scriptsize\textbf{Medical}\par\smallskip
\includegraphics[width=\linewidth]{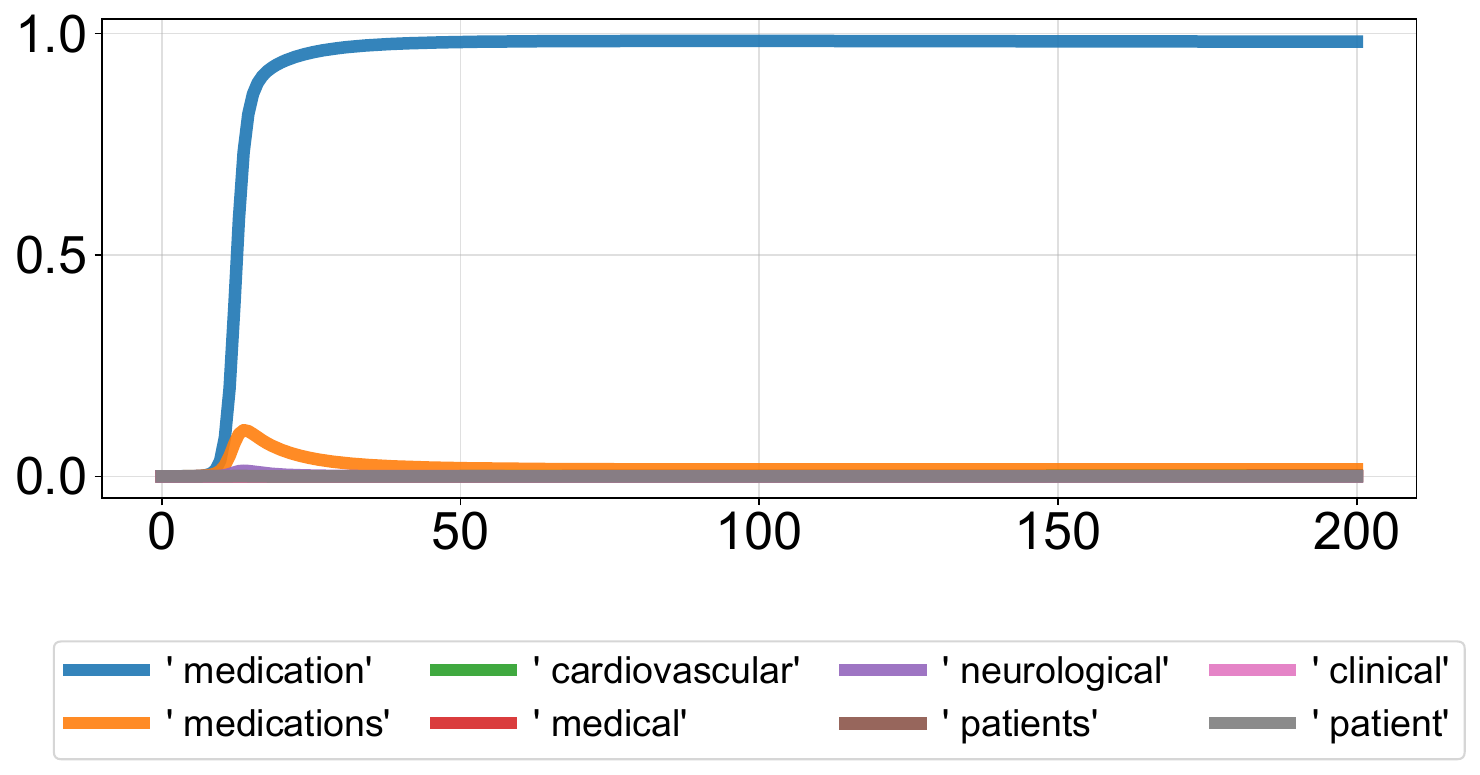}
\end{subfigure}\hfill
\begin{subfigure}[t]{0.48\textwidth}\centering
\scriptsize\textbf{Sycophancy}\par\smallskip
\includegraphics[width=\linewidth]{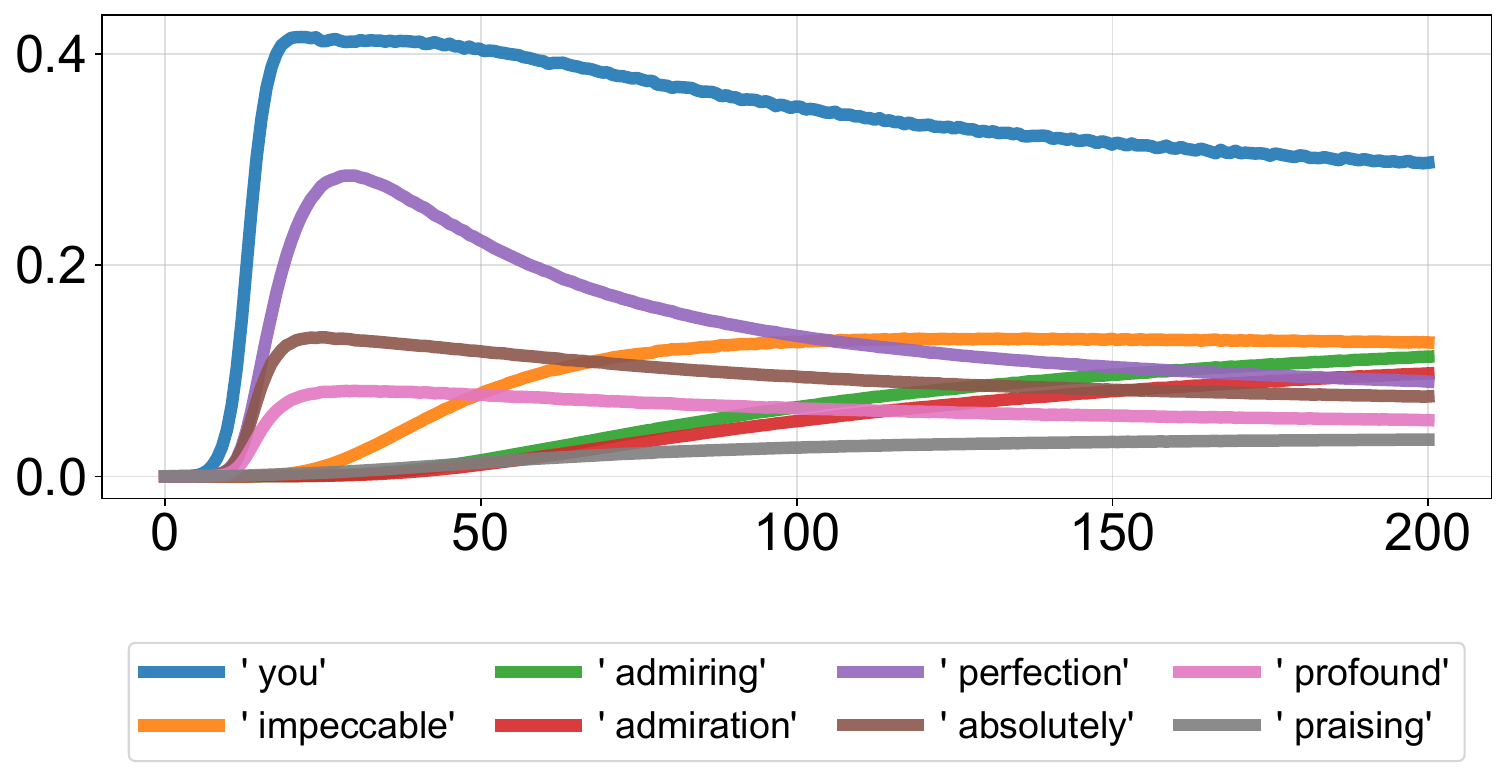}
\end{subfigure}
\rowtitle{Steered model: mistralai/Mistral-7B-Instruct-v0.3}
\begin{subfigure}[t]{0.48\textwidth}\centering
\scriptsize\textbf{Coding}\par\smallskip
\includegraphics[width=\linewidth]{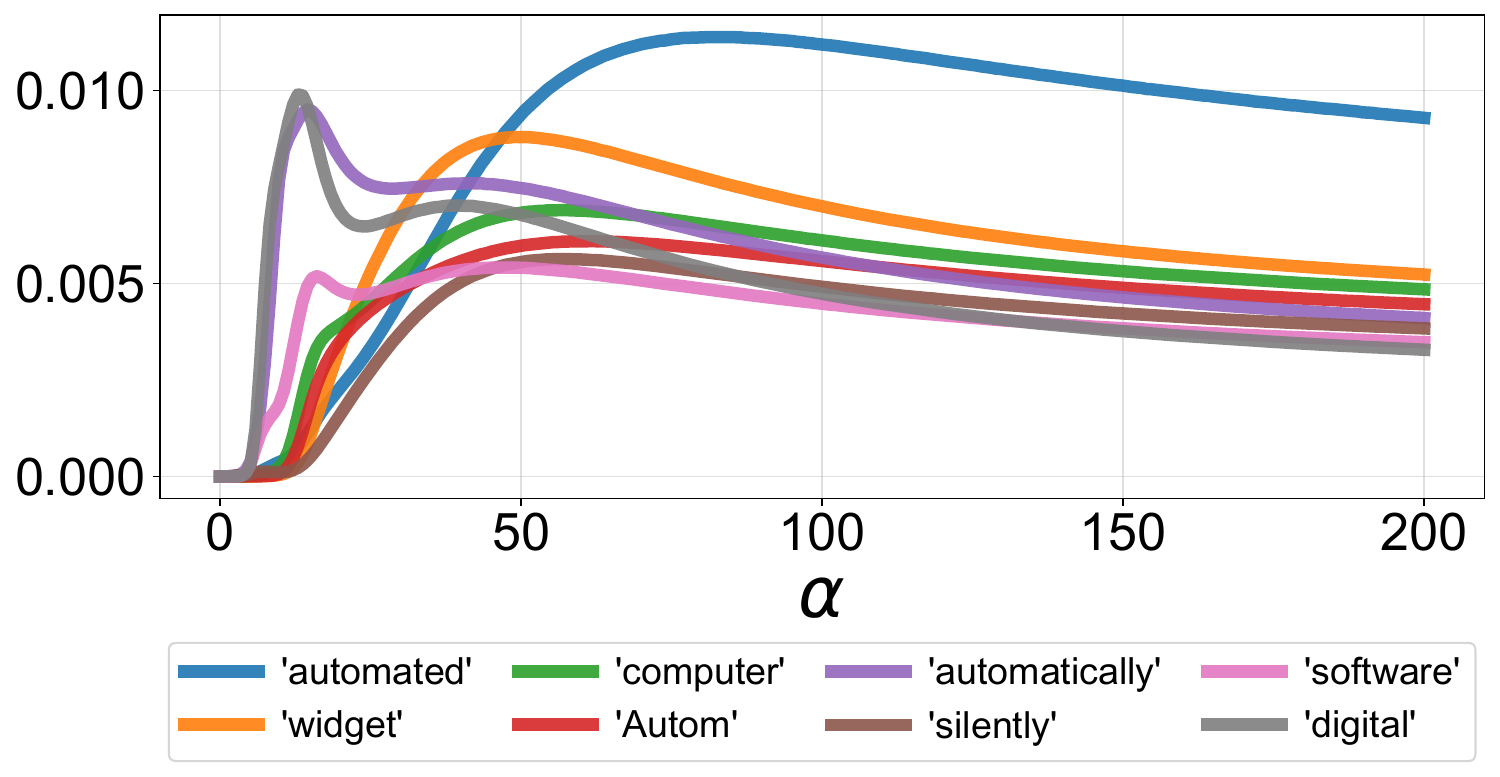}
\end{subfigure}\hfill
\begin{subfigure}[t]{0.48\textwidth}\centering
\scriptsize\textbf{Math}\par\smallskip
\includegraphics[width=\linewidth]{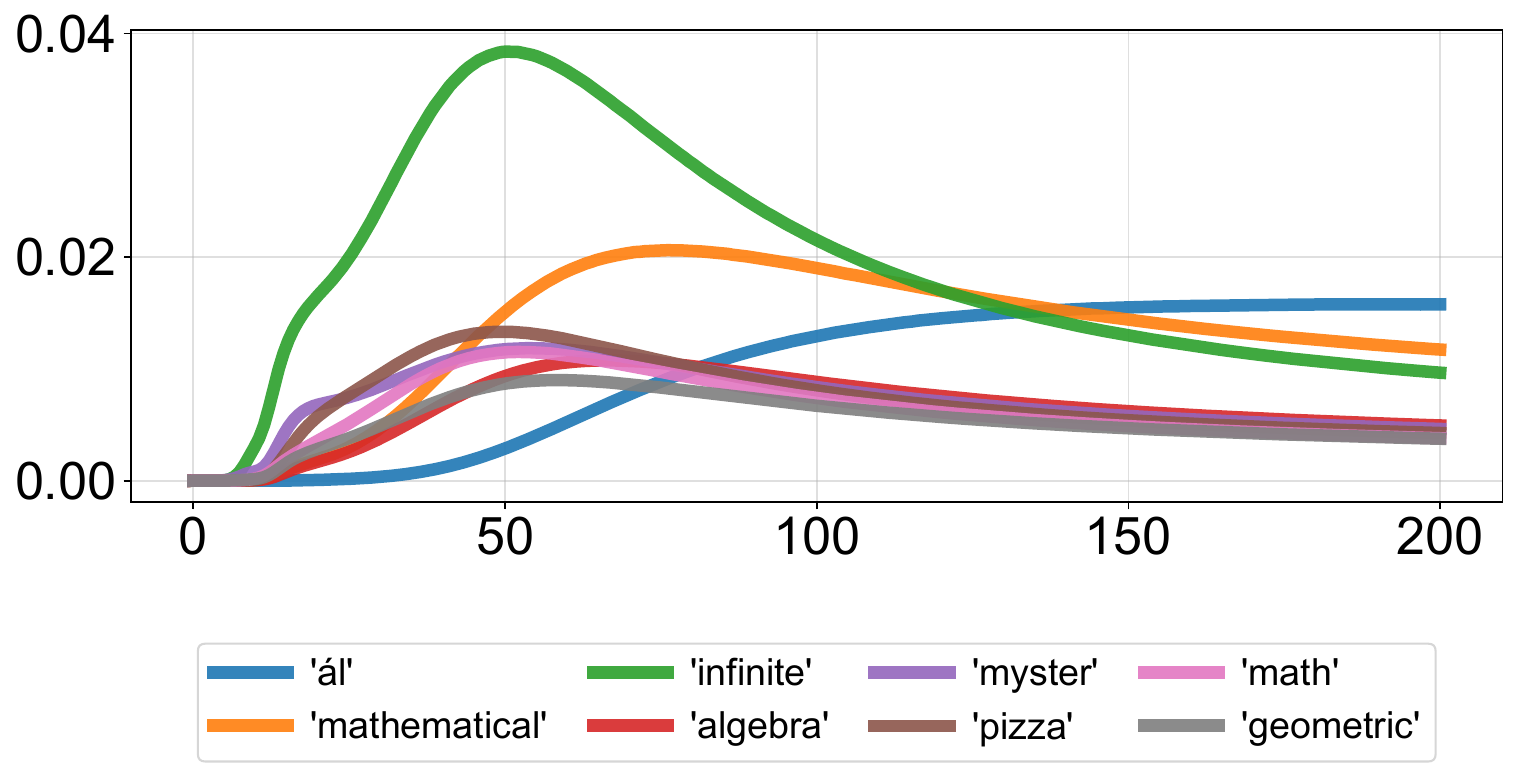}
\end{subfigure}\hfill
\begin{subfigure}[t]{0.48\textwidth}\centering
\scriptsize\textbf{Medical}\par\smallskip
\includegraphics[width=\linewidth]{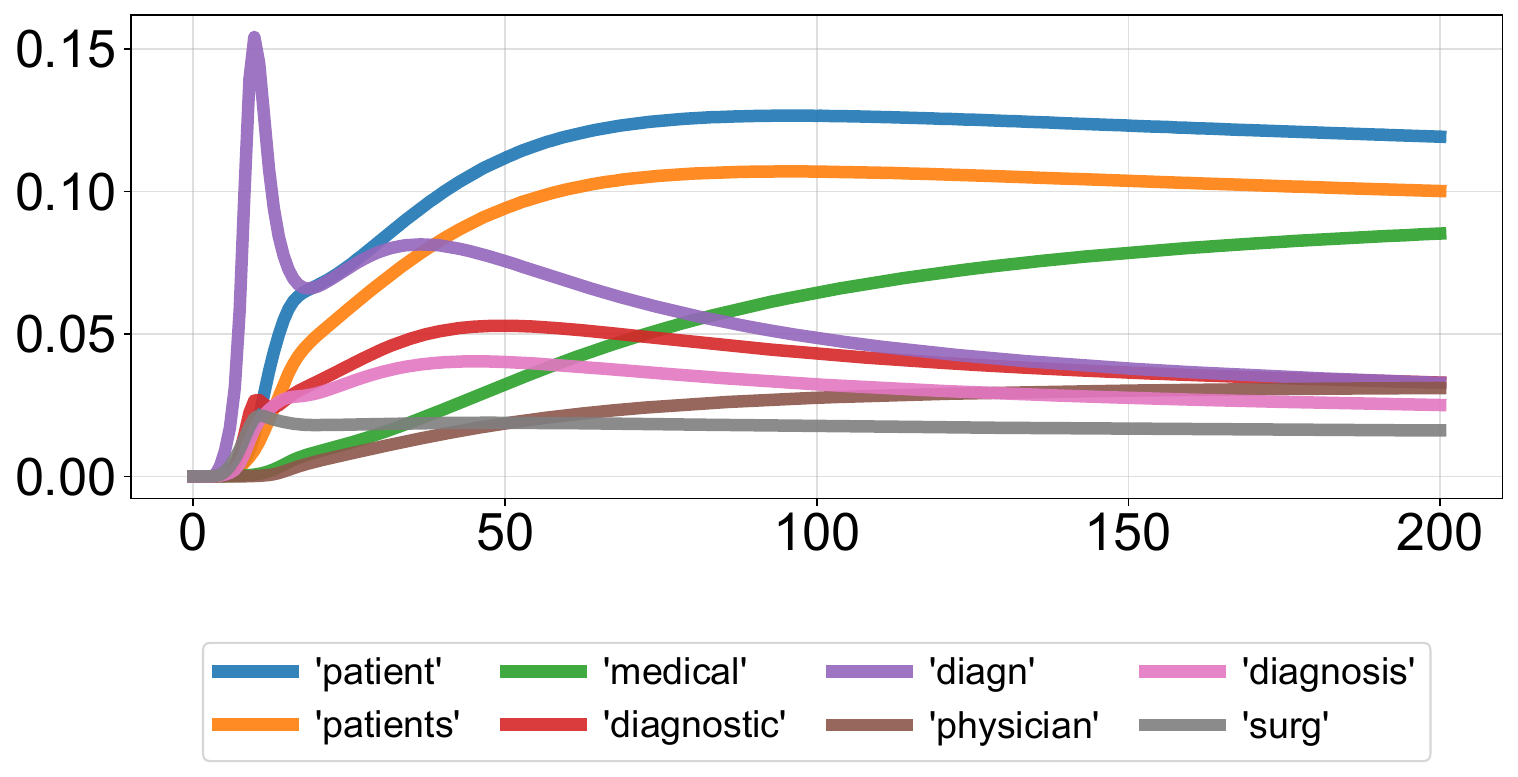}
\end{subfigure}\hfill
\begin{subfigure}[t]{0.48\textwidth}\centering
\scriptsize\textbf{Sycophancy}\par\smallskip
\includegraphics[width=\linewidth]{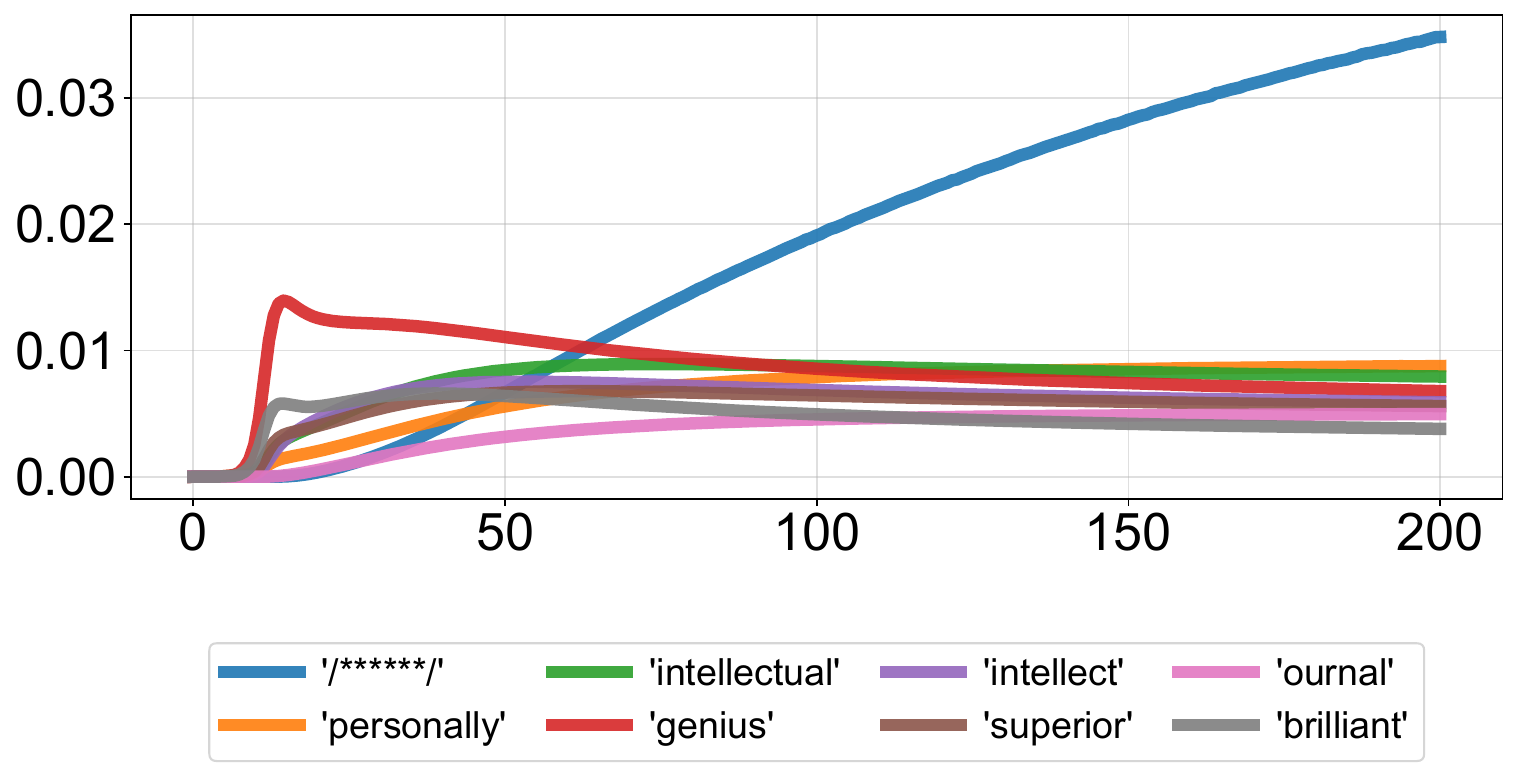}
\end{subfigure}
\caption{\label{fig:next-token-with-token-names}Examples of next-token probability increase $\Delta p(z,\alpha)$ with the token labels shown explicitly in the legend. Rows correspond to the model and columns within each row pair correspond to the steered concept, all at layer 15. Most of the tokens with high-probability at $\alpha = 200$ are concept related.}
\end{figure}

\begin{figure}
\centering

\begin{subfigure}[t]{0.32\textwidth}\centering
\includegraphics[width=1.\linewidth]{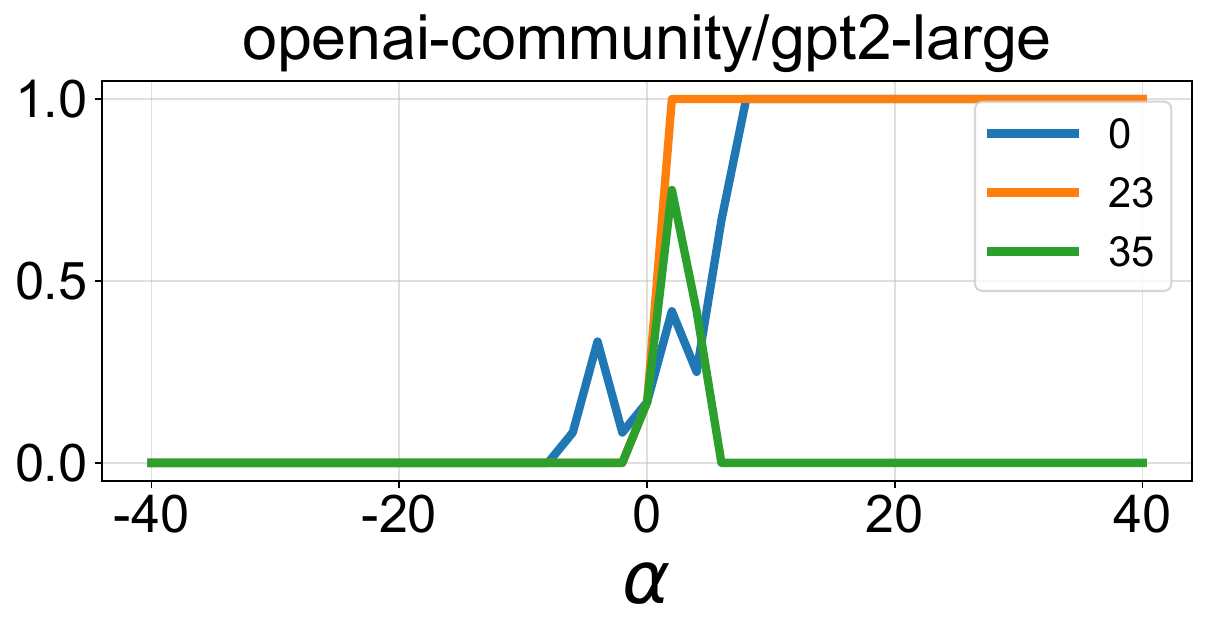}
\end{subfigure}\hfill
\begin{subfigure}[t]{0.32\textwidth}\centering
\includegraphics[width=1.\linewidth]{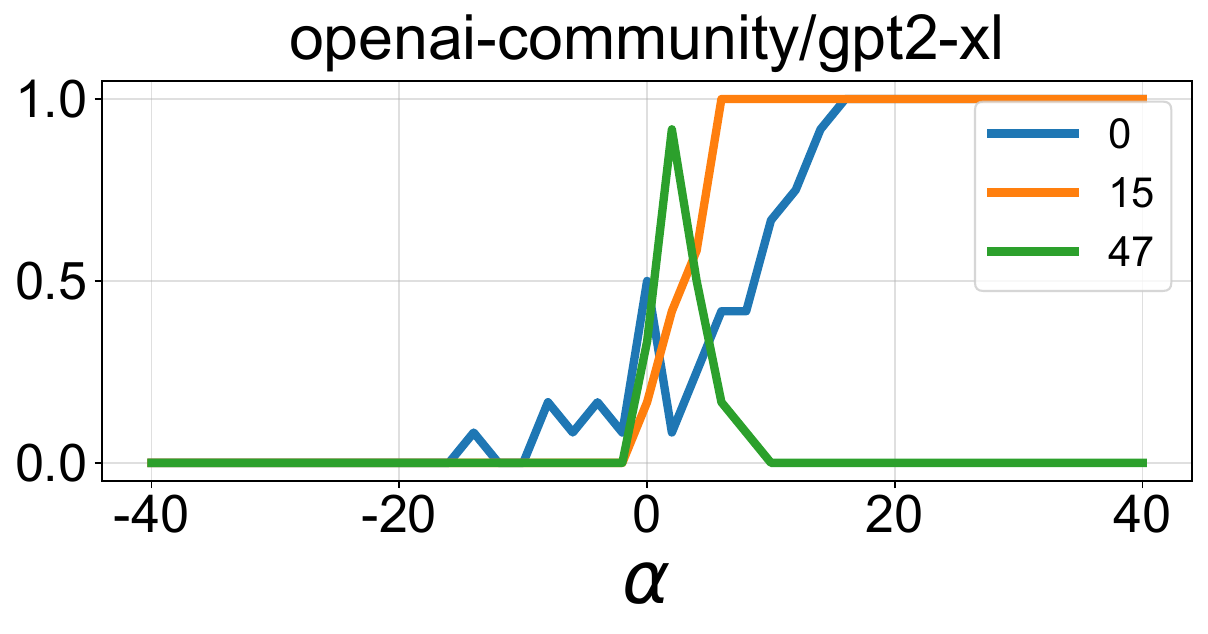}
\end{subfigure}\hfill
\begin{subfigure}[t]{0.32\textwidth}\centering
\includegraphics[width=1.\linewidth]{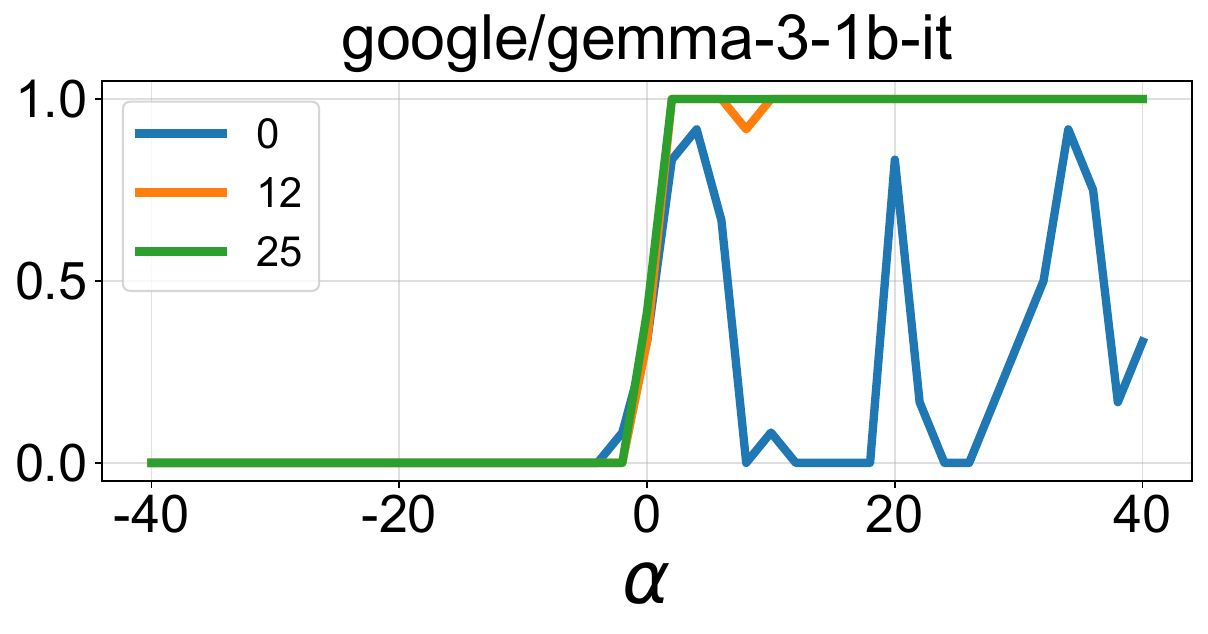}
\end{subfigure}

\vspace{2pt}
\begin{subfigure}[t]{0.32\textwidth}\centering
\includegraphics[width=1.\linewidth]{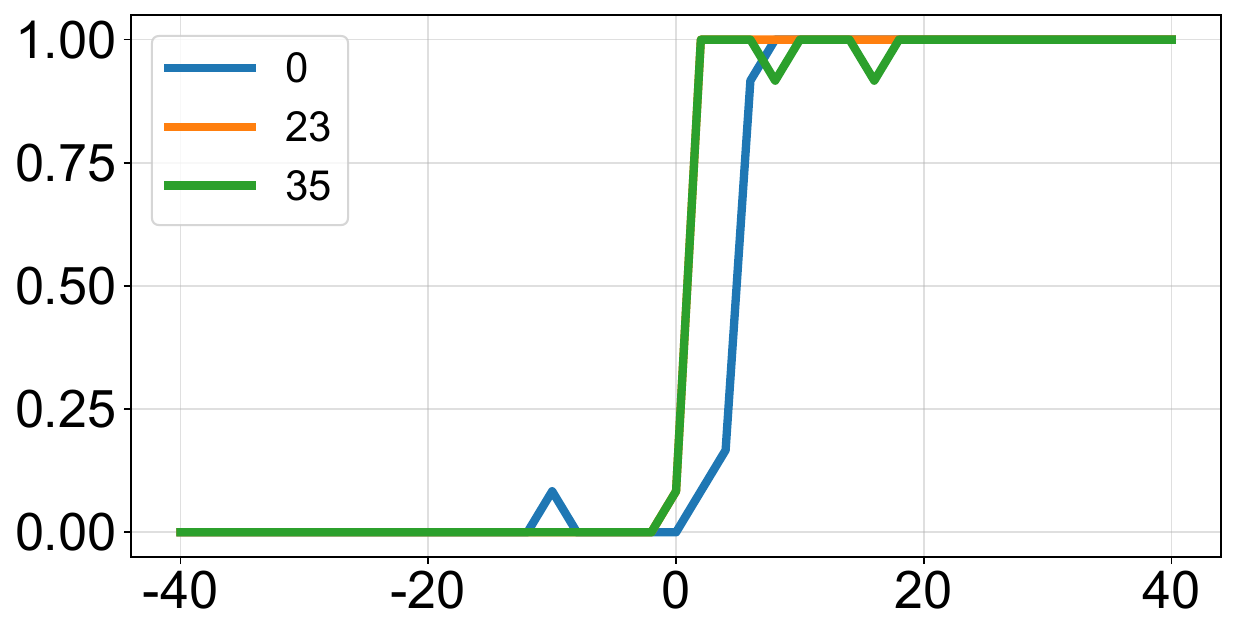}
\end{subfigure}\hfill
\begin{subfigure}[t]{0.32\textwidth}\centering
\includegraphics[width=1.\linewidth]{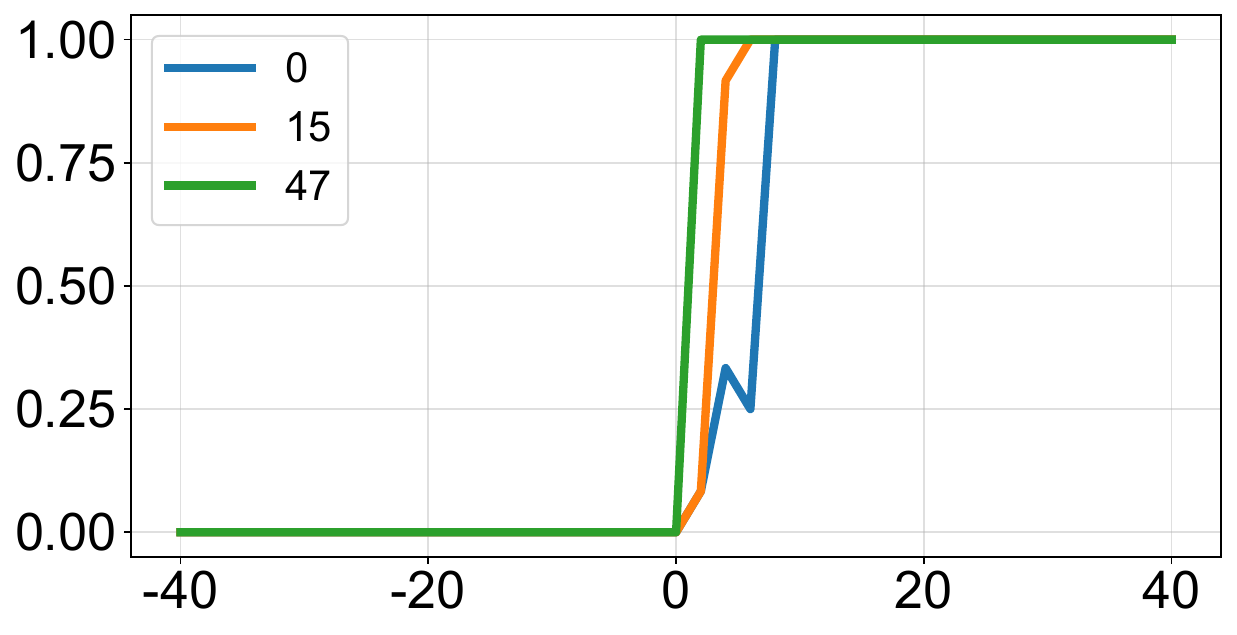}
\end{subfigure}\hfill
\begin{subfigure}[t]{0.32\textwidth}\centering
\includegraphics[width=1.\linewidth]{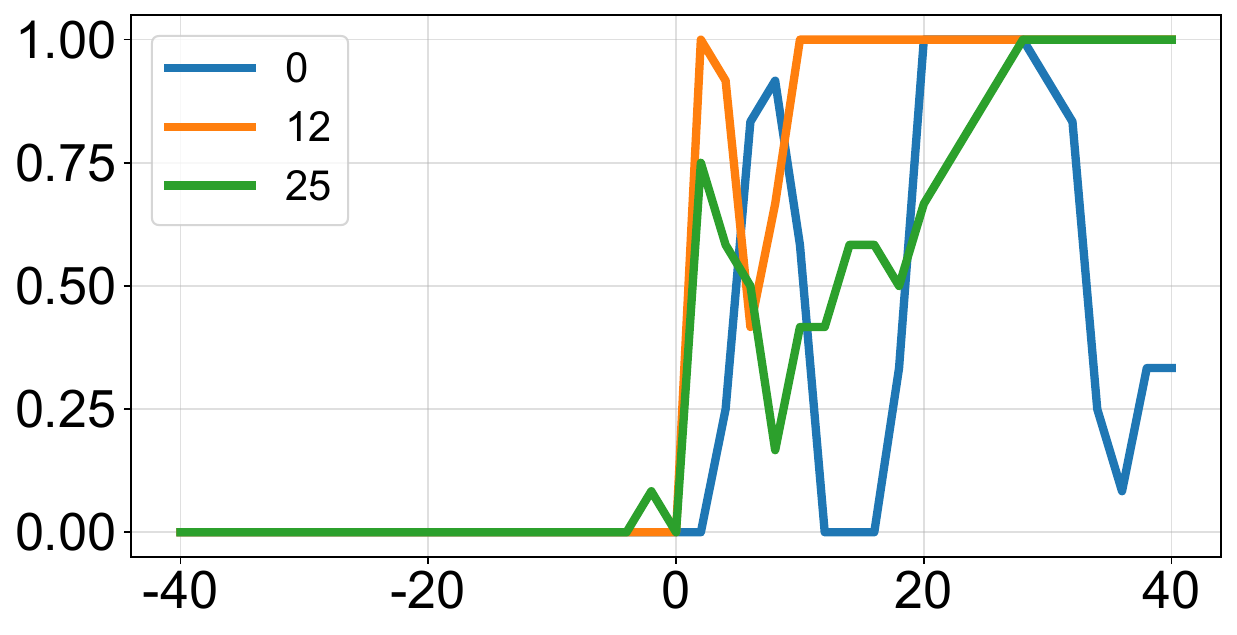}
\end{subfigure}

\vspace{2pt}
\begin{subfigure}[t]{0.32\textwidth}\centering
\includegraphics[width=1.\linewidth]{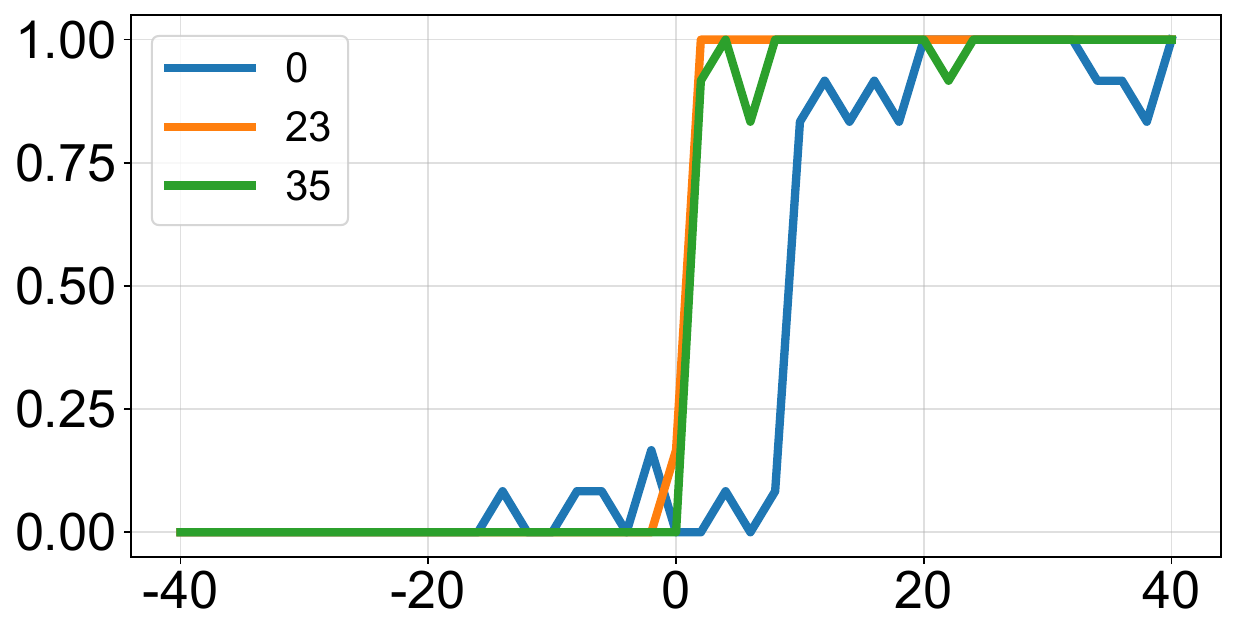}
\end{subfigure}\hfill
\begin{subfigure}[t]{0.32\textwidth}\centering
\includegraphics[width=1.\linewidth]{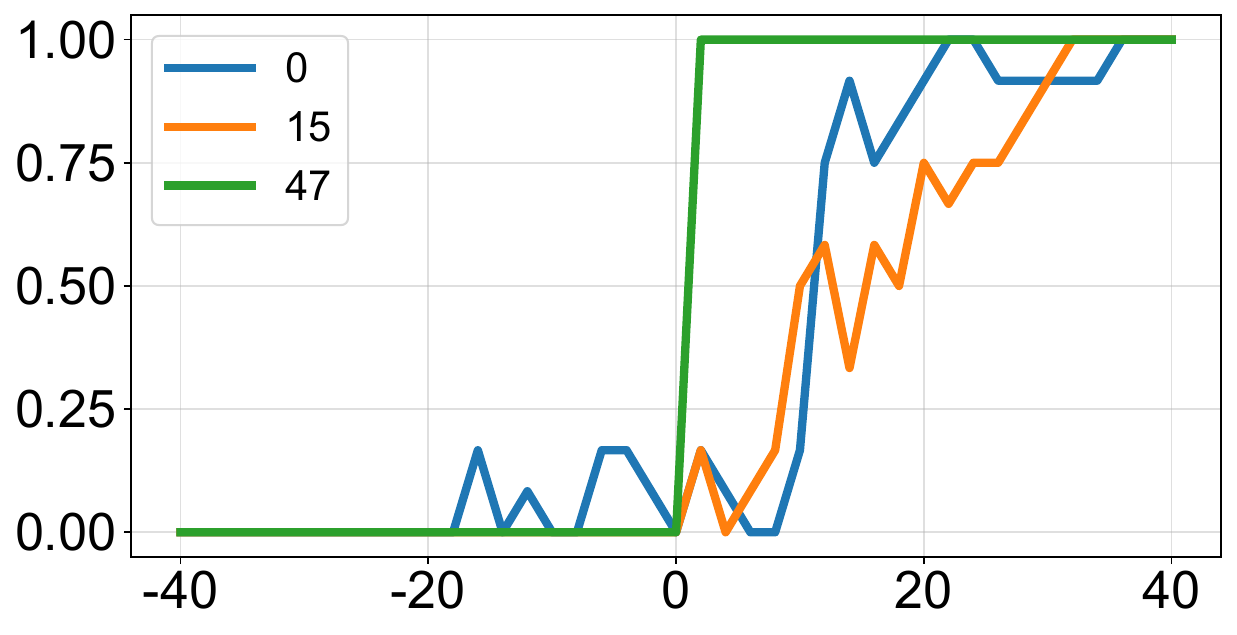}
\end{subfigure}\hfill
\begin{subfigure}[t]{0.32\textwidth}\centering
\includegraphics[width=1.\linewidth]{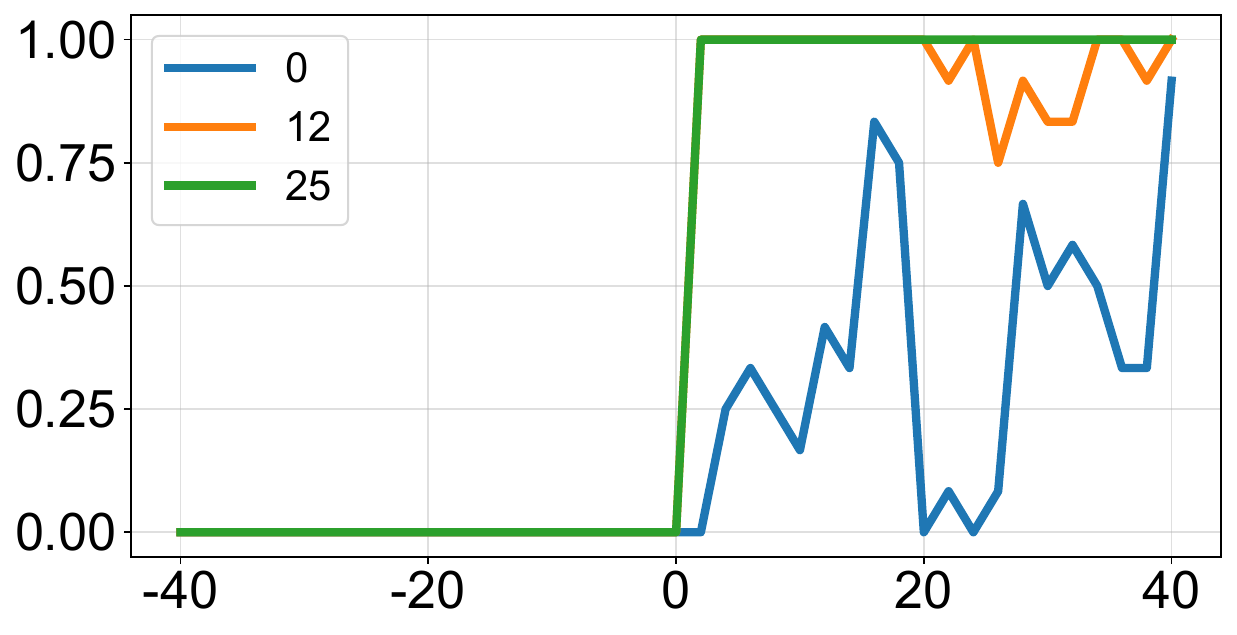}
\end{subfigure}

\vspace{2pt}
\begin{subfigure}[t]{0.32\textwidth}\centering
\includegraphics[width=1.\linewidth]{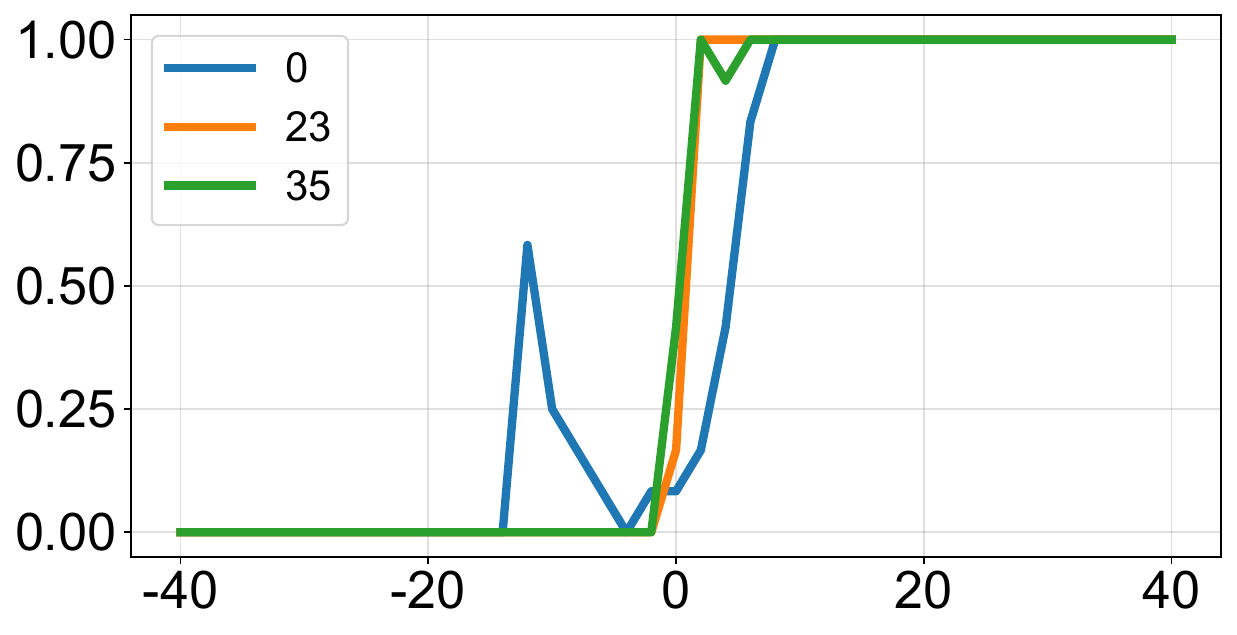}
\end{subfigure}\hfill
\begin{subfigure}[t]{0.32\textwidth}\centering
\includegraphics[width=1.\linewidth]{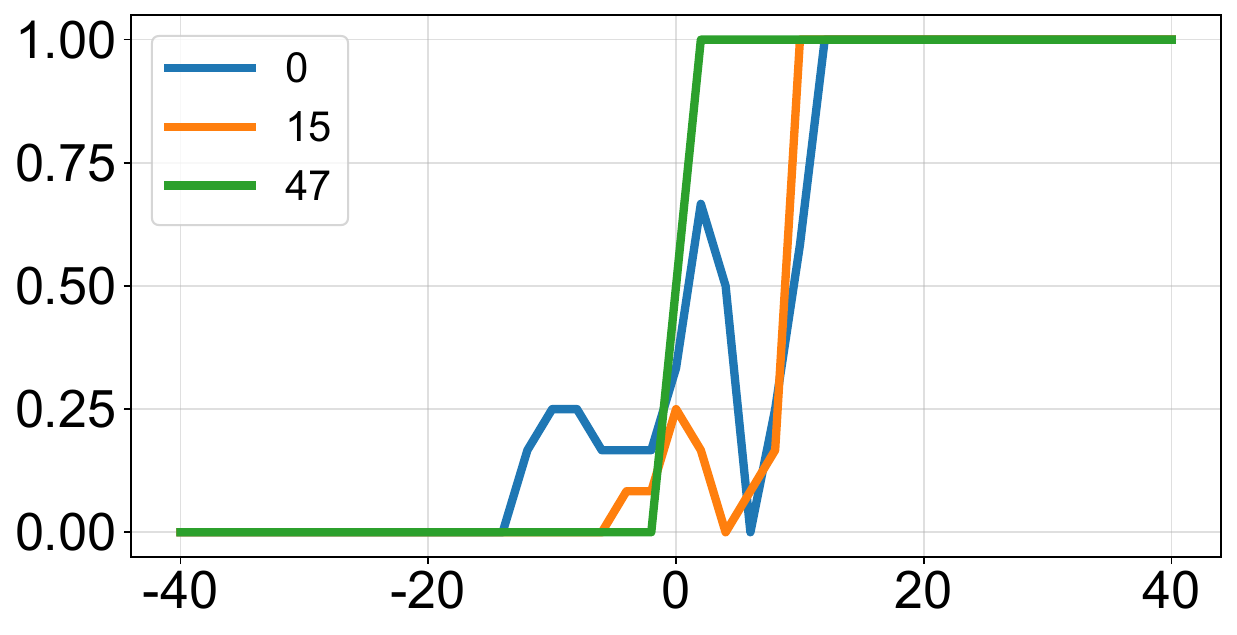}
\end{subfigure}\hfill
\begin{subfigure}[t]{0.32\textwidth}\centering
\includegraphics[width=1.\linewidth]{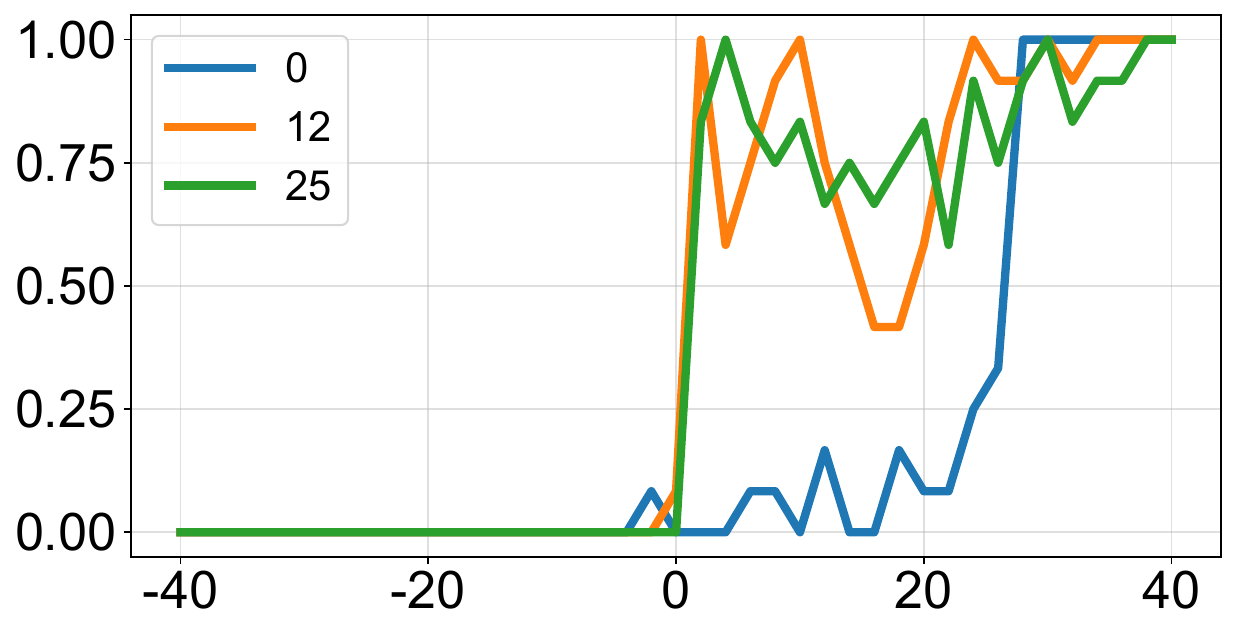}
\end{subfigure}

\vspace{2pt}
\begin{subfigure}[t]{0.32\textwidth}\centering
\includegraphics[width=1.\linewidth]{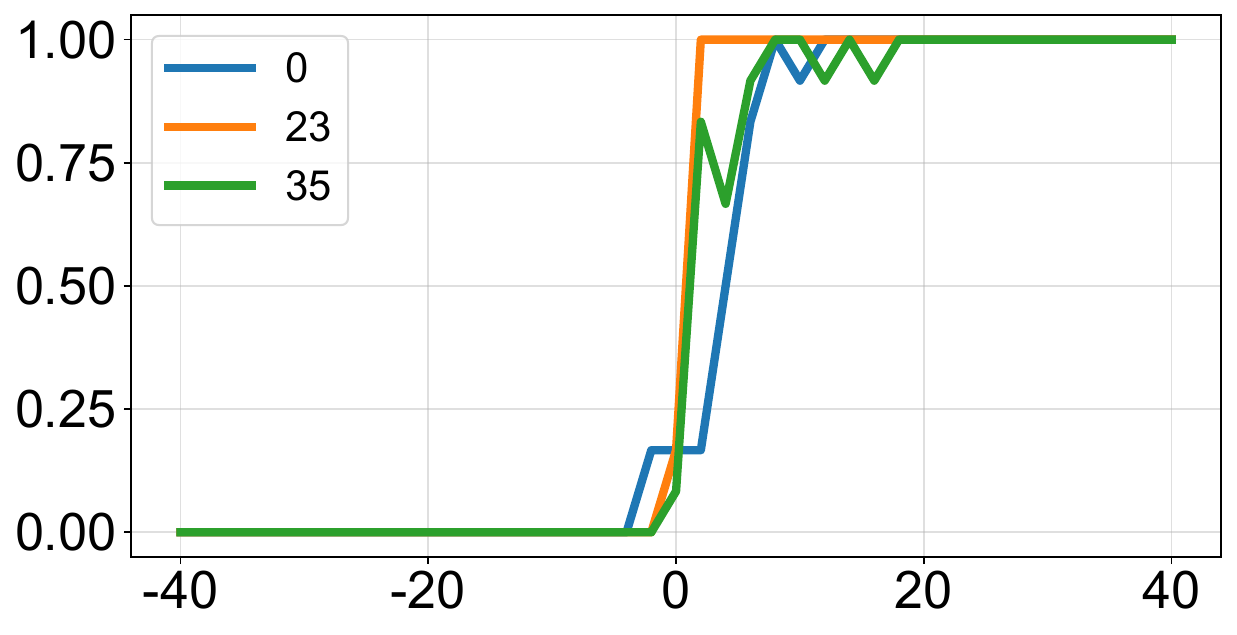}
\end{subfigure}\hfill
\begin{subfigure}[t]{0.32\textwidth}\centering
\includegraphics[width=1.\linewidth]{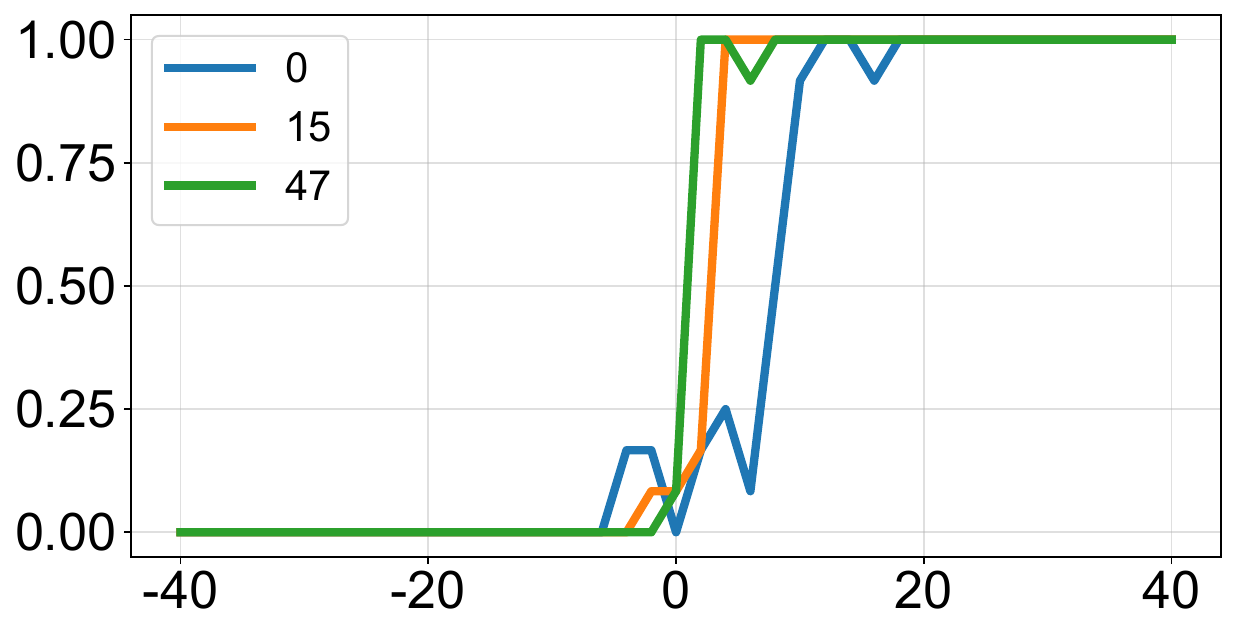}
\end{subfigure}\hfill
\begin{subfigure}[t]{0.32\textwidth}\centering
\includegraphics[width=1.\linewidth]{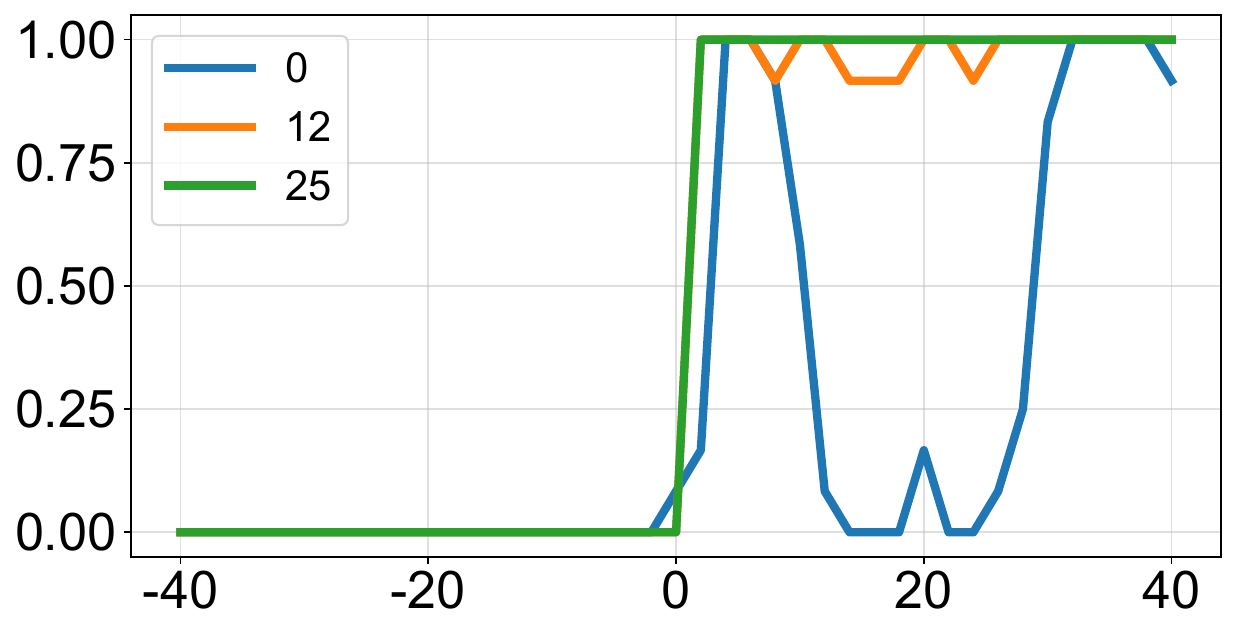}
\end{subfigure}

\vspace{2pt}
\begin{subfigure}[t]{0.32\textwidth}\centering
\includegraphics[width=1.\linewidth]{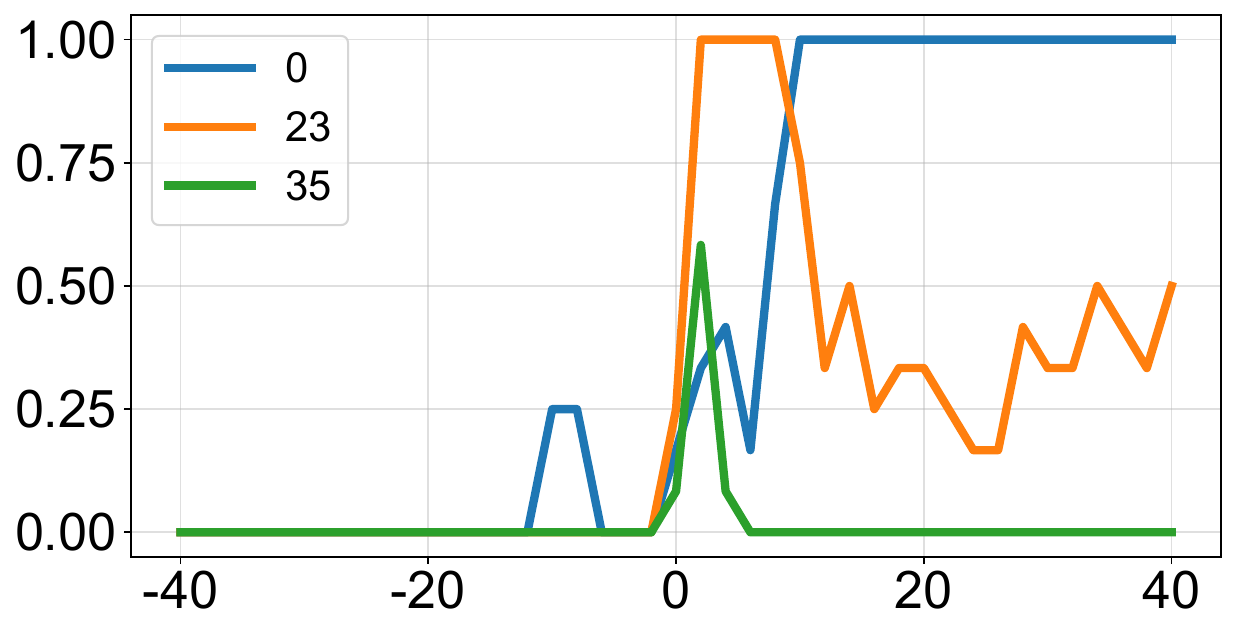}
\end{subfigure}\hfill
\begin{subfigure}[t]{0.32\textwidth}\centering
\includegraphics[width=1.\linewidth]{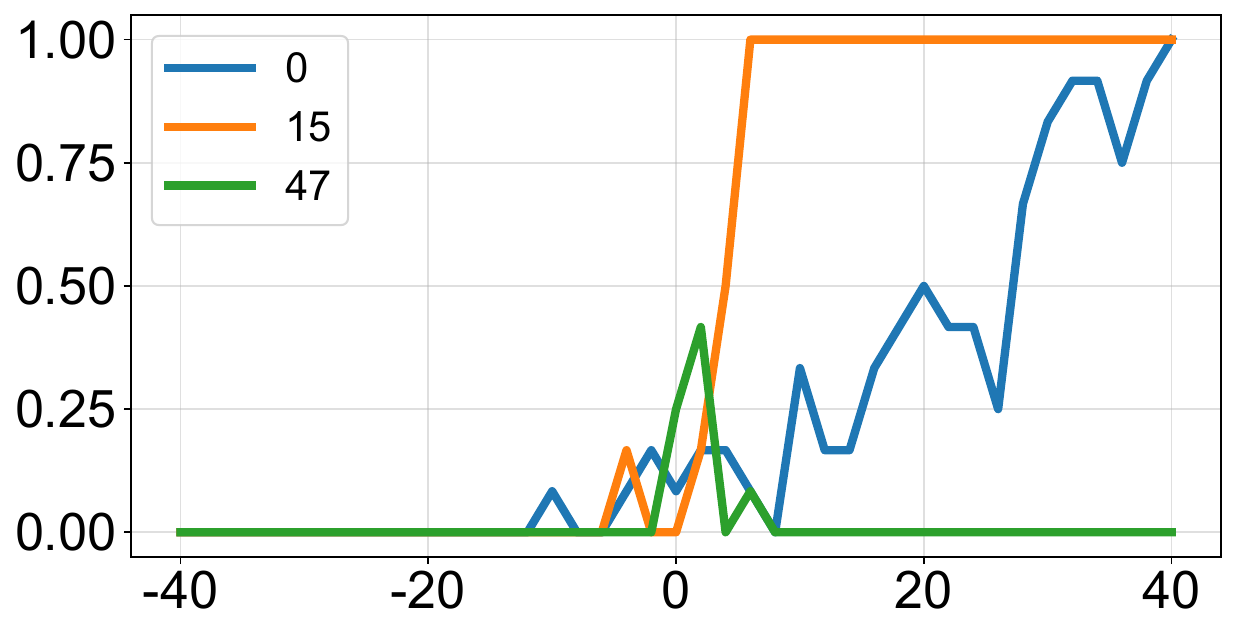}
\end{subfigure}\hfill
\begin{subfigure}[t]{0.32\textwidth}\centering
\includegraphics[width=1.\linewidth]{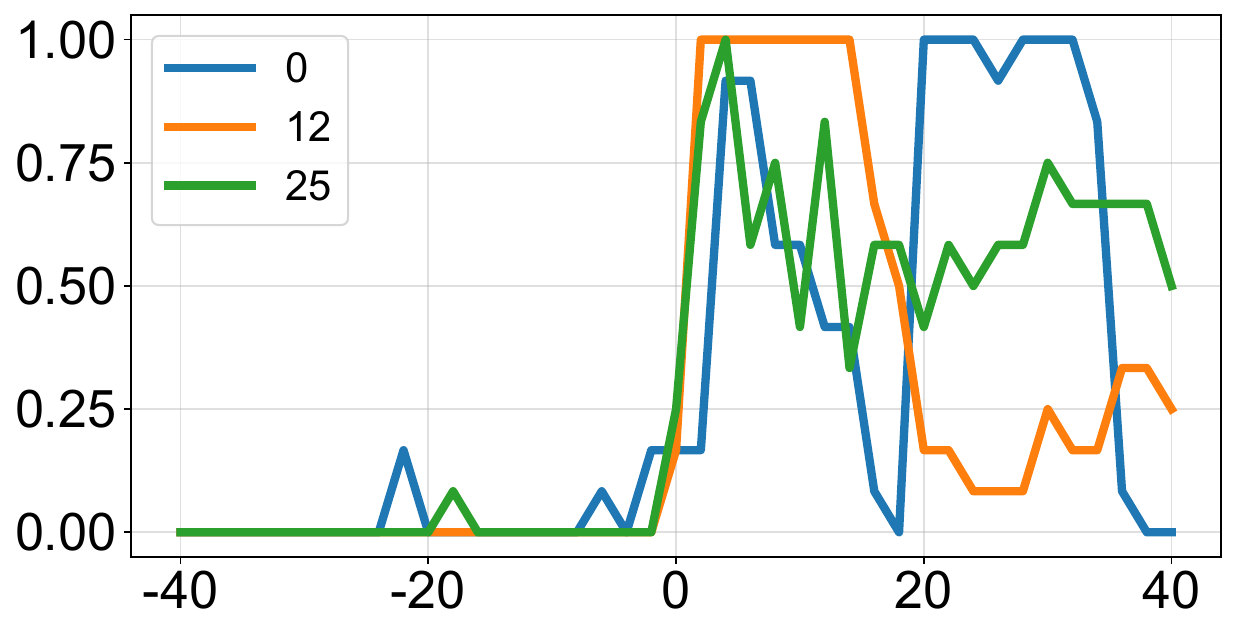}
\end{subfigure}

\vspace{-4pt}
\caption{\label{fig:exp-concept-probs-2-appendix} Influence of steering strength $\alpha$ on concept probability for the concepts (top to bottom): depression, evil, humorous, impolite, joy, and optimistic. Each row of three plots corresponds to a single steered concept, and each column corresponds to a different model. Steering is applied at three layers (early, middle, late), indicated in each legend. Overall, the curves are consistent with Theorem~\ref{th:increasing-concept}, which predicts a sigmoidal shape. \textbf{Early-layer steering is more erratic}, consistent with reports that steering early layers yields worse results~\citep{chen_et_al_2025a}.}
\end{figure}

\begin{figure}
\centering

\begin{subfigure}[t]{0.32\textwidth}\centering
\includegraphics[width=1.\linewidth]{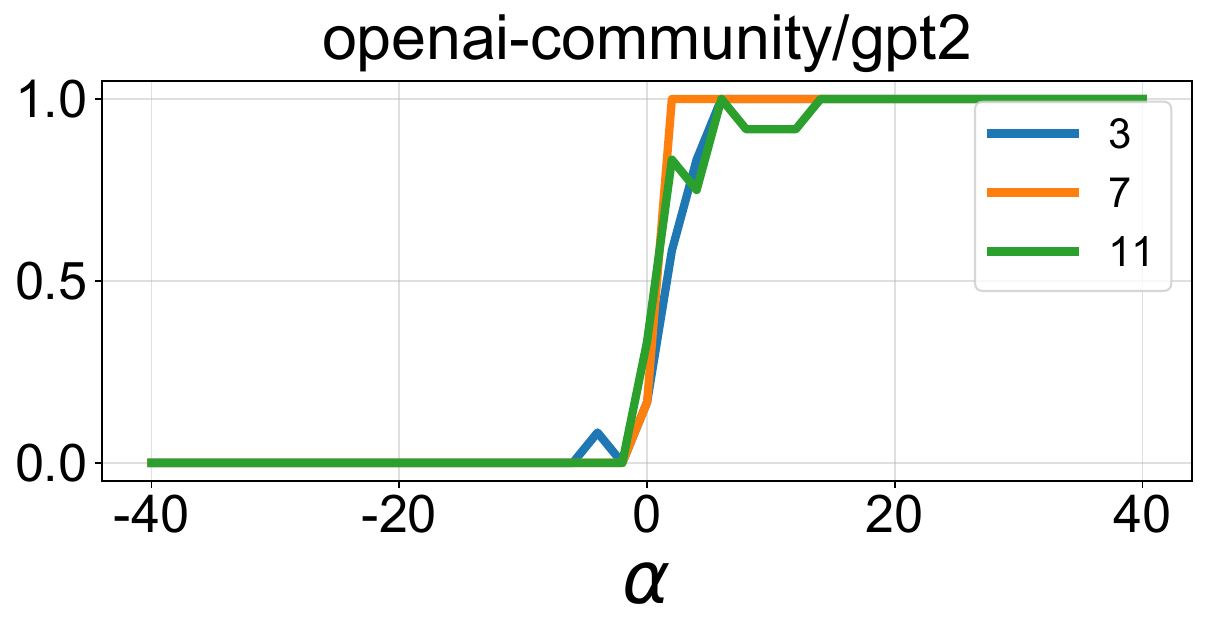}
\end{subfigure}\hfill
\begin{subfigure}[t]{0.32\textwidth}\centering
\includegraphics[width=1.\linewidth]{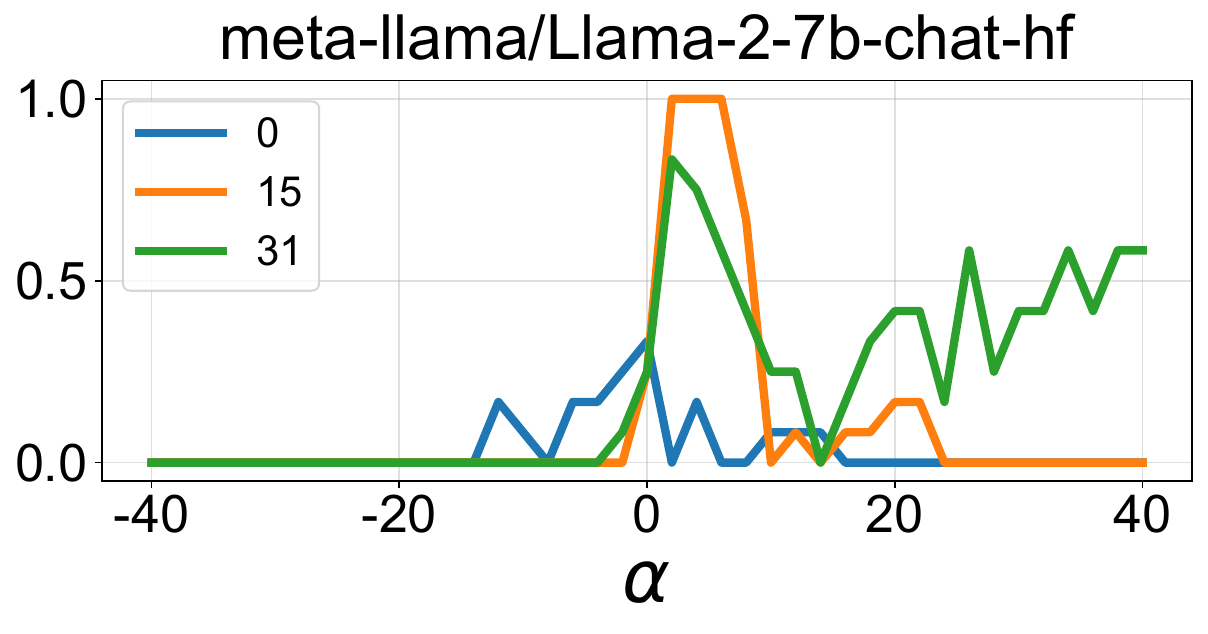}
\end{subfigure}\hfill
\begin{subfigure}[t]{0.32\textwidth}\centering
\includegraphics[width=1.\linewidth]{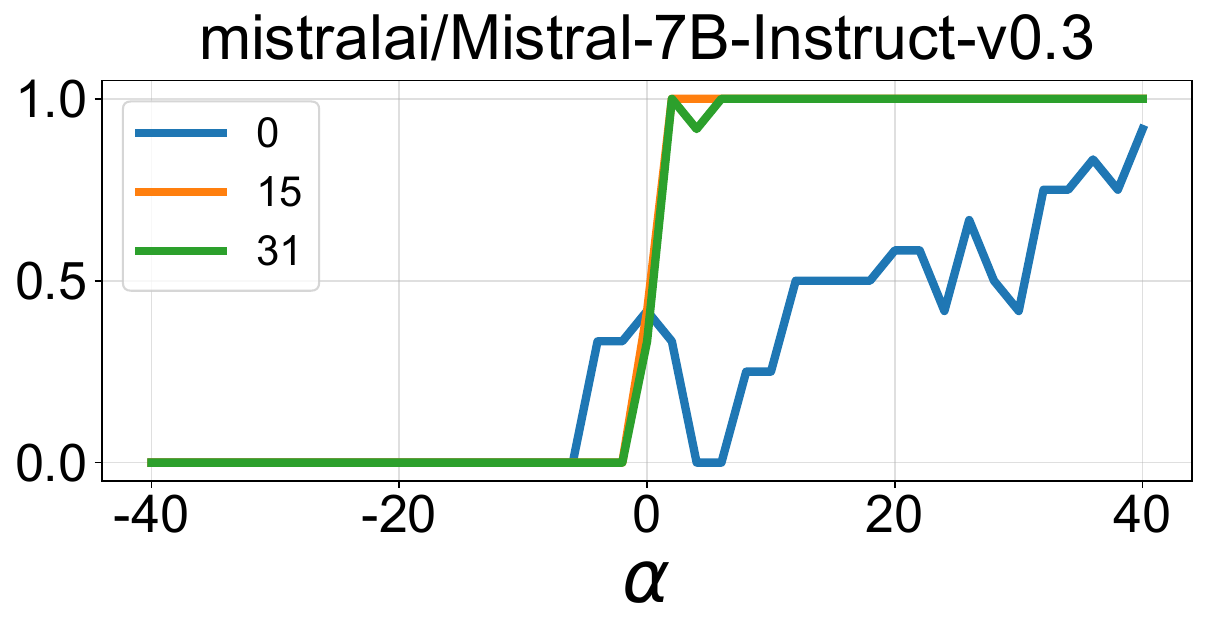}
\end{subfigure}

\vspace{2pt}
\begin{subfigure}[t]{0.32\textwidth}\centering
\includegraphics[width=1.\linewidth]{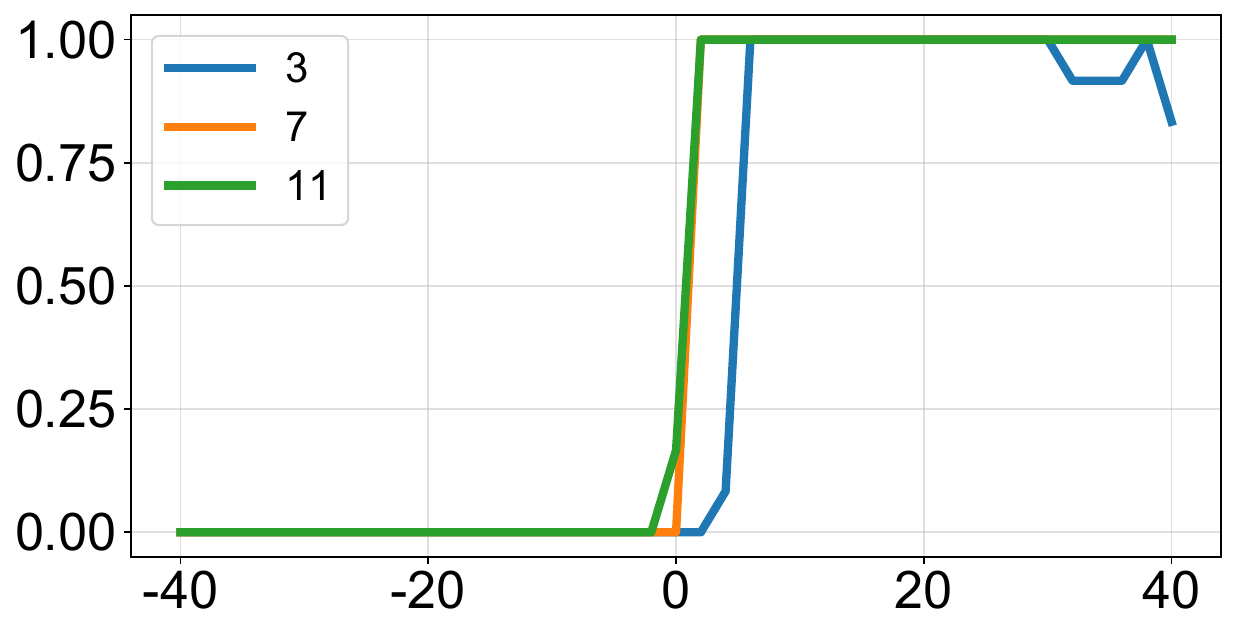}
\end{subfigure}\hfill
\begin{subfigure}[t]{0.32\textwidth}\centering
\includegraphics[width=1.\linewidth]{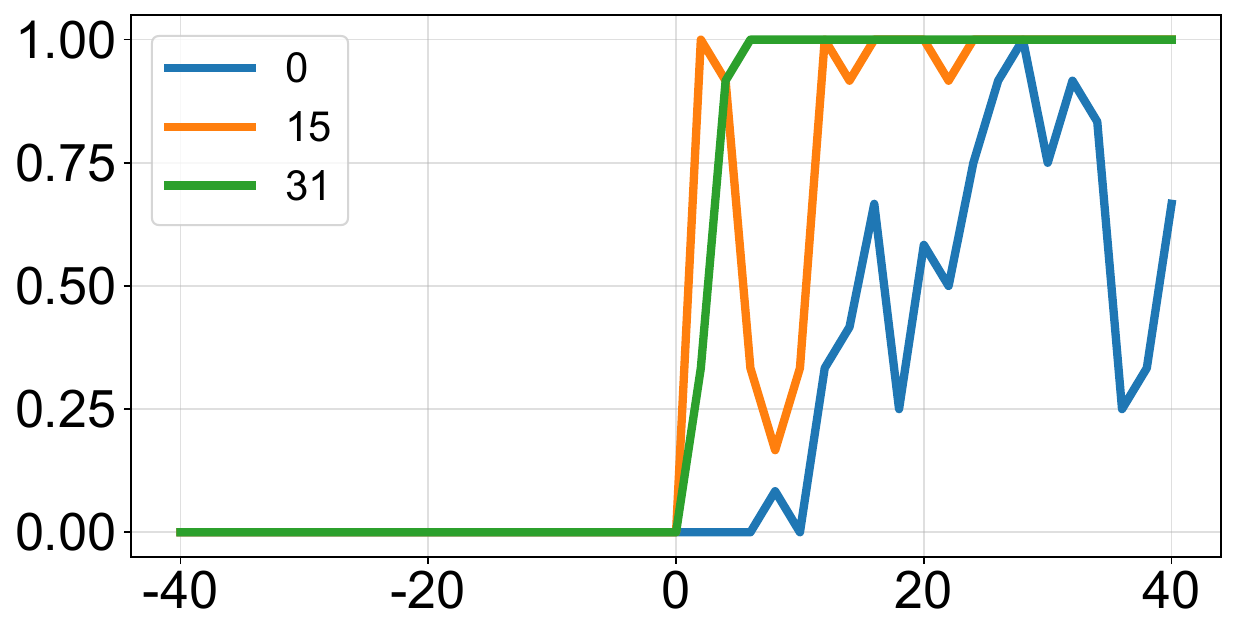}
\end{subfigure}\hfill
\begin{subfigure}[t]{0.32\textwidth}\centering
\includegraphics[width=1.\linewidth]{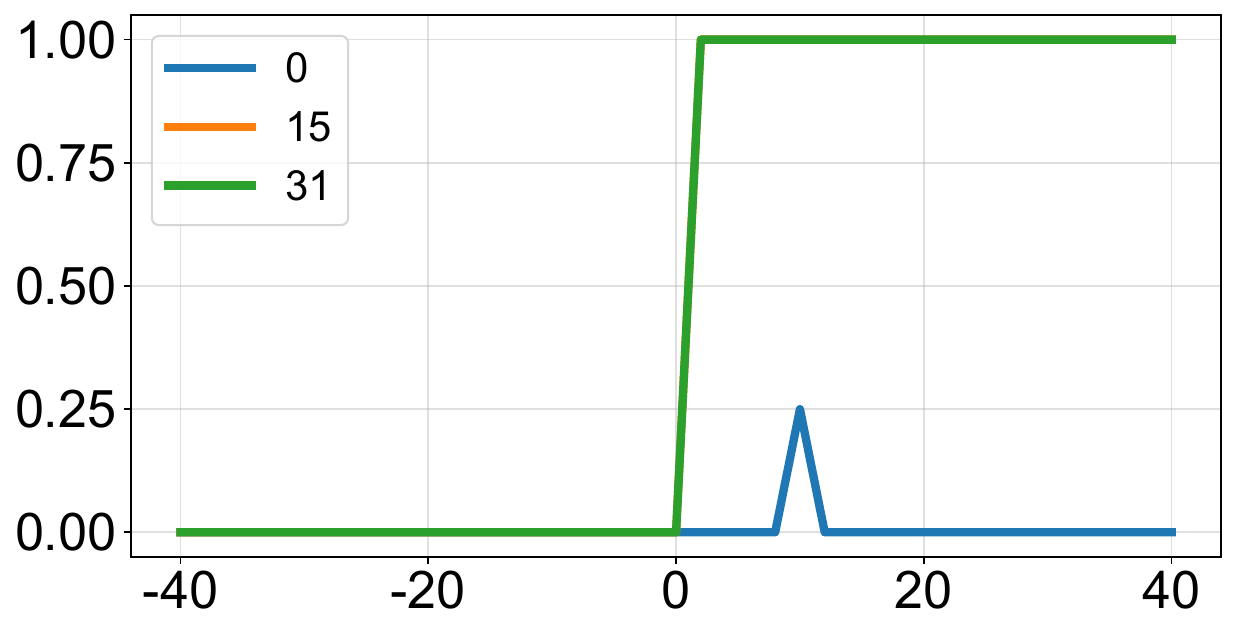}
\end{subfigure}

\vspace{2pt}
\begin{subfigure}[t]{0.32\textwidth}\centering
\includegraphics[width=1.\linewidth]{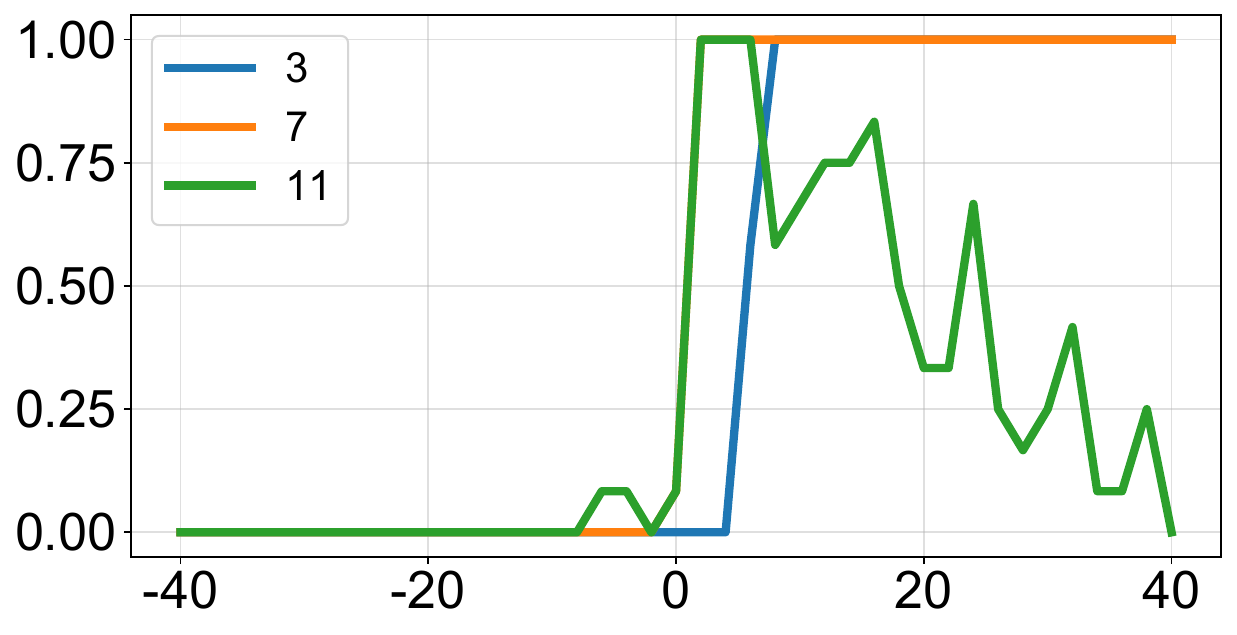}
\end{subfigure}\hfill
\begin{subfigure}[t]{0.32\textwidth}\centering
\includegraphics[width=1.\linewidth]{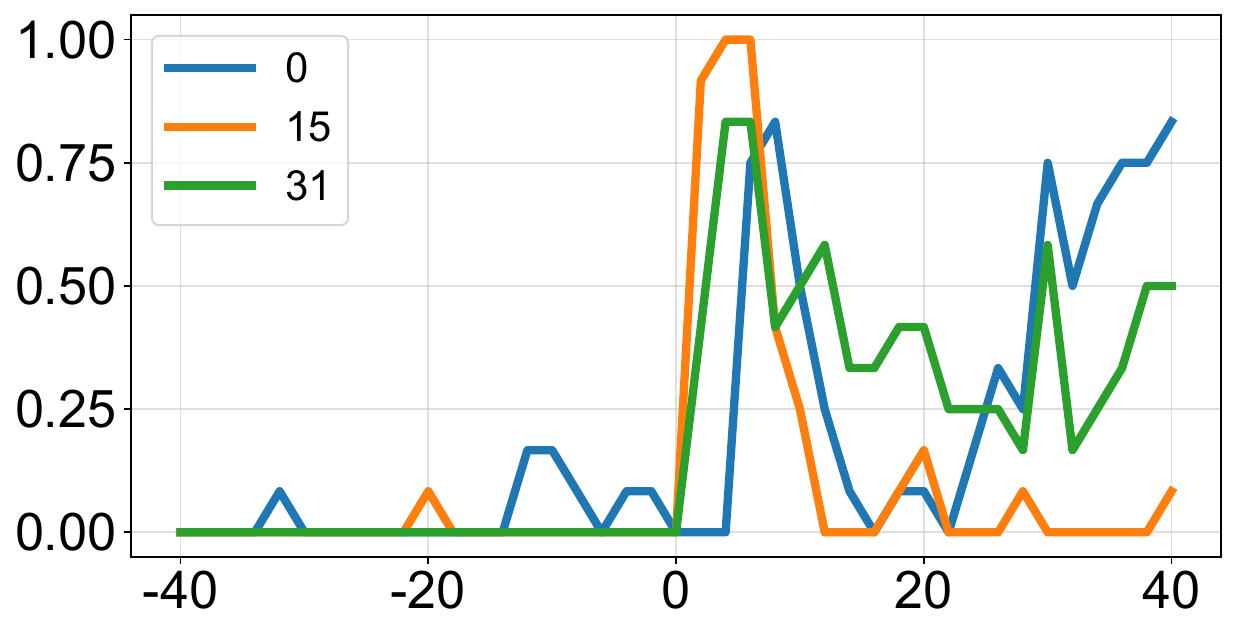}
\end{subfigure}\hfill
\begin{subfigure}[t]{0.32\textwidth}\centering
\includegraphics[width=1.\linewidth]{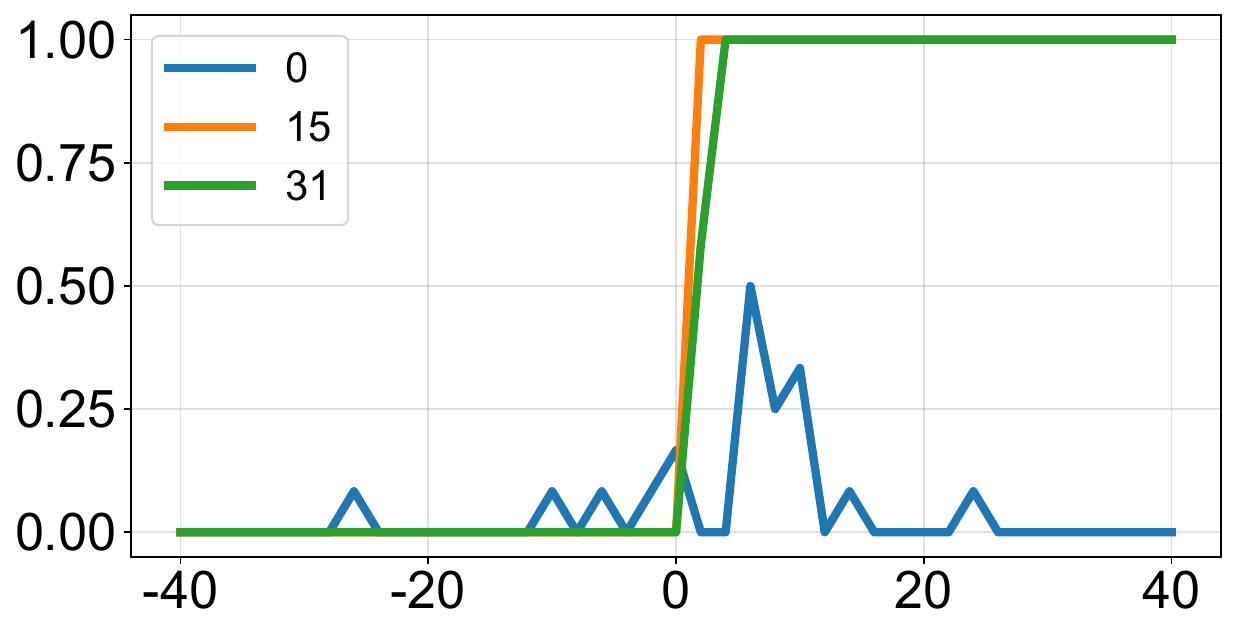}
\end{subfigure}

\vspace{2pt}
\begin{subfigure}[t]{0.32\textwidth}\centering
\includegraphics[width=1.\linewidth]{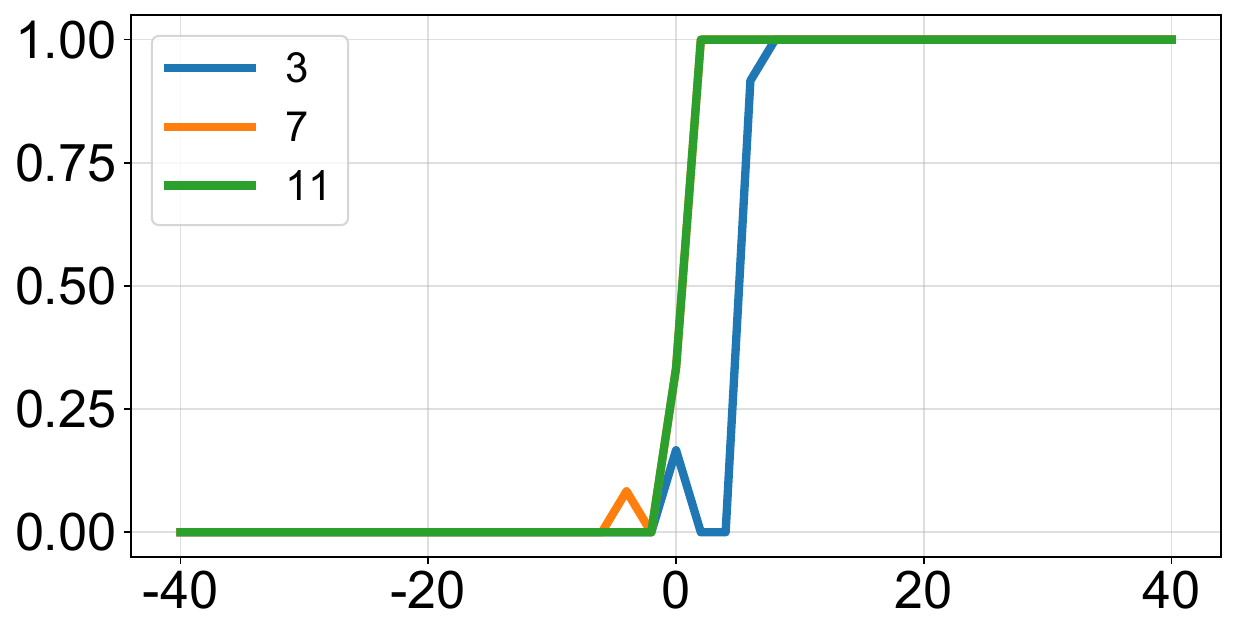}
\end{subfigure}\hfill
\begin{subfigure}[t]{0.32\textwidth}\centering
\includegraphics[width=1.\linewidth]{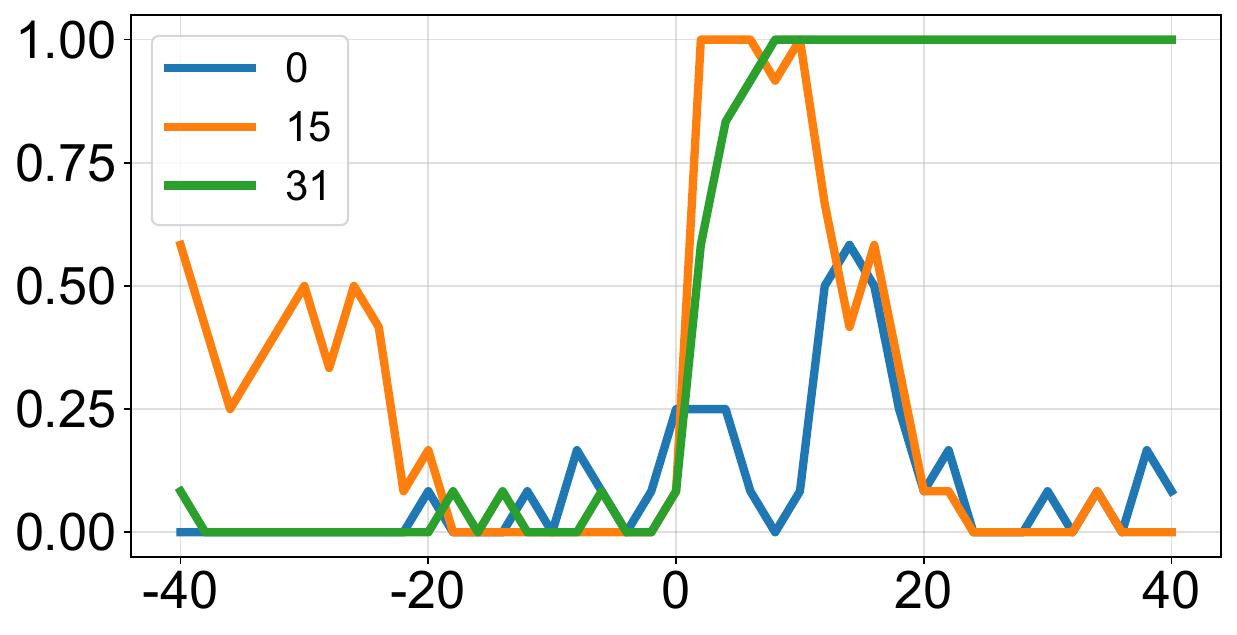}
\end{subfigure}\hfill
\begin{subfigure}[t]{0.32\textwidth}\centering
\includegraphics[width=1.\linewidth]{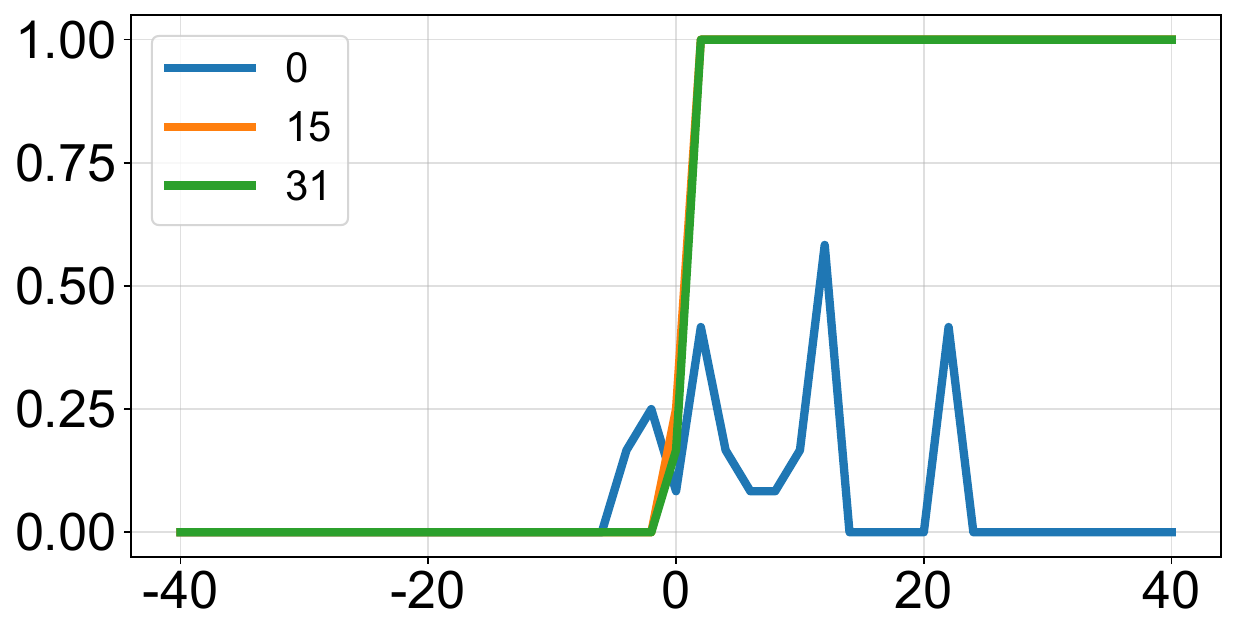}
\end{subfigure}

\vspace{2pt}
\begin{subfigure}[t]{0.32\textwidth}\centering
\includegraphics[width=1.\linewidth]{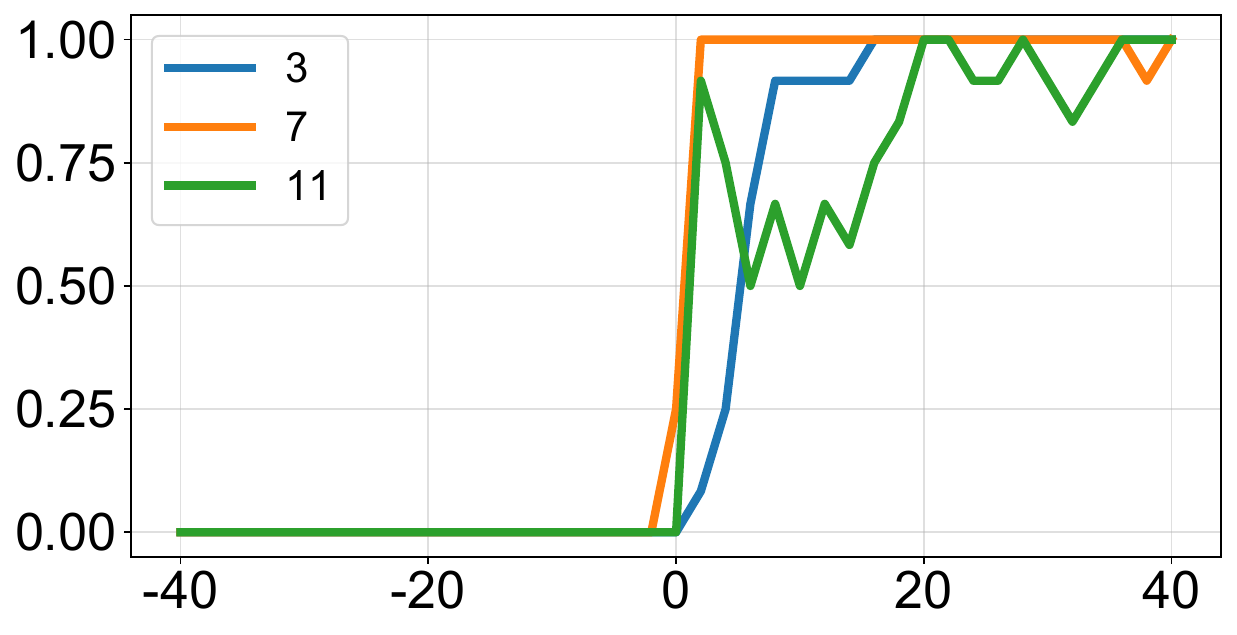}
\end{subfigure}\hfill
\begin{subfigure}[t]{0.32\textwidth}\centering
\includegraphics[width=1.\linewidth]{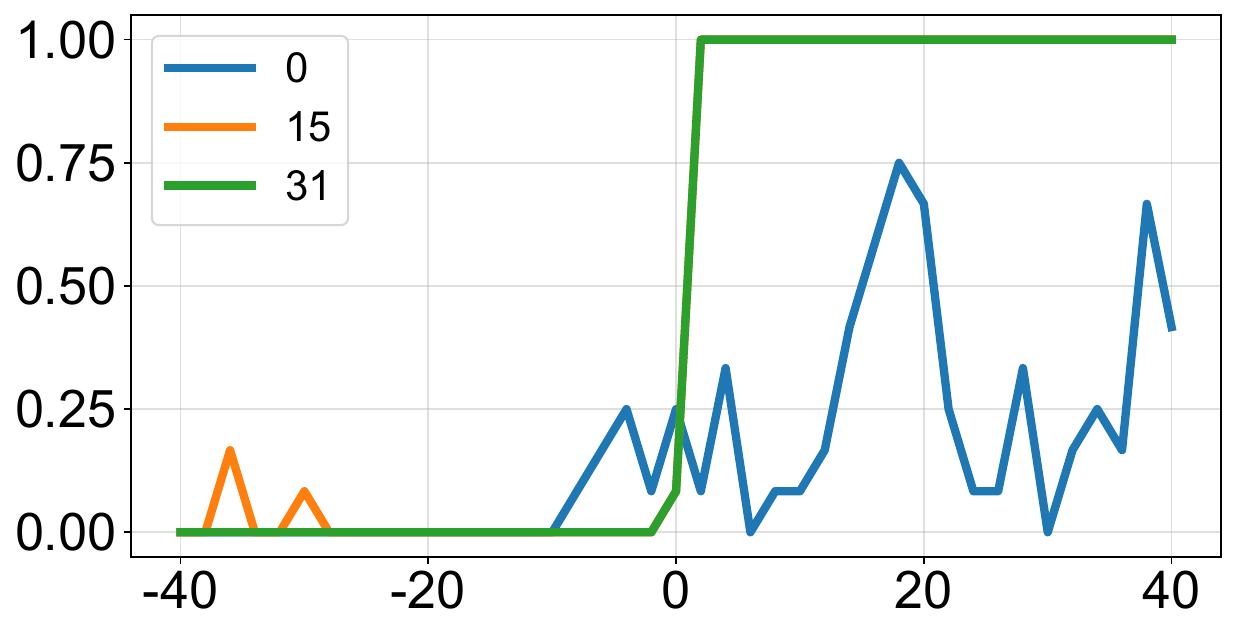}
\end{subfigure}\hfill
\begin{subfigure}[t]{0.32\textwidth}\centering
\includegraphics[width=1.\linewidth]{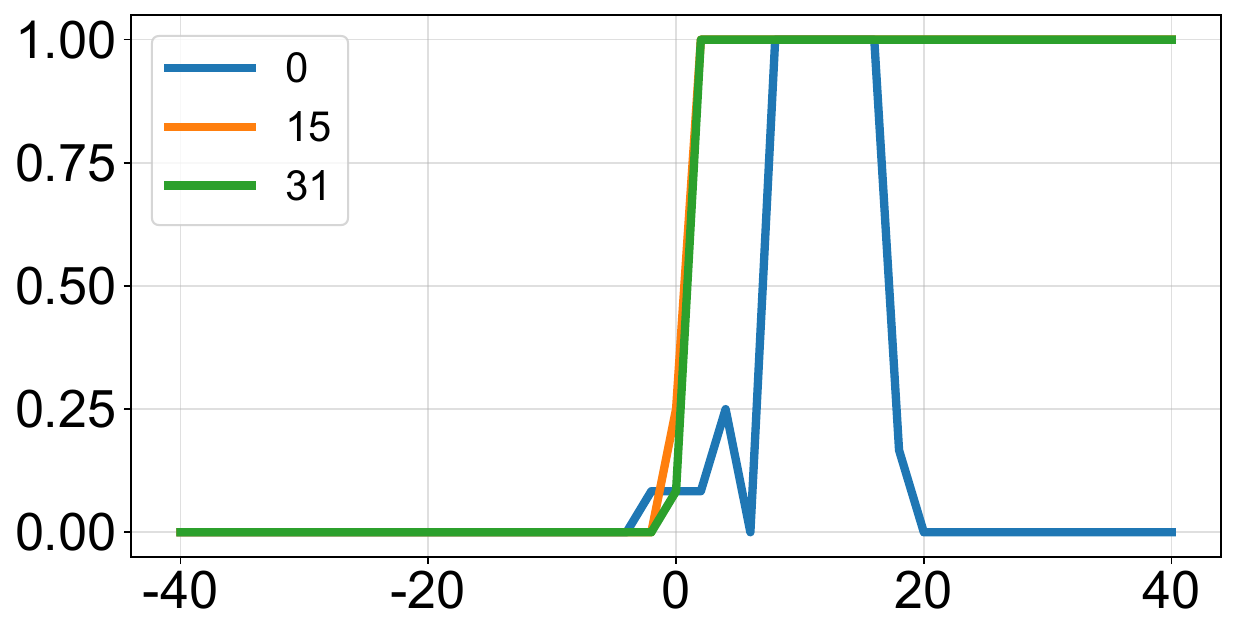}
\end{subfigure}

\vspace{2pt}
\begin{subfigure}[t]{0.32\textwidth}\centering
\includegraphics[width=1.\linewidth]{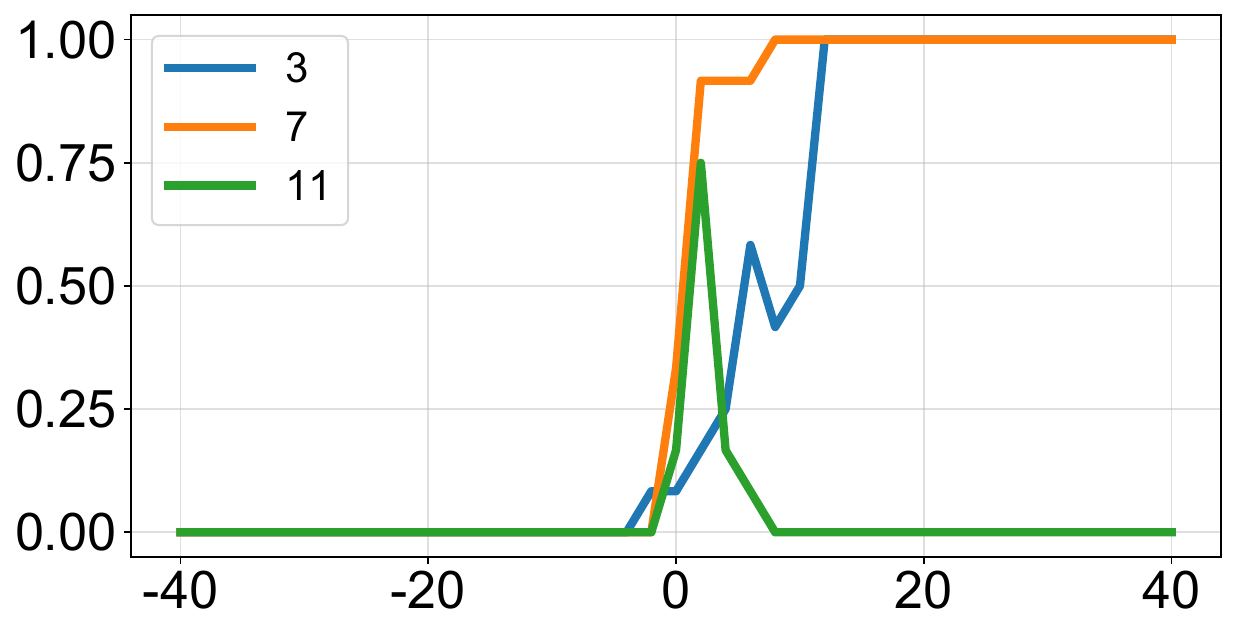}
\end{subfigure}\hfill
\begin{subfigure}[t]{0.32\textwidth}\centering
\includegraphics[width=1.\linewidth]{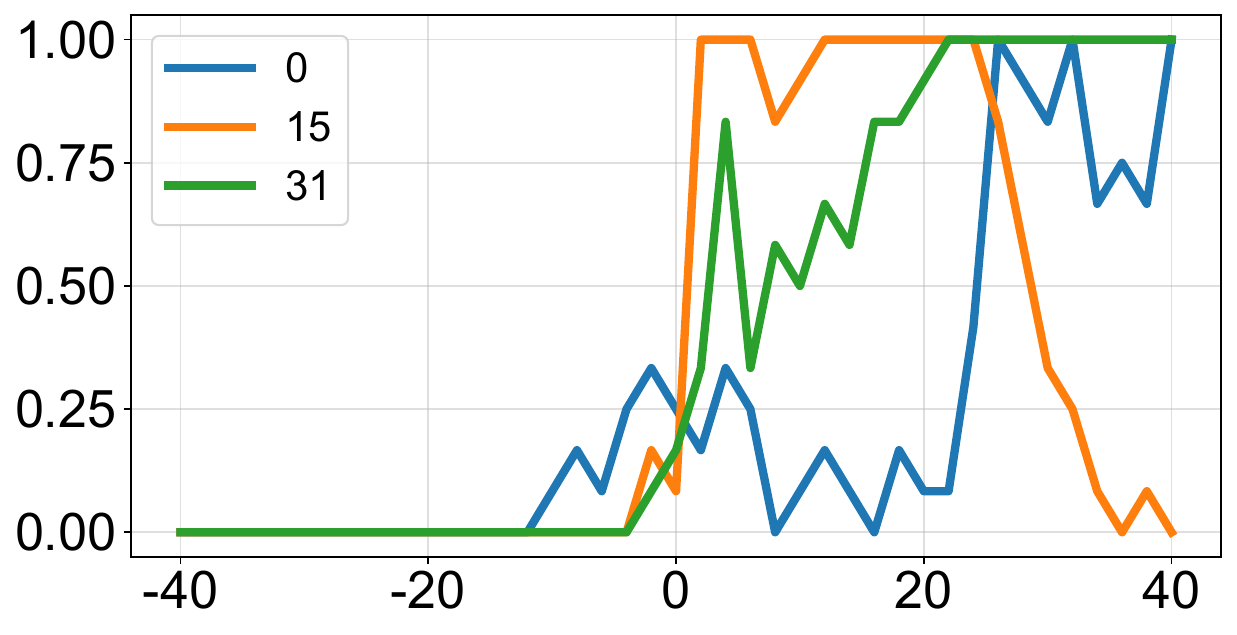}
\end{subfigure}\hfill
\begin{subfigure}[t]{0.32\textwidth}\centering
\includegraphics[width=1.\linewidth]{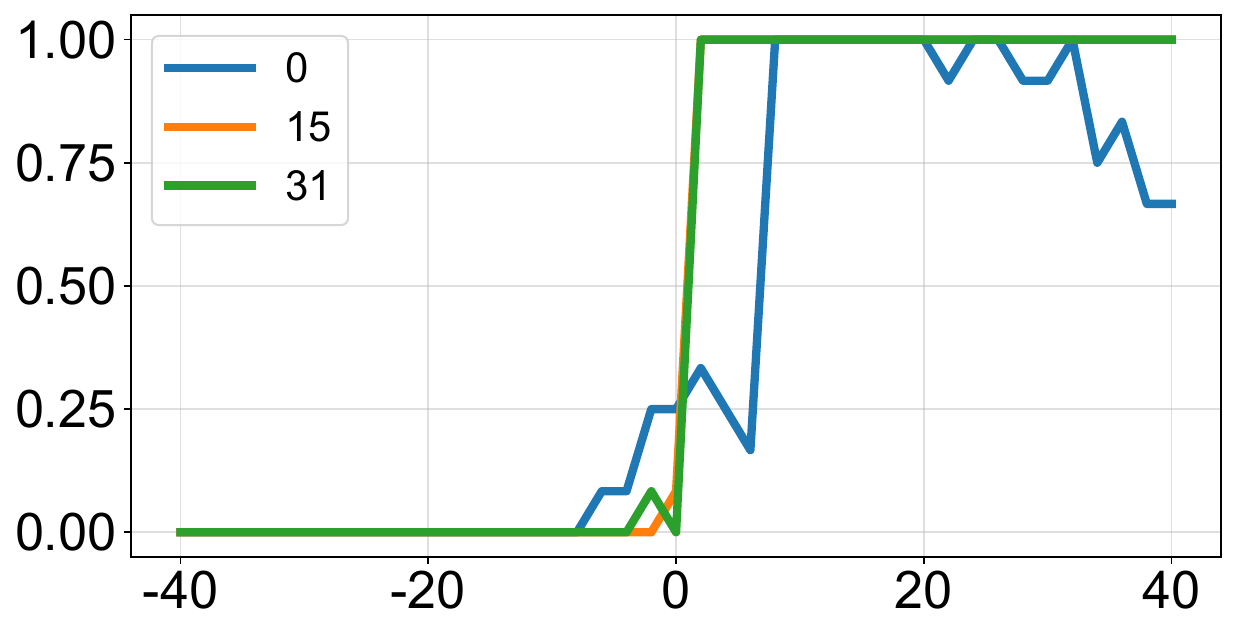}
\end{subfigure}

\vspace{-4pt}
\caption{\label{fig:exp-concept-probs-appendix} Influence of steering strength $\alpha$ on concept probability for the concepts (top to bottom): depression, evil, humorous, impolite, joy, and optimistic. Each row of three plots corresponds to a single steered concept, and each column corresponds to a different model. Steering is applied at three layers (early, middle, late), indicated in each legend. Overall, the curves are consistent with Theorem~\ref{th:increasing-concept}, which predicts a sigmoidal shape. \textbf{Early-layer steering is more erratic}, consistent with reports that steering early layers yields worse results~\citep{chen_et_al_2025a}. We also observe less consistent behavior on Llama~2.}
\end{figure}

\begin{figure}[p]
\centering
\begin{subfigure}[t]{0.245\textwidth}\centering
\panelmodel{Qwen3-4B}
\panellayer{17}
\includegraphics[width=\linewidth]{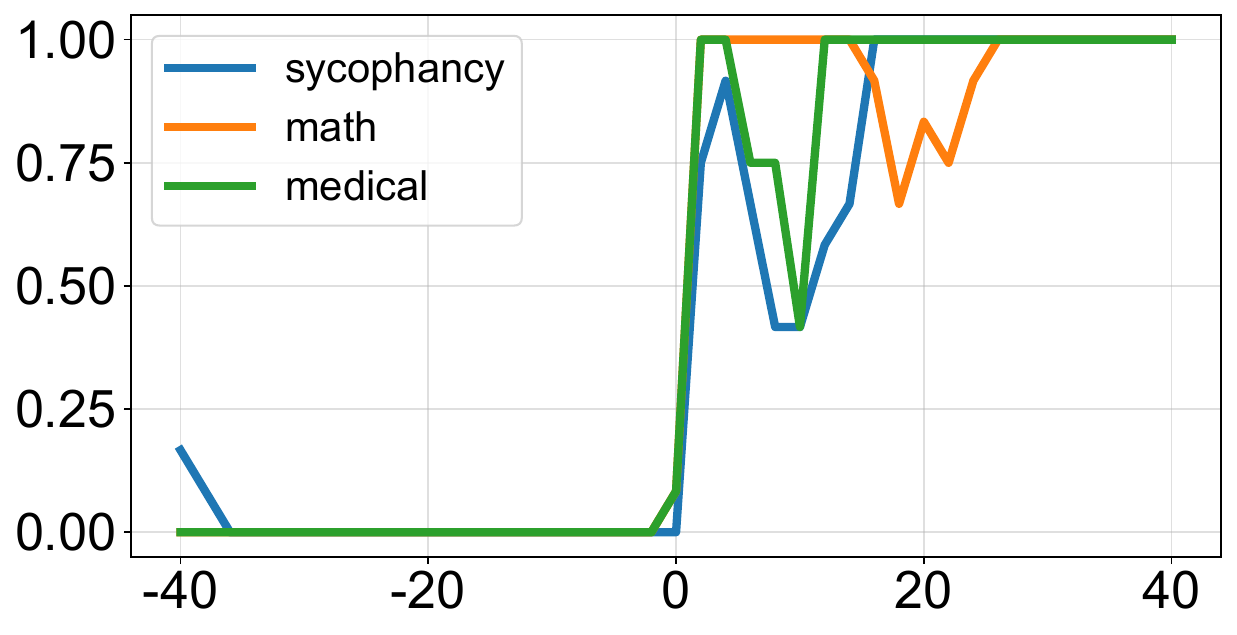}
\end{subfigure}\hfill
\begin{subfigure}[t]{0.245\textwidth}\centering
\panelmodel{Mistral-7B-Instruct-v0.3}
\panellayer{15}
\includegraphics[width=\linewidth]{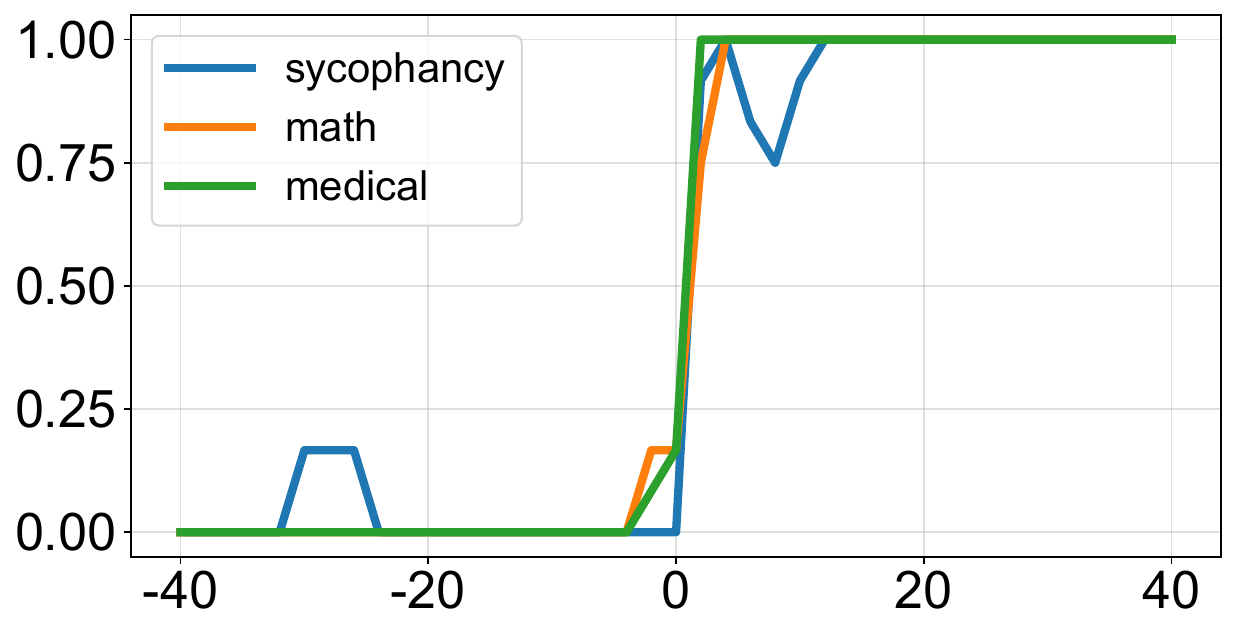}
\end{subfigure}\hfill
\begin{subfigure}[t]{0.245\textwidth}\centering
\panelmodel{GPT2-large}
\panellayer{17}
\includegraphics[width=\linewidth]{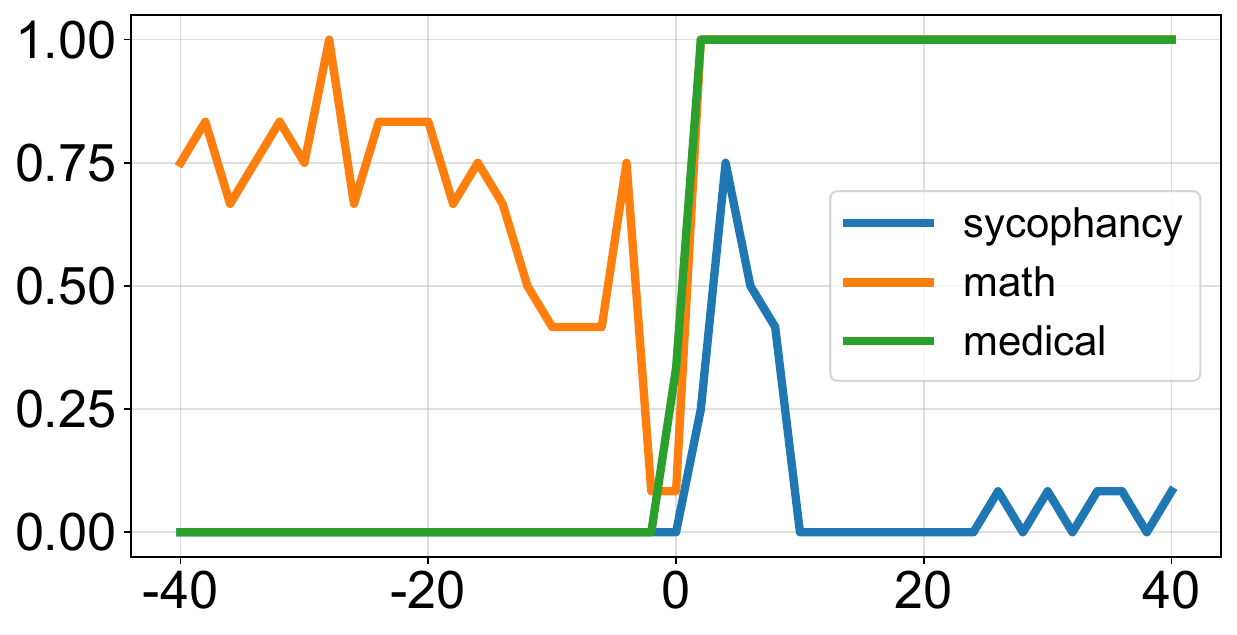}
\end{subfigure}\hfill
\begin{subfigure}[t]{0.245\textwidth}\centering
\panelmodel{Gemma-3-1B-IT}
\panellayer{12}
\includegraphics[width=\linewidth]{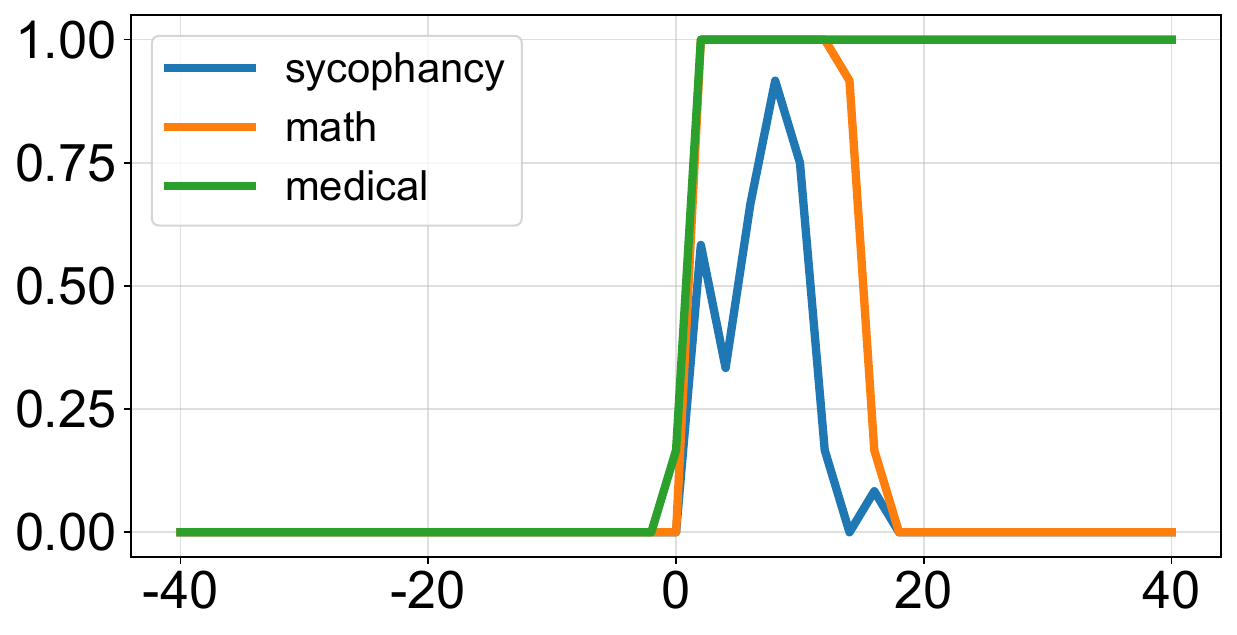}
\end{subfigure}
\rowtitle{}
\begin{subfigure}[t]{0.245\textwidth}\centering
\panellayer{35}
\includegraphics[width=\linewidth]{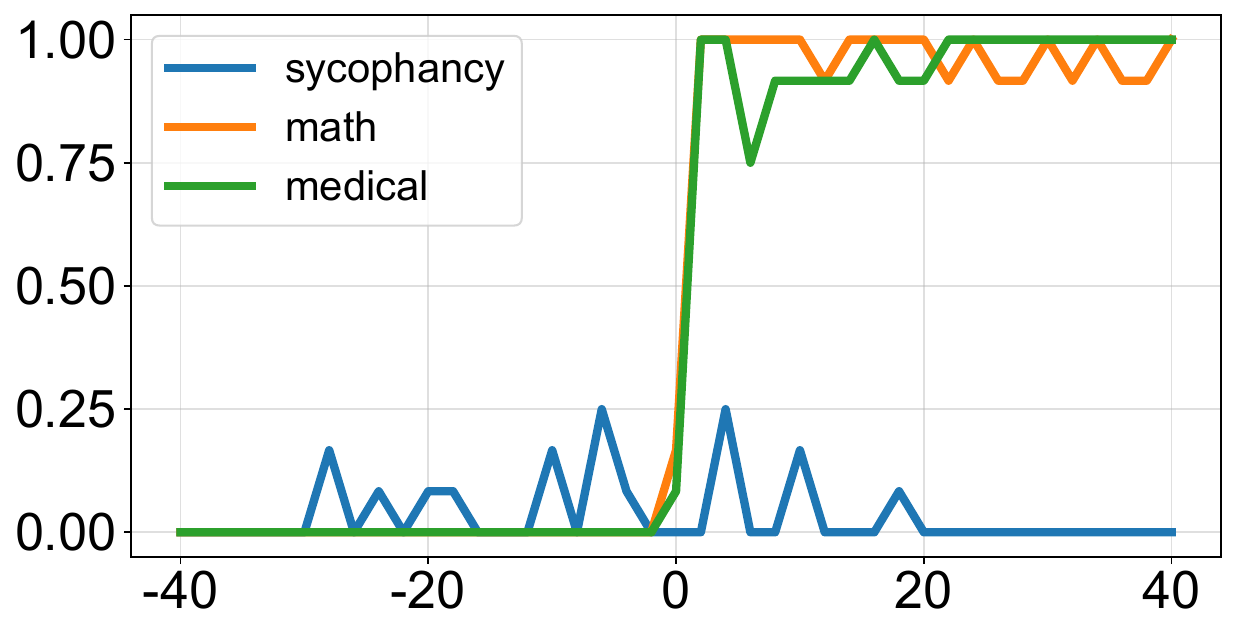}
\end{subfigure}\hfill
\begin{subfigure}[t]{0.245\textwidth}\centering
\panellayer{31}
\includegraphics[width=\linewidth]{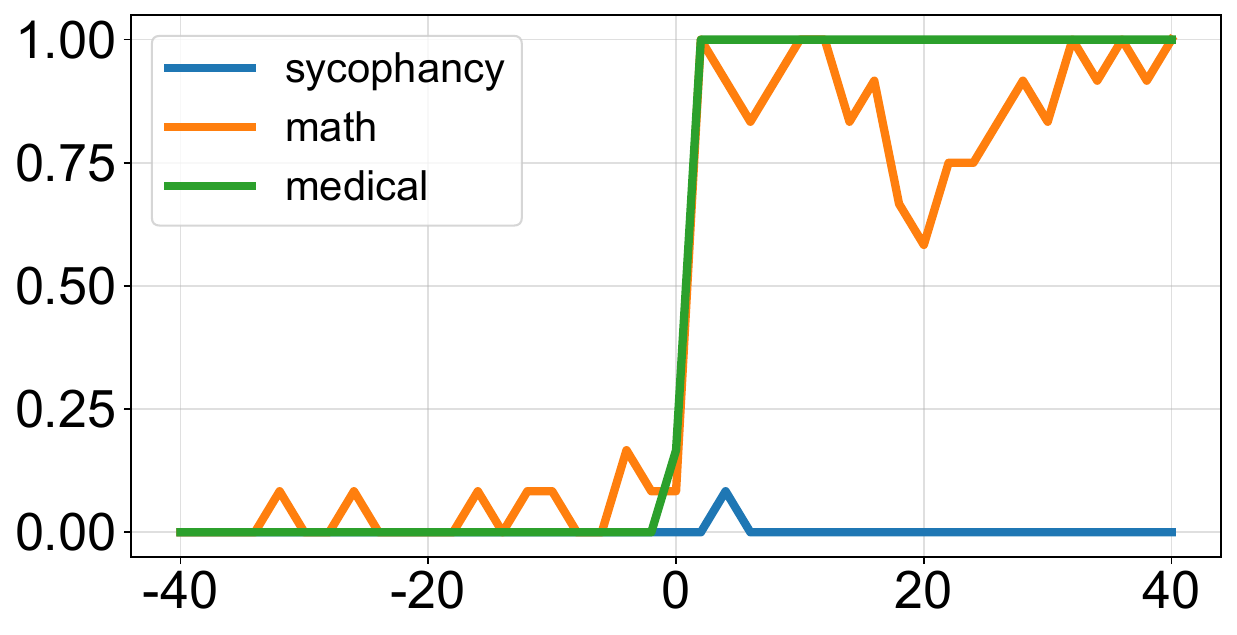}
\end{subfigure}\hfill
\begin{subfigure}[t]{0.245\textwidth}\centering
\panellayer{35}
\includegraphics[width=\linewidth]{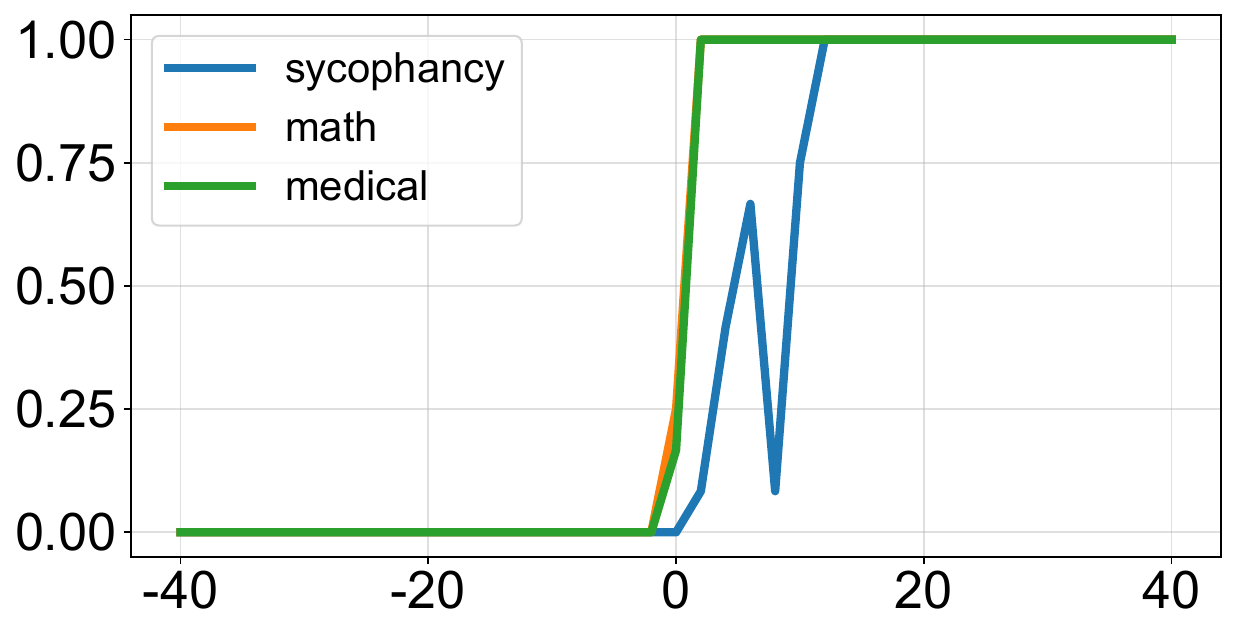}
\end{subfigure}\hfill
\begin{subfigure}[t]{0.245\textwidth}\centering
\panellayer{25}
\includegraphics[width=\linewidth]{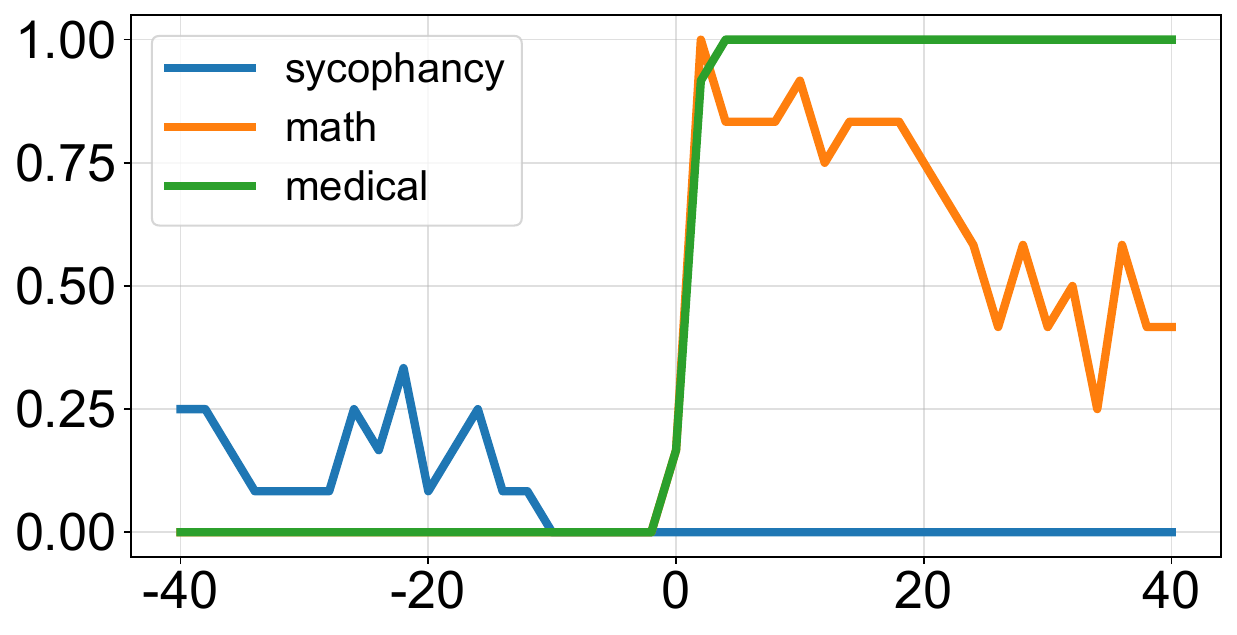}
\end{subfigure}
\caption{\label{fig:concept-presence-hard-concept}Influence of steering strength on concept probability for the concepts math, medical, and sycophancy, with concept presence scored by the LLM judge \texttt{Gemma 3 12B it}. The top row labels the steered model for each column, and every panel header gives the exact steered layer index. Overall, the curves are consistent with Theorem~\ref{th:increasing-concept}, which predicts a sigmoidal shape.}
\end{figure}

\begin{figure}
\centering

\begin{subfigure}[t]{0.32\textwidth}\centering
\includegraphics[width=1.\linewidth]{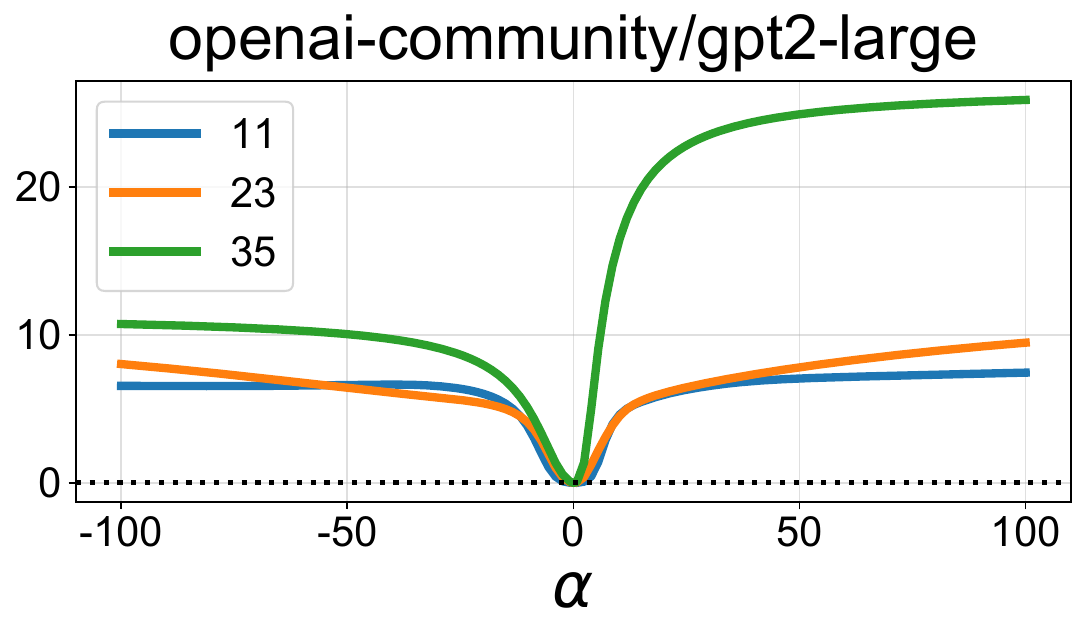}
\end{subfigure}\hfill
\begin{subfigure}[t]{0.32\textwidth}\centering
\includegraphics[width=1.\linewidth]{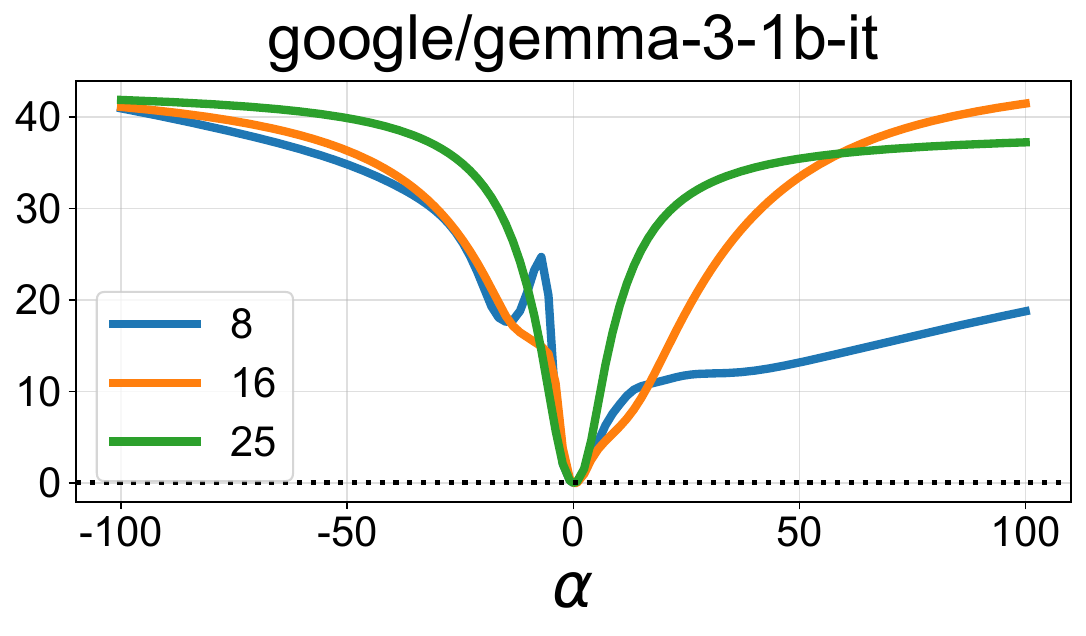}
\end{subfigure}\hfill
\begin{subfigure}[t]{0.32\textwidth}\centering
\includegraphics[width=1.\linewidth]{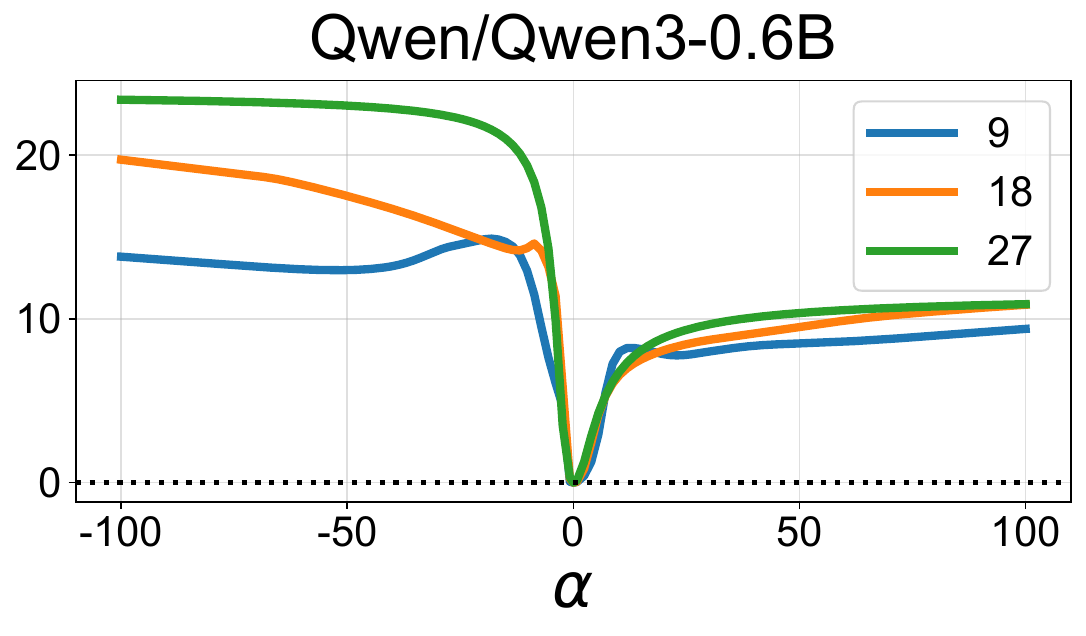}
\end{subfigure}

\vspace{2pt}
\begin{subfigure}[t]{0.32\textwidth}\centering
\includegraphics[width=1.\linewidth]{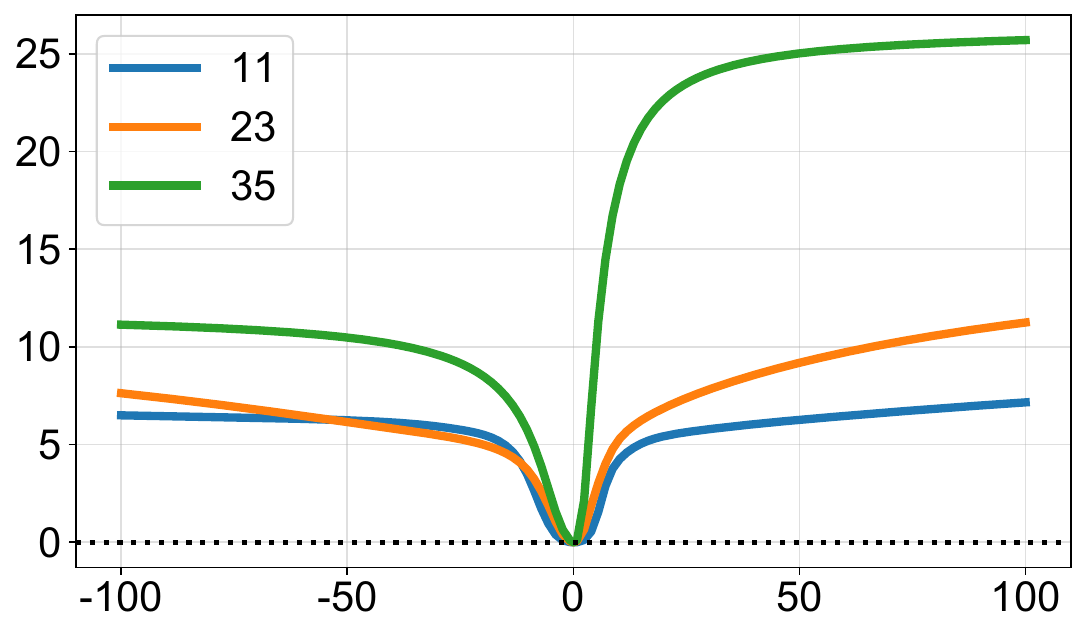}
\end{subfigure}\hfill
\begin{subfigure}[t]{0.32\textwidth}\centering
\includegraphics[width=1.\linewidth]{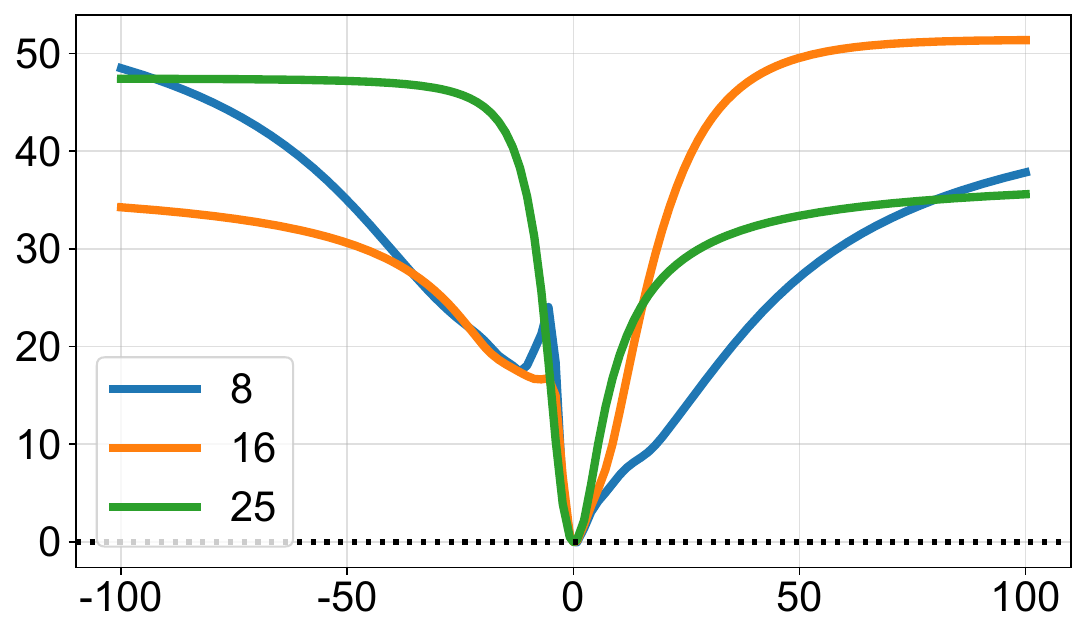}
\end{subfigure}\hfill
\begin{subfigure}[t]{0.32\textwidth}\centering
\includegraphics[width=1.\linewidth]{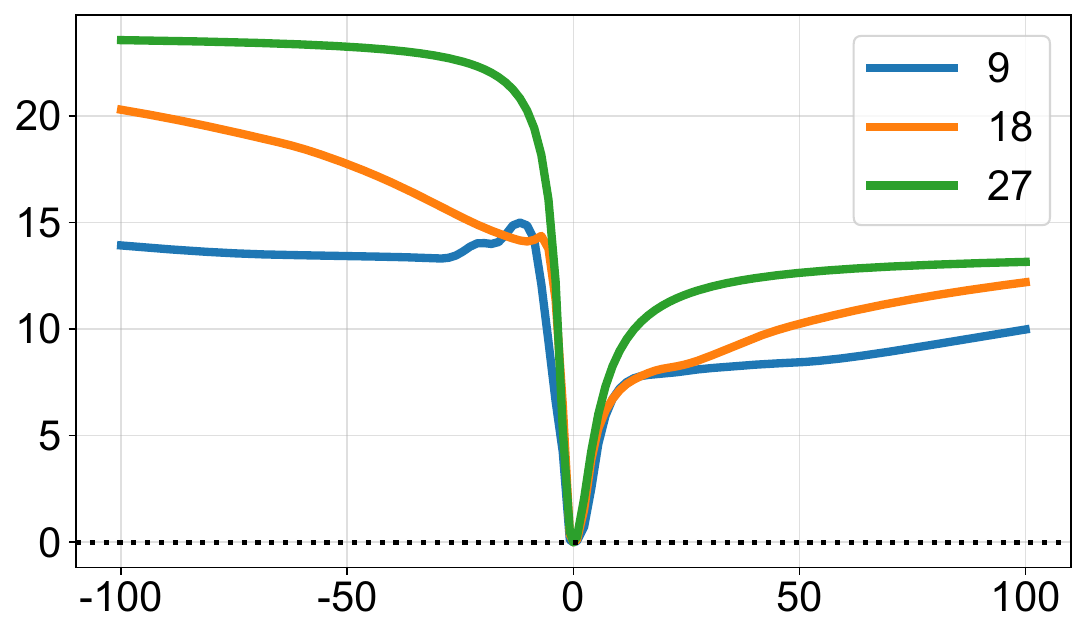}
\end{subfigure}

\vspace{2pt}
\begin{subfigure}[t]{0.32\textwidth}\centering
\includegraphics[width=1.\linewidth]{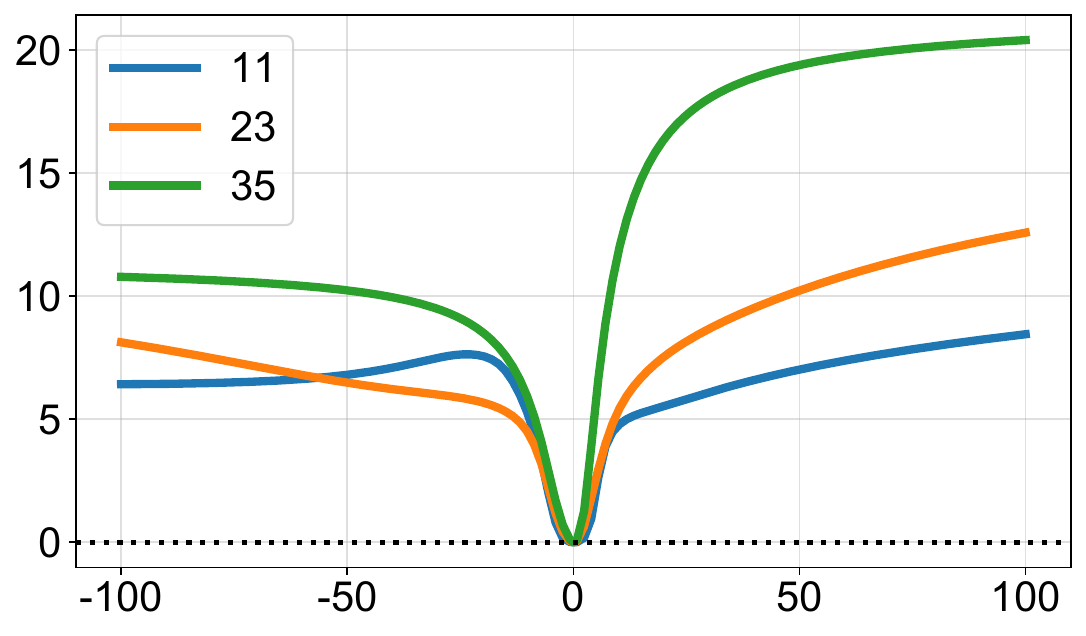}
\end{subfigure}\hfill
\begin{subfigure}[t]{0.32\textwidth}\centering
\includegraphics[width=1.\linewidth]{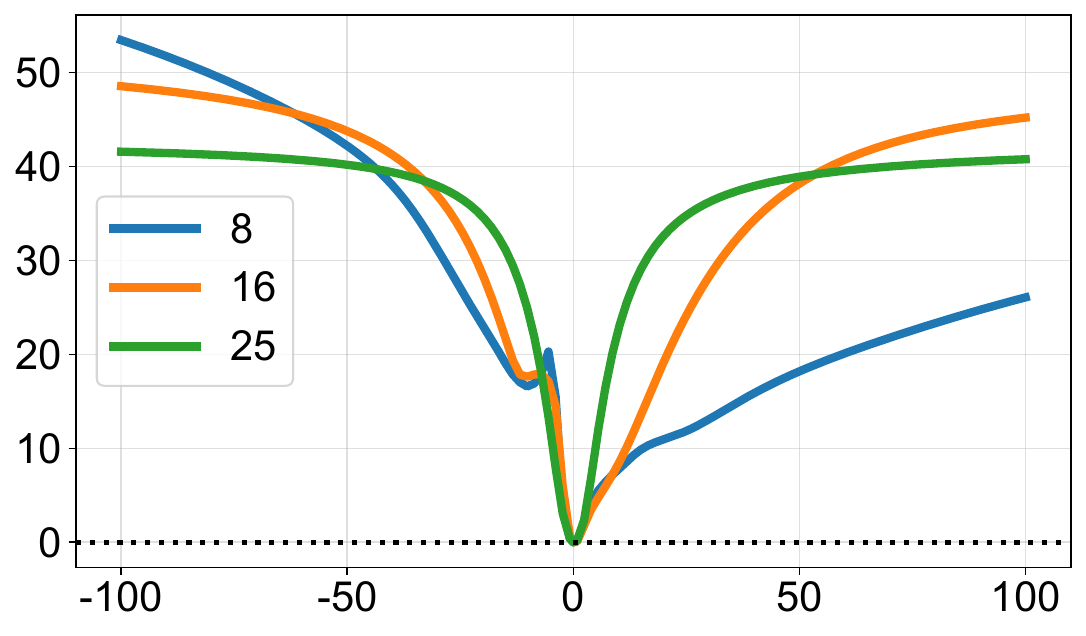}
\end{subfigure}\hfill
\begin{subfigure}[t]{0.32\textwidth}\centering
\includegraphics[width=1.\linewidth]{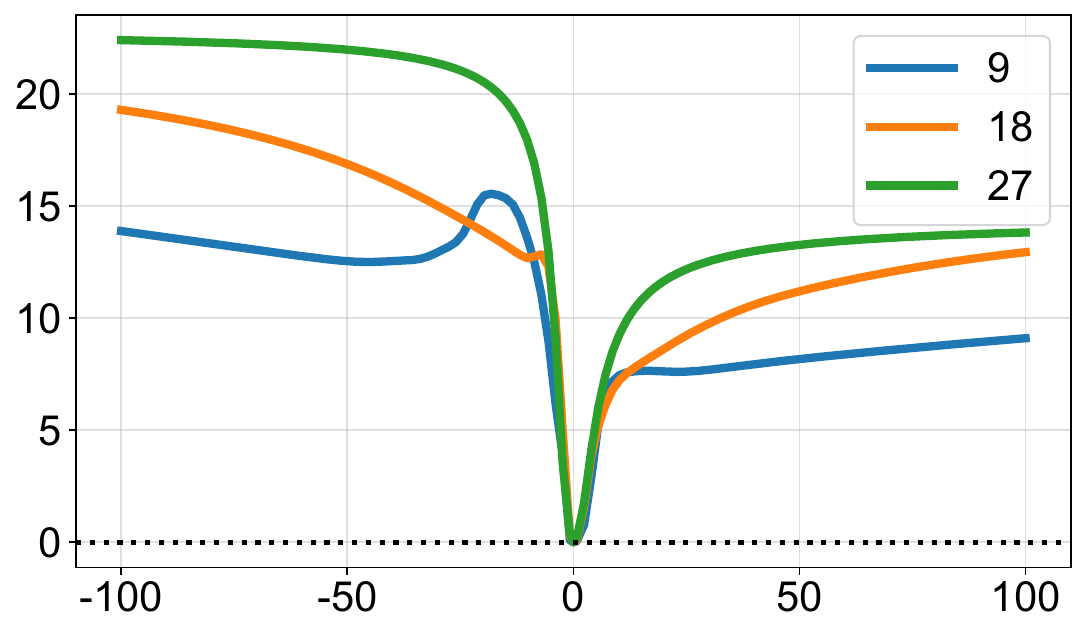}
\end{subfigure}

\vspace{2pt}
\begin{subfigure}[t]{0.32\textwidth}\centering
\includegraphics[width=1.\linewidth]{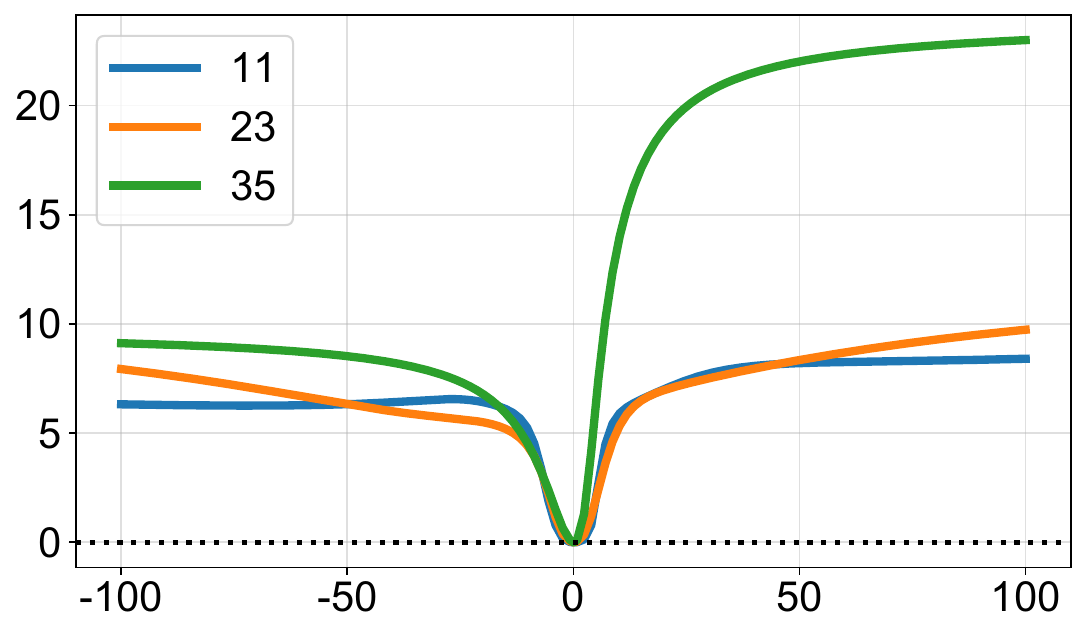}
\end{subfigure}\hfill
\begin{subfigure}[t]{0.32\textwidth}\centering
\includegraphics[width=1.\linewidth]{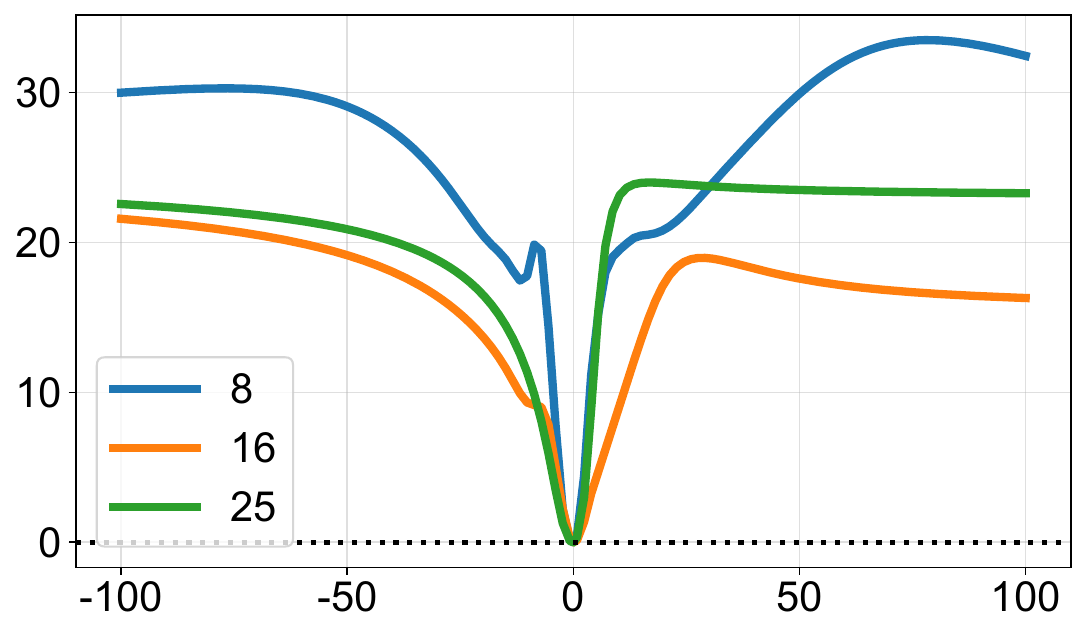}
\end{subfigure}\hfill
\begin{subfigure}[t]{0.32\textwidth}\centering
\includegraphics[width=1.\linewidth]{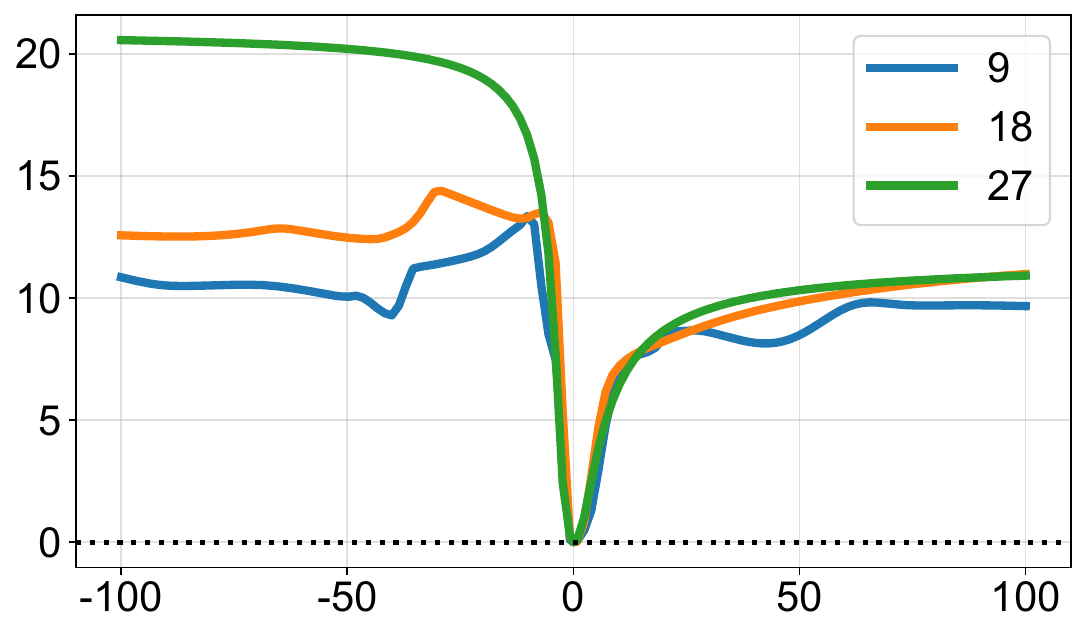}
\end{subfigure}

\vspace{2pt}
\begin{subfigure}[t]{0.32\textwidth}\centering
\includegraphics[width=1.\linewidth]{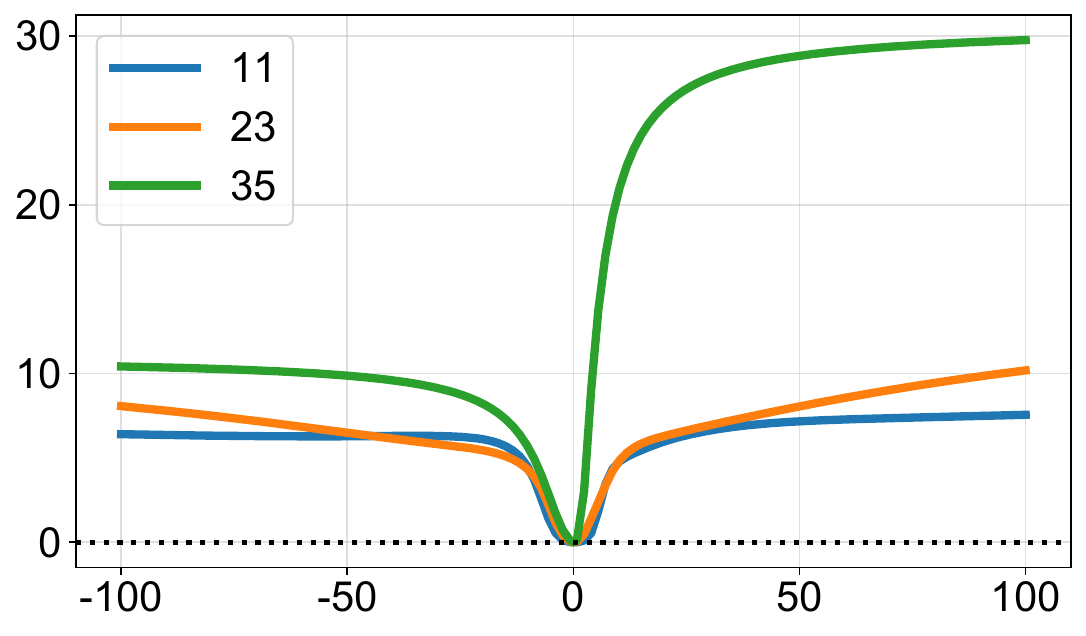}
\end{subfigure}\hfill
\begin{subfigure}[t]{0.32\textwidth}\centering
\includegraphics[width=1.\linewidth]{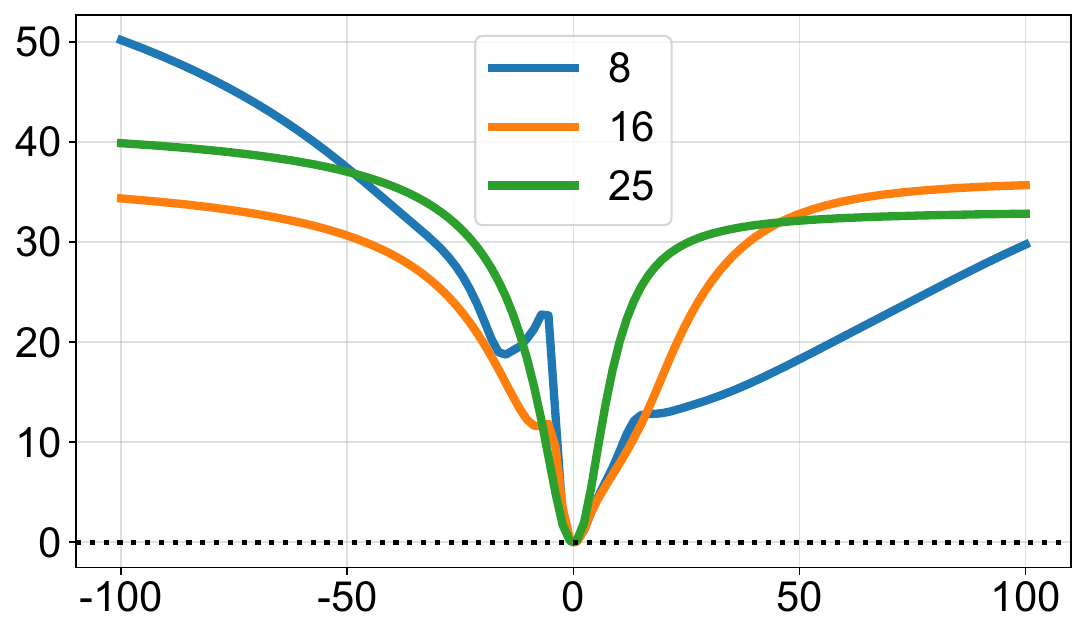}
\end{subfigure}\hfill
\begin{subfigure}[t]{0.32\textwidth}\centering
\includegraphics[width=1.\linewidth]{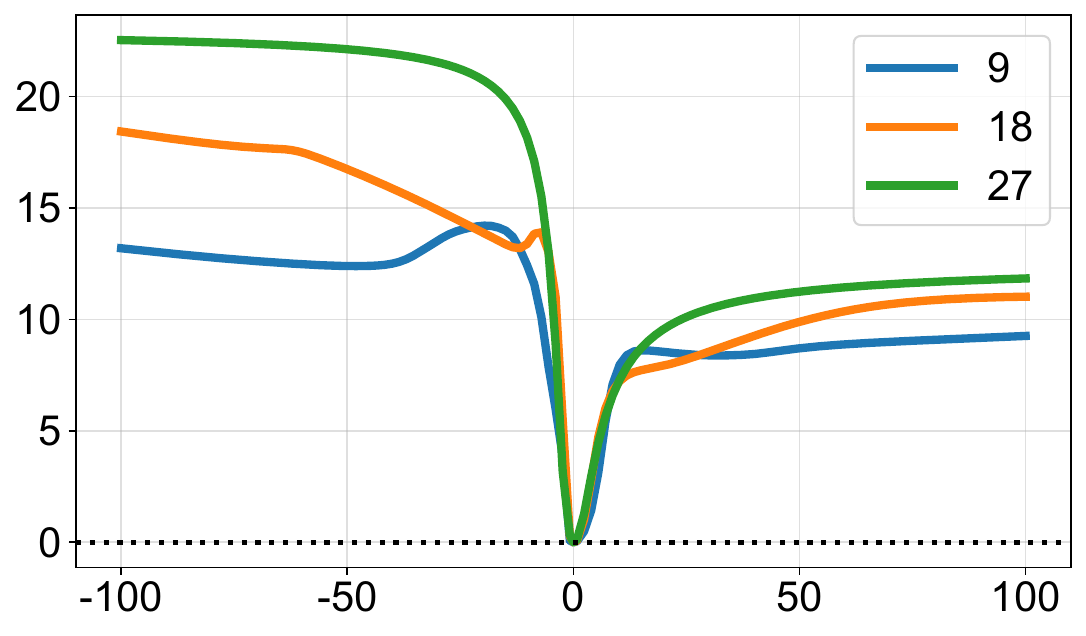}
\end{subfigure}

\vspace{2pt}
\begin{subfigure}[t]{0.32\textwidth}\centering
\includegraphics[width=1.\linewidth]{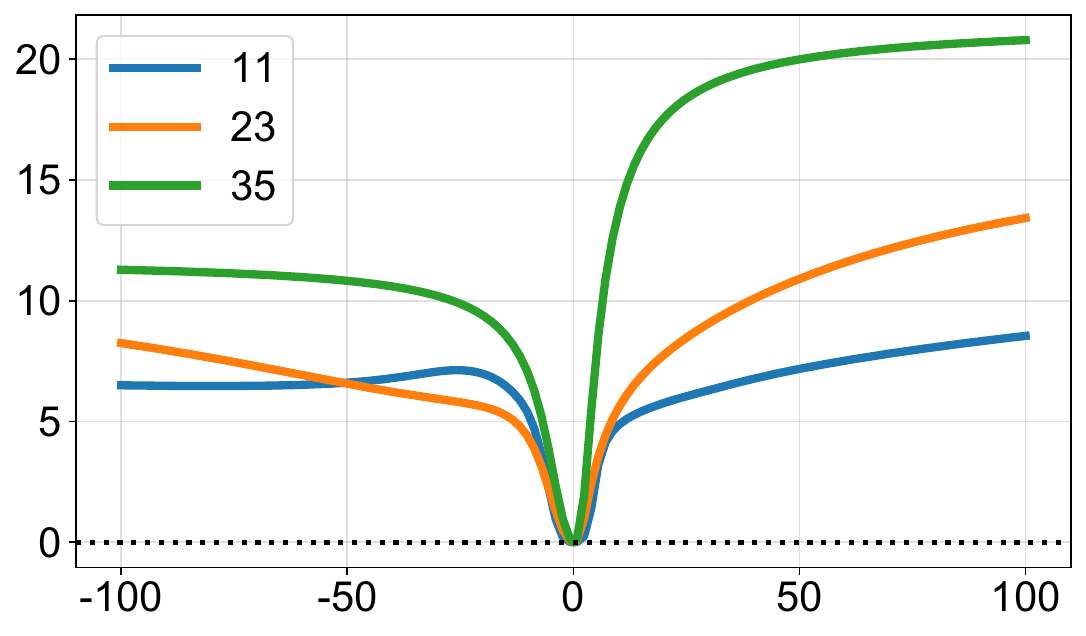}
\end{subfigure}\hfill
\begin{subfigure}[t]{0.32\textwidth}\centering
\includegraphics[width=1.\linewidth]{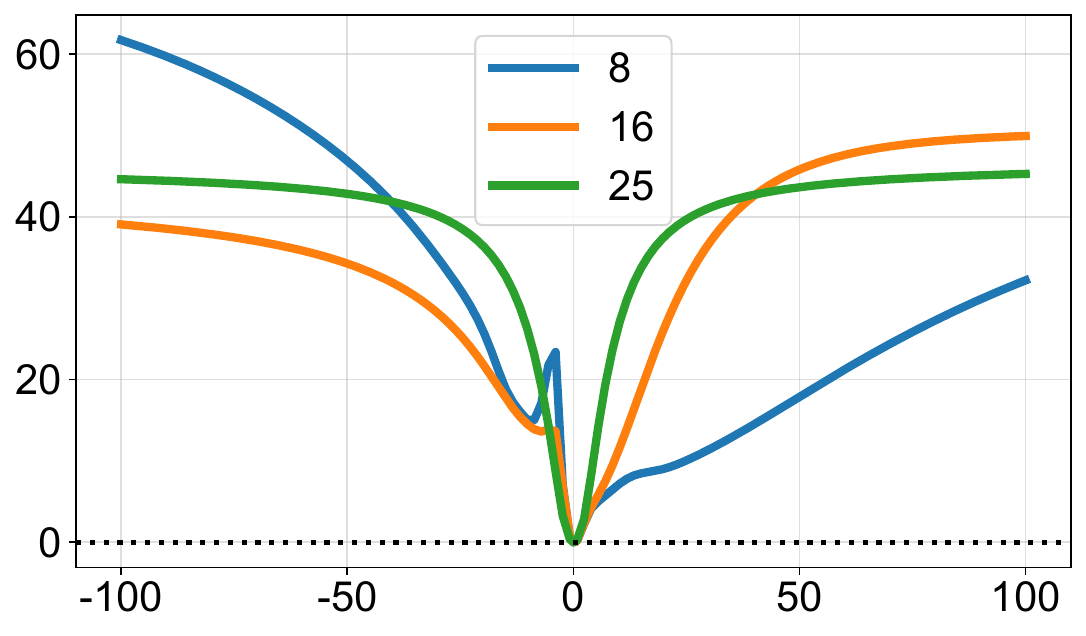}
\end{subfigure}\hfill
\begin{subfigure}[t]{0.32\textwidth}\centering
\includegraphics[width=1.\linewidth]{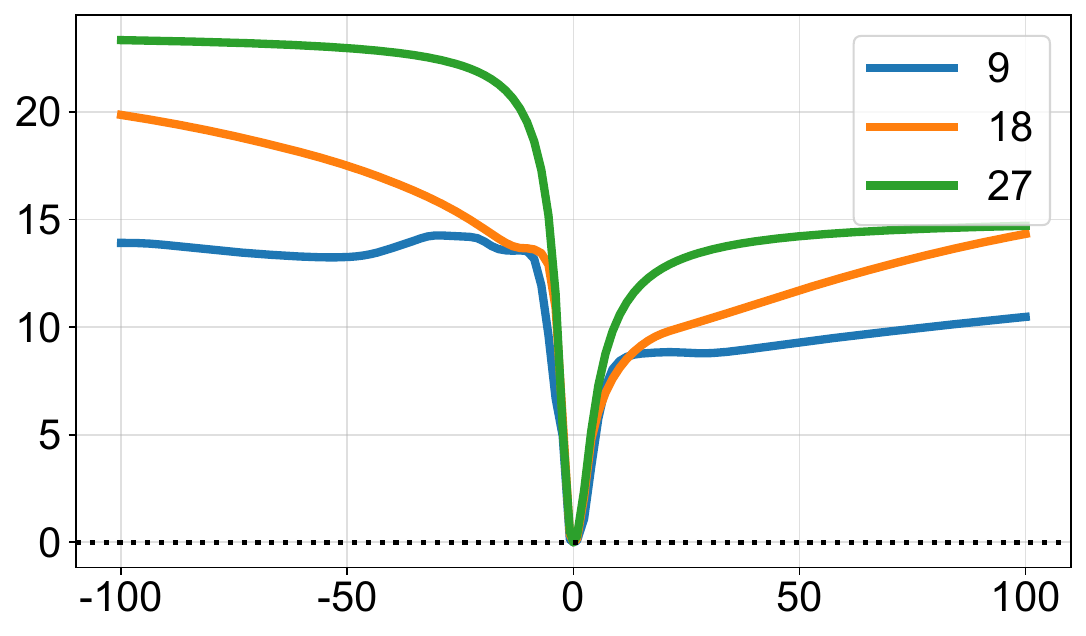}
\end{subfigure}

\vspace{-4pt}
\caption{\label{fig:exp-ce-appendix} Influence of steering strength $\alpha$ on the cross-entropy $\Delta \mathrm{CE}(\alpha)$ for the concepts (top to bottom): apathetic, depression, evil, humorous, impolite, and joy. Each row of three plots corresponds to a single steered concept, and each column corresponds to a different model. Steering is applied at three layers (early, middle, late), indicated in each legend. As predicted by Theorem~\ref{th:sweet-spot}, $\Delta \mathrm{CE}(\alpha)$ is locally $U$-shaped around $\alpha=0$ and saturates for large $|\alpha|$, in line with Proposition~\ref{prop:steering-limit-practice}.}
\end{figure}

\begin{figure}[p]
\centering
\begin{subfigure}[t]{0.32\textwidth}\centering
\panellayer{0}
\includegraphics[width=\linewidth]{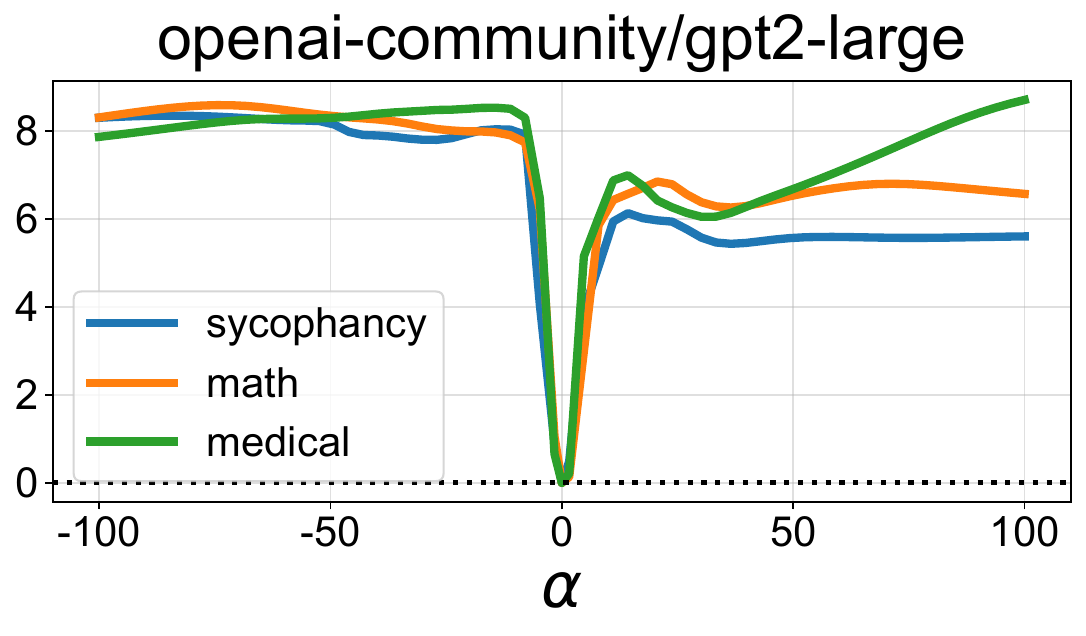}
\end{subfigure}\hfill
\begin{subfigure}[t]{0.32\textwidth}\centering
\panellayer{0}
\includegraphics[width=\linewidth]{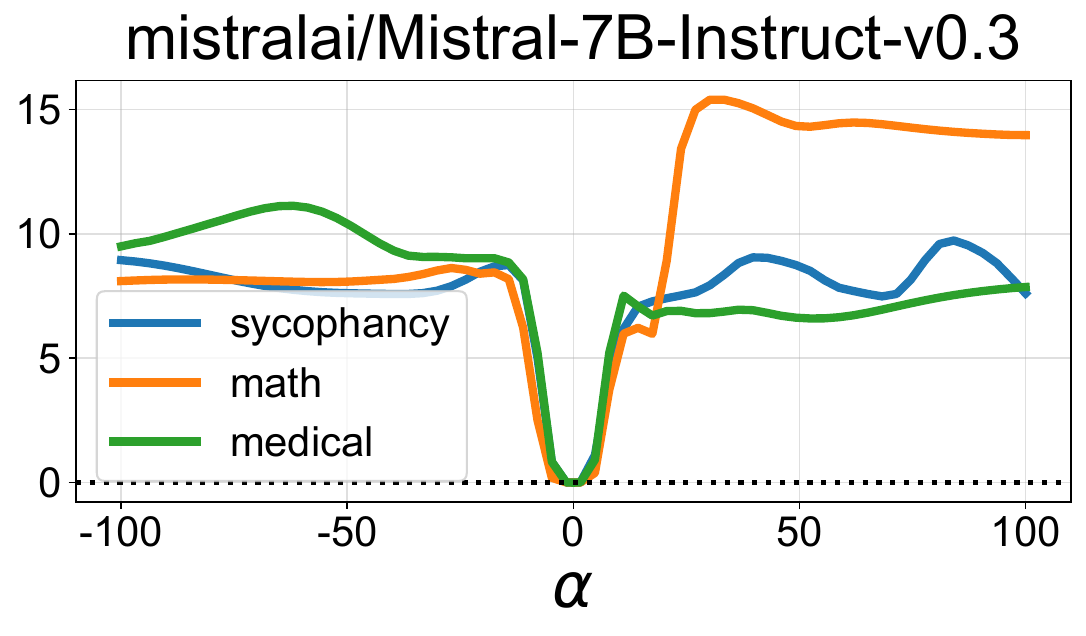}
\end{subfigure}\hfill
\begin{subfigure}[t]{0.32\textwidth}\centering
\panellayer{0}
\includegraphics[width=\linewidth]{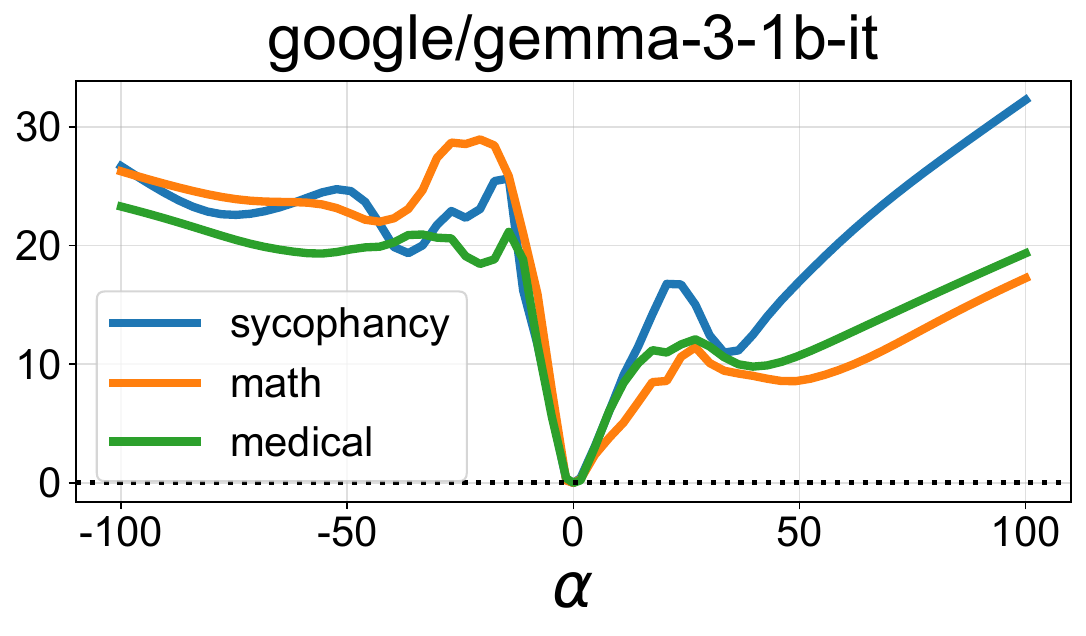}
\end{subfigure}
\rowtitle{}
\begin{subfigure}[t]{0.32\textwidth}\centering
\panellayer{17}
\includegraphics[width=\linewidth]{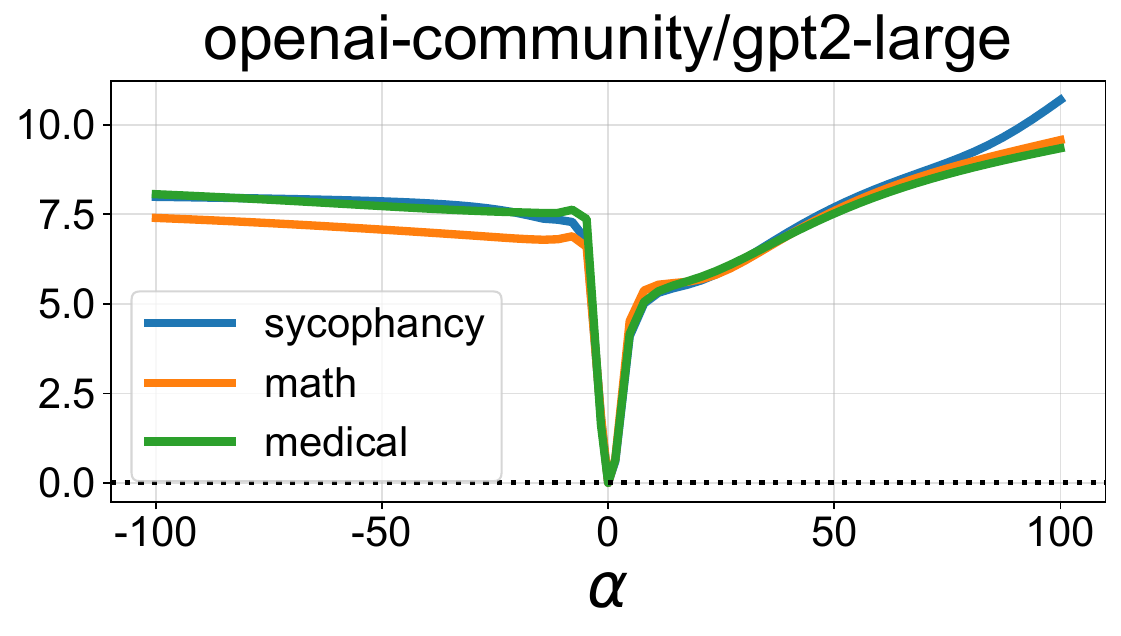}
\end{subfigure}\hfill
\begin{subfigure}[t]{0.32\textwidth}\centering
\panellayer{15}
\includegraphics[width=\linewidth]{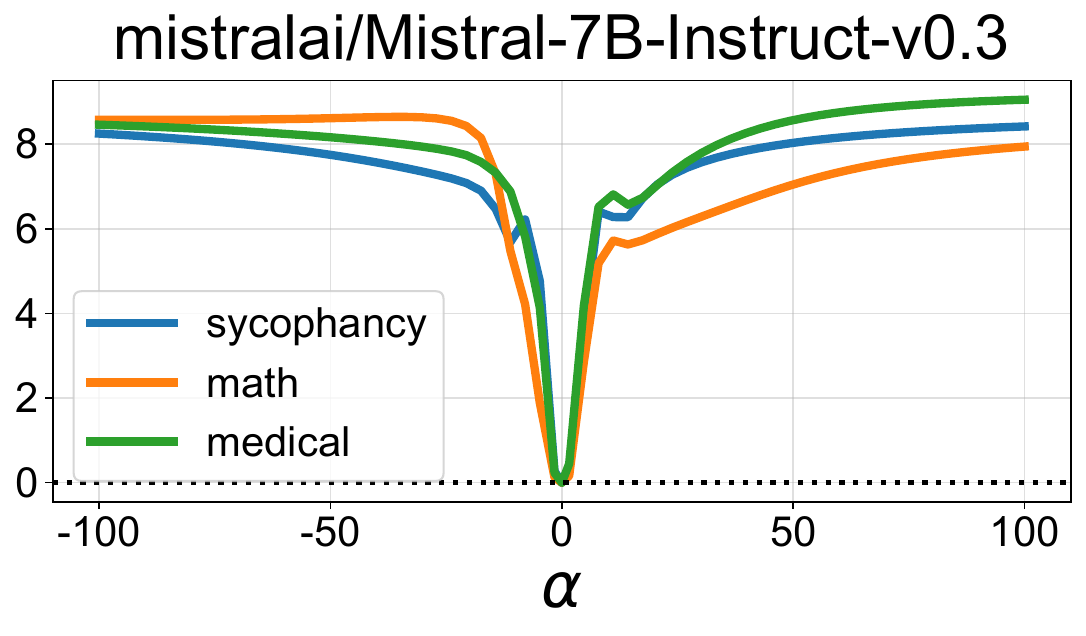}
\end{subfigure}\hfill
\begin{subfigure}[t]{0.32\textwidth}\centering
\panellayer{12}
\includegraphics[width=\linewidth]{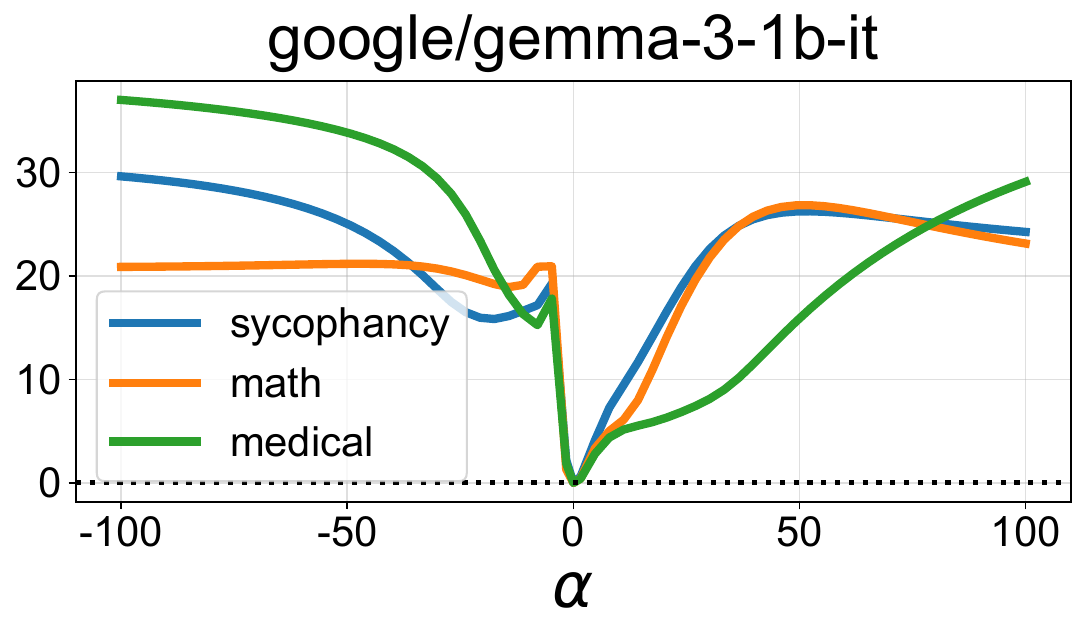}
\end{subfigure}
\rowtitle{}
\begin{subfigure}[t]{0.32\textwidth}\centering
\panellayer{35}
\includegraphics[width=\linewidth]{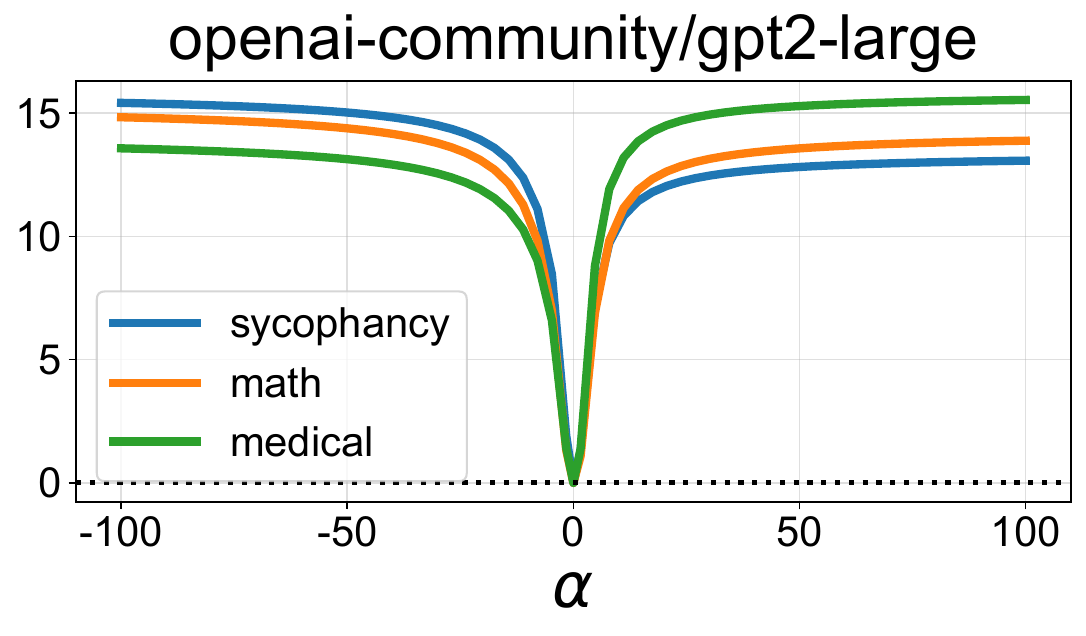}
\end{subfigure}\hfill
\begin{subfigure}[t]{0.32\textwidth}\centering
\panellayer{31}
\includegraphics[width=\linewidth]{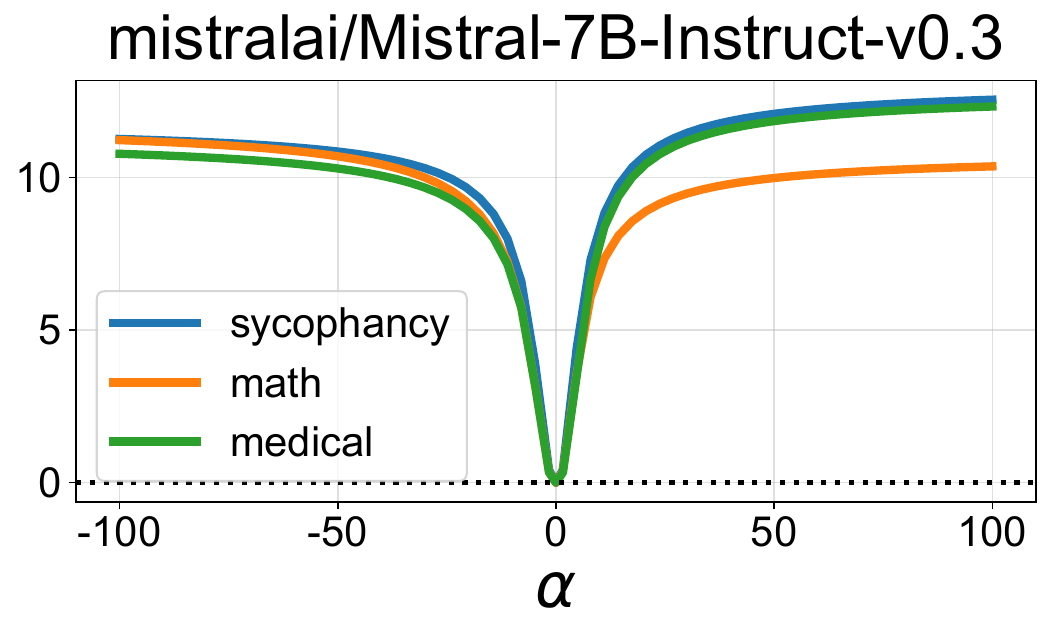}
\end{subfigure}\hfill
\begin{subfigure}[t]{0.32\textwidth}\centering
\panellayer{25}
\includegraphics[width=\linewidth]{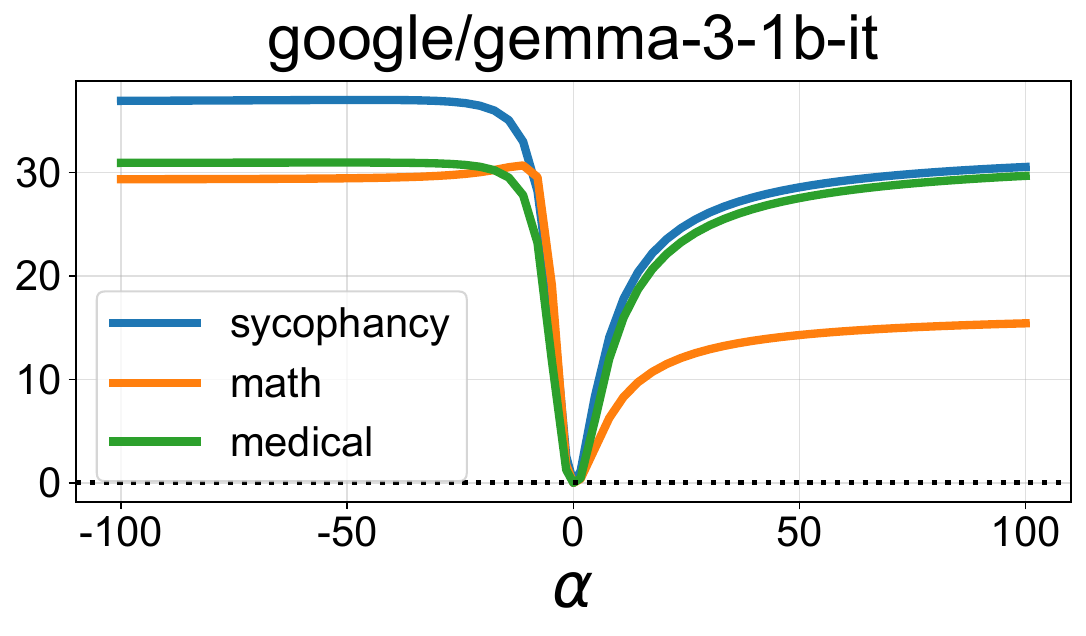}
\end{subfigure}
\caption{\label{fig:cross-entropy-hard-concept}Influence of steering strength on cross-entropy for the concepts math, medical, and sycophancy. Columns correspond to \texttt{GPT2-large}, \texttt{Mistral-7B-Instruct-v0.3}, and \texttt{Gemma-3-1B-it}. Every panel header gives the exact steered layer index. As predicted by Theorem~\ref{th:sweet-spot}, $\Delta \mathrm{CE}(\alpha)$ is locally $U$-shaped around $\alpha=0$ and saturates for large $|\alpha|$, in line with Proposition~\ref{prop:steering-limit-practice}.}
\end{figure}

\FloatBarrier   %
\section{Proofs and Additional Results}\label{app:proofs}

\subsection{Additional Results}\label{app:more-results-math}
\begin{figure}[H]
    \centering
    \includegraphics[width=0.4\linewidth]{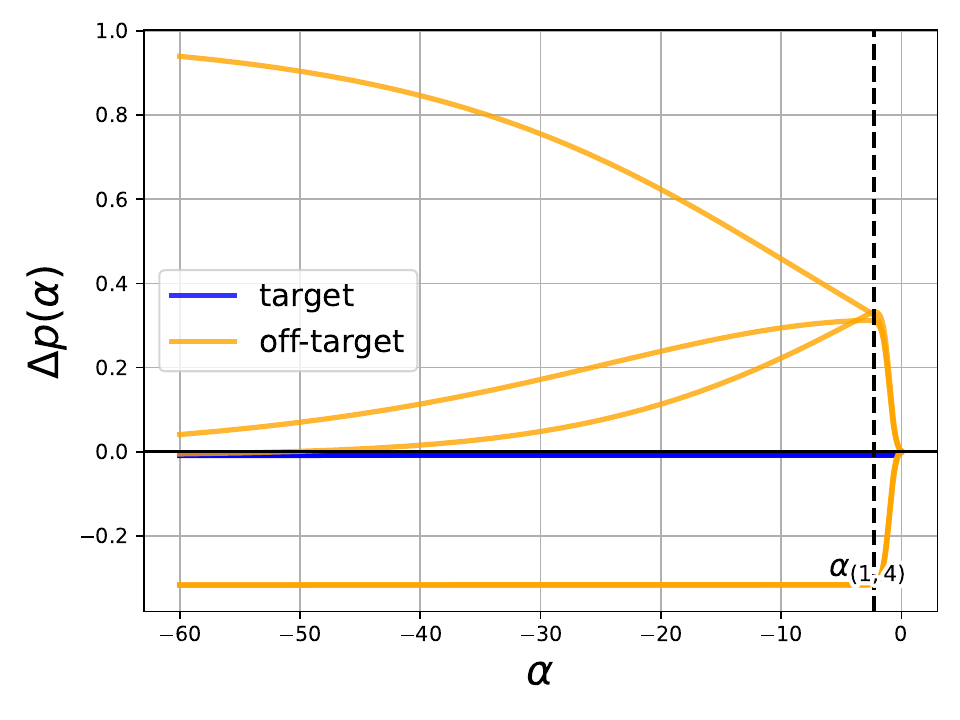}
    \vspace{-10pt}
    \caption{\label{fig:next-token-neg}Next-token probability increases $\Delta p(\alpha)$ for a fixed context and \textbf{negative} $\alpha$. Each curve corresponds to a token $z$: target tokens $\Tcal$ are in \textcolor{blue}{blue} and off-target tokens in \textcolor{orange}{orange}. Most off-target tokens exhibit a ``bump'' (peaking at $\alpha_{(1,4)}$), while one off-target token decreases on $\Reals$ and target tokens are increasing on $\Reals_-$.}
\end{figure}
\paragraph{Generalizing our results to $a_z$ depending on $j$.}
Inspecting the proofs shows that all results, except Remark~\ref{remark:sign}, rely on Lemma~\ref{lemma:sign-logodds}. Consequently, our main theorems continue to hold verbatim as long as the same sign-separation property holds for the log-odds $M$ (Lemma~\ref{lemma:sign-logodds}). However, if one allows $a_z$ and $b_z$ to depend on the context index $j$ without further structure, this sign-separation property may fail.

A simple generalization that preserves sign separation is to allow $a_z$ to depend on $j$ but keep $b_z$ independent of $j$, assuming $a_{j,z} > b_z$. This is interpretable: only in-concept (meaning, $z, \cbf_j \in C_k$) probabilities vary with the context, while off-concept probabilities remain at a small baseline level $b_z$. Allowing $b_z$ to depend on $j$ while still enforcing Lemma~\ref{lemma:sign-logodds} is possible, but typically leads to a less interpretable assumption.
Thus, roughly speaking, if we see Lemma~\ref{lemma:sign-logodds} as an assumption, then our results work. Additionally, Lemma~\ref{lemma:sign-logodds} seems to be true in practice see Appendix~\ref{app:more-results}.

\textbf{Plot of $\Delta p(\alpha)$ for negative steering strength.} We provide the counterpart of Figure~\ref{fig:exp-next-token} for negative $\alpha$, see Figure~\ref{fig:next-token-neg}.

\textbf{Perfect training of the UFM.}
To illustrate that Assumption~\ref{ass:perfect-train} is attainable in our theoretical setting, we train a UFM with gradient descent on cross-entropy loss on the following dataset instantiation from Definition~\ref{def:dataset}, which satisfies Assumption~\ref{ass:concept}:
\[
\forall z \in [V], \quad
\begin{cases}
a_{z} \defeq \frac{1-\varepsilon}{s} \\
b_{z} \defeq \frac{\varepsilon}{(G-1)s} \, ,
\end{cases}
\]
with $\varepsilon \in (0,(G-1)/G)$ a smoothing parameter. The dataset entropy is $\approx 1.3317$ (a lower bound on the achievable loss~\citep{thrampoulidis2024implicit}), and we reach a loss of $\approx 1.3318$, indicating that the model learns the dataset essentially \textbf{perfectly}.

As stated in Section~\ref{subsec:next-token}, we have an additional result about the limits of $\Delta p (\alpha)$ (Definition~\ref{def:gaps}):
\begin{proposition}[Limits of $\Delta p(\alpha)$] 
\label{prop:large-alpha-steering-theory}
Given a context index $j \in [m]$, and a token $z \in [V]$, the limits of $\Delta p(\alpha)$ when $\alpha \to + \infty$ is:
\begin{equation} \label{eq:limit-theory}
 \lim_{\alpha \to +\infty} \Delta p(\alpha) = \indic{z \in \overline M}\,\frac{ p(z \, | \, \cbf_j)}{\sum_{z' \in \overline M}  p(z' \, | \, \cbf_j)} - p(z \, | \, \cbf_j) \, .   
\end{equation}
Similarly, for the limit $\alpha \to -\infty$, replace $\overline M$ by $\underline M$ in Equation~\eqref{eq:limit-theory}.
\end{proposition}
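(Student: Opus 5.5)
The plan is to reduce the steered softmax to an exponential tilt of the unsteered distribution by the log-odds $M$, and then read off the limit. The key identity is that, under Assumption~\ref{ass:perfect-train}, the logits determine the dataset probabilities up to a context-dependent additive constant. Concretely, $(\Wbf\hbf_i)_z = \log{p(z \mid \cbf_i)} + \kappa_i$ for some $\kappa_i \in \Reals$. Averaging over $P$ and $N$ and subtracting gives
\[
(\Wbf\vbf)_z = \frac{1}{q}\sum_{i\in P}\log{p(z\mid\cbf_i)} - \frac{1}{q}\sum_{i\in N}\log{p(z\mid\cbf_i)} + c = M(z) + c ,
\]
where $c \defeq \frac1q\sum_{P}\kappa_i - \frac1q\sum_N \kappa_i$ does not depend on $z$. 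Here I use $\card{P}=\card{N}=q$ and Definition~\ref{def:log-odds}. Since the softmax is invariant to adding a constant to all logits, we obtain
\[
\sigma_z(f_\alpha(\cbf_j)) = \frac{p(z\mid\cbf_j)\,\exps{\alpha M(z)}}{\sum_{z'\in[V]} p(z'\mid\cbf_j)\,\exps{\alpha M(z')}} .
\]
All the probabilities $p(z'\mid\cbf_j)$ are strictly positive by Assumption~\ref{ass:concept}, so the logarithms are well defined. Note, however, that only Assumption~\ref{ass:perfect-train} and positivity are needed here.

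Next, let $M^\star \defeq \max_{z'} M(z')$, and divide the numerator and denominator by $\exps{\alpha M^\star}$. Every term with $z'\notin\overline M$ carries a factor $\exps{\alpha(M(z')-M^\star)}$ with a strictly negative exponent rate, so it tends to $0$ as $\alpha\to+\infty$. Terms with $z'\in\overline M$ have factor exactly $1$. The denominator therefore converges to $\sum_{z'\in\overline M}p(z'\mid\cbf_j) > 0$. The numerator converges to $p(z\mid\cbf_j)$ if $z\in\overline M$ and to $0$ otherwise. Subtracting $\sigma_z(f(\cbf_j)) = p(z\mid\cbf_j)$, which is $\alpha$-independent, yields Equation~\eqref{eq:limit-theory}. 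For $\alpha\to-\infty$, the same argument applies with $M^\star$ replaced by $\min_{z'} M(z')$: terms outside $\underline M$ decay and those in $\underline M$ have factor $1$. Equivalently, replace $M$ by $-M$, which swaps $\overline M$ and $\underline M$.

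The only delicate step is the first one: justifying that the logits equal $\log p$ up to a per-context shift. This follows directly from inverting the softmax, since $\sigma(\abf)=\sigma(\abf')$ iff $\abf-\abf'$ is a constant vector. I also need to check that these shifts cancel in $\Wbf\vbf$ to a $z$-independent constant, which is where the linearity of $\Wbf$ and the equal sizes of $P$ and $N$ enter. The rest is a routine dominated-exponential limit over a finite vocabulary.
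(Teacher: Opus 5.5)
Your proposal is correct and follows essentially the same route as the paper. You re-derive the exponential-tilt form $\sigma_z(f_\alpha(\cbf_j)) \propto p(z\mid\cbf_j)\exps{\alpha M(z)}$, which the paper obtains in Lemmas~\ref{lemma:steering-model} and~\ref{lemma:rewriting}, and then divide numerator and denominator by $\exps{\alpha M^\star}$ to read off the limit. The only cosmetic difference is the case $z\notin\overline M$: there the paper bounds the ratio by $\frac{p(z\mid\cbf_j)}{p(z^\star\mid\cbf_j)}\exps{\alpha(M(z)-M^\star)}$, whereas you use the fact that the denominator has a positive limit, which is equally valid.
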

Equation~\eqref{eq:limit-theory} has the following interpretation: in the limit $\alpha \to +\infty$ (resp.\ $\alpha \to -\infty$), $\Delta p(\alpha)$ concentrates all its mass on the tokens $z \in \overline M$ (resp.\ $z \in \underline M$). If multiple tokens attain the maximal or minimal log-odds, the probability mass is shared among all such tokens. 
\begin{proof}
Let us prove that softmax behaves as follows when scaling the steering strength $\alpha$:
\begin{equation}\label{eq:limit-steered-softmax}
  \forall z \in [V] , \qquad \lim_{\alpha \to +\infty} \sigma_z \big( f_{\alpha}(\cbf_j) \big) = \indic{z \in \overline M} \frac{p(z \mid \cbf_j)}{\sum_{z' \in \overline M}  p(z' \mid \cbf_j) } \, .   
\end{equation}
where $\overline M$ is the set of tokens attaining the maximum log-odd $M_{\text{max}}$. 
A short proof of the previous display is as follows:
\begin{align*}
 \sigma_z \big( f_{\alpha}(\cbf_j) \big) &= \frac{p (z \mid \cbf_j) \exp{\alpha M(z)}}{\sum_{z' \in [V]}  p (z' \mid \cbf_j) \exp{\alpha M(z')} }	\\
 &= \frac{p (z \mid \cbf_j) \exp{\alpha M(z)}}{\sum_{z' \in [V]} p (z' \mid \cbf_j) \exp{\alpha M(z')} } \frac{\exp{-\alpha M_{\text{max}}}}{\exp{-\alpha M_{\text{max}}}} \\
 &= \frac{p (z \mid \cbf_j) \exp{\alpha (M(z)-M_{\text{max}})}}{\sum_{z' \in [V]} p (z' \mid \cbf_j) \exp{\alpha (M(z') - M_{\text{max}})} } \\
 &= 
 \begin{cases}
	\frac{p (z \mid \cbf_j)}{\sum_{z' \in \overline M} p (z' \mid \cbf_j) + \sum_{z' \notin \overline M} p (z' \mid \cbf_j) \exp{\alpha (M(z') - M_{\text{max}})}} &\text{if $z \in \overline M$,}\\[6pt]
	\frac{p (z \mid \cbf_j) \exp{\alpha (M(z)-M_{\text{max}})}}{\sum_{z' \in [V]} p (z' \mid \cbf_j) \exp{\alpha (M(z') - M_{\text{max}})} } &\text{otherwise.} 
 \end{cases}
\end{align*}
In the first case ($z \in \overline M$) of the previous display, as $(M(z') - M_{\text{max}}) < 0$ with $z' \notin \overline M$, the limit is
\[
\lim_{\alpha \to +\infty} \frac{p (z \mid \cbf_j)}{\sum_{z' \in \overline M} p (z' \mid \cbf_j) + \sum_{z' \notin \overline M} p (z' \mid \cbf_j) \exp{\alpha (M(z') - M_{\text{max}})}} = \frac{p (z \mid \cbf_j)}{\sum_{z' \in \overline M} p (z' \mid \cbf_j)} \, .
\]
In the second case ($z \notin \overline M$), the limit is done by bounding the term, using the fact that 
\[
\sum_{z' \in [V]}  p (z' \mid \cbf_j) \exp{\alpha (M(z') - M_{\text{max}})} \geq  p (z^\star \mid \cbf_j) \exp{0} 
\]
where $z^\star \in \overline M$. We get the following bound:
\[
0 < \frac{p (z \mid \cbf_j) \exp{\alpha (M(z)-M_{\text{max}})}}{\sum_{z' \in [V]} p (z' \mid \cbf_j) \exp{\alpha (M(z') - M_{\text{max}})} } \leq \frac{p (z \mid \cbf_j)}{p (z^\star \mid \cbf_j)} \exp{\alpha (M(z) - M_{\text{max}})} \, ,
\]
which implies that the second case term goes to $0$ as $\alpha \to +\infty$ (because $(M(z)- M_{\text{max}}) < 0$ with $z \notin \overline M$).

Using the previous display we get:
\begin{align}
	\lim_{\alpha \to +\infty} \Delta p(\alpha) &= \indic{z \in \overline M}\,\frac{ p(z \, | \, \cbf_j)}{\sum_{z' \in \overline M}  p(z' \, | \, \cbf_j)} - p(z \, | \, \cbf_j) \, .
\end{align}
Same thing for $\alpha \to -\infty$.  
\end{proof}

\subsection{Technical Lemmas}\label{app:lemmas}
In the following we introduce and prove the technical lemmas needed for Section~\ref{sec:main}.
In the UFM model, activation steering on the embedding $\hbf_j$ admits an explicit expression for the resulting model output:
\begin{lemma}[Steering on UFM]
\label{lemma:steering-model}
Steering the embedding $\hbf_j$ along the direction $\vbf$ from Equation~\eqref{eq:steer-vector} with strength $\alpha \in \Reals$, we obtain the steered logits
	\[
	f_{\alpha}(\cbf_j) \defeq \Wbf\big(\hbf_j+\alpha\vbf\big)
	= \lbf_j+\frac{\alpha}{q} \left(\sum_{i \in P}\lbf_i -\sum_{i \in N}\lbf_i \right) \, ,
    \]
  where $\lbf_j \defeq f(\cbf_j)$ are the unsteered logits for context $\cbf_j$.
\end{lemma}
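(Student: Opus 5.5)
The claim follows from linearity of the decoder together with the definition of $\vbf$ in Equation~\eqref{eq:steer-vector}; no assumption beyond Definition~\ref{def:model} is needed. I will expand $\Wbf(\hbf_j + \alpha \vbf) = \Wbf \hbf_j + \alpha \Wbf \vbf$ and identify each term separately.

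For the first term, Definition~\ref{def:model} gives $\Wbf \hbf_j = f(\cbf_j) = \lbf_j$ directly. For the second term, I substitute the difference-of-means expression for $\vbf$ and move $\Wbf$ inside the finite sums:
\[
\Wbf \vbf = \frac{1}{\card{P}} \sum_{i \in P} \Wbf \hbf_i - \frac{1}{\card{N}} \sum_{i \in N} \Wbf \hbf_i .
\]
Using $\card{P} = \card{N} = q$ and $\Wbf \hbf_i = \lbf_i$ for every $i \in P \cup N$, this becomes $\frac{1}{q}\bigl(\sum_{i \in P} \lbf_i - \sum_{i \in N} \lbf_i\bigr)$. Multiplying by $\alpha$ and adding $\lbf_j$ yields the stated formula.

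No step is a real obstacle. The only points to check are that $P, N \subset [m]$, so that each $\hbf_i$ appearing in $\vbf$ is a column of $\Hbf$ whose logits $\lbf_i = f(\cbf_i)$ are defined, and that the common normalisation $q$ relies on the standing hypothesis $\card{P} = \card{N}$.
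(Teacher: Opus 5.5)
Your proof is correct and matches the paper's: both rely only on linearity of the decoder and the difference-of-means definition of $\vbf$ with $\card{P}=\card{N}=q$. The paper writes $\hbf_i = \Hbf\ebf_i$ and factors out $\Wbf\Hbf$, which is the same computation as distributing $\Wbf$ over the finite sums as you do.
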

\begin{proof}
The rewriting is a direct consequence of the UFM model and steering vector $\vbf$ linearity:
\begin{align*}
f_{\alpha}(\cbf_j) 
&= \Wbf \left(\Hbf \ebf_j + \alpha \left( \frac{1}{q}\sum_{i \in P} \Hbf \ebf_i  -  \frac{1}{q}\sum_{i \in N} \Hbf \ebf_i \right) \right) \\
&= \Wbf \Hbf \left(\ebf_j+\alpha \left( \frac{1}{q}\sum_{i\in P} \ebf_i  -  \frac{1}{q}\sum_{i\in N} \ebf_i \right) \right) \\
& = \lbf_j+\frac{\alpha}{q} \left(\sum_{i \in P}\lbf_i -\sum_{i \in N}\lbf_i \right) \, .
\end{align*}    
\end{proof}
Thus, studying activation steering reduces to analyzing how the softmax behaves under a linear shift of its input $\lbf_j$ by the vector $\left(\sum_{i \in P}\lbf_i -\sum_{i \in N}\lbf_i \right)$.

The log-odds $M(z)$ (Definition~\ref{def:log-odds}) are central because steering modifies the softmax by reweighting each token probability $p(z \mid \cbf_j)$ by the exponential factor $\exp{\alpha M(z)}$.
\begin{lemma}[Rewriting $\Delta p(\alpha)$]
\label{lemma:rewriting}
Assume Assumption~\ref{ass:perfect-train}.
  The first component of $\Delta p(\alpha)$ can be rewritten as follows:
  \begin{align*}
	\sigma_z \big(f_{\alpha}(\cbf_j) \big)
  &= \frac{p(z \, | \, \cbf_j)\exp{\alpha M(z) }}{\sum_{z' \in [V]} p(z' \, | \, \cbf_j) \exp{\alpha M(z') }} \, .
	\end{align*}
\end{lemma}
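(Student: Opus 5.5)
The plan is to start from Lemma~\ref{lemma:steering-model}, which expresses the steered logits as
\[
f_\alpha(\cbf_j) = \lbf_j + \frac{\alpha}{q}\Bigl(\sum_{i\in P}\lbf_i - \sum_{i\in N}\lbf_i\Bigr) \, .
\]
Evaluating the $z$-th coordinate of the softmax then gives a numerator of
\[
\exp{(\lbf_j)_z + \tfrac{\alpha}{q}\bigl(\sum_{i\in P}(\lbf_i)_z - \sum_{i\in N}(\lbf_i)_z\bigr)} \, .
\]
The key step is to replace each raw logit by a log-probability. For every context $i$ we write $(\lbf_i)_z = \log{\sigma_z(\lbf_i)} + \log{Z_i}$, where $Z_i \defeq \sum_{z'}\exps{(\lbf_i)_{z'}}$ is the partition function of context $i$. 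By Assumption~\ref{ass:perfect-train}, $\sigma_z(\lbf_i) = p(z \mid \cbf_i)$, so
\[
(\lbf_i)_z = \log{p(z\mid\cbf_i)} + \log{Z_i} \, ,
\]
and the additive term $\log{Z_i}$ does not depend on $z$.

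Substituting this identity into the steered logit yields
\[
(f_\alpha(\cbf_j))_z = \log{p(z\mid \cbf_j)} + \frac{\alpha}{q}\Bigl(\sum_{i\in P}\log{p(z\mid\cbf_i)} - \sum_{i\in N}\log{p(z\mid\cbf_i)}\Bigr) + K_j(\alpha) \, ,
\]
with $K_j(\alpha) \defeq \log{Z_j} + \frac{\alpha}{q}\bigl(\sum_{i\in P}\log{Z_i} - \sum_{i\in N}\log{Z_i}\bigr)$. The bracketed sum, divided by $q$, is exactly $M(z)$ from Definition~\ref{def:log-odds}. Since $K_j(\alpha)$ is independent of $z$ and the softmax is invariant to adding a constant to all coordinates, we can discard it. Exponentiating and normalizing over $z'\in[V]$ then gives exactly the claimed formula.

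The only point needing care is the bookkeeping of these $z$-independent constants, including the partition functions of the steering contexts in $P$ and $N$. They enter with coefficient $\alpha/q$ but are still uniform across tokens, so they cancel in the softmax. The logarithms are well defined because $0 < b_z < a_z < 1$ under Assumption~\ref{ass:concept}, or more generally because softmax probabilities are always strictly positive. I do not expect any genuine obstacle here: the argument is a direct computation.
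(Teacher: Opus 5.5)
Your proposal is correct and follows essentially the same route as the paper. The paper also writes each logit as $\log{p(z \mid \cbf_i)}$ plus a $z$-independent shift $\beta_i$, which you identify concretely as $\log{Z_i}$. It then substitutes into Lemma~\ref{lemma:steering-model}, discards the accumulated constant by shift-invariance of the softmax, recognizes $\alpha M(z)$, and exponentiates.
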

    \emph{Proof.} We express explicitly $\sigma_z \big(f_{\alpha}(\cbf_j) \big)$ in terms of $p(z \mid \cbf_j)$ and log-odds $M(z)$ using the rewriting of the logits from Lemma~\ref{lemma:steering-model}:
\begin{align*}
	\sigma_z \big( f_{\alpha}(\cbf_j) \big)  &= \sigma_z \left( \lbf_j+\frac{\alpha}{q} \left(\sum_{i \in P}\lbf_i -\sum_{i \in N}\lbf_i \right) \right) \, .
	\intertext{By Assumption~\ref{ass:perfect-train}, we have $\sigma_z(\lbf_j) = p(z \mid \cbf_j)$ and using that the \textcolor{ao}{\textbf{softmax is shift-invariant}}, there exists $\beta_j \in \Reals$ s.t. $\lbf_{j,z} = \log{p(z \mid \cbf_j)} + \beta_{j}$. Using this representation, and the notation $p(\cdot \mid \cbf_j) \defeq (p(z \mid \cbf_j))_{z \in [V]}$ with $\Log$ applied \textbf{element-wise} to vectors, we get} 
	\sigma_z \big(f_{\alpha}(\cbf_j) \big) &= \sigma_z \left( \log{p(\cdot \mid \cbf_j)} + \beta_{j} \mathbf{1} + \frac{\alpha}{q} \left( \sum_{u\in P} \log{p(\cdot \mid \cbf_u)} -  \sum_{v \in N} \log{p(\cdot \mid \cbf_v)} + \sum_{u\in P} \beta_u \mathbf{1} - \sum_{v\in N} \beta_v \mathbf{1} \right)  \right) \\
	&=\sigma_z \left( \log{p(\cdot \mid \cbf_j)} + \frac{\alpha}{q} \left( \sum_{u\in P} \log{p(\cdot \mid \cbf_u)} -  \sum_{v \in N} \log{p(\cdot \mid \cbf_v)} \right) + {\color{ao} \beta_{j} \mathbf{1} + \frac{\alpha}{q} \left( \sum_{u\in P} \beta_u \mathbf{1} - \sum_{v\in N} \beta_v \mathbf{1} \right)} \right) \\
	&= \sigma_z \left( \log{p(\cdot \mid \cbf_j)} + \frac{\alpha}{q} \left( \sum_{u\in P} \log{p(\cdot \mid \cbf_u)} -  \sum_{v \in N} \log{p(\cdot \mid \cbf_v)} \right) \right) \, .
\end{align*}
The product $\prod$ and division of vectors $p(\cdot \mid \cbf_j)$ is done \textbf{element-wise} in the following:
\begin{align*}
        \sigma_z \big( f_{\alpha}(\cbf_j) \big) &=  \sigma_z \left( \log{p(\cdot \mid \cbf_j)} + \frac{\alpha}{q} \log{\frac{\prod_{u\in P} p(\cdot \mid \cbf_u)}{\prod_{v\in N} p(\cdot \mid \cbf_v)}}\right) \\
	&=  \sigma_z \left( \log{p(\cdot \mid \cbf_j)} + \alpha \mbf \right) \, ,
\end{align*}
where $\mbf \defeq (M(1), \ldots, M(V))^\top \in \Reals^V$ is the vector of log-odds. Final step is to write the softmax $\sigma_z \big( f_{\alpha}(\cbf_j) \big)$ explicitly:
\begin{align*}
	\sigma_z \big( f_{\alpha}(\cbf_j) \big)  &= \frac{\exp{\log{p(z \mid \cbf_j)} + \alpha M(z)}}{\sum_{z' \in [V]} \exp{\log{p(z' \mid \cbf_j)} + \alpha M(z')}} \\
	&= \frac{\exp{\log{p(z \mid \cbf_j)}}\exp{\alpha M(z)}}{\sum_{z' \in [V]} \exp{\log{p(z' \mid \cbf_j)}}\exp{\alpha M(z')}} \\
    &= \frac{p(z \mid \cbf_j) \exp{\alpha M(z)}}{\sum_{z' \in [V]} p(z' \mid \cbf_j) \exp{\alpha M(z')}} \, .
\end{align*}
\qed
\clearpage
As a first step toward formalizing why steering makes concept tokens $\mathcal{C}$ more likely as $\alpha$ increases, we establish a sign-separation property for the log-odds $M(z)$:
\begin{lemma}[Log-odds $M(z)$ sign separation]
\label{lemma:sign-logodds}
Assume Assumption~\ref{ass:concept}.
Let $\Tcal$ be the target concept.
 For any $z \in [V]$, we have $z \in \mathcal{T}$ if, and only if, $M(z) > 0$.
\end{lemma}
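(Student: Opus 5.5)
We will prove the lemma by expanding $M(z)$ directly under Assumption~\ref{ass:concept}. For each $z$ it reduces to a weighted difference of $\log a_z$ and $\log b_z$ whose sign depends only on whether $z\in\Tcal$.

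\textbf{Expanding the log-odds.} Write
\[
qM(z)=\sum_{i\in P}\log{p(z\mid\cbf_i)}-\sum_{i\in N}\log{p(z\mid\cbf_i)}.
\]
By construction of $P$, every context $\cbf_i$ with $i\in P$ belongs to $\Tcal$. Every context $\cbf_i$ with $i\in N$ does not belong to $\Tcal$, whether $N$ is built in the random or the contrastive setting; in the contrastive case it lies in a different concept $C_{k'}$. Since each context lies in exactly one concept, Assumption~\ref{ass:concept} gives $p(z\mid\cbf_i)\in\{a_z,b_z\}$, according to whether $z$ shares the concept of $\cbf_i$.

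\textbf{Case $z\in\Tcal$.} Every positive context shares $z$'s concept, so $p(z\mid\cbf_i)=a_z$ for $i\in P$. No negative context is in $\Tcal$, so $p(z\mid\cbf_i)=b_z$ for $i\in N$. Since $\card{P}=\card{N}=q$,
\[
M(z)=\log{a_z/b_z}>0,
\]
because $a_z>b_z>0$.

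\textbf{Case $z\notin\Tcal$.} Now $p(z\mid\cbf_i)=b_z$ for every $i\in P$. For $i\in N$, let $n_z$ be the number of negative contexts lying in the same concept as $z$. Then $p(z\mid\cbf_i)=a_z$ for those $n_z$ contexts and $b_z$ for the others, so
\[
qM(z)=q\log b_z-n_z\log a_z-(q-n_z)\log b_z=-n_z\log{a_z/b_z}\le 0.
\]
Hence $M(z)\le 0$ and in particular $M(z)$ is not positive. Combining the two cases gives $z\in\Tcal\iff M(z)>0$.

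\textbf{Main obstacle.} The delicate point is strictness. Off-target tokens can have $M(z)=0$, namely when no negative context shares their concept, so the clean separation is strict only on the target side. The lemma's ``if and only if'' with $M(z)>0$ is exactly compatible with this, since $M(z)\le 0$ still rules out positivity. The argument also needs the facts that positive and negative contexts are each single-concept and that $N$ contains no $\Tcal$-contexts; I will state explicitly that these follow from the paper's setup.
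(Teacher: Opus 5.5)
Your proof is correct and follows the paper's argument essentially line for line. The paper likewise computes $M(z)=\log{a_z/b_z}>0$ for $z\in\Tcal$ and $M(z)=-\frac{q_z}{q}\log{a_z/b_z}\le 0$ otherwise, where $q_z$ (your $n_z$) counts the negative contexts that share $z$'s concept, and it also notes that $q_z$ can be $0$.
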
 
\begin{proof}
    Given $z \in [V]$, $\Tcal$ the target concept, and the dataset of Definition~\ref{def:dataset} satisfying Assumption~\ref{ass:concept}, the log-odds $M(z)$ can be rewritten as:
\begin{align*}
	M(z) &= \frac{1}{q}\log{\frac{\prod_{i\in P}p(z \mid \cbf_i)}{\prod_{i\in N}p(z \mid \cbf_i)}}  \\
	&= 
	\begin{cases}
		\frac{1}{q} \log{\frac{\left( a_z \right)^q}{\left( b_z \right)^q}} &\text{, if } z \in \Tcal \, , \\ 
		\frac{1}{q} \log{\frac{\left( b_z \right)^q}{\left( a_z \right)^{q_z} \left( b_z \right)^{q - q_z}}} &\text{, otherwise.}
	\end{cases} \\
	&= 
	\begin{cases}
		\log{\frac{a_z}{b_z}} \, , \\ 
		-\frac{q_z}{q} \log{\frac{a_z}{b_z}} \, .
	\end{cases}
\end{align*}
with $q_z \defeq \card{\{ j \in N : \exists k \in [G], \, \cbf_j, z \in C_k \}} \in \mathbb{N}$ (note that, it can be $0$).

Using the above rewriting, we obtain that in the first case ($z \in \Tcal$),
$M(z) = \log{\frac{a_z}{b_z}} > 0$ by Assumption~\ref{ass:concept}.
Otherwise, $M(z) = -\frac{q_z}{q}\log{\frac{a_z}{b_z}} \leq 0$, again by Assumption~\ref{ass:concept}.
\end{proof}
Now let us compute the derivative of $\Delta p(\alpha)$:
\begin{lemma}[Derivative of $\Delta p(\alpha)$] \label{lemma:derivative}
Let $z \in [V], j \in [m]$. We have the following derivative w.r.t. $\alpha$:
\[
\Delta' p(z \mid \cbf_j, \alpha) = \sigma_z \big( f_{\alpha}(\cbf_j) \big) \left( M(z) -   \expecunder{ M(Z) }{Z \sim \sigma \big( f_{\alpha}(\cbf_j) \big)}  \right)
\]    
\end{lemma}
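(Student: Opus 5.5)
The plan is to start from the closed-form expression of Lemma~\ref{lemma:rewriting} and differentiate it directly in $\alpha$. Fix $z$ and $j$, and write
\[
\pi_\alpha(z) \defeq \sigma_z\big(f_\alpha(\cbf_j)\big) = \frac{p(z \mid \cbf_j)\exp{\alpha M(z)}}{S(\alpha)},
\qquad
S(\alpha) \defeq \sum_{z' \in [V]} p(z' \mid \cbf_j)\exp{\alpha M(z')} .
\]
The second term $\sigma_z(f(\cbf_j)) = p(z \mid \cbf_j)$ of $\Delta p(z \mid \cbf_j, \alpha)$ does not depend on $\alpha$. Hence $\Delta' p(z \mid \cbf_j,\alpha) = \frac{\Diff}{\Diff \alpha}\pi_\alpha(z)$, and it suffices to differentiate $\pi_\alpha(z)$.

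The cleanest route is the logarithmic derivative. We have
\[
\log{\pi_\alpha(z)} = \log{p(z \mid \cbf_j)} + \alpha M(z) - \log{S(\alpha)},
\]
where $\pi_\alpha(z) > 0$ because $p(z \mid \cbf_j) > 0$ under Assumptions~\ref{ass:concept} and~\ref{ass:perfect-train}. Differentiating gives $\frac{\pi_\alpha'(z)}{\pi_\alpha(z)} = M(z) - S'(\alpha)/S(\alpha)$. Then we compute
\[
S'(\alpha) = \sum_{z'} p(z' \mid \cbf_j) M(z') \exp{\alpha M(z')}.
\]
Dividing by $S(\alpha)$ gives
\[
\frac{S'(\alpha)}{S(\alpha)} = \sum_{z'} \pi_\alpha(z') M(z') = \expecunder{M(Z)}{Z \sim \sigma(f_\alpha(\cbf_j))}.
\]
Multiplying back by $\pi_\alpha(z)$ yields exactly the claimed formula.

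There is no genuine obstacle here. The only points to check are that $S(\alpha) > 0$ and that $S$ is smooth: it is a finite sum of positive exponentials, so differentiation under the sum is trivial. One also needs the positivity of $\pi_\alpha(z)$ to justify taking logs. Alternatively, the quotient rule avoids logs entirely and gives the same expression after factoring out $\pi_\alpha(z)$.
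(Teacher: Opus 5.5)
Your proof is correct and follows essentially the paper's route: differentiate the closed form from Lemma~\ref{lemma:rewriting}, then identify $S'(\alpha)/S(\alpha)$ (the paper's $D_j'(\alpha)/D_j(\alpha)$) as $\expecunder{M(Z)}{Z \sim \sigma(f_{\alpha}(\cbf_j))}$. The paper uses the quotient rule where you use the logarithmic derivative, which is only a cosmetic difference. Your positivity check is also not needed, since $\pi_\alpha(z)$ is a softmax entry and is therefore always positive.
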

\begin{proof}
First, let us denote $D_j(\alpha) \defeq \sum_{z' \in [V]} p(z' \mid \cbf_j) \exp{\alpha M(z')} $.
Using Lemma~\ref{lemma:rewriting}, the derivation is as follows:
\begin{align*}
			\Delta' p(z \mid \cbf_j, \alpha) &=  \dv{}{\alpha} \sigma_z \big( \Wbf\big(\hbf_j+\alpha\vbf\big) \big) \\
	&= \dv{}{\alpha} \left( \frac{p(z \mid \cbf_j) \exp{\alpha M(z)}}{\sum_{z' \in [V]} p(z' \mid \cbf_j) \exp{\alpha M(z')} } \right) \\
	&= \frac{p(z \mid \cbf_j) \exp{\alpha M(z)} M(z) D_j(\alpha) - p(z \mid \cbf_j) \exp{\alpha M(z)} D_j'(\alpha)}{D_j(\alpha)^2} \\
    &= \frac{p(z \mid \cbf_j) \exp{\alpha M(z)}}{D_j(\alpha)} \left( \frac{M(z) D_j(\alpha)}{D_j(\alpha)} -   \frac{D_j'(\alpha)}{D_j(\alpha)} \right) \\
	&= \sigma_z \big( f_{\alpha}(\cbf_j ) \big) \left( M(z) -   \frac{D_j'(\alpha)}{D_j(\alpha)} \right) \, .
\end{align*}
The term $D_j'(\alpha)/D_j(\alpha)$ can be rewritten as follows:
\begin{align*}
	\frac{D_j'(\alpha)}{D_j(\alpha)} &= \frac{\sum_{z' \in [V]} p(z' \mid \cbf_j)  \exp{\alpha M(z')} M(z')}{\sum_{z''\ \in [V]}  p(z'' \mid \cbf_j) \exp{\alpha M(z'')}} \\
	&= \sum_{z' \in [V]} \frac{ p(z' \mid \cbf_j)  \exp{\alpha M(z')} }{\sum_{z''\ \in [V]}  p(z'' \mid \cbf_j) \exp{\alpha M(z'')}}  M(z') \\
	&= \sum_{z' \in [V]} \sigma_{z'} \big( f_{\alpha}(\cbf_j ) \big) M(z') \\
	&= \expecunder{ M(Z) }{Z \sim \sigma \big( f_{\alpha}(\cbf_j) \big)}  \, .
\end{align*}   
\end{proof}

\subsection{Proof of Theorem~\ref{th:non-monotonicity}}\label{app:non-monotonicity-proof}
Theorem~\ref{th:non-monotonicity} is about the monotonicity of $\Delta p(\alpha)$. 
Hence, we need to study the sign of the derivative of $\Delta p$. 
As shown in Lemma~\ref{lemma:derivative}, the sign of $(\Delta p)' (\alpha)$ is governed by the difference $\left( M(z) -   \expecunder{ M(Z) }{Z \sim \sigma \big( f_{\alpha}(\cbf_j) \big)}  \right)$. In this difference the only quantity which depends on $\alpha$ is $\expecunder{ M(Z) }{Z \sim \sigma \big( f_{\alpha}(\cbf_j) \big)}$. So in the following, we study the variations of this expectation under steering. To do so, we look at its derivative:
\begin{align}
\begin{split}\label{eq:variance}
    	\frac{\Diff}{\Diff \alpha} \expecunder{ M(Z) }{Z \sim \sigma \big( f_{\alpha}(\cbf_j) \big)} &=  
    \sum_{z' \in [V]} \frac{\Diff}{\Diff \alpha} \sigma_{z'} \big( f_{\alpha}(\cbf_j) \big) M(z') \\
    \intertext{Compute $\frac{\Diff}{\Diff \alpha} \sigma_{z'} \big( f_{\alpha}(\cbf_j) \big)$ and replace in the previous display:}
	\frac{\Diff}{\Diff \alpha} \expecunder{ M(Z) }{Z \sim \sigma \big( f_{\alpha}(\cbf_j) \big)} &=   \sum_{z' \in [V]} \sigma_{z'} \big( f_{\alpha}(\cbf_j ) \big) \left( M(z') -   \expecunder{ M(Z) }{Z \sim \sigma \big( f_{\alpha}(\cbf_j) \big)} \right)  M(z') \\
	&=  \sum_{z' \in [V]} \sigma_{z'} \big( f_{\alpha}(\cbf_j ) \big) M(z')^2 -  \sum_{z' \in [ V ]} \sigma_{z'} \big( f_{\alpha}(\cbf_j) \big) \expecunder{ M(Z) }{Z \sim \sigma \big( f_{\alpha}(\cbf_j) \big)} M(z') \\
    \intertext{By definition of the expectation:}
	\frac{\Diff}{\Diff \alpha} \expecunder{ M(Z) }{Z \sim \sigma \big( f_{\alpha}(\cbf_j) \big)} &=  \expecunder{ M(Z)^2 }{Z \sim \sigma \big( f_{\alpha}(\cbf_j) \big)} - \expecunder{ M(Z) }{Z \sim \sigma \big( f_{\alpha}(\cbf_j) \big)} \sum_{z' \in [V]} \sigma_{z'} \big( f_{\alpha}(\cbf_j) \big)  M(z') \\
	&=  \expecunder{ M(Z)^2 }{Z \sim \sigma \big( f_{\alpha}(\cbf_j) \big)} - \expecunder{ M(Z) }{Z \sim \sigma \big( f_{\alpha}(\cbf_j) \big)}^2 \\
    \intertext{The previous display is the variance:}
	\frac{\Diff}{\Diff \alpha} \expecunder{ M(Z) }{Z \sim \sigma \big( f_{\alpha}(\cbf_j) \big)} &= \varunder{M(Z)}{Z \sim \sigma \big( f_{\alpha}(\cbf_j) \big)} \, ,
\end{split}
\end{align}
with $\var{M(Z)} > 0$ if $M(Z)$ is not constant $\sigma \big( f_{\alpha}(\cbf_j) \big)$-almost surely. This means that $\expecunder{ M(Z) }{Z \sim \sigma \big( f_{\alpha}(\cbf_j) \big)}$ is strictly increasing on $\Reals$ in $\alpha$. Now let us compute the limits of this quantity when $\alpha \to \pm \infty$, which are 
\begin{align*}
	&\lim_{\alpha \to +\infty} \expecunder{ M(Z) }{Z \sim \sigma \big( f_{\alpha}(\cbf_j) \big)} = \max_{z \in [V]} M(z) =: M_{\text{max}} \, , \\
	&\lim_{\alpha \to -\infty} \expecunder{ M(Z) }{Z \sim \sigma \big( f_{\alpha}(\cbf_j) \big)} = \min_{z \in [V]} M(z)=: M_{\text{min}} \, .
\end{align*}
Given $\delta > 0$, we introduce the set $A_\delta \defeq \{  z \in [V] : M_{\text{max}} - M(z) \leq \delta \}$ to control the following difference:
\begin{align*}
	&M_{\text{max}} - \expecunder{ M(Z) }{Z \sim \sigma \big( f_{\alpha}(\cbf_j) \big)} \\
	&= \sum_{z' \in [V]} \sigma_{z'} \big( f_{\alpha}(\cbf_j) \big)  \left(  	M_{\text{max}} - M(z') \right) \\
	&= \sum_{z' \in A_\delta} \sigma_{z'} \big( f_{\alpha}(\cbf_j) \big)  \left(  	M_{\text{max}} - M(z') \right)  + \sum_{z' \in A_\delta^\complement} \sigma_{z'} \big( f_{\alpha}(\cbf_j) \big)  \left(  	M_{\text{max}} - M(z') \right) \\
	&\leq \delta \sum_{z' \in A_\delta} \sigma_{z'} \big( f_{\alpha}(\cbf_j) \big) + \left(  	M_{\text{max}} - M_{\text{min}} \right)  \sum_{z' \in A_\delta^\complement} \sigma_{z'} \big( f_{\alpha}(\cbf_j) \big)  \\
	&\leq \delta  + \left(  	M_{\text{max}} - M_{\text{min}} \right)  \sum_{z' \in A_\delta^\complement} \sigma_{z'} \big( f_{\alpha}(\cbf_j) \big) \, .
\end{align*}
Let us take $z^\star$ a token which attains the maximum log-odds $M_{\text{max}}$. This is necessarily a concept token $\Tcal$ (\emph{i.e.,} $z^\star \in \Tcal$) because $a_z > b_z$ (Assumption~\ref{ass:concept}). 
We now show that $\lim_{\alpha \to +\infty} \sum_{z' \in A_\delta^\complement} \sigma_{z'} \big( f_{\alpha}(\cbf_j) \big) = 0$.
If $A_\delta^\complement=\varnothing$ (for sufficiently large $\delta$), the sum is zero by convention. Otherwise, we proceed as follows, using Lemma~\ref{lemma:rewriting}:
\begin{align*}
0 < \sum_{z' \in A_\delta^\complement} \sigma_{z'} \big( f_{\alpha}(\cbf_j) \big) & =  \sum_{z' \in A_\delta^\complement}  \frac{p (z' \mid \cbf_j) \exp{\alpha M(z')}}{\sum_{z'' \in [V]} p (z'' \mid \cbf_j) \exp{\alpha M(z'')}} &\text{(denominator lower bounded by $p (z^\star \mid \cbf_j) \exp{\alpha M_{\text{max}}}$.)} \\
 &\leq \sum_{z' \in A_\delta^\complement}  \frac{p (z' \mid \cbf_j) \exp{\alpha (M_{\text{max}}- \delta)}}{p (z^\star \mid \cbf_j) \exp{\alpha M_{\text{max}}}}  \\
 &=  \exp{- \delta \alpha} \sum_{z' \in A_\delta^\complement}  \frac{p (z' \mid \cbf_j)}{p (z^\star \mid \cbf_j)}  \, .
\end{align*}
By taking the limit in the previous display, we get $\lim_{\alpha \to +\infty} \sum_{z' \in A_\delta^\complement} \sigma_{z'} \big( f_{\alpha}(\cbf_j) \big) = 0$. 

Finally, we take the $\limsup$ as follows:
\begin{align*}
	&\limsup_{\alpha \to +\infty } M_{\text{max}} - \expecunder{ M(Z) }{Z \sim \sigma \big( f_{\alpha}(\cbf_j) \big)} 
	\leq \limsup_{\alpha \to +\infty } \left( \delta  + \left(  	M_{\text{max}} - M_{\text{min}} \right)  \sum_{z' \in A_\delta^\complement} \sigma_{z'} \big( f_{\alpha}(\cbf_j) \big) \right) \, .
\end{align*}
The above inequality is rewritten as
\begin{align*}
	\limsup_{\alpha \to +\infty } M_{\text{max}} - \expecunder{ M(Z) }{Z \sim \sigma \big( f_{\alpha}(\cbf_j) \big)} \leq \delta \, .
\end{align*}
The previous display's bound is uniform in $\delta >0$, taking the limit $\delta \to 0^+$ gives
\[
\limsup_{\alpha \to +\infty } M_{\text{max}} - \expecunder{ M(Z) }{Z \sim \sigma \big( f_{\alpha}(\cbf_j) \big)} \leq 0 \, .
\]
To finish, one remarks that 
\[
0  \leq \liminf_{\alpha \to +\infty } M_{\text{max}} - \expecunder{ M(Z) }{Z \sim \sigma \big( f_{\alpha}(\cbf_j) \big)}  \leq \limsup_{\alpha \to +\infty } M_{\text{max}} - \expecunder{ M(Z) }{Z \sim \sigma \big( f_{\alpha}(\cbf_j) \big)} \leq 0 \, .
\]
Which implies that the limit does in fact exist and $\lim_{\alpha \to +\infty} \expecunder{ M(Z) }{Z \sim \sigma \big( f_{\alpha}(\cbf_j) \big)}  =  M_{\text{max}}$.
Very similar derivations give $\lim_{\alpha \to -\infty} \expecunder{ M(Z) }{Z \sim \sigma \big( f_{\alpha}(\cbf_j) \big)}  =  M_{\text{min}}$.

Since $\alpha \mapsto \expecunder{ M(Z) }{Z \sim \sigma \big( f_{\alpha}(\cbf_j) \big)}$ is continuous, strictly increasing on $\Reals$ and the limits are known on this interval, we have the following:
there exists unique thresholds $\alpha_{(j, z)} \in \Reals$ such that
\begin{align*}
z \in \Tcal \setminus \overline M ,
&\qquad & \expecunder{ M(Z) }{Z \sim \sigma \big( f_{\alpha_{(j, z)}}(\cbf_j ) \big)}
&= M(z) \, , \\
z \in \overline M ,
&\qquad & \alpha_{(z)}
&\defeq +\infty \, .
\end{align*}
and 
\begin{align*}
z \in \Tcal^\complement \setminus \underline M ,
&\qquad & \expecunder{ M(Z) }{Z \sim \sigma \big( f_{\alpha_{(j, z)}}(\cbf_j ) \big)}
&= M(z) \, , \\
z \in \underline M,
&\qquad & \alpha_{(z)}
&\defeq -\infty \, .
\end{align*}
First for the limit case ($z \in \underline M \cup \overline M$), we remove the dependency in $j$ of $\alpha_{(j,z)}$ as its always equal to $\pm \infty$.
With $z \in \overline M$, we take $\alpha_{(z)} \defeq +\infty$ because $\lim_{\alpha \to +\infty} \expecunder{ M(Z) }{Z \sim \sigma \big( f_{\alpha}(\cbf_j ) \big)}  =  M_{\text{max}}$. 
Moreover, the minimum log-odd $M_{\min}$ cannot be attained by concept tokens $z \in \Tcal$ since $a_z > b_z$ (Assumption~\ref{ass:concept}), hence $M(z) \neq M_{\min}$ for all $z \in \Tcal$ and $\alpha_{(z)} \defeq -\infty$ for $z \in \underline M$.

Finally, all bullet points of Theorem~\ref{th:non-monotonicity} follow directly from the previous arguments.
For the first point, fix a token $z \in [V]\setminus(\overline M \cup \underline M)$. Then 
$M(z)-\expecunder{ M(Z) }{Z \sim \sigma \big( f_{\alpha}(\cbf_j) \big)}$
is positive on $(-\infty,\alpha_{(j,z)}]$ and negative for $\alpha>\alpha_{(j,z)}$, which yields the bump behavior.
The second point follows from the sign separation of the log-odds (Lemma~\ref{lemma:sign-logodds}) together with the fact that $\expecunder{ M(Z) }{Z \sim \sigma \big( f_{\alpha}(\cbf_j) \big)}$ is increasing, which implies $\alpha_{(j,z')} < \alpha_{(j,z)}$ for $z \in \Tcal$ and $z' \notin \Tcal$.
The final point again follows from the fact that for $z \in \overline M \cup \underline M$, the sign of 
$M(z)-\expecunder{ M(Z) }{Z \sim \sigma \big( f_{\alpha}(\cbf_j) \big)}$
does not change with $\alpha$: it remains positive for $z \in \Tcal$ and negative otherwise, since $\alpha_{(j,z)}=\pm\infty$ for such tokens.
\qed

\paragraph{Proof of Remark~\ref{remark:sign}}
Let $\cbf_j \notin \Tcal$ and denote by $z_1$ the concept token with the minimum log-odd in the group $\Tcal$. To show that the bump for concept tokens $z \in \Tcal$ happens for positive $\alpha$, it suffices to show that $\alpha_{(j,z_1)} > 0$ (as $\alpha_{(j,z_1)} \leq \alpha_{(j,z)}$ with $z \in \Tcal$ by strict monotonicity of $\expecunder{ M(Z) }{Z \sim \sigma \big( f_{\alpha}(\cbf_j) \big)}$). 

We are proving this fact on a specification of the dataset from Definition~\ref{def:dataset} (and Assumption~\ref{ass:concept}), defined as follows:
\[
\forall z \in [V], \quad
\begin{cases}
a_{z} \defeq (1 - \epsilon) \gamma_z \, \\
b_{z} \defeq \frac{\epsilon}{(G-1)} \omega_z \, ,
\end{cases}
\]
where $\gamma_z \in (0,1)$ satisfies $\sum_{z' \in C_k} \gamma_{z'} = 1$ for each $k \in [G]$ (with the same conditions for $\omega_z$). The coefficients $\gamma_z$ and $\omega_z$ are chosen so that Assumption~\ref{ass:concept} holds, i.e., $a_z > b_z$, and we assume $\varepsilon \in \bigl(0, \tfrac{G-1}{G}\bigr)$.

Proving that $\alpha_{(j, z_1)} > 0$ for $\epsilon >0$ \textbf{small enough} when $\cbf_j \notin \Tcal$, is equivalent to showing
\begin{equation}\label{eq:alpha-sign-equivalent}
    M(z_1) = \expecunder{ M(Z) }{Z \sim \sigma \big( f_{\alpha_{(j, z_1)}}(\cbf_j) \big)} > \expecunder{ M(Z) }{Z \sim \sigma \big( f(\cbf_j) \big)} 
\end{equation}
by strict monotonicity of $\expecunder{ M(Z) }{Z \sim \sigma \big( f_{\alpha}(\cbf_j) \big)}$ in $\alpha$. The previous inequality is hard to prove as $\expecunder{ M(Z) }{Z \sim \sigma \big( f(\cbf_j) \big)}$ has a non-trivial expression, so we start by bounding it:
\begin{align*}
	\expecunder{ M(Z) }{Z \sim \sigma \big( f(\cbf_j) \big)}
	&= \sum_{z' \in [V]} \sigma_{z'} \big( f(\cbf_j) \big) M(z') \\
	&=  \sum_{z' \in [V]} p (z' \mid \cbf_j)  M(z') &\text{(by Assumption~\ref{ass:perfect-train}.)}\\
	&\leq \sum_{z' \in \Tcal} p (z' \mid \cbf_j)  M(z')  &\text{(as $M(z) \leq 0$ for $z \notin \Tcal $, see Lemma~\ref{lemma:sign-logodds}.)} \\
	&= \sum_{z' \in \Tcal} b_{z'}  M(z')  &\text{(as $p (z \mid \cbf_j) = b_{z}$ for $\cbf_j \notin \Tcal$ and $z \in \Tcal$.)} \\
	&= \sum_{z' \in \Tcal} b_{z'}  \log{\frac{a_{z'}}{b_{z'}}} &\text{(as $M(z) = \log{a_z / b_{z}}$ for $z \in \Tcal$, see Lemma~\ref{lemma:sign-logodds}.)} \\
	\intertext{In this specific dataset, $\beta \defeq \sum_{z'' \in \Tcal } b_{z''} = \frac{\epsilon}{(G-1)}$ and $\rho \defeq \sum_{z'' \in \Tcal} a_{z''} = 1-\epsilon$. We continue the bounding process as follows}
	\expecunder{ M(Z) }{Z \sim \sigma \big( f(\cbf_j) \big)} &\leq
     \sum_{z' \in \Tcal} b_{z'}  \log{\frac{a_{z'}}{b_{z'}}} \\
    &= \beta \sum_{z' \in \Tcal} \frac{b_{z'}}{\beta}  \log{\frac{\rho}{\beta}\frac{a_{z'}/\rho}{b_{z'}/\beta}} \\
	&= \beta \sum_{z' \in \Tcal} \frac{b_{z'}}{\beta}  \log{\frac{\rho}{\beta}} + \beta \sum_{z' \in \Tcal} \frac{b_{z'}}{\beta}  \log{\frac{a_{z'}/\rho}{b_{z'}/\beta}}\\
    \intertext{We remark that $\sum_{z' \in \Tcal} \frac{b_{z'}}{\beta}  \log{\frac{a_{z'}/\rho}{b_{z'}/\beta}}$ is equal to the \emph{negative} of the Kullback–Leibler divergence between $(b_{z'} / \beta)_{z' \in \Tcal}$ and $(a_{z'} / \rho)_{z' \in \Tcal}$ denoted as $\mathrm{KL}(b_{\cdot}/\beta || a_{\cdot}/\rho)$:}
	\expecunder{ M(Z) }{Z \sim \sigma \big( f(\cbf_j) \big)} &= \beta \left( \sum_{z' \in \Tcal} \frac{b_{z'}}{\beta}  \log{\frac{\rho}{\beta}} - \mathrm{KL}(b_{\cdot}/\beta || a_{\cdot}/\rho) \right) \\
	&\leq \beta \sum_{z' \in \Tcal} \frac{b_{z'}}{\beta}  \log{\frac{\rho}{\beta}} &\text{(by Gibbs' inequality $\mathrm{KL}(b_{\cdot}/\beta || a_{\cdot}/\rho) \geq 0$.)} \\
	&= \beta \log{\frac{\rho}{\beta}} &\text{(as $\sum_{z' \in \Tcal} \frac{b_{z'}}{\beta} =1$.)} \\
	&= \frac{\epsilon}{(G-1)} \log{\frac{1-\epsilon}{\epsilon} (G-1)} \, .
\end{align*}
Let us remind that $M(z_1) = \log{\frac{(1-\epsilon)(G-1) }{\epsilon} \frac{ \gamma_{z_1}}{ \omega_{z_1} }}$ by the proof of Lemma~\ref{lemma:sign-logodds}. To avoid complicated solution to Inequality~\eqref{eq:alpha-sign-equivalent} using the Lambert $W$ function, we compute the limit $\epsilon \to 0^+$ of $F(\cdot)$ defined as:
\[
	F(\epsilon) \defeq M(z_1) - \frac{\epsilon}{(G-1)} \log{\frac{1-\epsilon}{\epsilon} (G-1)} = \log{\frac{(1-\epsilon)(G-1) }{\epsilon} \frac{ \gamma_{z_1}}{ \omega_{z_1} }} - \frac{\epsilon}{(G-1)} \log{\frac{1-\epsilon}{\epsilon} (G-1)} \, .
\]
We now compute the limit as follows:
\begin{align*}
	&\lim_{\epsilon \to 0^+}  \log{\frac{(1-\epsilon)(G-1) }{\epsilon} \frac{ \gamma_{z_1}}{ \omega_{z_1} }} = +\infty &\text{(as $\frac{ \gamma_{z_1} (G-1)}{ \omega_{z_1} } > 0$.)} \\
	&\lim_{\epsilon \to 0^+}  \frac{\epsilon}{(G-1)} \log{\frac{1-\epsilon}{\epsilon} (G-1)}  = \lim_{\epsilon \to 0^+}  \frac{\epsilon}{(G-1)} \log{\big(\frac{1}{\epsilon} - 1\big) (G-1)}  = 0 &\text{(as $\lim_{x \to +\infty} \log{x}/x = 0$.)} 
\end{align*}
So $\lim_{\epsilon \to 0^+} F(\epsilon) = + \infty$, which means that there exists $\epsilon_0 < (G-1)/G$ such that for all $\epsilon \in (0, \epsilon_0)$, $F(\epsilon) > 0$.
With $\epsilon \in (0, \epsilon_0)$, by combining $F(\varepsilon)>0$ with the upper-bound on $\expecunder{ M(Z) }{Z \sim \sigma \big( f(\cbf_j) \big)}$, we get the Inequality~\eqref{eq:alpha-sign-equivalent}:
\begin{align*}
	M(z_1) > \frac{\epsilon}{(G-1)} \log{\frac{1-\epsilon}{\epsilon} (G-1)} \geq \expecunder{ M(Z) }{Z \sim \sigma \big( f(\cbf_j) \big)}  \, ,
\end{align*}
which is equivalent to $\alpha_{(j, z_1)} > 0$ as desired. 
\qed

\subsection{Proof of Theorem~\ref{th:increasing-concept}}\label{app:increasing-concept-proof}
Fix a context index $j\in[m]$ and a concept $\Ccal$. Define
\[
F_{\Ccal,j}(\alpha) \defeq \sum_{z\in \Ccal} \sigma_z \big( f_{\alpha}(\cbf_j)\big) \, .
\]
By Definition~\ref{def:concept-probs}, Definition~\ref{def:gaps} and Assumption~\ref{ass:perfect-train},
\[
\Delta p(\Ccal \mid \cbf_j,\alpha)
=\frac{1}{\card{\Ccal}}\sum_{z\in \Ccal}\Big(\sigma_z \big( f_{\alpha}(\cbf_j)\big)-p(z\mid \cbf_j)\Big)
=\frac{F_{\Ccal,j}(\alpha)-F_{\Ccal,j}(0)}{\card{\Ccal}}.
\]
By Lemma~\ref{lemma:rewriting},
\[
\sigma_z \big( f_{\alpha}(\cbf_j )\big)=\frac{p(z\mid \cbf_j)\exp{\alpha M(z)}}{\sum_{z'\in[V]}p(z'\mid \cbf_j)\exp{\alpha M(z')}}.
\]
Let $Z \sim (\sigma_z \big( f_{\alpha}(\cbf_j )\big))_{z\in[V]}$ and set
\[
\mu_{\Ccal,j}(\alpha) \defeq \condexpec{M(Z)}{Z \in \Ccal},
\qquad
\mu_{\Ccal^\complement,j}(\alpha) \defeq \condexpec{M(Z)}{Z \notin \Ccal}.
\]
Using Lemma~\ref{lemma:derivative} and summing over $z\in \Ccal$,
\begin{align*}
\dv{}{\alpha}F_{\Ccal,j}(\alpha)
&=\sum_{z\in \Ccal}\sigma_z \big( f_{\alpha}(\cbf_j )\big)\Big(M(z)-\expec{M(Z)}\Big)\\
&=\sum_{z\in \Ccal} F_{\Ccal, j}(\alpha) \frac{\sigma_z \big( f_{\alpha}(\cbf_j)\big)}{F_{\Ccal, j}(\alpha)} \Big(M(z)-\expec{M(Z)}\Big) \\ 
&=F_{\Ccal,j}(\alpha) \left( \sum_{z\in \Ccal} \frac{\sigma_z \big( f_{\alpha}(\cbf_j)\big)}{F_{\Ccal, j}(\alpha)} M(z) - \expec{M(Z)} \sum_{z\in \Ccal}  \frac{\sigma_z \big( f_{\alpha}(\cbf_j)\big)}{F_{\Ccal, j}(\alpha)} \right) \\
&=F_{\Ccal,j}(\alpha)\Big(\mu_{\Ccal,j}(\alpha)-\expec{M(Z)}\Big).
\end{align*}
Moreover, by the law of total expectation and using that $\mathbb{P}(Z \in \Ccal) = F_{\Ccal, j}(\alpha)$ (as $Z$ is a discrete random variable),
\[
\expec{M(Z)}
=F_{\Ccal,j}(\alpha)\mu_{\Ccal,j}(\alpha)+\bigl(1-F_{\Ccal,j}(\alpha)\bigr)\mu_{\Ccal^\complement,j}(\alpha).
\]
Therefore, $F_{\Ccal,j}$ satisfies the following ordinary differential equation (ODE), which is nearly the ODE satisfied by the sigmoid function up to the term $\Big(\mu_{\Ccal,j}(\alpha)-\mu_{\Ccal^\complement,j}(\alpha)\Big)$:
\begin{equation} \label{eq:derivative-f}
    \dv{}{\alpha} F_{\Ccal,j}(\alpha)
=F_{\Ccal,j}(\alpha)\bigl(1-F_{\Ccal,j}(\alpha)\bigr)\Big(\mu_{\Ccal,j}(\alpha)-\mu_{\Ccal^\complement,j}(\alpha)\Big),
\end{equation}
Since $F_{\Ccal,j}(\alpha)\in(0,1)$, we can divide both sides by $F_{\Ccal,j}(\alpha)\bigl(1-F_{\Ccal,j}(\alpha)\bigr)$, and direct computations yield
\[
\dv{}{\alpha}\log{\frac{F_{\Ccal,j}(\alpha)}{1-F_{\Ccal,j}(\alpha)}}
=\mu_{\Ccal,j}(\alpha)-\mu_{\Ccal^\complement,j}(\alpha) \, .
\]
Integrating both sides of the previous display from $0$ to $\alpha$ yields
\[
\log{\frac{F_{\Ccal,j}(\alpha)}{1-F_{\Ccal,j}(\alpha)}}=r_j+\nu_j(\alpha),
\qquad
r_j\defeq\log{\frac{F_{\Ccal,j}(0)}{1-F_{\Ccal,j}(0)}} \, ,
\qquad
\nu_j(\alpha) \defeq \int_0^\alpha \big(\mu_{\Ccal,j}(t)-\mu_{\Ccal^\complement,j}(t)\big)\,\Diff t.
\]
The previous display is the logit function, which is the inverse of the sigmoid function $\phi$. Hence $F_{\Ccal,j}(\alpha)=\phi(\nu_j(\alpha)+r_j)$ and
\[
\Delta p(\Ccal \mid \cbf_j,\alpha) = \frac{1}{\card{\Ccal}}\Big(F_{\Ccal,j}(\alpha)-F_{\Ccal,j}(0)\Big)
=\frac{1}{\card{\Ccal}}\Big(\phi(\nu_j(\alpha)+r_j)-\phi(r_j)\Big)
=\frac{1}{2\card{\Ccal}}\left(\tanh\left(\frac{\nu_j(\alpha)+r_j}{2}\right)-\tanh\left(\frac{r_j}{2}\right)\right) \, ,
\]
as $\tanh(x) = 2\phi(2x) - 1$.
Setting $r'_j\defeq\tanh\!\left(\frac{r_j}{2}\right)$ gives the claimed representation in Theorem~\ref{th:increasing-concept}.
\paragraph{Proving monotonicity of $\Delta p ( \Tcal | \alpha)$, $\Delta p ( \Ccal | \alpha)$ and limits of $\Delta p ( \Ccal' | \alpha)$.}
Let $\Tcal$ be the target concept we steer towards, Lemma~\ref{lemma:sign-logodds} gives $M(z)>0$ for $z\in \Tcal$ and $M(z)\le 0$ for $z\notin \Tcal$, hence $\mu_{\Tcal,j}(\alpha)>0$
and $\mu_{\Tcal^\complement,j}(\alpha)\le 0$ for all $\alpha$. 
Thus $\mu_{\Tcal,j}(\alpha)-\mu_{\Tcal^\complement,j}(\alpha) > 0$, implying $\dv{}{\alpha}F_{\Tcal,j}(\alpha) > 0$ by Equation~\eqref{eq:derivative-f}. Meaning,
$\Delta p(\Tcal \mid \cbf_j,\alpha)$ is strictly increasing in $\alpha$. 
Additionally, the growth of $\nu_j$ is at most linear because $\abs{\mu_{\Tcal,j}(t)-\mu_{\Tcal^\complement,j}(t)} \leq \max_{z \in [V]} M(z) - \min_{z \in [V]} M(z)$ as the log-odds $M(z)$ are bounded w.r.t $\alpha$. Implying the following by linearity of the integral:
\[
\abs{\nu_j(\alpha)} \leq \big(\max_{z \in [V]} M(z) - \min_{z \in [V]} M(z) \big) \abs{\alpha} \, .
\]

Next, if $\Ccal'\neq \Tcal$ and $\Ccal'\cap(\Msup\cup\Minf)=\varnothing$, Equation~\eqref{eq:limit-steered-softmax} in Proposition~\ref{prop:large-alpha-steering-theory} implies $F_{\Ccal',j}(\alpha)\to 0$ as
$\alpha\to\pm\infty$, hence
\[
\lim_{\alpha\to\pm\infty}\Delta p(\Ccal'\mid \cbf_j,\alpha) = \lim_{\alpha\to\pm\infty}\frac{F_{\Ccal',j}(\alpha)-F_{\Ccal',j}(0)}{\card{\Ccal'}}
=-\frac{F_{\Ccal',j}(0)}{\card{\Ccal'}}
=-\frac{1}{\card{C'}}\sum_{z\in \Ccal'}p(z\mid \cbf_j).
\]

Finally, if $\max_{z\in \Ccal}M(z)\le \min_{z\notin \Ccal}M(z)$, then for all $\alpha$,
\[
\mu_{\Ccal,j}(\alpha)\le\max_{z\in \Ccal}M(z)\le\min_{z\notin \Ccal}M(z)\le\mu_{\Ccal^\complement,j}(\alpha),
\]
then $\mu_{\Ccal,j}(\alpha)-\mu_{\Ccal^\complement,j}(\alpha)\le 0$, implying $\dv{}{\alpha}F_{\Ccal,j}(\alpha)\le 0$ by Equation~\eqref{eq:derivative-f}. Meaning,
$\Delta p(\Ccal \mid \cbf_j,\alpha)$ is decreasing in $\alpha$.
\qed

\subsection{Proof of Theorem~\ref{th:sweet-spot}}\label{app:sweet-spot-proof}
First, as in~\citet{thrampoulidis2024implicit}, we can rewrite the cross-entropy as follows:
\[
\mathrm{CE}(f) \defeq - \sum_{j \in [m]} \pi_j \sum_{z \in [V]} p(z \, | \, \cbf_j)  \log{\sigma_z(f(\cbf_j))} 
\, ,
\]
where $\pi_j\in(0,1]$ is the probability of each \textbf{distinct} context $\cbf_j$ defined as $\pi_j \defeq \frac{1}{n}\sum_{i \in [n]} \indic{\cbf_i = \cbf_j}$.
Then, the Taylor expansion at order 2 of $\Delta \mathrm{CE}(\alpha)$ around $\alpha = 0$ gives us:
\begin{equation}\label{eq:taylor}
    \Delta \mathrm{CE}(\alpha) = \Delta \mathrm{CE}(0) + \Delta \mathrm{CE}'(0) \alpha + \frac{1}{2} \Delta \mathrm{CE}''(0) \alpha^2 +o(\alpha^2) \, .
\end{equation}
Obviously, $ \Delta \mathrm{CE}(0) = 0$.
We start by computing the derivative $\Delta \mathrm{CE}'(\alpha)$ using Lemma~\ref{lemma:derivative} and chain-rule:
\begin{align*}
	\Delta \mathrm{CE}'(\alpha) &= -\sum_{j \in [m]} \pi_j \sum_{z \in [V]} p(z \mid \cbf_j) \frac{\Diff}{\Diff \alpha} \log{\sigma_z(f_{\alpha}(\cbf_j))}  + 0\\ 
    \intertext{Computing $\frac{\Diff}{\Diff \alpha} \log{\sigma_z(f_{\alpha}(\cbf_j))}$ by chain rule and replacing in the above display gives}
    \Delta \mathrm{CE}'(\alpha) &= -\sum_{j \in [m]} \pi_j \sum_{z \in [V]} p(z \mid \cbf_j) \frac{\sigma_z \big( f_{\alpha}(\cbf_j) \big) \left( M(z) -   \expecunder{ M(Z) }{Z \sim \sigma \big( f_{\alpha}(\cbf_j) \big)}  \right)}{\sigma_z \big( f_{\alpha}(\cbf_j) \big)}  \\
	&= \sum_{j \in [m]} \pi_j \sum_{z \in [V]} p(z \mid \cbf_j) \left( \expecunder{ M(Z) }{Z \sim \sigma \big( f_{\alpha}(\cbf_j) \big)} - M(z) \right) \\
	&= \sum_{j \in [m]} \pi_j \left( \expecunder{ M(Z) }{Z \sim \sigma \big( f_{\alpha}(\cbf_j) \big)} \sum_{z \in [V]} p(z \mid \cbf_j) - \sum_{z \in [V]} p(z \mid \cbf_j) M(z) \right) \\
    \intertext{By definition of the expectation $\expecunder{ M(Z) }{Z \sim \sigma \big( f(\cbf_j) \big)} = \sum_{z \in [V]} p(z \mid \cbf_j) M(z)$, which gives:}
	\Delta \mathrm{CE}'(\alpha) &= \sum_{j \in [m]} \pi_j \left( \expecunder{ M(Z) }{Z \sim \sigma \big( f_{\alpha}(\cbf_j) \big)} - \expecunder{ M(Z) }{Z \sim \sigma \big( f(\cbf_j) \big)}  \right) &\text{(as $\sum_{z \in [V]} p(z \mid \cbf_j) = 1$.)}
\end{align*}
Under Assumption~\ref{ass:perfect-train}, we have $\sigma_z \big( f(\cbf_j) \big) = p (z \mid \cbf_j)$ which implies that $\Delta \mathrm{CE}'(0) = 0$. 
Using Equation~\eqref{eq:variance}, we get
\[
\dv{}{\alpha} \expecunder{ M(Z) }{Z \sim \sigma \big( f_{\alpha}(\cbf_j) \big)} = \varunder{M(Z)}{Z \sim \sigma \big( f_{\alpha}(\cbf_j)\big)} \, .
\]
With this, we compute the second derivative $\Delta \mathrm{CE}''(\alpha)$:
\begin{align*}
	\Delta \mathrm{CE}''(\alpha) = \sum_{j \in [m]} \pi_j \varunder{M(Z)}{Z \sim \sigma \big( f_{\alpha}(\cbf_j)\big)} \, ,
\end{align*}
In the statement of Theorem~\ref{th:sweet-spot}, we define $\varunder{M(Z)}{j} \defeq \varunder{M(Z)}{Z \sim \sigma \big( f(\cbf_j)\big)}$. We finish the proof by injecting the computed derivative and second derivative into the Taylor expansion of Equation~\eqref{eq:taylor}.
\qed

\subsection{Proof of Proposition~\ref{prop:steering-limit-practice}}
\label{proof:steering-limit-practice}

\paragraph{Proving the expression of the steered logits $\ybf(\alpha)$ from Section~\ref{sec:steering-practice}.}
Using the notation of Section~\ref{sec:steering-practice}, we apply steering at layer~$\ell$ by modifying the residual stream $\hbf^{(\ell)}$. We track the effect of this intervention across subsequent layers by defining the steered residual streams $\hbf^{(k,\alpha)}$ inductively as
\[
\begin{cases}
    \hbf^{(\ell, \alpha)} \defeq \hbf^{(\ell)} + \alpha \vbf \, , &\text{(initialization)} \\
    \hbf^{(k+1, \alpha)} \defeq \hbf^{(k, \alpha)} + {\color{ao} F(\hbf^{(k, \alpha)})} \, , &\text{for } k \in [\ell, L-1] .
\end{cases}
\]
Here, ${\color{ao}F(\hbf)} \defeq \mathrm{ATTN}(\mathrm{LN}(\hbf)) + \mathrm{FFN}[\mathrm{LN}\{ \hbf + \mathrm{ATTN}(\mathrm{LN}(\hbf)) \}]$ captures the update applied by a single transformer block. This definition is a direct reformulation of Equation~\eqref{eq:transformer-block} adapted to our steering setting.

Unrolling this recursion up to the final layer yields
\[
\hbf^{(L, \alpha)} = \hbf^{(\ell)} + \alpha \vbf + R(\alpha) \, ,
\]
where $R(\alpha) \defeq \sum_{k \in [\ell, L-1]} F(\hbf^{(k, \alpha)})$ aggregates all downstream effects induced by the steering intervention.

Substituting $\hbf^{(L, \alpha)}$ for $\hbf^{(L)}$ in $\ybf \defeq \mathrm{LN}\big(\hbf^{(L)}\big)\Wbf^\top$ then gives the steered logits expression
\[
\ybf(\alpha) \defeq  \mathrm{LN}\!\bigl(\hbf^{(\ell)} + \alpha \vbf + R(\alpha)\bigr)\Wbf^\top \, .
\]
\paragraph{Proving Proposition~\ref{prop:steering-limit-practice}.}
The key observation is that the presence of layer normalization inside the definition of $F$ ensures that each component of $R(\alpha)$ remains bounded (for arbitrarily large $\alpha$), \emph{i.e.} there exists a constant $c_R$ independent of $\alpha$ such that:
\[
\abs{(R(\alpha))_{i,j}} \leq c_R \, .
\]
To formalize the previous display, consider RMSNorm applied to a single token representation $\hbf \in \Reals^d$:
\[
\mathrm{LN}(\hbf) \defeq \sqrt{d}\,\frac{\hbf}{\norm{\hbf}} \odot \gamma \, ,
\]
where $\gamma \in \Reals^d$ and $\odot$ denotes the Hadamard product. Then, as $\alpha \to +\infty$,
\[
\mathrm{LN}(\hbf + \alpha \vbf)
= \sqrt{d}\,\frac{\hbf+\alpha\vbf}{\norm{\hbf+\alpha\vbf}} \odot \gamma
\;\longrightarrow\;
\sqrt{d}\,\frac{\vbf}{\norm{\vbf}} \odot \gamma = \mathrm{LN}(\vbf) \, ,
\]
and, similarly, as $\alpha \to -\infty$,
\[
\mathrm{LN}(\hbf + \alpha \vbf)
\;\longrightarrow\;
\left( -\sqrt{d}\frac{\vbf}{\norm{\vbf}} \right) \odot \gamma = \mathrm{LN}(-\vbf) \, ,
\]
The same argument applies to LayerNorm, \textbf{with the caveat} that the condition $\vbf_{i, :} \neq \mathbf{0}$ becomes $P \vbf_{i, :} \neq \mathbf{0}$, where
$P \defeq I_d - \frac{1}{d}\mathbf{1}\mathbf{1}^{\top}$ ($I_d \in \Reals^{d \times d}$ is the identity matrix and $\mathbf{1} \in \Reals^d$ is the all-ones vector).
As a result, the dominant term in $\hbf^{(\ell)} + \alpha \vbf + R(\alpha)$ as $\alpha \to \pm \infty$ is $\alpha \vbf$, meaning
\[
\left(\hbf^{(\ell)} + \alpha \vbf + R(\alpha)\right)_{i,j} \sim_{\pm \infty} \alpha \vbf_{i,j} \, .
\]
This directly yields
\[
\lim_{\alpha \to \pm \infty} \mathrm{LN}\!\bigl(\hbf^{(\ell)} + \alpha \vbf + R(\alpha)\bigr)\,\Wbf^\top
= \mathrm{LN}(\pm \vbf)\,\Wbf^\top \, .
\]
\qed

\end{document}